\theoremstyle{plain}
\newtheorem{theorem}{Theorem}[section]
\newtheorem{lemma}[theorem]{Lemma}
\theoremstyle{definition}
\newtheorem{definition}[theorem]{Definition}
\newtheorem{assumption}[theorem]{Assumption}
\newtheorem{remark}[theorem]{Remark}
\newtheorem*{main result}{Main Theorem}
\tikzstyle{descript} = [text = black,align=center, minimum height=1.8cm, align=center, outer sep=0pt,font = \footnotesize]
\tikzstyle{activity} =[align=center,outer sep=1pt]
\newcommand*{\circled}[1]{\lower.7ex\hbox{\tikz\draw (0pt, 0pt)%
    circle (.5em) node {\makebox[1em][c]{\small #1}};}}
\newcommand{\bb}{\boldsymbol{b}}
\newcommand{\bv}{\boldsymbol{v}}
\newcommand{\bw}{\boldsymbol{w}}
\newcommand{\bx}{\boldsymbol{x}}
\newcommand{\sgrad}{\partial^{\circ}}
\newcommand{\bF}{\boldsymbol{F}}
\newcommand{\bI}{\boldsymbol{I}}
\newcommand{\bP}{\boldsymbol{P}}
\newcommand{\bmu}{\boldsymbol{\mu}}
\newcommand{\btheta}{\boldsymbol{\theta}}
\newcommand{\bSigma}{\boldsymbol{\Sigma}}
\newcommand{\rd}{\mathrm{d}}
\newcommand{\cI}{\mathcal{I}}
\newcommand{\cJ}{\mathcal{J}}
\newcommand{\cK}{\mathcal{K}}
\newcommand{\cL}{\mathcal{L}}
\newcommand{\cM}{\mathcal{M}}
\newcommand{\cO}{\mathcal{O}}
\newcommand{\cP}{\mathcal{P}}
\newcommand{\cS}{\mathcal{S}}
\newcommand{\cU}{\mathcal{U}}
\newcommand{\cV}{\mathcal{V}}
\newcommand{\cZ}{\mathcal{Z}}
\newcommand{\bbI}{\mathbb{I}}
\newcommand{\bbR}{\mathbb{R}}
\newcommand{\bbS}{\mathbb{S}}
\newcommand{\bzero}{\mathbf{0}}
\newcommand{\norm}[1]{\ensuremath{\left\| #1 \right\|}}
\newcommand{\bracket}[1]{\ensuremath{\left( #1 \right)}}
\newcommand{\Acc}{{\rm Acc}}
\newcommand{\sgn}{\texttt{sgn}}
\newcommand{\<}{\left\langle}
\renewcommand{\>}{\right\rangle}
\title{Understanding Multi-phase Optimization Dynamics and Rich Nonlinear Behaviors of ReLU Networks}
\author{%
 Mingze Wang \\
  School of Mathematical Sciences \\
  Peking University\\
  Beijing, 100081, P.R. China \\
  \texttt{mingzewang@stu.pku.edu.cn} \\
  \And
  Chao Ma\\
  Department of Mathematics \\
  Stanford University\\
  Stanford, CA 94305 \\
  \texttt{chaoma@stanford.edu} \\
  % David S.~Hippocampus\thanks{Use footnote for providing further information
  %   about author (webpage, alternative address)---\emph{not} for acknowledging
  %   funding agencies.} \\
  % Department of Computer Science\\
  % Cranberry-Lemon University\\
  % Pittsburgh, PA 15213 \\
  % \texttt{hippo@cs.cranberry-lemon.edu} \\
  % examples of more authors
  % \And
  % Coauthor \\
  % Affiliation \\
  % Address \\
  % \texttt{email} \\
  % \AND
  % Coauthor \\
  % Affiliation \\
  % Address \\
  % \texttt{email} \\
  % \And
  % Coauthor \\
  % Affiliation \\
  % Address \\
  % \texttt{email} \\
  % \And
  % Coauthor \\
  % Affiliation \\
  % Address \\
  % \texttt{email} \\
}
\begin{document}

\maketitle

% 1. nonli
% 2, nonlinearity hardness, resulting in phased / linear 
% 3. in this work 
% 4. simple setting 
% 5. nevertheless, multi-phase and rich nonlinear behaviors
% simplify to complicate and more

\begin{abstract}
The training process of ReLU neural networks often exhibits complicated nonlinear phenomena. 
The nonlinearity of models and non-convexity of loss pose significant challenges for theoretical analysis. Therefore, most previous theoretical works on the optimization dynamics of neural networks focus either on local analysis (like the end of training) or approximate linear models (like Neural Tangent Kernel). 
In this work, we conduct a complete theoretical characterization of the training process of a two-layer ReLU network trained by Gradient Flow on a linearly separable data. In this specific setting, our analysis captures the whole optimization process starting from random initialization to final convergence. 
Despite the relatively simple model and data that we studied, we reveal four different phases from the whole training process showing a general simplifying-to-complicating learning trend.
Specific nonlinear behaviors can also be precisely identified and captured theoretically, such as
initial condensation, saddle-to-plateau dynamics, plateau escape, changes of activation patterns, 
%random directional convergence, 
learning with increasing complexity, etc.
% Beyond the lazy regime like Neural Tanget Kernel, ReLU neural networks can exhibit rich nonlinear behaviors during training. These include initial condensation, getting stuck in and escaping from saddle or plateau, saddle to saddle, rich changes of activation patterns by deactivation or reactivation, random directional convergence, learning with increasing complexity, and more. In this work, we aim to understand these behaviors theoretically. By meticulously studying a complete training process of ReLU networks trained by Gradient Flow starting from random initialization, we provide an in-depth analysis of these fascinating nonlinear phenomena during the four-phase optimization dynamics.
\end{abstract}

\section{Introduction}

Deep learning shows its remarkable capabilities across various fields of applications. However, the theoretical understanding of its great success still has a long way to go. Among all theoretical topics, one of the most crucial aspect is the understanding of the optimization dynamics of deep neural network (NN), particularly the dynamics produced by Gradient Descent (GD) and its variants. This topic is highly challenging due to the highly non-convex loss landscape and existing works usually work with settings that do not align well with realistic practices. For instance, the extensived studied Neural Tangent Kernel (NTK) theory~\citep{jacot2018neural, du2018gradient, du2019gradient, zou2018stochastic, allen2019convergence} proves the global convergence of Stochastic gradient descent (SGD) to zero training error for highly over-parameterized neural networks; however, the optimization behaviors are similar to kernel methods and do not exhibit nonlinear behaviors, because neurons remain close to their initialization throughout training. 
% {\color{red} can we mention another example?}

In reality, however, the training of practical networks can exhibit plenty of nonlinear behaviors~\citep{chizat2018global,mei2019mean,woodworth2020kernel}.
In the initial stage of the training, a prevalent nonlinear phenomenon induced by small initialization is \textit{initial condensation}~\citep{maennel2018gradient,luo2021phase}, where neurons condense onto a few isolated orientations. 
At the end of training, NNs trained by GD can \textit{directionally converge} to the KKT points of some constrained max-margin problem~\citep{nacson2019lexicographic,lyu2019gradient,ji2020directional}. However, KKT points are not generally unique, and determining which direction GD converges to can be challenging. 
%Additionally, both initial training and end of training only focus on local dynamics and do not capture the whole optimization dynamics from initialization to final convergence.
Nonlinear training behaviors besides initial and terminating stages of optimization are also numerous. 
For example, for square loss, ~\cite{jacot2021saddle} investigates the \textit{saddle-to-saddle} dynamics where GD traverses a sequence of saddles during training, but it is unclear whether similar behavior can occur for classification tasks using exp-tailed loss.
Moreover, while in lazy regime, most activation patterns do not change during training ReLU networks, it remains uncertain when and how \textit{activation patterns evolve} beyond lazy regime.
Additionally, while it is generally conjectured that GD \textit{learns functions of increasing complexity}~\citep{nakkiran2019sgd}, this perspective has yet to be proven.

As reviewed in Section~\ref{section: related works}, works have been done to analyze and explain the nonlinear training behaviors listed above. However, due to the complexity of the training dynamics, most existing works only focus on one phenomenon and conduct analysis on a certain stage of the training process. Few attempts have been done to derive a full characterization of the whole training dynamics from the initialization to convergence, and the settings adopted by these works are usually too simple to capture many nonlinear behaviors~\citep{phuong2021inductive,lyu2021gradient,wang2022early,boursier2022gradient}.

% (See Section~\ref{section: related works} for a review.)
% In Section~\ref{section: related works}, we list other related works.
% In Section~\ref{section: comparison}, we provide a detailed comparison between our results and some related works.

{\bf In this work}, we make an attempt to theoretically describe the whole neural network training dynamics beyond the linear regime, in a setting that many nonlinear behaviors manifest.
Specifically, We analyze the training process of a two-layer ReLU network trained by Gradient Flow (GF) on a linearly separable data. 
In this setting, our analysis captures the whole optimization process starting from random initialization to final convergence. 
Despite the relatively simple model and data that we studied, we reveal multiple phases in training process, and show a general simplifying-to-complicating learning trend by detailed analysis of each phase.
Specifically, by our meticulous theoretical analysis of the whole training process, we precisely identify \textbf{four different phases} that exhibit \textbf{numerous nonlinear behaviors}.
In Phase I, \textit{initial condensation and simplification} occur as living neurons rapidly condense in two different directions. Meanwhile, GF \textit{escapes from the saddle} around initialization. 
In Phase II, GF \textit{gets stuck into the plateau} of training accuracy for a long time, then \textit{escapes}. 
In Phase III, a significant number of neurons are \textit{deactivated}, leading to \textit{self-simplification} of the network, then GF tries to learn using the almost simplest network.
In Phase IV, a considerable number of neurons are \textit{reactivated}, causing \textit{self-complication} of the network. Finally, GF \textit{converges towards an initialization-dependent direction}, and this direction is \textit{not even a local max margin direction}. Overall, the whole training process exhibits a remarkable \textit{simplifying-to-complicating} behavior.

% \vspace{-.2cm}
\section{Other Related Works}
\label{section: related works}
% \vspace{-.1cm}

% {\bf Initial training.}
Initial condensation phenomenon are studied in \citep{maennel2018gradient,luo2021phase,zhou2022empirical,zhou2022towards,abbe2022merged,abbe2022initial,chen2023phase}. 
Theoretically,~\cite{lyu2021gradient,boursier2022gradient} analyze the condensation directions under their settings,
which are some types of data average. 
Additionally,~\cite{atanasov2021neural} demonstrates that NNs in the rich feature learning regime learn a kernel machine due to the silent alignment phenomenon, similar to the initial condensation.

% However, it remains unclear whether other potential directions exist for the initial condensation and which neurons condense in which direction ({\red may be deleted}).
% {\bf End of training.}

The end of training is extensively studied for classification tasks. Specifically, for classification with exponentially-tailed loss functions, if all the training data can be classified correctly, NNs trained by GD converge to the KKT directions of some constrained max-margin problem~\citep{nacson2019lexicographic,lyu2019gradient,chizat2020implicit,ji2020directional,kunin2022asymmetric}.
In \citep{phuong2021inductive,lyu2021gradient}, they analyze entire training dynamics and derive specific convergent directions that only depend on the data. 
% However, determining which direction GD converges to can be challenging.
% prior
Furthermore, another famous phenomenon in the end of training is the neural collapse~\citep{papyan2020prevalence,fang2021exploring,zhu2021geometric,han2021neural}, which says the features represented by over-parameterized neural networks for data in a same class will collapse to one point, and such points for all classes converge to a simplex equiangular tight frame.

% {\bf Other nonlinear behaviors.} 
Saddle-to-saddle dynamics are explored for square loss in ~\citep{jacot2021saddle,zhang2021embedding,boursier2022gradient,pesme2023saddle,abbe2023sgd}. Furthermore, learning of increasing complexity, also called simplifying-to-complicating or frequency-principle, is investigated in~\citep{arpit2017closer,nakkiran2019sgd,xu2019frequency,rahaman2019spectral}.

Beyond lazy regime and local analysis, \cite{phuong2021inductive,lyu2021gradient,wang2022early,boursier2022gradient} also characterize the whole training dynamics and exhibit a few of nonlinear behaviors. Specifically,~\cite{lyu2021gradient} studies the training dynamics of GF on Leaky ReLU networks, which differ from ReLU networks because Leaky ReLU is always activated on any data.
In~\citep{safran2022effective}, they studies the dynamics of GF on one dimensional dataset, and characterizes the effective number of linear regions.
In~\citep{brutzkus2017sgd}, they studies the dynamics of SGD  on Leaky ReLU networks and linearly separable dataset.
Moreover,~\cite{boursier2022gradient} characterizes the dynamics on orthogonally data for square loss.
The studies closest to our work are \cite{phuong2021inductive,wang2022early}, exploring the complete dynamics on classifying orthogonally separable data. However, this data is easy to learn, and all the features can be learned rapidly (accuracy=$100\%$) in initial training, followed by lazy training (activation patterns do not change).
Unfortunately, this simplicity does not hold true for actual tasks on much more complex data, and NNs can only learn some features in initial training, which complicates the overall learning process.
Furthermore, we provide a detailed comparison between our results and these works in Section~\ref{section: comparison}.
Another related work~\citep{saxe2022neural} introduces a novel bias of learning dynamics: toward shared representations. This idea and the view of gating networks are enlightening for extending our two-layer theory to deep ReLU neural networks. 

Our work also investigates the max-margin implicit bias of ReLU neural networks, and related works have been listed above.
Although in homogenized neural networks such as ReLU, GD implicitly converges to a KKT point of the max-margin problem, it is still unclear where it is an actual optimum. A recent work~\citep{vardi2022margin} showed that in many cases, the converged KKT point is not even a local optimum of the max margin problem. Besides, there are many other attempts to explain the implicit bias of deep learning~\citep{vardi2023implicit}.
Another popular implicit bias is the flat minima bias~\citep{hochreiter1997flat,keskar2016large}. Recent studies~\citep{wu2018sgd,blanc2020implicit,ma2021linear,li2021happens,mulayoff2021implicit,wu2022does,wu2023implicitstability} provided explanations for why SGD favors flat minima and flat minima generalize well. 

\section{Preliminaries}
{\bf Basic Notations.} We use bold letters for vectors or matrices and lowercase letters for scalars, e.g. $\bx=(x_1,\cdots,x_d)^\top\in\mathbb{R}^d$ and $\bP=(P_{i j})_{m_1\times m_2}\in\mathbb{R}^{m_1\times m_2}$.
We use $\left<\cdot,\cdot\right>$ for the standard Euclidean inner product between
two vectors, and $\left\|\cdot\right\|$ for the $l_2$ norm of a vector or the spectral norm of a matrix. 
We use progressive representation $\cO,\Omega,\Theta$ to hide absolute positive constants.
% \footnote{They differ slightly from the standard progressive representations, e.g. $f(x)=\Theta(g(x)) (x\to0)$ ($g$ is strictly positive) if there exist absolute constants $K_1,K_2,a>0$ s.t. $K_1\leq f(x)/g(x)\leq K_2,\forall x\in(0,a)$.}.
For any positive integer $n$, let $[n]=\{1,\cdots,n\}$. Denote by $\mathcal{N}(\bmu,\bSigma)$ the Gaussian distribution with mean $\boldsymbol{\mu}$ and covariance matrix $\mathbf{\Sigma}$, $\mathbb{U}(\cS)$ the uniform distribution on a set $\cS$. Denote by $\mathbb{I}\{E\}$ the indicator function for an event $E$.

\subsection{Binary Classification with Two-layer ReLU Networks}

\textbf{Binary classification.} In this paper, we consider the binary classification problem. We are given $n$ training data $\mathcal{S}=\{(\bx_i,{y}_i)\}_{i=1}^n\subset\mathbb{R}^d\times\{\pm1\}$. Let ${f}(\cdot;\boldsymbol{\theta})$ be a neural network model parameterized by $\btheta$, and aim to minimize the empirical risk given by:
% \vspace{-.1cm}
\begin{equation}\label{equ: problem}
        \mathcal{L}(\boldsymbol{\theta})=\frac{1}{n}\sum_{i=1}^n \ell(y_if(\bx_i;\boldsymbol{\theta})),
\end{equation}
where $\ell(\cdot):\bbR\to\bbR$ is the exponential-type loss function \citep{soudry2018implicit,lyu2019gradient} for classification tasks, including the most popular classification losses: exponential loss, logistic loss, and cross-entropy loss. 
Our analysis focuses on the exponential loss $\ell(z)=e^{-z}$, while our method can be extended to logistic loss and cross-entropy loss.
% while it can be extended to the logistic loss $\ell(z)=\log(1+e^{-z})$ in the same way.

\textbf{Two-layer ReLU Network.} Throughout the following sections, we consider two-layer ReLU neural networks comprising $m$ neurons defined as
\begin{align*}
    f(\bx;\boldsymbol{\theta})=\sum_{k=1}^m a_k\sigma(\boldsymbol{b}_k^\top\bx),
\end{align*}
where $\sigma(z)=\max\{z,0\}$ is the ReLU activation function, $\bb_1\cdots,\bb_m\in\bbR^d$ are the weights in the first layer, $a_1,\cdots,a_m$ are the weights in the second layer. And we consider the case that the weights in the second layer are fixed, which is a common setting used in previous studies \citep{arora2019fine,chatterji2021doesA}. We use $\boldsymbol{\theta}=(\bb_1^\top,\cdots,\bb_m^\top)^\top\in\bbR^{md}$ to denote the concatenation of all trainable weights.

\newpage
\subsection{Gradient Flow Starting from Random Initialization}

\textbf{Gradient Flow.} 
As the limiting dynamics of (Stochastic) Gradient Descent with infinitesimal learning rate \citep{li2017stochastic,li2019stochastic}, we study the following Gradient Flow (GF) on the objective function~\eqref{equ: problem}:
% \vspace{-.1cm}
\begin{equation}\label{equ: alg GF}
    \frac{\mathrm{d}\boldsymbol{\theta}(t)}{\mathrm{d}t}\in-\sgrad\mathcal{L}(\boldsymbol{\theta}(t)),\quad t\geq0.
\end{equation}
Notice that the ReLU is not differentiable at $0$, and therefore, the dynamics is defined as a subgradient inclusion flow \citep{bolte2010characterizations}. 
Here, $\partial^{\circ}$ denotes the Clarke
subdifferential, which is a generalization of the derivative for non-differentiable functions. Additionally, to address the potential non-uniqueness of gradient flow trajectories, we adopt the definition of solutions for discontinuous systems~\citep{filippov2013differential}. 
For formal definitions, please refer to Appendix~\ref{appendix: proof preparation},~\ref{section: subdifferential and KKT}, and~\ref{section: discontinuous system solution}.
% However, we can just define $\sigma'(0)=0$ for ReLU, which guarantees that $\frac{\mathrm{d}\boldsymbol{\theta}(t)}{\mathrm{d}t}\in\partial^{\circ}\mathcal{L}(\boldsymbol{\theta}(t))$. 
% This choice is widely adopted in practical implementation and theoretical analysis \citep{boursier2022gradient}, and we will adhere to this convention by setting $\sigma'(x)=\bbI\{x>0\}$ throughout our paper.

\textbf{Random Initialization.} We consider GF \eqref{equ: alg GF} starting from the following initialization: 
\begin{align*}
&\boldsymbol{b}_k(0)\overset{\text{i.i.d.}}{\sim}\frac{\kappa_1}{\sqrt{m}}\mathbb{U}(\mathbb{S}^{d-1}) \text{ and } a_k=\mathrm{s}_k\frac{\kappa_2}{\sqrt{m}}\text{ for } k\in[m]; 
\\&\mathrm{s}_k=1\text{ for } k\in[m/2];\ \mathrm{s}_k=-1\text{ for } k\in[m]-[m/2].
\end{align*}
Here, $0<\kappa_1<\kappa_2\leq1$ control the initialization scale. It is worth noting that since the distribution $\mathcal{N}(\boldsymbol{0},\bI_d/d)$ is close to $\mathbb{U}(\mathbb{S}^{d-1})$ in high-dimensional settings, our result can be extended to the initialization $\bb_k\overset{\text{i.i.d.}}{\sim}\mathcal{N}(\boldsymbol{0},\kappa_1^2\bI_d/md)$ with high probability guarantees.

\subsection{Linearly Separable Data beyond Orthogonally Separable}

In previous works \citep{phuong2021inductive,wang2022early}, a special case of the linearly separable dataset was investigated, namely ``orthogonally separable''. A training dataset is orthogonally separable when $\<\bx_i,\bx_j\>\geq0$ for $i,j$ in the same class, and $\<\bx_i,\bx_j\>\leq0$ for $i,j$ in different classes. As mentioned in Section~\ref{section: related works}, in this case, GF can learn all features and achieve $100\%$ training accuracy quickly, followed by lazy training. 
In this work, we consider data that is more difficult to learn, which leads to more complicated optimization dynamics. Specifically, we consider the following data.

\begin{assumption}\label{ass: data} 
Consider the linearly separable dataset $\cS=\{(\bx_i,y_i)\}_{i\in[n]}\subset\bbR^d\times\bbR$ such that $(\bx_i,y_i)=\begin{cases}
(\bx_{+},1),\ i\in[n_{+}]
\\
(\bx_{-},-1),\ i\in[n]-[n_{+}]
\end{cases}$, where $\bx_+,\bx_-\in\bbS^{d-1}$ are two data points with a small angle $\Delta\in(0,{\pi}/{2})$, and $n_+,n_-$ are the numbers of positive and negative samples, respectively, with $n=n_++n_-$. We also use $p:=n_+/n_-$ to denote the ratio of $n_+$ and $n_-$, which measures the class imbalance. Furthermore, we assume $p\cos\Delta>1$. 
\end{assumption}

\begin{remark} 
We focus on the training dataset satisfying Assumption~\ref{ass: data} with a small $\Delta\ll1$. The margin of the dataset is $\sin(\Delta/2)$, which implies that the separability of this data is much weaker than that of orthogonal separable data. 
Additionally, the condition $p\cos\Delta>1$ merely requires a slight imbalance in the data. 
These two properties work together to produce rich nonlinear behaviors during training.
\end{remark}

\section{Characterization of Four-phase Optimization Dynamics}\label{section: GF dynamics}

% {\color{red} Should we merge this section the the next one?}

In this section, we study the whole optimization dynamics of GF~\eqref{equ: alg GF} starting from random initialization when training the two-layer ReLU network on linearly separable dataset satisfying Assumption~\ref{ass: data} and using the loss function~\eqref{equ: problem}. 
To begin with, we introduce some additional notations.

\textbf{Additional Notations.}
First, we identify several crucial data-dependent directions under Assumption \ref{ass: data}. These include two directions that are orthogonal to the data, defined as $\bx_{+}^{\perp}:=\frac{\bx_--\left<\bx_-,\bx_+\right>\bx_+}{\left\|\bx_--\left<\bx_-,\bx_+\right>\bx_+\right\|}$ and $\bx_{-}^{\perp}:=\frac{\bx_+-\<\bx_+,\bx_-\>\bx_-}{\|\bx_+-\<\bx_+,\bx_-\>\bx_-\|}$, which satisfy $\<\bx_+,\bx_+^{\perp}\>=\<\bx_-,\bx_-^{\perp}\>=0$. 
Additionally, we define the label-average data direction as $\boldsymbol{\mu}:=\frac{\boldsymbol{z}}{\norm{\boldsymbol{z}}}$ where $\boldsymbol{z}=\frac{1}{n}\sum_{i=1}^n y_i\bx_i$. One can verify that $\<\bmu,\bx_+\>>0$ and $\<\bmu,\bx_-\>>0$ under the condition $p\cos\Delta>1$. In Figure~\ref{fig: data, loss, acc}, we visualize these directions.

Second, we use the following notations to denote important quantities during the GF training process. We denote the prediction on $\bx_+$ and $\bx_-$ by $f_+(t):=f(\bx_+;\boldsymbol{\theta}(t)),
f_-(t):=f(\bx_-;\boldsymbol{\theta}(t))$. We use $\Acc(t):=\frac{1}{n}\sum_{i=1}^n\bbI\{y_if(\bx_i;\btheta(t))>0\}$ to denote the training accuracy at time $t$. For each neuron $k\in[m]$, we use $\bw_k(t):=\bb(t)/\norm{\bb(t)}$ and $\rho_k(t):=\norm{\bb(t)}$ to denote its direction and norm, respectively. To capture the activation dynamics of each neuron $k\in[m]$ on each data, we use $\sgn_{k}^+(t):=\sgn(\<\bb_k(t),\bx_+\>)$ to record whether the $k$-th neuron is activated with respect to $\bx_{+}$, and $\sgn_{k}^-(t):=\sgn(\<\bb_k(t),\bx_-\>)$ defined similarly, which we call ReLU activation patterns.

\subsection{A Brief Overview of four-phase Optimization Dynamics}

We illustrate different phases in the training dynamics by a numerical example.
Specifically, we train a network on the dataset that satisfies Assumption~\ref{ass: data} with $p=4$ and $\Delta=\pi/15$. The directions and magnitudes of the neurons at some important times are shown in Figure~\ref{fig: dynamics}, reflecting four different phases on the training behavior and activation patterms.
More experiment details and results can be found in Appendix~\ref{appendix: experiment: without noise}.

\begin{figure}[!h]
\vspace{-.2cm}
\begin{center}
    % \subfigure[\small Training accuracy]
    % {\includegraphics[width=3cm]{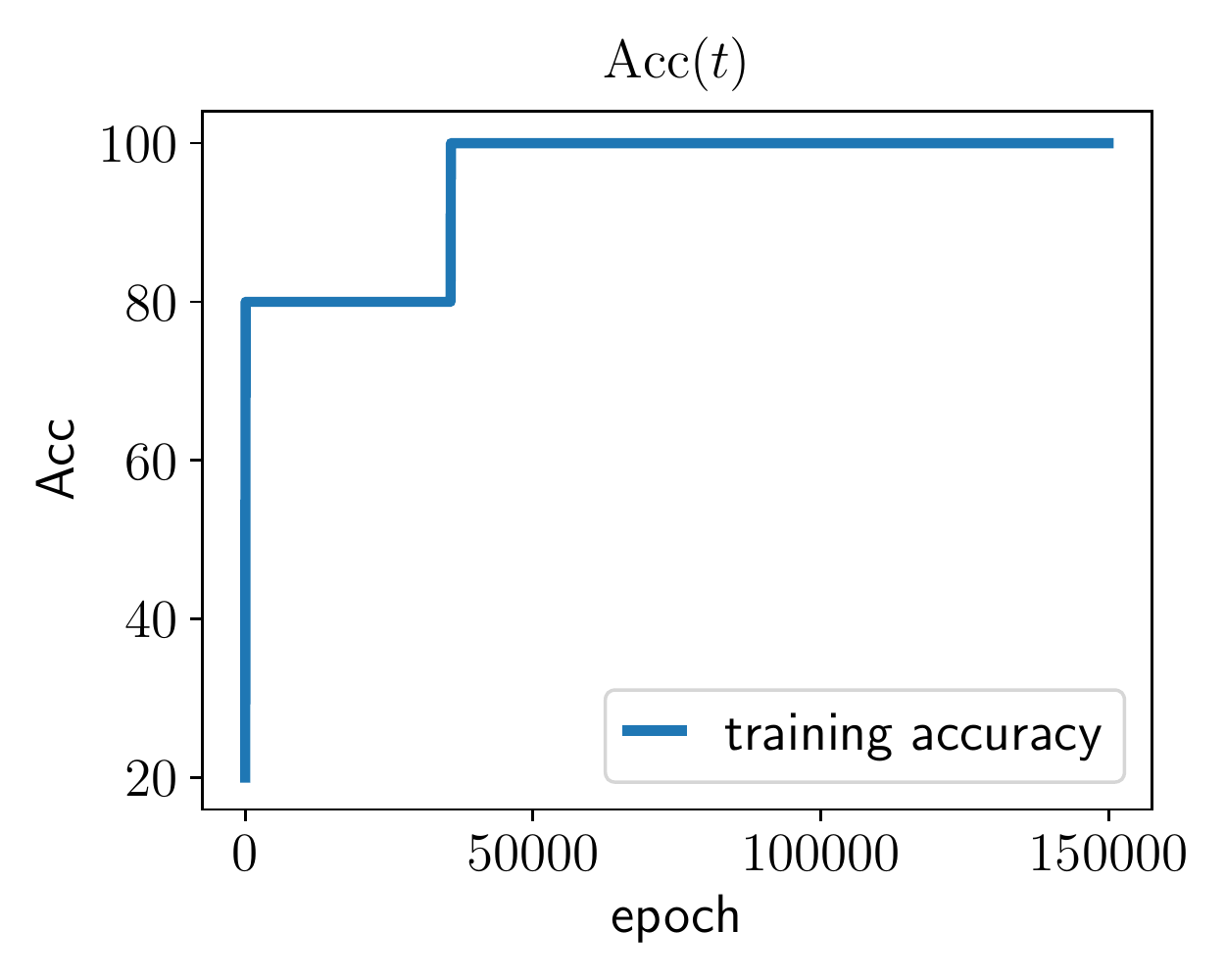}}
    % \hspace{-4mm}
    \subfigure[\small $t=0$]
    {\includegraphics[width=3cm]{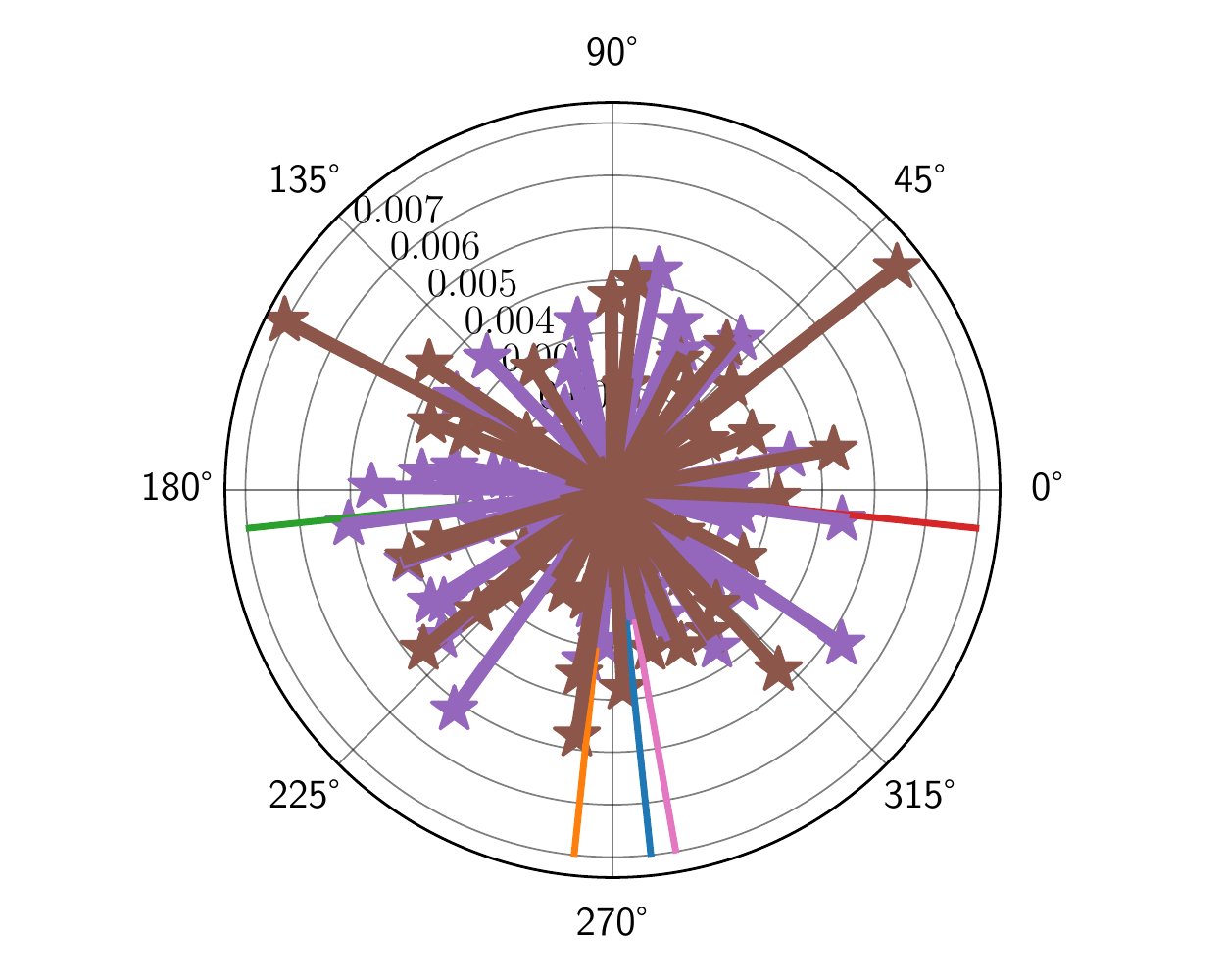}}
    \hspace{-.5cm}
    \subfigure[\small $t=200$]
    {\includegraphics[width=3cm]{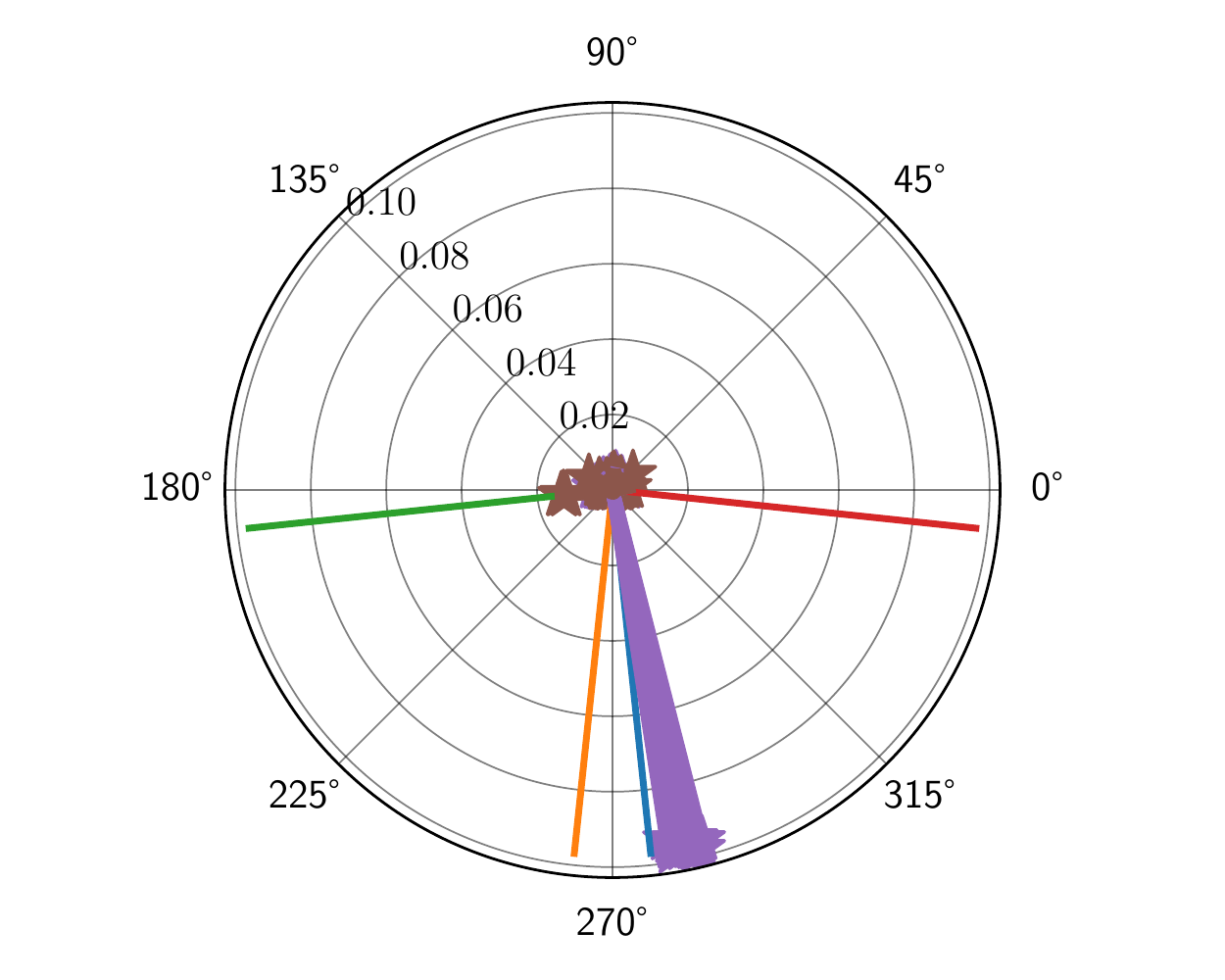}}
    \hspace{-.5cm}
    \subfigure[\small $t=50000$]
    {\includegraphics[width=3cm]{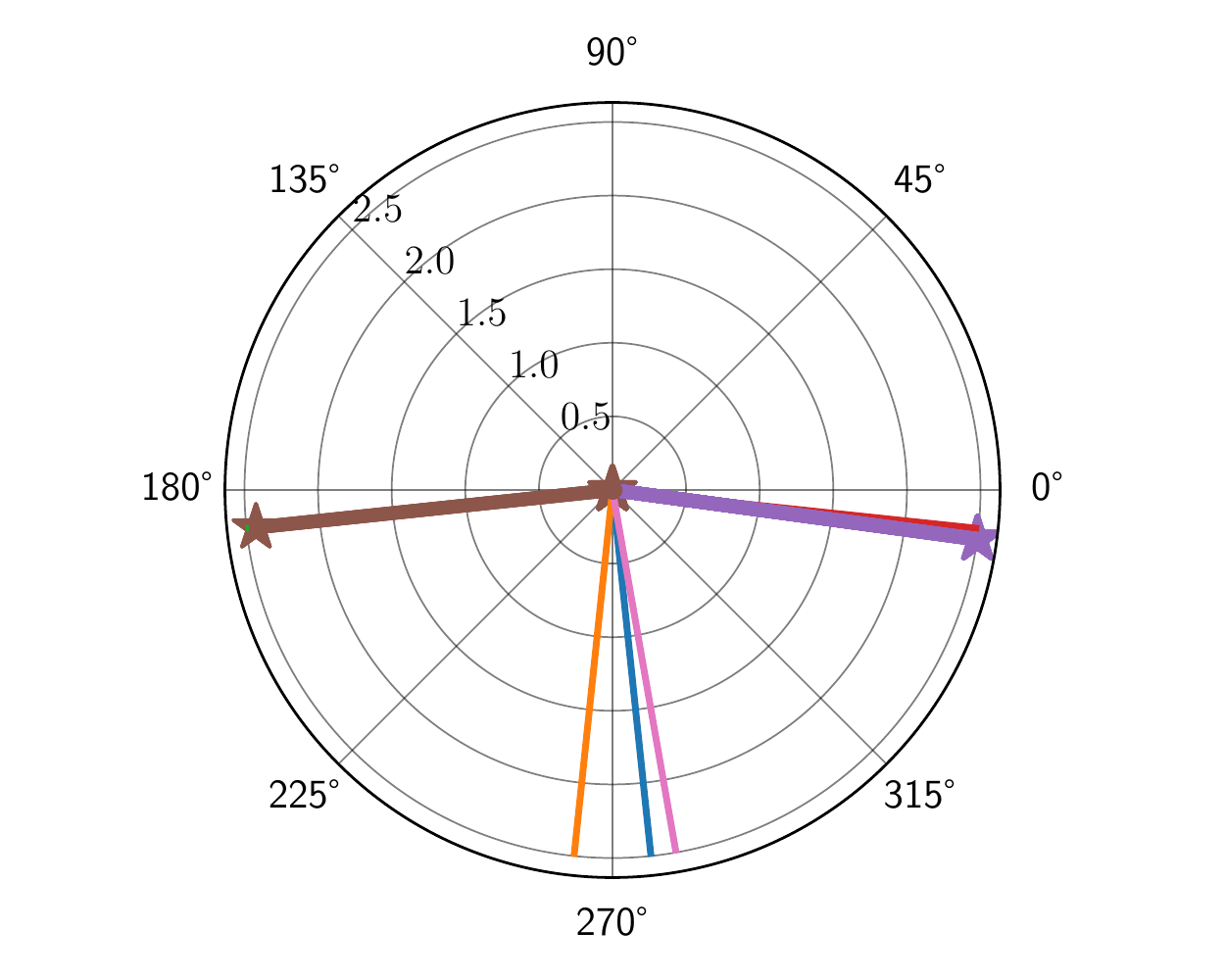}}
    \hspace{-.5cm}
    \subfigure[\small $t=60000$]
    {\includegraphics[width=3cm]{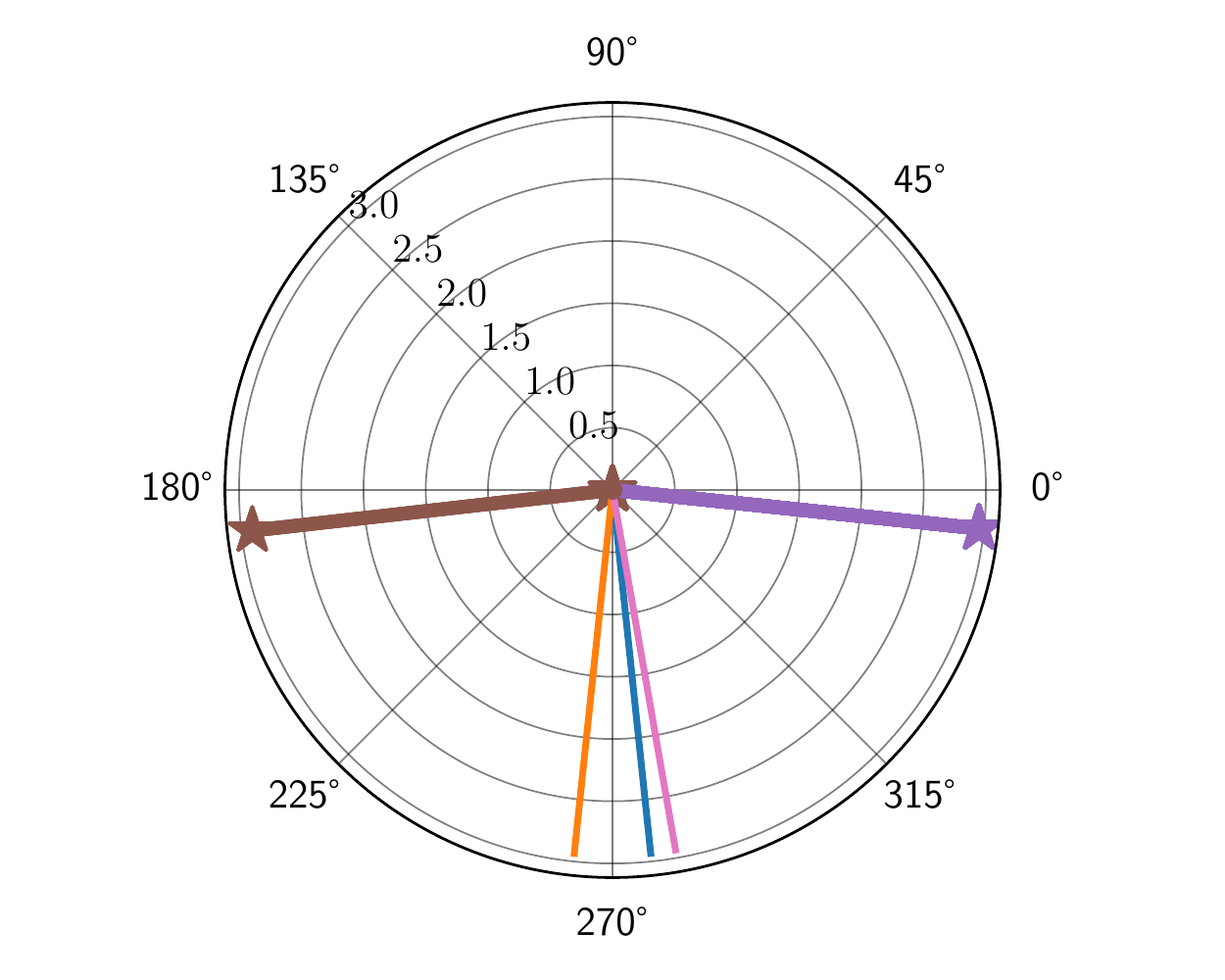}}
    \hspace{-.5cm}
    \subfigure[\small $t=150000$]
    {\includegraphics[width=3cm]{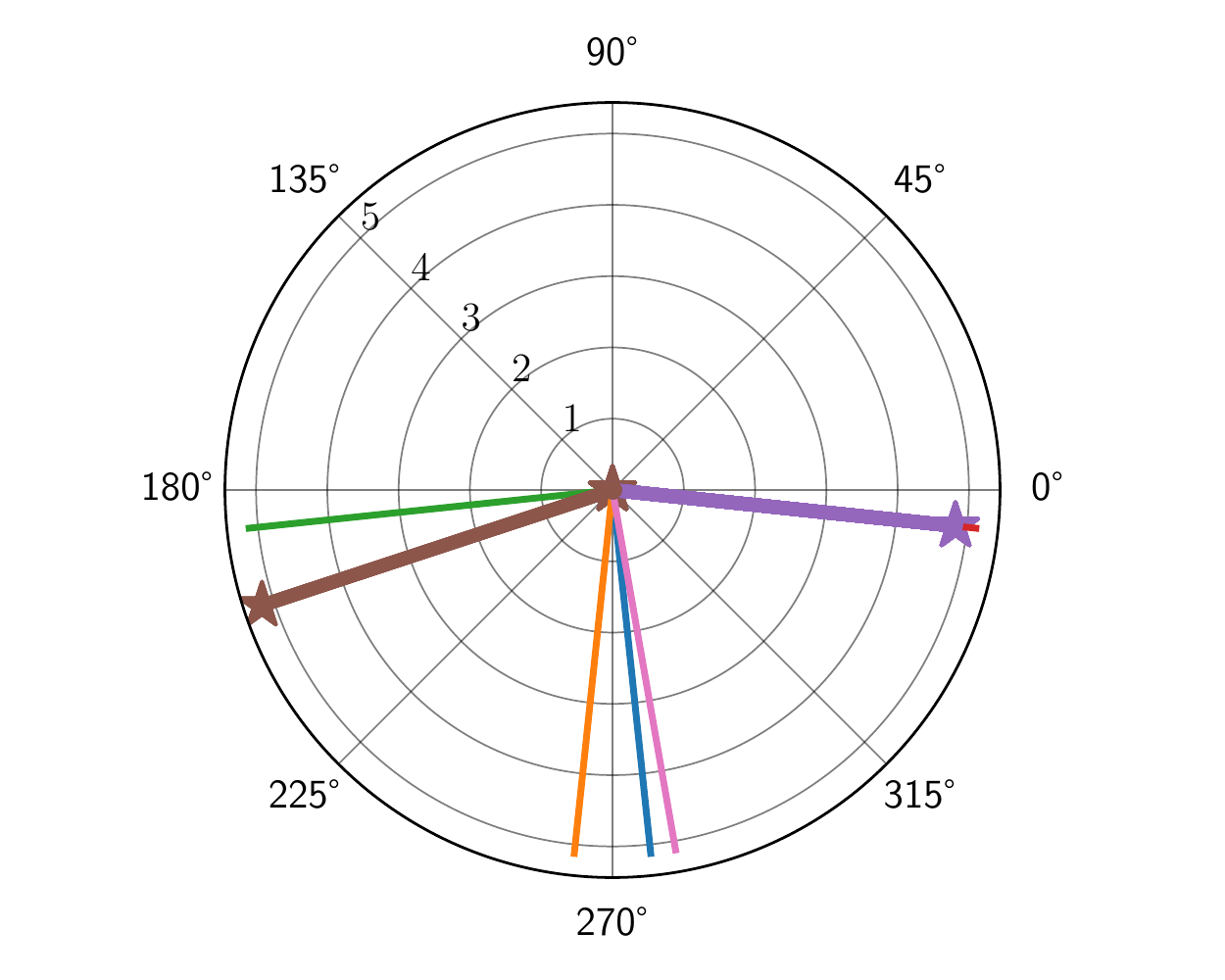}}
\end{center}
% \vspace{-.4cm}
\caption{ These figures visualize 
(in polar coordinates) the projections of all neurons $\{\bb_k(t)\}_{k\in[m]}$ onto the $2$d subspace ${\rm span}\{\bx_+,\bx_-\}$ during training. Each purple star represents a positive neuron $(k\in[m/2])$, while each brown star represents a negative neuron $(k\in[m]-[m/2])$. Additionally, the directions of $\bx_+,\bx_-,\bx_+^{\perp},\bx_-^{\perp},\bmu$ are plotted in blue, orange, green, red and pink colors, respectively. The complete version of these figures is Figure~\ref{fig: whole dynamics} in Appendix~\ref{appendix: experiment: without noise}.}
\label{fig: dynamics}
% \vspace{-.2cm}
\end{figure}

%Now we summarize the four-phase dynamics and nonlinear behaviors as follows. 

From Fig~\ref{fig: dynamics}(a) to (b) is the {\bf Phase I} of the dynamics, marked by a condensation of neurons.
%In {\bf Phase I} (from Fig~\ref{fig: dynamics}(a) to (b)), 
Although the initial directions are random, we see that all neurons are rapidly divided into three categories: living positive neurons $(k\in\cK_+)$ and living negative neurons $(k\in\cK_-)$ condense in one direction each ($\bmu$ and $\bx_+^{\perp}$), while other neurons $(k\notin\cK_+\cup\cK_-)$ are deactivated forever.
From the perspective of loss landscape, GF rapidly escapes from the saddle near $\bzero$ where the loss gradient vanishes.

From Fig~\ref{fig: dynamics}(b) to (c) is the {\bf Phase II} of the dynamics, in which GF gets stuck into a plateau with training accuracy $\frac{p}{1+p}$ for a long time $T_{\rm plat}$ before escaping. Once the dynamics escapes from the plateau, the training accuracy rises to a perfect $100\%$. Moreover, activation patterns do not change in this phase.

From Fig~\ref{fig: dynamics}(c) to (d) is the {\bf Phase III} of the dynamics. The phase transition from phase II to phase III
%At the beginning of {\bf Phase III} (from Fig~\ref{fig: dynamics}(c) to (d)), a ``phase transition'' occurs, where 
sees a rapid deactivation of all the living positive neurons $k\in\cK_+$ on $\bx_-$ rapidly, while other activation patterns are unchanged. This leads to a simpler network in phase III, in which only living positive neurons (in $\cK_+$) predict $\bx_+$, and only living negative neurons (in $\cK_-$) predict $\bx_-$. Hence, in this phase the GF tries to learn the training data using almost the simplest network by only changing the norms of the neurons. 

Finally, Fig~\ref{fig: dynamics}(d) to (e) shows {\bf Phase IV}, starting from another ``phase transition'' when all the living negative neurons ($k\in\cK_-$) reactivate simultaneously on $\bx_+$.
%At the beginning of {\bf Phase IV} (from Fig~\ref{fig: dynamics}(d) to (e)), another ``phase transition'' occurs, all the living negative neurons ($k\in\cK_-$) reactivate simultaneously on $\bx_+$, while other activation patterns do not change. 
This leads to a more complicated network. After the phase transition, the activation patterns no longer change, and the neurons eventually converges towards some specific directions dependent on both data and initialization. Additionally, this direction is not even the local optimal max margin direction.

{\bf Overall}, the whole dynamics exhibit a simplifying-to-complicating learning trend.

In the following four subsections, we present a meticulously detailed and comprehensive depiction of the whole optimization dynamics and nonlinear behaviors.
For clarity, in Figure~\ref{fig: timeline}, we first display the timeline of our dynamics and some nonlinear behaviors.

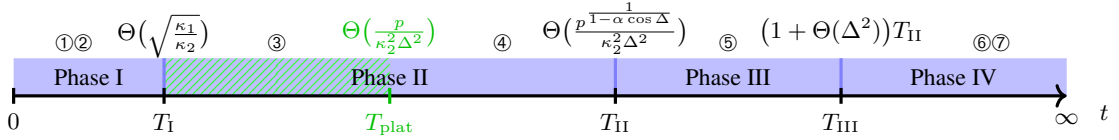
\begin{figure}[!h]
% \vspace{-.2cm}
\begin{tikzpicture}[very thick, black]
\small
%% Coordinates
\coordinate (O) at (-1,0); % Origin
\coordinate (P1) at (1,0);
\coordinate (P2) at (7,0);
\coordinate (P3) at (10,0);
\coordinate (F) at (13,0); %End
\coordinate (E1) at (5,0); %Event
\coordinate (E2) at (0.5,0); %Event

%% Filled regions
\fill[color=blue!25] rectangle (O) -- (P1) -- ($(P1)+(0,0.5)$) -- ($(O)+(0,0.5)$); % Studies

\fill[color=blue!25] rectangle (P1) -- (P2) -- ($(P2)+(0,0.5)$) -- ($(P1)+(0,0.5)$); % Work
\path [pattern color=green!80, pattern=north east lines, line width = 1pt, very thick] rectangle ($(1,0)$) -- ($(4,0)$) -- ($(4,0.5)$) -- ($(1,0.5)$);
% \shade[left color=orange, right color=white] rectangle (P1) -- ($(P2)+(-2,0)$) -- ($(P2)+(-2,1)$) -- ($(P1)+(0,1)$); % Current work
\fill[color=blue!25] rectangle (P2) -- (P3) -- ($(P3)+(0,0.5)$) -- ($(P2)+(0,0.5)$); 
% \fill[color=blue!25] rectangle (P3) -- (P4) -- ($(P4)+(0,0.5)$) -- ($(P3)+(0,0.5)$); 
\fill[color=blue!25] rectangle (P3) -- (F) -- ($(F)+(0,0.5)$) -- ($(P3)+(0,0.5)$); 

%% Text inside filled regions
\draw ($(0, 0.25)$) node[activity,black] {Phase I};
\draw ($(4.0, 0.25)$) node[activity,black] {Phase II};
\draw ($(8.5, 0.25)$) node[activity,black] {Phase III};
\draw ($(11.5, 0.25)$) node[activity,black] {Phase IV};

% %% Description
% \node[descript,fill=green!15,text=green](D2) at ($(P2)+(-2,-2.5)$) {%
% 	\textbf{Where}\\
% 	Project\\
% 	description};

% \node[descript,fill=pink!15,text=pink](D3) at ($(P3)+(-2,-2.5)$) {%
% 	\textbf{Where}\\
% 	Project\\
% 	description};
	
%% Events
% \draw[<-,thick,color=black] ($(E1)+(0,0.1)$) -- ($(E1)+(0,1.5)$) node [above=0pt,align=center,black] {Unexpected\\event};
% \draw [decorate,decoration={brace,amplitude=6pt}]($(E2)+(-1,1.2)$) -- ($(E2)+(0.5,1.2)$) node [black,midway,above=6pt] {Something};

% %% Arrows
% \path[->,color=purple] ($(P2)+(-1,-0.1)$) edge [out=-90, in=130]  ($(D2)+(0,1)$);
% \path[->,color=purple]($(P3)+(-1,-0.1)$)  edge [out=-70, in=90]  ($(D3)+(0,1)$);

\draw[color=blue!50] (P1) -- ($(P1)+(0,0.5)$);
\draw[color=blue!50] (P2) -- ($(P2)+(0,0.5)$);
\draw[color=blue!50] (P3) -- ($(P3)+(0,0.5)$);
% \draw[color=blue!50] (P4) -- ($(P4)+(0,0.5)$);

%% Arrow
\draw[->] (O) -- (F);
%% Ticks
\foreach \x in {-1,1,7,10}{\draw(\x cm,3pt) -- (\x cm,-3pt);};
\draw[color=green!75!black] (4cm, 3pt) -- (4cm, -3pt);
%% Labels
\draw[color=green!75!black] (4, 0) node[below=3pt] {$T_{\rm plat}$};
\draw (4, 0.8) node[activity,green!75!black] {$\Theta\big(\frac{p}{\kappa_2^2\Delta^2}\big)$};

\draw (-1, 0) node[below=3pt] {$0$};
\draw (1, 0) node[below=3pt] {$T_{\rm I}$};
\draw (1, 0.8) node[activity,black] {$\Theta\big(\sqrt{\frac{\kappa_1}{\kappa_2}}\big)$};
\draw (7, 0) node[below=3pt] {$T_{\rm II}$};
\draw (7, 0.9) node[activity,black] {$\Theta\big(\frac{p^{\frac{1}{1-\alpha\cos\Delta}}}{\kappa_2^2\Delta^2}\big)$};
\draw (10, 0) node[below=3pt] {$T_{\rm III}$};
\draw (10, 0.85) node[activity,black] {$\big(1+\Theta(\Delta^2)\big)T_{\rm II}$};
\draw (13, 0) node[below=3pt] {$\infty$};
\draw (13.5, 0) node[below=0pt] {$t$};

% \draw (1, 1) node[activity,black] {$\Theta\bracket{\sqrt{\frac{\kappa_1}{\kappa_2}}}$};
% \draw (6, 1) node[activity,black] {$\Theta\bracket{\frac{p^{\frac{1}{1-\alpha\cos\Delta}}}{\kappa_2^2\Delta^2}}$};
% \draw (8, 1) node[activity,black] {$\bracket{1+\cO(\sqrt{\kappa_1\kappa_2^3})}T_{\rm II}$};
% \draw (10, 1) node[activity,black] {$\bracket{1+\Theta(\Delta^2)}T_{\rm III}$};
% $\Theta\bracket{\frac{p}{\kappa_2^2\Delta^2}}$

\draw (-0.2, 0.7) node[activity,black] {\ding{192}\ding{193}};
\draw (2.5, 0.7) node[activity,black] {\ding{194}};
\draw (5.5, 0.7) node[activity,black]  {\ding{195}};
\draw (8.5, 0.7) node[activity,black]  {\ding{196}};
\draw (12.0, 0.7) node[activity,black] {\ding{197}\ding{198}};

% \foreach \i \j in {-1/2013,2/2014,4/2015,6/2016,8/2017,10/2018,12/2019}{
% 	\draw (\i,0) node[below=3pt] {\j};}
\end{tikzpicture}
% \vspace{-.5cm}
\caption{
Timeline of the four-phase optimization dynamics, containing some key time points $T_{\rm I},T_{\rm II},T_{\rm III},T_{\rm plat}$ and their theoretical estimates, and some basic nonlinear behaviors: \ding{192} initial condensation, \ding{193} saddle escape, \ding{194} getting stuck in plateau, \ding{195} plateau escape, \ding{196} neuron deactivation, \ding{197} neuron reactivation, \ding{198} initialization-dependent directional convergence. Notice \ding{192}$\sim$\ding{198} are only some basic nonlinear behaviors. Moreover, \ding{193}+\ding{194} is saddle-to-plateau, \ding{192}+\ding{196}+\ding{197} is simplifying-to-complicating.}
\label{fig: timeline}
% \vspace{-.2cm}
\end{figure}

In Appendix~\ref{appendix: experiment: bounds}, we further validate our theoretical bounds on the key time points in Figure~\ref{fig: timeline}  numerically. Additionally, in Appendix~\ref{appendix: experiment: with noise}, we relax the data Assumption~\ref{ass: data} by perturbing the data with random noise, and our experimental results illustrate that similar four-phase dynamics and nonlinear behaviors persist. 

% \begin{table}[!ht]
% \begin{center}
% \small
% \caption{\small Key time points.}
% \begin{tabular}{c|c|c|c|c|c}
%     \hline\hline
%     & $T_{\rm I}$ & $T_{\rm plat}$ & $T_{\rm II}$ & $T_{\rm III}$ &  
%     \\  \hline
%     $0$ & $\Theta\bracket{\sqrt{\frac{\kappa_1}{\kappa_2}}}$ & $\Theta\bracket{\frac{p}{\kappa_2^2\Delta^2}}$ & $\Theta\bracket{\frac{p^{\frac{1}{1-\alpha\cos\Delta}}}{\kappa_2^2\Delta^2}}$ & $\bracket{1+\Theta(\Delta^2)}T_{\rm III}$ & $+\infty$
%     \\ \hline\hline
% \end{tabular}
% \label{table: timeline}
% \end{center}
% \end{table}

% \newpage
\subsection{Phase I. Initial Condensation and Saddle Escape}\label{subsection: phase I}

Let $T_{\rm I}=10\sqrt{\frac{\kappa_1}{\kappa_2}}$, and we call $t\in[0,T_{\rm I}]$ Phase I. The theorem below is our main result in Phase I.

\begin{theorem}[Initial Condensation]\label{thm: GF Phase I}
Let the width $m=\Omega\left(\log(1/\delta)\right)$, the initialization $\kappa_1,\kappa_2=\cO(1)$ and $\kappa_1/\kappa_2=\cO(\Delta^8)$. Then with probability at least $1-\delta$, the following results hold at $T_{\rm I}$:

{\bf (S1)} Let $\mathcal{K}_+$ be the index set of living positive neurons at $T_{\rm I}$, i.e. $\mathcal{K}_+:=\{k\in[m/2]:\text{\rm\sgn}_k^+(T_{\rm I})=1\text{ or }\text{\rm\sgn}_{k}^-(T_{\rm I})=1\}$.
Then, {\bf (i)} $0.21m\leq|\mathcal{K}_+(T_{\rm I})|\leq 0.29m$. Moreover, for any $k\in\mathcal{K}_+$, {\bf (ii)} its norm is small but significant: $\rho_k(T_{\rm I})=\Theta\bracket{\sqrt{\frac{\kappa_1\kappa_2}{m}}}$; {\bf(iii)} Its direction is strongly aligned with $\boldsymbol{\mu}$: $\left<\boldsymbol{w}_k(T_{\rm I}),\boldsymbol{\mu}\right>\geq1-\cO\left(\sqrt{\kappa_1\kappa_2}\right)-\cO\bracket{({\kappa_1}/{\kappa_2})^{0.55}}$; {\bf (iv)} $\text{\rm\sgn}_{k}^+(T_{\rm I})=\text{\rm\sgn}_{k}^-(T_{\rm I})=1$.

{\bf (S2)} Let $\mathcal{K}_-$ be the index set of living negative neurons at $T_{\rm I}$, i.e. $\mathcal{K}_-:=\{k\in[m]-[m/2]:\text{\rm\sgn}_k^+(T_{\rm I})=1\text{ or }\text{\rm\sgn}_{k}^-(T_{\rm I})=1\}$. Then, {\bf (i)} $0.075m\leq|\mathcal{K}_-|\leq 0.205m$. Moreover, for any $k\in\mathcal{K}_-$, {\bf (ii)} its norm is tiny: $\rho_k(T_{\rm I})=\cO\left(\frac{\sqrt{\kappa_1\kappa_2}}{\sqrt{m}}\big(\sqrt{\frac{\kappa_1}{\kappa_2}}+\frac{\Delta}{p}\big)\right)$; {\bf (iii)} its direction is aligned with $\bx_+^{\perp}$: $\left<\boldsymbol{w}_k(T_{\rm I}),\bx_+^{\perp}\right>\geq1-\cO\left((\sqrt{\frac{\kappa_1}{\kappa_2}}\frac{p}{\Delta})^{1.6}\right)$; {\bf (iv)} $\text{\rm\sgn}_{k}^-(T_{\rm I})=1$, but $\text{\rm\sgn}_k^+(T_{\rm I})=0$.

{\bf (S3)} For other neuron $k\notin\cK_+\cup\cK_-$, it dies and
remains unchanged during the remaining training process: $\text{\rm\sgn}_k^+(t)\leq0,\text{\rm\sgn}_k^-(t)\leq0,\bb_k(t)\equiv\bb_k(T_{\rm I}),\ \forall t\geq T_{\rm I}$.

{\bf (S4).} $f_+(T_{\rm I})=\Theta\left(\kappa_2\sqrt{\kappa_1\kappa_2}\right),f_-(T_{\rm I})=\Theta\left(\kappa_2\sqrt{\kappa_1\kappa_2}\right)$, and $\Acc(T_{\rm I})=\frac{p}{1+p}$.
\end{theorem}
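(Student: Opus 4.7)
The plan is to exploit the smallness of the initialization scale $\kappa_1, \kappa_2 \le 1$ and the shortness of the time window $T_{\rm I} = 10\sqrt{\kappa_1/\kappa_2}$ to essentially decouple the $m$ neuron ODEs. The guiding observation is that throughout Phase I one should have $|f_\pm(t)| = O(\kappa_2\sqrt{\kappa_1\kappa_2})$, so $e^{-y_i f_i(t)} = 1 + o(1)$ uniformly, and the subgradient inclusion for each $\bb_k$ reduces (up to lower-order terms) to the autonomous
$$\dot{\bb}_k \in \frac{a_k}{n}\bigl[\xi_k^+ n_+ \bx_+ - \xi_k^- n_- \bx_-\bigr], \qquad \xi_k^\pm \in \sigma'(\langle \bb_k, \bx_\pm\rangle),$$
which I would justify by a bootstrap coupled with the norm estimates below. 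A preliminary concentration step classifies each neuron by its initial activation pattern on $(\bx_+,\bx_-)$: the four patterns (``both active'', ``only $\bx_+$'', ``only $\bx_-$'', ``neither'') have spherical measures $\tfrac{1}{4} \pm \tfrac{\Delta}{4\pi}$, so a Hoeffding bound together with $m = \Omega(\log(1/\delta))$ places the empirical counts in the windows of (S1i) and (S2i).

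With the linearization in hand, I would analyze each of the eight (sign of $a_k$) $\times$ (four patterns) cases by direct integration of the drift. Positive neurons starting in ``both'' drift along $\bz$, and since $p\cos\Delta > 1$ both inner products with $\bx_\pm$ stay positive, yielding alignment with $\bmu$. Positive neurons in ``only $\bx_+$'' drift along $\bx_+$, which increases $\langle \bb_k,\bx_-\rangle$ at rate $\cos\Delta$, so they transit to ``both'' within a short transient and then join the bulk; the ``only $\bx_-$'' and ``neither'' patterns receive no sustained growth and die. Negative neurons in ``only $\bx_-$'' drift along $\bx_-$ and align to $\bx_+^\perp$ without ever activating on $\bx_+$ (their initial $\langle \bb_k,\bx_+\rangle$ is too negative and $\cos\Delta<1$ is insufficient to flip the sign within $T_{\rm I}$). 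The most delicate case is the negative ``both'' neurons: their drift is $\propto -\bz$, and since $(p-\cos\Delta) > (p\cos\Delta - 1)$ the $\bx_+$-activation is lost first; a Filippov sliding argument then pins them to the manifold $\{\bb : \langle \bb,\bx_+\rangle = 0\}$, where the tangential drift computes to $\Theta(\kappa_2 \sin^2\Delta/(p\sqrt{m}))$ in the $\bx_+^\perp$ direction, producing the smaller norm and alignment of (S2ii)--(S2iii).

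For the living neurons, $\|\bb_k(0)\| = \kappa_1/\sqrt{m}$ while the accumulated drift over $[0,T_{\rm I}]$ has magnitude $\Theta(\sqrt{\kappa_1\kappa_2/m})$, so the ratio $\sqrt{\kappa_1/\kappa_2}\ll 1$ governs both the norm estimate (S1ii) and the direction alignment in (S1iii); the $O((\kappa_1/\kappa_2)^{0.55})$ correction absorbs the fraction of the trajectory spent category-switching (the ``only $\bx_+$'' branch) together with the polynomial slack in the bootstrap. The extra factor $\sqrt{\kappa_1/\kappa_2} + \Delta/p$ appearing in (S2ii) reflects the slower tangential drift on the sliding boundary and the short prelude before reaching it. Claim (S3) is then immediate because a fully inactive neuron has zero Clarke subgradient and the strict inequalities $\langle \bb_k(t),\bx_\pm\rangle < 0$ are preserved on an open set. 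Finally, summing $f_\pm(T_{\rm I}) = \sum_k a_k \sigma(\langle \bb_k,\bx_\pm\rangle)$ over the three groups yields $\Theta(\kappa_2\sqrt{\kappa_1\kappa_2})$ on both inputs: the positive bulk contributes $\Theta(1)$ via $\langle \bmu,\bx_\pm\rangle > 0$, while the negative bulk contributes only lower-order terms (smaller norm, and zero on $\bx_+$), so both $f_\pm(T_{\rm I}) > 0$, and the training accuracy equals $p/(1+p)$, proving (S4).

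The main obstacle is the coupling between the linearized dynamics and the bootstrap on $|f_\pm(t)|$: one cannot assume $e^{-y_i f_i} \approx 1$ without per-neuron norm control, yet those norm estimates rely on the linearization. The cleanest strategy is a continuity argument with an inductive hypothesis $|f_\pm(t)| \le C\kappa_2\sqrt{\kappa_1\kappa_2}$, showing strict improvement of slack at the first violation time. A secondary difficulty is the rigorous Filippov analysis for the negative ``both'' neurons, where one must verify existence and stability of the sliding mode over $[0,T_{\rm I}]$, establish the $\Theta(\sin^2\Delta)$ tangential drift, and show that the transient to reach the sliding manifold is $O(\Delta/p) \ll T_{\rm I}$, so that the emergent direction $\bx_+^\perp$ is attained with the sharp quantitative rates claimed.
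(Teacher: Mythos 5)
Your overall plan---bootstrap on $|f_\pm(t)|$, decomposition into radial/tangential dynamics, case analysis by sign times initial activation pattern, and a Filippov sliding-mode argument for the negative neurons---matches the architecture of the paper's proof (Appendix~\ref{appendix: proof: Phase I}, Lemmas~\ref{lemma: GF Phase I norm estimate}--\ref{lemma: initial position estimate}). However, there are two concrete errors that break the proposal.

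First, the spherical-measure computation is wrong. You claim the four initial activation patterns each have measure $\frac{1}{4}\pm\frac{\Delta}{4\pi}$. Since $\bx_+$ and $\bx_-$ subtend a small angle $\Delta$, the projection of a uniform $\bw\sim\mathbb{U}(\mathbb{S}^{d-1})$ onto ${\rm span}\{\bx_+,\bx_-\}$ is uniform on the circle, and the arcs have lengths $\pi-\Delta,\ \Delta,\ \Delta,\ \pi-\Delta$. So the pattern probabilities are $\approx\frac{1}{2},\ \frac{\Delta}{2\pi},\ \frac{\Delta}{2\pi},\ \frac{1}{2}$---not four roughly equal quarters. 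Your derivation of the counts in (S1i) from this premise is therefore unsound even when the final answer ($|\cK_+|\approx m/4$, which is $\tfrac12$ of the $m/2$ positive neurons) happens to be right.

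Second, and more seriously, (S2i) cannot follow from a Hoeffding bound on initial-pattern counts alone, because $\cK_-$ is not determined by the initial activation pattern. Your treatment of negative neurons in the ``both'' region asserts that ``the $\bx_+$-activation is lost first'' because $p-\cos\Delta > p\cos\Delta-1$. But this only compares drift \emph{rates}; the initial margins $\langle\bb_k(0),\bx_+\rangle$ and $\langle\bb_k(0),\bx_-\rangle$ vary over the ``both'' region, and a neuron starting close to $\bx_+$ (so that $\langle\bb_k(0),\bx_-\rangle$ is tiny) loses $\bx_-$-activation first and dies. The paper's Lemma~\ref{lemma: GF Phase I negative S+ S-} (Step IV) explicitly identifies the dividing line (thresholds $A$ and $B$ close to $1$) and Lemma~\ref{lemma: initial position estimate} counts the initial positions on each side; that finer count is what produces the window $0.075m\le|\cK_-|\le 0.205m$. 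Without it, (S2i) is simply missing. A smaller issue is your claim that negative ``only $\bx_-$'' neurons ``never activate on $\bx_+$'' because their initial $\langle\bb_k,\bx_+\rangle$ is too negative; in fact the drift along $\bx_-$ \emph{increases} $\langle\bb_k,\bx_+\rangle$ (at rate $\propto\cos\Delta$) and they hit $\langle\bb_k,\bx_+\rangle=0$ in time $\mathcal{O}(\kappa_1 p\Delta/\kappa_2)$ (Lemma~\ref{lemma: GF Phase I negative S-}); they stay at the boundary because of the sliding mode, not because the boundary is unreachable.
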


{\bf Initial condensation and simplification.}
% Using random initialization \eqref{equ: random initialization}, the directions $\{\bw_k\}_{k\in[m]}$ of initial neurons are randomly distributed in $\bbS^{d-1}$. 
Theorem \ref{thm: GF Phase I} (S1)(S2)(S3) show that, after a short time $T_{\rm I}=\Theta(\sqrt{\kappa_1/\kappa_2})$, all neurons are implicitly simplified to three categories: $\cK_+$, $\cK_-$ and others. The living positive neurons $k\in\cK_+$ align strongly with $\bmu$, and the living negative neurons $k\in\cK_-$ align with $\bx_+^{\perp}$ and lie on the manifold orthogonal to $\bx_+$. Other neurons die and remain unchanged during the remaining training process. Moreover, we also estimate tight bounds for $|\cK_+|$ and $|\cK_+|$. 
Actually, $\kappa_1/\kappa_2=\cO(1)$ can ensure Theorem \ref{thm: GF Phase I} and initial condensation hold (please refer to Appendix~\ref{appendix: proof: Phase I}), and we write $\kappa_1/\kappa_2=\cO(\Delta^8)$ here to ensure that the dynamics of later phases hold. In Phase I, the dynamics exhibit a fast condensation phenomenon, i.e., in addition to dead neurons, living positive and negative neurons condense in one direction each.

{\bf Saddle-to-Plateau.} 
The network is initially close to the saddle point at $\bzero$ (where the loss gradient vanishes). However, Theorem \ref{thm: GF Phase I} (S1) reveals that despite being small, there is a significant growth in the norm of living positive neuron $k\in\cK_+$ from $\Theta({\kappa_1}/{\sqrt{m}})$ to $\Theta(\sqrt{\kappa_1\kappa_2}/\sqrt{m})$ and the predictions also experience substantial growth (S4). This means that GF escapes from this saddle rapidly. 
Furthermore, it is worth noting that initial training accuracy can randomly be $0,\frac{1}{1+p},\frac{p}{1+p},$ or $1$. 
However, after Phase I, the training accuracy reaches $\Acc(T_{\rm I})=\frac{p}{1+p}$ which we will prove as a plateau in the next subsection. Therefore, Phase I exhibits saddle-to-plateau dynamics.

\begin{remark}\label{rmk: K+, K-, alpha}
Throughout the following subsections, we call the neuron $k\in\cK_+$ the ``living positive neuron'', the neuron $k\in\cK_-$ the ``living negative neuron'', and the neuron $k\notin\cK_+\cup\cK_-$ the ``dead neuron''. Moreover, we denote $m_+:=|\cK_+|$, $m_-:=|\cK_-|$, and $\alpha:=\frac{m_-}{m_+}$. 
Notice that Theorem~\ref{thm: GF Phase I}(S1)(S2) guarantee that $0<\frac{0.075}{0.29}\leq\alpha\leq\frac{0.205}{0.21}<1$.
% These notations will appear frequently. 
\end{remark}

\begin{remark} 
The results in the following subsections are all based on the occurrence of the events in Theorem \ref{thm: GF Phase I} and with the same settings as Theorem \ref{thm: GF Phase I}. So they all hold with probability at least $1-\delta$.
\end{remark}

Please refer to Appendix~\ref{appendix: proof: Phase I} for the proof of Phase I.

\subsection{Phase II. Getting Stuck in and Escaping from Plateau}\label{subsection: phase II}

In this phase, we study the dynamics before the patterns of living neurons change again after Phase I. Specifically, we define 
\begin{align*}
    T_{\rm II}:=\inf\{t>T_{\rm I}:\exists k\in\cK_+\cup\cK_-,\text{\rm\sgn}_k^+(t)\ne\text{\rm\sgn}_k^+(T_{\rm I})\text{ or }\text{\rm\sgn}_k^-(t)\ne\text{\rm\sgn}_k^-(T_{\rm I})\},
\end{align*} 
and call $t\in(T_{\rm I},T_{\rm II}]$ Phase II.

\begin{theorem}[End of Phase II]\label{thm: GF Phase II}
{\bf (S1)} $T_{\rm II}=\Theta\left(\frac{p^{\frac{1}{1-\alpha\cos\Delta}}}{\kappa_2^2\Delta^2}\right)$. 
{\bf (S2)} $\cL(\btheta(T_{\rm II}))=\Theta\left(p^{-\frac{1}{1-\alpha\cos\Delta}}\right)$.
{\bf (S3)} One of living positive neuron $k_0\in\cK_+$ precisely changes its pattern on $\bx_-$ at $T_{\rm II}$: $\lim\limits_{t\to T_{\rm II}^-}\text{\rm\sgn}_{k_0}^-(t)=1$ and $\lim\limits_{t\to T_{\rm II}^+}\text{\rm\sgn}_{k_0}^-(t)=0$, while all other activation patterns remain unchanged. 
% \vspace{-.2cm}
\end{theorem}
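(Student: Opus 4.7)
The plan is to reduce Phase II to a two-dimensional autonomous ODE in $(f_+, f_-)$ and then extract $T_{\rm II}$ and $\cL(T_{\rm II})$ by asymptotic/self-consistency arguments. Throughout $(T_{\rm I}, T_{\rm II}]$ the activation patterns are pinned to their $T_{\rm I}$ values by the very definition of $T_{\rm II}$, so each neuron's motion becomes closed-form once we settle the status of the $\cK_-$ neurons, which at $T_{\rm I}$ sit at $\bb_k^\top\bx_+\le 0$. On the non-activated side the flow $\dot\bb_k=(|a_k|n_-/n)\,e^{f_-}\bx_-$ has a positive $\bx_+$-component, while on the activated side a $-pe^{-f_+}\bx_+$ term would push back; both vector fields point toward the hyperplane $\bb_k^\top\bx_+=0$ whenever $u:=pe^{-f_+}>v\cos\Delta$ with $v:=e^{f_-}$, so the Clarke/Filippov solution forces sliding along this hyperplane with $\dot\bb_k\propto v\sin\Delta\,\bx_+^{\perp}$, preserving $\sgn_k^+=0$ while $\bb_k^\top\bx_-$ grows only through its $\sin^2\Delta$ component.

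Summing the per-neuron equations (using $a_k^2=\kappa_2^2/m$) yields the reduced system
\begin{align*}
\dot u = -\tilde C u(u - v\cos\Delta), \qquad \dot v = \tilde C v\bigl(u\cos\Delta - (1+\alpha\sin^2\Delta) v\bigr),
\end{align*}
with $\tilde C=\kappa_2^2 m_+ n_-/(mn)=\Theta(\kappa_2^2)$; the factor $\alpha\sin^2\Delta$ rather than $\alpha$ is precisely the sliding-mode correction. A direct computation gives $\tfrac{d}{dt}\ln(u/v)=-\tilde C(1+\cos\Delta)v(u/v-\beta')$ with $\beta'=1+\alpha\sin^2\Delta/(1+\cos\Delta)$, so trajectories converge rapidly to the slow manifold $u=\beta'v$, on which $\dot v=-\tilde C\eta v^2$ with $\eta=(1+\alpha)\sin^2\Delta/(1+\cos\Delta)$, yielding $v(t)\sim 1/(\tilde C\eta t)$ and $\cL(\btheta(t))=(n_-/n)(u+v)=\Theta(v(t))$. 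Two sub-regimes deserve separate treatment: a short transient in which $u\gg v$, analyzed via the approximations $\dot u\approx-\tilde C u^2$ and $v=v(T_{\rm I})(p/u)^{\cos\Delta}$ giving a peak $v^{\ast}\sim p^{\cos\Delta/(1+\cos\Delta)}$ before the slow manifold is entered; and the ensuing long slow-manifold descent.

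Because all $k\in\cK_+$ obey the same ODE, the increment $\Delta_-(t):=\bb_k^\top\bx_-(t)-\bb_k(T_{\rm I})^\top\bx_-$ is a common quantity $(a n_-/n)\int_{T_{\rm I}}^t(u\cos\Delta-v)\,ds$ with $a=\kappa_2/\sqrt m$, and the first $k_0$ to reach $\bb_k^\top\bx_-=0$ is the one with minimal $\bb_k(T_{\rm I})^\top\bx_-$. Using $\dot v/v=\tilde C(u\cos\Delta-(1+\alpha\sin^2\Delta)v)$ to rewrite
\begin{align*}
\int_{T_{\rm I}}^{t}(u\cos\Delta-v)\,ds = \tfrac{1}{\tilde C}\ln\tfrac{v(t)}{v(T_{\rm I})} + \alpha\sin^2\Delta\int_{T_{\rm I}}^{t}v\,ds,
\end{align*}
substituting the slow-manifold asymptotics $\ln v(T_{\rm II})\approx-\ln(\tilde C\eta T_{\rm II})$ and $\int v\,ds\approx\ln(\tilde C\eta v^{\ast}T_{\rm II})/(\tilde C\eta)$ together with the Theorem~\ref{thm: GF Phase I} estimate of $\bb_{k_0}(T_{\rm I})^\top\bx_-$ and the transient peak $v^{\ast}\sim p^{\cos\Delta/(1+\cos\Delta)}$, and exploiting the algebraic cancellation $1-\alpha\sin^2\Delta/\eta=(1-\alpha\cos\Delta)/(1+\alpha)$, collapses the deactivation condition to a scalar equation whose solution is $T_{\rm II}=\Theta\bigl(p^{1/(1-\alpha\cos\Delta)}/(\kappa_2^2\Delta^2)\bigr)$. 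The loss then reads $\cL(T_{\rm II})=\Theta(v(T_{\rm II}))=\Theta(1/(\tilde C\eta T_{\rm II}))=\Theta(p^{-1/(1-\alpha\cos\Delta)})$, proving (S1) and (S2).

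Finally, (S3) requires that only $k_0$ flips at $T_{\rm II}$. The sliding condition $u/v=\beta'>\cos\Delta$ persists on the slow manifold (since $\beta'>1$), so every $k\in\cK_-$ stays pinned to $\bb_k^\top\bx_+=0$; for $k\in\cK_+$, the $\bx_+$-direction increment $\int(u-v\cos\Delta)\,ds$ is strictly positive (since $\beta'>\cos\Delta$), so $\bb_k^\top\bx_+>0$ throughout; and every $k\in\cK_+\setminus\{k_0\}$ has, by choice of $k_0$, strictly larger $\bb_k(T_{\rm I})^\top\bx_-$, so its $\bb_k^\top\bx_-$ is still positive at $T_{\rm II}$. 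The hard part will be making the two-timescale transient/slow-manifold analysis quantitative enough to extract the exponent $1/(1-\alpha\cos\Delta)$ cleanly; this ultimately rests on the power-law $v^{\ast}\sim p^{\cos\Delta/(1+\cos\Delta)}$ during the transient and on the cancellation $1-\alpha\sin^2\Delta/\eta=(1-\alpha\cos\Delta)/(1+\alpha)$ on the slow manifold, and requires careful control of the subleading contributions from $\ln v(T_{\rm I})$ and $1/(\tilde C\eta v^{\ast})$. A secondary subtlety is the rigorous justification of Filippov sliding within the Clarke subdifferential/discontinuous-system framework adopted in Appendix~\ref{section: discontinuous system solution}.
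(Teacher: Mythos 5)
Your 2D reduction is identical to the paper's: your $(u,v)=(pe^{-f_+},e^{f_-})$ are rescalings of the paper's $(\mathcal{U},\mathcal{V})$ (Lemma~\ref{lemma: GF Phase II 2-order f dynamics}), and you correctly recover the sliding-mode coefficient $1+\alpha\sin^2\Delta$ that comes from the Filippov dynamics of the $\cK_-$ neurons pinned on $\bx_+^\perp$. Your slow-manifold $u=\beta'v$ with $\beta'=1+\alpha\sin^2\Delta/(1+\cos\Delta)$ and decay constant $\eta=(1+\alpha)\sin^2\Delta/(1+\cos\Delta)$ is a correct and pleasant way to read the late-time behavior that the paper establishes via two-sided comparison bounds in Lemma~\ref{lemma: GF Phase II U V dynamics} (S4).

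Where you genuinely diverge from the paper is in extracting $T_{\rm II}$ from the deactivation integral $\int(u\cos\Delta-v)\,ds$. You split it as $\frac{1}{\tilde C}\ln\frac{v(t)}{v(T_{\rm I})}+\alpha\sin^2\Delta\int v\,ds$, which leaves a residual $\int v\,ds$ you must estimate across both the transient and the slow manifold. The paper instead observes that $\mathcal{U}\cos\Delta-\mathcal{V}$ is an \emph{exact} linear combination
\begin{align*}
\mathcal{U}\cos\Delta-\mathcal{V}=-\tfrac{m_-\cos\Delta}{m_++m_-}\tfrac{\dot{\mathcal{U}}}{\mathcal{U}}+\tfrac{m_+}{m_++m_-}\tfrac{\dot{\mathcal{V}}}{\mathcal{V}},
\end{align*}
so the integral telescopes to $\tfrac{m_-\cos\Delta}{m_++m_-}\ln\tfrac{\mathcal{U}(T_{\rm I})}{\mathcal{U}(t)}+\tfrac{m_+}{m_++m_-}\ln\tfrac{\mathcal{V}(t)}{\mathcal{V}(T_{\rm I})}$ with no residual integral at all; the exponent $1/(1-\alpha\cos\Delta)$ then falls out algebraically from the boundary values, with no need to locate $v^\ast$, carefully split timescales, or invoke the cancellation $1-\alpha\sin^2\Delta/\eta=(1-\alpha\cos\Delta)/(1+\alpha)$. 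Your route does reach the same answer (I verified the arithmetic, including the crucial $\tilde C\eta=\Theta(\kappa_2^2\Delta^2/p)$), but the paper's telescoping trick removes exactly the bookkeeping you flag as ``the hard part,'' and its two-sided comparison bounds supply the rigor that your slow-manifold heuristic still owes.

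Two local slips worth flagging: you write $\cL=(n_-/n)(u+v)=\Theta(v(t))$, but $n_-/n=1/(1+p)$, so $\cL=\Theta(v/p)$; your next step ``$=\Theta(1/(\tilde C\eta T_{\rm II}))=\Theta(p^{-1/(1-\alpha\cos\Delta)})$'' compensates by dropping a factor of $p$ in the opposite direction (in fact $1/(\tilde C\eta T_{\rm II})=\Theta(p^{-\alpha\cos\Delta/(1-\alpha\cos\Delta)})$, not $p^{-1/(1-\alpha\cos\Delta)}$). The two errors cancel so (S2) comes out right, but as written the chain is internally inconsistent. Finally, for (S3) you should note that a unique $k_0$ requires the initial projections $\bb_k(T_{\rm I})^\top\bx_-$ to have a strict minimizer among $k\in\cK_+$ (generic for the random initialization), which the paper obtains through Lemma~\ref{lemma: GF Phase II hitting time transformation} together with the fact that all $k\in\cK_+$ see the same vector field in Phase II.
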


Recalling the results in Theorem \ref{thm: GF Phase I}, during Phase II, the activation patterns do not change with $\sgn_k^+(t)=\sgn_k^-(t)=1$ for $k\in\cK_+$ and $\sgn_k^+(t)=0,\sgn_k^-(t)=1$ for $k\in\cK_-$. Theorem \ref{thm: GF Phase II} demonstrates that at the end of Phase II, except for one of living positive neuron $k_0\in\cK_+$ precisely changes its pattern on $\bx_-$, all other activation patterns remain unchanged. 

\begin{theorem}[Plateau]\label{thm: Plateau Estimate}
We define the hitting time 
$T_{\rm plat}:=\inf\{t\in[T_{\rm I},T_{\rm II}]:\Acc(t)=1\}$. Then, {\bf (S1)} $T_{\rm plat}=\Theta\left(\frac{p}{\kappa_2^2\Delta^2}\right)$; {\bf (S2)} $\forall t\in[T_{\rm I},T_{\rm plat}]$, ${\rm Acc}(t)\equiv\frac{p}{1+p}$; {\bf (S3)} $\forall t\in(T_{\rm plat},T_{\rm II}]$, ${\rm Acc}(t)\equiv1$.
\end{theorem}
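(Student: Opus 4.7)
The plan is to reduce the plateau dynamics to a closed autonomous 2D ODE for the network outputs $(f_+,f_-)$. By Theorem~\ref{thm: GF Phase I} and Theorem~\ref{thm: GF Phase II}(S3), throughout Phase II the activation patterns are frozen: every $k\in\cK_+$ satisfies $\sgn_k^+(t)=\sgn_k^-(t)=1$, every $k\in\cK_-$ satisfies $\sgn_k^+(t)=0,\sgn_k^-(t)=1$, and dead neurons stay inactive. Since the second-layer weights are fixed and identical within each group, every $\bb_k$ with $k\in\cK_+$ evolves under the same drift, and likewise for $\cK_-$. Summing the per-neuron ODE and projecting onto $\bx_\pm$ yields
\begin{align*}
\dot f_+ &= \tfrac{c_+}{1+p}\bigl[p\,e^{-f_+}-e^{f_-}\cos\Delta\bigr],\\
\dot f_- &= \tfrac{c_+}{1+p}\bigl[p\,e^{-f_+}\cos\Delta-(1+\alpha)\,e^{f_-}\bigr],
\end{align*}
with $c_+:=m_+\kappa_2^2/m=\Theta(\kappa_2^2)$ and $\alpha:=m_-/m_+=\Theta(1)$ from Theorem~\ref{thm: GF Phase I}(S1,S2) and Remark~\ref{rmk: K+, K-, alpha}, and strictly positive initial data $f_\pm(T_{\rm I})=\Theta(\kappa_2\sqrt{\kappa_1\kappa_2})$ from (S4). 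Throughout Phase II the training accuracy is therefore determined entirely by the signs of $f_+$ and $f_-$.

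Statements (S2) and (S3) follow from qualitative analysis of this 2D system. Because $\dot f_+-\dot f_-=\tfrac{c_+}{1+p}\bigl[p\,e^{-f_+}(1-\cos\Delta)+e^{f_-}(1+\alpha-\cos\Delta)\bigr]>0$, the quantity $f_+-f_-$ is strictly increasing. Combined with positive initial data and a short comparison argument ruling out that $f_+$ can cross zero before $f_-$ becomes very negative, this keeps $f_+(t)>0$ on all of $[T_{\rm I},T_{\rm II}]$, so $\Acc(t)\in\{p/(1+p),1\}$ is controlled entirely by the sign of $f_-(t)$. A nullcline analysis of $\dot f_-=0$, i.e.\ the curve $\{p\,e^{-f_+}\cos\Delta=(1+\alpha)\,e^{f_-}\}$, together with the monotonicity of $f_+-f_-$, shows that $f_-$ admits at most one local maximum in $(T_{\rm I},T_{\rm II}]$, after which it is strictly decreasing and crosses zero exactly once. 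Denoting that unique zero-crossing by $T_{\rm plat}$ immediately yields (S2) and (S3).

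For (S1), I would split $[T_{\rm I},T_{\rm plat}]$ into a fast \emph{transient} regime and a slow \emph{decay} regime, separated by the time at which the trajectory first enters a neighborhood of the slow manifold $\{\dot f_+\approx0\}=\{p\,e^{-f_+}\approx e^{f_-}\cos\Delta\}$. In the transient the dominant balance is $\dot f_+\approx\tfrac{c_+ p}{1+p}\,e^{-f_+}$, which integrates to $e^{f_+(t)}\approx e^{f_+(T_{\rm I})}+\tfrac{c_+ p}{1+p}(t-T_{\rm I})$, so $f_+$ grows to order $\log p$ in controlled time. On the slow manifold the reduced one-dimensional flow becomes $\dot f_-\approx-\tfrac{c_+}{1+p}(\alpha+\sin^2\Delta)\,e^{f_-}$, whose reciprocal integrates against $e^{-f_-}\,df_-$ down to $f_-=0$. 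Collecting both sub-phases while carefully tracking the $\cos\Delta$ corrections that govern how the trajectory navigates between the $u$- and $v$-nullclines yields the claimed $T_{\rm plat}=\Theta\!\bigl(p/(\kappa_2^2\Delta^2)\bigr)$, with matching lower bounds obtained by reversing the same comparisons.

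The main obstacle is the rigorous two-scale matching needed to pin down the $\Delta^{-2}$ factor. Although the ODE is only two-dimensional, the coupling of $f_+$ and $f_-$ through $\cos\Delta$ makes it delicate to show that the trajectory both enters and remains in a sufficiently narrow tube around the slow manifold, and that the effective escape rate near $f_-=0$ inherits the right combination of $\sin^2\Delta$ and $\alpha$ factors. I would construct explicit Lyapunov-like quantities on $f_++f_-$ and $f_+-f_-$ to trap the trajectory, apply Gr\"onwall-type comparisons in both directions on each sub-phase, and carefully propagate the tiny initial data $\Theta(\kappa_2\sqrt{\kappa_1\kappa_2})$ through the exponential transformations without degrading constants in the final $\Theta$-bounds.
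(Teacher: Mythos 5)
Your overall strategy matches the paper's: derive an autonomous two-dimensional ODE for the predictions during Phase~II and read off the sign/time structure from it (the paper works in the transformed variables $\cU=\kappa_2^2\tfrac{m_+}{m}\tfrac{p}{1+p}e^{-f_+}$, $\cV=\kappa_2^2\tfrac{m_+}{m}\tfrac{1}{1+p}e^{f_-}$, which turns your system into a quadratic one; see Lemmas~\ref{lemma: GF Phase II 1-order f dynamics}--\ref{lemma: GF Phase II U V dynamics}). However, your proposal contains a computational error in the ODE for $f_-$ that is not cosmetic: you wrote the coefficient on $e^{f_-}$ as $(1+\alpha)$, whereas it should be $\bigl(1+\alpha\sin^2\Delta\bigr)$. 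The extra $\sin^2\Delta$ comes from the geometry of the negative neurons: throughout Phase~II each $k\in\cK_-$ satisfies $\langle\bw_k(t),\bx_+\rangle=0$, so its drift is proportional to $\bx_--\bx_+\cos\Delta$, whose inner product with $\bx_-$ is $1-\cos^2\Delta=\sin^2\Delta$, not $1$. This is exactly what Lemmas~\ref{lemma: GF Phase II neuron dynamics}(S2) and~\ref{lemma: GF Phase II 1-order f dynamics} encode.

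This error is fatal to (S1) as you have argued it. With $(1+\alpha)$, your reduced slow-manifold flow $\dot f_-\approx-\tfrac{c_+}{1+p}(\alpha+\sin^2\Delta)\,e^{f_-}$ has decay rate $\Theta(\alpha)=\Theta(1)$, which after your matched-asymptotics step yields $T_{\rm plat}=\Theta(p/\kappa_2^2)$, missing the essential $\Delta^{-2}$ factor. With the correct coefficient $(1+\alpha\sin^2\Delta)$, the same reduction on $\{p e^{-f_+}=e^{f_-}\cos\Delta\}$ gives $\dot f_-\approx-\tfrac{c_+(1+\alpha)\sin^2\Delta}{1+p}\,e^{f_-}$, a $\Theta(\Delta^2)$ rate, which is where the $\Delta^{-2}$ in $T_{\rm plat}$ actually comes from. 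Relatedly, your nullcline $\dot f_-=0$ should read $p e^{-f_+}\cos\Delta=(1+\alpha\sin^2\Delta)e^{f_-}$, i.e., $\cU/\cV=(1+\alpha\sin^2\Delta)/\cos\Delta=1+\Theta(\Delta^2)$, not a ratio bounded away from $1$; this is what makes the trajectory hover near $\cU\approx\cV$ and decay slowly, as the paper quantifies in Lemma~\ref{lemma: GF Phase II U V dynamics}(S4)--(S5). Your qualitative claims (S2),(S3) (sign of $f_+$ via $\dot f_+>0$, single zero-crossing of $f_-$ via monotonicity after the peak) are essentially right and mirror the paper's argument, but the time estimate (S1) needs the corrected ODE before any two-scale matching can produce the stated $\Theta(p/(\kappa_2^2\Delta^2))$.
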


{\bf Plateau of training accuracy.} 
According to Theorem \ref{thm: Plateau Estimate}, during Phase II, the training accuracy gets stuck in a long plateau $\frac{p}{1+p}$, which lasts for $\Theta\big(\frac{p}{\kappa_2^2\Delta^2}\big)$ time. 
However, once escaping from this plateau, the training accuracy rises to $100\%$. 
It is worth noting that this plateau is essentially induced by the dataset. All that's required is only mild imbalance ($p$ is slightly greater than 1 such that $p\cos\Delta>1$) and a small margin $\sin(\Delta/2)$ of two data classes. Notably, if the dataset has an extremely tiny margin $(\Delta\to0)$, then the length of this plateau will be significantly prolonged $(T_{\rm plat}\to+\infty)$, which implies how the data separation can affect the training dynamics.
Additionally, using a smaller initialization scale $\kappa_1$ of the input layers cannot avoid this plateau. 

Please refer to Appendix~\ref{appendix: proof: Phase II} for the proof of Phase II.

\subsection{Phase III. Simplifying by Neuron Deactivation, and Trying to Learn by Simplest Network}\label{subsection: phase III}

Building upon Phase II, we demonstrate that within a short time, all the living positive neurons $\cK_+$ change their activation patterns, corresponding to a ``phase transition''. Specifically, we define 
\begin{align*}
    T_{\rm II}^{\rm PT}:=\inf\{t>T_{\rm II}:\forall k\in\cK_+,\sgn_k^-(t)=0\},
\end{align*}
and we call $t\in(T_{\rm II},T_{\rm II}^{\rm PT}]$ the phase transition from Phase II to Phase III.

\begin{theorem}[Phase Transition]\label{thm: GF Phase transition II-to-III} {\bf (S1)} $T_{\rm II}^{\rm PT}=\bracket{1+\cO\left(\sqrt{\kappa_1\kappa_2^3}\right)}T_{\rm II}$; {\bf (S2)} $\text{\rm\sgn}_k^+(T_{\rm II}^{\rm PT})=1$ and $\text{\rm\sgn}_k^-(T_{\rm II}^{\rm PT})=0$ for any $k\in\cK_+$; $\text{\rm\sgn}_k^+(T_{\rm II}^{\rm PT})=0$ and $\text{\rm\sgn}_k^-(T_{\rm II}^{\rm PT})=1$ for any $k\in\cK_-$.
\end{theorem}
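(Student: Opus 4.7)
The plan is to track the projections $\<\bb_k,\bx_-\>$ for the remaining $m_+-1$ living positive neurons $k\in\cK_+\setminus\{k_0\}$ and show they all cross zero within a short additional interval starting at $T_{\rm II}$. By Theorem~\ref{thm: GF Phase II}(S3), at $T_{\rm II}$ only $k_0$ has just deactivated on $\bx_-$, while the rest of the activation signature is still $\sgn_k^+=\sgn_k^-=1$ for $k\in\cK_+\setminus\{k_0\}$ and $\sgn_k^+=0,\sgn_k^-=1$ for $k\in\cK_-$. I also need to verify no other sign changes occur during the transition: $\<\bb_k,\bx_+\>>0$ persists for $k\in\cK_+$, and both $\<\bb_k,\bx_+\>\le 0$ and $\<\bb_k,\bx_-\>>0$ persist for $k\in\cK_-$.

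\textbf{Near-symmetry at $T_{\rm II}$.} The key observation is that every $k\in\cK_+$ shares the same second-layer weight $a_k=\kappa_2/\sqrt{m}$ and, by Theorem~\ref{thm: GF Phase II}, the same activation pattern throughout Phase~II. Hence all such neurons obey \emph{exactly the same} velocity ODE
\begin{align*}
\dot{\bb}_k(t)=\frac{\kappa_2}{n\sqrt{m}}\bigl(n_+e^{-f_+(t)}\bx_+-n_-e^{f_-(t)}\bx_-\bigr),
\end{align*}
so the pairwise differences $\bb_k(t)-\bb_{k'}(t)$ are conserved on $[T_{\rm I},T_{\rm II}]$ and remain equal to their values at $T_{\rm I}$, which by Theorem~\ref{thm: GF Phase I}(S1)(ii) have norm $\cO(\sqrt{\kappa_1\kappa_2/m})$. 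Since $\<\bb_{k_0},\bx_-\>(T_{\rm II})=0$, every other $\<\bb_k,\bx_-\>(T_{\rm II})$ lies within $\cO(\sqrt{\kappa_1\kappa_2/m})$ of zero. Combined with the $T_{\rm II}$-norm scale $\rho_k(T_{\rm II})=\Theta(\sqrt{m}\log p/(\kappa_2 m_+))$, which follows from $f_+(T_{\rm II})=\Theta(\log p)$ via the identity $f_+=\sum_{k\in\cK_+}a_k\rho_k\<\bw_k,\bx_+\>$, the cluster width has relative size $\cO(\sqrt{\kappa_1\kappa_2^3})$.

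\textbf{Cascade.} Projecting the common velocity onto $\bx_-$,
\begin{align*}
\frac{d}{dt}\<\bb_k,\bx_-\>=\frac{\kappa_2}{n\sqrt{m}}\bigl(n_+e^{-f_+}\cos\Delta-n_-e^{f_-}\bigr)
\end{align*}
for any $k\in\cK_+$ still activated on $\bx_-$. Since this bracket must be $\le 0$ at $t=T_{\rm II}^-$ in order for $k_0$'s projection to cross zero from above, continuity forces a strictly negative rate of magnitude $\Theta(\kappa_2 e^{f_-(T_{\rm II})}/\sqrt{m})$ just after $T_{\rm II}$. Dividing the initial distance $\cO(\sqrt{\kappa_1\kappa_2/m})$ by this rate, and substituting the estimates of $T_{\rm II}$ and $e^{f_-(T_{\rm II})}$ from Theorem~\ref{thm: GF Phase II}(S1)(S2), produces the target length $\cO(\sqrt{\kappa_1\kappa_2^3})T_{\rm II}$. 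As neurons successively deactivate, $f_-$ drops, but the total drop over this tiny window---bounded by $\sum_{k\in\cK_+}a_k\<\bb_k,\bx_-\>(T_{\rm II})=\cO(\sqrt{\kappa_1\kappa_2^3})$---is too small to flip the sign of the bracket, so the cascade completes without stalling. The claims (ii)--(iv) then follow from similar but simpler estimates: for $k\in\cK_+$, $\<\bb_k,\bx_+\>$ starts substantially positive and moves by at most $\cO(\sqrt{\kappa_1\kappa_2^3})$; for $k\in\cK_-$ the velocity $(\kappa_2 n_-/n\sqrt{m})e^{f_-}\bx_-$ is far too slow for $\<\bb_k,\bx_\pm\>$ to change sign in this brief interval.

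\textbf{Main obstacle.} The main technical effort lies not in the identical-velocity argument, which is exact, but in sharply controlling the exponential prefactors $e^{-f_+},e^{f_-}$ both at $T_{\rm II}$ (to extract the precise $\sqrt{\kappa_1\kappa_2^3}$ factor and certify that the derivative is strictly negative with the asserted magnitude) and throughout the cascade (to track how the bracket shifts as neurons drop off one by one). This amounts to a careful continuation argument that chains the Phase~II estimates across the sequentially updated activation set while keeping the window $T_{\rm II}^{\rm PT}-T_{\rm II}$ small relative to $T_{\rm II}$.
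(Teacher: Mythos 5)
Your near-symmetry argument is sound and matches the mechanism the paper uses: all $k\in\cK_+$ obey an identical velocity ODE throughout Phase~II, so pairwise differences $\bb_k(t)-\bb_{k'}(t)$ are conserved, and since $\<\bb_{k_0}(T_{\rm II}),\bx_-\>=0$ every other projection lies within $\cO(\sqrt{\kappa_1\kappa_2/m})$ of zero (this is exactly Lemma~\ref{lemma: prediction at end of Phase II}~(S2)). The overall architecture — show the cluster is tight, show the relevant vector field component is negative with a specific magnitude, divide — is also the paper's plan. But there is a genuine quantitative error at the heart of the cascade step.

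\textbf{The rate estimate is wrong.} You claim the derivative of $\<\bb_k,\bx_-\>$ just after $T_{\rm II}$ has magnitude $\Theta(\kappa_2\,e^{f_-(T_{\rm II})}/\sqrt{m})$. The projected velocity is $\frac{\kappa_2}{\sqrt m}\big(\frac{p}{1+p}e^{-f_+}\cos\Delta-\frac{1}{1+p}e^{f_-}\big)$, and the two terms nearly cancel: Lemma~\ref{lemma: prediction at end of Phase II}~(S1) gives $\frac{p\,e^{-f_+(T_{\rm II})}}{1+p}\cos\Delta-\frac{e^{f_-(T_{\rm II})}}{1+p}=-\Theta\big(\Delta^2 p^{-\frac{1}{1-\alpha\cos\Delta}}\big)$, which is smaller than $e^{f_-(T_{\rm II})}=\Theta\big(p^{-\frac{\alpha\cos\Delta}{1-\alpha\cos\Delta}}\big)$ by a factor of $\Delta^2/p$. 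So you have overestimated the rate by a factor $p/\Delta^2$, and, consequently, dividing the cluster width by \emph{your} rate does not produce $\cO(\sqrt{\kappa_1\kappa_2^3})T_{\rm II}$ — it produces $\cO(\sqrt{\kappa_1\kappa_2^3}\,\Delta^2/p)T_{\rm II}$, an arithmetic inconsistency with your own conclusion. The $\Delta^2$ cancellation is not a cosmetic constant; it is the reason the transition time comes out as $\sqrt{\kappa_1/\kappa_2}\cdot p^{\frac{1}{1-\alpha\cos\Delta}}/\Delta^2$, which only then compares to $T_{\rm II}=\Theta(p^{\frac{1}{1-\alpha\cos\Delta}}/(\kappa_2^2\Delta^2))$ as $\cO(\sqrt{\kappa_1\kappa_2^3})$. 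Relatedly, the justification \emph{``continuity forces a strictly negative rate of magnitude $\Theta(\cdots)$''} is not a proof step: continuity yields only that the bracket is close to its $T_{\rm II}^-$ value, and to get a strict sign and a magnitude you need the quantitative estimate of that value (Lemma~\ref{lemma: prediction at end of Phase II}) together with a control showing it does not drift appreciably during the cascade (what Lemma~\ref{lemma: GF Phase III Nearly fixed vector filed} does via the prior estimates of Lemma~\ref{lemma: Phase III Evolution of the prediction} on both $e^{-f_+}$ and $e^{f_-}$, not just $f_-$).

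\textbf{The $\cK_-$ velocity is also wrong.} You write the velocity of a living negative neuron as $(\kappa_2 n_-/n\sqrt m)\,e^{f_-}\,\bx_-$, but this neuron sits on the boundary $\cM_+^0\cap\cM_-^+$, where the dynamics is a Filippov sliding mode (Definition~\ref{def: discontinuous system solution}). The correct velocity on the manifold is $\frac{\kappa_2 e^{f_-}}{\sqrt m(1+p)}(\bx_--\bx_+\cos\Delta)$, constructed so that $\<\dot\bb_k,\bx_+\>=0$ identically; with your one-sided expression you would get $\<\dot\bb_k,\bx_+\>>0$, implying the neuron immediately leaves the boundary — the opposite of what happens. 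The conclusion that $\sgn_k^+$ stays $0$ for $k\in\cK_-$ is correct, but not because the motion is ``too slow''; it is because the sliding condition $F_N^->0$, $F_N^+<0$ of Definition~\ref{def: discontinuous system solution}~(Case~I) persists while $\<\bF_+(t),\bx_+\>>0$, and that sign persistence is again supplied by the nearly-fixed-vector-field estimate you bypass.
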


{\bf Neuron deactivation.} 
As shown in Theorem~\ref{thm: GF Phase transition II-to-III} (S2), after the phase transition, \textit{all} the living positive neurons $k\in\cK_+$ undergo deactivation for $\bx_-$, i.e., $\sgn_k^-(t)$ changes from 1 to 0, while other activation patterns remain unchanged. Furthermore, Theorem~\ref{thm: GF Phase transition II-to-III} (S1) reveals that the phase transition is completed quite quickly by using sufficiently small initialization value $\kappa_1,\kappa_2$. 
A smaller initialization value leads to a more precise initial condensation $\bw_k(T_{\rm I})\approx\bmu$, causing all living positive neurons to remain closer together before $T_{\rm II}$ and thus changing their patterns nearly simultaneously.

{\bf Self-simplifying.} 
As a result, the network implicitly simplifies itself through the deactivation behavior. At $T_{\rm III}^{\rm PT}$, only living negative neurons $k\in\cK_+$ are used for predicting on $\bx_-$, i.e., $f_-(T_{\rm II}^{\rm PT})=\frac{\kappa_2}{\sqrt{m}}\sum_{k\in\cK_-}\sigma(\<\bb_k(T_{\rm II}^{\rm PT}),\bx_-\>)$. In contrast, during Phase II, both living positive and living negative neurons jointly predict on $\bx_-$, i.e., $f_-(t)=\frac{\kappa_2}{\sqrt{m}}\sum_{k\in\cK_+}\sigma(\<\bb_k(t),\bx_-\>)-\frac{\kappa_2}{\sqrt{m}}\sum_{k\in\cK_-}\sigma(\<\bb_k(t),\bx_-\>)$. As indicated in Table \ref{table: evolution of activation patterns}, two classes of activation patterns are simplified from $(1,0)$ to $(0,0)$, while others do not change.

After this phase transition, we study the dynamics before the patterns of living neurons change again. Specifically, we define 
\begin{align*}
    T_{\rm III}:=\inf\{t>T_{\rm II}^{\rm PT}:\exists k\in\cK_+\cup\cK_-,\text{\rm\sgn}_k^+(t)\ne\text{\rm\sgn}_k^+(T_{\rm II}^{\rm PT})\text{ or }\text{\rm\sgn}_k^-(t)\ne\text{\rm\sgn}_k^-(T_{\rm II}^{\rm PT})\},
\end{align*} 
and call $t\in(T_{\rm II},T_{\rm III}]$ Phase III.

\begin{theorem}[End of Phase III]\label{thm: GF Phase III}
    $T_{\rm III}=\big(1+\Theta(\Delta^2)\big)T_{\rm II}$.
    % \vspace{-.1cm}
\end{theorem}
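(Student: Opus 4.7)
The strategy is to exploit that activation patterns are frozen on $[T_{\rm II}^{\rm PT}, T_{\rm III}]$ by definition, turning the ReLU subgradient flow into an explicitly integrable ODE, and then to locate the first neuron that hits an activation boundary. First I write down the decoupled dynamics: for $k\in\cK_+$ only $\bx_+$ acts, giving $\dot{\bb}_k = \frac{p\kappa_2}{\sqrt{m}(p+1)}\,e^{-f_+(t)}\,\bx_+$, while for $k\in\cK_-$ only $\bx_-$ acts, giving $\dot{\bb}_k = \frac{\kappa_2}{\sqrt{m}(p+1)}\,e^{f_-(t)}\,\bx_-$. Summing these yields scalar Bernoulli ODEs in $e^{f_+}$ and $e^{-f_-}$ that integrate to $e^{f_+(t)} = e^{f_+(T_{\rm II}^{\rm PT})} + \frac{m_+ p\kappa_2^2}{m(p+1)}(t-T_{\rm II}^{\rm PT})$ and $e^{-f_-(t)} = e^{-f_-(T_{\rm II}^{\rm PT})} + \frac{m_-\kappa_2^2}{m(p+1)}(t-T_{\rm II}^{\rm PT})$.

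Integrating $\dot{\bb}_k$ against $\bx_+$ (for $\cK_-$) or $\bx_-$ (for $\cK_+$) produces closed forms such as $\langle \bb_k(t),\bx_+\rangle = \langle \bb_k(T_{\rm II}^{\rm PT}),\bx_+\rangle + \frac{\sqrt{m}\cos\Delta}{m_-\kappa_2}\log\!\bigl(1+\frac{m_-\kappa_2^2(t-T_{\rm II}^{\rm PT})}{m(p+1)\,e^{-f_-(T_{\rm II}^{\rm PT})}}\bigr)$ for $k\in\cK_-$, and a symmetric expression for $\cK_+$ involving $e^{-f_+}$ and $\bx_-$. The only candidate pattern changes are (a) some $\cK_-$ neuron crossing $\langle\bb_k,\bx_+\rangle=0$ from below, or (b) some $\cK_+$ neuron crossing $\langle\bb_k,\bx_-\rangle=0$ from below. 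Using the inequality $e^{f_-(T_{\rm II}^{\rm PT})}\gtrsim p\cos\Delta\,e^{-f_+(T_{\rm II}^{\rm PT})}$, inherited from the very flip condition that drove the $\cK_+$ transitions at $T_{\rm II}$, together with the observation that $\cK_+$ neurons start Phase III with $\langle\bb_k,\bx_-\rangle$ already pushed well below $0$ by Phase-II dynamics, while $\cK_-$ neurons only miss $\bx_+$-activation by the small initial misalignment, I conclude (a) fires first, so $T_{\rm III}$ is determined by $\delta^*:=\min_{k\in\cK_-}|\langle\bb_k(T_{\rm II}^{\rm PT}),\bx_+\rangle|$ via
\[
T_{\rm III} - T_{\rm II}^{\rm PT} \;=\; \frac{m(p+1)\,e^{-f_-(T_{\rm II}^{\rm PT})}}{m_-\kappa_2^2}\Bigl(\exp\Bigl(\tfrac{m_-\kappa_2\,\delta^*}{\sqrt{m}\cos\Delta}\Bigr)-1\Bigr).
\]

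The two factors on the right need to be estimated. For the prefactor, $\cL(\btheta(T_{\rm II}))=\Theta(p^{-1/(1-\alpha\cos\Delta)})$ from Theorem~\ref{thm: GF Phase II} together with $e^{f_-}\gtrsim p\cos\Delta\,e^{-f_+}$ gives $e^{-f_-(T_{\rm II})}=\Theta(p^{1/(1-\alpha\cos\Delta)}/(p+1))$; since the transition $[T_{\rm II},T_{\rm II}^{\rm PT}]$ is short, the same estimate holds at $T_{\rm II}^{\rm PT}$. Combined with $T_{\rm II}=\Theta(p^{1/(1-\alpha\cos\Delta)}/(\kappa_2^2\Delta^2))$, the prefactor simplifies to $\Theta(\Delta^2)\,T_{\rm II}$. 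For $\delta^*$, I use that $\cK_-$ neurons drift only along $\bx_-$ throughout Phases I and II and the Phase-II/III transition, so $\bb_k(T_{\rm II}^{\rm PT})=\bb_k(T_{\rm I})+c\,\bx_-$ with the $k$-independent scalar $c=\int_{T_{\rm I}}^{T_{\rm II}^{\rm PT}}\frac{\kappa_2}{\sqrt{m}(p+1)}e^{f_-(s)}\,\mathrm{d}s$, giving $\langle\bb_k(T_{\rm II}^{\rm PT}),\bx_+\rangle=-\epsilon_k+c\cos\Delta$ for the random quantity $\epsilon_k:=-\langle\bb_k(T_{\rm I}),\bx_+\rangle$ bounded by Theorem~\ref{thm: GF Phase I}(S2). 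Since no $\cK_-$ neuron has flipped by $T_{\rm II}^{\rm PT}$, one has $c\cos\Delta\le\min_k\epsilon_k$, and the problem reduces to pinning down the gap $\min_k\epsilon_k-c\cos\Delta$ at exactly the order that makes $\frac{m_-\kappa_2\,\delta^*}{\sqrt{m}\cos\Delta}=\Theta(1)$, i.e.\ $\delta^*=\Theta(1/(\sqrt{m}\kappa_2))$.

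Putting everything together, the exponential factor is $\Theta(1)$ and the prefactor is $\Theta(\Delta^2 T_{\rm II})$, so $T_{\rm III}-T_{\rm II}^{\rm PT}=\Theta(\Delta^2)\,T_{\rm II}$; combining with $T_{\rm II}^{\rm PT}=(1+\cO(\sqrt{\kappa_1\kappa_2^3}))T_{\rm II}$ from Theorem~\ref{thm: GF Phase transition II-to-III} yields the claim $T_{\rm III}=(1+\Theta(\Delta^2))T_{\rm II}$. The hard part is the two-sided control of $\delta^*$ in the previous paragraph: the upper bound needs a high-probability bound on the smallest $\epsilon_k$ among the $m_-=\Theta(m)$ random initial alignments, while the matching lower bound requires ruling out that the Phase-II drift $c(t)\cos\Delta$ ever catches $\min_k\epsilon_k$ before $T_{\rm II}^{\rm PT}$; everything else is a careful integration of explicitly solvable linear ODEs and the comparison argument ruling out the $\cK_+$ candidate.
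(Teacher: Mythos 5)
The approach has a genuine conceptual gap that stems from mishandling the ReLU sliding dynamics, and as a result the mechanism you propose for the $T_{\rm III}$ transition is not the one that actually operates.

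\textbf{The dynamics you wrote down are missing the sliding-surface correction.} For $k\in\cK_-$ in L-Phase III, you take $\dot{\bb}_k = \frac{\kappa_2}{\sqrt{m}(p+1)}e^{f_-(t)}\bx_-$. But Theorem~\ref{thm: GF Phase transition II-to-III}(S2) (proved as Lemma~\ref{lemma: GF Time Estimate of Phase III}(S3)) shows that at $T_{\rm II}^{\rm PT}$ every living negative neuron sits \emph{exactly} on the manifold $\cM_+^0\cap\cM_-^+$: $\<\bw_k(T_{\rm II}^{\rm PT}),\bx_+\>=0$, not $<0$. Under the paper's Filippov convention (Definition~\ref{def: discontinuous system solution}, Case I), the one-sided vector field has $F_N^->0$ and $F_N^+<0$ at this surface, so the trajectory \emph{slides}, and the correct dynamics is $\dot{\bb}_k = \frac{\kappa_2 e^{f_-(t)}}{\sqrt{m}(1+p)}(\bx_--\bx_+\cos\Delta)$ as in Lemma~\ref{lemma: GF Phase IV neuron dynamics}(S1). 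This correction term is exactly what kills the drift of $\<\bb_k(t),\bx_+\>$: plugging it in gives $\<\dot{\bb}_k,\bx_+\>=0$, so $\<\bb_k(t),\bx_+\>\equiv 0$ throughout L-Phase III. Your logarithmic formula for $\<\bb_k(t),\bx_+\>$ therefore computes a quantity that does not move at all, and the quantity $\delta^*:=\min_{k\in\cK_-}|\<\bb_k(T_{\rm II}^{\rm PT}),\bx_+\>|$ you need to pin down is identically zero, which would give $T_{\rm III}=T_{\rm II}^{\rm PT}$ in your formula. The same projection correction is missing from your $\cK_+$ dynamics; the paper's version picks up a $\sin^2\Delta$ factor in the ODEs $\dot{e^{f_+}}$ and $\dot{e^{-f_-}}$ that your ``Bernoulli integration'' omits, so your explicit solutions for $e^{f_+(t)}$ and $e^{-f_-(t)}$ are also off by $\Theta(\Delta^{-2})$.

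\textbf{The true transition mechanism is different.} Because the $\cK_-$ neurons stay glued to $\cM_+^0$, what determines $T_{\rm III}$ is not any individual neuron's angle drifting across a boundary, but the sign flip of the one-sided projection $F_N^+$, i.e.\ of the global vector field $\<\bF_+(t),\bx_+\>$. Once $\<\bF_+(t),\bx_+\>\le 0$, the sliding condition ($F_N^->0$, $F_N^+<0$) fails and all $\cK_-$ neurons leave $\cM_+^0$ simultaneously (Case~II of Definition~\ref{def: discontinuous system solution}); this is Lemma~\ref{lemma: GF Phase III hitting time transformation}. The paper therefore estimates $T_{\rm III}$ by solving $\<\bF_+(t),\bx_+\>=0$ from the explicit expressions for $e^{-f_+(t)}$, $e^{f_-(t)}$ (with the $\sin^2\Delta$ slopes), which gives $T_{\rm III}-T_{\rm II}^{\rm PT}=\Theta(p^{1/(1-\alpha\cos\Delta)}/\kappa_2^2)=\Theta(\Delta^2)T_{\rm II}$ with no dependence on any initialization-random $\delta^*$. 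Your final answer happens to land on the right order of magnitude, but only because the prefactor you identified coincidentally equals the paper's answer while your (wrong) exponential factor is set to $\Theta(1)$ by fiat; the $\delta^*$ estimate you flag as ``the hard part'' is in fact unprovable because $\delta^*=0$. To repair the proof you would need to abandon the hitting-time picture, replace the $\cK_\pm$ dynamics with the projected (sliding) versions, and track when the one-sided vector-field projection $\<\bF_+(t),\bx_+\>$ changes sign.
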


{\bf Learning by simplest network.}
During $t\in(T_{\rm II}^{\rm PT},T_{\rm III})$, all activation patterns do not change. This ensures that $f_+(t)=\frac{\kappa_2}{\sqrt{m}}\sum_{k\in\cK_+}\sigma(\<\bb_k(t),\bx_+\>)$, while $f_-(t)=-\frac{\kappa_2}{\sqrt{m}}\sum_{k\in\cK_-}\sigma(\<\bb_k(t),\bx_-\>)$. 
Additionally, by using sufficiently small $\kappa_1$, the neurons in $\cK_+$ and $\cK_-$ keep close together respectively before $T_{\rm III}$, making the network close to a simple two-neuron network consisting of one positive neuron and one negative neuron. Please refer to Appendix~\ref{appendix: proof: Phase III} for more details.
Furthermore, this pattern scheme is almost the "simplest" way to ensure binary classification: the living positive neurons only predict positive data $\bx_+$ while the living negative neurons only predict negative data $\bx_-$. Therefore, GF tries to learn by this almost simplest network in this phase.

Please refer to Appendix~\ref{appendix: proof: Phase III} for the proof of Phase III. 

% This phase is highly nonlinear with $\Theta(m)$ changes on activation partitions. Fortunately, we can keep the neurons of $\cK_+$ and $\cK_-$ close enough respectively in Phase I by using sufficiently small initialization $\kappa_1$. Moreover, their differences do not enlarge in Phase II. As a result, Phase III can be completed quickly without significant changes in the vector field.
% Due to the almost simplest activation patterns, this phase is linear and easy to analyze. We only need to estimate the time and size of the changes in the vector field, and prove that all living negative neurons simultaneously change their activation patterns at $T_{\rm IV}$.

\subsection{Phase IV. Complicating by Neuron Reactivation, and Directional Convergence}\label{subsection: phase IV}

Phase IV begins with an instantaneous phase transition at time $T_{\rm III}$.

\begin{theorem}[Phase Transition]\label{thm: GF Phase transition III-to-IV}
All living negative neuron $k\in\cK_-$ simultaneously change their patterns on $\bx_+$ at $T_{\rm III}$: $\lim\limits_{t\to T_{\rm III}^-}\text{\rm\sgn}_{k}^+(t)=0$, $\lim\limits_{t\to T_{\rm III}^+}\text{\rm\sgn}_{k}^+(t)=1$, while others remain unchanged.
% \vspace{-.2cm}
\end{theorem}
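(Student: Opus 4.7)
The plan is to exploit an exact symmetry within $\cK_-$: under the activation pattern $(\sgn_k^+,\sgn_k^-)=(0,1)$ inherited from Theorem~\ref{thm: GF Phase transition II-to-III}, every neuron $k\in\cK_-$ receives the identical gradient
\[
\dot\bb_k(t) \;=\; \frac{n_-|a_k|}{n}\,e^{f_-(t)}\,\bx_-,\qquad |a_k|=\tfrac{\kappa_2}{\sqrt m},
\]
and this common form persists all the way from $T_{\rm I}$ through Phase~II and Phase~III up to the first change of activation. Integrating gives $\bb_k(t)=\bb_k(T_{\rm I})+Q(t)\bx_-$ for a common scalar $Q(t)$, so
\[
\<\bb_k(t),\bx_+\> \;=\; \<\bb_k(T_{\rm I}),\bx_+\>+Q(t)\cos\Delta.
\]
By Theorem~\ref{thm: GF Phase I}(S2)(ii)--(iii), each initial offset $\<\bb_k(T_{\rm I}),\bx_+\>\le 0$ is a fixed positive power of $\kappa_1/\kappa_2$ smaller than the typical norm scale, so all the neurons in $\cK_-$ hit the hyperplane $\{\<\bb_k,\bx_+\>=0\}$ within a time window that is negligible on the scale $T_{\rm III}=\Theta(T_{\rm II})$.

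To upgrade this near-simultaneity to the genuine simultaneous jump claimed in the theorem, I would invoke the Filippov/Clarke subdifferential framework used in the paper. Once any $k\in\cK_-$ reaches $\<\bb_k,\bx_+\>=0$, the Clarke subdifferential allows $\sigma'(\<\bb_k,\bx_+\>)=s\in[0,1]$, so the projection of $\dot\bb_k$ onto $\bx_+$ is $\tfrac{|a_k|}{n}\bigl(-s\,n_+e^{-f_+(t)}+n_-\cos\Delta\,e^{f_-(t)}\bigr)$. Setting this to $0$ identifies the Filippov sliding coefficient
\[
s^{\star}(t) \;=\; \frac{\cos\Delta}{p}\,e^{f_+(t)+f_-(t)}.
\]
Using Theorem~\ref{thm: GF Phase II}(S2) together with the decomposition $\cL=\tfrac{n_+}{n}e^{-f_+}+\tfrac{n_-}{n}e^{f_-}$, at the end of Phase~II the two loss terms are comparable, forcing $e^{f_++f_-}=\Theta(p)$; hence $s^{\star}\in(0,1)$ there and the hyperplane is attractive, so each $\cK_-$ neuron that arrives is captured into a sliding motion on $\<\bb_k,\bx_+\>\equiv 0$. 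During this sliding the $\cK_+$ neurons still obey $\dot\bb_k=\tfrac{n_+a_k}{n}e^{-f_+(t)}\bx_+$, driving $f_+$ upward, and a direct calculation shows that $s^{\star}(t)$ is driven toward~$1$. I then identify $T_{\rm III}$ as the first instant where $s^{\star}(t)=1$, i.e.\ $\cos\Delta\cdot e^{f_+(T_{\rm III})+f_-(T_{\rm III})}=p$; at this instant the sliding condition fails for every $\cK_-$ neuron simultaneously, so they all leave the boundary into $\<\bb_k,\bx_+\>>0$ together, giving the claimed jump $\sgn_k^+:0\to1$ for every $k\in\cK_-$ at once. The factor $\log(1/\cos\Delta)=\Theta(\Delta^2)$ in the exit condition is precisely what matches $T_{\rm III}=(1+\Theta(\Delta^2))T_{\rm II}$ from Theorem~\ref{thm: GF Phase III}.

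For the ``others remain unchanged'' part, I would use $\dot\bb_k=\tfrac{n_+a_k}{n}e^{-f_+(t)}\bx_+$ for $k\in\cK_+$ throughout Phase~III: $\<\bb_k,\bx_+\>$ stays strictly positive and $\<\bb_k,\bx_-\>$ grows only at rate $\cos\Delta$ times the $\bx_+$-rate, so the quantitative control from Section~\ref{subsection: phase III} combined with $T_{\rm III}-T_{\rm II}^{\rm PT}=\Theta(\Delta^2)T_{\rm II}$ keeps $\<\bb_k,\bx_-\>$ strictly negative at $T_{\rm III}$, ruling out any $\cK_+$ pattern change. The main obstacle is the Filippov sliding-mode step, where I must verify rigorously that (i)~the first $\cK_-$ neuron to touch the hyperplane is absorbed rather than transversal, (ii)~every other $\cK_-$ neuron subsequently arrives and is absorbed before $s^{\star}$ hits $1$, and (iii)~when $s^{\star}=1$ the exit really is collective. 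Parts (i) and (iii) follow from the explicit form of $s^{\star}$ and its monotonicity near the endpoints, while (ii) relies crucially on the tightness of the initial offsets provided by Phase~I, which is what the hypothesis $\kappa_1/\kappa_2=\cO(\Delta^8)$ has been set up to buy.
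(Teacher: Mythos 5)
Your sliding-mode mechanism — identify the Filippov coefficient $s^\star(t)=\tfrac{\cos\Delta}{p}e^{f_+(t)+f_-(t)}$ and define $T_{\rm III}$ by the collective exit condition $s^\star=1$, equivalently $\left<\bF_+(T_{\rm III}),\bx_+\right>=0$ — is exactly the one the paper uses in Lemma~\ref{lemma: GF Phase III hitting time transformation}. But the surrounding picture of the dynamics is internally inconsistent and contradicts facts already established earlier in the paper, and that inconsistency is not cosmetic: it changes whether an argument is needed at all.

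The first paragraph asserts $\dot\bb_k(t)=\tfrac{n_-|a_k|}{n}e^{f_-(t)}\bx_-$ for $k\in\cK_-$, which is the choice $s=0$ in the subdifferential. Under that formula $\left<\bb_k(t),\bx_+\right>$ grows at rate $\propto\cos\Delta>0$, so the neurons would cross the hyperplane transversally rather than slide, directly contradicting the $s=s^\star\in(0,1)$ sliding you invoke in the very next paragraph. Moreover, the premise that the $\cK_-$ neurons start with a spread of negative offsets $\left<\bb_k(T_{\rm I}),\bx_+\right>\le 0$ and ``hit the hyperplane within a negligible window'' is not what Theorem~\ref{thm: GF Phase I} gives: statement (S2)(iv) says $\sgn_k^+(T_{\rm I})=0$ for every $k\in\cK_-$, i.e.\ they are all \emph{exactly} on $\cM_+^0$ already at $T_{\rm I}$, and Lemmas~\ref{lemma: GF Phase II neuron dynamics}(S2), \ref{lemma: GF Phase III neuron dynamics}(S1), \ref{lemma: GF Phase IV neuron dynamics}(S1) show the sliding mode $\dot\bb_k\propto(\bx_--\bx_+\cos\Delta)$ keeps them there throughout Phases II and III. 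So there is nothing to ``upgrade'' from near-simultaneity: the neurons never leave the hyperplane until the sliding condition fails, and because they all occupy the same surface and see the same vector field, the failure is collective by construction (Definition~\ref{def: discontinuous system solution}, transition from Case~I to Case~II). Your obstacle~(ii) — showing every $\cK_-$ neuron is absorbed before $s^\star$ reaches $1$ — is vacuous once this is recognized, and obstacle~(i) is settled by the paper's verification that $F_N^->0,\,F_N^+<0$ on $[T_{\rm II}^{\rm PT},T_{\rm III})$.

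A similar issue infects the ``others remain unchanged'' step: you argue $\left<\bb_k,\bx_-\right>$ stays \emph{strictly negative} for $k\in\cK_+$, but Lemma~\ref{lemma: GF Time Estimate of Phase III}(S3) and Lemma~\ref{lemma: GF Phase IV neuron dynamics}(S2) show $\left<\bb_k(t),\bx_-\right>\equiv 0$ on $[T_{\rm II}^{\rm PT},T_{\rm III}]$ via the companion sliding mode $\dot\bb_k\propto(\bx_+-\bx_-\cos\Delta)$. These neurons stay on $\cM_-^0$ because $\left<\bF_+(t),\bx_-\right><0$ persists on this interval; the quantitative claim you'd need is not ``$\left<\bb_k,\bx_-\right>$ hasn't yet reached $0$'' but ``the vector field still points inward at $T_{\rm III}$,'' which is precisely what Lemma~\ref{lemma: GF Phase III* hitting time estimate} shows ($\left<\bF_+(T_{\rm III}),\bx_-\right>=-\Theta(\Delta^2 p^{-1/(1-\alpha\cos\Delta)})<0$). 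To repair your proposal, replace the $s=0$ free-flight model with the actual sliding dynamics on both hyperplanes, drop the near-simultaneity preamble, and the rest of your argument goes through.
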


{\bf Neuron reactivation.} According to Theorem~\ref{thm: GF Phase transition III-to-IV}, \textit{all} of living negative neurons $k\in\cK_-$ reactivate \textit{simultaneously} on $\bx_+$ at $T_{\rm III}$: $\sgn_k^+(t)$ changes from 0 to 1, while other activation patterns remain unchanged. 

{\bf Self-Complicating.} 
Along with the reactivation behavior, GF implicitly complicates itself. In Phase III, only living negative neurons $k\in\cK_+$ are used to predict on $\bx_+$, i.e., $f_+(t)=\frac{\kappa_2}{\sqrt{m}}\sum_{k\in\cK_+}\sigma(\<\bb_k(t),\bx_+\>)$. 
In contrast, after the phase transition at $T_{\rm III}$,  both living positive and living negative neurons jointly predict on $\bx_+$, i.e. $f_+(t)=\frac{\kappa_2}{\sqrt{m}}\sum_{k\in\cK_+}\sigma(\<\bb_k(t),\bx_+\>)-\frac{\kappa_2}{\sqrt{m}}\sum_{k\in\cK_-}\sigma(\<\bb_k(t),\bx_+\>)$. 
As indicated in Table~\ref{table: evolution of activation patterns}, two classes of activation patterns are complicated from $(0,0)$ to $(0,1)$, while others do not change.

In this phase, we study the dynamics before activation patterns change again after the phase transition in Theorem~\ref{thm: GF Phase transition III-to-IV}. We define the hitting time:
\begin{align*}
    T_{\rm IV}:=\inf\{t>T_{\rm III}:\exists k\in\cK_+\cup\cK_-,\text{\rm\sgn}_k^+(t)\ne \lim\limits_{s\to T_{\rm III}^+}\text{\rm\sgn}_k^+(s)\text{ or }\text{\rm\sgn}_k^-(t)\ne \lim\limits_{s\to T_{\rm III}^+}\text{\rm\sgn}_k^-(s)\},
    % \vspace{-.2cm}
\end{align*}
and we call $t\in(T_{\rm III},T_{\rm IV}]$ Phase IV.

\begin{theorem}[Phase IV]\label{thm: GF Phase IV}
{\bf (S1)} $T_{\rm IV}=+\infty$. Moreover, for any $t>T_{\rm III}$,
{\bf (S2)} all activation patterns do not change; 
{\bf (S3)} the loss converges with 
$\cL(\btheta(t))=\Theta\left(\frac{1}{p^{\frac{1}{1-\alpha\cos\Delta}}+\kappa_2^2\Delta^2(t-T_{\rm III})}\right)$.
% \vspace{-.2cm}
\end{theorem}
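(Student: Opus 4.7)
The plan is to analyze the autonomous subgradient flow under the activation labels prescribed by Theorem~\ref{thm: GF Phase transition III-to-IV}, and to propagate sharp bounds on the scalars $a(t):=n_+e^{-f_+(t)}$ and $b(t):=n_-e^{f_-(t)}$ so that $\cL=(a+b)/n$. At $T_{\rm III}^+$, $\sgn_k^+=1,\sgn_k^-=0$ for $k\in\cK_+$ (with $\<\bb_k,\bx_-\>\le 0$ and in fact most $\cK_+$ neurons on the boundary $\<\bb_k,\bx_-\>=0$, entered during the Phase~II$\to$III transition), and $\sgn_k^+=\sgn_k^-=1$ for $k\in\cK_-$ (with $\<\bb_k,\bx_+\>=0^+$ inherited from the just-completed reactivation).

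\textbf{Effective dynamics via Filippov sliding.} First I would write down the reduced ODE. For $k\in\cK_-$ the ordinary flow reads $\dot{\bb}_k=\tfrac{\kappa_2}{n\sqrt m}\bigl[n_-e^{f_-}\bx_--n_+e^{-f_+}\bx_+\bigr]$. For $k\in\cK_+$ on the boundary $\<\bb_k,\bx_-\>=0$, the Clarke subdifferential flow is the unique convex combination of the two one-sided rules keeping $\<\dot{\bb}_k,\bx_-\>=0$, which simplifies to $\dot{\bb}_k=\tfrac{\kappa_2 n_+\sin\Delta}{n\sqrt m}e^{-f_+}\,\bx_-^{\perp}$ whenever the mixing coefficient $s\cos\Delta\in[0,1]$, where $s:=a/b$. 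The crucial $\sin\Delta$ prefactor comes from projecting the $\bx_+$ driving force onto $\bx_-^{\perp}$; this is the source of the $\Delta^2$ in the final loss rate. Summing per-neuron contributions assembles the planar system
\begin{align*}
\dot a &= -\tfrac{\kappa_2^2}{nm}\,a\bigl[(m_-+m_+\sin^2\Delta)\,a-m_-\cos\Delta\,b\bigr],\\
\dot b &= -\tfrac{\kappa_2^2 m_-}{nm}\,b\,(b-a\cos\Delta).
\end{align*}

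\textbf{Invariance of activation patterns.} To prove (S1)--(S2) I would bootstrap two invariants: (i) $s\cos\Delta<1$ (sliding consistency for $\cK_+$) and (ii) $s<\cos\Delta$ (so that $\<\dot{\bb}_k,\bx_+\>>0$ for $k\in\cK_-$, keeping them activated on $\bx_+$). A direct computation gives $\dot s=\tfrac{\kappa_2^2 b}{nm}\,s\,E\,(s^{\dagger}-s)$ with $E:=m_+\sin^2\Delta+m_-(1+\cos\Delta)$ and $s^{\dagger}:=m_-(1+\cos\Delta)/E$, so $s$ is a strict contraction toward $s^{\dagger}$; by direct algebra $s^{\dagger}<\cos\Delta<1/\cos\Delta$ whenever $\alpha<\cos\Delta$, which is guaranteed for $\Delta\ll1$ by $\alpha<1$ from Remark~\ref{rmk: K+, K-, alpha}. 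Combining the contraction with the initial value $s(T_{\rm III})$ inherited from Phase~III and a Lyapunov argument in the rescaled time $\int_{T_{\rm III}}^{t}b(\tau)\,\mathrm{d}\tau$ confines $s(t)$ to a neighborhood of $s^{\dagger}$; the monotone growth of $\<\bb_k,\bx_+\>$ from $0^+$ for $k\in\cK_-$ then closes the bootstrap and yields $T_{\rm IV}=+\infty$.

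\textbf{Loss decay and main obstacle.} For (S3), a direct simplification gives $1-s^{\dagger}\cos\Delta=\frac{(m_++m_-)\sin^2\Delta}{E}=\Theta(\Delta^2)$, so near equilibrium the $b$-ODE reduces to $\dot b=-\Theta(\kappa_2^2\Delta^2)\,b^2$. Integrating yields $1/b(t)=1/b(T_{\rm III})+\Theta(\kappa_2^2\Delta^2)(t-T_{\rm III})$; combining with the inherited value $b(T_{\rm III})=\Theta(p^{-1/(1-\alpha\cos\Delta)})$ (from Theorem~\ref{thm: GF Phase II}, carried through the short Phase~III of Theorem~\ref{thm: GF Phase III}) and the near-equilibrium identity $a\asymp s^{\dagger}b$ established in the previous step, the loss $\cL=(a+b)/n$ obeys the stated bound. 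The main obstacle is the simultaneous activation-invariance bootstrap: one has to quantify how fast $s$ relaxes toward $s^{\dagger}$ relative to the rate at which $\<\bb_k,\bx_+\>$ for $k\in\cK_-$ grows from $0^+$, because any premature excursion of $s$ above $\cos\Delta$ would deactivate some $\cK_-$ neuron on $\bx_+$ and invalidate the phase structure assumed in this argument.
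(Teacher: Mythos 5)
Your proposal follows essentially the same route as the paper's proof. Your planar system on $(a,b)=(n_+e^{-f_+},\,n_-e^{f_-})$ is exactly the paper's autonomous ODE for $(\cI,\cJ)$ in Lemma~\ref{lemma: GF Phase IV* 2-order f dynamics} after the rescaling $\cI=\tfrac{\kappa_2^2 m_-}{mn}a$, $\cJ=\tfrac{\kappa_2^2 m_-}{mn}b$; your ratio $s=a/b$ is the paper's $\cZ=\cI/\cJ$, and the explicit $\dot s=\tfrac{\kappa_2^2 bE}{nm}\,s\,(s^{\dagger}-s)$ you compute is a cleaner one-line repackaging of the paper's Steps III and IV of Lemma~\ref{lemma: GF Phase IV* U V dynamics} (where they establish $\tfrac{1+\cos\Delta}{1+\cos\Delta+\frac{m_+}{m_-}\sin^2\Delta}<\cZ<\cos\Delta$ by a contraction argument for the lower bound and a somewhat more laborious contradiction argument, via an integrated $\cZ$-$\cJ$ relation, for the upper bound), leading to the same $\dot b\asymp-\kappa_2^2\Delta^2 b^2$ and the same $\Theta(1/t)$ loss decay.
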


Theorem~\ref{thm: GF Phase IV} illustrates that all activation patterns never change again after the phase transition at $T_{\rm III}$ with $\sgn_k^+(t)=1,\sgn_k^-(t)=0$ for any $k\in\cK_+$ and $\sgn_k^+(t)=\sgn_k^-(t)=1$ for any $k\in\cK_-$. Additionally, the loss converges with the polynomial rate $\Theta(1/\kappa_2^2\Delta^2 t)$. Furthermore, we present the following theorem about the convergent direction of each neuron.

\begin{theorem}[Directional Convergence]\label{thm: GF  directional convergence}
The limit $\lim\limits_{t\to+\infty}\frac{\btheta(t)}{\norm{\btheta(t)}_2}$ exists and denoted by $\overline{\btheta}=(\overline{\bb}_1^\top,\cdots,\overline{\bb}_m^\top)^\top\in\bbS^{md-1}$. Moreover, (i) for any $k\notin\cK_+\cup\cK_-,\overline{\bb}_k=\bzero$; (ii) for any $k\in\cK_+,\overline{\bb}_k\equiv\bv_+=C\big(\bx_+-\bx_-\cos\Delta\big)$; (iii) for any $k\in\cK_-,\overline{\bb}_k\equiv\bv_-=C\big(\big(1+\frac{\sin^2\Delta}{\alpha(1+\cos\Delta)}\big)\bx_--\bx_+\big)$, 
where $C>0$ is a scaling constant such that $\norm{\overline{\btheta}}_2=1$. (iv) Additionally, $f_-(\overline{\btheta})=-f_+(\overline{\btheta})>0$.
\end{theorem}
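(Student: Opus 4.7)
The plan is to exploit the fact that, by Theorem~\ref{thm: GF Phase IV}, all activation patterns are frozen throughout Phase IV: $\sgn_k^+=1,\sgn_k^-=0$ for $k\in\cK_+$ and $\sgn_k^+=\sgn_k^-=1$ for $k\in\cK_-$, while dead neurons are identically constant. On this frozen pattern the gradient flow decouples neuron-wise into what is essentially a low-dimensional linear system, and the loss-decay rate $\cL(t)=\Theta(1/(\kappa_2^2\Delta^2 t))$ from Theorem~\ref{thm: GF Phase IV}~(S3) pins down the asymptotic growth.

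First I would write out the flow explicitly. For each $k\in\cK_-$ the two active ReLUs give $\dot{\bb}_k=\frac{\kappa_2}{\sqrt m}\bigl(-\frac{n_+}{n}e^{-f_+(t)}\bx_++\frac{n_-}{n}e^{f_-(t)}\bx_-\bigr)$, so every such neuron shares the same increment $\bb_k(t)=\bb_k(T_{\rm III})-A(t)\bx_++B(t)\bx_-$, with $A(t):=\frac{n_+\kappa_2}{n\sqrt m}\int_{T_{\rm III}}^t e^{-f_+(s)}\,ds$ and $B(t):=\frac{n_-\kappa_2}{n\sqrt m}\int_{T_{\rm III}}^t e^{f_-(s)}\,ds$. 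For $k\in\cK_+$ the naive gradient is purely in direction $\bx_+$, which would push $\bb_k^\top\bx_-$ above zero and violate $\sgn_k^-=0$; the Clarke subdifferential therefore selects a value $\sigma'(\bb_k^\top\bx_-)=s(t)\in[0,1]$ so that $\bb_k^\top\bx_-\equiv 0$ (a sliding regime already established at the end of Phase III). Imposing $\dot{\bb}_k^\top\bx_-=0$ gives $s(t)=p\cos\Delta\cdot e^{-(f_+(t)+f_-(t))}$ and the tangential flow $\dot{\bb}_k=\frac{n_+\kappa_2}{n\sqrt m}e^{-f_+}(\bx_+-\cos\Delta\,\bx_-)$, so $\bb_k(t)=\bb_k(T_{\rm III})+A(t)(\bx_+-\cos\Delta\,\bx_-)$.

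Substituting these closed forms back into $f_\pm$ yields $f_+(t)=\frac{\kappa_2}{\sqrt m}[(m_+\sin^2\Delta+m_-)A-m_-\cos\Delta\,B]+\cO(1)$ and $f_-(t)=-\frac{\kappa_2}{\sqrt m}m_-[B-\cos\Delta\,A]+\cO(1)$. From $\cL(t)=\Theta(1/t)$ we have $e^{-f_+},e^{f_-}=\cO(1/t)$, hence $A(t),B(t)=\Theta(\log t)\to\infty$. The key step is to show the ratio $\gamma:=\lim_{t\to\infty}B(t)/A(t)$ exists and equals $1+\sin^2\Delta/(\alpha(1+\cos\Delta))$. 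Since $\dot B/\dot A=p^{-1}e^{f_+(t)+f_-(t)}$, the existence of a finite positive limit $\gamma$ forces $f_++f_-$ to a finite limit by L'H\^opital, so the coefficient of $A$ in the linear form $f_++f_-=\frac{\kappa_2}{\sqrt m}[(m_+\sin^2\Delta+m_-(1+\cos\Delta))A-m_-(1+\cos\Delta)B]+\cO(1)$ must vanish; using $\alpha=m_-/m_+$ and $\sin^2\Delta=(1-\cos\Delta)(1+\cos\Delta)$, solving the vanishing-coefficient equation gives exactly $\gamma=1+\sin^2\Delta/(\alpha(1+\cos\Delta))$. A direct computation shows the asymptotic sliding selector $s_\infty=\alpha\cos\Delta/(\alpha+1-\cos\Delta)\in(0,1)$, so the tangential-flow formula is self-consistent.

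Parts (i)--(iii) then follow: dead neurons are bounded while $\|\btheta\|\to\infty$, so $\overline{\bb}_k=\bzero$; for the living neurons the leading-order direction is $\bx_+-\cos\Delta\,\bx_-$ on $\cK_+$ and $\gamma\bx_--\bx_+$ on $\cK_-$, and since $\|\btheta(t)\|^2\sim A(t)^2[m_+\sin^2\Delta+m_-(1+\gamma^2-2\gamma\cos\Delta)]$ the single normalization constant $C$ is shared by both families. Part (iv) is a direct algebraic check: substituting $\overline{\bb}_k=\bv_\pm$ into $f_\pm$ and using $m_-=\alpha m_+$ together with $\sin^2\Delta=(1-\cos\Delta)(1+\cos\Delta)$ gives $f_+(\overline{\btheta})=C\kappa_2 m_+(1-\cos\Delta)(1+\alpha)/\sqrt m=-f_-(\overline{\btheta})>0$. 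The main obstacle is rigorously justifying the Filippov sliding regime for $k\in\cK_+$ throughout $(T_{\rm III},\infty)$ and upgrading the coefficient-matching to actual convergence of $B/A$: both reduce to showing that $f_+(t)+f_-(t)$ converges (rather than diverging in either direction) via a quantitative Lyapunov analysis exploiting Assumption~\ref{ass: data} that $p\cos\Delta>1$.
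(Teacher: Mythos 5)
Your route — integrating the frozen-pattern flow in closed form, $\bb_k(t)=\bb_k(T_{\rm III})+A(t)(\bx_+-\cos\Delta\,\bx_-)$ on $\cK_+$ and $\bb_k(t)=\bb_k(T_{\rm III})-A(t)\bx_++B(t)\bx_-$ on $\cK_-$, and then reducing everything to the limit of $B/A$ — is a genuinely different presentation from the paper, which instead invokes the Lyu--Li / Ji--Telgarsky theorem (Lemma~\ref{lemma: KKT convergence homogeneous}) for existence of a directional limit and alignment of $\btheta$ with $-\nabla\cL$, and then identifies the KKT point. Your parametrization makes directional convergence an immediate consequence of $A(t)\to\infty$ and $B/A\to\gamma$, with no need to import the abstract implicit-bias black box. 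The sliding multiplier $s(t)=p\cos\Delta\,e^{-(f_++f_-)}$, the necessary-condition derivation of $\gamma$ from the vanishing coefficient of $A$ in the linear form of $f_++f_-$, and the check in part~(iv) all come out correctly and match the paper's constant $\lim pe^{-(f_++f_-)}=\frac{1+\cos\Delta}{1+\cos\Delta+\alpha^{-1}\sin^2\Delta}$.

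There is, however, a genuine gap that you flag but do not close, and it is the technical heart of the theorem: you derive what $\gamma$ \emph{must} be assuming $B/A$ converges to a finite positive limit, but never prove that it does. The ``vanishing-coefficient'' step is only a necessary condition extracted from assumed convergence, and the remark that this ``reduces to showing $f_++f_-$ converges via a quantitative Lyapunov analysis'' names the missing piece without supplying it. In the paper this is Lemma~\ref{lemma: GF Phase IV* U V dynamics}: after changing variables to $\cI(t)=\kappa_2^2\tfrac{m_-}{m}\tfrac{p}{1+p}e^{-f_+}$ and $\cJ(t)=\kappa_2^2\tfrac{m_-}{m}\tfrac{1}{1+p}e^{f_-}$ (so that $\cI/\cJ=\dot A/\dot B=pe^{-(f_++f_-)}$), it establishes a forward-invariant trapping region $\frac{1+\cos\Delta}{1+\cos\Delta+\alpha^{-1}\sin^2\Delta}<\cI/\cJ<\cos\Delta$ for all $t>T_{\rm III}$, proves $\cI(t),\cJ(t)=\Theta\bigl(1/(\kappa_2^2\Delta^2 t)\bigr)$, and finally shows via an explicit first integral for $\cZ=\cI/\cJ$ as a function of $\cJ$ that $\cZ\to\frac{1+\cos\Delta}{1+\cos\Delta+\alpha^{-1}\sin^2\Delta}$ as $\cJ\to0$. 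That last step is exactly the convergence of $\dot B/\dot A$, and hence of $B/A$ by Stolz--Ces\`aro. Note also that the strict upper bound $\cI/\cJ<\cos\Delta$ on the entire ray is what keeps your multiplier $s(t)<\cos^2\Delta<1$ and keeps $\langle\bF_+(t),\bx_-\rangle<0$, validating the Filippov sliding regime for $\cK_+$ — so the same lemma resolves both of the obstacles you list simultaneously. Without supplying this phase-plane analysis (or an equivalent Lyapunov argument), the proposal is a correct and well-conceived sketch of an alternative route, but not yet a proof.
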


{\bf Initialization-dependent Directional Convergence.} As an asymptotic result, Theorem~\ref{thm: GF directional convergence} provides the final convergent direction of GF. 
All living positive neurons ($k\in\cK_+$) directionally converge to $\bv_+\parallel\bx_-^{\perp}$ with $\<\bv_+,\bx_+\>>0$ and $\<\bv_+,\bx_-\>=0$, while all living negative neurons ($k\in\cK_-$) directionally converge to $\bv_-\in{\rm span}\{\bx_+,\bx_-\}$ with $\<\bv_-,\bx_+\>>0$ and $\<\bv_-,\bx_-\>>0$. 
It is worth noting that $\bv_-$ directly depends not only on the data but also on the ratio $\alpha=|\cK_-|/|\cK_+|$ (defined in Remark~\ref{rmk: K+, K-, alpha}).
Recalling the results in Phase I, $\alpha$ lies in a certain range with high probability; but it is still a random variable due to its dependence on random initialization. Different initializations may lead to different values $|\cK_-|/|\cK_+|$ at the end of Phase I, eventually causing different convergent directions in Phase IV.

{\bf Non-(Local)-Max-Margin Direction.}
Lastly, we study the implicit bias of the final convergence rate. According to~\cite{lyu2019gradient,ji2020directional} and our results above, $\overline{\btheta}$ in Theorem~\ref{thm: GF  directional convergence} must be a KKT direction of some max-margin optimization problem. 
However, it is not clear whether the direction $\overline{\btheta}$ is actually an actual optimum of this problem.
Surprisingly, in next Theorem, we demonstrate that the final convergent direction is {\bf not even a local optimal} direction of this problem, which enlightens us to rethink the max margin bias of ReLU neural networks. 

\begin{theorem}[Implicit Bias]\label{thm: GF not optimum max margin}
The final convergent direction $\overline{\btheta}$ (in Theorem~\ref{thm: GF  directional convergence}) is a KKT direction of the max-margin problem
$\min:\frac{1}{2}\norm{\btheta}^2\ {\rm s.t.}\ y_if(\bx_i;\btheta)\geq1,i\in[n]$. However, $\overline{\btheta}$ is not even a local optimal direction of this problem.
\end{theorem}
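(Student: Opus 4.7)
The KKT-direction half of the statement follows from the general framework for $1$-homogeneous networks trained under exponential-tailed losses \citep{lyu2019gradient,ji2020directional}. The network $f(\cdot;\btheta)$ is $1$-homogeneous in the trainable parameters $\btheta=(\bb_1^\top,\ldots,\bb_m^\top)^\top$ because the output-layer weights $a_k$ are frozen, and Theorem~\ref{thm: GF Phase IV}(S3) supplies $\cL(\btheta(t))\to 0$; these are exactly the hypotheses under which the directional limit of gradient flow on an exponential-tailed classification loss is a KKT direction of $\min\tfrac12\|\btheta\|^2$ s.t.\ $y_if(\bx_i;\btheta)\ge 1$. Together with the directional convergence in Theorem~\ref{thm: GF  directional convergence}, this gives the first assertion immediately.

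For the non-optimality part, the plan is to work at the scaled KKT point $\btheta^{*}:=c\,\overline{\btheta}$, with $c>0$ chosen so that both margin constraints are tight, i.e.\ $y_+f(\bx_+;\btheta^{*})=1=y_-f(\bx_-;\btheta^{*})$ (possible because Theorem~\ref{thm: GF  directional convergence}(iv) makes these two quantities equal and positive at $\overline{\btheta}$, and $f$ is $1$-homogeneous), and to exhibit an explicit nearby feasible $\widetilde\btheta$ with strictly smaller squared norm. The central observation is that every dead neuron satisfies $\overline{\bb}_k=\bzero$ by Theorem~\ref{thm: GF  directional convergence}(i), so activating one by a perturbation of size $\epsilon$ contributes linearly in $\epsilon$ to the network output while adding only $\epsilon^{2}$ to $\|\btheta\|^2$. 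This one-sided asymmetry at the origin provides a free first-order slack in one margin constraint, which can then be spent by shrinking the living neurons.

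The explicit construction is: pick any dead positively-signed neuron $k_0\in[m/2]\setminus\cK_+$ (it exists because Theorem~\ref{thm: GF Phase I}(S1) gives $|\cK_+|\le 0.29m<m/2$), and replace
\[
\bb_{k_0}\ \longleftarrow\ \epsilon\,\bx_-^{\perp},\qquad \bb_k\ \longleftarrow\ (1-\eta)\bb_k^{*}\ \text{ for every }k\in\cK_+,
\]
leaving all other neurons untouched. Since $\bx_-^{\perp}\cdot\bx_+=\sin\Delta>0$ and $\bx_-^{\perp}\cdot\bx_-=0$, the new neuron adds $\kappa_2\epsilon\sin\Delta/\sqrt{m}$ to $f(\bx_+;\btheta)$ and nothing to $f(\bx_-;\btheta)$, while scaling the $\cK_+$ neurons keeps $\bb_k\cdot\bx_-=0$ identically and hence leaves $f(\bx_-;\btheta)$ unchanged. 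Choosing $\eta=\epsilon/(m_+\hat s_+\sin\Delta)$, where $\hat s_+$ is the scalar such that $\bb_k^{*}=\hat s_+(\bx_+-\cos\Delta\,\bx_-)$ for $k\in\cK_+$, cancels the extra contribution to $f_+$, so all margin constraints remain satisfied exactly. A direct calculation then yields
\[
\Delta\|\btheta\|^{2}=\epsilon^{2}+m_+\hat s_+^{2}\sin^{2}\!\Delta\,\bigl(-2\eta+\eta^{2}\bigr)=-2\hat s_+\sin\Delta\,\epsilon+\cO(\epsilon^{2}),
\]
which is strictly negative for all sufficiently small $\epsilon>0$. Hence $\btheta^{*}$ is not a local minimum of the max-margin problem.

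The main obstacle---and the reason the construction must be engineered carefully---is the bookkeeping around the ReLU kinks. Every $k\in\cK_+$ satisfies $\bb_k^{*}\cdot\bx_-=0$ \emph{exactly}, so a generic perturbation of these neurons could activate them on $\bx_-$ via the positive side of the kink and violate $y_-f(\bx_-;\btheta)\ge 1$ at first order, killing any hope of descent. The construction above sidesteps this by using only two operations that are transverse to the kink---scaling the $\cK_+$ neurons (which preserves $\bb_k\cdot\bx_-=0$) and introducing the new neuron along $\bx_-^{\perp}$ (which is orthogonal to $\bx_-$)---so the delicate $f_-$ constraint is preserved identically rather than only to leading order, while the objective is still decreased at first order.
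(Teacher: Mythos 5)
Your argument is correct, and it takes a genuinely different route from the paper's. The paper's proof (Appendix F) exhibits a one-parameter family $\hat\btheta(\epsilon)$ that only touches the \emph{living} neurons: it shrinks all $\cK_+$ neurons and simultaneously shifts all $\cK_-$ neurons along $\bx_+-\bx_-\cos\Delta$, then differentiates $\|\hat\btheta(\epsilon)\|^2$ at $\epsilon=0$ and obtains a strictly negative derivative. Your construction instead keeps $\cK_-$ untouched, shrinks $\cK_+$, and activates a \emph{dead} neuron along $\bx_-^{\perp}$; the key mechanism is the one-sidedness of the ReLU kink at $\bzero$, which delivers a first-order gain in $f_+$ at only second-order cost in the norm. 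You then balance the two first-order contributions to $f_+$ exactly via the choice of $\eta$, and are left with a net first-order decrease $-2\hat s_+\sin\Delta\,\epsilon+\cO(\epsilon^2)$. This is a cleaner and, in my reading, more robust argument than the paper's: under the paper's perturbation, a direct computation gives $f_+(\hat\btheta(\epsilon))=1-\frac{\kappa_2}{\sqrt m}Q(m_++m_-)\sin^2\Delta\,\epsilon<1$ for $\epsilon>0$, so the claimed identity $f_+(\hat\btheta(\epsilon))=1$ and hence the asserted feasibility does not actually hold as written. Your construction sidesteps this entirely by maintaining both margin constraints identically (both at their boundary values $1$ and $-1$), so feasibility is manifest rather than needing to be checked to leading order. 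In short, both halves of your proof are sound, and the non-optimality half supplies a correct mechanism where the paper's construction appears to have a feasibility gap.
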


Please refer to Appendix~\ref{appendix: proof: Phase IV} for the proof of Phase IV. 
% First, we provide meticulous prior estimations for $2$d ODEs on $f_+(t)$ and $f_-(t)$, similar to Phase II, which can imply $T_{\rm V}=+\infty$. Furthermore, with the help of our fine-grained analysis for the 2D dynamics and the results in \citep{lyu2019gradient,ji2020directional}, we can determine the unique convergent direction from numerous KKT directions.

\section{Discussion and Comparison on Nonlinear Behaviors}\label{section: comparison}

Throughout the whole training process in Section~\ref{section: GF dynamics}, we divide the phases based on the evolution of ReLU activation patterns. During Phase I, as well as the beginning of Phase II and III, numerous activation patterns undergo rapid changes. Table~\ref{table: evolution of activation patterns} summarizes the evolution of activation patterns for all living neurons after Phase I. These results are also numerically validated in Figure~\ref{fig: dynamics}.

\begin{table}[!ht]
% \vspace{.1cm}
\begin{center}
\small
\caption{ The evolution of two classes of activation patterns of living neurons after Phase I. As for other two classes, $\sgn_k^+(t)$ $(k\in\cK_+)$ and $\sgn_k^-(t)$ $(k\in\cK_-)$, they remain equal to $1$ after Phase I.}
\begin{tabular}{c|c|c|c|c}
    \hline\hline
    & $t\in(T_{\rm I},T_{\rm II})$ & $t\in(T_{\rm II},T_{\rm II}^{\rm PT})$ & $t\in(T_{\rm II}^{\rm PT},T_{\rm III})$ & $t\in(T_{\rm III},+\infty)$ 
    % \\  \hline
    % $\sgn_k^+(t)$ $(k\in\cK_+)$ & \multicolumn{4}{c}{1}
    \\  \hline
    % \color{red!70!black}{$\sgn_k^-(t)$ $(k\in\cK_+)$} 
    $\sgn_k^-(t)$ $(k\in\cK_+)$ & 1 & 1 or 0 & 0 & 0
    \\  \hline
    % \color{red!70!black}{$\sgn_k^+(t)$ $(k\in\cK_-)$}
    $\sgn_k^+(t)$ $(k\in\cK_-)$ & 0 & 0 & 0 & 1
    % \\  \hline
    % $\sgn_k^-(t)$ $(k\in\cK_-)$ & \multicolumn{4}{c}{1}
    \\ \hline\hline
\end{tabular}
\label{table: evolution of activation patterns}
\end{center}
% \vspace{-.2cm}
% \vspace{.1cm}
\end{table}

{\bf Simplifying-to-Complicating.} 
In phase I, GF simplifies all the neurons from random directions into three categories: living positive neurons $\cK_+$ and living negative neurons $\cK_-$ condense in one direction each, which other neurons are deactivated forever. After Phase I, as shown in Table \ref{table: evolution of activation patterns}, the two classes of activation patterns change from $(1,0)\overset{\text{simplify}}{\to}(0,0)\overset{\text{complicate}}{\to}(0,1)$, while other patterns remain unchanged. Therefore, the evolution of activation patterns exhibits a simplifying-to-complicating learning trend, which also implies that the network trained by GF learn features in increasing complexity.

% ({\red may be deleted}) Now we have characterized the whole optimization dynamics of training ReLU networks from random initialization to final convergence. During the four-phase training process, the training exhibits rich nonlinear behaviors, including: initial condensation and simplification (Sec~\ref{subsection: phase I}); saddle-to-saddle (Sec~\ref{subsection: phase I}); phased feature learning with accuracy’s plateau (Sec~\ref{subsection: phase II}); neuron deactivation and self-simplifying (Sec~\ref{subsection: phase III}); neuron reactivation and self-complicating (Sec~\ref{subsection: phase IV}); random directional convergence (Sec~\ref{subsection: phase IV}); and Simplifying-to-complicating.

%Moreover, we make detailed comparisons of our dynamics with several related works. 

{\bf Comparison with NTK.} 
In the lazy regime such as NTK, most neurons keep close to the initialization and most activation patterns do not change during training. Specifically, for any training data $\bx_i$, $\frac{1}{m}\sum_{k\in[m]}\bbI\{\sgn(\<\bb_k(t),\bx_i\>)\ne\sgn(\<\bb_k(0),\bx_i\>)\}= o(1),\forall t>0$~\citep{du2018gradient}.
However, our work stands out from lazy regime as activation patterns undergo numerous changes during training. In Phase I, initial condensation causes substantial changes in activation patterns, which is similarly observed in \citep{phuong2021inductive}. 
Furthermore, even after Phase I, there are notable modifications in activation patterns. 
As shown in Table \ref{table: evolution of activation patterns}, the proportion of changes in activation patterns for any given training data is the $\Theta(1)$, as compared with the $o(1)$ in NTK regime. 
Specifically, at any $t>T_{\rm III}$, $\frac{1}{m}\sum_{k\in[m]}\bbI\{\sgn_k^+(t)\ne\sgn_k^+(T_{\rm I})\}=\frac{1}{m}|\cK_-|=\Theta(1)$ and $\frac{1}{m}\sum_{k\in[m]}\bbI\{\sgn_k^-(t)\ne\sgn_k^-(T_{\rm I})\}=\frac{1}{m}|\cK_+|=\Theta(1)$.
On the other hand, in our analysis, the requirement on the network's width $m$ is only $m=\Omega(\log(1/\delta))$ (Theorem~\ref{thm: GF Phase I}), regardless of data parameters $p,\Delta$, while NTK regime requires a much larger width $m=\Omega(\log(p/\delta)/\Delta^6)$~\citep{ji2019polylogarithmic}.

{\bf Comparison with}~\cite{phuong2021inductive,lyu2021gradient,wang2022early,boursier2022gradient}. Beyond lazy regime and local analysis, these works also characterize the entire training dynamics and analyze a few nonlinear behaviors. Now we compare our results with these works in detail.
(i) While \cite{lyu2021gradient} focuses on training Leaky ReLU NNs, our work and the other three papers study ReLU NNs. It is worth noting that the dynamics of Leaky ReLU NNs differ from ReLU due to the permanent activation of Leaky ReLU $(\sigma'(\cdot)\geq\alpha>0)$.
(ii) Initial condensation is also proven in~\citep{lyu2021gradient,boursier2022gradient}, and the condensation directions are some types of data averages. In our work, neurons can aggregate towards not only the average direction $\bmu$, but also another direction $\bx_+^{\perp}$. Moreover, we also estimate the number of neurons that condense into two directions. 
(iii) Saddle-to-saddle dynamics are proven in \citep{phuong2021inductive} for square loss, where the second saddle is about training loss and caused by incomplete fitting. However, our work focus on classification with exponential loss and exhibit a similar saddle-to-plateau dynamics, where the plateau is about training accuracy, caused by incomplete feature learning.
(iv) Phased feature learning. In \citep{phuong2021inductive,wang2022early}, all features can be rapidly learned in Phase I (accuracy$=100\%$), followed by lazy training. However, for practical tasks on more complex data, NNs can hardly learn all features in such short time. In our work, the data is more difficult to learn, resulting in incomplete feature learning in Phase I (accuracy$<100\%$). Subsequently, NNs experience a long time to learn other features completely. Such multi-phase feature leaning dynamics are closer to practical training process.
(v) Neuron reactivation and deactivation. For ReLU NNs, The evolution of activation patterns is one of the essential causes of nonlinear dynamics. In~\citep{phuong2021inductive,wang2022early}, activation patterns only change rapidly in Phase I, after which they remain unchanged. In~\citep{boursier2022gradient}, their lemma 6 shows that their dynamics lack neuron reactivation. However, in our dynamics, even after Phase I, our dynamics exhibit significant neuron deactivation and reactivation as discussed in Table~\ref{table: evolution of activation patterns}.
(vi) The final convergent directions are also derived in~\citep{phuong2021inductive,lyu2021gradient,boursier2022gradient}, which only depend on the data. However, in our setting, the convergent direction is more complicated, determined by both data and random initialization.
(vii) Furthermore, our four-phase dynamics demonstrate the whole evolution of activation patterns during training and reveal a general simplifying-to-complicating learning trend.

In summary, our whole four-phase optimization dynamics capture more nonlinear behaviors than these works. Furthermore, we conduct a more thorough and detailed theoretical analysis of these nonlinear behaviors, providing a more systematic and comprehensive understanding.

\section{Conclusion and Future Work}
In this work, we study the optimization dynamics of ReLU neural networks trained by GF on a linearly separable data. Our analysis captures the whole optimization process starting from random initialization to final convergence. 
Throughout the whole training process, we reveal four different phases and identify rich nonlinear behaviors theoretically.
However, theoretical understanding of the training of NNs still has a long way to go. For instance, although we conduct a fine-grained analysis of GF, the dynamics of GD are more complex and exhibit other nonlinear behaviors such as progressive sharpening and edge of stability~\citep{wu2018sgd,jastrzkebski2018relation,cohen2021gradient,ma2022beyond,li2022analyzing,damian2022self,zhu2022understanding,ahn2022learning,ahn2022understanding}. 
Additionally, unlike GD, SGD uses only
mini-batches of data and injects noise~\citep{zhu2019anisotropic,thomas2020interplay,feng2021inverse,liu2021noise,ziyin2021minibatch,wu2022does,wojtowytsch2023stochastic,wang2023noise} in each iteration, which can have a pronounced impact on the optimization dynamics and nonlinear behaviors.
Better understanding of the nonlinear behaviors during GD or SGD training is an important direction of future work.

\newpage
\acksection
% \vspace{-.4cm}
We thank Prof. Weinan E, Prof. Lei Wu, Prof. Zhi-Qin John Xu and anonymous reviewers for helpful suggestions. Mingze Wang is supported in part by the National Key Basic Research Program of China: 2015CB856000.

% \medskip
% \bibliographystyle{plainnat}
% \bibliography{ref.bib}

%%%%%%%%%%%%%%%%%%%%%%%%%%%%%%%%%%%%%%%%%%%%%%%%%%%%%%%%%%%%%%%%%%%%%%%%%%%%%%%%%%%%%%%%%%%%%%%%%%%%%%%%%%%%%%%%%%%%%%%%%%%%%%%%%%%%%%%%%%%%%%%%%%%%%%%%%%%%%%%%%%%%%%%%%%%%%%%%%%%%

\newpage
\appendix

\begin{center}
    % \noindent\rule{\textwidth}{4pt} \vspace{-0.2cm}
    \LARGE \textbf{Appendix} % \\ ~\\[-0.5cm]
    % \noindent\rule{\textwidth}{1.2pt}
\end{center}

% \startcontents[sections]
% \printcontents[sections]{l}{1}{\setcounter{tocdepth}{2}}

\part{}
\setcounter{tocdepth}{1}
\localtableofcontents
\newpage

\newpage
\section{Experimental Details}
\label{appendix: experiment}
\vspace{-0.2cm}

All experiments are conducted on a MacBook pro 13 (M2) only using CPU.
See the code at~\href{https://github.com/wmz9/Understanding_Multi-phase_Optimization_NeurIPS2023}{\texttt{https://github.com/wmz9/Understanding\_Multi-phase\_Optimization\_NeurIPS2023}}.

\vspace{-0.2cm}
\subsection{Experiments on standard Dataset}
\label{appendix: experiment: without noise}
\vspace{-0.2cm}

We train the two-layer network on the dataset that satisfies Assumption~\ref{ass: data} with $d=20$, $p=4$ and $\Delta=\pi/15$. 
Specifically, we choose the network width $m=100$; the initialization scale $\kappa_1=0.1,\kappa_2=1$; the small learning rate $\eta=0.01$.

In Figure~\ref{fig: data, loss, acc}, we show some key data directions in this dataset, as well as the training accuracy, which contains a long plateau. Furthermore, in Figure~\ref{fig: whole dynamics}, we provide the evolution of all neurons during training from $t=0$ to $t=150000$, which is a more complete version of Figure~\ref{fig: dynamics}.

\vspace{-0.2cm}
\begin{figure}[!ht]
    \centering
    \includegraphics[width=4.2cm]{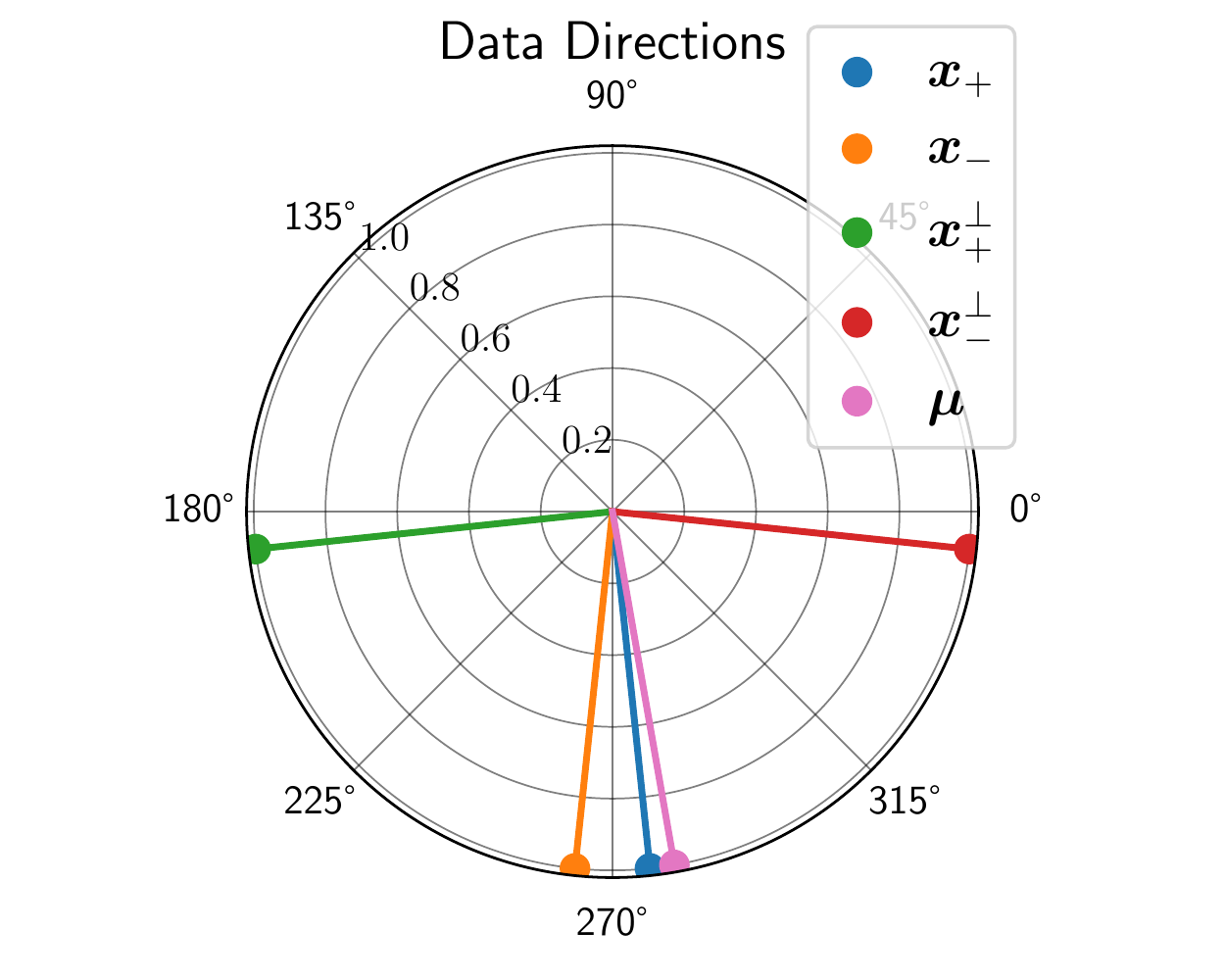}
    \includegraphics[width=4.2cm]{figures/acc.pdf}
    \vspace{-.2cm}
    \caption{(left) Some key data directions: the directions of $\bx_+,\bx_-,\bx_+^{\perp},\bx_-^{\perp},\bmu$ are plotted in blue, orange, green, red and pink colors, respectively; (right) The training accuracy.}
    \label{fig: data, loss, acc}
\end{figure}

\vspace{-0.8cm}
\begin{figure}[!ht]
    \centering
    \subfigure[\small $t=0$]
    {\includegraphics[width=4cm]{figures/0.pdf}}
    % \hspace{-.5cm}
    \subfigure[\small $t=200$]
    {\includegraphics[width=4cm]{figures/200.pdf}}
    \subfigure[\small $t=2000$]
    {\includegraphics[width=4cm]{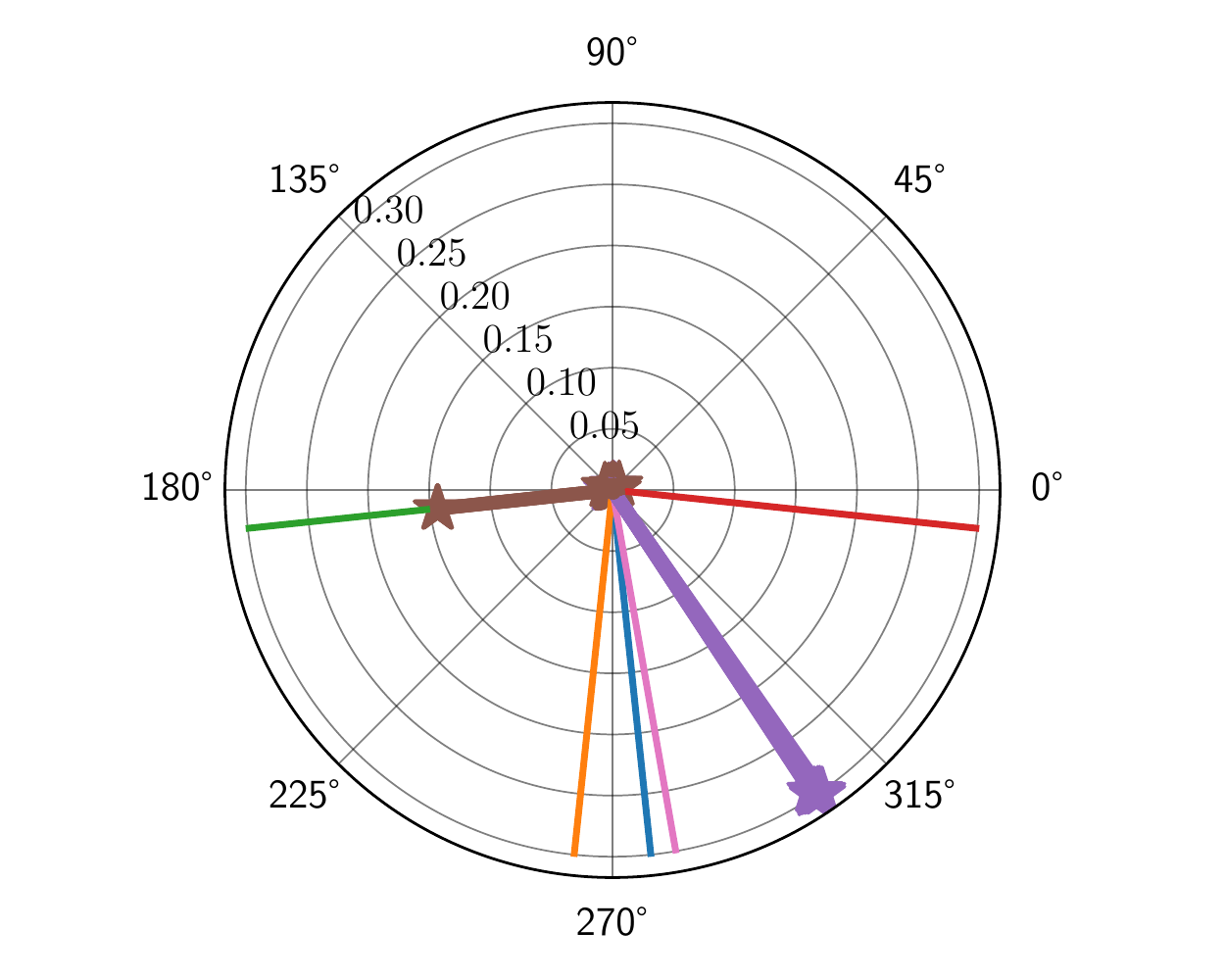}}
    \subfigure[\small $t=30000$]
    {\includegraphics[width=4cm]{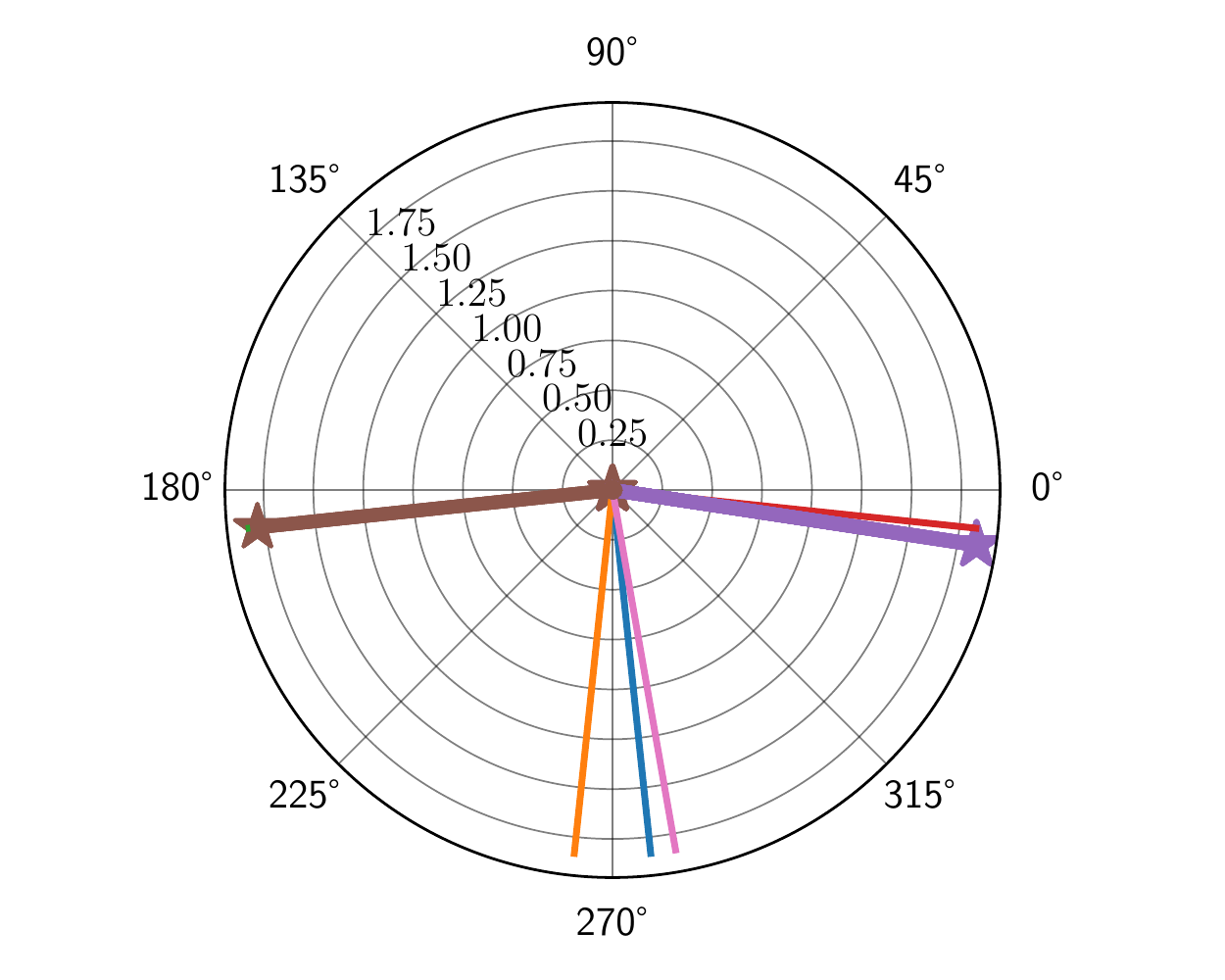}}
    \subfigure[\small $t=50000$]
    {\includegraphics[width=4cm]{figures/50000.pdf}}
    \subfigure[\small $t=60000$]
    {\includegraphics[width=4cm]{figures/60000.pdf}}
    \subfigure[\small $t=80000$]
    {\includegraphics[width=4cm]{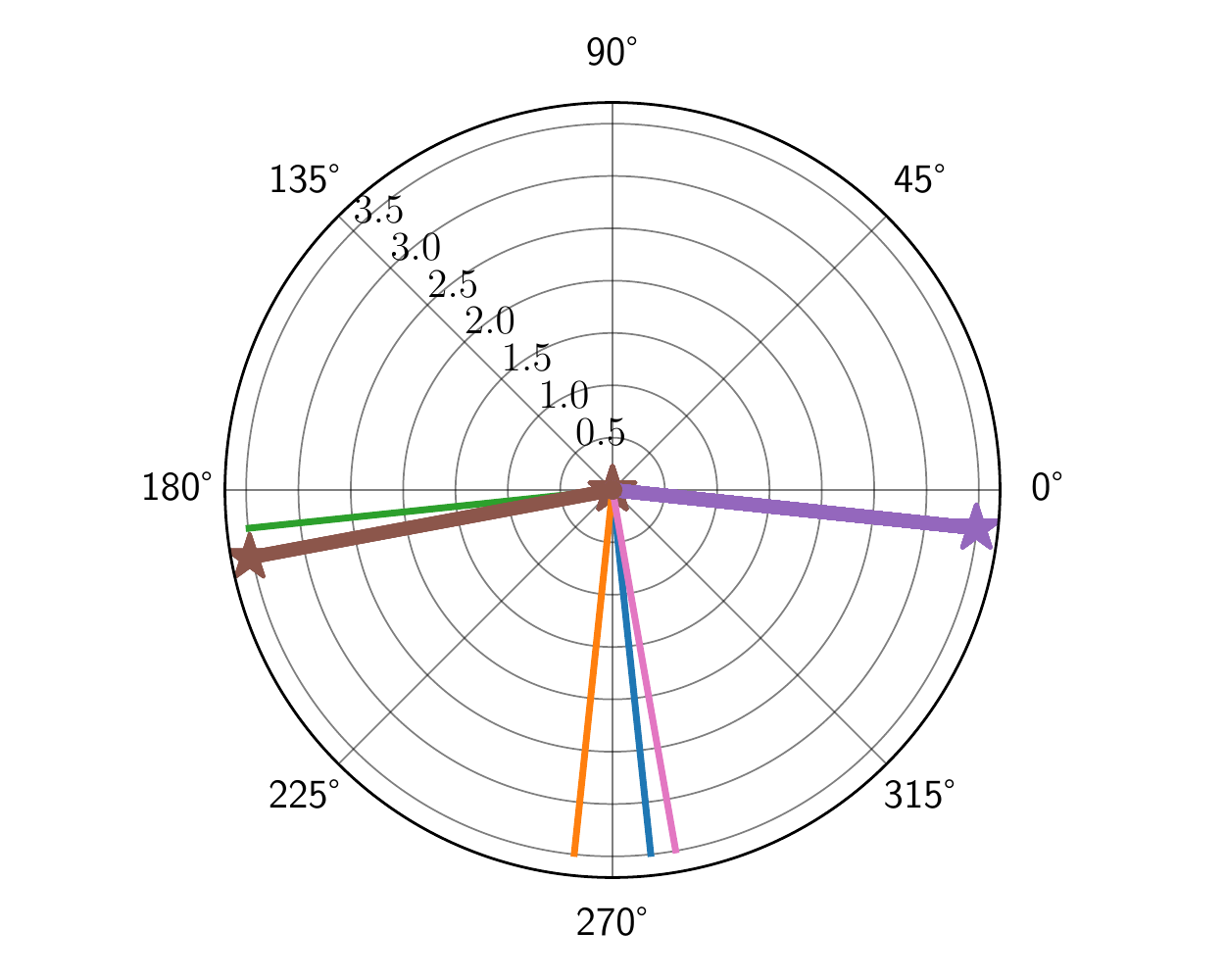}}
    \subfigure[\small $t=120000$]
    {\includegraphics[width=4cm]{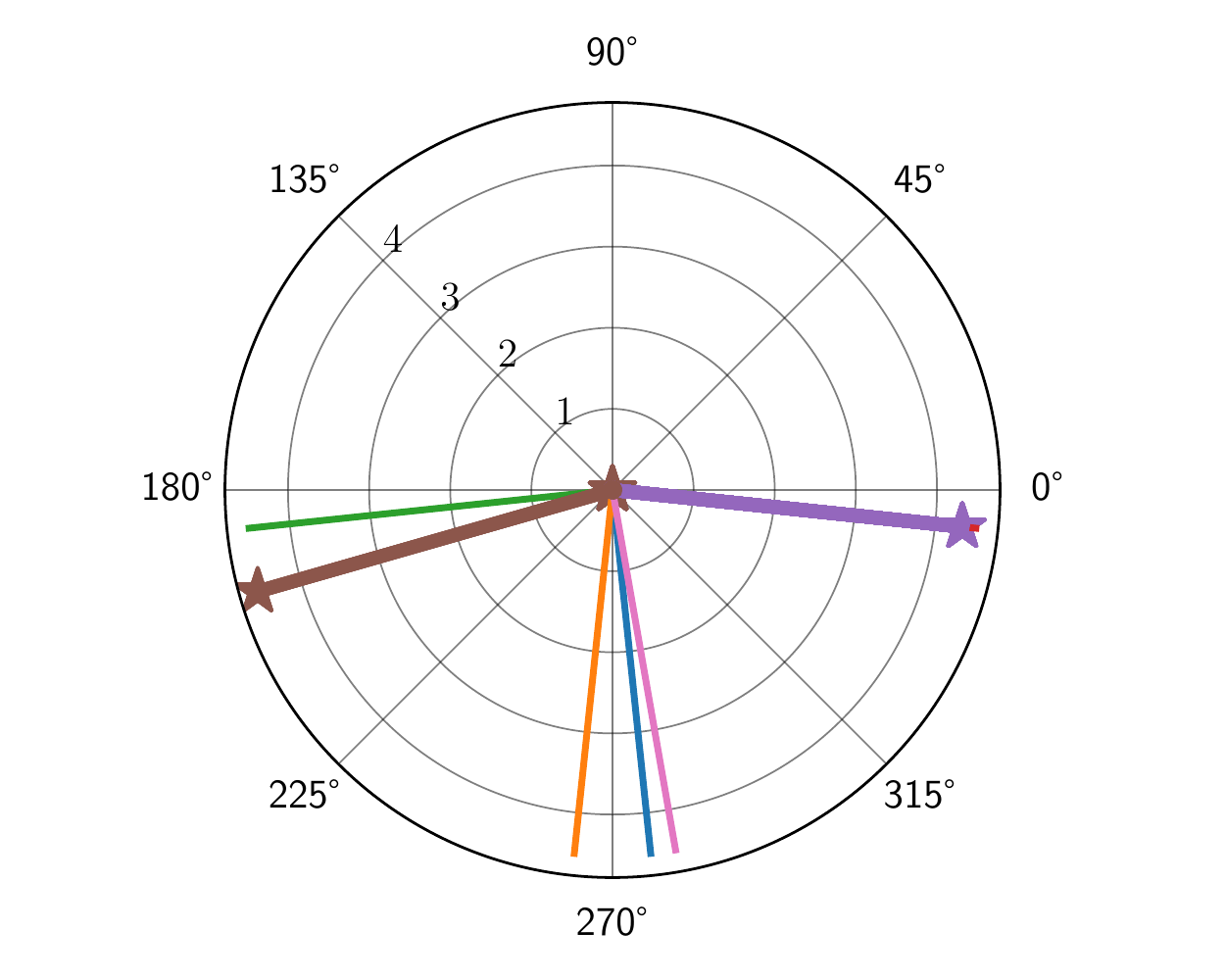}}
    \subfigure[\small $t=150000$]
    {\includegraphics[width=4cm]{figures/150000.pdf}}
    \vspace{-.2cm}
    \caption{(A more complete version of Figure~\ref{fig: dynamics})
    These figures visualize (in polar coordinates) the projections of all neurons $\{\bb_k(t)\}_{k\in[m]}$ onto the $2$d subspace ${\rm span}\{\bx_+,\bx_-\}$ during training (from $t=0$ to $t=150000$). 
    Each purple star represents a positive neuron $(k\in[m/2])$, while each brown star represents a negative neuron $(k\in[m]-[m/2])$. }
    \label{fig: whole dynamics}
\end{figure}

\newpage
\subsection{Numerical validation on our theoretical bounds}
\label{appendix: experiment: bounds}

We conduct experiments to validate our theoretical bounds on $T_{\rm plat}$ and $T_{\rm III}$ under different $p$ and $\Delta$, and the results are shown in ~\ref{table: numerical validation of timeline}.

In the first experiment (1st and 2nd subtable), we fix $p=4$ for change $\Delta$; in the second experiment (3rd and 4th subtable), we fix $\Delta=\pi/15$ and change $p$. As for other hyperparameters (such as $\kappa_1,\kappa_2,d,m$), we keep the same scales as our setups in Appendix~\ref{appendix: experiment: without noise}.

We have two main conclusions: (1) four training phases in our theory persistently exist; (the same as Fig 1 in Appendix A, and be omitted due to the limited space) (2) our theoretical estimates on $T_{\rm plat}$ and $T_{\rm III}$ are relatively tight, basically in the same magnitude as the realistic time.

\begin{table}[!ht]
    \centering
    \caption{The change of our theoretical bounds on $T_{\rm plat}$ and $T_{\rm III}$ under different $p$ and $\Delta$.}
    \begin{tabular}{c|c|c|c|c}
          \hline
         $\Delta$ & $\frac{4\pi}{45}$ & $\frac{4\pi}{45}\cdot\frac{3}{4}=\frac{\pi}{15}$ & $\frac{4\pi}{45}\cdot(\frac{3}{4})^2$ & $\frac{4\pi}{45}\cdot(\frac{3}{4})^3$ \\  \hline
        Realistic $T_{\rm plat}$ & $1.96\times10^4$ & $3.68\times10^4$ & $7.25\times10^4$ & $12.87\times10^4$
        \\ \hline	
        \makecell[c]{Our estimate $\Theta(1/\Delta^2)$: \\ $950/\Delta^2+1520/\Delta-9943$} & $1.90\times10^{4}$ & $3.82\times10^{4}$ & $7.14\times10^{4}$ & $12.89\times10^{4}$ \\ \hline	
    \end{tabular}

    \vspace{.1cm}

    \begin{tabular}{c|c|c|c|c}
      \hline
     $\Delta$ & $\frac{4\pi}{45}$ & $\frac{4\pi}{45}\cdot\frac{3}{4}=\frac{\pi}{15}$ & $\frac{4\pi}{45}\cdot(\frac{3}{4})^2$ & $\frac{4\pi}{45}\cdot(\frac{3}{4})^3$ \\  \hline
     Realistic $T_{\rm III}$ & $4.98\times10^4$ & $6.14\times10^4$ & $9.18\times10^4$ & $15.63\times10^4$ \\ \hline	
        \makecell[c]{Our estimate $\Theta(1/\Delta^2)$: \\
        $1772/\Delta^2-12218/\Delta+67621$}
        & $4.97\times10^{4}$ & $6.17\times10^{4}$ & $9.16\times10^{4}$ & $15.63\times10^{4}$ \\ \hline	
    \end{tabular}
    \vspace{.1cm}

    \begin{tabular}{c|c|c|c|c}
          \hline
         $p$ & $6$ & $8$ & $10$ & $12$  \\  \hline
         Realistic $T_{\rm plat}$ & $6.14\times10^4$ & $9.57\times10^4$ & $13.96\times10^4$ & $17.61\times10^4$ \\ \hline	
        \makecell[c]{Our estimate $\Theta(p)$: \\ $19400p-56400$} & $6.00\times10^{4}$ & $9.88\times10^{4}$ & $13.76\times10^{4}$ & $17.64\times10^{4}$ \\ \hline
\end{tabular}
\vspace{.4cm}

\begin{tabular}{c|c|c|c|c}
      \hline
     $p$ & $6$ & $8$ & $10$ & $12$ \\  \hline
     Realistic $T_{\rm III}$ & $8.92\times10^4$ & $13.40\times10^4$ & $19.72\times10^4$ & $27.68\times10^4$ \\ \hline	
    \makecell[c]{Our estimate $\Theta(p^{\frac{1}{1-\alpha\cos\Delta}})$: \\ $6912p^{1.5}-15897$} & $8.59\times10^{4}$ & $14.05\times10^{4}$ & $20.27\times10^{4}$ & $27.14\times10^{4}$ \\ \hline
\end{tabular}
\label{table: numerical validation of timeline}
\end{table}

% \newpage
\subsection{Experiments on Noisy Dataset}
\label{appendix: experiment: with noise}

We conduct numerical experiments on the setting of adding small stochastic noise on top of $\bx_+$ and $\bx_-$, a little bit more realistic setting. 
Specifically, in ${\rm span}\{\boldsymbol{x}_+,\boldsymbol{x}_-\}$, we perturb the angles of $n_+-1$ instances of $\boldsymbol{x}_+$ and $n_--1$ instances of $\boldsymbol{x}_-$ using stochastic noise $\xi\sim{\rm Unif}([0,\Delta/4])$.

In Figure~\ref{fig: noisy data dynamics}, we visualize (i) the evolution of each neuron throughout the training process; (ii) some key data directions; (iii) the evolution of training accuracy. 

From the numerical results in Figure~\ref{fig: noisy data dynamics}, we have two main conclusions: (1) we ascertain that the same four-phase optimization dynamics and nonlinear behaviors persist, even for our dataset with small stochastic noise; (2) a slight difference is that there is more than one plateau of training accuracy in Phase II. The reason is that for noisy data, GF needs to learn negative data one by one in Phase II. For example, three distinct negative data are employed in this experiment, so three plateaus of training accuracy emerge ($12/15$, $13/15$, and $14/15$).

\newpage
\begin{figure}[!ht]
    \centering
    \subfigure[\small key data directions]
    {\includegraphics[width=4.2cm]{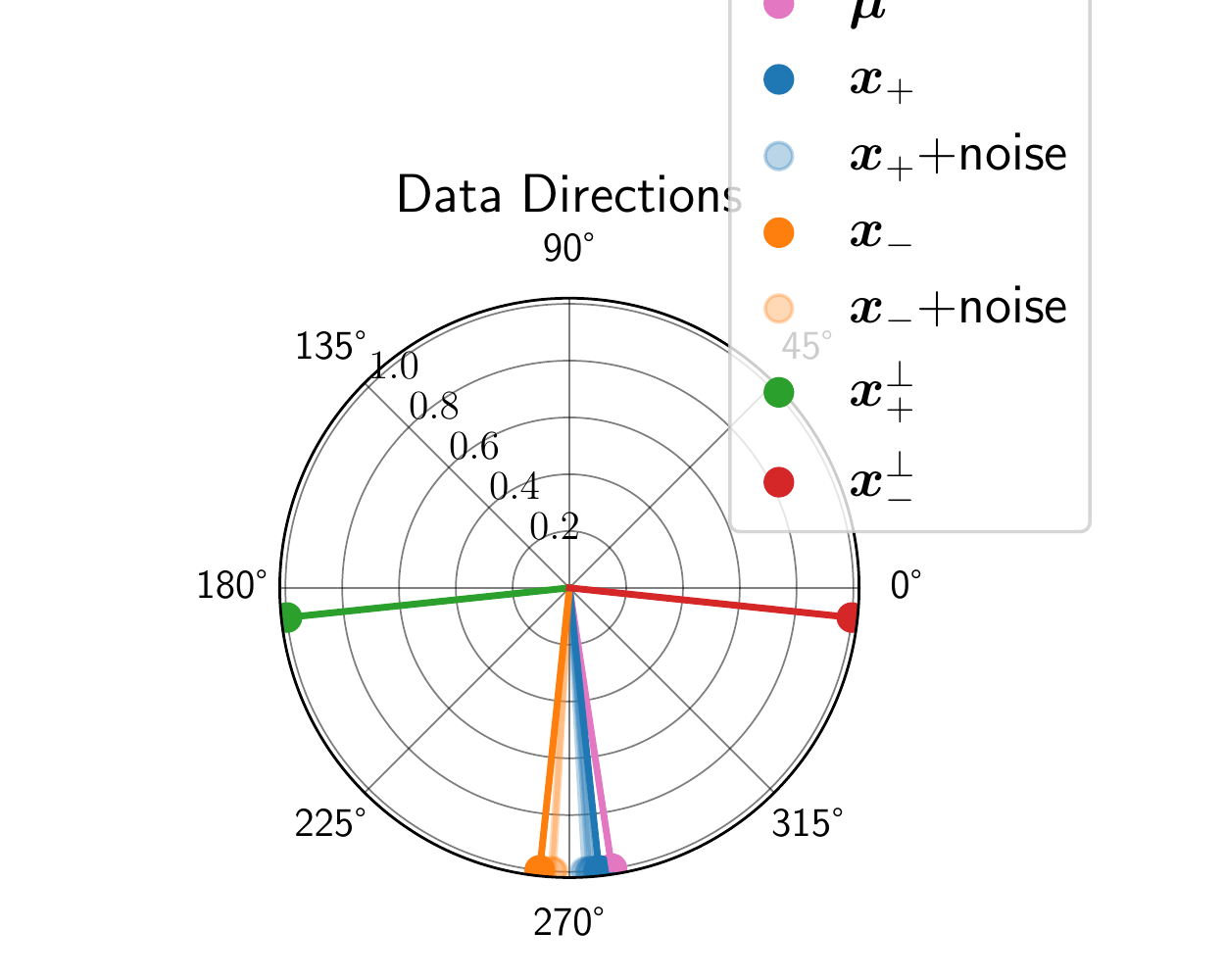}}
    % \hspace{-.5cm}
    \subfigure[\small training acc]
    {\includegraphics[width=4.2cm]{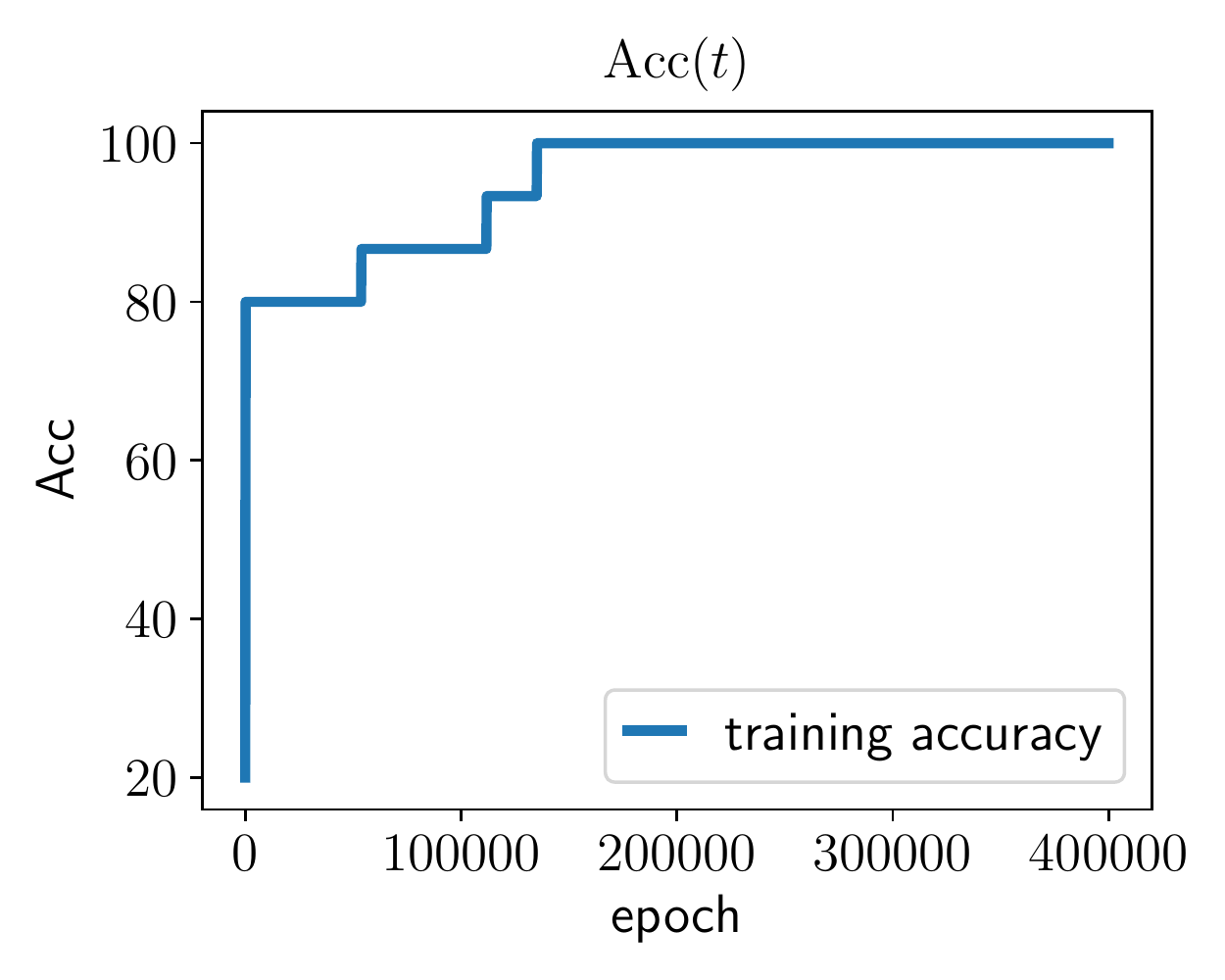}}
    % \hspace{-.35cm}
    \\
    \subfigure[\small $t=0$]
    {\includegraphics[width=4cm]{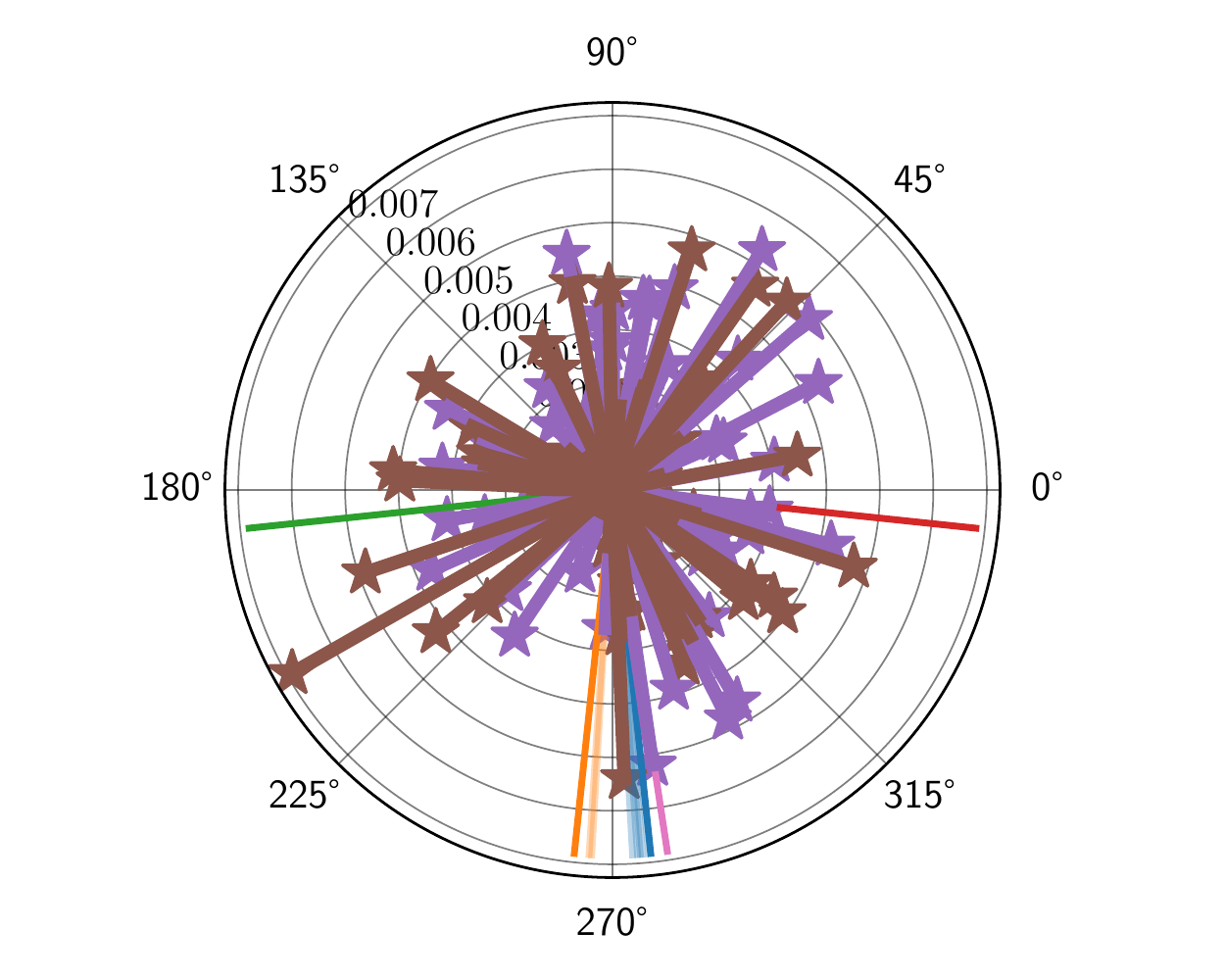}}
    % \hspace{-.6cm}
    \subfigure[\small $t=200$]
    {\includegraphics[width=4cm]{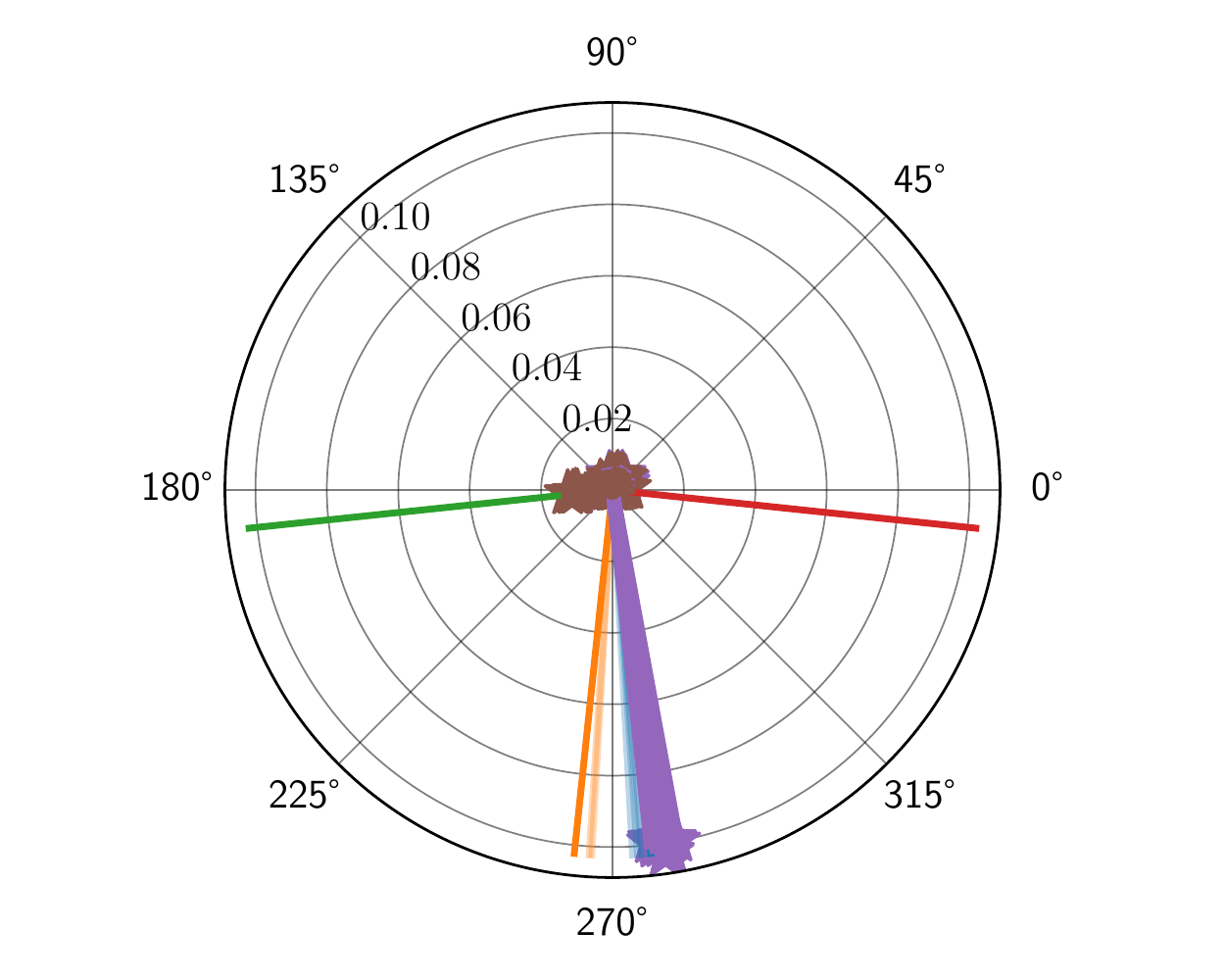}}
    % \hspace{-.6cm}
    \subfigure[\small $t=2000$]
    {\includegraphics[width=4cm]{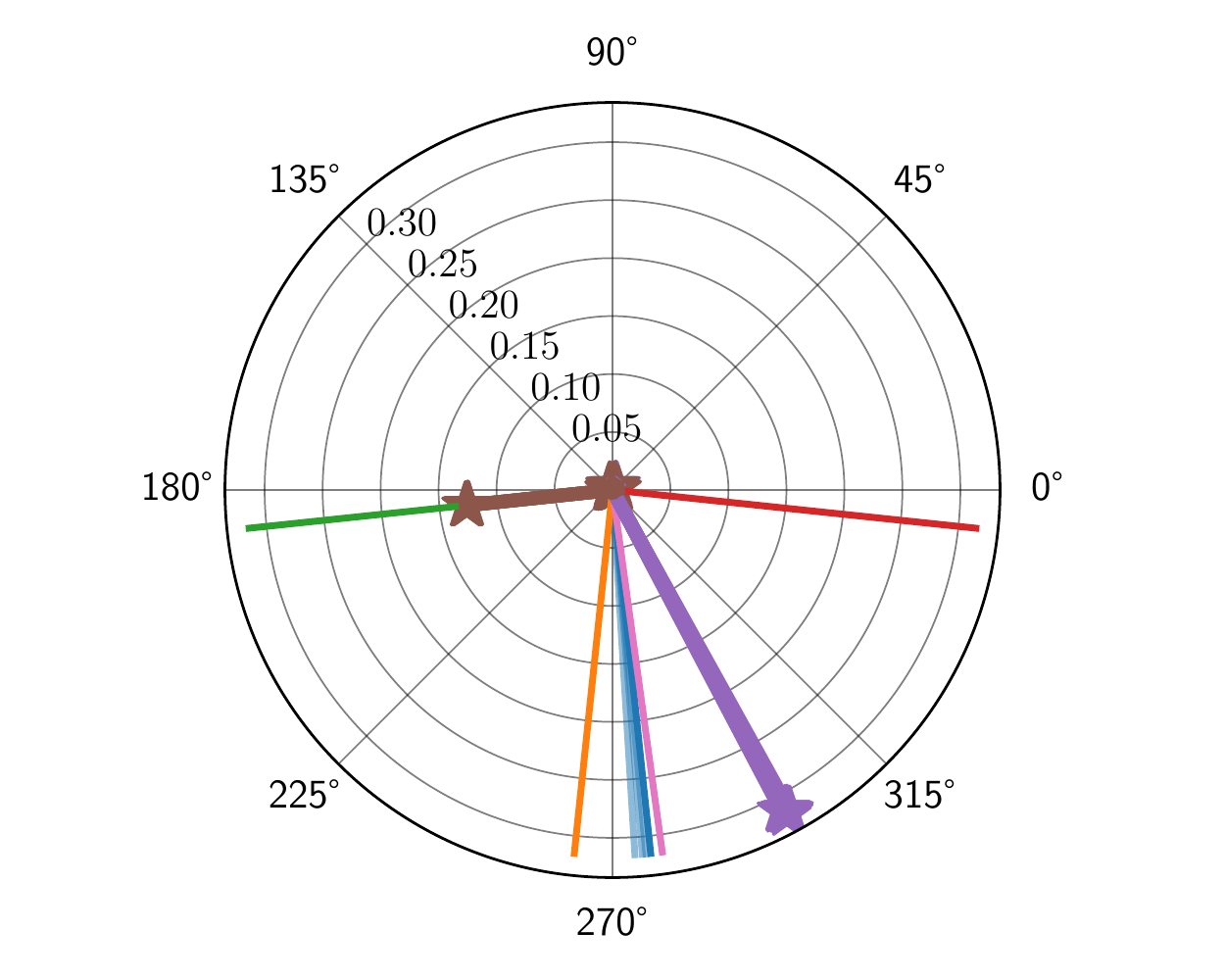}}
    % \hspace{-.6cm}
    \subfigure[\small $t=30000$]
    {\includegraphics[width=4cm]{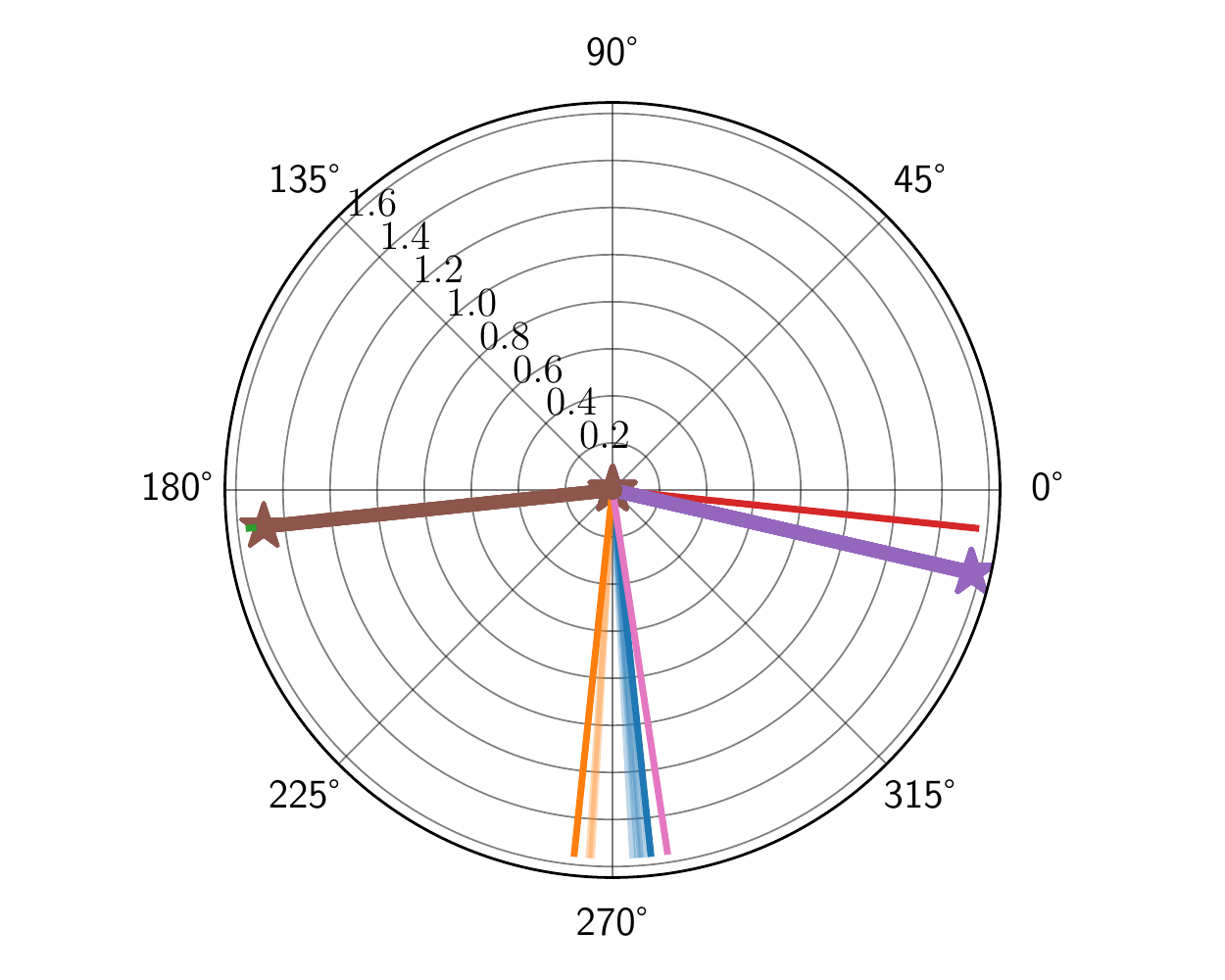}}
    % \hspace{-.5cm}
    \subfigure[\small $t=65000$]
    {\includegraphics[width=4cm]{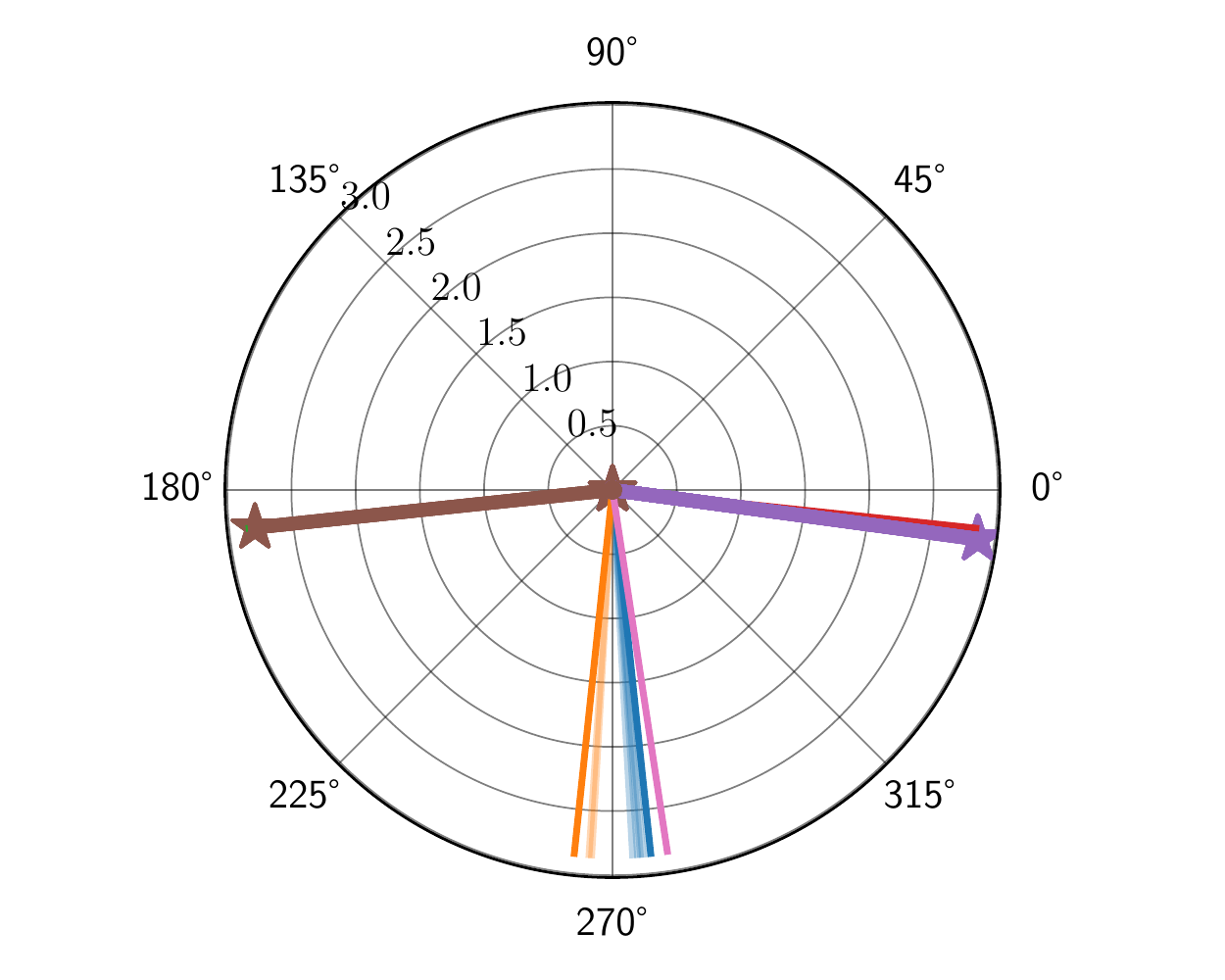}}
    % \hspace{-.5cm}
    \subfigure[\small $t=90000$]
    {\includegraphics[width=4cm]{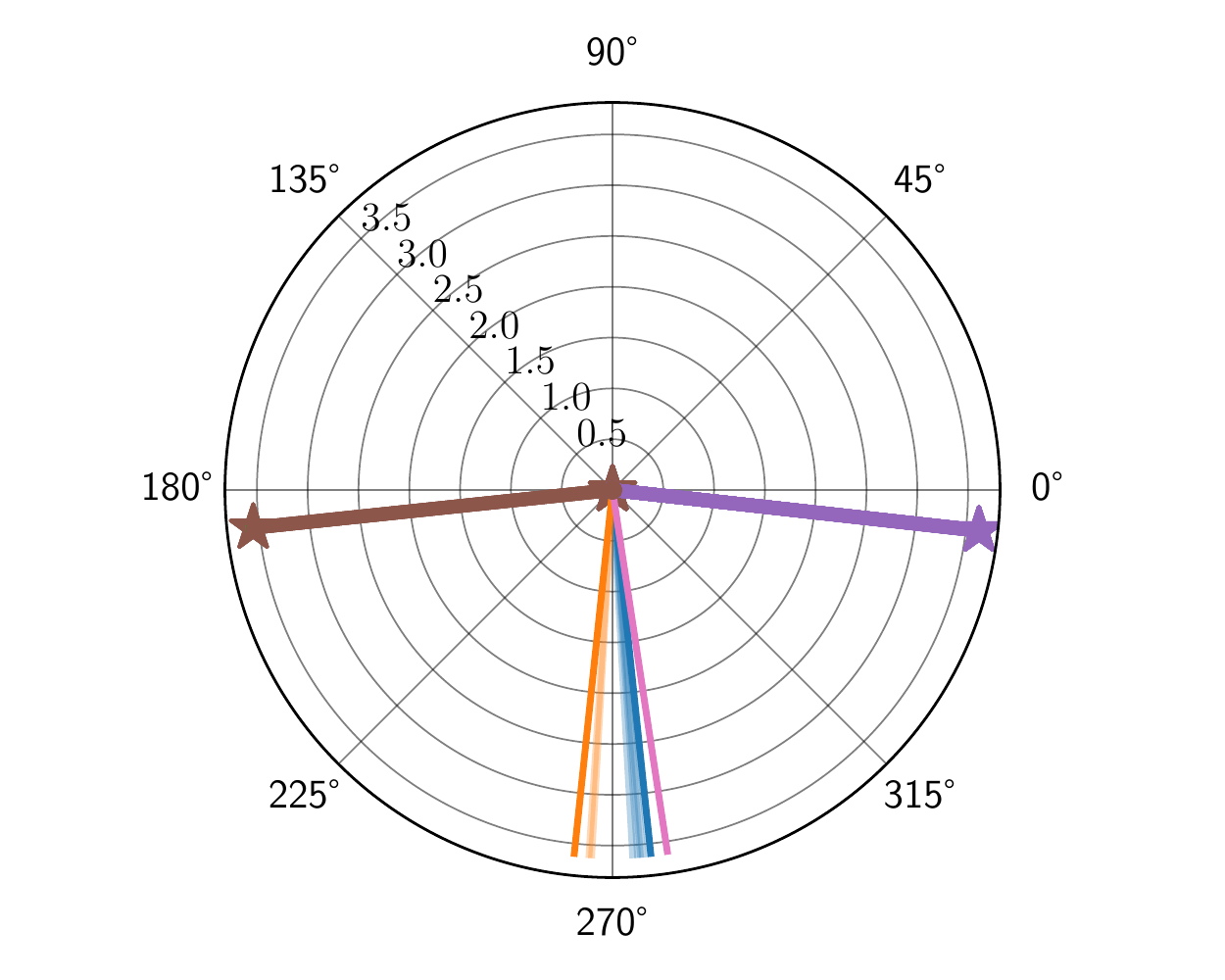}}
    % \hspace{-.5cm}
    \subfigure[\small $t=200000$]
    {\includegraphics[width=4cm]{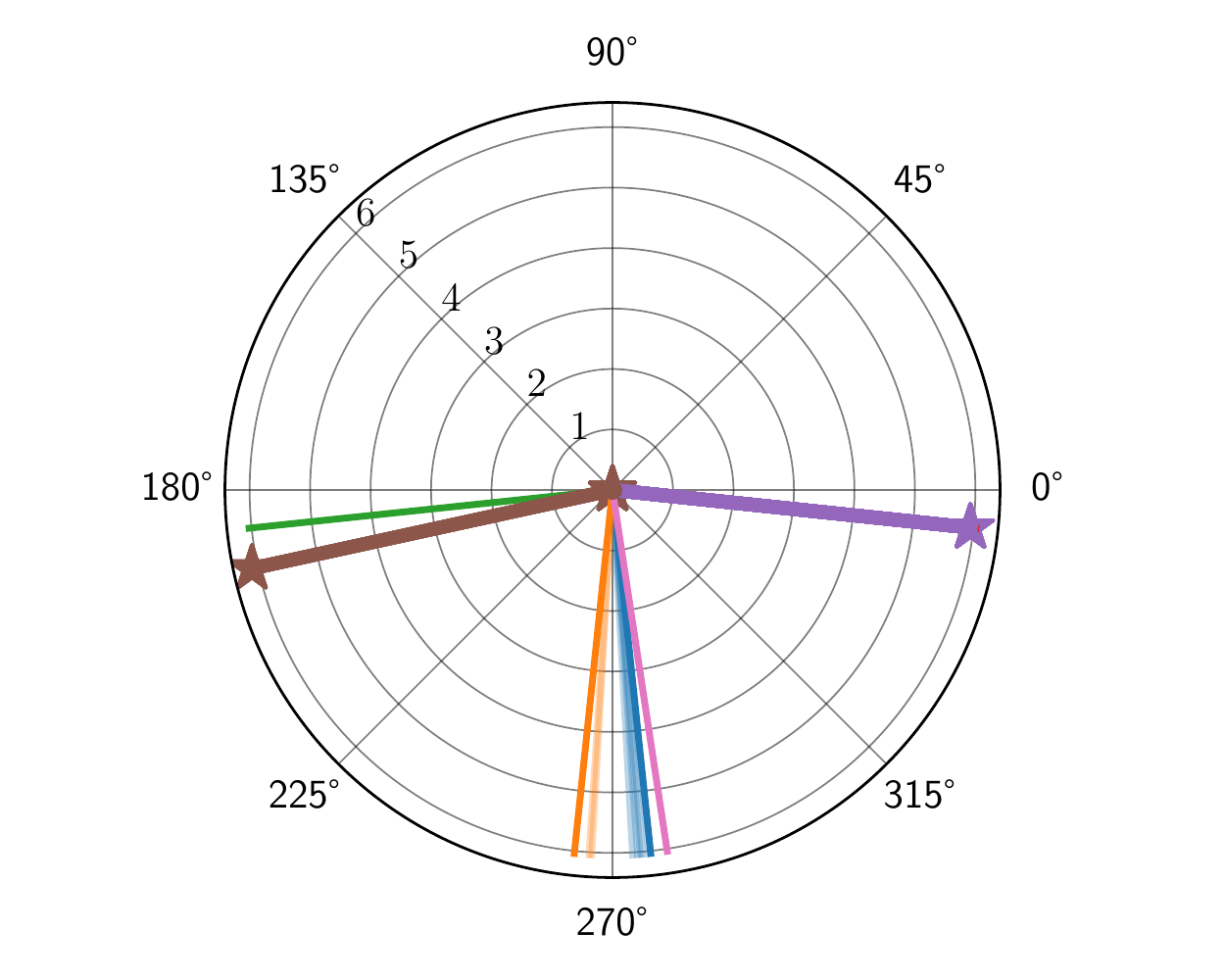}}
    % \hspace{-.5cm}
    \subfigure[\small $t=400000$]
    {\includegraphics[width=4cm]{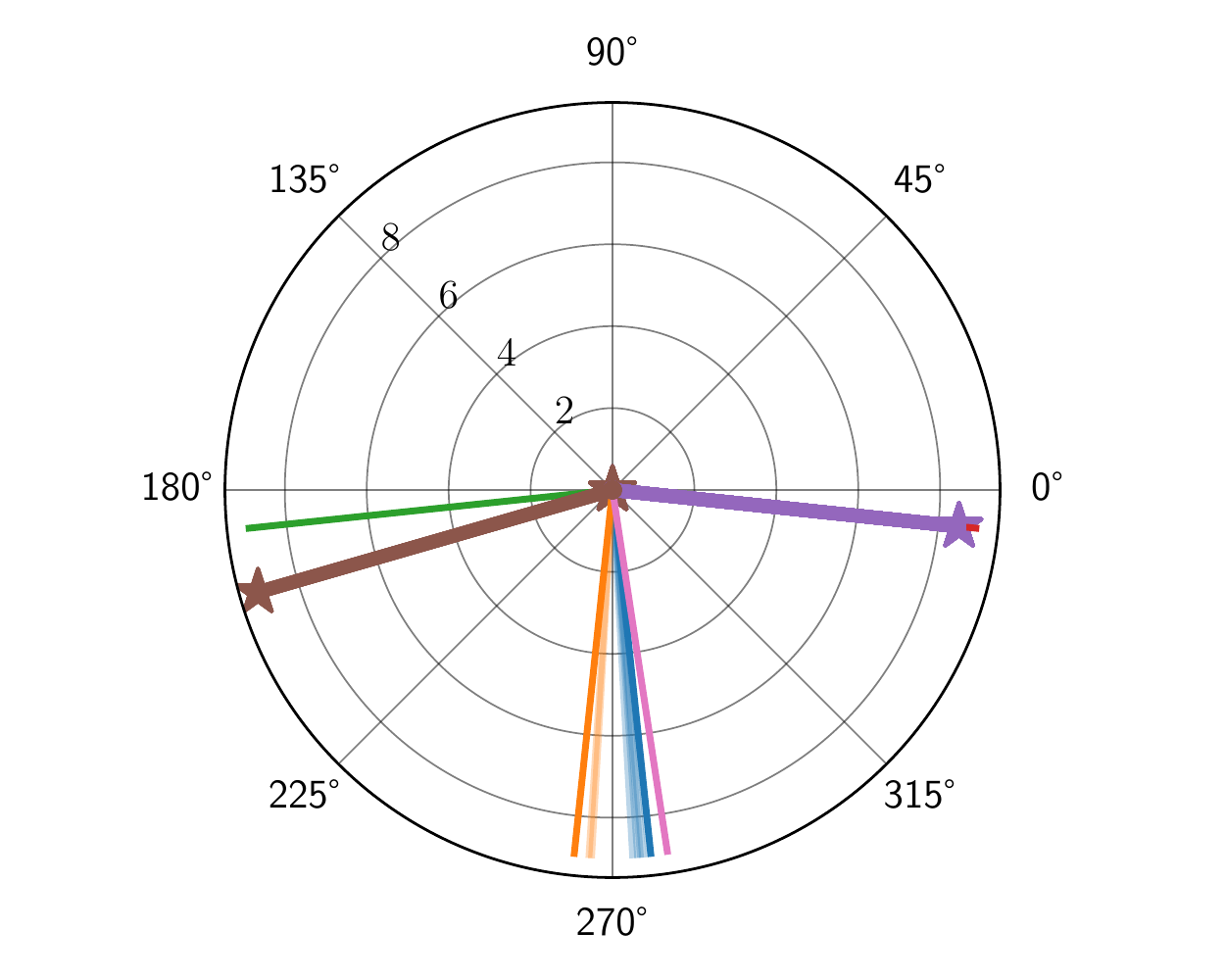}}
    \caption{ The experimental results for { noisy data} with $\Delta=\pi/15$, $n_+=12$, $n_-=3$. 
    { Fig (a)}: Some key data directions, including $\boldsymbol{x}_+$, $\boldsymbol{x}_+$+noise, $\boldsymbol{x}_-$, $\boldsymbol{x}_-$+noise, $\boldsymbol{\mu}$, $\boldsymbol{x}_+^\perp$, and $\boldsymbol{x}_-^\perp$. { Fig (b)}: The evolution of training accuracy. { Figs (c)$\sim$(j)}: the evolution of the projections of all neurons $\{\bb_k(t)\}_{k\in[m]}$ onto the $2$d subspace ${\rm span}\{\bx_+,\bx_-\}$ during training (from $t=0$ to $t=400000$). Each purple star represents a positive neuron $(k\in[m/2])$, while each brown star represents a negative neuron $(k\in[m]-[m/2])$. {\bf Four-phase dynamics}: from Fig (c) to (d) is Phase I; from Fig (d) to (g) is Phase II; from Fig (g) to (h) is Phase III; from Fig (h) to (j) is Phase IV. To compare these results with noiseless data, please refer to Figures~\ref{fig: data, loss, acc} and~\ref{fig: whole dynamics}.}
    \label{fig: noisy data dynamics}
   
\end{figure}

% \section{Proof of Section \ref{section: dynamics small lr}}

\newpage
\section{Proof Preparation}
\label{appendix: proof preparation}

{\bf Selection of initialization parameters.}

For the data satisfying Assumption~\ref{ass: data}, we consider the regime that $\Delta\ll1$ with $p\cos\Delta>1$.
During the entire proof, we select the initialization scale $\kappa_1,\kappa_2$ as follows:
\begin{equation}\label{equ: parameter selection GF}
\begin{gathered}
\kappa_2=\mathcal{O}(1),\quad\frac{\kappa_1}{\kappa_2}=\mathcal{O}\left(\Delta^8\right).
\end{gathered}
\end{equation}

{\bf Gradient Flow.} 
In general, for any $k\in[m]$, the GF dynamics of $\boldsymbol{b}_k(t)$ can be written as 
\begin{equation}\label{equ: dynamics}
\begin{aligned}
&\frac{\mathrm{d}\boldsymbol{b}_k(t)}{\mathrm{d} t}
\in\frac{\sgrad\cL(\btheta)}{\partial \bb_k}
=\frac{\mathrm{s}_k\kappa_2}{\sqrt{m}}\frac{1}{n}\sum_{i=1}^n e^{-y_if_i(t)}\partial^{\circ}\sigma(\left<\boldsymbol{b}_k(t),\bx_i\right>) y_i\bx_i
\\
=&\frac{\mathrm{s}_k\kappa_2}{\sqrt{m}}\Big(\frac{p}{1+p}e^{-f_+(t)}\sgrad\sigma(\left<\boldsymbol{b}_k(t),\bx_+\right>)\bx_+-\frac{1}{1+p} e^{f_-(t)}\sgrad\sigma(\left<\boldsymbol{b}_k(t),\bx_-\right>)\bx_-\Big),
\end{aligned}
\end{equation}

where $\sgrad$ is Clarke's subdifferential defined in Defnition~\ref{def: clark subdifferential}.
Notice that if $\cL$ is continuously differentiable at $\btheta$, then $\partial^{\circ}\cL(\btheta)=\{\nabla\cL(\btheta)\}$ is unique.

However, for discontinuous differentiable points of $\cL$, the differential inclusion flow $\frac{\rd\btheta}{\rd t}\in\partial^{\circ}\cL(\btheta)$ may not be unique. To study a more specific dynamics, we also utilize Definition~\ref{def: discontinuous system solution} to determine GF at some of such points, which overcomes non-uniqueness of GF trajectories to some extent. It is worth noting that Definition~\ref{def: discontinuous system solution} and Definition~\ref{def: clark subdifferential} are compatible and specifically, the dynamics defined in Definition~\ref{def: discontinuous system solution}(Case I, III) lie in the convex hull defined in Definition~\ref{def: clark subdifferential}.

\begin{remark}
    In~\citep{lyu2021gradient}, the non-branching starting point Assumption is employed to address the technical challenge of non-uniqueness in GF trajectories. By comparison, in this work, we do not need this assumption. We adopt Definition~\ref{def: discontinuous system solution} to uniquely determine the Gradient Flow trajectories theoretically near some discontinuous differential regions, such as ``Ridge'', ``Valley'', and ``Refraction edge'' discussed in Section I.2 in~\citep{lyu2021gradient}.
\end{remark}

Additionally, in the following sections, we may rewrite this dynamics accordingly, such as specific forms and dynamics decomposition.

{\bf Additional Notations.} As a similar description to $\sgn_k^+(\cdot)$ and $\sgn_k^-(\cdot)$, we also employ the following six manifolds to characterize activation patterns (by judging which manifold the neuron $\bw_k$ belongs to):
\begin{gather*}
    \mathcal{M}_{+}^+:=\{\boldsymbol{w}\in\mathbb{S}^{d-1}:\left<\boldsymbol{w},\boldsymbol{x}_+\right>>0\},
    \quad
    \mathcal{M}_-^+:=\{\boldsymbol{w}\in\mathbb{S}^{d-1}:\left<\boldsymbol{w},\boldsymbol{x}_-\right>>0\},
    \\
    \mathcal{M}_{+}^0:=\{\boldsymbol{w}\in\mathbb{S}^{d-1}:\left<\boldsymbol{w},\boldsymbol{x}_+\right>=0\},\quad
    \mathcal{M}_{-}^0:=\{\boldsymbol{w}\in\mathbb{S}^{d-1}:\left<\boldsymbol{w},\boldsymbol{x}_-\right>=0\},
    \\
    \mathcal{M}_{+}^-:=\{\boldsymbol{w}\in\mathbb{S}^{d-1}:\left<\boldsymbol{w},\boldsymbol{x}_+\right>\leq0\},\quad
    \mathcal{M}_-^-:=\{\boldsymbol{w}\in\mathbb{S}^{d-1}:\left<\boldsymbol{w},\boldsymbol{x}_-\right>\leq0\}.
\end{gather*}

As one of our interested manifolds, the border $\partial (\mathcal{M}_+^+\cap\mathcal{M}_-^+)$ can be divided into
\begin{align*}
\partial(\mathcal{M}_+^+\cap\mathcal{M}_-^+)
=&\{
\boldsymbol{w}\in\mathbb{S}^{d-1}:\left<\boldsymbol{w},\boldsymbol{x}_+\right>=0\ \text{or}\ \left<\boldsymbol{w},\boldsymbol{x}_-\right>=0
\}
\\=&\{
\boldsymbol{w}\in\mathbb{S}^{d-1}:\left<\boldsymbol{w},\boldsymbol{x}_+\right>=0, \left<\boldsymbol{w},\boldsymbol{x}_-\right>>0
\}\bigcup\{
\boldsymbol{w}\in\mathbb{S}^{d-1}:\left<\boldsymbol{w},\boldsymbol{x}_+\right>>0, \left<\boldsymbol{w},\boldsymbol{x}_-\right>=0
\}
\\&\bigcup\{
\boldsymbol{w}\in\mathbb{S}^{d-1}:\left<\boldsymbol{w},\boldsymbol{x}_+\right>=0, \left<\boldsymbol{w},\boldsymbol{x}_-\right>=0
\}
\\=&\big(\mathcal{M}_+^0\cap\mathcal{M}_{-}^+\big)\cup\big(\mathcal{M}_-^0\cap\mathcal{M}_{+}^+\big)\cup\big(\mathcal{M}_+^0\cap\mathcal{M}_{-}^0\big).
\end{align*}

Furthermore, we also utilize the following notations:
\begin{gather*}
    \mathcal{P}_{+}^+:=\{\boldsymbol{b}\in\mathbb{R}^{d}:\left<\boldsymbol{b},\boldsymbol{x}_+\right>>0\},
    \quad
    \mathcal{P}_-^+:=\{\boldsymbol{b}\in\mathbb{R}^{d}:\left<\boldsymbol{b},\boldsymbol{x}_-\right>>0\},
    \\
    \mathcal{P}_{+}^0:=\{\boldsymbol{b}\in\mathbb{R}^{d}:\left<\boldsymbol{b},\boldsymbol{x}_+\right>=0\},\quad
    \mathcal{P}_{-}^0:=\{\boldsymbol{b}\in\mathbb{R}^{d-1}:\left<\boldsymbol{b},\boldsymbol{x}_-\right>=0\},
    \\
    \mathcal{P}_{+}^-:=\{\boldsymbol{b}\in\mathbb{R}^{d}:\left<\boldsymbol{b},\boldsymbol{x}_+\right>\leq0\},\quad
    \mathcal{P}_-^-:=\{\boldsymbol{b}\in\mathbb{R}^{d}:\left<\boldsymbol{b},\boldsymbol{x}_-\right>\leq0\}.
\end{gather*}

Notice that for the direction of a neuron $\bb_k\ne\bzero$, using $\cP_+^+,\cP_+^0,\cP_+^-,\cP_-^+,\cP_-^0,\cP_-^-$ to describe $\bb_k$ is equivalent to using $\cM_+^+,\cM_+^0,\cM_+^-,\cM_-^+,\cM_-^0,\cM_-^-$ to describe $\bw_k$.

\begin{lemma}[Dead neurons keep dead]\label{lemma: GF dead neurons keep dead}
For the $k$-th neuron, if there exists a $t_0\geq0$, s.t. $\boldsymbol{w}_k(t_0)\in\mathcal{M}_+^-\cap\mathcal{M}_-^-$, then it dies and remains unchanged during the remaining training: $\boldsymbol{w}_k(t)\in\mathcal{M}_+^-\cap\mathcal{M}_-^-$ and $\bb_k(t)\equiv\bb_k(t_0)$ for any $t\geq t_0$.
\end{lemma}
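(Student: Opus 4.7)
The plan is to exhibit the constant trajectory $\bb_k(t)\equiv\bb_k(t_0)$ as a valid solution of the subgradient inclusion \eqref{equ: dynamics} and then appeal to the paper's solution convention to conclude it is the unique one. First, observe the elementary fact that $0\in\sgrad\sigma(z)$ for every $z\le 0$, since $\sgrad\sigma(z)=\{0\}$ when $z<0$ and $\sgrad\sigma(0)=[0,1]$ by the definition of the Clarke subdifferential. Under the hypothesis $\bw_k(t_0)\in\cM_+^-\cap\cM_-^-$ we have $\langle\bb_k(t_0),\bx_+\rangle\le 0$ and $\langle\bb_k(t_0),\bx_-\rangle\le 0$, so on the right-hand side of \eqref{equ: dynamics} one may simultaneously select $0$ from $\sgrad\sigma(\langle\bb_k(t_0),\bx_+\rangle)$ and from $\sgrad\sigma(\langle\bb_k(t_0),\bx_-\rangle)$, and this zero element lies in the admissible set-valued RHS.

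Consequently, set the candidate $\bb_k(t)\equiv\bb_k(t_0)$ on $[t_0,\infty)$. At every such $t$ the inner products $\langle\bb_k(t),\bx_\pm\rangle=\langle\bb_k(t_0),\bx_\pm\rangle\le 0$ are unchanged, so the same $0$-selection is admissible at every time, and $\mathrm{d}\bb_k/\mathrm{d}t=0$ lies in the set-valued RHS of \eqref{equ: dynamics} along the whole trajectory. Thus the stationary trajectory is a bona fide Clarke subgradient solution, and by construction $\bw_k(t)\equiv\bw_k(t_0)\in\cM_+^-\cap\cM_-^-$. For uniqueness, I would split into two cases. If $\bb_k(t_0)$ lies strictly inside $\cP_+^-\cap\cP_-^-$, i.e., both inner products are strictly negative, then $\cL$ is continuously differentiable at $\btheta(t_0)$ with $-\nabla_{\bb_k}\cL\equiv 0$ in an open neighborhood of $\bb_k(t_0)$, so standard ODE uniqueness forces $\bb_k(t)\equiv\bb_k(t_0)$ on some maximal interval which, by re-iterating the argument at every time, must be $[t_0,\infty)$. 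If instead $\bb_k(t_0)$ is on the boundary (at least one inner product vanishes), the Clarke subdifferential is genuinely set-valued and uniqueness does not follow from local smoothness; in this case I invoke the Filippov-type convention of Definition~\ref{def: discontinuous system solution} (Appendix~\ref{section: discontinuous system solution}). On the dead side of the boundary the vector field is identically zero, so the convex-hull selection tangent to the boundary is $0$, which matches the stationary trajectory constructed above.

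The boundary case is the main obstacle. The subgradient inclusion alone admits, a priori, multiple selections at points where $\langle\bb_k,\bx_+\rangle=0$ or $\langle\bb_k,\bx_-\rangle=0$, and the stronger Filippov-type definition in Appendix~\ref{section: discontinuous system solution} is essential to rule out spurious branches that would leave $\cM_+^-\cap\cM_-^-$ (e.g.\ a positive neuron could in principle be pushed into $\cP_+^+$ if one chose the wrong element of $[0,1]$). Once that uniqueness is granted, the verification above yields both $\bb_k(t)\equiv\bb_k(t_0)$ and $\bw_k(t)\in\cM_+^-\cap\cM_-^-$ for all $t\ge t_0$, completing the proof.
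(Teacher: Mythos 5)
Your overall plan---exhibit the constant trajectory as an admissible Clarke subgradient solution, then argue it is the one selected by the paper's convention---is reasonable, and your handling of the strict-interior case is correct. The paper itself offers no argument (it says only ``a straightforward calculation''), so you are being more careful than the source. But the boundary step has a real gap: the claim that ``the convex-hull selection tangent to the boundary is $0$'' appeals to the sliding mode of Definition~\ref{def: discontinuous system solution} (Case I), which requires $F_N^->0$ \emph{strictly}. On the dead face the interior field vanishes identically, so $F_N^-=0$, and Case I does not apply. The operative cases are II or IV, both of which have the trajectory \emph{cross} the surface rather than slide on it, and which way it crosses depends on the sign $\mathrm{s}_k$ and on which face is active. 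Concretely, for a positive neuron ($\mathrm{s}_k=1$) sitting exactly on $\cP_+^0\cap\cP_-^-$ with $\langle\bb_k,\bx_-\rangle<0$, one has $F_N^-=0$ and $F_N^+=\frac{\kappa_2 p\,e^{-f_+}}{\sqrt{m}(1+p)}>0$, which is Case II: the neuron is pushed \emph{into} $\cP_+^+$ instantly, contradicting the lemma's conclusion; symmetrically, a negative neuron on $\cP_-^0\cap\cP_+^-$ escapes. So the Filippov convention does not ``rule out spurious branches'' in these configurations---it mandates the escape---and the literal statement of the lemma (with the weak inequalities $\leq0$) is slightly too strong to prove under that convention.

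The lemma is nonetheless correct as the paper actually uses it, and your proof would go through if you restricted to those configurations: every invocation either has the neuron strictly inside $\cP_+^-\cap\cP_-^-$ (Lemmas~\ref{lemma: GF Phase I positive dead}, \ref{lemma: GF Phase I negative dead}), where the field is exactly zero in an open neighborhood and your ODE-uniqueness argument applies; or has it arriving on a benign face---a positive neuron at $\cP_-^0$ with $\langle\bb_k,\bx_+\rangle<0$ (Lemma~\ref{lemma: GF Phase I positive S-}), or a negative neuron at $\cP_+^0$ with $\langle\bb_k,\bx_-\rangle<0$ (Lemmas~\ref{lemma: GF Phase I negative S+ S-}, \ref{lemma: GF Phase I negative S+})---for which $F_N^+<0$, Case IV pushes the trajectory into the strict interior, and the zero-field argument then takes over. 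A complete proof should therefore replace the ``convex-hull tangent'' sentence with a short case analysis on $\mathrm{s}_k$ and the active face, or else weaken the hypothesis of the lemma to exclude the two escaping faces.
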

\begin{proof}[Proof of Lemma \ref{lemma: GF dead neurons keep dead}] A straightforward calculation.
\end{proof}

The above fact is a basic fact in our setting, which illustrates that if the neuron $\bb_k$ is deactivated for both data $\bx_+$ and $\bx_-$ at some time, then it remains ``dead'' forever. 

It is worth mentioning that if the neuron $\bb_k$ is deactivated on $\bx_+$ but activated on $\bx_-$ at some time, it can still reactivate on $\bx_+$ later, which is one of the important reasons why our ReLU optimization dynamics are complicated.

\newpage
\section{Proofs of Optimization Dynamics in Phase I}\label{appendix: proof: Phase I}

In this section, we conduct a detailed analysis of the training dynamics of each neuron in Phase I. 
The main proof idea is to decompose neurons' dynamics into tangential and radial dynamics. 
For small initialization, in Phase I, the radial increasing of neurons is much slower than their tangential velocity, which can result in condensation. 
However, the main challenges arise from the initial direction's randomness and ReLU's discontinuous derivative, leading to eight categories of neuron dynamics.
Moreover, it is also nontrivial and requires  meticulous analysis to estimate the number of two classes of living neurons at the end of Phase I.

We define the time
\begin{equation}
    T_{\rm I}:=10\sqrt{\frac{\kappa_1}{\kappa_2}},
\end{equation}
and call $t\in[0,T_{\rm I}]$ ``Phase I''.

Recall that the selection~\eqref{equ: parameter selection GF} about initialization parameters can guarantee the {\bf whole} four-phase optimization dynamics.
Nevertheless, when we focus on Phase I, $\kappa_2=\cO(1)$ and $\kappa_1/\kappa_2=\cO(1)$ suffice to ensure the dynamics in Phase I. Specifically, during the proof of Phase I, we can use the following selection~\eqref{equ: parameter selection GF Phase I} on $\kappa_1,\kappa_2$, which is much weaker than~\eqref{equ: parameter selection GF}:

\begin{equation}\label{equ: parameter selection GF Phase I}
\begin{gathered}
\kappa_2=\cO(1),\quad \frac{\kappa_1}{\kappa_2}=\cO(1).
\end{gathered}
\end{equation}

% \begin{equation}\label{equ: parameter selection GF Phase I}
% \begin{gathered}
% \kappa_2\leq\log(1.1),\quad
% \sqrt{\kappa_1\kappa_2}\leq\min\Big\{\frac{p\cos\Delta-1}{5.5(p+1)},\frac{1}{100}\Big\},
% \\ 
% \sqrt{\frac{\kappa_1}{\kappa_2}}\leq\min\Big\{\frac{1}{10},\frac{5}{(1+p)\sin\Delta},\frac{\Delta}{5(1+p)}\Big\}.
% \end{gathered}
% \end{equation}

For simplicity, we assume $\Delta\leq\frac{1}{5}$.
And for convenience, we assume $p\geq5$. 
It is worth mentioning that our proof approach also applies for $p=2,3,4$, with at most one absolute constant difference.

Prepared for the analysis in Phase I, we decompose the dynamics of $\boldsymbol{b}_k(t)$ into the radial movement $\rho_k(t)\in\mathbb{R}$ and the tangential movement $\bw_k(t)\in\mathbb{S}^{d-1}$ satisfied to $\boldsymbol{b}_k(t)=\rho_k(t)\bw_k(t)$. 

\begin{lemma}[Dynamics decomposition]\label{lemma: dynamics decomposition}
For any $k\in[m]$, the dynamics of $\boldsymbol{b}_k(t)$ can be decomposed into the radial movement $\rho_k(t)\in\mathbb{R}$ and the tangential movement $\bw_k(t)\in\mathbb{S}^{d-1}$: 
\begin{equation}\label{equ: dynamics rewrite}
    \begin{aligned}
        \frac{\mathrm{d}\bw_k(t)}{\mathrm{d} t}&\in\frac{\mathrm{s}_k\kappa_2}{\sqrt{m}\rho_k(t)}\Big(\bF_k(t)-\left<\bF_k(t),\bw_k(t)\right>\bw_k(t)\Big),
        \\
        \frac{\mathrm{d} \rho_k(t)}{\mathrm{d} t}&\in\frac{\mathrm{s}_k\kappa_2}{\sqrt{m}}\left<\bF_k(t),\bw_k(t)\right>,
    \end{aligned}
\end{equation}
where $\rho_k(t)=\left\|\boldsymbol{b}_k(t)\right\|$, $\bw_k(t)=\boldsymbol{b}_k(t)/\left\|\boldsymbol{b}_k(t)\right\|$ and 
\begin{equation}\label{equ: definition F}
    \bF_k(t)=\frac{1}{n}\sum_{i=1}^n e^{-y_if_i(t)}\sgrad\sigma\bracket{\left<\bw_k(t),\bx_i\right>}y_i\bx_i.
\end{equation}
\end{lemma}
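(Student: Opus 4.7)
The claim is a standard ``polar decomposition'' of a vector ODE into its radial and tangential parts, specialized here to the GF on $\bb_k$. The plan is to first rewrite the dynamics in terms of $\bw_k$ alone (using positive $1$-homogeneity of ReLU), then derive $\rd\rho_k/\rd t$ from the identity $\rho_k^2 = \norm{\bb_k}^2$, and finally get $\rd\bw_k/\rd t$ via the Leibniz rule applied to $\bw_k = \bb_k/\rho_k$, substituting in the already-computed $\rd\rho_k/\rd t$.

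First I would observe that since $\sigma$ is positively $1$-homogeneous, its Clarke subdifferential is positively $0$-homogeneous, i.e.\ for $\rho_k>0$ one has
\begin{equation*}
\sgrad\sigma(\langle \bb_k,\bx_i\rangle)
= \sgrad\sigma(\rho_k\langle \bw_k,\bx_i\rangle)
= \sgrad\sigma(\langle \bw_k,\bx_i\rangle).
\end{equation*}
Plugging this into~\eqref{equ: dynamics} lets me rewrite the GF inclusion simply as
$\rd\bb_k/\rd t \in (\mathrm{s}_k\kappa_2/\sqrt{m})\,\bF_k(t)$
with $\bF_k$ as defined in~\eqref{equ: definition F}. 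This step eliminates the explicit dependence on $\rho_k$ inside the ReLU patterns and is what makes the subsequent radial/tangential split clean.

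Next I would compute the radial part. Differentiating $\rho_k^2=\langle \bb_k,\bb_k\rangle$ gives $2\rho_k\,\rd\rho_k/\rd t = 2\langle \bb_k,\rd\bb_k/\rd t\rangle$, and since $\bb_k=\rho_k\bw_k$ this simplifies (for $\rho_k>0$) to
\begin{equation*}
\frac{\rd\rho_k(t)}{\rd t} \in \frac{\mathrm{s}_k\kappa_2}{\sqrt{m}}\langle \bF_k(t),\bw_k(t)\rangle,
\end{equation*}
which is exactly the second line of~\eqref{equ: dynamics rewrite}. For the tangential part, I apply the quotient rule to $\bw_k=\bb_k/\rho_k$:
\begin{equation*}
\frac{\rd\bw_k}{\rd t} = \frac{1}{\rho_k}\frac{\rd\bb_k}{\rd t} - \frac{\bb_k}{\rho_k^2}\frac{\rd\rho_k}{\rd t}
= \frac{1}{\rho_k}\frac{\rd\bb_k}{\rd t} - \frac{\bw_k}{\rho_k}\frac{\rd\rho_k}{\rd t}.
\end{equation*}
Substituting the two expressions above and factoring out $\mathrm{s}_k\kappa_2/(\sqrt{m}\rho_k)$ yields the first line of~\eqref{equ: dynamics rewrite}.

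The main obstacle is technical rather than conceptual: all of the computations above involve the Clarke subdifferential, so I must justify that the chain/quotient rule for $\rd\rho_k/\rd t$ and $\rd\bw_k/\rd t$ genuinely commute with the set-valued inclusion. For the radial identity this is clean because $\rho_k^2$ is smooth in $\bb_k$, so composing with the absolutely continuous trajectory $\bb_k(t)$ gives a valid selection argument almost everywhere. For the tangential identity the same selection argument works provided $\rho_k(t)>0$; I would therefore note at the start of the proof that the decomposition is only claimed on the open set $\{t:\rho_k(t)>0\}$, which covers the regime of interest throughout Phase I (and beyond) once one verifies $\rho_k(t)>0$ a.e., an easy consequence of the initialization $\rho_k(0)=\kappa_1/\sqrt{m}>0$ and the fact that the GF cannot send a nonzero neuron to the origin in finite time under the bounded right-hand side~\eqref{equ: dynamics}.
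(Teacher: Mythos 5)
Your proof is correct and follows essentially the same route as the paper's: compute $\rd\rho_k/\rd t$ by differentiating $\norm{\bb_k}^2$ and $\rd\bw_k/\rd t$ by the quotient rule applied to $\bb_k/\rho_k$. You additionally make explicit the $0$-homogeneity of $\sgrad\sigma$ (which the paper uses silently when passing from $\langle\bb_k,\bx_i\rangle$ to $\langle\bw_k,\bx_i\rangle$ inside the subdifferential) and you flag the $\rho_k>0$ requirement; both are reasonable clarifications but do not change the argument.
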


\begin{proof}[Proof of Lemma \ref{lemma: dynamics decomposition}]\ \\
From the dynamics of $\boldsymbol{b}_k(t)$:
\begin{align*}
    \frac{\rd\bb_k(t)}{\rd t}\in\frac{\mathrm{s}_k\kappa_2}{\sqrt{m}}\bF_k(t)=\frac{\mathrm{s}_k\kappa_2}{\sqrt{m}}\frac{1}{n}\sum_{i=1}^n e^{-y_if_i(t)}\sgrad\sigma\bracket{\left<\bw_k(t),\bx_i\right>}y_i\bx_i,
\end{align*}
we have
\begin{align*}
    &\frac{\mathrm{d} \rho_k(t)}{\mathrm{d} t}=\frac{1}{2\left\|\boldsymbol{b}_k(t)\right\|}\frac{\mathrm{d}\left\|\boldsymbol{b}_k(t)\right\|^2}{\mathrm{d}t}=\frac{1}{\left\|\boldsymbol{b}_k(t)\right\|}\left<\boldsymbol{b}_k(t),\frac{\mathrm{d}\boldsymbol{b}_k(t)}{\mathrm{d}t}\right>
    \\\in&
    \frac{1}{\left\|\boldsymbol{b}_k(t)\right\|}\left<\boldsymbol{b}_k(t),\frac{\mathrm{s}_k\kappa_2}{\sqrt{m}}\bF_k(t)\right>
    =\frac{\mathrm{s}_k\kappa_2}{\sqrt{m}}\left<\bF_k(t),\bw_k(t)\right>,
\end{align*}
\begin{align*}
    &\frac{\mathrm{d} \bw_k(t)}{\mathrm{d} t}
    =\frac{\left\|\boldsymbol{b}_k(t)\right\|\frac{\mathrm{d}\boldsymbol{b}_k(t)}{\mathrm{d}t}-\frac{\mathrm{d}\left\|\boldsymbol{b}_k(t)\right\|}{\mathrm{d}t}\boldsymbol{b}_k(t)}{\left\|\boldsymbol{b}_k(t)\right\|^2}=\frac{1}{\rho_k(t)}\left(\frac{\mathrm{d}\boldsymbol{b}_k(t)}{\mathrm{d}t}-\frac{\mathrm{d}\rho_k(t)}{\mathrm{d}t}\bw_k(t)\right)
    \\\in&\frac{1}{\rho_k(t)}\Big(\frac{\mathrm{s}_k\kappa_2}{\sqrt{m}}\bF_k(t)-\frac{\mathrm{s}_k\kappa_2}{\sqrt{m}}\left<\bF_k(t),\bw_k(t)\right>\bw_k(t)\Big)
    =\frac{\mathrm{s}_k\kappa_2}{\sqrt{m}\rho_k(t)}\Big(\bF_k(t)-\left<\bF_k(t),\bw_k(t)\right>\bw_k(t)\Big).
\end{align*}

\end{proof}

Prepared for the analysis of the neurons' dynamics, we establish a rough estimate about the norm and prediction growth of each neuron in Phase I, and we will improve it later.

\begin{lemma}[A Rough Estimate of Norm and Prediction in Phase I]\label{lemma: GF Phase I norm estimate}\ \\
For any $t\leq T_{\rm I}$, $k\in[m]$, and $i\in[n]$, we have the following estimates:
\begin{gather*}
    \rho_k(t)\leq\frac{\kappa_1+1.1\kappa_2 t}{\sqrt{m}},
    \\
    |f_i(t)|
    \leq\sqrt{\kappa_1\kappa_2},
    \\
    \left|e^{-y_i f_i(t)}-1\right|\leq1.1\sqrt{\kappa_1\kappa_2}.
\end{gather*}
\end{lemma}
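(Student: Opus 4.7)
The plan is to prove the three estimates simultaneously via a bootstrap argument that exploits the feedback loop among them: a bound on $|f_i|$ controls $\|\bF_k\|$, which controls $|\dot\rho_k|$, which in turn controls $|\dot f_i|$, while the third estimate will follow from the second by a Taylor expansion of $e^{-y_if_i}-1$.

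First I would define $T^{*}$ to be the supremum of times $t\leq T_{\rm I}$ for which $|f_i(s)|\leq\sqrt{\kappa_1\kappa_2}$ holds for every $s\in[0,t]$ and $i\in[n]$. At $t=0$, the crude estimate $|f_i(0)|\leq\sum_{k}|a_k|\,\rho_k(0)=m\cdot(\kappa_2/\sqrt m)(\kappa_1/\sqrt m)=\kappa_1\kappa_2$ is strictly below $\sqrt{\kappa_1\kappa_2}$ under \eqref{equ: parameter selection GF Phase I} (since $\kappa_1\kappa_2\leq 1$), so $T^{*}>0$ by continuity. Under the bootstrap hypothesis on $[0,T^{*}]$, any selection from $\bF_k(t)$ satisfies
\[
\|\bF_k(t)\|\leq\tfrac{1}{n}\sum_{i=1}^{n}e^{-y_if_i(t)}\leq e^{\sqrt{\kappa_1\kappa_2}}\leq 1.1,
\]
using $\sgrad\sigma\subseteq[0,1]$, $\|\bx_i\|=1$, and the fact that \eqref{equ: parameter selection GF Phase I} makes $\sqrt{\kappa_1\kappa_2}$ small enough for this last step. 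Plugging into the radial part of Lemma~\ref{lemma: dynamics decomposition} gives $|\dot{\rho}_k(t)|\leq 1.1\kappa_2/\sqrt m$ almost everywhere; since the trajectory is absolutely continuous under the Filippov framework, integrating yields the first estimate $\rho_k(t)\leq(\kappa_1+1.1\kappa_2 t)/\sqrt m$ on $[0,T^{*}]$.

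Next, I would differentiate $f_i(t)=\sum_k a_k\sigma(\langle\bb_k,\bx_i\rangle)$ to get $|\dot{f}_i(t)|\leq\sum_k|a_k|\,\|\dot{\bb}_k(t)\|\leq m\cdot(\kappa_2/\sqrt m)^2\cdot 1.1=1.1\kappa_2^2$ almost everywhere, using the bound $\|\dot{\bb}_k\|\leq(\kappa_2/\sqrt m)\|\bF_k\|\leq 1.1\kappa_2/\sqrt m$ derived above. Integrating from $t=0$ and combining with $|f_i(0)|\leq\kappa_1\kappa_2$ gives $|f_i(t)|\leq\kappa_1\kappa_2+1.1\kappa_2^{2}t$, which at $t\leq T_{\rm I}=10\sqrt{\kappa_1/\kappa_2}$ is bounded by $\sqrt{\kappa_1\kappa_2}\bigl(\sqrt{\kappa_1\kappa_2}+11\kappa_2\bigr)$. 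Under \eqref{equ: parameter selection GF Phase I} this factor is strictly below $1$, so $|f_i(T^{*})|<\sqrt{\kappa_1\kappa_2}$; by continuity this prevents $T^{*}<T_{\rm I}$, and hence $T^{*}=T_{\rm I}$, establishing the first two estimates on the full interval.

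The third estimate is then an immediate corollary: since $|y_if_i(t)|\leq\sqrt{\kappa_1\kappa_2}$, the standard bound $|e^{z}-1|\leq|z|e^{|z|}$ together with $e^{\sqrt{\kappa_1\kappa_2}}\leq 1.1$ gives $|e^{-y_if_i(t)}-1|\leq 1.1\sqrt{\kappa_1\kappa_2}$. The main subtlety — rather than a genuine obstacle — is the bookkeeping of numerical constants: both the Taylor step $e^{\sqrt{\kappa_1\kappa_2}}\leq 1.1$ and the closing factor $\sqrt{\kappa_1\kappa_2}+11\kappa_2<1$ require $\kappa_2$ (and hence $\sqrt{\kappa_1\kappa_2}$) to lie below a specific absolute threshold, which is what the hidden constants in $\kappa_2=\cO(1)$ in \eqref{equ: parameter selection GF Phase I} are meant to provide.
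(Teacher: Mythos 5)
Your proposal is correct and follows essentially the same bootstrap argument as the paper: define the hitting time at which $\max_i|f_i|$ would first exceed $\sqrt{\kappa_1\kappa_2}$, use that hypothesis to bound $\|\bF_k\|\le1.1$ and hence $\rho_k(t)\le(\kappa_1+1.1\kappa_2 t)/\sqrt m$, propagate this to $|f_i(t)|\le\kappa_1\kappa_2+1.1\kappa_2^2 t<\sqrt{\kappa_1\kappa_2}$ on $[0,T_{\rm I}]$, and close by continuity, finishing with the mean-value bound for $|e^{-y_if_i}-1|$. The only cosmetic difference is that you differentiate $f_i$ and integrate, while the paper bounds $|f_i(t)|$ directly by $\sum_k|a_k|\rho_k(t)$; both yield the identical inequality.
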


\begin{proof}[Proof of Lemma \ref{lemma: GF Phase I norm estimate}]\ \\
First, we define the hitting time
\[
T_{\sqrt{\kappa_1\kappa_2}}:=\inf\Big\{t>0:\max\limits_{i\in[n]}\left|f_i(t)\right|>\sqrt{\kappa_1\kappa_2}\Big\}.
\]
From $|f_i(0)|\leq m \frac{\kappa_2}{\sqrt{m}}\frac{\kappa_1}{\sqrt{m}}=\kappa_1\kappa_2<\sqrt{\kappa_1\kappa_2}$ and the continuity of $f_i(\cdot)$, we know $T_{\sqrt{\kappa_1\kappa_2}}>0$.
Then we will prove $T_{\rm I}=10\sqrt{\frac{\kappa_1}{\kappa_2}}\leq T_{\sqrt{\kappa_1\kappa_2}}$.

For any $k\in[m]$, $i\in[n]$, and $t\leq T_{\sqrt{\kappa_1\kappa_2}}$, we have the following estimates.

Recalling the definition of $\bF_k(t)$ \eqref{equ: definition F}, we have
\begin{align*}
    \left\|\bF_k(t)\right\|
    =&\left\|\frac{1}{n}\sum_{i=1}^n e^{-y_if_i(t)}\sgrad\sigma\bracket{\left<\bw_k(t),\bx_i\right>} y_i\bx _i\right|\leq\max_{i\in[n]}\left|e^{-y_i f_i(t)}\right\|
    \\\leq&e^{\max\limits_{i\in[n]}\left|f_i(t)\right|}\leq e^{\sqrt{\kappa_1\kappa_2}}\leq 1.1.
\end{align*}

Recalling the dynamics \eqref{equ: dynamics rewrite}, for any $t\leq T_{\sqrt{\kappa_1\kappa_2}}$,
\begin{align*}
    % -\frac{1.1\kappa_2}{\sqrt{m}}\leq-\frac{\kappa_2}{\sqrt{m}}\left\|\bF_k(t)\right\| \leq-\frac{\kappa_2}{\sqrt{m}}\left\|\left<\bF_k(t),\bw_k(t)\right>\right\|\leq
    \frac{\mathrm{d} \rho_k(t)}{\mathrm{d} t}\leq \frac{\kappa_2}{\sqrt{m}}\left\|\left<\bF_k(t),\bw_k(t)\right>\right\|\leq \frac{\kappa_2}{\sqrt{m}}\left\|\bF_k(t)\right\|\leq\frac{1.1\kappa_2}{\sqrt{m}}.
\end{align*}

Then combining $\rho_k(0)=\frac{\kappa_1}{\sqrt{m}}$, for any $t\leq T_{\sqrt{\kappa_1\kappa_2}}$,
\begin{equation}\label{equ of proof: norm rough, lemma: GF Phase I norm estimate}
\begin{gathered}
    \rho_k(t)\leq\frac{\kappa_1+1.1\kappa_2 t}{\sqrt{m}},
    \\
    |f_i(t)|
    \leq\sum_{k=1}^m |a_k|\rho_k(t)
    \leq m\frac{\kappa_2}{\sqrt{m}}\frac{\kappa_1+1.1\kappa_2 t}{\sqrt{m}}
    \leq\kappa_2\big(\kappa_1+1.1\kappa_2 t\big).
\end{gathered}
\end{equation}
So for any $t\leq T_{\rm I}$, we have
\begin{equation}\label{equ of proof: prediction rough, lemma: GF Phase I norm estimate}
    |f_i(t)|
    \leq\kappa_2\big(\kappa_1+1.1\kappa_2 T_{\rm I}\big)
    \leq\kappa_2\Big(\kappa_1+11\sqrt{\kappa_1\kappa_2}\Big)
    \leq12\kappa_2\sqrt{\kappa_1\kappa_2}
    \leq\sqrt{\kappa_1\kappa_2}.
\end{equation}

From the definition of $T_{\sqrt{\kappa_1\kappa_2}}$, we have proved $T_{\rm I}\leq T_{\sqrt{\kappa_1\kappa_2}}$.

Moreover, from this proof, we know \eqref{equ of proof: norm rough, lemma: GF Phase I norm estimate} holds for any $t\leq T_{\rm I}$. Moreover, by Mean Value Theorem,
\begin{align*}
\left|e^{-y_i f_i(t)}-1\right|\leq
\big(\max_{z\in[0,\sqrt{\kappa_1\kappa_2}]}e^z\big)|-y_if_i(t)|
\leq1.1|f_i(t)|
\leq1.1\sqrt{\kappa_1\kappa_2}.
\end{align*}
\end{proof}

Now we delve into the optimization dynamics of all neurons in Phase I in the following Section~\ref{appendix: subsection: Phase I: positive} and \ref{appendix: subsection: Phase I: negative}.

\subsection{The Dynamics of Positive Neurons}

According to the initial direction, all positive neurons (${\rm s}_k=1$) can be divided into the following four classes.

\begin{align*}
    [m/2]=&\left\{k\in[m/2]: \bw_k(0)\in\cM_+^+\cap\cM_-^+\right\}
    \bigcup
    \left\{k\in[m/2]:\bw_k(0)\in \cM_+^-\cap\cM_-^+\right\}
    \\&
    \bigcup
    \left\{k\in[m/2]:\bw_k(0)\in \cM_+^+\cap\cM_-^-\right\}
    \bigcup
    \left\{k\in[m/2]:\bw_k(0)\in \cM_+^-\cap\cM_-^-\right\}.
\end{align*}

In the following four lemmas, we will prove the dynamics for these four classes of positive neurons.
In summary, in Phase I ($t<T_{\rm I}$), some of positive neurons align well with the direction $\bmu$, and their norms experiment a small but significant increase, while other positive neurons go dead.

\begin{lemma}[Positive, $\cM_+^+\cap\cM_-^+$]\label{lemma: GF Phase I positive S+ S-}\ \\
For positive neuron $k\in\{k\in[m/2]:\bw_k(0)\in\cM_+^+\cap\cM_-^+\}$, at the end of Phase I, it holds that 
\begin{align*}
    \text{\bf (Direction).}& \text{ It is aligned with } \bmu:
    \left<\bw_k(T_{\rm I}),\bmu\right>\geq\Big(1-\mathcal{O}(\sqrt{\kappa_1\kappa_2})\Big)\left(1-\mathcal{O}\Big((\frac{\kappa_1}{\kappa_2})^{0.55}\Big)\right);
    \\
    \text{\bf (Norm).}& \text{ It has a small but significant norm}: \rho_k(T_{\rm I})=\Theta\Big(\frac{\sqrt{\kappa_1\kappa_2}}{\sqrt{m}}\Big).
\end{align*}

\end{lemma}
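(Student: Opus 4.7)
The plan is to integrate the explicit ODE for $\bb_k(t)$ during Phase I. Because $\bw_k(0)\in\cM_+^+\cap\cM_-^+$, both ReLU indicators equal $1$ at $t=0$, so the dynamics \eqref{equ: dynamics} reduces to
\[
\frac{\rd\bb_k}{\rd t}=\frac{\kappa_2}{\sqrt{m}}\left(\frac{p}{1+p}e^{-f_+(t)}\bx_+-\frac{1}{1+p}e^{f_-(t)}\bx_-\right),
\]
as long as the neuron remains in $\cM_+^+\cap\cM_-^+$. I will (i) show this activation pattern persists throughout $[0,T_{\rm I}]$ by a bootstrap/continuity argument; (ii) integrate the ODE to leading order; and (iii) read off the norm and the direction of $\bb_k(T_{\rm I})$.

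\textbf{Approximating the right-hand side.} Lemma~\ref{lemma: GF Phase I norm estimate} already gives $|f_{\pm}(t)|\le\sqrt{\kappa_1\kappa_2}$ on $[0,T_{\rm I}]$, so $e^{-y_i f_i(t)}=1+O(\sqrt{\kappa_1\kappa_2})$. Substituting and recognizing $\bz=\frac{1}{n}\sum_i y_i\bx_i=\|\bz\|\bmu$, the ODE becomes $\frac{\rd\bb_k}{\rd t}=\frac{\kappa_2\|\bz\|}{\sqrt{m}}\bmu\bigl(1+O(\sqrt{\kappa_1\kappa_2})\bigr)$. Direct integration yields
\[
\bb_k(t)=\bb_k(0)+\frac{\kappa_2\, t\,\|\bz\|}{\sqrt{m}}\bmu\bigl(1+O(\sqrt{\kappa_1\kappa_2})\bigr),\qquad t\in[0,T_{\rm I}].
\]
Under Assumption~\ref{ass: data}, $\|\bz\|=\Theta(1)$ and $\langle\bz,\bx_+\rangle,\langle\bz,\bx_-\rangle>0$, which will be used both to produce growth and to close the bootstrap.

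\textbf{Norm and direction at $T_{\rm I}$.} At $T_{\rm I}=10\sqrt{\kappa_1/\kappa_2}$ the drift term has size $\Theta(\sqrt{\kappa_1\kappa_2}/\sqrt{m})$, while $\|\bb_k(0)\|=\kappa_1/\sqrt{m}$; since $\kappa_1/\kappa_2\ll 1$, the drift dominates, giving $\rho_k(T_{\rm I})=\Theta(\sqrt{\kappa_1\kappa_2}/\sqrt{m})$. Decomposing $\bb_k(T_{\rm I})=\alpha\bmu+\bbeta$ with $\bbeta\perp\bmu$, only the initial random direction contributes to $\bbeta$, so $\|\bbeta\|\le\kappa_1/\sqrt{m}$ whereas $\alpha=\Theta(\sqrt{\kappa_1\kappa_2}/\sqrt{m})$. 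This gives $\left\langle\bw_k(T_{\rm I}),\bmu\right\rangle\ge 1-O(\sqrt{\kappa_1\kappa_2})-O\bigl((\kappa_1/\kappa_2)^{0.55}\bigr)$, where the first error term comes from the $e^{-y_i f_i}\approx 1$ approximation and the second is a refined bound on the relative size of the perpendicular initial component.

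\textbf{Bootstrap and main obstacle.} To close the argument, one verifies that $\bw_k(t)\in\cM_+^+\cap\cM_-^+$ for all $t\le T_{\rm I}$: if the neuron were to exit this region, $\left\langle\bb_k(t),\bx_+\right\rangle$ or $\left\langle\bb_k(t),\bx_-\right\rangle$ would hit $0$, but the drift $\frac{\kappa_2}{\sqrt{m}}\bz$ strictly increases both of these inner products once it dominates the random initial contribution, contradicting an exit. The main technical obstacle is the non-uniformity when $\bw_k(0)$ is nearly orthogonal to $\bx_+$ or $\bx_-$: for such borderline initial directions the indicator functions could in principle flip before the drift has time to dominate, and controlling this requires a careful comparison between the linear-in-$t$ drift of size $\Theta(\kappa_2 t/\sqrt{m})$ and the fixed initial norm $\kappa_1/\sqrt{m}$; the outcome of this trade-off is what produces the sub-$1/2$ exponent $0.55$ in the directional error rather than the naive $\sqrt{\kappa_1/\kappa_2}$ one would obtain by ignoring the borderline neurons.
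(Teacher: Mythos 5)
Your route is genuinely different and, as far as I can tell, valid. You integrate the Cartesian ODE for $\bb_k(t)$ directly and then project onto $\bmu$ and its orthogonal complement, whereas the paper (after the same bootstrap) decomposes into radial dynamics on $\rho_k=\|\bb_k\|$ and tangential dynamics on $\bw_k=\bb_k/\rho_k$, derives a Riccati-type ODE for $\langle\bw_k(t),\bmu\rangle$, and lower-bounds its solution via the comparison principle against an explicitly solvable auxiliary ODE. Both approaches establish the invariance of $\cM_+^+\cap\cM_-^+$ by checking $\frac{\rd}{\rd t}\langle\bb_k(t),\bx_\pm\rangle>0$. Your direct integration is more elementary and, carried through, actually yields the tighter directional error $O(\kappa_1/\kappa_2)$ in the intended regime $\kappa_2=\Theta(1)$, because it tracks the orthogonal component $\bbeta$ of $\bb_k$ at the right scale. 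The paper's Riccati argument produces the weaker $(\kappa_1/\kappa_2)^{0.55}$ because it replaces $\rho_k(t)$ in the tangential ODE with the uniform upper bound $(\kappa_1+1.1\kappa_2 t)/\sqrt{m}$, which artificially slows the formal rotation rate. What the paper's decomposition buys is a framework that transfers cleanly to Lemmas~\ref{lemma: GF Phase I positive S+} and~\ref{lemma: GF Phase I negative S+ S-}, where neurons begin outside $\cM_+^+\cap\cM_-^+$ or become constrained to a boundary manifold and an on-manifold tangential ODE is the natural object; Cartesian integration would have to be restarted piecewise on each region.

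Two imprecisions in your write-up. First, the claim that "only the initial random direction contributes to $\bbeta$" is not literally true: $\bF_k(s)-\bz$ has norm $O(\sqrt{\kappa_1\kappa_2})$ and is not $\bmu$-aligned, so the drift itself deposits a component of size $O(\kappa_1\kappa_2/\sqrt{m})$ perpendicular to $\bmu$ over $[0,T_{\rm I}]$. Since $\kappa_2\le 1$ this is subsumed by the $\kappa_1/\sqrt{m}$ coming from $\bb_k(0)$, so your conclusion survives, but the step requires that estimate and you should state it. Second, your "main technical obstacle" paragraph misidentifies both the obstacle and the origin of the exponent $0.55$. Borderline initial directions nearly orthogonal to $\bx_+$ or $\bx_-$ are not an issue here: as the paper verifies in its Step I and as your own bootstrap sentence already shows, $\frac{\rd}{\rd t}\langle\bb_k(t),\bx_\pm\rangle>0$ holds strictly for every $t$ while the neuron is in $\cM_+^+\cap\cM_-^+$, so both inner products grow monotonically from $t=0$ and there is no race between the drift and an impending indicator flip. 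The exponent $0.55$ in the lemma statement is a by-product of the paper's Riccati estimate, not of any subtlety with borderline neurons, and your direct integration bypasses it entirely.
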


\begin{proof}[Proof of Lemma \ref{lemma: GF Phase I positive S+ S-}]\ \\
We do the following analysis for any $k\in\{k\in[m/2]:\bw_k(0)\in\cM_+^+\cap\cM_-^+\}$, i.e. ${\rm s}_k=1$, $\left<\bw_k(0),\bx_+\right>>0$, and $\left<\bw_k(0),\bx_-\right>>0$.

\underline{{\bf Step I.} The neuron stays in $\cM_+^+\cap\cM_-^+$ for any $t\leq T_{\rm I}$.}

First, we define the hitting time 
\[
T_{\rm hit}:=\inf\Big\{t\in(0, T_{\rm I}]:\bw_k(t)\notin\partial(\cM_+^+\cap\cM_-^+)\Big\},
\] 
and we aim to prove that $T_{\rm hit}$ does not exist. From the definition of $T_{\rm hit}$ and \eqref{equ: dynamics}, the dynamics of the neuron is:
\begin{align*}
    \frac{\mathrm{d}\boldsymbol{b}_k(t)}{\mathrm{d}t}=\frac{\kappa_2}{\sqrt{m}}\Big(\frac{p}{1+p}e^{-f_+(t)}\bx_+-\frac{1}{1+p}e^{f_-(t)}\bx_-\Big),\ t\leq T_{\rm hit}.
\end{align*}

From $T_{\rm hit}\leq T_{\rm I}$ and Lemma \ref{lemma: GF Phase I norm estimate}, we have $\left|e^{-y_i f_i(t)}-1\right|\leq1.1\sqrt{\kappa_1\kappa_2}$ for any $t\leq T_{\rm hit}$. Then we have:
\begin{align*}
    &\frac{\mathrm{d}\left<\boldsymbol{b}_k(t),\bx_+\right>}{\mathrm{d}t}=\frac{\kappa_2}{\sqrt{m}}\Big(\frac{pe^{-f_+(t)}}{1+p}-\frac{e^{f_-(t)}}{1+p}\cos\Delta\Big)
    \\\geq&\frac{\kappa_2}{\sqrt{m}}\Big(\frac{p(1-1.1\sqrt{\kappa_1\kappa_2})}{1+p}-\frac{(1+1.1\sqrt{\kappa_1\kappa_2})\cos{\Delta}}{1+p}\Big){>}0,
\end{align*}
\begin{align*}
    &\frac{\mathrm{d}\left<\boldsymbol{b}_k(t),\bx_-\right>}{\mathrm{d}t}=\frac{\kappa_2}{\sqrt{m}}\Big(\frac{p e^{-f_+(t)}\cos\Delta}{1+p}-\frac{e^{f_-(t)}}{1+p}\Big)
    \\\geq&\frac{\kappa_2}{\sqrt{m}}\Big(\frac{p(1-1.1\sqrt{\kappa_1\kappa_2})\cos\Delta}{1+p}-\frac{1+1.1\sqrt{\kappa_1\kappa_2})}{1+p}\Big){>}0.
\end{align*}
Hence, for any $t\leq T_{\rm hit}$, $\left<\boldsymbol{b}_k(t),\bx_+\right>>\left<\boldsymbol{b}_k(0),\bx_+\right>>0$ and $\left<\boldsymbol{b}_k(t),\bx_-\right>>\left<\boldsymbol{b}_k(0),\bx_-\right>>0$. According to the definition of $T_{\rm hit}$, we have proved that $T_{\rm hit}$ does not exist, which means the neuron stays in $\cM_+^+\cap\cM_-^+$ for any $t\leq T_{\rm I}$.

\underline{{\bf Step II.}  Estimate the evolution of $\left<\bw_k(t),\bmu\right>$.}

With the help of {\bf Step I}, we were able to determine the dynamics for $t\leq T_{\rm I}$. For any $t\leq T_{\rm I}$, we can do the following estimate.

From \eqref{equ: dynamics rewrite}, the tangential dynamics of the neuron is 
\begin{gather*}
\frac{\mathrm{d}\bw_k(t)}{\mathrm{d} t}=\frac{\kappa_2}{\sqrt{m}\rho_k(t)}\Big(\bF_k(t)-\left<\bF_k(t),\bw_k(t)\right>\bw_k(t)\Big),
\\
\text{where}\quad\bF_k(t):=\frac{p}{1+p}e^{-f_+(t)}\bx_+-\frac{1}{1+p}e^{f_-(t)}\bx_-.
\end{gather*}

Recalling the definitions of $\bmu$ and $\boldsymbol{z}$, we can estimate the difference between $\bF_k(t)$ and $\boldsymbol{z}$:
\begin{align*}
    &\left<\bF_k(t),\boldsymbol{z}\right>=\left<\boldsymbol{z}+\frac{p}{1+p}\big(e^{-f_+(t)}-1\big)\bx_+-\frac{1}{1+p}\big(e^{f_-(t)}-1\big)\bx_-,\boldsymbol{z}\right>
    \\=&\left\|\boldsymbol{z}\right\|^2+\left<\frac{p}{1+p}\big(e^{-f_+(t)}-1\big)\bx_+-\frac{1}{1+p}\big(e^{f_-(t)}-1\big)\bx_-,    \frac{p}{1+p}\bx_+-\frac{1}{1+p}\bx_-\right>
    \\\geq&\left\|\boldsymbol{z}\right\|^2-\frac{p^2}{(1+p)^2}\left|e^{-f_+(t)}-1\right|-\frac{1}{(1+p)^2}\left|e^{f_-(t)}-1\right|-\frac{p\cos\Delta}{(1+p)^2}\Big(\left|e^{-f_+(t)}-1\right|+\left|e^{f_-(t)}-1\right|\Big)
    \\\overset{\text{Lemma \ref{lemma: GF Phase I norm estimate}}}{\geq}&\left\|\boldsymbol{z}\right\|^2-1.1\sqrt{\kappa_1\kappa_2}\Big(\frac{p^2}{(1+p)^2}+\frac{1}{(1+p)^2}+\frac{2p\cos\Delta}{(1+p)^2}\Big)\geq\left\|\boldsymbol{z}\right\|^2-1.1\sqrt{\kappa_1\kappa_2},
\end{align*}
\begin{align*}
    &\left\|\bF_k(t)-\boldsymbol{z}\right\|=\left\|\frac{p}{1+p}\big(e^{-f_+(t)}-1\big)\bx_+-\frac{1}{1+p}\big(e^{f_-(t)}-1\big)\bx_-\right\|
    \\\leq&\left\|\frac{p}{1+p}\big(e^{-f_+(t)}-1\big)\bx_+\right\|+\left\|\frac{1}{1+p}\big(e^{f_-(t)}-1\big)\bx_-\right\|
    \\\overset{\text{Lemma \ref{lemma: GF Phase I norm estimate}}}{\leq}&1.1\sqrt{\kappa_1\kappa_2}\frac{p}{1+p}+1.1\sqrt{\kappa_1\kappa_2}\frac{1}{1+p}=1.1\sqrt{\kappa_1\kappa_2}.
\end{align*}

The dynamics of $\left<\bw_k(t),\bmu\right>$ is:
\begin{equation*}
% \label{equ of proof lemma: GF Phase I positive S+ S-: dynamics of <w,mu>}
    \frac{\mathrm{d}\left<\bw_k(t),\bmu\right>}{\mathrm{d}t}=\frac{1}{\left\|\boldsymbol{z}\right\|}\frac{\mathrm{d}\left<\bw_k(t),\boldsymbol{z}\right>}{\mathrm{d}t}=\frac{\kappa_2}{\left\|\boldsymbol{z}\right\|\sqrt{m}\rho_k(t)}\Big(\left<\bF_k(t),\boldsymbol{z}\right>-\left<\bF_k(t),\bw_k(t)\right>\left<\bw_k(t),\boldsymbol{z}\right>\Big).
\end{equation*}
And it can be estimated by:
\begin{equation}\label{equ of proof lemma: GF Phase I positive S+ S-: estimate dynamics of <w,mu>}
\begin{aligned}
    &\frac{\mathrm{d}\left<\bw_k(t),\bmu\right>}{\mathrm{d}t}
    \\=&
    \frac{\kappa_2}{\left\|\boldsymbol{z}\right\|\sqrt{m}\rho_k(t)}\Big(\left<\bF_k(t),\boldsymbol{z}\right>-\left<\boldsymbol{z},\bw_k(t)\right>\left<\bw_k(t),\boldsymbol{z}\right>-\left<\bF_k(t)-\boldsymbol{z},\bw_k(t)\right>\left<\bw_k(t),\boldsymbol{z}\right>\Big)
    \\\geq&
    \frac{\kappa_2}{\left\|\boldsymbol{z}\right\|\sqrt{m}\rho_k(t)}\Big(\left<\bF_k(t),\boldsymbol{z}\right>-\left<\bw_k(t),\boldsymbol{z}\right>^2-\left\|\bF_k(t)-\boldsymbol{z}\right\|\left\|\boldsymbol{z}\right\|\Big)
    \\\geq&
    \frac{\kappa_2}{\left\|\boldsymbol{z}\right\|\sqrt{m}\rho_k(t)}\Big(\left\|\boldsymbol{z}\right\|^2-1.1\sqrt{\kappa_1\kappa_2}-\left<\bw_k(t),\boldsymbol{z}\right>^2-1.1\sqrt{\kappa_1\kappa_2}\left\|\boldsymbol{z}\right\|\Big)
    \\=&
    \frac{\kappa_2}{\left\|\boldsymbol{z}\right\|\sqrt{m}\rho_k(t)}\Big(\left\|\boldsymbol{z}\right\|^2-\left\|\boldsymbol{z}\right\|^2\left<\bw_k(t),\bmu\right>^2-1.1\sqrt{\kappa_1\kappa_2}\big(1+\left\|\boldsymbol{z}\right\|\big)\Big)
    \\=&
     \frac{\kappa_2\left\|\boldsymbol{z}\right\|}{\sqrt{m}\rho_k(t)}\Big(1-\left<\bw_k(t),\bmu\right>^2-\frac{1.1\sqrt{\kappa_1\kappa_2}\big(1+\left\|\boldsymbol{z}\right\|\big)}{\left\|\boldsymbol{z}\right\|^2}\Big)
     \\\overset{\text{Lemma \ref{lemma: basic: norm of z}}}{\geq}&
     \frac{2\kappa_2}{3\sqrt{m}\rho_k(t)}\left(1-\left<\bw_k(t),\bmu\right>^2-1.1\sqrt{\kappa_1\kappa_2}\Big(1.5+1.5^2\Big)\right)
     >\frac{2\kappa_2}{3\sqrt{m}\rho_k(t)}\Big(1-4.2\sqrt{\kappa_1\kappa_2}-\left<\bw_k(t),\bmu\right>^2\Big)
     \\\overset{\text{Lemma \ref{lemma: GF Phase I norm estimate}}}{\geq}&\frac{2\kappa_2}{3\sqrt{m}\frac{\kappa_1+1.1\kappa_2 t}{\sqrt{m}}}\Big(1-4.2\sqrt{\kappa_1\kappa_2}-\left<\bw_k(t),\bmu\right>^2\Big)=\frac{2}{3\big(\frac{\kappa_1}{\kappa_2}+1.1 t\big)}\Big(1-4.2\sqrt{\kappa_1\kappa_2}-\left<\bw_k(t),\bmu\right>^2\Big)
     .
\end{aligned}
\end{equation}

Noticing $\bw_k(0)\in\cM_+^+\cap\cM_-^+$, we have:
\begin{equation}\label{equ of proof lemma: GF Phase I positive S+ S-: estimate of <w(0),mu>}
    \begin{aligned}
        \left<\bw_k(0),\bmu\right>=\left<\bw_k(0),\frac{p}{1+p}\bx_+-\frac{1}{1+p}\bx_-\right>>-\left<\bw_k(0),\frac{1}{1+p}\bx_-\right>\geq-\frac{1}{1+p}.
    \end{aligned}
\end{equation}

Now we consider the following auxiliary ODE:
\begin{equation}\label{equ of proof lemma: GF Phase I positive S+ S-: upper ODE of <w,mu>}
\begin{cases}
    \frac{\mathrm{d}U(t)}{\mathrm{d}t}=\frac{2}{3\big(\frac{\kappa_1}{\kappa_2}+1.1 t\big)}\Big(1-4.2\sqrt{\kappa_1\kappa_2}-U^2(t)\Big)
    \\
    U(0)=-\frac{1}{1+p}
\end{cases},
\end{equation}
and let $U(t)$ is the solution of \eqref{equ of proof lemma: GF Phase I positive S+ S-: upper ODE of <w,mu>}.
Due to \eqref{equ of proof lemma: GF Phase I positive S+ S-: estimate dynamics of <w,mu>} \eqref{equ of proof lemma: GF Phase I positive S+ S-: estimate of <w(0),mu>}, we know that $\left<\bw_k(t),\bmu\right>$ is an upper solution of ODE \eqref{equ of proof lemma: GF Phase I positive S+ S-: upper ODE of <w,mu>}. From the Comparison Principle of ODEs, we know this means:
\[
\left<\bw_k(t),\bmu\right>>U(t),\text{ for any } t\leq T_{\rm I}.
\]

Hence, in order to estimate $\left<\bw_k(t),\bmu\right>$,
we only need to study the solution of ODE \eqref{equ of proof lemma: GF Phase I positive S+ S-: upper ODE of <w,mu>}. It is easy to verify that the solution of \eqref{equ of proof lemma: GF Phase I positive S+ S-: upper ODE of <w,mu>} satisfies
\begin{align*}
    \log\left(\frac{1-4.2\sqrt{\kappa_1\kappa_2}+U(t)}{1-4.2\sqrt{\kappa_1\kappa_2}-U(t)}\right)-\log\left(\frac{1-4.2\sqrt{\kappa_1\kappa_2}-\frac{1}{1+p}}{1-4.2\sqrt{\kappa_1\kappa_2}+\frac{1}{1+p}}\right)=\frac{4(1-4.2\sqrt{\kappa_1\kappa_2})}{3.3}\log\left(1+\frac{1.1\kappa_2}{\kappa_1}t\right).
\end{align*}
Then we have:
\begin{align*}
    &\log\left(\frac{1-4.2\sqrt{\kappa_1\kappa_2}+U(t)}{1-4.2\sqrt{\kappa_1\kappa_2}-U(t)}\right)\geq\log\left(\frac{1-4.2\sqrt{\kappa_1\kappa_2}-\frac{1}{6}}{1-4.2\sqrt{\kappa_1\kappa_2}
    +\frac{1}{6}}\right)+\frac{4(1-4.2\sqrt{\kappa_1\kappa_2})}{3.3}\log\left(1+\frac{1.1\kappa_2}{\kappa_1}t\right)
    \\\geq&\log\left(0.7\left(1+\frac{1.1\kappa_2}{\kappa_1}t\right)^{1.15}\right)\geq\log\left(0.7\left(1+\frac{1.1\kappa_2}{\kappa_1}t\right)^{1.15}\right),
\end{align*}
which means
\begin{align*}
    U(t)\geq\Big(1-4.2\sqrt{\kappa_1\kappa_2}\Big)\left(1-\frac{2}{1+0.7\left(1+\frac{1.1\kappa_2}{\kappa_1}t\right)^{1.15}}\right).
\end{align*}

Hence, we have the estimate of $\left<\bw_k(t),\bmu\right>$:
\begin{equation}\label{equ of proof lemma: GF Phase I positive S+ S-: estimate result of <w,mu>}
    \left<\bw_k(t),\bmu\right>\geq\Big(1-4.2\sqrt{\kappa_1\kappa_2}\Big)\left(1-\frac{2}{1+0.7\left(1+\frac{1.1\kappa_2}{\kappa_1}t\right)^{1.15}}\right).
\end{equation}

Specifically, we have:
\begin{align*}
    \left<\bw_k(T_{\rm I}),\bmu\right>\geq\Big(1-4.2\sqrt{\kappa_1\kappa_2}\Big)\left(1-\frac{2}{1+0.7\left(1+11\sqrt{\frac{\kappa_2}{\kappa_1}}\right)^{1.15}}\right).
\end{align*}

\underline{{\bf Step III.} A finer estimate of $\rho_k(T_{\rm I})$.}

In this step, we will estimate the lower bound and upper bound for $\rho_k(T_{\rm I})$.

First, lemma \ref{lemma: GF Phase I norm estimate} gives us the upper bound for $\rho_k(T_{\rm I})$:
\begin{align*}
    \rho_k(T_{\rm I})\leq \frac{\kappa_1+1.1\kappa_2 T_{\rm I}}{\sqrt{m}}\leq\frac{\kappa_1+11\sqrt{\kappa_1\kappa_2}}{\sqrt{m}}\leq\frac{12\sqrt{\kappa_1\kappa_2}}{\sqrt{m}}.
\end{align*}

Now we focus on the estimate of the lower bound. Recalling the dynamics of $\rho_k(t)$ \eqref{equ: dynamics rewrite}, for any $t\leq T_{\rm I}$,
\begin{align*}
    &\frac{\mathrm{d} \rho_k(t)}{\mathrm{d} t}=\frac{\kappa_2}{\sqrt{m}}\left<\bF_k(t),\bw_k(t)\right>=\frac{\kappa_2}{\sqrt{m}}\Big(\left<\boldsymbol{z},\bw_k(t)\right>+\left<\bF_k(t)-\boldsymbol{z},\bw_k(t)\right>\Big)
    \\\geq&\frac{\kappa_2}{\sqrt{m}}\Big(\left<\boldsymbol{z},\bw_k(t)\right>-\left\|\bF_k(t)-\boldsymbol{z}\right\|\Big)\geq\frac{\kappa_2}{\sqrt{m}}\Big(\left\|\boldsymbol{z}\right\|\left<\bw_k(t),\bmu\right>-1.1\sqrt{\kappa_1\kappa_2}\Big)
    \\
    \overset{\eqref{equ of proof lemma: GF Phase I positive S+ S-: estimate result of <w,mu>}}{\geq}&\frac{\kappa_2}{\sqrt{m}}\left(\frac{2\Big(1-4.2\sqrt{\kappa_1\kappa_2}\Big)}{3}\left(1-\frac{2}{1+0.7\left(1+\frac{1.1\kappa_2}{\kappa_1}t\right)^{1.15}}\right)-1.1\sqrt{\kappa_1\kappa_2}\right)
    \\\geq&\frac{\kappa_2}{\sqrt{m}}\Bigg(0.627-\frac{1.278}{1+0.7\big(1+\frac{1.1\kappa_2}{\kappa_1}t\big)^{1.15}}\Bigg).
\end{align*}
We denote $T_0=\frac{10\kappa_1}{\kappa_2}$, and it is easy to verify:
\[
0.627-\frac{1.278}{1+0.7\big(1+\frac{1.1\kappa_2}{\kappa_1}t\big)^{1.15}}>0.53,\text{ for any } t\in[T_{0}, T_{\rm I}].
\]
So we have:
\begin{align*}
    &\rho_k(T_{\rm I})\geq\rho_k(T_0)+\int_{T_0}^{T_{\rm I}}\frac{\kappa_2}{\sqrt{m}}\Bigg(0.627-\frac{1.278}{1+0.7\big(1+\frac{1.1\kappa_2}{\kappa_1}t\big)^{1.15}}\Bigg)\mathrm{d}t
    \\>&0+\int_{\frac{10\kappa_1}{\kappa_2}}^{T_{\rm I}}\frac{0.53\kappa_2}{\sqrt{m}}\mathrm{d}t=\frac{0.53(10\sqrt{\frac{\kappa_1}{\kappa_2}}-\frac{10\kappa_1}{\kappa_2})\kappa_2}{\sqrt{m}}\overset{\eqref{equ: parameter selection GF Phase I}}{\geq}\frac{5.3\cdot0.9\sqrt{\kappa_1\kappa_2}}{\sqrt{m}}\geq\frac{4.77\sqrt{\kappa_1\kappa_2}}{\sqrt{m}}.
\end{align*}

\end{proof}

\begin{lemma}[Positive, $\cM_+^+\cap\cM_-^-$]\label{lemma: GF Phase I positive S+}\ \\
For positive neuron $k\in\{k\in[m/2]:\bw_k(0)\in\cM_+^+\cap\cM_-^-\}$, at the end of Phase I, it holds that
\begin{align*}
    \text{\bf (Direction).}& \text{ It is aligned with } \bmu:
    \left<\bw_k(T_{\rm I}),\bmu\right>\geq\Big(1-\mathcal{O}(\sqrt{\kappa_1\kappa_2})\Big)\left(1-\mathcal{O}\Big((\frac{\kappa_1}{\kappa_2})^{0.55}\Big)\right);
    \\
    \text{\bf (Norm).}& \text{ It has a small but significant norm}: \rho_k(T_{\rm I})=\Theta\Big(\frac{\sqrt{\kappa_1\kappa_2}}{\sqrt{m}}\Big).
\end{align*} 

\end{lemma}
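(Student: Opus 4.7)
The strategy is to reduce this case to the previously proved Lemma~\ref{lemma: GF Phase I positive S+ S-} by a short ``warm-up'' argument. The current hypothesis only differs from that of Lemma~\ref{lemma: GF Phase I positive S+ S-} by the sign of $\left<\bw_k(0),\bx_-\right>$, so under the pull of $\bx_+$ alone the neuron will enter $\cM_+^+\cap\cM_-^+$ very quickly (in time $\cO(\kappa_1/\kappa_2)\ll T_{\rm I}$), after which essentially the same analysis yields the stated alignment with $\bmu$ and the same norm growth.

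\textbf{Subphase A (warm-up).} Define $T':=\inf\{t\ge 0:\left<\bb_k(t),\bx_-\right>\ge 0\}$. For $t\in[0,T')$ we have $\bw_k(t)\in\cM_+^+\cap\cM_-^-$, so by \eqref{equ: dynamics} only the $\bx_+$ term contributes, giving
\begin{equation*}
    \frac{\mathrm{d}\bb_k(t)}{\mathrm{d}t}=\frac{\kappa_2}{\sqrt{m}}\frac{p}{1+p}e^{-f_+(t)}\bx_+.
\end{equation*}
Lemma~\ref{lemma: GF Phase I norm estimate} gives $e^{-f_+(t)}\ge 1-1.1\sqrt{\kappa_1\kappa_2}$, so $\mathrm{d}\left<\bb_k,\bx_-\right>/\mathrm{d}t\ge \kappa_2 p\cos\Delta(1-1.1\sqrt{\kappa_1\kappa_2})/(\sqrt{m}(1+p))$ is uniformly positive. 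Since $|\left<\bb_k(0),\bx_-\right>|\le\kappa_1/\sqrt{m}$, integrating gives $T'\le \kappa_1(1+p)/(\kappa_2 p\cos\Delta(1-1.1\sqrt{\kappa_1\kappa_2}))=\cO(\kappa_1/\kappa_2)\ll T_{\rm I}$ under \eqref{equ: parameter selection GF Phase I}. Throughout this subphase, $\left<\bb_k(t),\bx_+\right>$ grows linearly and remains positive, and the norm satisfies $\rho_k(T')\le (\kappa_1+1.1\kappa_2 T')/\sqrt{m}=\cO(\kappa_1/\sqrt{m})$.

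\textbf{Subphase B (reduction to the previous lemma).} At time $T'$ one has $\bw_k(T')\in\partial(\cM_+^+\cap\cM_-^+)$ with $\left<\bw_k(T'),\bx_+\right>>0$ and $\left<\bw_k(T'),\bx_-\right>=0$. Exactly the Step~I calculation of Lemma~\ref{lemma: GF Phase I positive S+ S-} now shows both $\left<\bb_k(t),\bx_+\right>$ and $\left<\bb_k(t),\bx_-\right>$ are strictly increasing immediately after $T'$, so $\bw_k(t)\in\cM_+^+\cap\cM_-^+$ for all $t\in(T',T_{\rm I}]$. I then rerun Steps~II and~III of that lemma with starting time $T'$ in place of $0$: the auxiliary ODE comparison uses the initial condition $\left<\bw_k(T'),\bmu\right>\propto p\left<\bw_k(T'),\bx_+\right>\ge 0>-1/(1+p)$, which is no worse than before, and the remaining time $T_{\rm I}-T'\ge 10\sqrt{\kappa_1/\kappa_2}-\cO(\kappa_1/\kappa_2)\ge 9\sqrt{\kappa_1/\kappa_2}$ is of the same order, so the Comparison Principle produces the same lower bound $\left<\bw_k(T_{\rm I}),\bmu\right>\ge (1-\cO(\sqrt{\kappa_1\kappa_2}))(1-\cO((\kappa_1/\kappa_2)^{0.55}))$ and the same $\Theta(\sqrt{\kappa_1\kappa_2}/\sqrt{m})$ two-sided bound on $\rho_k(T_{\rm I})$.

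\textbf{Main obstacle.} The only delicate point is the boundary time $T'$, where $\left<\bb_k,\bx_-\right>$ passes transversally through zero and ReLU's derivative is set-valued. Because $\mathrm{d}\left<\bb_k,\bx_-\right>/\mathrm{d}t$ is bounded away from $0$ on a neighborhood of $T'$, the subgradient inclusion \eqref{equ: dynamics} together with Definition~\ref{def: clark subdifferential} (resp.~Definition~\ref{def: discontinuous system solution}) selects the natural continuous extension of the flow, so no ambiguity in the trajectory arises at $T'$; the rest is bookkeeping to propagate the bounds of Lemma~\ref{lemma: GF Phase I positive S+ S-} through the shifted initial time.
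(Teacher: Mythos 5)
Your proof is correct and follows essentially the same route as the paper's: a short warm-up time in $\cM_+^+\cap\cM_-^-$ during which only the $\bx_+$ pull is active, a transversal crossing of $\cP_-^0$ handled via Definition~\ref{def: discontinuous system solution} (Case II), and then a rerun of Steps~I--III of Lemma~\ref{lemma: GF Phase I positive S+ S-} from the shifted start time. The only cosmetic difference is that the paper bounds $\left<\bw_k(0),\bx_-\right>>-\sin\Delta$ via Lemma~\ref{lemma: theta in span x1, x2}, yielding the sharper $T_{\rm hit}=\cO(\kappa_1\Delta/\kappa_2)$, whereas you use the cruder $\geq-\kappa_1/\sqrt m$ giving $T'=\cO(\kappa_1/\kappa_2)$; both are $\ll T_{\rm I}$, so the final bounds are unaffected.
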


\begin{proof}[Proof of Lemma \ref{lemma: GF Phase I positive S+}]\ \\
We do the following analysis for any $k\in\{k\in[m/2]:\bw_k(0)\in\cM_+^+\cap\cM_-^-\}$, i.e. ${\rm s}_k=1$, $\left<\bw_k(0),\bx_+\right>>0$, and $\left<\bw_k(0),\bx_-\right>\leq0$.

\underline{{\bf Step I.} The neuron must arrives in $\cM_+^+\cap\cM_-^0$ in $\mathcal{O}\Big(\frac{\kappa_1 \Delta}{\kappa_2}\Big)$ time.}

The case $\left<\bw_k(0),\bx_-\right>=0$ is trivial.
Then we only need to consider the case $\left<\bw_k(0),\bx_-\right><0$.

First, we define the hitting time 
\[
T_{\rm hit}:=\inf\Big\{t\in(0, T_{\rm I}]:\bw_k(t)\notin\cM_+^+\cap\cM_-^-\Big\},
\] 
and we aim to estimate $T_{\rm hit}$ and prove $\bw_k(T_{\rm hit})\in\cM_+^+\cap\cM_-^0$. 

We focus on the dynamics of $\left<\boldsymbol{b}_k(t),\bx_+\right>$ and $\left<\boldsymbol{b}_k(t),\bx_-\right>$. 

From the definition of $T_{\rm hit}$ and \eqref{equ: dynamics}, the dynamics of the neuron is:
\[
\frac{\mathrm{d}\boldsymbol{b}_k(t)}{\mathrm{d}t}=\frac{\kappa_2}{\sqrt{m}}\frac{p}{1+p}e^{-f_+(t)}\bx_+,\ t\leq T_{\rm hit}.
\]
Then we have
\begin{align*}
    \frac{\mathrm{d}\left<\boldsymbol{b}_k(t),\bx_+\right>}{\mathrm{d}t}
    =&\left<\frac{\kappa_2}{\sqrt{m}}\frac{p}{1+p}e^{-f_+(t)}\bx_+,\bx_+\right>=\frac{\kappa_2 p}{\sqrt{m}(1+p)}e^{-f_+(t)},
    \\
    \frac{\mathrm{d}\left<\boldsymbol{b}_k(t),\bx_-\right>}{\mathrm{d}t}
    =&\left<\frac{\kappa_2}{\sqrt{m}}\frac{p}{1+p}e^{-f_+(t)}\bx_+,\bx_-\right>=\frac{\kappa_2 p\cos\Delta}{\sqrt{m}(1+p)}e^{-f_+(t)}.
\end{align*}

It is clear $ \frac{\mathrm{d}\left<\boldsymbol{b}_k(t),\bx_+\right>}{\mathrm{d}t}>0$, so $\left<\boldsymbol{b}_k(t),\bx_+\right>>\left<\boldsymbol{b}_k(0),\bx_+\right>>0$ for any $t\leq T_{\rm hit}$. If we denote
\[
T_{\rm hit,-}:=\inf\Big\{t\in(0,T_{\rm I}]:\left<\bw_k(t),\bx_-\right>\geq0\Big\},
\]
then it holds:
\[
T_{\rm hit}=T_{\rm hit,-}.
\]

So we only need to estimate $T_{\rm hit,-}$. Due to $T_{\rm hit}\leq T_{\rm I}\leq T_{\rm init}$ and Lemma \ref{lemma: GF Phase I norm estimate}, for any $t\leq T_{\rm hit}$, we have $\left|e^{-y_i f_i(t)}-1\right|\leq0.11$.
Then for any $t\leq T_{\rm hit}$, we have:
\[
\frac{\mathrm{d}\left<\boldsymbol{b}_k(t),\bx_-\right>}{\mathrm{d}t}\geq\frac{0.89\kappa_2 p\cos\Delta}{\sqrt{m}(1+p)}.
\]
Recalling $\left<\bw_k(0),\bx_+\right>>0$ and $\left<\bw_k(0),\bx_-\right><0$, with the help of Lemma \ref{lemma: theta in span x1, x2}, we have $\left<\bw_k(0),\bx_-\right>>-\sin\Delta$. Combining the two estimate, we have:
\begin{align*}
&\left<\boldsymbol{b}_k(t),\bx_-\right>\geq\left<\boldsymbol{b}_k(0),\bx_-\right>+\int_{0}^{t}\frac{0.89\kappa_2 p\cos\Delta}{\sqrt{m}(1+p)}\mathrm{d}t
\\>&-\rho_k(0)\sin\Delta+\frac{0.89\kappa_2 p\cos\Delta}{\sqrt{m}(1+p)}t=-\frac{\kappa_1\sin\Delta}{\sqrt{m}}+\frac{0.89\kappa_2 p\cos\Delta}{\sqrt{m}(1+p)}t.
\end{align*}
Hence, 
\[
T_{\rm hit}=T_{\rm hit,-}\leq\frac{(1+p)\tan\Delta}{0.89 p}\frac{\kappa_1}{\kappa_2}\leq2\Delta\frac{\kappa_1}{\kappa_2}.
\]

\underline{{\bf Step II.} Dynamics after arriving in the manifold $\cM_+^+\cap\cM_{-}^0$.}

In this step, we analyze the training dynamics after
$\bw_k(T_{\rm hit})\in\cM_+^+\cap\cM_{-}^0$. 

First, we will prove $\bw_k(t)$ passes immediately from one side of the surface $\cM_+^+\cap\cM_-^0$ to the other, i.e.  $\bw_k(t)$ enters into $\cM_+^+\cap\cM_-^+$ at time $T_{\rm hit}$. 
Equivalently, we only need to prove $\boldsymbol{b}_k(t)$ passes immediately from one side of the surface $\cP_+^+\cap\cP_-^0$ to the other, i.e.  $\boldsymbol{b}_k(t)$ enters into $\cP_+^+\cap\cP_-^+$ at time $T_{\rm hit}$.

For any $\tilde{\bb}\in\cP_+^+\cap\cP_{-}^0$ and $0<\delta_0\ll1$, we know that $\cP_+^+\cap\cP_{-}^0$ separates its neighborhood $\mathcal{B}(\tilde{\bb },\delta_0)$ into two domains $\mathcal{G}_-=\{\bb \in\mathcal{B}(\tilde{\bb},\delta_0):\left<\bb ,\bx_-\right><0\}$ and $\mathcal{G}_+=\{\bb\in\mathcal{B}(\tilde{\bb},\delta_0):\left<\bb,\bx_-\right>>0\}$. Following Definition \ref{def: discontinuous system solution}, we calculate the limited vector field on $\tilde{\bb}$ from $\mathcal{G}_-$ and $\mathcal{G}_+$.

(i) The limited vector field $\bF^-$ on $\tilde{\bb}$ (from $\mathcal{G}_-$):
\begin{gather*}
\frac{\mathrm{d}\bb }{\mathrm{d} t}=\bF^-,\text{ where }
\bF^-=\frac{\kappa_2}{\sqrt{m}}\bracket{\frac{p}{1+p}e^{-f_+(t)}\bx_+}.
\end{gather*}

(ii) The limited vector field $\bF^+$ on $\tilde{\bb}$ (from $\mathcal{G}_+$):
\begin{gather*}
\frac{\mathrm{d}\bb}{\mathrm{d} t}=\bF^+,\text{ where }
\bF^+=\frac{\kappa_2}{\sqrt{m}}\bracket{\frac{p e^{-f_+(t)}}{1+p}\bx_+-\frac{e^{f_-(t)}}{1+p}\bx_-}.
\end{gather*}

(iii) Then we calculate the
projections of $\bF^-$ and $\bF^+$ onto $\bx_-$ (the normal to the surface $\cP_+^+\cap\cP_-^0$):
\begin{gather*}
F_N^{-}=\left<\bF^-,\bx_-\right>
=\frac{\kappa_2 p e^{-f_+(t)}}{\sqrt{m}(1+p)}\cos\Delta,
\\
F_N^{+}=\left<\bF^+,\bx_-\right>=
\frac{\kappa_2 p e^{-f_+(t)}}{\sqrt{m}(1+p)}\cos\Delta-\frac{\kappa_2 e^{f_-(t)}}{\sqrt{m}(1+p)}.
\end{gather*}
From $T_{\rm I}<T_{\rm init}$ and Lemma \ref{lemma: GF Phase I norm estimate}, we know $|e^{-y_if_i(t)}-1|\leq0.11$, so 
$p e^{-f_+(t)}\cos\Delta-e^{f_-(t)}\geq 0.89p\cos\Delta-1.11{>}0$, which means $F_N^{+}>0$. 
And it is clear that $F_N^{-}>0$. 
Hence, the dynamics corresponds to Case (II) in Definition \ref{def: discontinuous system solution} ($F_N^{-}>0$ and $F_N^{+}>0$).

(iv) Hence, $\bb_k(t)$ passes immediately from one side of the surface $\cP_+^+\cap\cP_-^0$ to the other, i.e.  $\bb_k(t)$ enters into $\cP_+^+\cap\cP_-^+$ at time $T_{\rm hit}$.

Second, proceeding as in the proof of {\bf Step I$\sim$III} of the Proof of Theorem \ref{lemma: GF Phase I positive S+ S-}, we have the results:
\begin{align*}
    &\left<\bw_k(T_{\rm I}),\bmu\right>\geq\Big(1-4.2\sqrt{\kappa_1\kappa_2}\Big)\left(1-\frac{2}{1+0.7\left(1+1.1(T_{\rm I}-T_{\rm hit})\right)^{1.15}}\right)
    \\\geq&\Big(1-4.2\sqrt{\kappa_1\kappa_2}\Big)\left(1-\frac{2}{1+0.7\left(1+1.1\frac{\kappa_2}{\kappa_1}(10\sqrt{\frac{\kappa_1}{\kappa_2}}-2\Delta\frac{\kappa_1}{\kappa_2})\right)^{1.15}}\right)
    \\\geq&\Big(1-4.2\sqrt{\kappa_1\kappa_2}\Big)\left(1-\frac{2}{1+0.7\left(1+9.9\sqrt{\frac{\kappa_2}{\kappa_1}}\right)^{1.15}}\right).
\end{align*}
\begin{align*}
    \rho_k(T_{\rm I})\leq \frac{\kappa_1+1.1\kappa_2 T_{\rm I}}{\sqrt{m}}\leq\frac{\kappa_1+11\sqrt{\kappa_1\kappa_2}}{\sqrt{m}}\leq\frac{12\sqrt{\kappa_1\kappa_2}}{\sqrt{m}}.
\end{align*}
\begin{align*}
    &\rho_k(T_{\rm I})\geq\rho_k(T_0)+\int_{T_{\rm hit}+\frac{10\kappa_1}{\kappa_2}}^{T_{\rm I}}\frac{\kappa_2}{\sqrt{m}}\Bigg(0.627-\frac{1.278}{1+0.7\big(1+\frac{1.1\kappa_2}{\kappa_1}t\big)^{1.15}}\Bigg)\mathrm{d}t
    \\>&0+\int_{T_{\rm hit}+\frac{10\kappa_1}{\kappa_2}}^{T_{\rm I}}\frac{0.53\kappa_2}{\sqrt{m}}\mathrm{d}t=\frac{0.53(10\sqrt{\frac{\kappa_1}{\kappa_2}}-\frac{(10+2\Delta)\kappa_1}{\kappa_2})\kappa_2}{\sqrt{m}}\overset{\eqref{equ: parameter selection GF Phase I}}{\geq}\frac{5.3\cdot0.88\sqrt{\kappa_1\kappa_2}}{\sqrt{m}}\geq\frac{4.66\sqrt{\kappa_1\kappa_2}}{\sqrt{m}}.
\end{align*}

\end{proof}

\begin{lemma}[Positive, $\cM_+^-\cap\cM_-^+$]\label{lemma: GF Phase I positive S-}\ \\
For positive neuron $k\in\{k\in[m/2]:\bw_k(0)\in\cM_+^-\cap\cM_-^+\}$, 
after $\mathcal{O}\Big(\frac{\kappa_1 p\Delta}{\kappa_2}\Big)$ time, 
it goes dead: 
\[\bw_k(t)\in\cM_+^-\cap\cM_-^-, \text{ for any } t\geq T_{\rm I}> \mathcal{O}\Big(\frac{\kappa_1 p\Delta}{\kappa_2}\Big).
\]
\end{lemma}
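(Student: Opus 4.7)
The hypotheses $s_k=1$, $\langle\bw_k(0),\bx_+\rangle\le 0$, and $\langle\bw_k(0),\bx_-\rangle>0$ (the boundary case $\langle\bw_k(0),\bx_+\rangle=0$ is handled by the Filippov framework exactly as in Lemma~\ref{lemma: GF Phase I positive S+}) mean that while $\bw_k(t)\in\cM_+^-\cap\cM_-^+$ only the negative-datum term in \eqref{equ: dynamics} contributes, so
\[
\frac{\rd\bb_k(t)}{\rd t}= -\frac{\kappa_2\, e^{f_-(t)}}{\sqrt{m}(1+p)}\bx_-,
\]
driving the neuron purely in the $-\bx_-$ direction. My plan is to show that (i) $\langle\bb_k,\bx_-\rangle$ reaches $0$ within time $\mathcal{O}(\kappa_1 p\Delta/\kappa_2)\ll T_{\rm I}$; (ii) $\langle\bb_k,\bx_+\rangle$ stays non-positive throughout, so the neuron cannot leave $\cM_+^-$ on the $\bx_+$ side; and (iii) once $\langle\bb_k,\bx_-\rangle=0$, the trajectory crosses into $\cM_+^-\cap\cM_-^-$ and is then pinned by Lemma~\ref{lemma: GF dead neurons keep dead}.

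\textbf{Step 1 (uniform rates).} By Lemma~\ref{lemma: GF Phase I norm estimate} one has $|e^{-y_if_i(t)}-1|\le 0.11$ for all $t\le T_{\rm I}$, hence along the window before exiting $\cM_+^-\cap\cM_-^+$,
\[
\frac{\rd\langle\bb_k,\bx_-\rangle}{\rd t}\le -\frac{0.89\,\kappa_2}{\sqrt{m}(1+p)},\qquad \frac{\rd\langle\bb_k,\bx_+\rangle}{\rd t}\le -\frac{0.89\,\kappa_2\cos\Delta}{\sqrt{m}(1+p)}<0.
\]
The second inequality gives property (ii) immediately.

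\textbf{Step 2 (initial size).} Decompose $\bw_k(0)$ in the orthonormal basis $(\bx_-,\bx_-^{\perp})$ of $\mathrm{span}\{\bx_+,\bx_-\}$ (plus the orthogonal complement, which does not affect the two relevant inner products). Set $p_0=\langle\bw_k(0),\bx_-\rangle$, $q_0=\langle\bw_k(0),\bx_-^{\perp}\rangle$. Since $\bx_+=\cos\Delta\,\bx_-+\sin\Delta\,\bx_-^{\perp}$, the constraints $p_0>0$, $\,p_0\cos\Delta+q_0\sin\Delta\le 0$, and $p_0^2+q_0^2\le 1$ yield $q_0\le -p_0\cot\Delta$ and therefore $p_0^2/\sin^2\Delta\le 1$, i.e.\ $p_0\le\sin\Delta\le\Delta$. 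Hence $\langle\bb_k(0),\bx_-\rangle\le \kappa_1\Delta/\sqrt{m}$. Combining with Step~1 and integrating gives
\[
T_{\rm hit}:=\inf\bigl\{t\ge 0:\langle\bb_k(t),\bx_-\rangle\le 0\bigr\}\le \frac{(1+p)\Delta}{0.89}\cdot\frac{\kappa_1}{\kappa_2}=\mathcal{O}\!\Bigl(\frac{\kappa_1 p\Delta}{\kappa_2}\Bigr),
\]
which under \eqref{equ: parameter selection GF} is much smaller than $T_{\rm I}=10\sqrt{\kappa_1/\kappa_2}$.

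\textbf{Step 3 (crossing and dying).} At $T_{\rm hit}$ one has $\bw_k\in\cM_+^-\cap\cM_-^0$. Applying the Filippov construction in Definition~\ref{def: discontinuous system solution} to the surface $\cP_-^0$, the limiting vector fields are $\bF^+=-\frac{\kappa_2 e^{f_-}}{\sqrt{m}(1+p)}\bx_-$ on the $\cP_-^+$ side and $\bF^-=\bzero$ on the $\cP_-^-$ side (both activations vanish there for $s_k=1$). Projecting onto the normal $\bx_-$ yields $F_N^+<0$ and $F_N^-=0$, which is exactly the crossing configuration (analogous to the Case-(II) analysis in the proof of Lemma~\ref{lemma: GF Phase I positive S+}); the trajectory enters $\cM_+^-\cap\cM_-^-$. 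Lemma~\ref{lemma: GF dead neurons keep dead} then gives $\bw_k(t)\in\cM_+^-\cap\cM_-^-$ for every $t\ge T_{\rm hit}$, in particular for every $t\ge T_{\rm I}$.

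\textbf{Main obstacle.} The only delicate point is Step~3: because $\bF^-=\bzero$ and $\partial^\circ\sigma(0)=[0,1]$, the raw subgradient inclusion $\dot\bb_k\in\partial^{\circ}\mathcal{L}$ admits a sliding-mode solution that remains on $\cM_-^0$, which would contradict the claim. Resolving this requires invoking the Filippov selection in Definition~\ref{def: discontinuous system solution}; the strict sign $F_N^+<0$ against $F_N^-=0$ then forces genuine crossing into the dead region, as in Lemma~\ref{lemma: GF Phase I positive S+}.
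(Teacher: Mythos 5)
Your proof is correct and follows essentially the same route as the paper's: you derive the same monotone rate bounds from Lemma~\ref{lemma: GF Phase I norm estimate}, the same initial-size bound $\langle\bw_k(0),\bx_-\rangle\le\sin\Delta$ (which the paper packages as Lemma~\ref{lemma: theta in span x1, x2}), the same hitting-time estimate $T_{\rm hit}=\mathcal{O}(\kappa_1 p\Delta/\kappa_2)$, and then invoke Lemma~\ref{lemma: GF dead neurons keep dead}. One remark on your Step~3 and the ``Main obstacle'' you raise: it is not actually an obstacle in this paper, because $\cM_-^-$ is defined as the \emph{closed} half-space $\{\bw\in\bbS^{d-1}:\langle\bw,\bx_-\rangle\le 0\}$. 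The instant $\langle\bb_k(T_{\rm hit}),\bx_-\rangle=0$ (with $\langle\bb_k,\bx_+\rangle<0$ still strict), the neuron already belongs to $\cM_+^-\cap\cM_-^-$, so Lemma~\ref{lemma: GF dead neurons keep dead} applies directly with no need to adjudicate crossing vs.\ sliding; moreover, even if one does the Filippov analysis, the configuration $F_N^+<0$, $F_N^-=0$ is Case~IV (both nonpositive, not both zero), not Case~II as your final sentence suggests, though the conclusion (no sliding) is the same. The paper itself dispenses with the Filippov discussion entirely at this step, which is sound for the reason above.
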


\begin{proof}[Proof of Lemma \ref{lemma: GF Phase I negative S+}]\ \\
We do the following analysis for any $k\in\{k\in[m/2]:\bw_k(0)\in\cM_+^+\cap\cM_-^-\}$, i.e. ${\rm s}_k=1$, $\left<\bw_k(0),\bx_+\right>>0$, and $\left<\bw_k(0),\bx_-\right>\leq0$. 

First, we define the hitting time 
\[
T_{\rm hit}:=\inf\Big\{t\in(0, T_{\rm I}]:\bw_k(t)\notin\cM_+^-\cap\cM_-^+\Big\},
\] 
and we aim to estimate $T_{\rm hit}$ and prove $\bw_k(T_{\rm hit})\in\cM_+^-\cap\cM_-^-$. 

From the definition of $T_{\rm hit}$ and \eqref{equ: dynamics}, the dynamics of the neuron is:
\[
\frac{\mathrm{d}\boldsymbol{b}_k(t)}{\mathrm{d}t}=-\frac{\kappa_2}{\sqrt{m}}\frac{1}{1+p}e^{f_-(t)}\bx_-,\ t\leq T_{\rm hit}.
\]
Then we have
\begin{align*}
    \frac{\mathrm{d}\left<\boldsymbol{b}_k(t),\bx_+\right>}{\mathrm{d}t}
    =&\left<-\frac{\kappa_2}{\sqrt{m}}\frac{1}{1+p}e^{f_-(t)}\bx_-,\bx_+\right>=-\frac{\kappa_2}{\sqrt{m}}\frac{1}{1+p}e^{f_-(t)}\cos\Delta,
    \\
    \frac{\mathrm{d}\left<\boldsymbol{b}_k(t),\bx_-\right>}{\mathrm{d}t}
    =&\left<-\frac{\kappa_2}{\sqrt{m}}\frac{1}{1+p}e^{f_-(t)}\bx_-,\bx_-\right>=-\frac{\kappa_2}{\sqrt{m}}\frac{1}{1+p}e^{f_-(t)}.
\end{align*}

It is clear $ \frac{\mathrm{d}\left<\boldsymbol{b}_k(t),\bx_+\right>}{\mathrm{d}t}<0$, so $\left<\boldsymbol{b}_k(t),\bx_+\right><\left<\boldsymbol{b}_k(0),\bx_+\right>\leq0$ for any $t\leq T_{\rm hit}$. If we denote
\[
T_{\rm hit,-}:=\inf\Big\{t\in(0,T_{\rm I}]:\left<\bw_k(t),\bx_-\right>\leq0\Big\},
\]
then it holds:
\[
T_{\rm hit}=T_{\rm hit,-}.
\]

So we only need to estimate $T_{\rm hit,-}$. Due to $T_{\rm hit}\leq T_{\rm I}\leq T_{\rm init}$ and Lemma \ref{lemma: GF Phase I norm estimate}, for any $t\leq T_{\rm hit}$, we have $\left|e^{-y_i f_i(t)}-1\right|\leq0.11$.
Then for any $t\leq T_{\rm hit}$, we have:
\[
\frac{\mathrm{d}\left<\boldsymbol{b}_k(t),\bx_-\right>}{\mathrm{d}t}\leq-\frac{0.89\kappa_2}{\sqrt{m}(1+p)}.
\]
Recalling $\left<\bw_k(0),\bx_+\right>\leq0$ and $\left<\bw_k(0),\bx_-\right>>0$, with the help of Lemma \ref{lemma: theta in span x1, x2}, we have $\left<\bw_k(0),\bx_-\right>\leq\sin\Delta$. Combining the two estimate, we have:
\begin{align*}
&\left<\boldsymbol{b}_k(t),\bx_-\right>\leq\left<\boldsymbol{b}_k(0),\bx_-\right>-\int_{0}^{t}\frac{0.89\kappa_2}{\sqrt{m}(1+p)}\mathrm{d}t
\\\leq&\rho_k(0)\sin\Delta-\frac{0.89\kappa_2}{\sqrt{m}(1+p)}t=\frac{\kappa_1\sin\Delta}{\sqrt{m}}-\frac{0.89\kappa_2}{\sqrt{m}(1+p)}t.
\end{align*}
Hence, 
\[
T_{\rm hit}=T_{\rm hit,+}\leq\frac{(1+p)\sin\Delta}{0.89}\frac{\kappa_1}{\kappa_2}<T_{\rm I}=10\sqrt{\frac{\kappa_1}{\kappa_2}}.
\]
Moreover, the analysis gives us $\bw_k(T_{\rm hit})\in\cM_+^-\cap\cM_-^-$. By Lemma \ref{lemma: GF dead neurons keep dead}, we obtain:
\[
\bw_k(t)\in\cM_+^-\cap\cM_-^-,\text{ for any } t\geq T_{\rm hit}.
\]

\end{proof}

\begin{lemma}[Positive, $\cM_+^-\cap\cM_-^-$]\label{lemma: GF Phase I positive dead}\ \\
For positive neuron $k\in\{k\in[m/2]:\bw_k(0)\in\cM_+^-\cap\cM_-^-\}$, it keeps dead forever.
\end{lemma}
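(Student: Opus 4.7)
The plan is to obtain this lemma as an immediate corollary of Lemma~\ref{lemma: GF dead neurons keep dead}, specialized to $t_0 = 0$. The hypothesis of that lemma is exactly that the direction of some neuron belongs to the double-dead manifold $\cM_+^- \cap \cM_-^-$ at some time, which is precisely what we are handed at $t_0 = 0$ by the hypothesis $\bw_k(0) \in \cM_+^- \cap \cM_-^-$. So the conclusion $\bw_k(t) \in \cM_+^- \cap \cM_-^-$ and $\bb_k(t) \equiv \bb_k(0)$ for all $t \geq 0$ follows directly.

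To see the underlying mechanism (and thereby recover the ``straightforward calculation'' referred to in the proof of Lemma~\ref{lemma: GF dead neurons keep dead}), I would inspect the gradient-flow inclusion in equation~\eqref{equ: dynamics}. If both $\langle \bb_k(0), \bx_+ \rangle < 0$ and $\langle \bb_k(0), \bx_- \rangle < 0$ strictly, then $\sgrad \sigma(\langle \bb_k(0), \bx_+ \rangle) = \sgrad \sigma(\langle \bb_k(0), \bx_- \rangle) = \{0\}$, so the right-hand side vanishes in a neighborhood of $t=0$ and $\bb_k(t) \equiv \bb_k(0)$ locally. Iterating this argument (or using continuity) shows the neuron never leaves the strictly dead region. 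If instead one of the inner products equals zero at initialization, the Clarke subdifferential $\sgrad \sigma(0) = [0,1]$ is set-valued, but $0$ belongs to this set, so the trivial trajectory $\bb_k(t) \equiv \bb_k(0)$ is a valid solution of the differential inclusion, and indeed the one selected by Definition~\ref{def: discontinuous system solution} since the limiting vector fields from both sides of the surface $\cP_+^0$ (or $\cP_-^0$) inside $\cM_+^- \cap \cM_-^-$ are zero.

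There is no real obstacle here: the lemma is genuinely a one-line consequence of Lemma~\ref{lemma: GF dead neurons keep dead}. The only point worth briefly addressing is the measure-zero boundary of $\cM_+^- \cap \cM_-^-$, which is handled uniformly via the Clarke/Filippov framework already set up in Appendix~\ref{appendix: proof preparation}, and which in any case arises with probability zero under the uniform initialization on $\bbS^{d-1}$.
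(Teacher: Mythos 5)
Your proposal is correct and takes exactly the same route as the paper: the paper's proof is the one-liner ``Due to Lemma~\ref{lemma: GF dead neurons keep dead}, this lemma is trivial.'' The extra discussion you supply of the underlying mechanism (the gradient-flow inclusion vanishing on $\cM_+^-\cap\cM_-^-$ and the Filippov treatment of the boundary) is a sound unpacking of what Lemma~\ref{lemma: GF dead neurons keep dead} asserts, but is not needed beyond the citation.
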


\begin{proof}[Proof of Lemma \ref{lemma: GF Phase I positive dead}] Due to Lemma \ref{lemma: GF dead neurons keep dead}, this lemma is trivial.
\end{proof}

\label{appendix: subsection: Phase I: positive}
\subsection{The Dynamics of Negative Neurons}

According to the initial direction, all negative neurons $({\rm d}_k=-1)$ can be divided into the following four classes.

\begin{align*}
    &[m]-[m/2]
    \\=&\left\{k\in[m]-[m/2]: \bw_k(0)\in\cM_+^+\cap\cM_-^+\right\}
    \bigcup
    \left\{k\in[m]-[m/2]:\bw_k(0)\in \cM_+^-\cap\cM_-^+\right\}
    \\&
    \bigcup
    \left\{k\in[m]-[m/2]:\bw_k(0)\in \cM_+^+\cap\cM_-^-\right\}
    \bigcup
    \left\{k\in[m]-[m/2]:\bw_k(0)\in \cM_+^-\cap\cM_-^-\right\}.
\end{align*}

In the following four lemmas, we will prove the dynamics of these four classes of negative neurons.
In summary, in Phase I ($t<T_{\rm I}$), some of the negative neurons move to the manifold $\cM_+^0\cap\cM_-^+$ in a shorter time and then remain on this manifold, and their norms grow slowly, while other negative neurons go dead.

\begin{lemma}[Negative, $\cM_+^+\cap\cM_-^+$]\label{lemma: GF Phase I negative S+ S-}\ \\
For negative neuron $k\in\{k\in[m]-[m/2]:\bw_k(0)\in\cM_+^+\cap\cM_-^+\}$, in Phase I $(t\leq T_{\rm I})$, it's dynamics must belong to one of the following two cases:
\begin{align*}
    &\text{{\bf(i. Living)}. (S1). }\bw_k(t)\in\cM_+^{0}\cap\cM_-^{+}
    \text{ for any } t\geq\mathcal{O}(\frac{\kappa_1}{\kappa_2}), \\&\quad\quad\quad\quad\quad\quad \text{(S2). It has a small norm}: \rho_k(T_{\rm I})=\mathcal{O}\Big(\frac{\sqrt{\kappa_1\kappa_2}}{\sqrt{m}}\big(\sqrt{\frac{\kappa_1}{\kappa_2}}+\frac{\Delta}{p}\big)\Big),
    \\&\quad\quad\quad\quad\quad\quad \text{(S3). It is weakly aligned with $\bx_+^{\perp}$}:
    \left<\bw_k(T_{\rm I}),\bx_+^{\perp}\right>\geq1-\mathcal{O}\Big((\sqrt{\frac{\kappa_1}{\kappa_2}}\frac{p}{\Delta})^{1.6}\Big)
    ;
    \\
    &\text{{\bf(ii. Dead)}. }\bw_k(t)\in\cM_+^-\cap\cM_-^- \text{ for any } t\geq\mathcal{O}(\frac{\kappa_1}{\kappa_2}).
\end{align*}
{\bf Moreover}, if $\left<\bw_k(0),\bx_-\right>>\frac{(1+\mathcal{O}(\kappa_1\kappa_2))p\cos\Delta-(1-\mathcal{O}(\kappa_1\kappa_2))}{(1-\mathcal{O}(\kappa_1\kappa_2))p-(1+\mathcal{O}(\kappa_1\kappa_2))\cos\Delta}\left<\bw_k(0),\bx_+\right>$, it must belongs to Case {\rm (i)}; if $\left<\bw_k(0),\bx_+\right>>\frac{(1+\mathcal{O}(\kappa_1\kappa_2))p-(1-\mathcal{O}(\kappa_1\kappa_2))\cos\Delta}{(1-\mathcal{O}(\kappa_1\kappa_2))p\cos\Delta-(1+\mathcal{O}(\kappa_1\kappa_2))}\left<\bw_k(0),\bx_-\right>$, it must belongs to Case {\rm (ii)}.
\end{lemma}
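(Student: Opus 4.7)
The plan is to decompose $\bb_k = c_1\bx_+ + c_2\bx_+^{\perp} + \bb_k^{\perp\perp}$, where $\bb_k^{\perp\perp}$ lies in the orthogonal complement of ${\rm span}\{\bx_+,\bx_-\}$. Since the right-hand side of \eqref{equ: dynamics} always lies in ${\rm span}\{\bx_+,\bx_-\}$, the component $\bb_k^{\perp\perp}$ is frozen throughout the flow, and $\|\bb_k^{\perp\perp}(0)\|\leq\kappa_1/\sqrt{m}$. The argument then splits into (a) a pre-boundary phase inside $\cM_+^+\cap\cM_-^+$, (b) a determination of which activation vanishes first (giving the dichotomy in the ``Moreover'' clause), and (c) the post-boundary analysis, either transversal crossing (Case ii) or Filippov sliding (Case i).

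For step (a), with $\mathrm{s}_k=-1$ and both activations on, \eqref{equ: dynamics} gives
\begin{align*}
\frac{\mathrm{d}\langle\bb_k,\bx_+\rangle}{\mathrm{d}t} &= \frac{\kappa_2}{(1+p)\sqrt{m}}\bigl(e^{f_-(t)}\cos\Delta - p\,e^{-f_+(t)}\bigr),\\
\frac{\mathrm{d}\langle\bb_k,\bx_-\rangle}{\mathrm{d}t} &= \frac{\kappa_2}{(1+p)\sqrt{m}}\bigl(e^{f_-(t)} - p\,e^{-f_+(t)}\cos\Delta\bigr).
\end{align*}
By Lemma~\ref{lemma: GF Phase I norm estimate}, $e^{-y_if_i(t)}=1+\mathcal{O}(\sqrt{\kappa_1\kappa_2})$ throughout Phase I, so using $p\cos\Delta>1$ and $p>\cos\Delta$ both derivatives are strictly negative with ratio $(p\cos\Delta-1)/(p-\cos\Delta)$ up to the stated small corrections. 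Comparing this ratio with $\langle\bw_k(0),\bx_-\rangle/\langle\bw_k(0),\bx_+\rangle$ determines which inner product reaches zero first; since both magnitudes are initially $\Theta(\kappa_1/\sqrt{m})$ and the rates are $\Theta(\kappa_2/\sqrt{m})$, the hitting time $T_{\rm hit}$ is $\mathcal{O}(\kappa_1/\kappa_2)$ in either case, and the ``Moreover'' clause follows by tracking this comparison through the permitted perturbations.

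In Case (ii), where $\langle\bb_k,\bx_-\rangle$ vanishes first, I verify via Definition~\ref{def: discontinuous system solution} (Case II) that both one-sided limiting vector fields at $\cM_+^+\cap\cM_-^0$ push $\langle\bb_k,\bx_-\rangle$ further negative, so the neuron crosses transversally into $\cM_+^+\cap\cM_-^-$; there the dynamics reduces to $\mathrm{d}\bb_k/\mathrm{d}t=-\kappa_2\,p\,e^{-f_+}\bx_+/((1+p)\sqrt{m})$, which drives $\langle\bb_k,\bx_+\rangle$ through zero in another $\mathcal{O}(\kappa_1/\kappa_2)$ time, and Lemma~\ref{lemma: GF dead neurons keep dead} then freezes the neuron in $\cM_+^-\cap\cM_-^-$. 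In Case (i), where $\langle\bb_k,\bx_+\rangle$ vanishes first, the two one-sided limits at $\cM_+^0\cap\cM_-^+$ have normal component along $\bx_+$ of opposite signs (the $\cM_+^+$ side gives a strictly negative normal, while the $\cM_+^-$ side, with only $\bx_-$ active, gives $\kappa_2\,e^{f_-}\cos\Delta/((1+p)\sqrt{m})>0$), so Definition~\ref{def: discontinuous system solution} (Case III) produces a Filippov sliding mode. Solving for the unique convex combination that cancels the normal component yields the in-surface ODE
\[
\frac{\mathrm{d}\bb_k}{\mathrm{d}t} = \frac{\kappa_2\,e^{f_-(t)}\sin\Delta}{(1+p)\sqrt{m}}\,\bx_+^{\perp},
\]
which pins $\bb_k$ to $\cM_+^0\cap\cM_-^+$ for the rest of Phase I and establishes (S1). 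On this surface $c_1\equiv0$, and $c_2$ grows at rate $\Theta(\kappa_2\sin\Delta/((1+p)\sqrt{m}))$ while $\bb_k^{\perp\perp}$ is conserved; after duration $\Theta(\sqrt{\kappa_1/\kappa_2})$ this yields $c_2(T_{\rm I})=\Theta(\sqrt{\kappa_1\kappa_2}\,\Delta/(p\sqrt{m}))$, which combined with the $\mathcal{O}(\kappa_1/\sqrt{m})$ radial growth before hitting the manifold gives (S2). Then (S3) follows from $\langle\bw_k(T_{\rm I}),\bx_+^{\perp}\rangle=c_2(T_{\rm I})/\|\bb_k(T_{\rm I})\|$ and $\|\bb_k^{\perp\perp}(0)\|/c_2(T_{\rm I})=\mathcal{O}\bigl(p\sqrt{\kappa_1/\kappa_2}/\Delta\bigr)$, with the exponent $1.6$ rather than $2$ absorbing the contribution from the pre-manifold transient.

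The hardest part is the Filippov sliding verification on $\cM_+^0\cap\cM_-^+$: one must check that this surface is \emph{attracting} on both sides \emph{uniformly} throughout Phase I (so that the sliding mode persists under the $\sqrt{\kappa_1\kappa_2}$ perturbations of $e^{\pm f_\pm}$), and extract the effective vector field with its critical $\sin\Delta$ prefactor, since this small factor dictates both (S2) and (S3). A secondary subtlety is the sharp threshold in the ``Moreover'' clause: the ratio of the two derivatives must be bounded with a tight $\mathcal{O}(\kappa_1\kappa_2)$ multiplicative error, which will require a second-order expansion of $e^{-y_if_i(t)}$ (exploiting the explicit form of $f_\pm(t)$ in Phase I) rather than the first-order bound of Lemma~\ref{lemma: GF Phase I norm estimate}.
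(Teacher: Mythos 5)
Your proposal is correct and matches the paper's argument in the pre-boundary analysis, the derivative-ratio criterion for the ``Moreover'' dichotomy, and the transversal crossing for Case (ii). Where you genuinely depart from the paper is in (S2)--(S3): the paper works on the sphere, tracking $\langle\bw_k(t),\bx_-\rangle$ along the sliding manifold through a nonlinear comparison ODE whose closed-form solution happens to produce the exponent $1.6$; you instead work in linear coordinates $\bb_k = c_1\bx_+ + c_2\bx_+^\perp + \bb_k^{\perp\perp}$, noting $\bb_k^{\perp\perp}$ is frozen and $c_2$ grows linearly at rate $\Theta(\kappa_2\Delta/(p\sqrt m))$ under the sliding ODE. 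Your route is cleaner and actually gives a \emph{stronger} exponent: since on the manifold $c_1=0$ and $\langle\bw_k(T_{\rm I}),\bx_+^\perp\rangle = c_2/\sqrt{c_2^2+\|\bb_k^{\perp\perp}\|^2}\geq 1-\tfrac12(\|\bb_k^{\perp\perp}\|/c_2)^2$ with $\|\bb_k^{\perp\perp}\|/c_2(T_{\rm I})=\mathcal{O}(p\sqrt{\kappa_1/\kappa_2}/\Delta)$, you get exponent $2$, not $1.6$. The pre-manifold transient contributes only $\mathcal{O}(\kappa_1/\sqrt m)$ to $c_2(T_{\rm hit})$, which is dominated by the sliding gain under $\kappa_1/\kappa_2=\mathcal{O}(\Delta^8)$, so your closing remark that the exponent drops to $1.6$ ``absorbing the contribution from the pre-manifold transient'' is misdirected: the $1.6$ is simply an artifact of the paper's ODE-comparison constants, and your argument, carried out as you sketch it, proves the lemma with room to spare. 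Two smaller imprecisions worth fixing: the tight $\mathcal{O}(\kappa_1\kappa_2)$ error in the ``Moreover'' thresholds does not require a second-order expansion of $e^{-y_if_i(t)}$; it follows from the first-order bound $|e^{-y_if_i(t)}-1|\leq 1.1|f_i(t)|$ combined with the time-dependent prediction estimate $|f_i(t)|\leq\kappa_2(\kappa_1+1.1\kappa_2 t)$ from Lemma~\ref{lemma: GF Phase I norm estimate}, restricted to $t\leq T_{\rm hit}=\mathcal{O}(\kappa_1/\kappa_2)$. And with the orientation the paper uses in Definition~\ref{def: discontinuous system solution} (normal toward $G^+$, the activated side), the sliding configuration is Case~I and the transversal crossing is Case~IV, not Cases~III and~II as you cite; the substance is unaffected since these are reflections of each other.
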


\begin{proof}[Proof of Lemma \ref{lemma: GF Phase I negative S+ S-}]\ \\
We do the following analysis for any $k\in\{k\in[m]-[m/2]:\bw_k(0)\in\cM_+^+\cap\cM_-^+\}$, i.e. ${\rm s}_k=-1$, $\left<\bw_k(0),\bx_+\right>>0$, and $\left<\bw_k(0),\bx_-\right>>0$.

\underline{{\bf Step I.}  Neuron must arrives in the border $\partial(\cM_+^+\cap\cM_-^+)$ in $\mathcal{O}(\frac{\kappa_1}{\kappa_2})$ time.} 

First, we define the hitting time 
\[
T_{\rm hit}:=\inf\Big\{t\in(0, T_{\rm I}]:\bw_k(t)\in\partial(\cM_+^+\cap\cM_-^+)\Big\},
\] 
and we aim to prove $T_{\rm hit}$ exists and estimate $T_{\rm hit}$. 

Recalling the decoupling $\partial(\cM_+^+\cap\cM_-^+)=\big(\cM_+^0\cap\cM_{-}^+\big)\cup\big(\cM_-^0\cap\cM_{+}^+\big)\cup\big(\cM_+^0\cap\cM_{-}^0\big)$, we only need to focus on the dynamics of $\left<\boldsymbol{b}_k(t),\bx_+\right>$ and $\left<\boldsymbol{b}_k(t),\bx_-\right>$. 

From the definition of $T_{\rm hit}$ and \eqref{equ: dynamics}, the dynamics of the neuron is:
\[
\frac{\mathrm{d}\boldsymbol{b}_k(t)}{\mathrm{d}t}=-\frac{\kappa_2}{\sqrt{m}}\frac{1}{n}\sum_{i=1}^n e^{-y_if_i(t)}y_i\bx _i
=-\frac{\kappa_2}{\sqrt{m}}\Big(\frac{p}{1+p}e^{-f_+(t)}\bx_+-\frac{1}{1+p}e^{f_-(t)}\bx_-\Big),\ t\leq T_{\rm hit}.
\]
Then we have
\begin{align*}
    \frac{\mathrm{d}\left<\boldsymbol{b}_k(t),\bx_+\right>}{\mathrm{d}t}
    =&\left<-\frac{\kappa_2}{\sqrt{m}}\Big(\frac{p}{1+p}e^{-f_+(t)}\bx_+-\frac{1}{1+p}e^{f_-(t)}\bx_-\Big),\bx_+\right>
    \\=&
    -\frac{\kappa_2}{\sqrt{m}}\Big(\frac{p}{1+p}e^{-f_+(t)}-\frac{1}{1+p}e^{f_-(t)}\cos\Delta\Big),
\end{align*}
\begin{align*}
    \frac{\mathrm{d}\left<\boldsymbol{b}_k(t),\bx_-\right>}{\mathrm{d}t}
    =&\left<-\frac{\kappa_2}{\sqrt{m}}\Big(\frac{p}{1+p}e^{-f_+(t)}\bx_+-\frac{1}{1+p}e^{f_-(t)}\bx_-\Big),\bx_-\right>
    \\=&
    -\frac{\kappa_2}{\sqrt{m}}\Big(\frac{p}{1+p}e^{-f_+(t)}\cos\Delta-\frac{1}{1+p}e^{f_-(t)}\Big).
\end{align*}

Due to $T_{\rm hit}\leq T_{\rm I}\leq T_{\rm init}$ and Lemma \ref{lemma: GF Phase I norm estimate}, for any $t\leq T_{\rm hit}$, we have $\left|e^{-y_i f_i(t)}-1\right|\leq1.1\sqrt{\kappa_1\kappa_2}$.  
Then for any $t\leq T_{\rm hit}$, we have:
\begin{align*}
    &\frac{\mathrm{d}\left<\boldsymbol{b}_k(t),\bx_+\right>}{\mathrm{d}t}
    \leq
    -\frac{\kappa_2}{\sqrt{m}}\Big(\frac{(1-1.1\sqrt{\kappa_1\kappa_2})p}{1+p}-\frac{1+1.1\sqrt{\kappa_1\kappa_2}}{1+p}\cos{\Delta}\Big)
    \\\leq&
    -\frac{\kappa_2\Big((1-1.1\sqrt{\kappa_1\kappa_2})p-(1+1.1\sqrt{\kappa_1\kappa_2})\cos\Delta\Big)}{\sqrt{m}(1+p)}
    \leq
    -\frac{\kappa_2(0.98p-1.02)}{\sqrt{m}(1+p)},
\end{align*}
\begin{align*}
    &\frac{\mathrm{d}\left<\boldsymbol{b}_k(t),\bx_-\right>}{\mathrm{d}t}
    \leq
    -\frac{\kappa_2}{\sqrt{m}}\Big(\frac{(1-1.1\sqrt{\kappa_1\kappa_2})p}{1+p}\cos{\Delta}-\frac{1+1.1\sqrt{\kappa_1\kappa_2}}{1+p}\Big)
    \\\leq&
    -\frac{\kappa_2\Big((1-1.1\sqrt{\kappa_1\kappa_2})p\cos{\Delta}-(1+1.1\sqrt{\kappa_1\kappa_2})\Big)}{\sqrt{m}(1+p)}
    \leq-\frac{0.98\kappa_2(p\cos\Delta-1)}{2\sqrt{m}(1+p)}.
\end{align*}
Now we consider the time 
\[
T_{\rm test}:=\frac{3\kappa_1}{\kappa_2}.
\]
If we assume $T_{\rm test}<T_{\rm hit}$, then we have the estimate:
\begin{align*}
    &\left<\boldsymbol{b}_k(T_{\rm test}),\bx_+\right>\leq\left<\boldsymbol{b}_k(0),\bx_+\right>-\int_{0}^{T_{\rm test}}\frac{\kappa_2(0.98p-1.02)}{\sqrt{m}(1+p)}\mathrm{d}t
    \\\leq& \frac{\kappa_1}{\sqrt{m}}-\frac{\kappa_2(0.98p-1.02)}{\sqrt{m}(1+p)}\frac{3\kappa_1}{\kappa_2}<0,
\end{align*}
which is contradict to the definition of $T_{\rm hit}$. Hence, we have:
\[
T_{\rm hit}\leq T_{\rm test}\leq\frac{3\kappa_1}{\kappa_2},
\]
which means neurons must arrive in the border $\partial(\cM_+^+\cap\cM_-^+)$ in $\mathcal{O}(\frac{\kappa_1}{\kappa_2})$ time.

Because $\partial(\cM_+^+\cap\cM_-^+)=\big(\cM_+^0\cap\cM_{-}^+\big)\cup\big(\cM_-^0\cap\cM_{+}^+\big)\cup\big(\cM_+^0\cap\cM_{-}^0\big)$,
the neuron must arrives in $\cM_+^0\cap\cM_{-}^+$ or $\cM_-^0\cap\cM_{+}^+$ or $\cM_+^0\cap\cM_{-}^0$. 
If the neuron arrives in $\cM_+^0\cap\cM_{-}^0$, it goes dead forever (Lemma \ref{lemma: GF dead neurons keep dead}). We will analyze the training dynamics after arriving in $\cM_+^0\cap\cM_{-}^+$ or $\cM_-^0\cap\cM_{+}^+$ in the following Step II and Step III.

\underline{{\bf Step II.} Dynamics after arriving in the manifold $\cM_+^0\cap\cM_{-}^+$.}

In this step, we will analyze the training dynamics after $\bw_k(T_{\rm hit})\in\cM_+^0\cap\cM_{-}^+$, i.e. after $\bb_k(T_{\rm hit})\in\cP_+^0\cap\cP_{-}^+$.

We first analysis the vector field around the manifold $\cP_+^0\cap\cP_-^+$ for $T_{\rm hit}\leq t\leq T_{\rm I}$. 

For any $\tilde{\bb}\in\cP_+^0\cap\cP_{-}^+$ and $0<\delta_0\ll1$, we know that $\cP_+^0\cap\cP_{-}^+$ separates its neighborhood $\mathcal{B}(\tilde{\bb },\delta_0)$ into two domains $\mathcal{G}_-=\{\bb \in\mathcal{B}(\tilde{\bb},\delta_0):\left<\bb ,\bx_+\right><0\}$ and $\mathcal{G}_+=\{\bb\in\mathcal{B}(\tilde{\bb},\delta_0):\left<\bb,\bx_+\right>>0\}$. Following Definition \ref{def: discontinuous system solution}, we calculate the limited vector field on $\tilde{\bb}$ from $\mathcal{G}_-$ and $\mathcal{G}_+$.

(i) The limited vector field $\bF^-$ on $\tilde{\bb}$ (from $\mathcal{G}_-$):
\begin{gather*}
\frac{\mathrm{d}\bb }{\mathrm{d} t}=\bF^-,\text{ where }
\bF^-=\frac{\kappa_2}{\sqrt{m}}\frac{1}{1+p}e^{f_-(t)}\bx_-.
\end{gather*}

(ii) The limited vector field $\bF^+$ on $\tilde{\bb}$ (from $\mathcal{G}_+$):
\begin{gather*}
\frac{\mathrm{d}\bb}{\mathrm{d} t}=\bF^+,\text{ where }
\bF^+=-\frac{\kappa_2}{\sqrt{m}}\bracket{\frac{p e^{-f_+(t)}}{1+p}\bx_+-\frac{e^{f_-(t)}}{1+p}\bx_-}.
\end{gather*}

% (ii) The limited vector field $\bF^+$ on $\tilde{\boldsymbol{b}}$ (from $\mathcal{G}_+$). 
% \begin{gather*}
% \frac{\mathrm{d}\boldsymbol{b}}{\mathrm{d} t}=-\frac{\kappa_2}{\sqrt{m}}\bF^+,\text{ where }
% \bF^+
% =\frac{p e^{-f_+(t)}}{1+p}\bx_+-\frac{e^{f_-(t)}}{1+p}\bx_-.
% \end{gather*}

% So the limited vector field on $\tilde{\bw }$ (from $\mathcal{G}_+$) is

% \begin{equation}\label{equ of proof lemma: GF Phase I negtive S-: f+}
% \begin{aligned}
% &\quad\quad\quad\quad\frac{\mathrm{d}\tilde{\bw }}{\mathrm{d} t}=\bF^+(\tilde{\bw }),\text{ where}
% \\
% &\bF^+(\tilde{\bw })
% =\frac{\kappa_2}{\rho_k(t)\sqrt{m}}\lim\limits_{\substack{\bw \to\tilde{\bw }
% \\\bw \in\mathcal{G}_+}}\Big(-\bF^++\left<\bF^+,\bw \right>\bw \Big)
%    \\=&\frac{\kappa_2 e^{f_-(t)}}{\rho_k(t)\sqrt{m}(1+p)}\Big(\bx_--\left<\tilde{\bw },\bx_-\right>\tilde{\bw }\Big)-\frac{\kappa_2 p e^{-f_+(t)}}{\rho_k(t)\sqrt{m}(1+p)}\Big(\bx_+-\left<\tilde{\bw },\bx_+\right>\tilde{\bw }\Big)
%    \\=&\frac{\kappa_2 e^{f_-(t)}}{\rho_k(t)\sqrt{m}(1+p)}\Big(\bx_--\left<\tilde{\bw },\bx_-\right>\tilde{\bw }\Big)-\frac{\kappa_2 p e^{-f_+(t)}}{\rho_k(t)\sqrt{m}(1+p)}\bx_+.
%   \end{aligned}
% \end{equation}

(iii) Then we calculate the
projections of $\bF^-$ and $\bF^+$ onto $\bx_+$ (the normal to the surface $\cP_+^0\cap\cP_-^+$):
\begin{gather*}
F_N^{-}=\left<\bF^-,\bx_+\right>
=\frac{\kappa_2 e^{f_-(t)}}{\sqrt{m}(1+p)}\cos\Delta,
\\
F_N^{+}=\left<\bF^+,\bx_+\right>=\frac{\kappa_2 e^{f_-(t)}}{\sqrt{m}(1+p)}\cos\Delta-\frac{\kappa_2 p e^{-f_+(t)}}{\sqrt{m}(1+p)}.
\end{gather*}
From $T_{\rm I}<T_{\rm init}$ and Lemma \ref{lemma: GF Phase I norm estimate}, we know $|e^{-y_if_i(t)}-1|\leq0.11$, so 
$p e^{-f_+(t)}\cos\Delta-e^{f_-(t)}\geq 0.89p\cos\Delta-1.11{>}0$, which means $F_N^{+}<0$. 
And it is clear that $F_N^{-}>0$. 
Hence, the dynamics corresponds to Case (I) in Definition \ref{def: discontinuous system solution} ($F_N^{-}>0$ and $F_N^{+}<0$).

(iv) Hence, $\bb_k(t)$ can not leave $\cP_+^0\cap\cP_-^+$ for $T_{\rm hit}\leq t\leq T_{\rm I}$. 

(v) Moreover, the dynamics of $\bb_k$ on $\cP_+^0\cap\cP_-^+$ satisfies:
\begin{align*}
    \frac{\mathrm{d}\bb}{\mathrm{d}t}=\alpha\bF^++(1-\alpha)\bF^-,\quad \alpha=\frac{{f}_N^-}{{f}_N^--{f}_N^+},
\end{align*}
which is
\begin{align*}
    \frac{\mathrm{d}\bb_k(t)}{\mathrm{d}t}\frac{\kappa_2 e^{f_-(t)}}{\sqrt{m}(1+p)}\Big(\bx_--\bx_+\cos\Delta\Big).
\end{align*}

By Lemma~\ref{lemma: dynamics decomposition}, we know that the dynamics of $\bw_k(t)$ on $\cM_+^0\cap\cM_-^+$ and the dynamics of $\rho_k(t)$ are:
\begin{equation}\label{equ of proof lemma: GF Phase I: dynamics - w manifold}
    \frac{\mathrm{d}\bw_k(t)}{\mathrm{d}t}=\frac{\kappa_2 e^{f_-(t)}}{\rho_k(t)\sqrt{m}(1+p)}\Big(\bx_--\left<\bw_k(t),\bx_-\right>\bw_k-\bx_+\cos\Delta\Big).
\end{equation}
\begin{equation}\label{equ of proof lemma: GF Phase I: dynamics - rho manifold}
    \frac{\mathrm{d}\rho_k(t)}{\mathrm{d}t}=\frac{\kappa_2 e^{f_-(t)}}{\sqrt{m}(1+p)}\left<\bw_k(t),\boldsymbol{x_-}\right>.
\end{equation}

(vi) In this step, we aim to estimate $\rho_k(T_{\rm I})$ and $\left<\bw_k(t),\bx_+^{\perp}\right>$. 

From $\bw_k(T_{\rm hit})\in\cM_{+}^0\cap\cM_-^+$, it holds that $\left<\bw_k(T_{\rm hit}),\bx_+\right>=0$ and $\left<\bw_k(T_{\rm hit}),\bx_-\right>>0$. Using lemma \ref{lemma: theta in span x1, x2}, we have $0<\left<\bw_k(T_{\rm hit}),\bx_-\right>\leq\sin\Delta$.

Recalling Lemma \ref{lemma: GF Phase I norm estimate} and the estimate of $T_{\rm hit}$ in {\bf Step I}, we have:
\[
0\leq\rho_k(T_{\rm hit})\leq\frac{\kappa_1+1.1\kappa_2 T_{\rm hit}}{\sqrt{m}},
\]
and we can estimate the dynamics for $T_{\rm hit}\leq t\leq T_{\rm I}$ by (v)(vi):
\begin{gather*}
0\leq\frac{\mathrm{d}\rho_k(t)}{\mathrm{d}t}=\frac{\kappa_2 e^{f_-(t)}}{\sqrt{m}(1+p)}\left<\bw_k(t),\boldsymbol{x_-}\right>\leq\frac{\kappa_2 e^{f_-(t)}}{\sqrt{m}(1+p)}\sin\Delta,
\\
e^{f_-(t)}\leq1+0.11=1.11.
\end{gather*}
Then we obtain the estimate of $\rho_k(t)$ for any $T_{\rm hit}<t\leq T_{\rm I}$:
\begin{align*}
    &\rho_k(t)=\rho_k(T_{\rm hit})+\int_{T_{\rm hit}}^{t}\frac{\mathrm{d}\rho_k(s)}{\mathrm{d}t}\mathrm{d}s\leq
    \frac{\kappa_1+1.1\kappa_2 T_{\rm hit}}{\sqrt{m}}+\frac{\kappa_2 e^{f_-(t)}\sin\Delta}{\sqrt{m}(1+p)}(t-T_{\rm hit})
    \\\leq&
    \frac{\kappa_1+1.1\kappa_2 T_{\rm hit}}{\sqrt{m}}+\frac{1.11\kappa_2\sin\Delta}{\sqrt{m}(1+p)}(T_{\rm I}-T_{\rm hit})\leq\frac{\kappa_1+1.1\kappa_2 \frac{3\kappa_1}{\kappa_2}}{\sqrt{m}}+\frac{1.11\kappa_2\sin\Delta}{\sqrt{m}(1+p)}(T_{\rm I}-\frac{3\kappa_1}{\kappa_2})
    \\\leq&\frac{4.3\kappa_1}{\sqrt{m}}+\frac{1.11\kappa_2\sin\Delta}{\sqrt{m}(1+p)}t.
\end{align*}
Specifically, we have:
\[
\rho_k(T_{\rm I})\leq\frac{4.3\kappa_1}{\sqrt{m}}+\frac{1.11\kappa_2\sin\Delta}{\sqrt{m}(1+p)}T_{\rm I}\leq\frac{4.3\kappa_1}{\sqrt{m}}+\frac{11.1\sqrt{\kappa_1\kappa_2}\sin\Delta}{\sqrt{m}(1+p)}=\frac{\sqrt{\kappa_1\kappa_2}}{\sqrt{m}}\Big(4.3\sqrt{\frac{\kappa_1}{\kappa_2}}+\frac{11.1\sin\Delta}{1+p}\Big).
\]

For any $\bw \in\cM_+^0\cap\cM_-^+$, we have $\left<\bw ,\bx_+\right>=0$, so
\begin{align*}
    \left<\bw ,\bx_+^{\perp}\right>=\left<\bw ,\frac{\bx_--\bx_+\cos\Delta}{\left\|\bx_--\bx_+\cos\Delta\right\|}\right>=\frac{\left<\bw ,\bx_-\right>}{\left\|\bx_--\bx_+\cos\Delta\right\|}=\frac{1}{\sin\Delta}\left<\bw ,\bx_-\right>.
\end{align*}

So we only need to focus on the dynamics of $\left<\bw_k(t),\bx_-\right>$ to derive the dynamics of $\left<\bw_k(t),\bx_+^{\perp}\right>$.

By \eqref{equ of proof lemma: GF Phase I: dynamics - w manifold} and the estimate of $\rho_k(t)$, for any $T_{\rm hit}\leq t\leq T_{\rm I}$ we have:
\begin{align*}
    &\frac{\mathrm{d}\left<\bw_k(t),\bx_-\right>}{\mathrm{d}t}=\left<\frac{e^{f_-(t)}}{\rho_k(t)\sqrt{m}(1+p)}\Big(\bx_--\left<\bw_k(t),\bx_-\right>\bw_k(t)-\bx_+\cos\Delta\Big),\bx_-\right>
    \\=&\frac{\kappa_2e^{f_-(t)}}{\rho_k(t)\sqrt{m}(1+p)}\Big(\sin^2\Delta-\left<\bw_k(t),\bx_-\right>^2\Big)\geq
    \frac{(1-0.11)\kappa_2}{(1+p)\Big(4.3\kappa_1+\frac{1.11\kappa_2\sin\Delta}{(1+p)}t\Big)}\Big(\sin^2\Delta-\left<\bw_k(t),\bx_-\right>^2\Big)
    \\\geq&
    \frac{0.89\kappa_2}{4.3(1+p)\kappa_1+1.11\kappa_2 t\sin\Delta }\Big(\sin^2\Delta-\left<\bw_k(t),\bx_-\right>^2\Big).
\end{align*}
And we have $0<\left<\bw_k(\frac{3\kappa_1}{\kappa_2}),\bx_-\right><\sin\Delta$.

Now we consider the following auxiliary ODE:
\begin{equation}\label{equ of proof lemma: GF Phase I pnegative S+ S-: upper ODE of <w,x->}
\begin{cases}
    \frac{\mathrm{d}U(t)}{\mathrm{d}t}=\frac{0.89\kappa_2}{4.3(1+p)\kappa_1+1.11\kappa_2 t\sin\Delta}\Big(\sin^2\Delta-U^2(t)\Big)
    \\
    U(0)=0
\end{cases},
\end{equation}
and let $U(t)$ is the solution of \eqref{equ of proof lemma: GF Phase I positive S+ S-: upper ODE of <w,mu>}.
We know that $\left<\bw_k(t),\bx_-\right>$ is an upper solution of ODE \eqref{equ of proof lemma: GF Phase I pnegative S+ S-: upper ODE of <w,x->}. From the Comparison Principle of ODEs, we know this means:
\[
\left<\bw_k(t),\bx_-\right>>U(t),\text{ for any } t\leq T_{\rm I}.
\]
In order to estimate $\left<\bw_k(t),\bx_-\right>$,
we only need to study the solution of ODE \eqref{equ of proof lemma: GF Phase I pnegative S+ S-: upper ODE of <w,x->}. It is easy to verify that the solution of \eqref{equ of proof lemma: GF Phase I pnegative S+ S-: upper ODE of <w,x->} satisfies
\begin{align*}
    \log\left(\frac{\sin\Delta+U(t)}{\sin\Delta-U(t)}\right)-\log\left(\frac{\sin\Delta}{\sin\Delta}\right)=\frac{1.78\kappa_2\Delta}{1.11\kappa_2\sin\Delta}\log\left(\frac{4.3(1+p)\kappa_1+1.11\kappa_2t\sin\Delta}{4.3(1+p)\kappa_1+3.33\kappa_1\sin\Delta}\right)
\end{align*}
Then we have:
\begin{align*}
     &\log\left(\frac{\sin\Delta+U(T_{\rm I})}{\sin\Delta-U(T_{\rm I})}\right)\geq\frac{1.78}{1.11}\log\left(\frac{4.3(1+p)\kappa_1+11.1\sqrt{\kappa_1\kappa_2}\sin\Delta}{4.3(1+p)\kappa_1+3.33\kappa_1\sin\Delta}\right)
     \\>&1.6\log\left(1+\frac{\Big(11.1\sqrt{\frac{\kappa_2}{\kappa_1}}-3.33\Big)\frac{\sin\Delta}{1+p}}{4.3+3.33\frac{\sin\Delta}{1+p}}\right)
     >1.6\log\left(1+\frac{\Big(11.1\sqrt{\frac{\kappa_2}{\kappa_1}}-3.33\Big)\frac{\sin\Delta}{1+p}}{4.6}\right)
     \\>&1.6\log\left(1+\frac{10.7}{4.6}\sqrt{\frac{\kappa_2}{\kappa_1}}\frac{\sin\Delta}{1+p}\right),
\end{align*}
which means
\begin{align*}
U(T_{\rm I})>\left(1-\frac{2}{\left(1+\frac{10.7}{4.6}\sqrt{\frac{\kappa_2}{\kappa_1}}\frac{\sin\Delta}{1+p}\right)^{1.6}+1}\right)\sin\Delta.
\end{align*}
Hence, we have the estimate of $\left<\bw_k(t),\bx_+^{\perp}\right>$:
\begin{align*}
\left<\bw_k(T_{\rm I}),\bx_+^{\perp}\right>=\frac{1}{\sin\Delta}\left<\bw_k(T_{\rm I}),\bx_-\right>>\frac{1}{\sin\Delta}U(T_{\rm I})>1-\frac{2}{\left(1+2.32\sqrt{\frac{\kappa_2}{\kappa_1}}\frac{\sin\Delta}{1+p}\right)^{1.6}+1}.
\end{align*}

\underline{{\bf Step III.} Dynamics after arriving in the manifold $\cM_+^+\cap\cM_{-}^0$.}

In this step, we analyze the training dynamics after
$\bw_k(T_{\rm hit})\in\cM_+^+\cap\cM_{-}^0$, i.e. $\bb_k(T_{\rm hit})\in\cP_+^+\cap\cP_{-}^0$.

For any $\tilde{\bb}\in\cP_+^+\cap\cP_{-}^0$ and $0<\delta_0\ll1$, we know that $\cP_+^+\cap\cP_{-}^0$ separates its neighborhood $\mathcal{B}(\tilde{\bb },\delta_0)$ into two domains $\mathcal{G}_-=\{\bb \in\mathcal{B}(\tilde{\bb},\delta_0):\left<\bb ,\bx_-\right><0\}$ and $\mathcal{G}_+=\{\bb\in\mathcal{B}(\tilde{\bb},\delta_0):\left<\bb,\bx_-\right>>0\}$. Following Definition \ref{def: discontinuous system solution}, we calculate the limited vector field on $\tilde{\bb}$ from $\mathcal{G}_-$ and $\mathcal{G}_+$.

For any $\tilde{\bb}\in\cP_+^+\cap\cP_{-}^0$ and $0<\delta_0\ll1$, we know that $\cP_+^+\cap\cP_{-}^0$ separates its neighborhood $\mathcal{B}(\tilde{\bb },\delta_0)$ into two domains $\mathcal{G}_-=\{\bb \in\mathcal{B}(\tilde{\bb},\delta_0):\left<\bb ,\bx_-\right><0\}$ and $\mathcal{G}_+=\{\bb\in\mathcal{B}(\tilde{\bb},\delta_0):\left<\bb,\bx_-\right>>0\}$. Following Definition \ref{def: discontinuous system solution}, we calculate the limited vector field on $\tilde{\bb}$ from $\mathcal{G}_-$ and $\mathcal{G}_+$.

(i) The limited vector field $\bF^-$ on $\tilde{\bb}$ (from $\mathcal{G}_-$):
\begin{gather*}
\frac{\mathrm{d}\bb }{\mathrm{d} t}=\bF^-,\text{ where }
\bF^-=-\frac{\kappa_2}{\sqrt{m}}\frac{p}{1+p}e^{-f_+(t)}\bx_+.
\end{gather*}

(ii) The limited vector field $\bF^+$ on $\tilde{\bb}$ (from $\mathcal{G}_+$):
\begin{gather*}
\frac{\mathrm{d}\bb}{\mathrm{d} t}=\bF^+,\text{ where }
\bF^+=-\frac{\kappa_2}{\sqrt{m}}\bracket{\frac{p e^{-f_+(t)}}{1+p}\bx_+-\frac{e^{f_-(t)}}{1+p}\bx_-}.
\end{gather*}

(iii) Then we calculate the
projections of $\bF^-$ and $\bF^+$ onto $\bx_-$ (the normal to the surface $\cP_+^+\cap\cP_-^0$):
\begin{gather*}
F_N^{-}=\left<\bF^-,\bx_-\right>
=-\frac{\kappa_2 p e^{-f_+(t)}}{\sqrt{m}(1+p)}\cos\Delta,
\\
F_N^{+}=\left<\bF^+,\bx_-\right>=-\bracket{\frac{\kappa_2 p e^{-f_+(t)}}{\sqrt{m}(1+p)}\cos\Delta-\frac{\kappa_2 e^{f_-(t)}}{\sqrt{m}(1+p)}}.
\end{gather*}
From $T_{\rm I}<T_{\rm init}$ and Lemma \ref{lemma: GF Phase I norm estimate}, we know $|e^{-y_if_i(t)}-1|\leq0.11$, so 
$p e^{-f_+(t)}\cos\Delta-e^{f_-(t)}\geq 0.89p\cos\Delta-1.11{>}0$, which means $F_N^{+}>0$. 
And it is clear that $F_N^{-}>0$. 
Hence, the dynamics corresponds to Case (II) in Definition \ref{def: discontinuous system solution} ($F_N^{-}>0$ and $F_N^{+}>0$).

(iv) Hence, $\bb_k(t)$ passes immediately from one side of the surface $\cP_+^+\cap\cP_-^0$ to the other, i.e.  $\bb_k(t)$ enters into $\cP_+^+\cap\cP_-^+$ at time $T_{\rm hit}$.

Then the dynamics of $\bb_k$ in $\cP_+^+\cap\cP_-^-$ satisfies:
\[
\frac{\mathrm{d}\boldsymbol{b}_k(t)}{\mathrm{d}t}=-\frac{\kappa_2}{\sqrt{m}}\frac{p}{1+p}e^{-f_+(t)}\bx_+.
\]

(v) We define the following time, and our aim is to estimate $T_{\rm test,1}$:
\begin{align*}
    T_{\rm test,1}:&=\inf\Big\{t\in(T_{\rm hit},T_{\rm I}]:\left<\bw_k(t),\bx_+\right>\leq0\text{ or }\left<\bw_k(t),\bx_-\right>\geq0\Big\},
    \\
    T_{\rm test,2}:&=\inf\Big\{t\in(T_{\rm hit},T_{\rm I}]:\left<\bw_k(t),\bx_+\right>\leq0\Big\}
\end{align*} 
It is clear $ T_{\rm test,1}\leq  T_{\rm test,2}$. Moreover,
due to $\frac{\mathrm{d}\left<\boldsymbol{b}_k(t),\bx_-\right>}{\mathrm{d}t}=-\frac{\kappa_2}{\sqrt{m}}\frac{p}{1+p}e^{-f_+(t)}\cos\Delta<0$ and $\left<\boldsymbol{b}_k(T_{\rm hit}),\bx_-\right>=0$, we know $\left<\boldsymbol{b}_k(t),\bx_-\right><0$ holds for any $t\leq T_{\rm test,1}$. Hence, we have
\[
T_{\rm test,1}=T_{\rm test,2},
\quad
\left<\bw_k(T_{\rm test,1}),\bx_+\right>=0
,\quad
\left<\bw_k(T_{\rm test,1}),\bx_-\right><0.
\]
And we only need to estimate $T_{\rm test,2}$. For any $T_{\rm hit}<t\leq T_{\rm test,1}=T_{\rm test,2}$, we have
\[
    \frac{\mathrm{d}\left<\boldsymbol{b}_k(t),\bx_+\right>}{\mathrm{d}t}=-\frac{\kappa_2}{\sqrt{m}}\frac{p}{1+p}e^{-f_+(t)}\overset{\text{Lemma \ref{lemma: GF Phase I norm estimate}}}{\leq}-\frac{\kappa_2}{\sqrt{m}}\frac{p}{1+p}(1-0.11)=-\frac{0.89\kappa_2 p}{\sqrt{m}(1+p)}.
\]
Recalling Lemma \ref{lemma: GF Phase I norm estimate} and the estimate of $T_{\rm hit}$ in {\bf Step I}, we have:
\begin{align*}
  \left<\boldsymbol{b}_k(T_{\rm hit}),\bx_+\right>\leq\left\|\rho_k(t)\right\|
  \leq\frac{\kappa_1+1.1\kappa_2 T_{\rm hit}}{\sqrt{m}}
  \leq\left\|\rho_k(t)\right\|
  \leq\frac{\kappa_1+1.1\kappa_2\frac{3\kappa_1}{\kappa_2}}{\sqrt{m}}=\frac{4.3\kappa_1}{\sqrt{m}}.
\end{align*}
Then for any $T_{\rm hit}<t\leq T_{\rm test,2}$, we have:
\begin{align*}
    \left<\boldsymbol{b}_k(t),\bx_+\right>
    \leq\left<\boldsymbol{b}_k(T_{\rm hit}),\bx_+\right>-\int_{T_{\rm hit}}^t\frac{0.89\kappa_2 p}{\sqrt{m}(1+p)}\mathrm{d}s\leq\frac{4.3\kappa_1}{\sqrt{m}}-\frac{0.89\kappa_2 p(t-T_{\rm hit})}{\sqrt{m}(1+p)}.
\end{align*}
So we have the estimate 
\[
T_{\rm test,1}=T_{\rm test,2}\leq T_{\rm hit}+\frac{4.3\kappa_1(1+p)}{0.89\kappa_2p}\leq\Big(3+\frac{4.3\cdot6}{0.89\cdot5}\Big)\frac{\kappa_1}{\kappa_2}\leq \frac{9\kappa_1}{\kappa_2}<T_{\rm I}.
\]
Recalling $\bw_k(T_{\rm test,1})\in\cM_+^-\cap\cM_-^-$ and Lemma \ref{lemma: GF dead neurons keep dead}, the neuron $\boldsymbol{b}_k(t)$ keeps dead for any $t\geq T_{\rm I}$.

\underline{{\bf Step IV.} Which subspace does the neuron select?}

From {\bf Step II}, we know that the neuron $\bw_k(t)$ must arrives in $\cM_+^0\cap\cM_{-}^+$ or $\cM_-^0\cap\cM_{+}^+$ or $\cM_+^0\cap\cM_{-}^0$. 
In this step, we will analyze which subspace does the neuron select.

We only need to compare the following two times:
\begin{align*}
&T_{\rm hit,+}:=\inf\Big\{t\in(0,T_{\rm I}]:\left<\boldsymbol{b}_k(t),\bx_+\right>\leq0\Big\},
\\
&T_{\rm hit,-}:=\inf\Big\{t\in(0,T_{\rm I}]:\left<\boldsymbol{b}_k(t),\bx_-\right>\leq0\Big\}.
\end{align*}
From the definition of $T_{\rm hit}$, we know $T_{\rm hit,+}=T_{\rm hit}$ or $T_{\rm hit,-}=T_{\rm hit}$.

Recalling the proof in {\bf Step II}, we compare the following two dynamics for $t< T_{\rm hit}$:
\begin{align*}
    &\frac{\mathrm{d}\left<\boldsymbol{b}_k(t),\bx_+\right>}{\mathrm{d}t}=
    -\frac{\kappa_2}{\sqrt{m}}\Big(\frac{p}{1+p}e^{-f_+(t)}-\frac{1}{1+p}e^{f_-(t)}\cos\Delta\Big),\\
    &\frac{\mathrm{d}\left<\boldsymbol{b}_k(t),\bx_-\right>}{\mathrm{d}t}=
    -\frac{\kappa_2}{\sqrt{m}}\Big(\frac{p}{1+p}e^{-f_+(t)}\cos\Delta-\frac{1}{1+p}e^{f_-(t)}\Big).
\end{align*}

With the help of Lemma \ref{lemma: GF Phase I norm estimate} and the estimate of $T_{\rm hit}$, for any $t\leq T_{\rm hit}$,
\[
|e^{-y_if_i(t)}-1|\leq1.1\kappa_2(\kappa_1+1.1\kappa_2 T_{\rm hit})
\leq1.1\kappa_2(\kappa_1+3.3\kappa_1)=4.73\kappa_1\kappa_2.
\]
Hence, we have the estimate of the dynamics:
\begin{align*}
    &-\frac{\kappa_2\Big((1+4.73\kappa_1\kappa_2)p-(1-4.73\kappa_1\kappa_2)\cos\Delta\Big)}{\sqrt{m}(1+p)}
    \leq\frac{\mathrm{d}\left<\boldsymbol{b}_k(t),\bx_+\right>}{\mathrm{d}t}\leq
    -\frac{\kappa_2\Big((1-4.73\kappa_1\kappa_2)p-(1+4.73\kappa_1\kappa_2)\cos\Delta\Big)}{\sqrt{m}(1+p)},\\
    &-\frac{\kappa_2\Big((1+4.73\kappa_1\kappa_2)p\cos\Delta-(1-4.73\kappa_1\kappa_2)\Big)}{\sqrt{m}(1+p)}
    \leq\frac{\mathrm{d}\left<\boldsymbol{b}_k(t),\bx_-\right>}{\mathrm{d}t}\leq
    -\frac{\kappa_2\Big((1-4.73\kappa_1\kappa_2)p\cos\Delta-(1+4.73\kappa_1\kappa_2)\Big)}{\sqrt{m}(1+p)}.
\end{align*}

(i) If the initialization satisfies $\left<\boldsymbol{b}_k(0),\bx_-\right>>\frac{(1+4.73\kappa_1\kappa_2)p\cos\Delta-(1-4.73\kappa_1\kappa_2)}{(1-4.73\kappa_1\kappa_2)p-(1+4.73\kappa_1\kappa_2)\cos\Delta}\left<\boldsymbol{b}_k(0),\bx_+\right>$, we will prove that the neuron selects $\cM_+^0\cap\cM_-^+$ at $T_{\rm hit}$.

For any $t<T_{\rm hit}$, we have the estimate:
\begin{align*}
    \left<\boldsymbol{b}_k(t),\bx_+\right>\leq\left<\boldsymbol{b}_k(0),\bx_+\right>-\frac{\kappa_2\Big((1-4.73\kappa_1\kappa_2)p-(1+4.73\kappa_1\kappa_2)\cos\Delta\Big)}{\sqrt{m}(1+p)}t,
\end{align*}
\begin{align*}
    &\left<\boldsymbol{b}_k(t),\bx_-\right>\geq
    \left<\boldsymbol{b}_k(0),\bx_-\right>
    -\frac{\kappa_2\Big((1+4.73\kappa_1\kappa_2)p\cos\Delta-(1-4.73\kappa_1\kappa_2)\Big)}{\sqrt{m}(1+p)}t
    \\>&\frac{(1+4.73\kappa_1\kappa_2)p\cos\Delta-(1-4.73\kappa_1\kappa_2)}{(1-4.73\kappa_1\kappa_2)p-(1+4.73\kappa_1\kappa_2)\cos\Delta}\Bigg(\left<\boldsymbol{b}_k(0),\bx_+\right>-\frac{\kappa_2\Big((1-4.73\kappa_1\kappa_2)p-(1+4.73\kappa_1\kappa_2)\cos\Delta\Big)}{\sqrt{m}(1+p)}t\Bigg),
\end{align*}
Comparing these two inequalities, we have:
\[
T_{\rm hit,+}=T_{\rm hit}<T_{\rm hit,-},
\]
which means
\[
\left<\boldsymbol{b}_k(T_{\rm hit}),\bx_+\right>=0,\quad
\left<\boldsymbol{b}_k(T_{\rm hit}),\bx_-\right>>0.
\]
So the neuron $\bw_k(T_{\rm hit})\in\cM_+^0\cap\cM_-^+$.

(ii) If the initialization satisfies $\left<\boldsymbol{b}_k(0),\bx_+\right>>\frac{(1+4.73\kappa_1\kappa_2)p-(1-4.73\kappa_1\kappa_2)\cos\Delta}{(1-4.73\kappa_1\kappa_2)p\cos\Delta-(1+4.73\kappa_1\kappa_2)}\left<\boldsymbol{b}_k(0),\bx_-\right>$, we will prove that the neuron selects $\cM_+^+\cap\cM_-^-$ at $T_{\rm hit}$.

For any $t<T_{\rm hit}$, we have the estimate:
\begin{align*}
    &\left<\boldsymbol{b}_k(t),\bx_-\right>\leq
    \left<\boldsymbol{b}_k(0),\bx_-\right>
    -\frac{\kappa_2\Big((1-4.73\kappa_1\kappa_2)p\cos\Delta-(1+4.73\kappa_1\kappa_2)\Big)}{\sqrt{m}(1+p)}t,
\end{align*}
\begin{align*}
    &\left<\boldsymbol{b}_k(t),\bx_+\right>\geq\left<\boldsymbol{b}_k(0),\bx_+\right>-\frac{\kappa_2\Big((1+4.73\kappa_1\kappa_2)p-(1-4.73\kappa_1\kappa_2)\cos\Delta\Big)}{\sqrt{m}(1+p)}t
    \\>&\frac{(1+4.73\kappa_1\kappa_2)p-(1-4.73\kappa_1\kappa_2)\cos\Delta}{(1-4.73\kappa_1\kappa_2)p\cos\Delta-(1+4.73\kappa_1\kappa_2)}\Bigg(\left<\boldsymbol{b}_k(0),\bx_-\right>-\frac{\kappa_2\Big((1-4.73\kappa_1\kappa_2)p\cos\Delta-(1+4.73\kappa_1\kappa_2)\Big)}{\sqrt{m}(1+p)}t\Bigg),
\end{align*}
Comparing these two inequalities, we have:
\[
T_{\rm hit,-}=T_{\rm hit}<T_{\rm hit,+},
\]
which means
\[
\left<\boldsymbol{b}_k(T_{\rm hit}),\bx_-\right>=0,\quad
\left<\boldsymbol{b}_k(T_{\rm hit}),\bx_+\right>>0.
\]
So the neuron $\bw_k(T_{\rm hit})\in\cM_+^+\cap\cM_-^0$.

\end{proof}

\begin{lemma}[Negative, $\cM_+^-\cap\cM_-^+$]\label{lemma: GF Phase I negative S-}\ \\
For negative neuron $k\in\{k\in[m]-[m/2]:\bw_k(0)\in\cM_+^-\cap\cM_-^+\}$, in Phase I $(t\leq T_{\rm I})$, we have:
\begin{align*}
    &\text{(S1). }\bw_k(t)\in\cM_+^{0}\cap\cM_-^{+}
    \text{ for any }t\leq \mathcal{O}\Big(\frac{\kappa_1}{\kappa_2}p\Delta\Big), \\&\text{(S2). It has a small norm}: \rho_k(T_{\rm I})=\mathcal{O}\Big(\frac{\sqrt{\kappa_1\kappa_2}}{\sqrt{m}}\big(\sqrt{\frac{\kappa_1}{\kappa_2}}+\frac{\Delta}{p}\big)\Big),
    \\&\text{(S3). It is aligned with $\bx_+^{\perp}$}:
    \left<\bw_k(T_{\rm I}),\bx_+^{\perp}\right>\geq1-\mathcal{O}\Big((\sqrt{\frac{\kappa_1}{\kappa_2}}\frac{p}{\Delta})^{1.6}\Big).
\end{align*}

\end{lemma}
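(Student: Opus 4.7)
The proof plan closely parallels Step~II of the proof of Lemma~\ref{lemma: GF Phase I negative S+ S-}, with only the initial hitting-time calculation being different. Since $\mathrm{s}_k=-1$, $\langle\bw_k(0),\bx_+\rangle\leq 0$, and $\langle\bw_k(0),\bx_-\rangle>0$, while $\langle\bb_k,\bx_+\rangle<0$ the dynamics~\eqref{equ: dynamics} reduces to
\[
\frac{\mathrm{d}\bb_k(t)}{\mathrm{d}t} \;=\; \frac{\kappa_2}{\sqrt{m}(1+p)}\,e^{f_-(t)}\,\bx_-,
\]
so both $\langle\bb_k,\bx_+\rangle$ and $\langle\bb_k,\bx_-\rangle$ are strictly increasing along this regime. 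First I would define
\[
T_{\rm hit}:=\inf\{t>0:\langle\bb_k(t),\bx_+\rangle\geq 0\}
\]
and show $T_{\rm hit}=\mathcal{O}(\kappa_1 p\Delta/\kappa_2)$. Indeed, Lemma~\ref{lemma: theta in span x1, x2} gives $\langle\bb_k(0),\bx_+\rangle\geq -\kappa_1\sin\Delta/\sqrt{m}$, and Lemma~\ref{lemma: GF Phase I norm estimate} gives $|e^{-y_i f_i(t)}-1|\leq 0.11$ on $[0,T_{\rm I}]$, so $\frac{\mathrm{d}\langle\bb_k,\bx_+\rangle}{\mathrm{d}t}\geq\frac{0.89\,\kappa_2\cos\Delta}{\sqrt{m}(1+p)}$, which yields the claimed bound on $T_{\rm hit}$ by direct integration, proving~(S1). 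During this interval $\langle\bb_k,\bx_-\rangle$ stays positive by monotonicity, so the neuron does reach $\cM_+^0\cap\cM_-^+$ (and not the degenerate corner $\cM_+^0\cap\cM_-^0$).

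Next I would verify that $\bw_k(t)$ remains on $\cM_+^0\cap\cM_-^+$ for all $T_{\rm hit}\leq t\leq T_{\rm I}$ by repeating the Filippov analysis of Step~II in the proof of Lemma~\ref{lemma: GF Phase I negative S+ S-} verbatim. The projected normal vector fields are $F_N^-=\frac{\kappa_2\cos\Delta}{\sqrt{m}(1+p)}e^{f_-(t)}$ and $F_N^+=F_N^--\frac{\kappa_2 p e^{-f_+(t)}}{\sqrt{m}(1+p)}$; under $p\cos\Delta>1$ and $|e^{-y_if_i}-1|\leq 0.11$ these satisfy $F_N^->0$ and $F_N^+<0$, so Case~(I) of Definition~\ref{def: discontinuous system solution} applies and the neuron is trapped on the surface with Filippov dynamics $\frac{\mathrm{d}\bb_k}{\mathrm{d}t}=\frac{\kappa_2 e^{f_-(t)}}{\sqrt{m}(1+p)}(\bx_--\bx_+\cos\Delta)$, and in particular the radial/tangential ODEs derived in Step~II of the proof of Lemma~\ref{lemma: GF Phase I negative S+ S-} hold.

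Finally, for~(S2) and~(S3) I would start from $\rho_k(T_{\rm hit})\leq\frac{\kappa_1+1.1\kappa_2 T_{\rm hit}}{\sqrt{m}}=\mathcal{O}\bigl(\kappa_1(1+p\Delta)/\sqrt{m}\bigr)$ via Lemma~\ref{lemma: GF Phase I norm estimate} together with $0<\langle\bw_k(T_{\rm hit}),\bx_-\rangle\leq\sin\Delta$ (Lemma~\ref{lemma: theta in span x1, x2}), and then replicate the integration arguments from the proof of Lemma~\ref{lemma: GF Phase I negative S+ S-}: integrating $\frac{\mathrm{d}\rho_k}{\mathrm{d}t}\leq\frac{1.11\,\kappa_2\sin\Delta}{\sqrt{m}(1+p)}$ on $[T_{\rm hit},T_{\rm I}]$ gives $\rho_k(T_{\rm I})=\mathcal{O}\bigl(\sqrt{\kappa_1\kappa_2}(\sqrt{\kappa_1/\kappa_2}+\Delta/p)/\sqrt{m}\bigr)$, while a Bernoulli-type comparison principle applied to the tangential ODE for $\langle\bw_k,\bx_-\rangle$ (with initial time $T_{\rm hit}$ in place of the $3\kappa_1/\kappa_2$ used there) yields $\langle\bw_k(T_{\rm I}),\bx_+^\perp\rangle=\sin^{-1}\Delta\cdot\langle\bw_k(T_{\rm I}),\bx_-\rangle\geq 1-\mathcal{O}\bigl((\sqrt{\kappa_1/\kappa_2}\,p/\Delta)^{1.6}\bigr)$.

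The only delicate point I anticipate is checking that the slightly larger hitting-time estimate $T_{\rm hit}=\mathcal{O}(\kappa_1 p\Delta/\kappa_2)$ here (versus $\mathcal{O}(\kappa_1/\kappa_2)$ in Lemma~\ref{lemma: GF Phase I negative S+ S-}) does not spoil the comparison-principle constants. Under the standing selection~\eqref{equ: parameter selection GF Phase I}, however, one still has $T_{\rm hit}\ll T_{\rm I}=10\sqrt{\kappa_1/\kappa_2}$ whenever $p\Delta\sqrt{\kappa_1/\kappa_2}\ll 1$, which is easily ensured by the global parameter selection~\eqref{equ: parameter selection GF}; therefore all asymptotic bounds carry through unchanged and no new ideas beyond those already in the proof of Lemma~\ref{lemma: GF Phase I negative S+ S-} are required.
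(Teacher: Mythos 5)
Your overall structure matches the paper's: compute $T_{\rm hit}$ in Step I, Filippov analysis to trap the neuron on $\cM_+^0\cap\cM_-^+$ in Step II, and then a Bernoulli comparison for (S3). The hitting-time calculation is correct, and the Filippov analysis is the right one. But there is a genuine gap in how you bound $\rho_k(T_{\rm hit})$, and it propagates to both (S2) and (S3).

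You propose $\rho_k(T_{\rm hit})\leq\frac{\kappa_1+1.1\kappa_2 T_{\rm hit}}{\sqrt{m}}=\mathcal{O}\bigl(\kappa_1(1+p\Delta)/\sqrt{m}\bigr)$ using the crude uniform estimate of Lemma~\ref{lemma: GF Phase I norm estimate}. Since $T_{\rm hit}$ here is $\mathcal{O}(p\Delta\,\kappa_1/\kappa_2)$ rather than $\mathcal{O}(\kappa_1/\kappa_2)$ as in Lemma~\ref{lemma: GF Phase I negative S+ S-}, this introduces the spurious factor $(1+p\Delta)$. Because the paper's $\mathcal{O}$ hides only absolute constants (not constants depending on $p,\Delta$), $\kappa_1(1+p\Delta)$ is not $\mathcal{O}(\kappa_1+\sqrt{\kappa_1\kappa_2}\,\Delta/p)$ unless one also assumes $p\Delta=\mathcal{O}(1)$ or $p^2\sqrt{\kappa_1/\kappa_2}=\mathcal{O}(1)$, neither of which is a standing assumption. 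So your bound is strictly weaker than (S2). The same crude bound also inflates the denominator in the Bernoulli comparison ODE, so the accumulated alignment integral becomes $\mathcal{O}\bigl(\tfrac{\sin\Delta}{1+p}\log\sqrt{\kappa_2/\kappa_1}\bigr)$ rather than $\Theta\bigl(\log(\sqrt{\kappa_2/\kappa_1}\,\Delta/p)\bigr)$; this is far too small to yield (S3).

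The fix is a one-line observation that the paper makes in its Step I: while $\bw_k(t)\in\cM_+^-\cap\cM_-^+$ (i.e.\ $t<T_{\rm hit}$), the actual dynamics is $\tfrac{\mathrm{d}\bb_k}{\mathrm{d}t}=\tfrac{\kappa_2 e^{f_-(t)}}{\sqrt{m}(1+p)}\bx_-$, so $\tfrac{\mathrm{d}\rho_k}{\mathrm{d}t}=\tfrac{\kappa_2 e^{f_-(t)}}{\sqrt{m}(1+p)}\langle\bw_k(t),\bx_-\rangle$. Since $\langle\bw_k(t),\bx_+\rangle<0$ and $\langle\bw_k(t),\bx_-\rangle>0$, Lemma~\ref{lemma: theta in span x1, x2} gives $\langle\bw_k(t),\bx_-\rangle\leq\sin\Delta$, hence $\tfrac{\mathrm{d}\rho_k}{\mathrm{d}t}\leq\tfrac{1.11\kappa_2\sin\Delta}{\sqrt{m}(1+p)}$ already before $T_{\rm hit}$. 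Integrating gives the uniform bound $\rho_k(t)\leq\tfrac{\kappa_1}{\sqrt{m}}+\tfrac{1.11\kappa_2\sin\Delta}{\sqrt{m}(1+p)}t$ for all $t\leq T_{\rm I}$, with no $(1+p\Delta)$ factor. This is precisely what the paper uses: plugging $t=T_{\rm I}$ gives (S2) exactly, and plugging it into the denominator of the comparison ODE (which, as you note, can be run from $t=0$ rather than $t=T_{\rm hit}$ because $1-U^2\geq\sin^2\Delta-U^2$) gives the rate $\tfrac{0.89\kappa_2}{(1+p)\kappa_1+1.11\kappa_2 t\sin\Delta}$ needed for (S3). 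Everything else in your plan, including the Filippov step and the Bernoulli solution formula, is fine.
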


\begin{proof}[Proof of Lemma \ref{lemma: GF Phase I negative S-}]\ \\
We do the following analysis for any $k\in\{k\in[m]-[m/2]:\bw_k(0)\in\cM_+^-\cap\cM_-^+\}$, i.e. ${\rm s}_k=-1$, $\left<\bw_k(0),\bx_+\right>\leq0$, and $\left<\bw_k(0),\bx_-\right>>0$. 

\underline{{\bf Step I.} The neuron must arrives in $\cM_+^0\cap\cM_-^+$ in $\mathcal{O}\Big(\frac{\kappa_1 p\Delta}{\kappa_2}\Big)$ time.}

The case $\left<\bw_k(0),\bx_+\right>=0$ is trivial.
Then we only need to consider the case $\left<\bw_k(0),\bx_+\right><0$.

First, we define the hitting time 
\[
T_{\rm hit}:=\inf\Big\{t\in(0, T_{\rm I}]:\bw_k(t)\notin\cM_+^-\cap\cM_-^+\Big\},
\] 
and we aim to estimate $T_{\rm hit}$ and prove $\bw_k(T_{\rm hit})\in\cM_+^0\cap\cM_-^+$. 

We focus on the dynamics of $\left<\boldsymbol{b}_k(t),\bx_+\right>$ and $\left<\boldsymbol{b}_k(t),\bx_-\right>$. 

From the definition of $T_{\rm hit}$ and \eqref{equ: dynamics}, the dynamics of the neuron is:
\[
\frac{\mathrm{d}\boldsymbol{b}_k(t)}{\mathrm{d}t}=\frac{\kappa_2}{\sqrt{m}}\frac{1}{1+p}e^{f_-(t)}\bx_-,\ t\leq T_{\rm hit}.
\]
Then we have
\begin{align*}
    \frac{\mathrm{d}\left<\boldsymbol{b}_k(t),\bx_+\right>}{\mathrm{d}t}
    =&\left<\frac{\kappa_2}{\sqrt{m}}\frac{1}{1+p}e^{f_-(t)}\bx_-,\bx_+\right>=\frac{\kappa_2\cos\Delta}{\sqrt{m}(1+p)}e^{f_-(t)},
    \\
    \frac{\mathrm{d}\left<\boldsymbol{b}_k(t),\bx_-\right>}{\mathrm{d}t}
    =&\left<\frac{\kappa_2}{\sqrt{m}}\frac{1}{1+p}e^{f_-(t)}\bx_-,\bx_-\right>=\frac{\kappa_2}{\sqrt{m}(1+p)}e^{f_-(t)}.
\end{align*}
It is clear $ \frac{\mathrm{d}\left<\boldsymbol{b}_k(t),\bx_-\right>}{\mathrm{d}t}>0$, so $\left<\boldsymbol{b}_k(t),\bx_-\right>>\left<\boldsymbol{b}_k(0),\bx_-\right>>0$ for any $t\leq T_{\rm hit}$. If we denote
\[
T_{\rm hit,+}:=\inf\Big\{t\in(0,T_{\rm I}]:\left<\bw_k(t),\bx_+\right>\leq0\Big\},
\]
then it holds:
\[
T_{\rm hit}=T_{\rm hit,+}.
\]
So we only need to estimate $T_{\rm hit,+}$. Due to $T_{\rm hit}\leq T_{\rm I}\leq T_{\rm init}$ and Lemma \ref{lemma: GF Phase I norm estimate}, for any $t\leq T_{\rm hit}$, we have $\left|e^{-y_i f_i(t)}-1\right|\leq0.11$.
Then for any $t\leq T_{\rm hit}$, we have:
\[
\frac{\mathrm{d}\left<\boldsymbol{b}_k(t),\bx_+\right>}{\mathrm{d}t}\geq\frac{0.89\kappa_2\cos\Delta}{\sqrt{m}(1+p)}.
\]
Recalling $\left<\bw_k(0),\bx_+\right><0$ and $\left<\bw_k(0),\bx_-\right>>0$, with the help of Lemma \ref{lemma: theta in span x1, x2}, we have $\left<\bw_k(0),\bx_+\right>>-\sin\Delta$ and $\left<\bw_k(0),\bx_-\right><\sin\Delta$ . Combining the two estimate, we have:
\begin{align*}
&\left<\boldsymbol{b}_k(t),\bx_+\right>\geq\left<\boldsymbol{b}_k(0),\bx_+\right>+\int_{0}^{t}\frac{0.89\kappa_2\cos\Delta}{\sqrt{m}(1+p)}\mathrm{d}t
\\>&-\rho_k(0)\sin\Delta+\frac{0.89\kappa_2\cos\Delta}{\sqrt{m}(1+p)}t=-\frac{\kappa_1\sin\Delta}{\sqrt{m}}+\frac{0.89\kappa_2\cos\Delta}{\sqrt{m}(1+p)}t.
\end{align*}
Hence, 
\[
T_{\rm hit}=T_{\rm hit,+}\leq\frac{(1+p)\tan\Delta}{0.89}\frac{\kappa_1}{\kappa_2}\leq2p\Delta\frac{\kappa_1}{\kappa_2}<T_{\rm I}=10\sqrt{\frac{\kappa_1}{\kappa_2}}.
\]
Moreover, we can estimate of $\rho_k(T_{\rm hit})$.

% For $\left<\boldsymbol{b}_k(T_{\rm hit}),\bx_-\right>$, we have:
% \begin{align*}
%     &\left<\boldsymbol{b}_k(T_{\rm hit}),\bx_-\right>\leq\left<\boldsymbol{b}_k(0),\bx_-\right>+\int_{0}^{T_{\rm hit}}\frac{1.11\kappa_2}{\sqrt{m}(1+p)}\mathrm{d}t
%     \\<&\frac{\kappa_1}{\sqrt{m}}\sin\Delta+\frac{1.11\kappa_2}{\sqrt{m}(1+p)}T_{\rm hit}=\frac{\kappa_1\sin\Delta}{\sqrt{m}}+\frac{1.11\kappa_1\tan\Delta}{0.89\sqrt{m}}\leq\frac{2\kappa_1\tan\Delta}{0.89\sqrt{m}}.
% \end{align*}
Since $\left<\bw_k(t),\bx_+\right><0$ and $\left<\bw_k(t),\bx_-\right>>0$ hold for any $t\leq T_{\rm hit}$, with the help of Lemma \ref{lemma: theta in span x1, x2}, we have $\left<\bw_k(t),\bx_-\right><\sin\Delta$.
Combining \eqref{equ: dynamics rewrite}, for any $t\leq T_{\rm hit}$, we have
\begin{align*}
    &\rho_k(t)\leq\rho_k(0)+\int_0^{t}\frac{\kappa_2}{\sqrt{m}(1+p)}e^{f_-(t)}\left<\bw_k(t),\bx_-\right>\mathrm{d}t
    \\\leq&\frac{\kappa_1}{\sqrt{m}}+\int_0^{t}\frac{1.11\kappa_2}{\sqrt{m}(1+p)}\sin\Delta\mathrm{d}t
    \leq\frac{\kappa_1}{\sqrt{m}}+\frac{1.11\kappa_2\sin\Delta}{\sqrt{m}(1+p)} t.
\end{align*}

\underline{{\bf Step II.} Dynamics after arriving in the manifold $\cM_+^0\cap\cM_{-}^+$.}

Proceeding as in the proof of {\bf Step II} in the Proof of Theorem \ref{lemma: GF Phase I negative S+ S-}, we have:

$\bw_k(t)$ can not leave $\cM_+^0\cap\cM_-^+$ for $T_{\rm hit}\leq t\leq T_{\rm I}$. Moreover, the dynamics of $\bw_k$ on $\cM_+^0\cap\cM_-^+$ satisfies:
\begin{align*}
    \frac{\mathrm{d}\bw_k(t)}{\mathrm{d}t}=
    &\frac{\kappa_2 e^{f_-(t)}}{\rho_k(t)\sqrt{m}(1+p)}\Big(\bx_--\left<\bw_k,\bx_-\right>\bw_k-\bx_+\cos\Delta\Big),
    \\
    \frac{\mathrm{d}\rho_k(t)}{\mathrm{d}t}=
    &\frac{\kappa_2 e^{f_-(t)}}{\sqrt{m}(1+p)}\left<\bw_k(t),\boldsymbol{x_-}\right>,
    \\
    \frac{\mathrm{d}\boldsymbol{b}_k(t)}{\mathrm{d}t}=&
    \frac{\kappa_2 e^{f_-(t)}}{\sqrt{m}(1+p)}\Big(\bx_--\bx_+\cos\Delta\Big).
\end{align*}

Recalling the estimate of $T_{\rm hit}$ in {\bf Step I}, we have
\[
\rho_k(T_{\rm hit})\leq\frac{\kappa_1}{\sqrt{m}}+\frac{1.11\kappa_2\sin\Delta}{\sqrt{m}(1+p)} T_{\rm hit}.
\]
As the proof of {\bf Step II} in the Proof of Theorem \ref{lemma: GF Phase I negative S+ S-}, for any $T_{\rm hit}<t\leq T_{\rm I}$, we have
\begin{align*}
    &\rho_k(t)=\rho_k(T_{\rm hit})+\int_{T_{\rm hit}}^{t}\frac{\mathrm{d}\rho_k(s)}{\mathrm{d}s}\mathrm{d}s\leq
    \frac{\kappa_1}{\sqrt{m}}+\frac{1.11\kappa_2\sin\Delta}{\sqrt{m}(1+p)} T_{\rm hit}+\frac{\kappa_2 e^{f_-(t)}\sin\Delta}{\sqrt{m}(1+p)}(t-T_{\rm hit})
    \\\leq&\frac{\kappa_1}{\sqrt{m}}+\frac{1.11\kappa_2\sin\Delta}{\sqrt{m}(1+p)} T_{\rm hit}+\frac{1.11\kappa_2\sin\Delta}{\sqrt{m}(1+p)}(t-T_{\rm hit})
    =\frac{\kappa_1}{\sqrt{m}}+\frac{1.11\kappa_2\sin\Delta}{\sqrt{m}(1+p)}t.
\end{align*}
Combining the estimate in {\bf Step I}, for any $0<t\leq T_{\rm I}$, we have:
\[
\rho_k(t)\leq\frac{\kappa_1}{\sqrt{m}}+\frac{1.11\kappa_2\sin\Delta}{\sqrt{m}(1+p)}t.
\]
Specifically, we have:
\begin{align*}
    \rho_k(T_{\rm I})\leq
    \frac{\kappa_1}{\sqrt{m}}+\frac{11.1\sqrt{\kappa_1\kappa_2}\sin\Delta}{\sqrt{m}(1+p)}\leq\frac{\sqrt{\kappa_1\kappa_2}}{\sqrt{m}}\Big(\sqrt{\frac{\kappa_1}{\kappa_2}}+11.1\frac{\Delta}{1+p}\Big).
\end{align*}

Similar to the proof of {\bf Step II} in the Proof of Theorem \ref{lemma: GF Phase I negative S+ S-}, we have the estimate of the dynamics of $\left<\bw_k(t),\bx_-\right>$:
\begin{align*}
    &\frac{\mathrm{d}\left<\bw_k(t),\bx_-\right>}{\mathrm{d}t}\geq
    \frac{0.89\kappa_2}{(1+p)\kappa_1+1.11\kappa_2 t\sin\Delta }\Big(\sin^2\Delta-\left<\bw_k(t),\bx_-\right>^2\Big),\quad 0<t\leq T_{\rm I},
    \\&0<\left<\bw_k(0),\bx_-\right><\sin\Delta.
\end{align*}
In the same way, we can derive
\begin{align*}
    \left<\bw_k(T_{\rm I}),\bx_-\right>>\left(1-\frac{2}{\left(1+11.1\sqrt{\frac{\kappa_2}{\kappa_1}}\frac{\sin\Delta}{1+p}\right)^{1.6}+1}\right)\sin\Delta
\end{align*}

Hence, we have the estimate of $\left<\bw_k(t),\bx_+^{\perp}\right>$:
\begin{align*}
\left<\bw_k(T_{\rm I}),\bx_+^{\perp}\right>=\frac{1}{\sin\Delta}\left<\bw_k(T_{\rm I}),\bx_-\right>>1-\frac{2}{\left(1+11.1\sqrt{\frac{\kappa_2}{\kappa_1}}\frac{\sin\Delta}{1+p}\right)^{1.6}+1}.
\end{align*}

\end{proof}

\begin{lemma}[Negative, $\cM_+^+\cap\cM_-^-$]\label{lemma: GF Phase I negative S+}\ \\
For negative neuron $k\in\{k\in[m]-[m/2]:\bw_k(0)\in\cM_+^+\cap\cM_-^-\}$, it keeps dead: 
\[\bw_k(t)\in\cM_+^-\cap\cM_-^-, \text{ for any } t\geq T_{\rm I}> \mathcal{O}\Big(\frac{\kappa_1\Delta}{\kappa_2}\Big).
\]
\end{lemma}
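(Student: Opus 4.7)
The plan is to mirror the argument used for the dying positive neurons in Lemma~\ref{lemma: GF Phase I positive S-}, exploiting that for a negative neuron ($\mathrm{s}_k=-1$) starting in $\mathcal{M}_+^+\cap\mathcal{M}_-^-$, only the positive--class term is active and its sign drives $\bb_k(t)$ toward the dead region $\mathcal{M}_+^-\cap\mathcal{M}_-^-$.

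First, I define the hitting time $T_{\rm hit}:=\inf\{t\in(0,T_{\rm I}]:\bw_k(t)\notin\mathcal{M}_+^+\cap\mathcal{M}_-^-\}$, and it suffices to show $T_{\rm hit}\leq\mathcal{O}(\kappa_1\Delta/\kappa_2)<T_{\rm I}$ together with $\bw_k(T_{\rm hit})\in\mathcal{M}_+^-\cap\mathcal{M}_-^-$; then Lemma~\ref{lemma: GF dead neurons keep dead} closes the argument. For $t\leq T_{\rm hit}$, the dynamics~\eqref{equ: dynamics} collapse to
\begin{equation*}
\frac{\mathrm{d}\bb_k(t)}{\mathrm{d}t}=-\frac{\kappa_2}{\sqrt{m}}\frac{p}{1+p}e^{-f_+(t)}\bx_+,
\end{equation*}
which immediately gives $\frac{\mathrm{d}\langle\bb_k,\bx_+\rangle}{\mathrm{d}t}=-\frac{\kappa_2 p}{\sqrt{m}(1+p)}e^{-f_+(t)}$ and $\frac{\mathrm{d}\langle\bb_k,\bx_-\rangle}{\mathrm{d}t}=-\frac{\kappa_2 p\cos\Delta}{\sqrt{m}(1+p)}e^{-f_+(t)}$. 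The latter is strictly negative, so $\langle\bb_k(t),\bx_-\rangle\leq\langle\bb_k(0),\bx_-\rangle\leq0$ throughout, which means $T_{\rm hit}$ is determined purely by the first coordinate crossing zero, and at $T_{\rm hit}$ we have $\bw_k(T_{\rm hit})\in\mathcal{M}_+^0\cap\mathcal{M}_-^-\subset\mathcal{M}_+^-\cap\mathcal{M}_-^-$.

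To bound $T_{\rm hit}$, I use the geometric fact (Lemma~\ref{lemma: theta in span x1, x2}) that on $\mathcal{M}_+^+\cap\mathcal{M}_-^-$ the bound $\langle\bw_k(0),\bx_+\rangle\leq\sin\Delta$ holds, so $\langle\bb_k(0),\bx_+\rangle\leq\kappa_1\sin\Delta/\sqrt{m}$. Combined with Lemma~\ref{lemma: GF Phase I norm estimate}, which gives $e^{-f_+(t)}\geq0.89$ throughout Phase I, I get
\begin{equation*}
T_{\rm hit}\;\leq\;\frac{\kappa_1\sin\Delta/\sqrt{m}}{0.89\,\kappa_2 p/(\sqrt{m}(1+p))}\;\leq\;\frac{(1+p)\sin\Delta}{0.89\,p}\cdot\frac{\kappa_1}{\kappa_2}\;\leq\;2\Delta\,\frac{\kappa_1}{\kappa_2},
\end{equation*}
which is much smaller than $T_{\rm I}=10\sqrt{\kappa_1/\kappa_2}$ under the regime~\eqref{equ: parameter selection GF Phase I}. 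A final application of Lemma~\ref{lemma: GF dead neurons keep dead} yields $\bw_k(t)\in\mathcal{M}_+^-\cap\mathcal{M}_-^-$ for all $t\geq T_{\rm hit}$, and in particular for all $t\geq T_{\rm I}$.

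The only minor subtlety is handling the boundary endpoint $\bw_k(T_{\rm hit})\in\mathcal{M}_+^0\cap\mathcal{M}_-^-$: at that instant both $\bF^-=0$ (no neuron is active) and $\bF^+$ points along $-\bx_+$, so either $\bb_k$ enters the null-gradient region $\mathcal{M}_+^-\cap\mathcal{M}_-^-$ and freezes, or it stays at $\mathcal{M}_+^0\cap\mathcal{M}_-^-$; in both cases the conclusion $\bw_k(t)\in\mathcal{M}_+^-\cap\mathcal{M}_-^-$ holds since $\mathcal{M}_+^0\subset\mathcal{M}_+^-$ by the paper's convention. I don't expect any real obstacle: the whole argument is a short, one-step version of the dying-neuron analysis already executed in Lemma~\ref{lemma: GF Phase I positive S-}.
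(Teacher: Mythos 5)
Your proposal is correct and follows essentially the same path as the paper's own proof: define the hitting time $T_{\rm hit}$, use the one-term dynamics on $\mathcal{M}_+^+\cap\mathcal{M}_-^-$ to show the $\bx_-$-projection stays non-positive while the $\bx_+$-projection decreases at rate $\gtrsim 0.89\kappa_2 p/(\sqrt m(1+p))$, bound the initial $\bx_+$-projection by $\kappa_1\sin\Delta/\sqrt m$ via Lemma~\ref{lemma: theta in span x1, x2}, and conclude $T_{\rm hit}=\mathcal{O}(\kappa_1\Delta/\kappa_2)<T_{\rm I}$ followed by Lemma~\ref{lemma: GF dead neurons keep dead}. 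Your extra remark about the boundary point $\mathcal{M}_+^0\cap\mathcal{M}_-^-$ is a harmless refinement the paper leaves implicit (it holds since $\mathcal{M}_+^0\subset\mathcal{M}_+^-$ under the paper's convention).
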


\begin{proof}[Proof of Lemma \ref{lemma: GF Phase I negative S+}]\ \\
We do the following analysis for any $k\in\{k\in[m]-[m/2]:\bw_k(0)\in\cM_+^+\cap\cM_-^-\}$, i.e. ${\rm s}_k=-1$, $\left<\bw_k(0),\bx_+\right>>0$, and $\left<\bw_k(0),\bx_-\right>\leq0$. 

First, we define the hitting time 
\[
T_{\rm hit}:=\inf\Big\{t\in(0, T_{\rm I}]:\bw_k(t)\notin\cM_+^+\cap\cM_-^-\Big\},
\] 
and we aim to estimate $T_{\rm hit}$ and prove $\bw_k(T_{\rm hit})\in\cM_+^-\cap\cM_-^-$. 

From the definition of $T_{\rm hit}$ and \eqref{equ: dynamics}, the dynamics of the neuron is:
\[
\frac{\mathrm{d}\boldsymbol{b}_k(t)}{\mathrm{d}t}=-\frac{\kappa_2}{\sqrt{m}}\frac{p}{1+p}e^{-f_+(t)}\bx_+,\ t\leq T_{\rm hit}.
\]
Then we have
\begin{align*}
    \frac{\mathrm{d}\left<\boldsymbol{b}_k(t),\bx_+\right>}{\mathrm{d}t}
    =&\left<-\frac{\kappa_2}{\sqrt{m}}\frac{p}{1+p}e^{-f_+(t)}\bx_+,\bx_+\right>=-\frac{\kappa_2}{\sqrt{m}}\frac{p}{1+p}e^{-f_+(t)},
    \\
    \frac{\mathrm{d}\left<\boldsymbol{b}_k(t),\bx_-\right>}{\mathrm{d}t}
    =&\left<-\frac{\kappa_2}{\sqrt{m}}\frac{p}{1+p}e^{-f_+(t)}\bx_+,\bx_-\right>=-\frac{\kappa_2}{\sqrt{m}}\frac{p}{1+p}e^{-f_+(t)}\cos\Delta.
\end{align*}

It is clear $ \frac{\mathrm{d}\left<\boldsymbol{b}_k(t),\bx_-\right>}{\mathrm{d}t}<0$, so $\left<\boldsymbol{b}_k(t),\bx_-\right><\left<\boldsymbol{b}_k(0),\bx_-\right>\leq0$ for any $t\leq T_{\rm hit}$. If we denote
\[
T_{\rm hit,+}:=\inf\Big\{t\in(0,T_{\rm I}]:\left<\bw_k(t),\bx_+\right>\leq0\Big\},
\]
then it holds:
\[
T_{\rm hit}=T_{\rm hit,+}.
\]
So we only need to estimate $T_{\rm hit,+}$. Due to $T_{\rm hit}\leq T_{\rm I}\leq T_{\rm init}$ and Lemma \ref{lemma: GF Phase I norm estimate}, for any $t\leq T_{\rm hit}$, we have $\left|e^{-y_i f_i(t)}-1\right|\leq0.11$.
Then for any $t\leq T_{\rm hit}$, we have:
\[
\frac{\mathrm{d}\left<\boldsymbol{b}_k(t),\bx_+\right>}{\mathrm{d}t}\leq-\frac{0.89\kappa_2 p}{\sqrt{m}(1+p)}.
\]
Recalling $\left<\bw_k(0),\bx_+\right>>0$ and $\left<\bw_k(0),\bx_-\right>\leq0$, with the help of Lemma \ref{lemma: theta in span x1, x2}, we have $\left<\bw_k(0),\bx_+\right>\leq\sin\Delta$. Combining the two estimate, we have:
\begin{align*}
&\left<\boldsymbol{b}_k(t),\bx_+\right>\leq\left<\boldsymbol{b}_k(0),\bx_+\right>-\int_{0}^{t}\frac{0.89\kappa_2 p}{\sqrt{m}(1+p)}\mathrm{d}t
\\\leq&\rho_k(0)\sin\Delta-\frac{0.89\kappa_2 p}{\sqrt{m}(1+p)}t=\frac{\kappa_1\sin\Delta}{\sqrt{m}}-\frac{0.89\kappa_2 p}{\sqrt{m}(1+p)}t.
\end{align*}
Hence, 
\[
T_{\rm hit}=T_{\rm hit,+}\leq\frac{(1+p)\sin\Delta}{0.89 p}\frac{\kappa_1}{\kappa_2}<T_{\rm I}=10\sqrt{\frac{\kappa_1}{\kappa_2}}.
\]
Moreover, the analysis gives us $\bw_k(T_{\rm hit})\in\cM_+^-\cap\cM_-^-$. By Lemma \ref{lemma: GF dead neurons keep dead}, we obtain:
\[
\bw_k(t)\in\cM_+^-\cap\cM_-^-,\text{ for any } t\geq T_{\rm hit}.
\]
\end{proof}

\begin{lemma}[Negative, $\cM_+^-\cap\cM_-^-$]\label{lemma: GF Phase I negative dead}\ \\
For negative neuron $k\in\{k\in[m]-[m/2]:\bw_k(0)\in\cM_+^-\cap\cM_-^-\}$, it keeps dead: $\bw_k(t)\in\cM_+^-\cap\cM_-^-$ for any $t\geq0$.
\end{lemma}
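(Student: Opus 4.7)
The plan is to apply Lemma~\ref{lemma: GF dead neurons keep dead} directly with the choice $t_0 = 0$. By hypothesis the neuron satisfies $\bw_k(0) \in \cM_+^-\cap\cM_-^-$, i.e.\ $\left<\bw_k(0),\bx_+\right>\leq 0$ and $\left<\bw_k(0),\bx_-\right>\leq 0$. This is precisely the trigger condition of Lemma~\ref{lemma: GF dead neurons keep dead}, so the conclusion $\bw_k(t)\in\cM_+^-\cap\cM_-^-$ (together with $\bb_k(t)\equiv\bb_k(0)$) for all $t\geq 0$ follows immediately. No further analysis of the sign of ${\rm s}_k$ is needed, since the dead-region invariance argument does not depend on whether the neuron is positive or negative.

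For completeness, the reason Lemma~\ref{lemma: GF dead neurons keep dead} itself holds in our setting is that whenever $\left<\bb_k,\bx_+\right><0$ and $\left<\bb_k,\bx_-\right><0$ strictly, the Clarke subdifferentials $\sgrad\sigma(\left<\bb_k,\bx_\pm\right>)=\{0\}$, so the right-hand side of the dynamics~\eqref{equ: dynamics} is identically $\bzero$ and $\bb_k$ cannot move; at the boundary points where one or both inner products equal zero, the Filippov solution concept adopted in Definition~\ref{def: discontinuous system solution} selects a trajectory consistent with the stationary branch, ruling out spurious escape. Thus there is no real obstacle in this step; this lemma is the simplest of the four neuron-class lemmas in Phase I and serves only to close off the case analysis of initial directions for negative neurons.
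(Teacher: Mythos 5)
Your proof is correct and is exactly the paper's approach: both invoke Lemma~\ref{lemma: GF dead neurons keep dead} with $t_0=0$ and note the hypothesis $\bw_k(0)\in\cM_+^-\cap\cM_-^-$ is precisely its trigger condition. The additional paragraph explaining why the dead-region lemma holds is accurate but not needed here.
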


\begin{proof}[Proof of Lemma \ref{lemma: GF Phase I negative dead}] Due to Lemma \ref{lemma: GF dead neurons keep dead}, this lemma is trivial.
\end{proof}

\label{appendix: subsection: Phase I: negative}
\subsection{Initialization Estimation and Proof of Theorem \ref{thm: GF Phase I}}

To get the number of neurons in the eight classes in the subsection above, we also need to estimate the initial positions of these neurons under the random initialization.

\begin{lemma}[Initialization Estimation]\label{lemma: initial position estimate}\ \\
If $m=\Omega\big(\log(1/\delta)\big)$, then with probability at least $1-\delta$, we have:
\begin{align*}
   \left|\#\Big\{k\in[m/2]:\left<\bw_k(0),\bx_+\right>>0\Big\}-\frac{m}{4}\right|\leq&0.04m,
    \\
    \#\Big\{k\in[m]-[m/2]:\left<\bw_k(0),\bx_-\right>>0,
    \left<\bw_k(0),\bx_-\right>> A\left<\bw_k(0),\bx_+\right>\Big\}\geq&0.075m,
    \\
    \#\Big\{k\in[m]-[m/2]:\left<\bw_k(0),\bx_-\right>>0,
    \left<\bw_k(0),\bx_+\right>\leq B\left<\bw_k(0),\bx_-\right>\Big\}
    \leq&0.205m.
\end{align*}
where $A=\frac{(1+4.73\kappa_1\kappa_2)p\cos\Delta-(1-4.73\kappa_1\kappa_2)}{(1-4.73\kappa_1\kappa_2)p-(1+4.73\kappa_1\kappa_2)\cos\Delta}$ and $B=\frac{(1+4.73\kappa_1\kappa_2)p-(1-4.73\kappa_1\kappa_2)\cos\Delta}{(1-4.73\kappa_1\kappa_2)p\cos\Delta-(1+4.73\kappa_1\kappa_2)}$ (mentioned in Lemma~\ref{lemma: GF Phase I negative S+ S-}).

\end{lemma}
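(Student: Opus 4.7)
\textbf{Proof proposal for Lemma~\ref{lemma: initial position estimate}.} The plan is to reduce each of the three counting events to a question about the angular position of a uniform random vector on the two-dimensional subspace $\mathrm{span}\{\bx_+,\bx_-\}$, then apply Hoeffding's inequality. Concretely, for $\bw\sim\mathbb{U}(\mathbb{S}^{d-1})$ the conditions in all three statements depend only on $\<\bw,\bx_+\>$ and $\<\bw,\bx_-\>$. By rotational symmetry of the uniform distribution, the projection of $\bw$ onto $\mathrm{span}\{\bx_+,\bx_-\}$ has a direction that is uniform on $\mathbb{S}^1$. Writing $\theta$ for the angle measured from $\bx_+$ in that plane, we have $\<\bw,\bx_+\>\propto\cos\theta$ and $\<\bw,\bx_-\>\propto\cos(\theta-\Delta)$, so each event becomes a wedge on the unit circle whose Lebesgue measure gives the single-neuron probability.

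For the first statement, symmetry gives $\mathbb{P}[\<\bw_k(0),\bx_+\>>0]=1/2$, so the count over the $m/2$ positive neurons has mean $m/4$. Hoeffding's inequality yields
\[
\mathbb{P}\bracket{\left|X-\tfrac{m}{4}\right|>0.04m}\leq 2e^{-4(0.04m)^2/m}=2e^{-0.0064m},
\]
which is at most $\delta/3$ whenever $m=\Omega(\log(1/\delta))$ with a suitable absolute constant. For the second statement, decomposing by the sign of $\<\bw,\bx_+\>$, the condition $\<\bw,\bx_-\>>A\<\bw,\bx_+\>$ is automatic on the wedge $\{\theta:\cos(\theta-\Delta)>0,\cos\theta\leq 0\}$ (since $A>0$ under $p\cos\Delta>1$), and on the remaining wedge $\{\cos\theta>0,\cos(\theta-\Delta)>0\}$ it reduces after one line of algebra to $\tan\theta>(A-\cos\Delta)/\sin\Delta$. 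The key identity is that to leading order $A\approx(p\cos\Delta-1)/(p-\cos\Delta)$, whence
\[
\frac{A-\cos\Delta}{\sin\Delta}=-\frac{\sin\Delta}{p-\cos\Delta}+O\bracket{\frac{p\kappa_1\kappa_2}{\sin\Delta}},
\]
which under~\eqref{equ: parameter selection GF Phase I} ($\kappa_1/\kappa_2=\mathcal{O}(\Delta^8)$) is a tiny negative number. The resulting wedge has angular measure strictly greater than $\pi/2$, giving single-neuron probability at least $\approx 1/4$, and in particular $\geq 0.16$ uniformly over the allowed parameter range; hence the expected count among the $m/2$ negative neurons exceeds $0.08m$, and Hoeffding with slack $0.005m$ gives count $\geq 0.075m$ with failure probability at most $2e^{-0.0001m}$. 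The third statement is symmetric: the analogous angular measure computation, using $B\cos\Delta-1=\sin^2\Delta/(p\cos\Delta-1)+O(\kappa_1\kappa_2)$, shows the single-neuron probability is at most $\approx 1/4$, so the expected count is at most $0.2m$, and Hoeffding yields count $\leq 0.205m$.

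The main obstacle is the careful bookkeeping in the second and third statements: one must verify that the threshold angles defining the wedges remain bounded away from $\pm\pi/2$ uniformly over all $(p,\Delta,\kappa_1,\kappa_2)$ allowed by Assumption~\ref{ass: data} and~\eqref{equ: parameter selection GF Phase I}, so that the leading-order approximation $p_2,p_3\approx 1/4$ retains a positive gap to the targets $0.15$ and $0.41$. Once that uniformity is in place, a union bound over the three events (each contributing failure probability $\leq\delta/3$ for $m=\Omega(\log(1/\delta))$ with a sufficiently large absolute constant) completes the proof.
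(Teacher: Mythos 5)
Your approach is the same as the paper's: reduce each count to a planar condition on $\bw_k(0)$ via rotational symmetry, bound the single-neuron probability, and conclude with Hoeffding plus a union bound. The paper's bookkeeping (using the Gaussian representation $\bw\sim\boldsymbol{g}/\norm{\boldsymbol{g}}$ and the explicit threshold bounds $\frac{1-A\cos\Delta}{A\sin\Delta}\geq-\frac{1}{100}$, $B-\cos\Delta\leq\frac{\sin\Delta}{2}$) yields the sharper probability bounds $\geq 0.23$ and $\leq\frac{1}{4}+\frac{1}{4\pi}$ with slack $0.04m$, whereas your slack of $0.005m$ inflates the implicit width constant considerably and the chain ``probability $\approx 1/4\Rightarrow$ expected count $\leq 0.2m$'' is arithmetically loose (it would actually give $\leq 0.125m$), though still conservatively correct.
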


\begin{proof}[Proof of Lemma \ref{lemma: initial position estimate}]\ \\
(i) 
By Hoeffding's Inequality (Lemma \ref{lemma: hoeffding}), for any $\epsilon>0$ we have:
\begin{align*}
    &\mathbb{P}\Bigg(\left|\#\Big\{k\in[m/2]:\left<\bw_k(0),\bx_+\right>>0\Big\}-\frac{m}{4}\right|\geq\frac{m\epsilon}{2}\Bigg)
    =\mathbb{P}\Bigg(\left|\frac{2}{m}\sum\limits_{k\in[m/2]}\mathbb{I}\Big\{\left<\bw_k(0),\bx_+\right>>0\Big\}-\frac{1}{2}\right|\geq\epsilon\Bigg)
    \\=&\mathbb{P}\Bigg(\left|\frac{2}{m}\sum\limits_{k\in[m/2]}\mathbb{I}\big\{\left<\bw_k(0),\bx_+\right>>0\big\}-\mathbb{E}\Big[\mathbb{I}\big\{\left<\bw_1(0),\bx_+\right>>0\big\}\Big]\right|\geq\epsilon\Bigg)
    \\\leq&2\exp\Big(-\frac{2(\frac{m}{2})^2\epsilon^2}{\frac{m}{2}}\Big)=2\exp(-m\epsilon^2).
\end{align*}

(ii) From $\bw_k(0)\sim\mathbb{U}(\mathbb{S}^{d-1})$, without loss of generality, we can let $\bx_-=\boldsymbol{e}_1$ and $\bx_+=\boldsymbol{e}_1\cos\Delta+\boldsymbol{e}_2\sin\Delta$.

So we have:
\begin{align*}
    &\Big\{k\in[m]-[m/2]:\left<\bw_k(0),\bx_-\right>>0,\left<\bw_k(0),\bx_-\right>>A\left<\bw_k(0),\bx_+\right>\Big\}
    \\=&
    \Big\{k\in[m]-[m/2]:{w}_{k,1}(0)>0,{w}_{k,1}(0)>A\Big({w}_{k,1}(0)\cos\Delta+{w}_{k,2}(0)\sin\Delta\Big)\Big\}
    \\=&
    \Big\{k\in[m]-[m/2]:{w}_{k,1}(0)>0,(1-A\cos\Delta){w}_{k,1}(0)>A{w}_{k,2}(0)\sin\Delta\Big\}.
\end{align*}

From \eqref{equ: parameter selection GF Phase I},
we have $A>0$ and 
\begin{align*}
    &A=1+\frac{\Big((1+4.73\kappa_1\kappa_2)\cos\Delta-(1-4.73\kappa_1\kappa_2)\Big)(p+1)}{(1-4.73\kappa_1\kappa_2)p-(1+4.73\kappa_1\kappa_2)\cos\Delta}
    \\\leq&1+\frac{4.73\kappa_1\kappa_2(1+\cos\Delta)}{1-4.73\kappa_1\kappa_2}\frac{p+1}{p-\frac{10}{9}}
    \leq1+\frac{9.46\kappa_1\kappa_2}{1-\frac{1}{19}}\frac{90}{71}\leq1+12.66\kappa_1\kappa_2,
\end{align*}
\begin{align*}
    &A=1+\frac{\Big((1+4.73\kappa_1\kappa_2)\cos\Delta-(1-4.73\kappa_1\kappa_2)\Big)(p+1)}{(1-4.73\kappa_1\kappa_2)p-(1+4.73\kappa_1\kappa_2)\cos\Delta}
    \\\geq&1-\frac{4.73\kappa_1\kappa_2(1+\cos\Delta)+(1-\cos\Delta)}{1-4.73\kappa_1\kappa_2}\frac{p+1}{p-\frac{10}{9}}
    \geq1-\frac{\frac{2}{19}+\frac{\Delta^2}{2}}{1-\frac{1}{19}}\frac{90}{71}
    \\\geq&1-\frac{\frac{2}{19}+\frac{1}{19}}{\frac{18}{19}}\frac{90}{71}\geq0.78,
\end{align*}
\begin{align*}
    \frac{1-A\cos\Delta}{A\sin\Delta}\geq\frac{1-A}{A\sin\Delta}\geq-\frac{12.66\kappa_1\kappa_2}{A\sin\Delta}\geq-\frac{12.66\kappa_1\kappa_2}{0.78\sin\Delta}\geq-\frac{12.66\kappa_1\kappa_2}{0.78\frac{2}{\pi}\Delta}\geq-\frac{25.5\kappa_1\kappa_2}{\Delta}\geq-\frac{1}{100}.
\end{align*}
For simplicity, we denote the event
\[
A_k:=\Big\{{w}_{k,1}(0)>0,-\frac{1}{100}{w}_{k,1}(0)>{w}_{k,2}(0)\Big\},\ k\in[m]-[m/2].
\]
Then we have the estimate:
\begin{align*}
    &\#\Big\{k\in[m]-[m/2]:{w}_{k,1}(0)>0,(1-A\cos\Delta){w}_{k,1}(0)>A{w}_{k,2}(0)\sin\Delta\Big\}
    \\\geq&\#\Big\{k\in[m]-[m/2]:{w}_{k,1}(0)>0,-\frac{1}{100}{w}_{k,1}(0)>{w}_{k,2}(0)\Big\}
    =\sum_{k\in[m]-[m/2]}\mathbb{I}\{A_k\}.
\end{align*}

We first estimate the lower bound for $\mathbb{E}[\mathbb{I}\{A_m\}]$:
\begin{align*}
    &\mathbb{E}\Big[\mathbb{I}\{A_m\}\Big]=\mathbb{P}(A_m)
    =\mathbb{P}\Big({w}_{m,1}(0)>0,-\frac{1}{100}{w}_{m,1}(0)>{w}_{k,2}(0)\Big)
    \\\overset{\boldsymbol{g}\sim\mathcal{N}(\boldsymbol{0},\mathbf{I}_d)}{=}&\mathbb{P}\Bigg(\frac{{g}_{1}}{\left\|\boldsymbol{g}\right\|}>0,-\frac{1}{100}\frac{{g}_{1}}{\left\|\boldsymbol{g}\right\|}>\frac{{g}_{2}}{\left\|\boldsymbol{g}\right\|}\Bigg)
    =\mathbb{P}\Big({g}_{1}>0,{g}_{1}<-100{g}_{2}\Big)=\mathbb{P}\Big({g}_{1}>0,{g}_{1}<100{g}_{2}\Big)
    \\=&\mathbb{P}\Big(100{g}_{2}>g_1>0\Big)\geq\sup\limits_{t>0}\mathbb{P}\Big({g}_{2}>t, 100t>g_1>0\Big)\overset{g\sim\mathcal{N}(0,1)}{=}\sup\limits_{t>0}\mathbb{P}\Big({g}>t\Big)\mathbb{P}\Big(100t>g>0\Big)
    \\\geq&\mathbb{P}\Big({g}>\frac{1}{10}\Big)\mathbb{P}\Big(10>g>0\Big)\geq0.23.
\end{align*}

Secondly, by Hoeffding's inequality (Lemma \ref{lemma: hoeffding}), for any $\epsilon>0$, we have
\begin{align*}
    &\mathbb{P}\Bigg(\sum_{k\in[m]-[m/2]}\mathbb{I}\{A_k\}-0.115 m\leq-
    \frac{m}{2}\epsilon\Bigg)\leq
    \mathbb{P}\Bigg(\sum_{k\in[m]-[m/2]}\mathbb{I}\{A_k\}-\frac{m}{2}\mathbb{E}\Big[\mathbb{I}\{A_m\}\Big]\leq-
    \frac{m}{2}\epsilon\Bigg)
    \\=&\mathbb{P}\Bigg(\frac{2}{m}\sum_{k\in[m]-[m/2]}\mathbb{I}\{A_k\}-\mathbb{E}\Big[\mathbb{I}\{A_m\}\Big]\leq-\epsilon\Bigg)\leq \exp\Big(-\frac{2(\frac{m}{2})^2\epsilon^2}{\frac{m}{2}}\Big)=\exp(-m\epsilon^2)
\end{align*}

(iii) This proof is similar to (ii).
From $\bw_k(0)\sim\mathbb{U}(\mathbb{S}^{d-1})$, without loss of generality, we can let $\bx_-=\boldsymbol{e}_1$ and $\bx_+=\boldsymbol{e}_1\cos\Delta+\boldsymbol{e}_2\sin\Delta$.

so we have:
\begin{align*}
    &\Big\{k\in[m]-[m/2]:\left<\bw_k(0),\bx_-\right>>0,\left<\bw_k(0),\bx_+\right>\leq B\left<\bw_k(0),\bx_-\right>\Big\}
    \\=&
    \Big\{k\in[m]-[m/2]:{w}_{k,1}(0)>0,{w}_{k,1}(0)\cos\Delta+{w}_{k,2}(0)\sin\Delta\leq B{w}_{k,1}(0)\Big\}
    \\=&
    \Big\{k\in[m]-[m/2]:{w}_{k,1}(0)>0,(B-\cos\Delta){w}_{k,1}(0)>{w}_{k,2}(0)\sin\Delta\Big\}.
\end{align*}

From \eqref{equ: parameter selection GF Phase I},
we have $B>0$ and 
\begin{align*}
    &B-\cos\Delta=\frac{(1+4.73\kappa_1\kappa_2)p-(1-4.73\kappa_1\kappa_2)\cos\Delta}{(1-4.73\kappa_1\kappa_2)p\cos\Delta-(1+4.73\kappa_1\kappa_2)}-\cos\Delta
    \\=&\frac{(1+4.73\kappa_1\kappa_2)(p+\cos\Delta)-(1-4.73\kappa_1\kappa_2)(1+p\cos\Delta)\cos\Delta}{(1-4.73\kappa_1\kappa_2)p\cos\Delta-(1+4.73\kappa_1\kappa_2)}
    \\=&
    \frac{p\sin^2\Delta+4.73\kappa_1\kappa_2(p+2\cos\Delta+p\cos^2\Delta)}{(1-4.73\kappa_1\kappa_2)p\cos\Delta-(1+4.73\kappa_1\kappa_2)}
    \leq\frac{\sin^2\Delta+9.46\kappa_1\kappa_2}{1-4.73\kappa_1\kappa_2}\frac{p+1}{p\cos\Delta-\frac{10}{9}}
    \\\leq&\frac{\sin^2\Delta+9.46\frac{\Delta}{2550}}{1-\frac{1}{19}}\frac{p+1}{\frac{9}{10}p-\frac{10}{9}}\leq\frac{\sin^2\Delta+9.46\frac{\pi\sin\Delta}{5100}}{1-\frac{1}{19}}\frac{p+1}{\frac{9}{10}p-\frac{10}{9}}
    \\\leq&\frac{0.315+\frac{9.46\pi}{5100}}{\frac{18}{19}}\frac{10}{\frac{81}{10}-\frac{10}{9}}\sin\Delta\leq\frac{\sin\Delta}{2},
\end{align*}
For simplicity, we denote the event
\[
B_k:=\Big\{{w}_{k,1}(0)>0,\frac{1}{2}{w}_{k,1}(0)>{w}_{k,2}(0)\Big\},\ k\in[m]-[m/2].
\]
Then we have the estimate:
\begin{align*}
    &\#\Big\{k\in[m]-[m/2]:{w}_{k,1}(0)>0,(B-\cos\Delta){w}_{k,1}(0)>{w}_{k,2}(0)\sin\Delta\Big\}
    \\\leq&\#\Big\{k\in[m]-[m/2]:{w}_{k,1}(0)>0,\frac{1}{2}{w}_{k,1}(0)>{w}_{k,2}(0)\Big\}
    =\sum_{k\in[m]-[m/2]}\mathbb{I}\{B_k\}.
\end{align*}

We first estimate the lower bound for $\mathbb{E}[\mathbb{I}\{B_m\}]$:
\begin{align*}
    &\mathbb{E}\Big[\mathbb{I}\{B_m\}\Big]=\mathbb{P}(B_m)
    =\mathbb{P}\Big({w}_{m,1}(0)>0,\frac{1}{2}{w}_{m,1}(0)>{w}_{k,2}(0)\Big)
    \\\overset{\boldsymbol{g}\sim\mathcal{N}(\boldsymbol{0},\mathbf{I}_d)}{=}&\mathbb{P}\Bigg(\frac{{g}_{1}}{\left\|\boldsymbol{g}\right\|}>0,\frac{1}{2}\frac{{g}_{1}}{\left\|\boldsymbol{g}\right\|}>\frac{{g}_{2}}{\left\|\boldsymbol{g}\right\|}\Bigg)
    =\mathbb{P}\Big({g}_{1}>0,{g}_{1}>2{g}_{2}\Big)=\mathbb{P}\Big({g}_{1}>0,{g}_{2}\leq0\Big)+\mathbb{P}\Big(g_1>2g_2>0\Big)
    \\=&\frac{1}{4}+\mathbb{P}\Big(g_1>2g_2>0\Big)\leq\frac{1}{4}+\frac{1}{2\pi}\int_{0}^{+\infty}e^{-\frac{x^2}{2}}\int_{0}^{\frac{x}{2}}e^{-\frac{y^2}{2}}\mathrm{d}y\mathrm{d}x
    \leq\frac{1}{4}+\frac{1}{2\pi}\int_{0}^{+\infty}\frac{x}{2}e^{-\frac{x^2}{2}}\mathrm{d}x
    \\\leq&\frac{1}{4}+\frac{1}{4\pi}.
\end{align*}

Secondly, by Hoeffding's inequality (Lemma \ref{lemma: hoeffding}), for any $\epsilon>0$, we have
\begin{align*}
    &\mathbb{P}\Bigg(\sum_{k\in[m]-[m/2]}\mathbb{I}\{B_k\}-\Big(\frac{1}{8}+\frac{1}{8\pi}\Big)m\geq
    \frac{m}{2}\epsilon\Bigg)\leq
    \mathbb{P}\Bigg(\sum_{k\in[m]-[m/2]}\mathbb{I}\{B_k\}-\frac{m}{2}\mathbb{E}\Big[\mathbb{I}\{B_m\}\Big]\geq
    \frac{m}{2}\epsilon\Bigg)
    \\=&\mathbb{P}\Bigg(\frac{2}{m}\sum_{k\in[m]-[m/2]}\mathbb{I}\{B_k\}-\mathbb{E}\Big[\mathbb{I}\{B_m\}\Big]\geq\epsilon\Bigg)\leq \exp\Big(-\frac{2(\frac{m}{2})^2\epsilon^2}{\frac{m}{2}}\Big)=\exp(-m\epsilon^2).
\end{align*}

let $\epsilon=0.08$ and $\delta=4\exp(-m\epsilon^2/2)$.
Combining the uniform bounds in (i)(ii)(iii), we obtain this theorem:

If $m\geq\frac{2\log(4/\delta)}{0.08^2}$, then with probability at least $1-\delta$, we have:
\begin{align*}
    \left|\#\Big\{k\in[m/2]:\left<\bw_k(0),\bx_+\right>>0\Big\}-\frac{m}{4}\right|\leq&0.04m,
    \\
    \#\Big\{k\in[m]-[m/2]:{w}_{k,1}(0)>0,(1-A\cos\Delta){w}_{k,1}(0)>A{w}_{k,2}(0)\sin\Delta\Big\}\geq&0.075m,
    \\
    \#\Big\{k\in[m]-[m/2]:{w}_{k,1}(0)>0,(B-\cos\Delta){w}_{k,1}(0)>{w}_{k,2}(0)\sin\Delta\Big\}
    \leq&0.205m.
\end{align*}
\end{proof}

So far, Lemma \ref{lemma: GF Phase I positive S+ S-}, \ref{lemma: GF Phase I positive S+}, \ref{lemma: GF Phase I positive S-}, \ref{lemma: GF Phase I positive dead}, \ref{lemma: GF Phase I negative S+ S-}, \ref{lemma: GF Phase I negative S-}, \ref{lemma: GF Phase I negative S+}, \ref{lemma: GF Phase I negative dead} characterize the training dynamics of each neuron in Phase I, and Lemma~\ref{lemma: initial position estimate} estimate the initial positions of the neurons. Now we can prove our main theorem in Phase I.

\begin{theorem}[Restatement of Theorem \ref{thm: GF Phase I}]\label{thm: restatement of GF Phase I}\ \\
Under the data Assumption \ref{ass: data}, let the two-layer network trained by Gradient Flow \eqref{equ: alg GF} starting from random initialization. Let the width $m=\Omega\left(\log(1/\delta)\right)$, the initialization scales satisfy \eqref{equ: parameter selection GF Phase I}. Then with probability at least $1-\delta$, the following results {\bf (S1)$\sim$(S5)} hold at the end of Phase I $(T_{\rm I}=10\sqrt{\frac{\kappa_1}{\kappa_2}})$:

{\bf (S1).} For positive neurons $k\in[m/2]$ $({\rm s}_k=1)$, let $\cK_+$ be the index set of living neurons, i.e. $\cK_+:=\{k\in[m/2]:\bw_k(T_{\rm I})\in\cM_+^+\cup\cM_-^+\}$. Then $0.21m\leq|\cK_+|\leq 0.29m$. Moreover, for any neuron $k\in\cK_+$, it has the following properties {\bf (P1)(P2)}.
\begin{align*}
    \text{\bf (P1).}& \text{ Its norm is small but significant}: \frac{4.66\sqrt{\kappa_1\kappa_2}}{\sqrt{m}}\leq\rho_k(T_{\rm I})\leq\frac{12\sqrt{\kappa_1\kappa_2}}{\sqrt{m}}.
    \\
    \text{\bf (P2).}& \text{ Its direction is strongly aligned with } \bmu:
    \\&\quad\quad\quad\quad\quad\left<\bw_k(T_{\rm I}),\bmu\right>\geq\Big(1-4.2\sqrt{\kappa_1\kappa_2}\Big)\left(1-\frac{2}{1+0.7\left(1+9.9\sqrt{\frac{\kappa_2}{\kappa_1}}\right)^{1.15}}\right).
\end{align*} 
{\bf (S2).} For negative neurons $k\in[m]-[m/2]$ $({\rm s}_k=-1)$, let $\cK_-$ be the index set of living neurons, i.e. $\cK_-:=\{k\in[m]-[m/2]:\bw_k(T_{\rm I})\in\cM_+^+\cup\cM_-^+\}$. Then $0.075m\leq|\cK_-|\leq 0.205m$. Moreover, for any neuron $k\in\cK_-$, it has the following properties {\bf (N1)(N2)(N3)}.
\begin{align*}
    &\text{{\bf (N1).} Its norm is tiny}: \rho_k(T_{\rm I})\leq\frac{\sqrt{\kappa_1\kappa_2}}{\sqrt{m}}\Big(4.3\sqrt{\frac{\kappa_1}{\kappa_2}}+\frac{11.1\sin\Delta}{1+p}\Big).
    \\&\text{{\bf (N2).} It lies on a manifold perpendicular to $\bx_+$}:\bw_k(t)\in\cM_+^{0}\cap\cM_-^{+}.
    \\&\text{{\bf (N3).} Its direction is weakly aligned with $\bx_+^{\perp}$}:\left<\bw_k(T_{\rm I}),\bx_+^{\perp}\right>>1-\frac{2}{\left(1+2.32\sqrt{\frac{\kappa_2}{\kappa_1}}\frac{\sin\Delta}{1+p}\right)^{1.6}+1}.
\end{align*}
{\bf (S3).} For other neurons $k\notin\cK_+\cup\cK_-$, it will remain dead forever:
\begin{align*}
    \bw_k(T_{\rm I})\in\cM_+^-\cap\cM_-^-,\quad \bb_k(t)\equiv\bb_k(T_{\rm I}),\quad \forall t\in[T_{\rm I},+\infty)
\end{align*}

{\bf (S4).} The predictions for $\bx_+$ and $\bx_-$ have the estimate: 
\begin{align*}
0.978\kappa_2\sqrt{\kappa_1\kappa_2}\Big((\frac{p-1}{p+1})^2-0.11\Big)\leq&f_+(T_{\rm I})\leq3.85\kappa_2\sqrt{\kappa_1\kappa_2},
\\
0.947\left((\frac{p-1}{p+1})^2\cos\Delta-0.2\right)\leq&f_-(T_{\rm I})\leq3.85\kappa_2\sqrt{\kappa_1\kappa_2},
\end{align*}
and the training accuracy is $\Acc(T_{\rm I})=\frac{p}{1+p}$.

{\bf (S5).}
$0<0.258\leq\frac{0.075}{0.29}\leq\frac{|\cK_-|}{|\cK_+|}\leq\frac{0.205}{0.21}\leq0.977<1$.
\end{theorem}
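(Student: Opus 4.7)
The plan is to reduce Theorem~\ref{thm: restatement of GF Phase I} to a case analysis on which of the four open quadrants (relative to the two hyperplanes $\cP_+^0$ and $\cP_-^0$) the neuron $\bb_k(0)$ starts in, combined with a counting argument over the random initialization. As a preliminary, I would first establish a coarse a priori bound, namely that on $[0,T_{\rm I}]$ every neuron has $\rho_k(t)\le(\kappa_1+1.1\kappa_2 t)/\sqrt m$ and the predictions satisfy $|f_i(t)|\le\sqrt{\kappa_1\kappa_2}$. This is a straightforward bootstrap: as long as $|f_i|$ is small the exponential factors $e^{-y_if_i}$ stay within $[0.9,1.1]$, hence $\|\bF_k\|\le 1.1$, hence $\rho_k$ grows at most linearly, hence $|f_i|\le m\cdot(\kappa_2/\sqrt m)\cdot(\kappa_1+1.1\kappa_2 t)/\sqrt m$, which at $t=T_{\rm I}=10\sqrt{\kappa_1/\kappa_2}$ is still $\le\sqrt{\kappa_1\kappa_2}$. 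Once this is in hand, each per-neuron dynamics is essentially a perturbation of the constant vector field $\pm(p/(1+p))\bx_+\mp(1/(1+p))\bx_-$ on each of the four open quadrants.

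Next I would decompose $\bb_k=\rho_k\bw_k$ and track the tangential motion $\bw_k$ separately from the radial motion $\rho_k$. I would handle the eight (sign $\times$ quadrant) cases in turn. For each, the vector field is piecewise constant, so I can (i) show the sign pattern the neuron eventually settles into, (ii) estimate the hitting time on any boundary manifold $\cM_\pm^0$ it crosses, and (iii) use Definition~\ref{def: discontinuous system solution} on the boundary to decide whether the trajectory crosses or slides on $\cM_+^0\cap\cM_-^+$. The key observation is that $p\cos\Delta>1$ makes the normal-component signs $F_N^\pm$ on these manifolds determined (sliding for negative neurons at $\cM_+^0\cap\cM_-^+$, crossing elsewhere). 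Once the qualitative picture is fixed, alignment with $\bmu$ (for living positive neurons) and with $\bx_+^\perp$ (for living negative neurons stuck on $\cM_+^0$) both reduce to comparing $\frac{\rd}{\rd t}\langle\bw_k,\bv\rangle$ against an autonomous scalar ODE of the form $\dot U\gtrsim (c/(\kappa_1/\kappa_2+ct))(c'-U^2)$, which integrates explicitly and gives the $1-\cO((\kappa_1/\kappa_2)^{0.55})$ and $1-\cO((\sqrt{\kappa_1/\kappa_2}\,p/\Delta)^{1.6})$ bounds in (P2) and (N3). The radial bounds in (P1) and (N1) follow by plugging the directional estimate back into $\dot\rho_k=(\mathrm{s}_k\kappa_2/\sqrt m)\langle\bF_k,\bw_k\rangle$ and integrating over a subinterval of $[T_0,T_{\rm I}]$ on which $\langle\bw_k,\bmu\rangle$ is already close to $1$.

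For the counting statements (S1)(i), (S2)(i), (S5), I would use Lemma~\ref{lemma: initial position estimate}: the random initialization $\bw_k(0)\sim\mathbb U(\mathbb S^{d-1})$ together with Hoeffding's inequality concentrates the numbers of neurons in each of the relevant initial subsets around their (explicit) spherical-measure means, giving $|\cK_+|\in[0.21m,0.29m]$ and $|\cK_-|\in[0.075m,0.205m]$ with probability $1-\delta$ once $m=\Omega(\log(1/\delta))$. The ``moreover'' part of Lemma~\ref{lemma: GF Phase I negative S+ S-} is what lets me translate ``initial angle bias toward $\bx_-$'' into ``the negative neuron lands in the living category'', and symmetrically for the dead category; this is exactly what the thresholds $A$, $B$ in Lemma~\ref{lemma: initial position estimate} are tailored to measure. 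Finally, (S4) comes from summing the per-neuron estimates: $f_+(T_{\rm I})=\sum_{k\in\cK_+}(\kappa_2/\sqrt m)\rho_k\langle\bw_k,\bx_+\rangle-\sum_{k\in\cK_-}(\kappa_2/\sqrt m)\rho_k\langle\bw_k,\bx_+\rangle$, where the first sum is $\Theta(\kappa_2\sqrt{\kappa_1\kappa_2})$ by (P1)(P2) and (S1)(i), while the second sum is negligible because of (N1)(N2) (which force $\langle\bw_k,\bx_+\rangle$ to be tiny); $f_-(T_{\rm I})$ is analogous, and the training-accuracy claim $\Acc(T_{\rm I})=p/(1+p)$ follows because $f_-(T_{\rm I})$ is positive (the $\cK_+$ contribution dominates), so the $n_+$ positive data are classified correctly while the $n_-$ negative data are misclassified.

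The main obstacle, I expect, will be the case $\bw_k(0)\in\cM_+^+\cap\cM_-^+$ for negative neurons (Lemma~\ref{lemma: GF Phase I negative S+ S-}), since there the trajectory can end up in either of three boundary pieces $\cM_+^0\cap\cM_-^+$, $\cM_+^+\cap\cM_-^0$, or $\cM_+^0\cap\cM_-^0$, and the discontinuous vector field at each of these pieces has to be analyzed with Filippov's convention to determine whether the neuron slides (and becomes a useful living negative neuron aligned with $\bx_+^\perp$) or crosses (and dies shortly after). This is what forces the sharp-threshold analysis with the constants $A$ and $B$, and is also the step where one must use the rough prediction bound $|f_i|\le\sqrt{\kappa_1\kappa_2}$ most carefully to control the error in $p e^{-f_+}\cos\Delta-e^{f_-}$ and ensure a definite sign of the normal projections throughout Phase I.
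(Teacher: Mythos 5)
Your proposal follows essentially the same route as the paper's own proof: a bootstrap a~priori bound on $\rho_k$ and $|f_i|$, a radial/tangential decomposition, eight quadrant-based case lemmas with Filippov sliding/crossing analysis and ODE comparison to get the alignment bounds, Hoeffding-based counting via the thresholds $A,B$, and assembly into (S1)--(S5). The only small imprecision is your description of the $\cK_-$ contribution to $f_+(T_{\rm I})$ as ``negligible'' because $\langle\bw_k,\bx_+\rangle$ is ``tiny''; by (N2) those neurons lie exactly on $\cM_+^0$, so $\langle\bw_k,\bx_+\rangle=0$ and that sum vanishes identically (for $f_-$ the $\cK_-$ term is genuinely nonzero and is controlled by the norm bound in (N1), as you say).
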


\begin{proof}[Proof of Theorem \ref{thm: restatement of GF Phase I}]\ \\
This theorem is a corollary of Lemma \ref{lemma: GF Phase I positive S+ S-}, Lemma \ref{lemma: GF Phase I positive S+}, Lemma \ref{lemma: GF Phase I positive S-}, Lemma \ref{lemma: GF Phase I positive dead}, Lemma \ref{lemma: GF Phase I negative S+ S-}, Lemma \ref{lemma: GF Phase I negative S-}, Lemma \ref{lemma: GF Phase I negative S+}, Lemma \ref{lemma: GF Phase I negative dead}, and Lemma \ref{lemma: initial position estimate}.
We focus on the end of Phase I: $T_{\rm I}=10\sqrt{\frac{\kappa_1}{\kappa_2}}$.

\underline{Proof of {\bf (S1)(S2)}.}
From Lemma \ref{lemma: initial position estimate}, we know that: if $m=\Omega\left(\log(1/\delta)\right)$, then with probability at least $1-\delta$, we have:
\begin{align*}
   \left|\#\Big\{k\in[m/2]:\left<\bw_k(0),\bx_+\right>>0\Big\}-\frac{m}{4}\right|\leq&0.04m,
    \\
    \#\Big\{k\in[m]-[m/2]:\left<\bw_k(0),\bx_-\right>>0,
    \left<\bw_k(0),\bx_-\right>> A\left<\bw_k(0),\bx_+\right>\Big\}\geq&0.075m,
    \\
    \#\Big\{k\in[m]-[m/2]:\left<\bw_k(0),\bx_-\right>>0,
    \left<\bw_k(0),\bx_+\right>\leq B\left<\bw_k(0),\bx_-\right>\Big\}
    \leq&0.205m.
\end{align*}
where $A=\frac{(1+4.73\kappa_1\kappa_2)p\cos\Delta-(1-4.73\kappa_1\kappa_2)}{(1-4.73\kappa_1\kappa_2)p-(1+4.73\kappa_1\kappa_2)\cos\Delta}$ and $B=\frac{(1+4.73\kappa_1\kappa_2)p-(1-4.73\kappa_1\kappa_2)\cos\Delta}{(1-4.73\kappa_1\kappa_2)p\cos\Delta-(1+4.73\kappa_1\kappa_2)}$.

Recalling the dynamics analysis in Lemma \ref{lemma: GF Phase I positive S+ S-},~\ref{lemma: GF Phase I positive S+},~\ref{lemma: GF Phase I positive S-}, and~\ref{lemma: GF Phase I positive dead}, we have:
\begin{align*}
    0.21m\leq|\cK_+|=\#\Big\{k\in[m/2]:\left<\bw_k(0),\bx_+\right>>0\Big\}\leq 0.29 m.
\end{align*}

Recalling the dynamics analysis in Lemma \ref{lemma: GF Phase I negative S+ S-},~\ref{lemma: GF Phase I negative S-}~\ref{lemma: GF Phase I negative S+}, and~\ref{lemma: GF Phase I negative dead}, we have:
\begin{align*}
&|\cK_-|\geq\#\Big\{k\in[m]-[m/2]:\left<\bw_k(0),\bx_-\right>>0,
\left<\bw_k(0),\bx_-\right>> A\left<\bw_k(0),\bx_+\right>\Big\}\geq0.075m,
\\
&|\cK_-|\leq\#\Big\{k\in[m]-[m/2]:\left<\bw_k(0),\bx_-\right>>0,
\left<\bw_k(0),\bx_+\right>\leq B\left<\bw_k(0),\bx_-\right>\Big\}
\leq0.205 m.
\end{align*}

Moreover, the estimates in Lemma \ref{lemma: GF Phase I positive S+ S-},~\ref{lemma: GF Phase I positive S+},~\ref{lemma: GF Phase I positive S-}, and~\ref{lemma: GF Phase I positive dead} ensure that for any $k\in\cK_+$, the following results hold:
\begin{gather*}
    \left<\bw_k(T_{\rm I}),\bmu\right>\geq\Big(1-4.2\sqrt{\kappa_1\kappa_2}\Big)\left(1-\frac{2}{1+0.7\left(1+9.9\sqrt{\frac{\kappa_2}{\kappa_1}}\right)^{1.15}}\right);
    \\
    \frac{4.66\sqrt{\kappa_1\kappa_2}}{\sqrt{m}}\leq\rho_k(T_{\rm I})\leq\frac{12\sqrt{\kappa_1\kappa_2}}{\sqrt{m}}.
\end{gather*}

Similarly, the estimates in Lemma~\ref{lemma: GF Phase I negative S+ S-},~\ref{lemma: GF Phase I negative S-}~\ref{lemma: GF Phase I negative S+}, and~\ref{lemma: GF Phase I negative dead} ensure that for any $k\in\cK_-$, the following results hold:

\begin{gather*}
\rho_k(T_{\rm I})\leq\frac{\sqrt{\kappa_1\kappa_2}}{\sqrt{m}}\Big(4.3\sqrt{\frac{\kappa_1}{\kappa_2}}+\frac{11.1\sin\Delta}{1+p}\Big);
\\
\bw_k(T_{\rm I})\in\cM_+^0\cap\cM_-^+;
\\
\left<\bw_k(T_{\rm I}),\bx_+^{\perp}\right>>1-\frac{2}{\left(1+2.32\sqrt{\frac{\kappa_2}{\kappa_1}}\frac{\sin\Delta}{1+p}\right)^{1.6}+1}.
\end{gather*}
\underline{Proof of {\bf (S3)}.} A direct corollary of Lemma \ref{lemma: GF Phase I positive S+ S-}, \ref{lemma: GF Phase I positive S+}, \ref{lemma: GF Phase I positive S-}, \ref{lemma: GF Phase I positive dead}, \ref{lemma: GF Phase I negative S+ S-}, \ref{lemma: GF Phase I negative S-}, \ref{lemma: GF Phase I negative S+}, \ref{lemma: GF Phase I negative dead}.

\underline{Proof of {\bf (S4)}.} {\bf (S4)} are direct corollaries of {\bf (S1)(S2)}.

For $f_+(T_{\rm I})$, we have the following estimate:
\begin{align*}
    &f_+(T_{\rm I})=\sum_{k\in\cK_+} a_k\sigma\left(\boldsymbol{b}_k(T_{\rm I})^\top\bx_+\right)+\sum_{k\in\cK_-} a_k\sigma\left(\boldsymbol{b}_k(T_{\rm I})^\top\bx_+\right)
    \\=&\sum_{k\in\cK_+} a_k\sigma\left(\boldsymbol{b}_k(T_{\rm I})^\top\bx_+\right)+0
    =\sum_{k\in\cK_+} \frac{\kappa_2}{\sqrt{m}}\rho_k(T_{\rm I})\sigma\left(\bw_k(T_{\rm I})^\top\bx_+\right)
    \\\geq&\sum_{k\in\cK_+}\frac{\kappa_2}{\sqrt{m}}\frac{4.66\sqrt{\kappa_1\kappa_2}}{\sqrt{m}}\left<\bw_k(T_{\rm I}),\bx_+\right>
    \geq\sum_{k\in\cK_+}\frac{\kappa_2}{\sqrt{m}}\frac{4.66\sqrt{\kappa_1\kappa_2}}{\sqrt{m}}\Big(\left<\bmu,\bx_+\right>-\left\|\bw_k(T_{\rm I})-\bmu\right\|\Big)
    \\=&\sum_{k\in\cK_+}\frac{\kappa_2}{\sqrt{m}}\frac{4.66\sqrt{\kappa_1\kappa_2}}{\sqrt{m}}\Big(\left\|\boldsymbol{z}\right\|\left<\boldsymbol{z},\bx_+\right>-2+2\left<\bw_k(T_{\rm I}),\bmu\right>\Big)
    \\\geq&|\cK_+|\frac{4.66\kappa_2\sqrt{\kappa_1\kappa_2}}{m}\Bigg(\frac{p-1}{p+1}\frac{p-\cos\Delta}{p+1}-2+2\Big(1-4.2\sqrt{\kappa_1\kappa_2}-\frac{2}{1+0.7\left(1+9.9\sqrt{\frac{\kappa_2}{\kappa_1}}\right)^{1.15}}\Big)\Bigg)
    \\\geq&0.21\cdot4.66\kappa_2\sqrt{\kappa_1\kappa_2}\Bigg(\frac{p-1}{p+1}\frac{p-\cos\Delta}{p+1}-8.4\sqrt{\kappa_1\kappa_2}-\frac{4}{1+0.7\left(1+9.9\sqrt{\frac{\kappa_2}{\kappa_1}}\right)^{1.15}}\Bigg)
    \\\geq&
    0.978\kappa_2\sqrt{\kappa_1\kappa_2}\Bigg(\left(\frac{p-1}{p+1}\right)^2-8.4\sqrt{\kappa_1\kappa_2}-\frac{4}{1+0.7\left(1+9.9\sqrt{\frac{\kappa_2}{\kappa_1}}\right)^{1.15}}\Bigg)
    \\\overset{\eqref{equ: parameter selection GF Phase I}}{\geq}&0.978\kappa_2\sqrt{\kappa_1\kappa_2}\Big((\frac{p-1}{p+1})^2-0.11\Big);
\end{align*}
\begin{align*}
    &f_+(T_{\rm I})=\sum_{k\in\cK_+} \frac{\kappa_2}{\sqrt{m}}\rho_k(T_{\rm I})\sigma\left(\bw_k(T_{\rm I})^\top\bx_+\right)
    \\\leq&\sum_{k\in\cK_+}\frac{\kappa_2}{\sqrt{m}}\frac{12\sqrt{\kappa_1\kappa_2}}{\sqrt{m}}\left<\bw_k(T_{\rm I}),\bx_+\right>
    \leq\sum_{k\in\cK_+}\frac{\kappa_2}{\sqrt{m}}\frac{12\sqrt{\kappa_1\kappa_2}}{\sqrt{m}}\Big(\left<\bmu,\bx_+\right>+\left\|\bw_k(T_{\rm I})-\bmu\right\|\Big)
    \\=&\sum_{k\in\cK_+}\frac{\kappa_2}{\sqrt{m}}\frac{12\sqrt{\kappa_1\kappa_2}}{\sqrt{m}}\Big(\left\|\boldsymbol{z}\right\|\left<\boldsymbol{z},\bx_+\right>+2-2\left<\bw_k(T_{\rm I}),\bmu\right>\Big)
    \\\leq&|\cK_+|\frac{12\kappa_2\sqrt{\kappa_1\kappa_2}}{m}\Bigg(1\cdot\frac{p-\cos\Delta}{p+1}+2-2+8.4\sqrt{\kappa_1\kappa_2}+\frac{4}{1+0.7\left(1+9.9\sqrt{\frac{\kappa_2}{\kappa_1}}\right)^{1.15}}\Bigg)
    \\\leq&0.29\cdot12\kappa_2\sqrt{\kappa_1\kappa_2}\Bigg(1\cdot\frac{p-\cos\Delta}{p+1}+8.4\sqrt{\kappa_1\kappa_2}+\frac{4}{1+0.7\left(1+9.9\sqrt{\frac{\kappa_2}{\kappa_1}}\right)^{1.15}}\Bigg)
    \\\overset{\eqref{equ: parameter selection GF Phase I}}{\leq}&3.48\kappa_2\sqrt{\kappa_1\kappa_2}\Big(1+0.084+\frac{4}{1+0.7(1+99)^{1.15}}\Big)\leq3.85\kappa_2\sqrt{\kappa_1\kappa_2}.
\end{align*}
Then we have:
\[
0.978\kappa_2\sqrt{\kappa_1\kappa_2}\Big((\frac{p-1}{p+1})^2-0.11\Big)\leq f_+(T_{\rm I})\leq3.85\kappa_2\sqrt{\kappa_1\kappa_2}.
\]

In the same way, we can estimate $f_-(T_{\rm I})$:
\begin{align*}
    &f_-(T_{\rm I})=\sum_{k\in\cK_+} a_k\sigma\left(\boldsymbol{b}_k(T_{\rm I})^\top\bx_-\right)+\sum_{k\in\cK_-} a_k\sigma\left(\boldsymbol{b}_k(T_{\rm I})^\top\bx_-\right)
    \\\geq&\sum_{k\in\cK_+}\frac{\kappa_2}{\sqrt{m}}\frac{4.66\sqrt{\kappa_1\kappa_2}}{\sqrt{m}}\left<\bw_k(T_{\rm I}),\bx_-\right>-\sum_{k\in\cK_-}\frac{\kappa_2}{\sqrt{m}}\frac{\sqrt{\kappa_1\kappa_2}}{\sqrt{m}}\Big(4.3\sqrt{\frac{\kappa_1}{\kappa_2}}+\frac{11.1\sin\Delta}{1+p}\Big)\left<\bw_k(T_{\rm I}),\bx_-\right>
    \\\geq&\sum_{k\in\cK_+}\frac{\kappa_2}{\sqrt{m}}\frac{4.66\sqrt{\kappa_1\kappa_2}}{\sqrt{m}}\Big(\left<\bmu,\bx_-\right>-\left\|\bw_k(T_{\rm I})-\bmu\right\|\Big)
    -\sum_{k\in\cK_-}\frac{\kappa_2}{\sqrt{m}}\frac{\sqrt{\kappa_1\kappa_2}}{\sqrt{m}}\Big(0.43+\frac{11.1}{12}\Big)\sin\Delta
    \\\geq&|\cK_+|\frac{4.66\kappa_2\sqrt{\kappa_1\kappa_2}}{m}\Bigg(\frac{p-1}{p+1}\frac{p\cos\Delta-1}{p+1}-2+2\Big(1-4.2\sqrt{\kappa_1\kappa_2}-\frac{2}{1+0.7\left(1+9.9\sqrt{\frac{\kappa_2}{\kappa_1}}\right)^{1.15}}\Big)\Bigg)
    \\&-|\cK_-|\frac{\kappa_2\sqrt{\kappa_1\kappa_2}}{m}\cdot1.355\sin\Delta
    \\\geq&0.21\cdot4.66\kappa_2\sqrt{\kappa_1\kappa_2}\Bigg(\frac{p-1}{p+1}\frac{p\cos\Delta-1}{p+1}-0.11\Bigg)-0.205\cdot1.355\sin\Delta\kappa_2\sqrt{\kappa_1\kappa_2}
    \\\geq&
    \kappa_2\sqrt{\kappa_1\kappa_2}\left(0.978(\frac{p-1}{p+1})^2\cos\Delta-0.11-0.28\sin\Delta\right){\geq}\kappa_2\sqrt{\kappa_1\kappa_2}\left(0.947(\frac{p-1}{p+1})^2\cos\Delta-0.11-0.07\right)
    \\\geq&0.947\left((\frac{p-1}{p+1})^2\cos\Delta-0.2\right);
\end{align*}
\begin{align*}
    &f_-(T_{\rm I})=\sum_{k\in\cK_+} a_k\sigma\left(\boldsymbol{b}_k(T_{\rm I})^\top\bx_-\right)+\sum_{k\in\cK_-} a_k\sigma\left(\boldsymbol{b}_k(T_{\rm I})^\top\bx_-\right)
    \\\leq&\sum_{k\in\cK_+}\frac{\kappa_2}{\sqrt{m}}\frac{12\sqrt{\kappa_1\kappa_2}}{\sqrt{m}}\left<\bw_k(T_{\rm I}),\bx_-\right>-0
    \\=&\sum_{k\in\cK_+}\frac{\kappa_2}{\sqrt{m}}\frac{12\sqrt{\kappa_1\kappa_2}}{\sqrt{m}}\Big(\left\|\boldsymbol{z}\right\|\left<\boldsymbol{z},\bx_-\right>+2-2\left<\bw_k(T_{\rm I}),\bmu\right>\Big)
    \\\leq&|\cK_+|\frac{12\kappa_2\sqrt{\kappa_1\kappa_2}}{m}\Bigg(1\cdot\frac{p\cos\Delta-1}{p+1}+2-2+8.4\sqrt{\kappa_1\kappa_2}+\frac{4}{1+0.7\left(1+9.9\sqrt{\frac{\kappa_2}{\kappa_1}}\right)^{1.15}}\Bigg)
    \\\leq&0.29\cdot12\kappa_2\sqrt{\kappa_1\kappa_2}\Bigg(1\cdot\frac{p\cos\Delta-1}{p+1}+8.4\sqrt{\kappa_1\kappa_2}+\frac{4}{1+0.7\left(1+9.9\sqrt{\frac{\kappa_2}{\kappa_1}}\right)^{1.15}}\Bigg)
    \\\overset{\eqref{equ: parameter selection GF Phase I}}{\leq}&3.48\kappa_2\sqrt{\kappa_1\kappa_2}\Big(1+0.084+\frac{4}{1+0.7(1+99)^{1.15}}\Big)\leq3.85\kappa_2\sqrt{\kappa_1\kappa_2}.
\end{align*}
Them we have:
\[
0.947\left((\frac{p-1}{p+1})^2\cos\Delta-0.2\right)\leq f_-(T_{\rm I})\leq3.85\kappa_2\sqrt{\kappa_1\kappa_2}.
\]

Due to $f_+(T_{\rm I})>0$ and $f_-(T_{\rm I})>0$, we obtain ${\rm ACC}(T_{\rm I})=\frac{p}{1+p}$.

Moreover, from Theorem \ref{thm: restatement of GF Phase I} (S1)(S2), we have
\begin{equation}
\label{equ of proof lemma: GF Phase V: m-m+ bound}
0<0.258\leq\frac{0.075m}{0.29m}\leq\alpha=\frac{|\cK_-|}{|\cK_+|}=\frac{m_-}{m_+}\leq\frac{0.205m}{0.21m}\leq0.977<1.
\end{equation}

\end{proof}

\begin{remark} 
The results in the following proofs are all based on the occurrence of the events in Theorem \ref{thm: restatement of GF Phase I}.
All of these results use the same settings as Theorem \ref{thm: GF Phase I}, except using a stronger condition on the initialization parameters~\eqref{equ: parameter selection GF} than~\eqref{equ: parameter selection GF Phase I} in Theorem~\ref{thm: restatement of GF Phase I}.
So they all hold with probability at least $1-\delta$.
\end{remark}

\newpage
\section{Proofs of Optimization Dynamics in Phase II}\label{appendix: proof: Phase II}

In this phase, we study the dynamics before the patterns of living neurons change again after Phase I. Specifically, we define 
\begin{align*}
    T_{\rm II}:=\inf\{t>T_{\rm I}:\exists k\in\cK_+\cup\cK_-,\text{\rm\sgn}_k^+(t)\ne\text{\rm\sgn}_k^+(T_{\rm I})\text{ or }\text{\rm\sgn}_k^-(t)\ne\text{\rm\sgn}_k^-(T_{\rm I})\},
\end{align*} 
and call $t\in(T_{\rm I},T_{\rm II}]$ Phase II.

Recalling the results in Theorem \ref{thm: GF Phase I}, during Phase II, the activation patterns do not change with $\sgn_k^+(t)=\sgn_k^-(t)=1$ for $k\in\cK_+$ and $\sgn_k^+(t)=0,\sgn_k^-(t)=1$ for $k\in\cK_-$. Theorem \ref{thm: GF Phase II} demonstrates that at the end of Phase II, except for one of living positive neuron $k_0\in\cK_+$ precisely changes its pattern on $\bx_-$, all other activation patterns remain unchanged.

Recall that at the end of Phase I, (i) the neuron $k\notin\cK_+\cup\cK_-$ is dead forever;
(ii) as for the living neuron $k\in\cK_+\cup\cK_-$, the activation patterns are:
\begin{align*}
    &\sgn_k^+(t)=\sgn_k^-(t)=1 \text{ for } k\in\cK_+;
    \\&\sgn_k^+(t)=0,\sgn_k^-(t)=1 \text{ for } k\in\cK_-.
\end{align*}

In this section, we will focus on the Phase when the negative neuron $k\in\cK_-$ still stays on the manifold $\mathcal{M}_+^0\cap\mathcal{M}_-^+$ (hence is still dead for $\boldsymbol{x}_+$) and the positive neuron $k\in\cK_+$ is still activated for $\boldsymbol{x}_-$. In general, we aim to estimate the 

As stated in the main text, we define as following hitting time:
\begin{equation}\label{equ: GF Phase II hitting time}
\begin{aligned}
T_{\rm II}:=
\inf\Big\{t>T_{\rm I}:&\ \exists k\in\cK_+\cup\cK_-,\text{\rm\sgn}_k^+(t)\ne\text{\rm\sgn}_k^+(T_{\rm I})\text{ or }\text{\rm\sgn}_k^-(t)\ne\text{\rm\sgn}_k^-(T_{\rm I})\Big\}
\\
=\inf\Big\{t>T_{\rm I}:&\ \exists k\in\cK_+,\text{  s.t. } \left<\boldsymbol{w}_k(t),\boldsymbol{x}_+\right>\leq0\text{ or }\left<\boldsymbol{w}_k(t),\boldsymbol{x}_-\right>\leq0,
\\&\text{or}\ \exists k\in\cK_-,\text{  s.t. } \left<\boldsymbol{w}_k(t),\boldsymbol{x}_+\right>\ne0\text{ or }\left<\boldsymbol{w}_k(t),\boldsymbol{x}_-\right>\leq0\Big\},
\end{aligned}
\end{equation}

and we call $T_{\rm I}\leq t\leq T_{\rm II}$ ``Phase II''.

First, we define a more relaxed hitting time than $T_{\rm II}$, only about the change of living positive neurons:
\begin{equation}\label{equ: GF Phase II hitting time +}
T_{\rm II}^+:=\inf\Big\{t>T_{\rm I}:\ \exists k\in\cK_+,\text{ s.t. } \left<\boldsymbol{w}_k(t),\boldsymbol{x}_+\right>\leq0\text{ or }\left<\boldsymbol{w}_k(t),\boldsymbol{x}_-\right>\leq0\Big\}.
\end{equation}

Noticing that the changes in activation partitions are essentially caused by the change of discontinuous vector fields, we first define the following auxiliary hitting time:
\begin{equation}\label{equ: GF Phase II auxiliary hitting time}
\begin{gathered}
T_{\rm II}^*:=T_{\rm II}^+\wedge\inf\big\{t>T_{\rm I}:
\left<\boldsymbol{F}_+(t),\boldsymbol{x}_+\right>\leq0\big\},
\\
\text{where}\ 
\boldsymbol{F}_+(t)=\frac{p}{1+p}e^{-f_+(t)}\boldsymbol{x}_{+}-\frac{1}{1+p}e^{f_-(t)}\boldsymbol{x}_-.
\end{gathered}
\end{equation}

We call $T_{\rm I}\leq t\leq T_{\rm II}^*$ ``Phase II*''.

In the subsequent proof, we first meticulously characterize the optimization dynamics in Phase II* and then prove $T_{\rm II}=T_{\rm II}^+=T_{\rm II}^*$. 
The crucial proof technique is fine-grained prior estimations for $2$d ODEs on $f_+(t)$ and $f_-(t)$, leading to the vector field estimation.

% {\bf Goal of Proof.} In general, we will prove the following claims. (i) $T_{\rm II}=T_{\rm II}^*$.
% At $T_{\rm II}$, (ii) The positive neuron $k\in\cK_+$ is still activated for $\boldsymbol{x}_+$, but there exists a neuron $k\in\cK_+$ dead for $\boldsymbol{x}_-$. (iii) The negative neuron $k\in\cK_-$ still stays on the manifold $\mathcal{M}_+^0\cap\mathcal{M}_-^+$. (iv) Training accuracy is improved from $\frac{p}{1+p}$ to $1$ during Phase II. (v) Estimate the tight bound for $T_{\rm II}$.

To begin with, we establish the following lemma about the optimization dynamics of living neurons.

\begin{lemma}[Dynamics of living neurons in Phase II*]\label{lemma: GF Phase II neuron dynamics}\ \\
In Phase II*, $t\in[T_{\rm I}, T_{\rm II}^*]$, we have the following dynamics for each neuron $k\in\cK_-\cup\cK_+$.

(S1) For positive neuron $k\in\cK_+$, we have:
\begin{align*}
&\boldsymbol{w}_k(t)\in\mathcal{M}_+^+\cap\mathcal{M}_-^+,
\\
&\frac{\mathrm{d} \boldsymbol{b}_k(t)}{\mathrm{d}t}=\frac{\kappa_2}{\sqrt{m}}\boldsymbol{F}_+(t)=\frac{\kappa_2}{\sqrt{m}}\left(\frac{p}{1+p}e^{-f_+(t)}\boldsymbol{x}_{+}-\frac{1}{1+p}e^{f_-(t)}\boldsymbol{x}_-\right).
\end{align*}
(S2). For negative neuron $k\in\cK_-$, we have:
\begin{align*}
    &\boldsymbol{w}_k(t)\in\mathcal{M}_+^0\cap\mathcal{M}_-^+,
    \\
    &\frac{\mathrm{d} \boldsymbol{b}_k(t)}{\mathrm{d}t}=\frac{\kappa_2e^{f_-(t)}}{\sqrt{m}(1+p)}\Big(\boldsymbol{x}_--\boldsymbol{x}_+\cos\Delta\Big).
\end{align*}
\end{lemma}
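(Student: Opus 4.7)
The plan is to split the argument by the activation category of each living neuron at time $T_{\rm I}$ as given by Theorem~\ref{thm: GF Phase I}, namely $\bw_k(T_{\rm I})\in\cM_+^+\cap\cM_-^+$ for $k\in\cK_+$ and $\bw_k(T_{\rm I})\in\cM_+^0\cap\cM_-^+$ for $k\in\cK_-$, and to propagate each pattern forward on $[T_{\rm I},T_{\rm II}^*]$ by a combination of (i) the definition of the hitting times $T_{\rm II}^+,T_{\rm II}^*$, and (ii) a Filippov analysis on the discontinuity manifold $\cP_+^0\cap\cP_-^+$. Once the activation pattern is pinned down on this interval, the claimed dynamics fall out by substituting into the Clarke subdifferential formula~\eqref{equ: dynamics}.

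For a positive neuron $k\in\cK_+$, the definition of $T_{\rm II}^+$ in~\eqref{equ: GF Phase II hitting time +} directly forces $\langle\bw_k(t),\bx_+\rangle>0$ and $\langle\bw_k(t),\bx_-\rangle>0$ for all $t\in[T_{\rm I},T_{\rm II}^+)$, so $\bw_k(t)\in\cM_+^+\cap\cM_-^+$; since $T_{\rm II}^*\le T_{\rm II}^+$, the same holds on $[T_{\rm I},T_{\rm II}^*]$. With $\sgn_k^+=\sgn_k^-=1$ and $\mathrm{s}_k=+1$, the dynamics~\eqref{equ: dynamics} collapses to $\frac{\rd\bb_k}{\rd t}=\frac{\kappa_2}{\sqrt{m}}\bF_+(t)$, which is exactly (S1).

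The harder case is a negative neuron $k\in\cK_-$, starting on the codimension-one manifold $\cP_+^0\cap\cP_-^+$. Here I would apply Definition~\ref{def: discontinuous system solution}: split a small ball around any $\tilde{\bb}\in\cP_+^0\cap\cP_-^+$ into $\cG_\pm=\{\bb:\pm\langle\bb,\bx_+\rangle>0\}$, write down the two limiting vector fields, and project onto the normal $\bx_+$. With $\mathrm{s}_k=-1$, a short computation (as in Lemma~\ref{lemma: GF Phase I negative S+ S-}, Step II) gives
\begin{equation*}
F_N^-=\frac{\kappa_2 e^{f_-(t)}\cos\Delta}{\sqrt{m}(1+p)},\qquad F_N^+=-\,\langle\bF_+(t),\bx_+\rangle.
\end{equation*}
Clearly $F_N^->0$. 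The crucial observation is that the auxiliary hitting time $T_{\rm II}^*$ in~\eqref{equ: GF Phase II auxiliary hitting time} is defined precisely so that $\langle\bF_+(t),\bx_+\rangle>0$ on $[T_{\rm I},T_{\rm II}^*]$, which is equivalent to $F_N^+<0$. Hence the Filippov configuration falls into Case~(I) of Definition~\ref{def: discontinuous system solution}, so $\bb_k(t)$ is constrained to $\cP_+^0\cap\cP_-^+$ and therefore $\bw_k(t)\in\cM_+^0\cap\cM_-^+$ on this interval. The Filippov sliding vector field is the unique tangential convex combination $\alpha\bF^++(1-\alpha)\bF^-$ with $\alpha=F_N^-/(F_N^--F_N^+)$; plugging in and simplifying (using $\bF^+-\bF^-=-\frac{\kappa_2 p e^{-f_+(t)}}{\sqrt{m}(1+p)}\bx_+$) should collapse to $\frac{\rd\bb_k}{\rd t}=\frac{\kappa_2 e^{f_-(t)}}{\sqrt{m}(1+p)}(\bx_--\bx_+\cos\Delta)$, matching (S2).

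I expect the main obstacle to be bookkeeping rather than deep: one has to verify carefully that the auxiliary condition $\langle\bF_+(t),\bx_+\rangle>0$ (built into $T_{\rm II}^*$) translates exactly into the Case~(I) hypothesis of Filippov's definition for the negative neurons' sign convention ($\mathrm{s}_k=-1$), and that the sliding-mode convex combination indeed algebraically reduces to the stated closed form. There is no dynamical invariant to discover in this lemma; the work is entirely in aligning the definitions of $T_{\rm II}^+$ and $T_{\rm II}^*$ with the two pattern-preservation arguments and performing the Filippov computation cleanly.
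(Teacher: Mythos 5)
Your proposal follows essentially the same route as the paper's proof: for (S1), the definition of $T_{\rm II}^+$ (and $T_{\rm II}^*\le T_{\rm II}^+$) pins down the activation pattern directly; for (S2), you run the Filippov analysis on $\cP_+^0$ with $F_N^->0$ and $F_N^+ = -\tfrac{\kappa_2}{\sqrt{m}}\langle\bF_+(t),\bx_+\rangle<0$ (the latter precisely because $T_{\rm II}^*$ is defined so that $\langle\bF_+(t),\bx_+\rangle>0$), then read off the sliding-mode convex combination.

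There is one small but real gap. Filippov Case~(I) only constrains $\bb_k(t)$ to the discontinuity surface $\cP_+^0$; it does not by itself constrain $\bb_k(t)$ to $\cP_+^0\cap\cP_-^+$, which is what you assert. The sliding dynamics $\frac{\rd\bb_k}{\rd t}=\frac{\kappa_2 e^{f_-(t)}}{\sqrt{m}(1+p)}(\bx_--\bx_+\cos\Delta)$ is only a valid description of the trajectory as long as $\langle\bb_k(t),\bx_-\rangle>0$; if the trajectory reached the corner $\cP_+^0\cap\cP_-^0$, the local vector field analysis would change. You need the extra step the paper takes: introduce the hitting time $\tau_-^+ := T_{\rm II}^*\wedge\inf\{t>T_{\rm I}:\exists k\in\cK_-,\ \langle\bw_k(t),\bx_-\rangle\le 0\}$, note that along the sliding flow $\langle\tfrac{\rd\bb_k}{\rd t},\bx_-\rangle=\frac{\kappa_2 e^{f_-(t)}}{\sqrt{m}(1+p)}\sin^2\Delta>0$, so $\langle\bb_k(t),\bx_-\rangle$ is strictly increasing on $[T_{\rm I},\tau_-^+]$ and therefore never vanishes, which forces $\tau_-^+=T_{\rm II}^*$ and justifies the pattern claim $\bw_k(t)\in\cM_+^0\cap\cM_-^+$ on the whole interval. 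This is bookkeeping of the kind you anticipate, but it is not automatic from the Filippov step and should be stated.
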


\begin{proof}[Proof of Lemma \ref{lemma: GF Phase II neuron dynamics}]\ \\
\underline{(S1) Let $k\in\cK_+$.}
Recalling the definition of $T_{\rm II}^*$, it holds that $\left<\boldsymbol{w}_k(t),\boldsymbol{x}_+\right>>0$ and $\left<\boldsymbol{w}_k(t),\boldsymbol{x}_-\right>>0$ for any $T_{\rm I}\leq t\leq T_{\rm II}^*$, so the dynamics holds.

\underline{(S2) Let $k\in\cK_-$.} Recalling the definition of $T_{\rm II}^*$, it holds $\left<\boldsymbol{F}_+(t),\boldsymbol{x}_+\right>>0$ for any $T_{\rm I}\leq t\leq T_{\rm II}^*$. 
Due to $\bw_k(T_{\rm I})\in\mathcal{M}_+^0\cap\mathcal{M}_-^+$, we first analysis the vector field around the manifold $\cP_+^0\cap\cP_-^+$ for $T_{\rm I}\leq t\leq T_{\rm II}^*$.

For any $\tilde{\bb}\in\cP_+^0\cap\cP_{-}^+$ and $0<\delta_0\ll1$, we know that $\cP_+^0\cap\cP_{-}^+$ separates its neighborhood $\mathcal{B}(\tilde{\bb },\delta_0)$ into two domains $\mathcal{G}_-=\{\bb \in\mathcal{B}(\tilde{\bb},\delta_0):\left<\bb ,\bx_+\right><0\}$ and $\mathcal{G}_+=\{\bb\in\mathcal{B}(\tilde{\bb},\delta_0):\left<\bb,\bx_+\right>>0\}$. Following Definition \ref{def: discontinuous system solution}, we calculate the limited vector field on $\tilde{\bb}$ from $\mathcal{G}_-$ and $\mathcal{G}_+$.

(i) The limited vector field $\bF^-$ on $\tilde{\bb}$ (from $\mathcal{G}_-$):
\begin{gather*}
\frac{\mathrm{d}\bb }{\mathrm{d} t}=\bF^-,\text{ where }
\bF^-=\frac{\kappa_2}{\sqrt{m}}\frac{1}{1+p}e^{f_-(t)}\bx_-.
\end{gather*}

(ii) The limited vector field $\bF^+$ on $\tilde{\bb}$ (from $\mathcal{G}_+$):
\begin{gather*}
\frac{\mathrm{d}\bb}{\mathrm{d} t}=\bF^+,\text{ where }
\bF^+=-\frac{\kappa_2}{\sqrt{m}}\bracket{\frac{p e^{-f_+(t)}}{1+p}\bx_+-\frac{e^{f_-(t)}}{1+p}\bx_-}.
\end{gather*}

(iii) Then we calculate the
projections of $\bF^-$ and $\bF^+$ onto $\bx_+$ (the normal to the surface $\cP_+^0\cap\cP_-^+$):
\begin{gather*}
F_N^{-}=\left<\bF^-,\bx_+\right>
=\frac{\kappa_2 e^{f_-(t)}}{\sqrt{m}(1+p)}\cos\Delta,
\\
F_N^{+}=\left<\bF^+,\bx_+\right>=\frac{\kappa_2 e^{f_-(t)}}{\sqrt{m}(1+p)}\cos\Delta-\frac{\kappa_2 p e^{-f_+(t)}}{\sqrt{m}(1+p)}.
\end{gather*}

We further define the hitting time to check whether $\boldsymbol{w}_k(t)\in\cM_+^0\cap\cM_-^+$ for $T_{\rm I}\leq t\leq T_{\rm II}^*$. 
\begin{align*}
    \tau_{-}^+:=T_{\rm II}^*\wedge\inf\{t>T_{\rm I}:\exists k\in\cK_-,\text{ s.t. }\<\bw_k(t),\bx_-\>\leq0\}.
\end{align*}

From the definition of $T_{\rm II}^*$, we know $\left<\boldsymbol{F}_+(t),\boldsymbol{x}_+\right>>0$ for any $T_{\rm I}\leq t\leq T_{\rm II}^*$, so $F_N^{+}<0$. 
And it is clear that $F_N^{-}>0$. 
Hence, the dynamics corresponds to Case (I) in Definition \ref{def: discontinuous system solution} ($F_N^{-}>0$ and $F_N^{+}<0$), which means $\boldsymbol{b}_k(t)$ can not leave $\mathcal{P}_+^0$ (i.e., $\<\bb_k(t),\bx_+\>=0$) for $t\in[T_{\rm I},\tau_-^+]$, and the dynamics of $\bb_k(t)$ satisfies:
\begin{align*}
    \frac{\mathrm{d}\bb}{\mathrm{d}t}=\alpha\bF^++(1-\alpha)\bF^-,\quad \alpha=\frac{{f}_N^-}{{f}_N^--{f}_N^+},\ t\in[T_{\rm I},\tau_-^+],
\end{align*}
which is
\begin{align*}
    \frac{\mathrm{d}\bb_k(t)}{\mathrm{d}t}\frac{\kappa_2 e^{f_-(t)}}{\sqrt{m}(1+p)}\Big(\bx_--\bx_+\cos\Delta\Big),\ t\in[T_{\rm I},\tau_-^+].
\end{align*}
By Lemma~\ref{lemma: dynamics decomposition}, we know that the dynamics of $\bw_k(t)$ on $\cM_+^0\cap\cM_-^+$ and the dynamics of $\rho_k(t)$ are:
\begin{equation*}
    \frac{\mathrm{d}\bw_k(t)}{\mathrm{d}t}=\frac{\kappa_2 e^{f_-(t)}}{\rho_k(t)\sqrt{m}(1+p)}\Big(\bx_--\left<\bw_k(t),\bx_-\right>\bw_k-\bx_+\cos\Delta\Big).
\end{equation*}
\begin{equation*}
    \frac{\mathrm{d}\rho_k(t)}{\mathrm{d}t}=\frac{\kappa_2 e^{f_-(t)}}{\sqrt{m}(1+p)}\left<\bw_k(t),\boldsymbol{x_-}\right>.
\end{equation*}

From this dynamics, for any $t\in[T_{\rm I},\tau_-^+]$, we have
\begin{align*}
    &\<\boldsymbol{b}_k(t),\bx_-\>=\<\boldsymbol{b}_k(0),\bx_-\>+\int_0^t\<\frac{\mathrm{d}\boldsymbol{b}_k(s)}{\mathrm{d}s},\bx_-\>\rd s
    \\=&\<\boldsymbol{b}_k(0),\bx_-\>+\int_0^t\frac{\kappa_2 e^{f_-(s)}}{\sqrt{m}(1+p)}\sin^2\Delta\rd s>\<\boldsymbol{b}_k(0),\bx_-\>>0,
\end{align*}
which means that $T_{\rm II}^*=\tau_-^+$.

\end{proof}

% \begin{lemma}\label{lemma: GF Phase II T* simplify}
%     \begin{align*}
%         T_{\rm II}^*=\inf\Big\{t>T_{\rm I}:
%         &\ \<\bF_+(t),\bx_+\>\leq0;\text{ or }\exists k\in\cK_+,\text{  s.t. } \left<\boldsymbol{w}_k(t),\boldsymbol{x}_+\right>\leq0\text{ or }\left<\boldsymbol{w}_k(t),\boldsymbol{x}_-\right>\leq0\Big\}.
%     \end{align*}
% \end{lemma}

% \begin{proof}[Proof of Lemma~\ref{lemma: GF Phase II T* simplify}]\ \\
% With the help of Lemma~\ref{lemma: GF Phase II neuron dynamics} (S2), we know that: for any $t\in[T_{\rm I},T_{\rm II}^*]$, it holds that
% \begin{align*}
%     \<\bw_k(t),\bx_+\>=0,\ \<\bw_k(t),\bx_+\>>0;
% \end{align*}
% Combining the definition of $T_{\rm II}^*$
    
% \end{proof}

Noticing that Lemma~\ref{lemma: GF Phase II neuron dynamics} determines the activation patterns for living neurons in Phase II*, the next lemma gives the first-order dynamics of $f_+(t)$ and $f_-(t)$.

\begin{lemma}[First-order dynamics of predictions in Phase II*]\label{lemma: GF Phase II 1-order f dynamics}\ \\
In Phase II* $(T_{\rm I}\leq t\leq T_{\rm II}^*)$, we have the following dynamics for $f_+(t)$ and $f_-(t)$:
\begin{align*}
    \frac{\mathrm{d}f_+(t)}{\mathrm{d}t}&=\kappa_2^2\frac{m_+}{m}\Big(\frac{p e^{-f_+(t)}}{1+p}-\frac{e^{f_-(t)}}{1+p}\cos\Delta\Big),
    \\
    \frac{\mathrm{d}f_-(t)}{\mathrm{d}t}&=\kappa_2^2\frac{m_+}{m}\Big(\frac{p e^{-f_+(t)}}{1+p}\cos\Delta-\frac{e^{f_-(t)}}{1+p}\Big)-\kappa_2^2\frac{m_-}{m}\frac{e^{f_-(t)}}{1+p}\sin^2\Delta.
\end{align*}
\end{lemma}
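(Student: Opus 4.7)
The plan is to differentiate $f_+(t)=\sum_{k\in[m]}a_k\sigma(\langle\bb_k(t),\bx_+\rangle)$ and $f_-(t)=\sum_{k\in[m]}a_k\sigma(\langle\bb_k(t),\bx_-\rangle)$ term-by-term, splitting the sum according to the three categories of neurons classified at the end of Phase I (Theorem~\ref{thm: restatement of GF Phase I}): $\cK_+$, $\cK_-$, and the dead neurons $[m]\setminus(\cK_+\cup\cK_-)$. On each class the activation pattern is constant throughout Phase~II* by Lemma~\ref{lemma: GF Phase II neuron dynamics}, so either $\sigma$ is smooth at the relevant pre-activation or the pre-activation is frozen at $0$. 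In both cases the chain rule applies, and substituting the per-neuron velocity given in Lemma~\ref{lemma: GF Phase II neuron dynamics} yields the claimed ODEs.

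Concretely, dead neurons satisfy $\bb_k(t)\equiv\bb_k(T_{\rm I})$ with $\bw_k(T_{\rm I})\in\cM_+^-\cap\cM_-^-$, so they contribute $0$ to both $\dot f_+(t)$ and $\dot f_-(t)$. For $k\in\cK_+$ we have $\bw_k(t)\in\cM_+^+\cap\cM_-^+$, hence $\sigma'(\langle\bb_k,\bx_\pm\rangle)=1$; combined with $a_k=\kappa_2/\sqrt{m}$ and $\dot\bb_k=(\kappa_2/\sqrt{m})\bF_+(t)$ this produces, after summing over the $m_+$ neurons in $\cK_+$, the contribution $\frac{\kappa_2^2 m_+}{m}\bigl(\frac{p e^{-f_+}}{1+p}-\frac{e^{f_-}\cos\Delta}{1+p}\bigr)$ to $\dot f_+$ and the analogous $\frac{\kappa_2^2 m_+}{m}\bigl(\frac{p e^{-f_+}\cos\Delta}{1+p}-\frac{e^{f_-}}{1+p}\bigr)$ to $\dot f_-$. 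For $k\in\cK_-$, the Filippov selection in Lemma~\ref{lemma: GF Phase II neuron dynamics}(S2) pins $\langle\bb_k(t),\bx_+\rangle\equiv 0$, so these neurons contribute $0$ to $\dot f_+$; meanwhile $\langle\bb_k,\bx_-\rangle>0$, and projecting $\dot\bb_k=\frac{\kappa_2 e^{f_-}}{\sqrt{m}(1+p)}(\bx_--\bx_+\cos\Delta)$ onto $\bx_-$ gives $\frac{\kappa_2 e^{f_-}\sin^2\Delta}{\sqrt{m}(1+p)}$; multiplying by $a_k=-\kappa_2/\sqrt{m}$ and summing over $m_-$ neurons yields the extra $-\frac{\kappa_2^2 m_-}{m}\frac{e^{f_-}\sin^2\Delta}{1+p}$ term in $\dot f_-$. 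Collecting these contributions gives both stated identities.

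The only subtlety I expect is justifying the term-by-term differentiation for $k\in\cK_-$ at the kink $\langle\bb_k,\bx_+\rangle=0$, where $\sigma$ is merely Clarke-differentiable; however, since Lemma~\ref{lemma: GF Phase II neuron dynamics}(S2) asserts that $\langle\bb_k(t),\bx_+\rangle$ stays identically at $0$ on $[T_{\rm I},T_{\rm II}^*]$, the summand $a_k\sigma(\langle\bb_k(t),\bx_+\rangle)\equiv 0$ is trivially $C^\infty$ with derivative $0$, so no additional care (beyond quoting the preceding lemma) is needed. Everything else is routine chain-rule computation.
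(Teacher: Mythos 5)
Your proposal is correct and follows essentially the same route as the paper: both rely on Lemma~\ref{lemma: GF Phase II neuron dynamics} to fix the activation patterns on $[T_{\rm I},T_{\rm II}^*]$, which lets the ReLU be dropped (or its derivative fixed to $0$ or $1$) on each class of neurons before differentiating, and then the per-neuron velocities are projected onto $\bx_\pm$ and summed. Your explicit handling of the kink for $k\in\cK_-$ (noting $\langle\bb_k(t),\bx_+\rangle\equiv 0$, so the summand is identically zero) matches the paper's implicit step of rewriting $f_\pm(t)$ as linear expressions first.
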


\begin{proof}[Proof of Lemma \ref{lemma: GF Phase II 1-order f dynamics}]\ \\
From the definition of $T_{\rm II}^*$, for any $T_{\rm I}\leq t\leq T_{\rm II}^*$, we have
\begin{align*}
    f_+(t)&=\sum_{k\in\cK_+}\frac{\kappa_2}{\sqrt{m}}\sigma(\boldsymbol{b}_k^\top(t)\boldsymbol{x}_+)-\sum_{k\in\cK_-}\frac{\kappa_2}{\sqrt{m}}\sigma(\boldsymbol{b}_k^\top(t)\boldsymbol{x}_+)=\sum_{k\in\cK_+}\frac{\kappa_2}{\sqrt{m}}\boldsymbol{b}_k^\top(t)\boldsymbol{x}_+,
    \\
    f_-(t)&=\sum_{k\in\cK_+}\frac{\kappa_2}{\sqrt{m}}\sigma(\boldsymbol{b}_k^\top(t)\boldsymbol{x}_-)-\sum_{k\in\cK_-}\frac{\kappa_2}{\sqrt{m}}\sigma(\boldsymbol{b}_k^\top(t)\boldsymbol{x}_-)=\sum_{k\in\cK_+}\frac{\kappa_2}{\sqrt{m}}\boldsymbol{b}_k^\top(t)\boldsymbol{x}_--\sum_{k\in\cK_-}\frac{\kappa_2}{\sqrt{m}}\boldsymbol{b}_k^\top(t)\boldsymbol{x}_-.
\end{align*}

With the help of Lemma \ref{lemma: GF Phase II neuron dynamics}, we have the dynamics of predictions:
\begin{align*}
    \frac{\mathrm{d}f_+(t)}{\mathrm{d}t}=&\sum_{k\in\cK_+}\frac{\kappa_2}{\sqrt{m}}\left<\frac{\mathrm{d}\boldsymbol{b}_k(t)}{\mathrm{d}t},\boldsymbol{x}_+\right>=\frac{\kappa_2^2}{m}\sum_{k\in\cK_+}\Big(\frac{p}{1+p}e^{-f_+(t)}-\frac{1}{1+p}e^{f_-(t)}\cos\Delta\Big)
    \\=&
    \frac{m_+}{m}\kappa_2^2\Big(\frac{p e^{-f_+(t)}}{1+p}-\frac{e^{f_-(t)}}{1+p}\cos\Delta\Big).
\end{align*}
\begin{align*}
    \frac{\mathrm{d}f_-(t)}{\mathrm{d}t}=&\sum_{k\in\cK_+}\frac{\kappa_2}{\sqrt{m}}\left<\frac{\mathrm{d}\boldsymbol{b}_k(t)}{\mathrm{d}t},\boldsymbol{x}_-\right>-\sum_{k\in\cK_-}\frac{\kappa_2}{\sqrt{m}}\left<\frac{\mathrm{d}\boldsymbol{b}_k(t)}{\mathrm{d}t},\boldsymbol{x}_-\right>
    \\=&\frac{\kappa_2^2}{m}\sum_{k\in\cK_+}\Big(\frac{p e^{-f_+(t)}}{1+p}\cos\Delta-\frac{e^{f_-(t)}}{1+p}\Big)-\frac{\kappa_2^2}{m}\sum_{k\in\cK_-}\frac{e^{f_-(t)}}{1+p}\Big(1-\cos^2\Delta\Big)
    \\=&\frac{m_+}{m}\kappa_2^2\Big(\frac{p e^{-f_+(t)}}{1+p}\cos\Delta-\frac{e^{f_-(t)}}{1+p}\Big)-\frac{m_-}{m}\kappa_2^2\frac{e^{f_-(t)}}{1+p}\sin^2\Delta.
\end{align*}

\end{proof}

Due to the specificity of the first-order dynamics, the following lemma gives an second-order \textbf{autonomous} dynamics of predictions, which is is the core dynamics in this phase.

\begin{lemma}[Second-order Autonomous Dynamics of predictions in Phase II*]\label{lemma: GF Phase II 2-order f dynamics}\ \\
Consider the following two variables:
\[
\begin{cases}
    \mathcal{U}(t):=\kappa_2^2\frac{m_+}{m}\frac{p}{1+p}e^{-f_+(t)},
    \\
    \mathcal{V}(t):=\kappa_2^2\frac{m_+}{m}\frac{1}{1+p}e^{f_-(t)}.
\end{cases}
\]
Then the following autonomous dynamics of $\mathcal{U}(t)$ and $\mathcal{V}(t)$ hold in Phase II* $(T_{\rm I}\leq t\leq T_{\rm II}^*)$:
\[
\begin{cases}
    \frac{\mathrm{d}\mathcal{U}(t)}{\mathrm{d}t}
    =\cU(t)\cV(t)\cos\Delta-\cU^2(t),
    \\
    \frac{\mathrm{d}\mathcal{V}(t)}{\mathrm{d}t}=\mathcal{U}(t)\mathcal{V}(t)\cos\Delta-\mathcal{V}^2(t)\left(1+\alpha\sin^2\Delta\right).
\end{cases}
\]
\end{lemma}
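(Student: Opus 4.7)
The plan is to observe that this lemma is essentially a direct change-of-variables computation applied to the first-order dynamics already established in Lemma \ref{lemma: GF Phase II 1-order f dynamics}. Since $\cU(t)$ depends on $t$ only through $f_+(t)$ (up to a positive multiplicative constant) and $\cV(t)$ depends on $t$ only through $f_-(t)$, the key idea is to apply the chain rule to the exponentials and rewrite the resulting expressions purely in terms of $\cU$ and $\cV$.

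First I would compute
\[
\frac{\rd\cU(t)}{\rd t}=-\kappa_2^2\frac{m_+}{m}\frac{p}{1+p}e^{-f_+(t)}\cdot\frac{\rd f_+(t)}{\rd t}=-\cU(t)\frac{\rd f_+(t)}{\rd t},
\]
and similarly $\frac{\rd\cV(t)}{\rd t}=\cV(t)\frac{\rd f_-(t)}{\rd t}$. Next I would substitute the expressions for $\frac{\rd f_+(t)}{\rd t}$ and $\frac{\rd f_-(t)}{\rd t}$ from Lemma \ref{lemma: GF Phase II 1-order f dynamics}, recognizing the building blocks $\kappa_2^2\frac{m_+}{m}\frac{p\,e^{-f_+(t)}}{1+p}=\cU(t)$ and $\kappa_2^2\frac{m_+}{m}\frac{e^{f_-(t)}}{1+p}=\cV(t)$. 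For the extra term in $\frac{\rd f_-(t)}{\rd t}$ involving the dead-direction contribution, I would factor out $\kappa_2^2\frac{m_+}{m}\frac{e^{f_-(t)}}{1+p}$ so that the coefficient $\frac{m_-}{m}\big/\frac{m_+}{m}=\alpha$ appears, turning that term into $\alpha\cV(t)\sin^2\Delta$.

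Putting this together, $\frac{\rd f_+(t)}{\rd t}=\cU(t)-\cV(t)\cos\Delta$ and $\frac{\rd f_-(t)}{\rd t}=\cU(t)\cos\Delta-\cV(t)\big(1+\alpha\sin^2\Delta\big)$. Multiplying by $-\cU(t)$ and $\cV(t)$ respectively yields exactly the announced autonomous system
\[
\frac{\rd\cU(t)}{\rd t}=\cU(t)\cV(t)\cos\Delta-\cU(t)^2,\qquad \frac{\rd\cV(t)}{\rd t}=\cU(t)\cV(t)\cos\Delta-\cV(t)^2\big(1+\alpha\sin^2\Delta\big).
\]

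There is essentially no analytical obstacle: the content of the lemma lies entirely in choosing the right change of variables so that the non-autonomous, exponential-in-$f_\pm$ system collapses into an autonomous quadratic system in $(\cU,\cV)$. The only thing to be mildly careful about is bookkeeping with the factor $\frac{m_+}{m}$ versus $\frac{m_-}{m}$ to correctly produce the ratio $\alpha=m_-/m_+$ on the $\sin^2\Delta$ term; everything else is a one-line chain rule. This autonomous reformulation is the workhorse that will later allow a phase-plane analysis for estimating $T_{\rm plat}$ and $T_{\rm II}$.
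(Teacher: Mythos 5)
Your proposal is correct and follows exactly the same one-line chain-rule computation as the paper: compute $\frac{\rd\cU}{\rd t}=-\cU\frac{\rd f_+}{\rd t}$ and $\frac{\rd\cV}{\rd t}=\cV\frac{\rd f_-}{\rd t}$, then substitute the first-order dynamics and rewrite in terms of $\cU,\cV$ with $\alpha=m_-/m_+$ appearing on the $\sin^2\Delta$ term. No difference from the paper's argument.
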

\begin{proof}[Proof of Lemma \ref{lemma: GF Phase II 2-order f dynamics}]\ \\
Recall the first-order dynamics in Lemma \ref{lemma: GF Phase II 1-order f dynamics}:
\[
\begin{cases}
    \frac{\rd f_+(t)}{\rd t}&=\kappa_2^2\frac{m_+}{m}\left(\frac{p e^{-f_+(t)}}{1+p}-\frac{e^{f_-(t)}}{1+p}\cos\Delta\right),
    \\
    \frac{\rd f_-(t)}{\rd t}&=\kappa_2^2\frac{m_+}{m}\left(\frac{p e^{-f_+(t)}}{1+p}\cos\Delta-\frac{e^{f_-(t)}}{1+p}\right)-\kappa_2^2\frac{m_-}{m}\frac{e^{f_-(t)}}{1+p}\sin^2\Delta.
\end{cases}
\]
Then this proof is a straight-forward calculation:
\begin{align*}
    \frac{\mathrm{d}\mathcal{U}(t)}{\mathrm{d}t}
    =&\kappa_2^2\frac{m_+}{m}\frac{p}{1+p}\frac{\mathrm{d}e^{-f_+(t)}}{\mathrm{d}t}
    =-\kappa_2^2\frac{m_+}{m}\frac{p}{1+p}e^{-f_+(t)}\frac{\rd f_+(t)}{\rd t}
    \\=&-\mathcal{U}(t)\frac{\rd f_+(t)}{\rd t}
    =\cU(t)\cV(t)\cos\Delta-\cU^2(t),
\end{align*}
\begin{align*}
    \frac{\mathrm{d}\mathcal{V}(t)}{\mathrm{d}t}=&\kappa_2^2\frac{m_+}{m}\frac{1}{1+p}\frac{\mathrm{d}e^{f_-(t)}}{\mathrm{d}t}
    =\kappa_2^2\frac{m_+}{m}\frac{1}{1+p}e^{f_-(t)}\frac{\mathrm{d}f_-(t)}{\mathrm{d}t}
    \\=&\mathcal{V}(t)\frac{\mathrm{d}f_-(t)}{\mathrm{d}t}=\mathcal{U}(t)\mathcal{V}(t)\cos\Delta-\mathcal{V}^2(t)\left(1+\alpha\sin^2\Delta\right).
\end{align*}

\end{proof}

Lemma \ref{lemma: GF Phase II 2-order f dynamics} enlighten us that we only need to study the dynamics of $\mathcal{U}(t)$ and $\mathcal{V}(t)$ to study the dynamics in Phase II, where $\mathcal{U}(t),\mathcal{V}(t)$ satisfies the following autonomous dynamics:
\begin{equation}\label{equ: GF Phase II U V dynamics}
\begin{aligned}
    &\begin{cases}
    \frac{\mathrm{d}\mathcal{U}(t)}{\mathrm{d}t}
    =\mathcal{U}(t)\mathcal{V}(t)\cos\Delta-\mathcal{U}^2(t);
    \\
    \frac{\mathrm{d}\mathcal{V}(t)}{\mathrm{d}t}=\mathcal{U}(t)\mathcal{V}(t)\cos\Delta-\mathcal{V}^2(t)\left(1+\alpha\sin^2\Delta\right),
    \end{cases}\quad t\geq T_{\rm I};
    \\
    &\begin{cases}
    \mathcal{U}(T_{\rm I})=\kappa_2^2\frac{m_+}{m}\frac{p}{1+p}e^{-f_+(T_{\rm I})},
    \\
    \mathcal{V}(T_{\rm I})=\kappa_2^2\frac{m_+}{m}\frac{1}{1+p}e^{f_-(T_{\rm I})}.
    \end{cases}
\end{aligned}
\end{equation}

The next lemma provides a fine-grained prior estimate of the dynamics \eqref{equ: GF Phase II U V dynamics}.

\begin{lemma}[Fine-grained prior estimate of the dynamics \eqref{equ: GF Phase II U V dynamics}]\label{lemma: GF Phase II U V dynamics}\ \\
For the dynamics \eqref{equ: GF Phase II U V dynamics},
then we have the following results:

{\bf (S1).} $\mathcal{U}(T_{\rm I})=\Theta(\kappa_2^2)$ and $\mathcal{V}(T_{\rm I})=\Theta\Big(\frac{\kappa_2^2}{p}\Big)$.

{\bf (S2).} For any $t\geq T_{\rm I}$, we have $\mathcal{U}(t)>\mathcal{V}(t)>0$.

{\bf (S3).} If we define the hitting time
$\tau_1:=\inf\Big\{t\geq T_{\rm I}:\mathcal{U}(t)\cos\Delta\leq\mathcal{V}(t)\left(1+\alpha\sin^2\Delta\right)\Big\}$, then 
\begin{gather*}
    \mathcal{U}(\tau_1)=\frac{1+\alpha\sin^2\Delta}{\cos\Delta}\mathcal{V}(\tau_1),\quad \mathcal{V}(\tau_1)=\Theta\left(\kappa_2^2p^{-\frac{1}{1+\cos\Delta}}\right),
    \\
    \tau_1=\mathcal{O}\left(\frac{p^{\frac{1}{1+\cos\Delta}}\log(1/\Delta)}{\kappa_2^2}\right)=\Omega\left(\frac{p^{\frac{1}{1+\cos\Delta}}}{\kappa_2^2}\right)=\tilde{\Theta}\left(\frac{p^{\frac{1}{1+\cos\Delta}}}{\kappa_2^2}\right).
\end{gather*}
{\bf (S4).} For any $t\geq\tau_1$, we have
\begin{gather*}
    1+\frac{m_-}{2m_+}\sin^2\Delta<\frac{\mathcal{U}(t)}{\mathcal{V}(t)}<\frac{1+2\alpha\sin^2\Delta}{\cos\Delta},
    \\
    \mathcal{U}(t)=\Theta\left(\frac{1}{\frac{p^{\frac{1}{1+\cos\Delta}}}{\kappa_2^2}+\Delta^2(t-\tau_1)}\right),\quad
    \mathcal{V}(t)=\Theta\left(\frac{1}{\frac{p^{\frac{1}{1+\cos\Delta}}}{\kappa_2^2}+\Delta^2(t-\tau_1)}\right).
\end{gather*}

{\bf (S5).} For any $t\geq\tau_1$, we have $\mathcal{U}(t)-\mathcal{V}(t)\cos\Delta=\Theta\left(\Delta^2\cV(t)\right)>0$.

{\bf (S6).} For any $t\geq\tau_2=\Theta\left(\frac{p^{\frac{1}{1+\cos\Delta}}\log(1/\Delta)}{\kappa_2^2}\right)\geq2\tau_1$, we have 
\[
\mathcal{U}(t)\cos\Delta-\mathcal{V}(t)=-\Theta\left(\Delta^2\cV(t)\right)<0.
\]
    
\end{lemma}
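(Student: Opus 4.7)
The plan is to reduce the two-dimensional ODE for $(\mathcal{U},\mathcal{V})$ to tracking the scalar ratio $R(t):=\mathcal{U}(t)/\mathcal{V}(t)$. A direct computation gives $\frac{\mathrm{d}\log R(t)}{\mathrm{d}t}=\mathcal{V}(t)(1+\cos\Delta)\bigl[R^*-R(t)\bigr]$, where $R^*:=1+\alpha(1-\cos\Delta)$ is a globally attracting equilibrium of the ratio (since $\mathcal{V}>0$). For (S1), $|f_\pm(T_{\rm I})|=\cO(\kappa_2\sqrt{\kappa_1\kappa_2})=o(1)$ under~\eqref{equ: parameter selection GF} by Theorem~\ref{thm: restatement of GF Phase I}(S4), so $e^{-f_+(T_{\rm I})},e^{f_-(T_{\rm I})}=\Theta(1)$; combined with $m_+/m=\Theta(1)$ from Theorem~\ref{thm: restatement of GF Phase I}(S1) and $p\geq 5$, the stated scales follow. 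For (S2), the multiplicative form of both ODEs preserves positivity, and $R(T_{\rm I})=\Theta(p)\gg R^*$ together with the ratio ODE forces $R(t)$ to decrease monotonically toward $R^*$, hence $R(t)>R^*>1$ for all $t\geq T_{\rm I}$.

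For (S3), in the pre-crossing regime $R\gg R^*\approx 1$ we have $\mathcal{V}\cos\Delta\ll\mathcal{U}$, so $d\mathcal{U}/dt\approx-\mathcal{U}^2$, integrating to $\mathcal{U}(t)\approx\mathcal{U}(T_{\rm I})/\bigl(1+\mathcal{U}(T_{\rm I})(t-T_{\rm I})\bigr)$. Substituting into $d\log\mathcal{V}/dt\approx\mathcal{U}\cos\Delta$ gives $\mathcal{V}(t)\approx\mathcal{V}(T_{\rm I})\bigl(1+\mathcal{U}(T_{\rm I})(t-T_{\rm I})\bigr)^{\cos\Delta}$. Setting $R(\tau_1)=(1+\alpha\sin^2\Delta)/\cos\Delta=\Theta(1)$ solves $\bigl(1+\mathcal{U}(T_{\rm I})(\tau_1-T_{\rm I})\bigr)^{1+\cos\Delta}=\Theta(p)$, yielding $\tau_1-T_{\rm I}=\Theta\bigl(p^{1/(1+\cos\Delta)}/\kappa_2^2\bigr)$ and $\mathcal{V}(\tau_1)=\Theta\bigl(\kappa_2^2 p^{-1/(1+\cos\Delta)}\bigr)$. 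I would make this rigorous by a comparison-principle sandwich, bounding $\mathcal{U}$ and $\mathcal{V}$ between explicit upper/lower solutions of the simplified ODEs with constants adjusted to absorb the neglected $\mathcal{V}\cos\Delta$ term (always smaller by a factor of order $\cos\Delta/R=o(1)$ relative).

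For $t>\tau_1$ both $\mathcal{U}$ and $\mathcal{V}$ are decreasing (since $\cos\Delta<R\leq(1+\alpha\sin^2\Delta)/\cos\Delta$ and $R$ continues to fall toward $R^*$). For (S4), $R^*-1=\alpha(1-\cos\Delta)>\tfrac{m_-}{2m_+}\sin^2\Delta$ (using $\sin^2\Delta\leq2(1-\cos\Delta)$), so $R(t)>R^*>1+\tfrac{m_-}{2m_+}\sin^2\Delta$; the upper bound $R(t)\leq R(\tau_1)<(1+2\alpha\sin^2\Delta)/\cos\Delta$ is immediate. With $R=\Theta(1)$ bounded, $d\mathcal{V}/dt=\mathcal{V}^2[R\cos\Delta-(1+\alpha\sin^2\Delta)]=-\Theta(\Delta^2)\mathcal{V}^2$ (both deviations of $R\cos\Delta$ from $1$ and from $1+\alpha\sin^2\Delta$ are $\Theta(\Delta^2)$), so integrating $d(1/\mathcal{V})/dt=\Theta(\Delta^2)$ yields the stated polynomial-decay scaling, and $\mathcal{U}=R\mathcal{V}$ inherits it. For (S5), $\mathcal{U}-\mathcal{V}\cos\Delta=\mathcal{V}(R-\cos\Delta)$, with $R-\cos\Delta\in[R^*-\cos\Delta,R(\tau_1)-\cos\Delta]$ whose endpoints are both $\Theta(\Delta^2)$, giving $\mathcal{U}-\mathcal{V}\cos\Delta=\Theta(\Delta^2)\mathcal{V}$.

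The main obstacle is (S6), which requires a quantitative convergence rate $R\to R^*$. Linearizing, $d\log(R-R^*)/dt\approx-(1+\cos\Delta)R^*\mathcal{V}(t)$; using $\mathcal{V}(t)\asymp 1/\bigl(1/\mathcal{V}(\tau_1)+\Delta^2(t-\tau_1)\bigr)$, one computes $\int_{\tau_1}^{t}\mathcal{V}(s)\,ds\asymp\Delta^{-2}\log\bigl(1+\Delta^2\mathcal{V}(\tau_1)(t-\tau_1)\bigr)$; for $t-\tau_1=\Theta\bigl(\tau_1\log(1/\Delta)\bigr)$ this integral is $\Theta(\log(1/\Delta))$, so $R(t)-R^*\leq\Delta^{\Omega(1)}(R(\tau_1)-R^*)=o(\Delta^2)$. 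Plugging into $\mathcal{U}\cos\Delta-\mathcal{V}=\mathcal{V}\bigl[(R-R^*)\cos\Delta+(R^*\cos\Delta-1)\bigr]$ with $R^*\cos\Delta-1=-(1-\cos\Delta)(1-\alpha\cos\Delta)=-\Theta(\Delta^2)$ bounded away from zero (by $1-\alpha\cos\Delta\geq 1-\alpha=\Omega(1)$ from Remark~\ref{rmk: K+, K-, alpha}), the $(R-R^*)$ term becomes negligible and (S6) follows. The central technical hurdle is keeping the Big-Theta constants consistent across the two regimes: the approximations of the pre-crossing analysis degrade precisely at $\tau_1$, so the comparison ODEs there must incorporate enough slack to match continuously with the linearized near-equilibrium dynamics.
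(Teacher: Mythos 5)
Your reduction to the scalar ratio $R(t)=\mathcal{U}(t)/\mathcal{V}(t)$ is correct and elegant: a direct computation indeed gives $\frac{\mathrm{d}\log R}{\mathrm{d}t}=(1+\cos\Delta)\,\mathcal{V}(t)\,\bigl[R^*-R(t)\bigr]$ with $R^*=1+\alpha(1-\cos\Delta)$, and this is a genuinely different organizing principle from the paper's approach (which compares $\mathcal{U}$ and $\mathcal{V}$ directly against constant-coefficient Riccati ODEs). Your arguments for (S1), (S2), the lower bound in (S4), and (S5) go through, and the outline for (S6) via the rate of decay of $R-R^*$ is sound.

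The gap is in (S3), and it is a wrong claim, not just a missing detail. You assert $\tau_1-T_{\rm I}=\Theta\bigl(p^{1/(1+\cos\Delta)}/\kappa_2^2\bigr)$ with no $\log(1/\Delta)$ factor, obtained by approximating $\mathrm{d}\mathcal{U}/\mathrm{d}t\approx-\mathcal{U}^2$ and asserting that the neglected term $\mathcal{U}\mathcal{V}\cos\Delta$ is ``always smaller by a factor of order $\cos\Delta/R=o(1)$ relative.'' This is false near the crossing: as $R$ approaches $R(\tau_1)=(1+\alpha\sin^2\Delta)/\cos\Delta=\Theta(1)$, the ratio $\cos\Delta/R$ approaches $\cos^2\Delta/(1+\alpha\sin^2\Delta)=1-\Theta(\Delta^2)$. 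The neglected term is then almost the whole drift, and the true decay rate of $\mathcal{U}$ is $\mathrm{d}\mathcal{U}/\mathrm{d}t=\mathcal{U}^2(\cos\Delta/R-1)=-\Theta(\Delta^2)\mathcal{U}^2$ rather than $-\mathcal{U}^2$. Equivalently, $\mathcal{U}(\tau_1)$ is within a factor $1+\Theta(\Delta^2)$ of the fixed-$\mathcal{V}$ equilibrium $\mathcal{V}(\tau_1)\cos\Delta$ of the $\mathcal{U}$-ODE, and the exponential approach to that equilibrium from an $\Theta(1)$ relative distance to a $\Theta(\Delta^2)$ relative distance takes time $\Theta\bigl(\log(1/\Delta)/\mathcal{V}(\tau_1)\bigr)=\Theta\bigl(p^{1/(1+\cos\Delta)}\log(1/\Delta)/\kappa_2^2\bigr)$. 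That $\log(1/\Delta)$ factor is genuine, which is exactly why the lemma only claims $\tilde\Theta$, and why the paper's upper bound for $\tau_1$ (freezing $\mathcal{V}$ at $\mathcal{V}(\tau_1)$ and solving the resulting Riccati exactly) carries the term $\log\bigl((\mathcal{U}(T_{\rm I})-\mathcal{V}(\tau_1)\cos\Delta)/(\mathcal{U}(\tau_1)-\mathcal{V}(\tau_1)\cos\Delta)\bigr)=\Theta(\log(1/\Delta))$. The comparison-sandwich you sketch will not repair this: no constant adjustment can absorb a factor that degrades from $o(1)$ to $1-\Theta(\Delta^2)$. To close (S3) within your framework you would need to split the pre-crossing phase into a far regime ($R\gg R(\tau_1)$, where the $-\mathcal{U}^2$ approximation holds) and a near regime ($R=\Theta(R(\tau_1))$, where the Riccati with the $\mathcal{V}\cos\Delta$ term must be kept), and track the logarithmic slowdown in the latter.

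A smaller issue: in (S4) you write $\mathrm{d}\mathcal{V}/\mathrm{d}t=\mathcal{V}^2[R\cos\Delta-(1+\alpha\sin^2\Delta)]=-\Theta(\Delta^2)\mathcal{V}^2$, but this right-hand side vanishes exactly at $t=\tau_1$ (where $R\cos\Delta=1+\alpha\sin^2\Delta$ by definition), so the two-sided $\Theta(\Delta^2)\mathcal{V}^2$ rate cannot hold pointwise. The paper handles this by getting the matching lower bound for $\mathcal{V}$ from the $\mathcal{V}$-ODE alone and the upper bound from the $\mathcal{U}$-ODE combined with $\mathcal{V}<\mathcal{U}/(1+\epsilon/2)$; you would need a similar asymmetric pairing, or to integrate and note the vanishing occurs only at a single instant.
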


\begin{proof}[Proof of Lemma \ref{lemma: GF Phase II U V dynamics}]\ \\
For simplicity, in this proof, we denote
\[
    \epsilon:=\alpha\sin^2\Delta.
\]
\underline{Step I. Preparation.}
From Theorem \ref{thm: restatement of GF Phase I} (S4), we know $0<f_+(T_{\rm I}),f_-(T_{\rm I})\leq3.85\kappa_2\sqrt{\kappa_1\kappa_2}\leq0.04\log(1.1)$, so 
\[
1<e^{f_-(T_{\rm I})}\leq1+e^{0.04\log(1.1)}3.85\kappa_2\sqrt{\kappa_1\kappa_2}\leq1+1.004\cdot3.85\kappa_2\sqrt{\kappa_1\kappa_2}\leq1+3.87\kappa_2\sqrt{\kappa_1\kappa_2},
\]
\[
1>e^{-f_+(T_{\rm I})}\geq1-e^{0.04\log(1.1)}3.85\kappa_2\sqrt{\kappa_1\kappa_2}\geq1-1.004\cdot3.85\kappa_2\sqrt{\kappa_1\kappa_2}\geq1-3.87\kappa_2\sqrt{\kappa_1\kappa_2}.
\]

Notice that $\mathcal{U}(T_{\rm I})=\kappa_2^2\frac{m_+}{m}\frac{p}{1+p}e^{-f_+(T_{\rm I})}$ and $\mathcal{V}(T_{\rm I})=\kappa_2^2\frac{m_+}{m}\frac{1}{1+p}e^{f_-(T_{\rm I})}$. Then we have the estimate:
\begin{align*}
    1-3.87\kappa_2\sqrt{\kappa_1\kappa_2}\leq&\frac{\mathcal{U}(T_{\rm I})}{\kappa_2^2\frac{m_+}{m}\frac{p}{1+p}}\leq 1,
    \\
    1\leq&\frac{\mathcal{V}(T_{\rm I})}{\kappa_2^2\frac{m_+}{m}\frac{1}{1+p}}\leq 1+3.87\kappa_2\sqrt{\kappa_1\kappa_2}.
\end{align*}

From Theorem \ref{thm: restatement of GF Phase I} (S1)(S2), we have
\begin{equation}\label{equ of proof lemma: GF Phase II: m-m+ bound}
0.258\leq\frac{0.075m}{0.29m}\leq\alpha\leq\frac{0.205m}{0.21m}\leq0.977.
\end{equation}
For $t=T_{\rm I}$, it holds that
\begin{align*}
    &\mathcal{U}(T_{\rm I})\mathcal{V}(T_{\rm I})\cos\Delta-\mathcal{U}^2(T_{\rm I})<0,
    \\
    &\mathcal{U}(T_{\rm I})\mathcal{V}(T_{\rm I})\cos\Delta-\mathcal{V}^2(T_{\rm I})\left(1+\epsilon\right)>0.
\end{align*}

\underline{Step II. A rough estimate on $\mathcal{U}(t)$ and $\mathcal{V}(t)$.} In this step, we aim to prove:
\[
\mathcal{U}(t)>\mathcal{V}(t)>0,\quad,\cU(t)+\cV(t)\leq\cU(T_{\rm I})+\cV(T_{\rm I}),\quad\forall t\in[ T_{\rm I},\infty).
\]
First, from the definition of $\cU(t)$ and $\cV(t)$, we have $\cU(t)>0$ and $\cV(t)>0$.

Then we consider the dynamics of $\mathcal{U}(t)+\mathcal{V}(t)$. From
\begin{align*}
    &\frac{\mathrm{d}}{\mathrm{d}t}\Big(\mathcal{U}(t)+\mathcal{V}(t)\Big)=2\mathcal{U}(t)\mathcal{V}(t)\cos\Delta-\mathcal{U}^2(t)-\mathcal{V}^2(t)\left(1+\epsilon\right)
    \\
    =&-\left(\mathcal{U}(t)-\mathcal{V}(t)\right)^2\cos\Delta-(1-\cos\Delta)\mathcal{U}^2(t)-\mathcal{V}^2(t)\left(1+\epsilon-\cos\Delta\right)
    <0,
\end{align*}
we have 
\[\mathcal{U}(t)+\mathcal{V}(t)\leq\mathcal{U}(T_{\rm I})+\mathcal{V}(T_{\rm I}),\quad \forall t\geq T_{\rm I}.\]
Then we consider the dynamics of $\mathcal{U}(t)-\mathcal{V}(t)$. We define the hitting time
\[
\tau_{\mathcal{U}-\mathcal{V}}:=\inf\Big\{t\geq T_{\rm I}:\mathcal{U}(t)\leq\mathcal{V}(t)\Big\}.
\]
For any $t\in[T_{\rm I},\tau_{\mathcal{U}-\mathcal{V}})$, we have:
\begin{align*}
    &\frac{\mathrm{d}}{\mathrm{d}t}\Big(\mathcal{U}(t)-\mathcal{V}(t)\Big)=-\mathcal{U}^2(t)+\mathcal{V}^2(t)\left(1+\epsilon\right)
    =-(\mathcal{U}(t)+\mathcal{V}(t))(\mathcal{U}(t)-\mathcal{V}(t))+\epsilon\mathcal{V}^2(t)
    \\>&
    -(\mathcal{U}(t)+\mathcal{V}(t))(\mathcal{U}(t)-\mathcal{V}(t))
    \geq
    -(\mathcal{U}(T_{\rm I})+\mathcal{V}(T_{\rm I}))
    (\mathcal{U}(t)-\mathcal{V}(t)),
\end{align*}
We consider the auxiliary ODE: $\frac{d}{\mathrm{d}t}\mathcal{P}(t)=-(\mathcal{U}(T_{\rm I})+\mathcal{V}(T_{\rm I}))\mathcal{P}(t)$, where $\mathcal{P}(T_{\rm I})=\mathcal{U}(T_{\rm I})-\mathcal{V}(T_{\rm I})>0$. From the Comparison Principle of ODEs, we have:
\begin{align*}
\mathcal{U}(t)-\mathcal{V}(t) 
\geq\mathcal{P}(t)
=\left(\mathcal{U}(T_{\rm I})-\mathcal{V}(T_{\rm I})\right)\exp\Big(-(\mathcal{U}(T_{\rm I})+\mathcal{V}(T_{\rm I}))(t-T_{\rm I})\Big)>0,\ \forall t\in[T_{\rm I},\tau_{\mathcal{U}-\mathcal{V}}).
\end{align*}
From the definition of $\tau_{\mathcal{U}-\mathcal{V}}$, we have proved 
\begin{gather*}
\tau_{\mathcal{U}-\mathcal{V}}=+\infty;
\\
\mathcal{U}(t)>\mathcal{V}(t),\ \forall t\in[T_{\rm I},+\infty).
\end{gather*}

\underline{Step III. Finer estimate in the early Phase $t\in[T_{\rm I},\tau_1]$.} 
Define the following hitting time
\begin{align*}
\tau_1:=\inf\Big\{t\geq T_{\rm I}:\mathcal{U}(t)\cos\Delta\leq\mathcal{V}(t)\left(1+\epsilon\right)\Big\}.
\end{align*}
From Step I, we know $\tau_1$ exists and $\tau_1> T_{\rm I}$. From \eqref{equ: GF Phase II U V dynamics}, we have $\frac{\mathrm{d}\mathcal{U}(t)}{\mathrm{d}t}<0$ and $\frac{\mathrm{d}\mathcal{V}(t)}{\mathrm{d}t}>0$ when $t\in[T_{\rm I},\tau_1)$. Moreover, we have the following dynamics for $t\in[T_{\rm I},\tau_1)$:
\begin{align*}
    \frac{\mathrm{d}\mathcal{U}}{\mathrm{d}\mathcal{V}}
    =\frac{\mathcal{U}\mathcal{V}\cos\Delta-\mathcal{U}^2}{\mathcal{U}\mathcal{V}\cos\Delta-\mathcal{V}^2\left(1+\epsilon\right)}
    =\frac{\frac{\mathcal{U}}{\mathcal{V}}\cos\Delta-\left(\frac{\mathcal{U}}{\mathcal{V}}\right)^2}{\frac{\mathcal{U}}{\mathcal{V}}\cos\Delta-\left(1+\epsilon\right)}.
\end{align*}
If we define $\mathcal{Z}(t):=\frac{\mathcal{U}(t)}{\mathcal{V}(t)}$, then we have $\mathrm{d}\mathcal{U}=\mathcal{Z}\mathrm{d}\mathcal{V}+\mathcal{V}\mathrm{d}\mathcal{Z}$. 

The dynamics above can be transformed to:
\begin{align*}
    \mathcal{V}\frac{\mathrm{d}\mathcal{Z}}{\mathrm{d}\mathcal{V}}=\frac{\mathcal{Z}\cos\Delta-\mathcal{Z}^2}{\mathcal{Z}\cos\Delta-(1+\epsilon)}-\mathcal{Z},
\end{align*}
which means
\begin{align*}
\frac{1}{\mathcal{V}}\mathrm{d}\mathcal{V}=-\frac{1}{{1+\cos\Delta+\epsilon}}\left(\frac{1+\epsilon}{\mathcal{Z}}+\frac{\sin^2\Delta+\epsilon}{(1+\cos\Delta+\epsilon)-\mathcal{Z}(1+\cos\Delta)}\right)\mathrm{d}\mathcal{Z}.
\end{align*}
Integrating this equation from $T_{\rm I}$ to $t\in[T_{\rm I},\tau_1)$, we have:
\begin{equation}\label{equ of proof: lemma: Phase II U V dynamics: phase I: equ V Z}
\begin{aligned}
    \log\left(\frac{\mathcal{V}(t)}{\mathcal{V}(T_{\rm I})}\right)
    =&
    -\frac{1+\epsilon}{1+\cos\Delta+\epsilon}\log\left(\frac{\mathcal{Z}(t)}{\mathcal{Z}(T_{\rm I})}\right)
    \\&+\frac{\sin^2\Delta+\epsilon}{(1+\cos\Delta+\epsilon)(1+\cos\Delta)}\log\left(\frac{(1+\cos\Delta)\mathcal{Z}(t)-(1+\cos\Delta+\epsilon)}{(1+\cos\Delta)\mathcal{Z}(T_{\rm I})-(1+\cos\Delta+\epsilon)}\right),\ t\in[T_{\rm I},\tau_1).
\end{aligned}
\end{equation}
From the continuity of $\mathcal{U}(t)$, $\mathcal{V}(t)$ and $\mathcal{Z}(t)$, we have
\begin{equation}\label{equ of proof: lemma: Phase II U V dynamics: phase I: tau V Z}
\tau_1=\inf\Big\{t\geq T_{\rm I}:\mathcal{Z}(t)\leq\frac{1+\epsilon}{\cos\Delta}\Big\}.
\end{equation}
Combining \eqref{equ of proof: lemma: Phase II U V dynamics: phase I: tau V Z} and \eqref{equ of proof: lemma: Phase II U V dynamics: phase I: equ V Z}, let $t\to\tau_1^{-}$. Then we have:
\begin{gather*}
    \mathcal{Z}(\tau_1)=\frac{1+\epsilon}{\cos\Delta};
    \\\mathcal{V}(\tau_1)=\mathcal{V}(T_{\rm I})\left(\frac{\mathcal{Z}(\tau_1)}{\mathcal{Z}(T_{\rm I})}\right)^{-\frac{1+\epsilon}{1+\cos\Delta+\epsilon}}\left(\frac{(1+\cos\Delta)\mathcal{Z}(\tau_1)-(1+\cos\Delta+\epsilon)}{(1+\cos\Delta)\mathcal{Z}(T_{\rm I})-(1+\cos\Delta+\epsilon)}\right)^{\frac{\sin^2\Delta+\epsilon}{(1+\cos\Delta+\epsilon)(1+\cos\Delta)}}>0,
\end{gather*}
where 
\[
\left(\frac{1-3.87\kappa_2\sqrt{\kappa_1\kappa_2}}{1+3.87\kappa_2\sqrt{\kappa_1\kappa_2}}\right)p\leq \mathcal{Z}(T_{\rm I})=\frac{\mathcal{U}(T_{\rm I})}{\mathcal{V}(T_{\rm I})}\leq p.
\]
Therefore,
\begin{align*}
&\frac{\mathcal{V}(\tau_1)}{\mathcal{V}(T_{\rm I})}
\leq\left(\frac{p\cos\Delta}{1+\epsilon}\right)^{\frac{1+\epsilon}{1+\cos\Delta+\epsilon}}
\left(\frac{\sin^2\Delta+\epsilon}{\left((1+\cos\Delta)p-(1+\cos\Delta+\epsilon)\right)\cos\Delta}\right)^{\frac{\sin^2\Delta+\epsilon}{(1+\cos\Delta+\epsilon)(1+\cos\Delta)}}
\\&
\frac{\mathcal{V}(\tau_1)}{\mathcal{V}(T_{\rm I})}
\geq
\left(\frac{(1-3.87\kappa_2\sqrt{\kappa_1\kappa_2})p\cos\Delta}{(1+3.87\kappa_2\sqrt{\kappa_1\kappa_2})(1+\epsilon)}\right)^{\frac{1+\epsilon}{1+\cos\Delta+\epsilon}}
\left(\frac{\sin^2\Delta+\epsilon}{\left(\frac{(1-3.87\kappa_2\sqrt{\kappa_1\kappa_2})(1+\cos\Delta)}{1+3.87\kappa_2\sqrt{\kappa_1\kappa_2}}p-(1+\cos\Delta+\epsilon)\right)\cos\Delta}\right)^{\frac{\sin^2\Delta+\epsilon}{(1+\cos\Delta+\epsilon)(1+\cos\Delta)}}
\end{align*}
and
\[
\mathcal{U}(\tau_1)=\frac{1+\epsilon}{\cos\Delta}\mathcal{V}(\tau_1),
\]
where $1\leq\frac{\mathcal{V}(T_{\rm I})}{\kappa_2^2\frac{m_+}{m}\frac{1}{1+p}}\leq 1+3.87\kappa_2\sqrt{\kappa_1\kappa_2}$ is estimated in Step I.

\underline{Step IV. Nearly tight bounds for $\tau_1$}

From the definition of $\tau_1$, we have $\frac{\mathrm{d}\mathcal{V}(t)}{\mathrm{d}t}>0$ for any $t\in[T_{\rm I},\tau_1)$, thus $\mathcal{V}(T_{\rm I})<\mathcal{V}(t)<\mathcal{V}(\tau_1)$, $\forall t\in(T_{\rm I},\tau_1)$. So we have 
\begin{align*}
    \mathcal{U}(t)\mathcal{V}(T_{\rm I})\cos\Delta-\mathcal{U}^2(t)<\frac{\mathrm{d}\mathcal{U}(t)}{\mathrm{d}t}<\mathcal{U}(t)\mathcal{V}(\tau_1)\cos\Delta-\mathcal{U}^2(t),\ \forall t\in(T_{\rm I},\tau_1).
\end{align*}
We first estimate the upper bound for $\tau_1$. Consider the following dynamics and the hitting time
\begin{align*}
    &\begin{cases}
        \frac{\mathrm{d}\phi(t)}{\mathrm{d}t}=\phi(t)\mathcal{V}(\tau_1)\cos\Delta-\phi^2(t),\quad t\geq T_{\rm I},
        \\
        \phi(T_{\rm I})=\mathcal{U}(T_{\rm I}).
    \end{cases}
    \\
    &\tau_1^{\rm u}:=\inf\Big\{t>T_{\rm I}:\phi(t)\leq\mathcal{U}(\tau_1)\Big\}
\end{align*}
Then $\tau_1<\tau_1^{\rm u}$. From the dynamics of $\phi(t)$, for any $t\in(T_{\rm I},\tau_1^{\rm u}]$, it holds
\begin{align*}
    \log\left(\frac{\phi(t)}{\phi(t)-\mathcal{V}(\tau_1)\cos\Delta}\right)\Bigg|_{T_{\rm I}}^t=(t-T_{\rm I})\mathcal{V}(\tau_1)\cos\Delta.
\end{align*}
Therefore,
\begin{align*}
    \tau_1^{\rm u}-T_{\rm I}=&\frac{1}{\mathcal{V}(\tau_1)\cos\Delta}\log\left(\frac{\phi(\tau_1^{\rm u})}{\phi(T_{\rm I})}\frac{\left(\phi(T_{\rm I})-\mathcal{V}(\tau_1)\cos\Delta\right)}{\left(\phi(\tau_1^{\rm u})-\mathcal{V}(\tau_1)\cos\Delta\right)}\right)
    \\=&\frac{1}{\mathcal{V}(\tau_1)\cos\Delta}\log\left(\frac{\mathcal{U}(\tau_1)}{\mathcal{U}(T_{\rm I})}\frac{\left(\mathcal{U}(T_{\rm I})-\mathcal{V}(\tau_1)\cos\Delta\right)}{\left(\mathcal{U}(\tau_1)-\mathcal{V}(\tau_1)\cos\Delta\right)}\right).
\end{align*}
With the help of Theorem \ref{thm: restatement of GF Phase I}, we have $\frac{m_+}{m}=\Theta(1)$ and $\frac{m_-}{m}=\Theta(1)$.
From Step I, we have $\mathcal{C}=\Theta\left(\frac{\kappa_2^2}{\Delta^2}\right)$, $\mathcal{U}(T_{\rm I})=\Theta(\kappa_2^2)$ and $\mathcal{V}(T_{\rm I})=\Theta\Big(\frac{\kappa_2^2}{p}\Big)$. Moreover, it holds
\begin{gather*}
    \frac{\mathcal{V}(\tau_1)}{\mathcal{V}(T_{\rm I})}=\Theta\left(p^{\frac{1+\epsilon}{1+\cos\Delta+\epsilon}}\Big(\frac{\Delta^2}{p}\Big)^{\frac{\sin^2\Delta+\epsilon}{(1+\cos\Delta+\epsilon)(1+\cos\Delta)}}\right)
    =\Theta\left(p^{\frac{\cos\Delta}{1+\cos\Delta}}\Delta^{\frac{2\sin^2\Delta+\epsilon}{(1+\cos\Delta+\epsilon)(1+\cos\Delta)}}\right)
    =\Theta\left(p^{\frac{\cos\Delta}{1+\cos\Delta}}\right),
    \\
    \mathcal{V}(\tau_1)=\Theta\left(\mathcal{V}(T_{\rm I})p^{\frac{\cos\Delta}{1+\cos\Delta}}\right)=\Theta\left(\kappa_2^2p^{-\frac{1}{1+\cos\Delta}}\right).
\end{gather*}
It is easy to verify
\begin{gather*}
    \frac{\mathcal{U}(\tau_1)}{\mathcal{U}(T_{\rm I})}=\frac{1+\epsilon}{\cos\Delta}\frac{\mathcal{V}(\tau_1)}{\mathcal{U}(T_{\rm I})}=\Theta\left(\frac{\mathcal{V}(T_{\rm I})}{\mathcal{U}(T_{\rm I})}p^{\frac{\cos\Delta}{1+\cos\Delta}}\right)
    =\Theta\left(p^{-\frac{1}{1+\cos\Delta}}\right);
    \\
    \frac{\mathcal{U}(T_{\rm I})-\mathcal{V}(\tau_1)\cos\Delta}{\mathcal{U}(\tau_1)-\mathcal{V}(\tau_1)\cos\Delta}=\Theta\left(\frac{\kappa_2^2\left(1-p^{-\frac{1}{1+\cos\Delta}}\right)}{\left(\frac{1+\epsilon}{\cos\Delta}-\cos\Delta\right)\kappa_2^2p^{-\frac{1}{1+\cos\Delta}}}\right)
    =\Theta\left(\frac{\kappa_2^2\left(1-p^{-\frac{1}{1+\cos\Delta}}\right)}{\kappa_2^2\Delta^2p^{-\frac{1}{1+\cos\Delta}}}\right)=\Theta\left(\frac{p^{\frac{1}{1+\cos\Delta}}}{\Delta^2}\right).
\end{gather*}
Hence, we obtain the upper bound for $\tau_1$:
\begin{align*}
    &\tau_1\leq\tau_1^{\rm u}=T_{\rm I}+\Theta\left(\frac{1}{\kappa_2^2p^{-\frac{1}{1+\cos\Delta}}}\log\left(p^{-\frac{1}{1+\cos\Delta}}\frac{p^{\frac{1}{1+\cos\Delta}}}{\Delta^2}\right)\right)
    \\=&\mathcal{O}\left(\sqrt{\frac{\kappa_1}{\kappa_2}}\right)+\Theta\left(\frac{p^{\frac{1}{1+\cos\Delta}}\log(1/\Delta)}{\kappa_2^2}\right)=\Theta\left(\frac{p^{\frac{1}{1+\cos\Delta}}\log(1/\Delta)}{\kappa_2^2}\right).
\end{align*}
In a similar way, we can derive the lower bound for $\tau_1$.
 Consider the following dynamics and the hitting time
\begin{align*}
    &\begin{cases}
        \frac{\mathrm{d}\psi(t)}{\mathrm{d}t}=\psi(t)\mathcal{V}(T_{\rm I})\cos\Delta-\psi^2(t),\quad t\geq T_{\rm I},
        \\
        \psi(T_{\rm I})=\mathcal{U}(T_{\rm I}).
    \end{cases}
    \\
    &\tau_1^{\rm l}:=\inf\Big\{t>T_{\rm I}:\psi(t)\leq\mathcal{U}(\tau_1)\Big\}
\end{align*}
Then $\tau_1>\tau_1^{\rm l}$. From the dynamics of $\phi(t)$, for any $t\in(T_{\rm I},\tau_1^{\rm l}]$, it holds
\begin{align*}
    \log\left(\frac{\psi(t)}{\psi(t)-\mathcal{V}(T_{\rm I})\cos\Delta}\right)\Bigg|_{T_{\rm I}}^t=(t-T_{\rm I})\mathcal{V}(T_{\rm I})\cos\Delta.
\end{align*}
Therefore,
\begin{align*}
    \tau_1^{\rm l}-T_{\rm I}=&\frac{1}{\mathcal{V}(T_{\rm I})\cos\Delta}\log\left(\frac{\phi(\tau_1^{\rm u})}{\phi(T_{\rm I})}\frac{\left(\phi(T_{\rm I})-\mathcal{V}(T_{\rm I})\cos\Delta\right)}{\left(\phi(\tau_1^{\rm u})-\mathcal{V}(T_{\rm I})\cos\Delta\right)}\right)
    \\=&\frac{1}{\mathcal{V}(T_{\rm I})\cos\Delta}\log\left(\frac{\mathcal{U}(\tau_1)}{\mathcal{U}(T_{\rm I})}\frac{\left(\mathcal{U}(T_{\rm I})-\mathcal{V}(T_{\rm I})\cos\Delta\right)}{\left(\mathcal{U}(\tau_1)-\mathcal{V}(T_{\rm I})\cos\Delta\right)}\right)
    \\=&\frac{1}{\mathcal{V}(T_{\rm I})\cos\Delta}\log\left(1+\frac{\mathcal{V}(T_{\rm I})}{\mathcal{U}(T_{\rm I})}\frac{\left(\mathcal{U}(T_{\rm I})-\mathcal{U}(\tau_1)\right)\cos\Delta}{\left(\mathcal{U}(\tau_1)-\mathcal{V}(T_{\rm I})\cos\Delta\right)}\right).
\end{align*}
It is easy to verify
\begin{align*}
    \frac{\mathcal{V}(T_{\rm I})}{\mathcal{U}(T_{\rm I})}\frac{\left(\mathcal{U}(T_{\rm I})-\mathcal{U}(\tau_1)\right)\cos\Delta}{\left(\mathcal{U}(\tau_1)-\mathcal{V}(T_{\rm I})\cos\Delta\right)}=\Theta\left(\frac{1}{p}\frac{\kappa_2^2(1-p^{-\frac{1}{1+\cos\Delta}})}{\kappa_2^2(p^{-\frac{1}{1+\cos\Delta}}-p^{-1})}\right)=\Theta\left(p^{-\frac{\cos\Delta}{1+\cos\Delta}}\right),
\end{align*}
thus
\begin{align*}
    \log\left(1+\frac{\mathcal{V}(T_{\rm I})}{\mathcal{U}(T_{\rm I})}\frac{\left(\mathcal{U}(T_{\rm I})-\mathcal{U}(\tau_1)\right)\cos\Delta}{\left(\mathcal{U}(\tau_1)-\mathcal{V}(T_{\rm I})\cos\Delta\right)}\right)=\Theta\left(p^{-\frac{\cos\Delta}{1+\cos\Delta}}\right),
\end{align*}
Hence, we obtain the lower bound for $\tau_1$:
\begin{align*}
    \tau_1\geq\tau_1^{\rm l}=T_{\rm I}+\Theta\left(\frac{1}{\frac{\kappa_2^2}{p}}p^{-\frac{\cos\Delta}{1+\cos\Delta}}\right)
    =\mathcal{O}\left(\sqrt{\frac{\kappa_1}{\kappa_2}}\right)+\Theta\left(\frac{p^{\frac{1}{1+\cos\Delta}}}{\kappa_2^2}\right)=\Theta\left(\frac{p^{\frac{1}{1+\cos\Delta}}}{\kappa_2^2}\right).
\end{align*}

\underline{Step V. Finer Estimate in the late Phase $t>\tau_1$.}

In this step, we focus on the dynamics when $t>\tau_1$. 

First, we will prove the following nearly tight bound about the ratio of $\mathcal{U}(t)$ to $\mathcal{V}(t)$:
\[
1+\frac{\epsilon}{2}<\frac{\mathcal{U}(t)}{\mathcal{V}(t)}<\frac{1+2\epsilon}{\cos\Delta},\quad\forall t\in[\tau_1,+\infty).
\]
For the right inequality, we define the hitting time 
\[
\tau_{\mathcal{U}/\mathcal{V}}^{r}:=\inf\Big\{t>\tau_1:{\mathcal{U}(t)}\geq\frac{1+2\epsilon}{\cos\Delta}{\mathcal{V}(t)}\Big\}.
\]
From $\frac{\mathcal{U}(\tau_1)}{\mathcal{V}(\tau_1)}=\frac{1+\epsilon}{\cos\Delta}<\frac{1+2\epsilon}{\cos\Delta}$, we know $\tau_{\mathcal{U}/\mathcal{V}}^{r}$ exists and $\tau_{\mathcal{U}/\mathcal{V}}^{r}>\tau_1$.

For any $t\in(\tau_1,\tau_{\mathcal{U}/\mathcal{V}}^{r})$, consider
\begin{align*}
    &\frac{\mathrm{d}}{\mathrm{d}t}\left(\mathcal{U}(t)-\frac{1+2\epsilon}{\cos\Delta}\mathcal{V}(t)\right)
    =\left(1-\frac{1+2\epsilon}{\cos\Delta}\right)\mathcal{U}(t)\mathcal{V}(t)\cos\Delta
    -\mathcal{U}^2(t)+\frac{1+2\epsilon}{\cos\Delta}(1+\epsilon)\mathcal{V}^2(t)
    \\=&-\left(\mathcal{U}(t)-\frac{1+2\epsilon}{\cos\Delta}\mathcal{V}(t)\right)\left(\mathcal{U}(t)+\left((1+2\epsilon)(1+\frac{1}{\cos\Delta})-\cos\Delta\right)\mathcal{V}(t)\right)
    \\&\quad+\left(\cos\Delta-(1+2\epsilon)(1+\frac{1}{\cos\Delta})+\frac{1+2\epsilon}{\cos\Delta}(1+\epsilon)\right)\mathcal{V}^2(t)
    \\=&-\left(\mathcal{U}(t)-\frac{1+2\epsilon}{\cos\Delta}\mathcal{V}(t)\right)\left(\mathcal{U}(t)+\left((1+2\epsilon)(1+\frac{1}{\cos\Delta})-\cos\Delta\right)\mathcal{V}(t)\right)
    \\&\quad+\left((\cos\Delta-1)+(\frac{1+2\epsilon}{\cos\Delta}\epsilon-2\epsilon)\right)\mathcal{V}^2(t)
    \\<&
    -\left(\mathcal{U}(t)-\frac{1+2\epsilon}{\cos\Delta}\mathcal{V}(t)\right)\left(\mathcal{U}(t)+\left((1+2\epsilon)(1+\frac{1}{\cos\Delta})-\cos\Delta\right)\mathcal{V}(t)\right)
    \\\overset{{\text{Step II}}}{<}&
    -\left((1+2\epsilon)(1+\frac{1}{\cos\Delta})-\cos\Delta\right)\left(\mathcal{U}(t)+\mathcal{V}(t)\right)\left(\mathcal{U}(t)-\frac{1+2\epsilon}{\cos\Delta}\mathcal{V}(t)\right)
    \\\overset{{\text{Step II}}}{\leq}&
    -\left((1+2\epsilon)(1+\frac{1}{\cos\Delta})-\cos\Delta\right)\left(\mathcal{U}(T_{\rm I})+\mathcal{V}(T_{\rm I})\right)\left(\mathcal{U}(t)-\frac{1+2\epsilon}{\cos\Delta}\mathcal{V}(t)\right).
\end{align*}
For simplicity, we denote $C_{1}:=\left((1+2\epsilon)(1+\frac{1}{\cos\Delta})-\cos\Delta\right)\left(\mathcal{U}(T_{\rm I})+\mathcal{V}(T_{\rm I})\right)>0$. We consider the auxiliary ODE: $\frac{d}{\mathrm{d}t}\mathcal{P}(t)=-C_{1}\mathcal{P}(t)$, where $\mathcal{P}(\tau_1)=\mathcal{U}(\tau_1)-\frac{1+2\epsilon}{\cos\Delta}\mathcal{V}(\tau_1)<0$. From the Comparison Principle of ODEs, we have:
\begin{align*}
\mathcal{U}(t)-\frac{1+2\epsilon}{\cos\Delta}\mathcal{V}(t)
\leq\mathcal{P}(t)
=\mathcal{P}(\tau_1)e^{-C_1(t-\tau_1)}<0,\ \forall t\in(\tau_1, \tau_{\mathcal{U}/\mathcal{V}}^{r}).
\end{align*}
From the definition of $\tau_{\mathcal{U}/\mathcal{V}}^{r}$, we have proved 
\begin{gather*}
\tau_{\mathcal{U}/\mathcal{V}}^{r}=+\infty;
\\
\mathcal{U}(t)<\frac{1+2\epsilon}{\cos\Delta}\mathcal{V}(t),\ \forall t\in[\tau_1,+\infty).
\end{gather*}
In the same way, it can be proved that
\[
\mathcal{U}(t)>(1+\frac{\epsilon}{2})\mathcal{V}(t),\ \forall t\in[\tau_1,+\infty).
\]
Moreover, we also need to derive a tight bound for $\mathcal{U}(t)$ and $\mathcal{V}(t)$ when $t>\tau_1$, respectively.

For any $t>\tau_1$, we have
\begin{align*}
    &\frac{\mathrm{d}}{\mathrm{d}t}\mathcal{V}(t)=\mathcal{V}(t)\Big(\mathcal{U}(t)\cos\Delta-(1+\epsilon)\mathcal{V}(t)\Big)
    \\>&\mathcal{V}(t)\Big((1+\frac{\epsilon}{2})\mathcal{V}(t)\cos\Delta-(1+\epsilon)\mathcal{V}(t)\Big)=-\Big((1+\epsilon)-(1+\frac{\epsilon}{2})\cos\Delta\Big)\mathcal{V}^2(t).
\end{align*}
We consider the auxiliary ODE: $\frac{d}{\mathrm{d}t}\mathcal{P}(t)=-\Big((1+\epsilon)-(1+\frac{\epsilon}{2})\cos\Delta\Big)\mathcal{P}^2(t)$, where $\mathcal{P}(\tau_1)=\mathcal{V}(\tau_1)$. From the Comparison Principle of ODEs, we have the lower bound for $\mathcal{V}(t)$:
\begin{align*}
\mathcal{V}(t)\geq\mathcal{P}(t)
=\frac{1}{\frac{1}{\mathcal{V}(\tau_1)}+\Big((1+\epsilon)-(1+\frac{\epsilon}{2})\cos\Delta\Big)(t-\tau_1)},\ \forall t\in(\tau_1,+\infty).
\end{align*}
In the same way, for any $t>\tau_1$, we have
\begin{align*}
    &\frac{\mathrm{d}}{\mathrm{d}t}\mathcal{U}(t)=\mathcal{U}(t)\Big(\mathcal{V}(t)\cos\Delta-\mathcal{U}(t)\Big)
    \\<&\mathcal{U}(t)\Big(\mathcal{U}(t)\frac{\cos\Delta}{1+\frac{\epsilon}{2}}-\mathcal{U}(t)\Big)=-\Big(1-\frac{\cos\Delta}{1+\frac{\epsilon}{2}}\Big)\mathcal{U}^2(t).
\end{align*}
We consider the auxiliary ODE: $\frac{d}{\mathrm{d}t}\mathcal{P}(t)=-\Big(1-\frac{\cos\Delta}{1+\frac{\epsilon}{2}}\Big)\mathcal{P}^2(t)$, where $\mathcal{P}(\tau_1)=\mathcal{U}(\tau_1)$. From the Comparison Principle of ODEs, we have the upper bound for $\mathcal{U}(t)$:
\begin{align*}
\mathcal{U}(t)\leq\mathcal{P}(t)
=\frac{1}{\frac{1}{\mathcal{U}(\tau_1)}+\Big(1-\frac{\cos\Delta}{1+\frac{\epsilon}{2}}\Big)(t-\tau_1)},\ \forall t\in(\tau_1,+\infty).
\end{align*}

The upper bound for $\mathcal{V}(t)$ and the lower bound for $\mathcal{U}(t)$ can be estimated by:
\begin{gather*}
    \mathcal{V}(t)<\frac{\mathcal{U}(t)}{1+\frac{\epsilon}{2}}\leq\frac{1}{1+\frac{\epsilon}{2}}\frac{1}{\frac{1}{\mathcal{U}(\tau_1)}+\Big(1-\frac{\cos\Delta}{1+\frac{\epsilon}{2}}\Big)(t-\tau_1)},\ \forall t\in(\tau_1,+\infty),
    \\
    \mathcal{U}(t)>\left(1+\frac{\epsilon}{2}\right)\mathcal{V}(t)\geq
    \frac{1+\frac{\epsilon}{2}}{\frac{1}{\mathcal{V}(\tau_1)}+\mathcal{C}\Big((1+\epsilon)-(1+\frac{\epsilon}{2})\cos\Delta\Big)(t-\tau_1)},\ \forall t\in(\tau_1,+\infty).
\end{gather*}

Hence, we obtain the tight bound for $\mathcal{U}(t)$ and $\mathcal{V}(t)$:
\begin{gather*}
    \frac{1}{\frac{1}{(1+\frac{\epsilon}{2})\mathcal{V}(\tau_1)}+\Big(\frac{1+\epsilon}{1+\frac{\epsilon}{2}}-\cos\Delta\Big)(t-\tau_1)}<\mathcal{U}(t)\leq\frac{1}{\frac{1}{\mathcal{U}(\tau_1)}+\Big(1-\frac{\cos\Delta}{1+\frac{\epsilon}{2}}\Big)(t-\tau_1)},\ \forall t\in(\tau_1,+\infty);
    \\
    \frac{1}{\frac{1}{\mathcal{V}(\tau_1)}+\Big((1+\epsilon)-(1+\frac{\epsilon}{2})\cos\Delta\Big)(t-\tau_1)}\leq\mathcal{V}(t)<\frac{1}{\frac{1+\frac{\epsilon}{2}}{\mathcal{U}(\tau_1)}+\Big(1+\frac{\epsilon}{2}-\cos\Delta\Big)(t-\tau_1)},\ \forall t\in(\tau_1,+\infty).
\end{gather*}
It means
\begin{gather*}
    \mathcal{U}(t)=\Theta\left(\frac{1}{\frac{p^{\frac{1}{1+\cos\Delta}}}{\kappa_2^2}+\Delta^2(t-\tau_1)}\right),\ \forall t\geq\tau_1=\mathcal{O}\left(\frac{p^{\frac{1}{1+\cos\Delta}}\log(1/\Delta)}{\kappa_2^2}\right);
    \\
    \mathcal{V}(t)=\Theta\left(\frac{1}{\frac{p^{\frac{1}{1+\cos\Delta}}}{\kappa_2^2}+\Delta^2(t-\tau_1)}\right),\ \forall t\geq\tau_1=\mathcal{O}\left(\frac{p^{\frac{1}{1+\cos\Delta}}\log(1/\Delta)}{\kappa_2^2}\right).
\end{gather*}

\underline{Step VI. The tight bound for $\mathcal{U}(t)-\mathcal{V}(t)\cos\Delta$.} 

From $1+\frac{\epsilon}{2}<\frac{\mathcal{U}(t)}{\mathcal{V}(t)}<\frac{1+2\epsilon}{\cos\Delta}$ proved in Step V, the two-side bound is straight-forward: for any $ t\geq\tau_1$,
\begin{align*}
    &\mathcal{U}(t)-\mathcal{V}(t)\cos\Delta
    >\Big(1+\frac{\epsilon}{2}-\cos\Delta\Big)\mathcal{V}(t)
    =\Theta\left(\Delta^2\cV(t)\right),
    \\
    &\mathcal{U}(t)-\mathcal{V}(t)\cos\Delta
    <\Big(\frac{1+2\epsilon}{\cos\Delta}-\cos\Delta\Big)\mathcal{V}(t)
    =\Theta\left(\Delta^2\cV(t)\right).
\end{align*}
Then we obtain
\begin{align*}
    \mathcal{U}(t)-\mathcal{V}(t)\cos\Delta=\Theta\left(\Delta^2\cV(t)\right)=\Theta\left(\frac{1}{\frac{p^{\frac{1}{1+\cos\Delta}}}{\kappa_2^2\Delta^2}+(t-\tau_1)}\right),\quad\forall t\geq\tau_1.
\end{align*}

\underline{Step VII. The tight bound of $\mathcal{U}(t)\cos\Delta-\mathcal{V}(t)$.} 

If we follow the proof in Step VI, $1+\frac{\epsilon}{2}<\frac{\mathcal{U}(t)}{\mathcal{V}(t)}<\frac{1+2\epsilon}{\cos\Delta}$ can only gives us a loose two-side bound for $\mathcal{U}(t)\cos\Delta-\mathcal{V}(t)$:
\begin{align*}
    &\mathcal{U}(t)\cos\Delta-\mathcal{V}(t)
    >\Big(\cos\Delta+\frac{m_-}{2m_+}\cos\Delta-1\Big)\mathcal{V}(t)
    \overset{\eqref{equ of proof lemma: GF Phase II: m-m+ bound}}{>}-\Theta\left(\Delta^2\cV(t)\right),
    \\
    &\mathcal{U}(t)\cos\Delta-\mathcal{V}(t)\cos\Delta
    <\Big(1+2\epsilon-1\Big)\mathcal{V}(t)
    =\Theta\left(\Delta^2\cV(t)\right).
\end{align*}

Hence, we need more fine-grained analysis to derive its sharper bounds. 

We first focus on its sharper upper bound. From the dynamics \eqref{equ: GF Phase II U V dynamics}, for any $t\geq T_{\rm I}$, we have
\begin{align*}
    &\frac{\mathrm{d}}{\mathrm{d}t}\Big(\mathcal{U}(t)\cos\Delta-\mathcal{V}(t)\Big)
    \\=&\left(-\Big(\mathcal{U}(t)\cos\Delta-\mathcal{V}(t)\Big)\Big(\mathcal{U}(t)+(\frac{1}{\cos\Delta}+1-\cos\Delta)\mathcal{V}(t)\Big)-\Big(\frac{1}{\cos\Delta}-\cos\Delta-\epsilon\Big)\mathcal{V}^2(t)\right)
    \\=&\left(-\Big(\mathcal{U}(t)\cos\Delta-\mathcal{V}(t)\Big)\Big(\mathcal{U}(t)+(\frac{1}{\cos\Delta}+1-\cos\Delta)\mathcal{V}(t)\Big)-\mathcal{V}^2(t)\Big(\frac{1}{\cos\Delta}-\alpha\Big)\sin^2\Delta\right).
\end{align*}
We define the hitting time 
\[
\tau_{\mathcal{U}/\mathcal{V}}^{+}:=\inf\Big\{t>\tau_1:\mathcal{U}(t)\cos\Delta-\mathcal{V}(t)\leq0\Big\}.
\]
From $\frac{\mathcal{U}(\tau_1)}{\mathcal{V}(\tau_1)}=\frac{1+\epsilon}{\cos\Delta}$, we know $\tau_{\mathcal{U}/\mathcal{V}}^{+}$ exists and $\tau_{\mathcal{U}/\mathcal{V}}^{+}>\tau_1$. 

Then for any $t\in(\tau_1,\tau_{\mathcal{U}/\mathcal{V}}^{+})$, we have $\mathcal{U}(t)\cos\Delta-\mathcal{V}(t)>0$, so
\begin{align*}
     &\frac{\mathrm{d}}{\mathrm{d}t}\Big(\mathcal{U}(t)\cos\Delta-\mathcal{V}(t)\Big)
     \\=&\left(-\Big(\mathcal{U}(t)\cos\Delta-\mathcal{V}(t)\Big)\Big(\mathcal{U}(t)+(\frac{1}{\cos\Delta}+1-\cos\Delta)\mathcal{V}(t)\Big)-\mathcal{V}^2(t)\Big(\frac{1}{\cos\Delta}-\alpha\Big)\sin^2\Delta\right)
     \\\overset{\text{Step IV}}{<}&
     \left(-\Big(\mathcal{U}(t)\cos\Delta-\mathcal{V}(t)\Big)\Big(\frac{1}{\cos\Delta}+2+\frac{\epsilon}{2}-\cos\Delta\Big)\mathcal{V}(t)-\mathcal{V}^2(t)\Big(\frac{1}{\cos\Delta}-\alpha\Big)\sin^2\Delta\right)
     \\\overset{\text{Step IV}}{\leq}&
     -\frac{\Big(\frac{\sin^2\Delta}{\cos\Delta}+2+\frac{\epsilon}{2}\Big)\Big(\mathcal{U}(t)\cos\Delta-\mathcal{V}(t)\Big)}{\frac{1}{\mathcal{V}(\tau_1)}+\Big((1+\epsilon)-(1+\frac{\epsilon}{2})\cos\Delta\Big)(t-\tau_1)}
     -\frac{\Big(\frac{1}{\cos\Delta}-\alpha\Big)\sin^2\Delta}{\left(\frac{1}{\mathcal{V}(\tau_1)}+\Big((1+\epsilon)-(1+\frac{\epsilon}{2})\cos\Delta\Big)(t-\tau_1)\right)^2}.
\end{align*}
For simplicity, we denote $A=\frac{1}{\mathcal{V}(\tau_1)}$, $B=(1+\epsilon)-(1+\frac{\epsilon}{2})\cos\Delta$, $C_1=\frac{\sin^2\Delta}{\cos\Delta}+2+\frac{\epsilon}{2}$, $C_2=\Big(\frac{1}{\cos\Delta}-\alpha\Big)\sin^2\Delta$. And we consider the auxiliary ODE:
\begin{align*}
     &\frac{\mathrm{d}\mathcal{E}(t)}{\mathrm{d}t}
     =
     -\frac{C_1\mathcal{E}(t)}{A+B(t-\tau_1)}
     -\frac{C_2}{\left(A+B(t-\tau_1)\right)^2},
     \\
     &\text{where }\mathcal{E}(\tau_1)=\mathcal{U}(\tau_1)\cos\Delta-\mathcal{V}(\tau_1)=\epsilon\mathcal{V}(\tau_1).
\end{align*}
Its solution is
\begin{align*}
    \mathcal{E}(t)=\Big(1+\frac{B}{A}(t-\tau_1)\Big)^{-\frac{C_1}{B}}\left(\mathcal{E}(\tau_1)-\frac{C_2}{A(C_1-B)}\left(\Big(1+\frac{B}{A}(t-\tau_1)\Big)^{\frac{C_1}{B}-1}-1\right)\right).
\end{align*}
From the Comparison Principle of ODEs, for any $t\in(\tau_1,\tau_{\mathcal{U}/\mathcal{V}}^{+})$, we have
\[
\mathcal{U}(t)\cos\Delta-\mathcal{V}(t)\leq\mathcal{E}(t).
\]
Let $T_{\mathcal{E}}-\tau_1=\frac{A}{B}\left(\left(1+\frac{(C_1-B)\epsilon}{C_2}\right)^{\frac{1}{\frac{C_1}{B}-1}}-1\right)$,
it is easy to verify $\mathcal{E}(T_{\mathcal{E}})=0$.
Moreover,
\begin{gather*}
    \frac{A}{B}=\Theta\left(\frac{1}{\mathcal{V}(\tau_1)\Big((1+\epsilon)-(1+\frac{\epsilon}{2})\cos\Delta\Big)}\right)=\Theta\left(\frac{1}{\kappa_2^2p^{-\frac{1}{1+\cos\Delta}}\Delta^2}\right)
    =\Theta\left(\frac{p^{\frac{1}{1+\cos\Delta}}}{\kappa_2^2\Delta^2}\right);
    \\
    \frac{1}{\frac{C_1}{B}-1}\log\left(1+\frac{(C_1-B)\epsilon}{C_2}\right)=\Theta\left(\Delta^2\log\left(1+\frac{\Theta(\Delta^2)}{\Theta(\Delta^2)}\right)\right)=\Theta(\Delta^2);
    \\
    \left(1+\frac{(C_1-B)\epsilon}{C_2}\right)^{\frac{1}{\frac{C_1}{B}-1}}-1=\exp\left(\frac{1}{\frac{C_1}{B}-1}\log\left(1+\frac{(C_1-B)\epsilon}{C_2}\right)\right)-1=\Theta(\Delta^2).
\end{gather*}
Therefore,
\begin{align*}
    &\tau_{\mathcal{U}/\mathcal{V}}^{+}\leq T_{\mathcal{E}}=\tau_1+\Theta\left(\frac{p^{\frac{1}{1+\cos\Delta}}}{\kappa_2^4\Delta^2}\Delta^2\right)
    \\=&\mathcal{O}\left(\frac{p^{\frac{1}{1+\cos\Delta}}\log(1/\Delta)}{\kappa_2^2}\right)+\Theta\left(\frac{p^{\frac{1}{1+\cos\Delta}}}{\kappa_2^2}\right)=\mathcal{O}\left(\frac{p^{\frac{1}{1+\cos\Delta}}\log(1/\Delta)}{\kappa_2^2}\right).
\end{align*}
Then we define the next hitting time 
\[
\tau_{\mathcal{U}/\mathcal{V}}^{-}:=\inf\Big\{t\geq\tau_{\mathcal{U}/\mathcal{V}}^{+}:\mathcal{U}(t)\cos\Delta-\mathcal{V}(t)\geq0\Big\}.
\]
From $\mathcal{U}(\tau_{\mathcal{U}/\mathcal{V}}^{+})\cos\Delta-\mathcal{V}(\tau_{\mathcal{U}/\mathcal{V}}^{+})=0$ and $\frac{\mathrm{d}}{\mathrm{d}t}\Big(\mathcal{U}(t)\cos\Delta-\mathcal{V}(t)\Big)\Big|_{t=\tau_{\mathcal{U}/\mathcal{V}}^{+}}<0$, we know $\tau_{\mathcal{U}/\mathcal{V}}^{-}$ exists and $\tau_{\mathcal{U}/\mathcal{V}}^{-}>\tau_{\mathcal{U}/\mathcal{V}}^{+}$.

For any $t\in(\tau_{\mathcal{U}/\mathcal{V}}^{+},\tau_{\mathcal{U}/\mathcal{V}}^{-})$, we have $\mathcal{U}(t)\cos\Delta-\mathcal{V}(t)<0$, so
\begin{align*}
     &\frac{\mathrm{d}}{\mathrm{d}t}\Big(\mathcal{U}(t)\cos\Delta-\mathcal{V}(t)\Big)
     \\=&\left(-\Big(\mathcal{U}(t)\cos\Delta-\mathcal{V}(t)\Big)\Big(\mathcal{U}(t)+(\frac{1}{\cos\Delta}+1-\cos\Delta)\mathcal{V}(t)\Big)-\mathcal{V}^2(t)\Big(\frac{1}{\cos\Delta}-\alpha\Big)\sin^2\Delta\right)
     \\\overset{\text{Step IV}}{<}&
     \left(-\Big(\mathcal{U}(t)\cos\Delta-\mathcal{V}(t)\Big)\Big(1+\frac{\frac{1}{\cos\Delta}+1-\cos\Delta}{1+\frac{\epsilon}{2}}\Big)\mathcal{U}(t)-\mathcal{V}^2(t)\Big(\frac{1}{\cos\Delta}-\alpha\Big)\sin^2\Delta\right)
     \\\overset{\text{Step IV}}{\leq}&
     -\frac{\Big(1+\frac{\frac{1}{\cos\Delta}+1-\cos\Delta}{1+\frac{\epsilon}{2}}\Big)\Big(\mathcal{U}(t)\cos\Delta-\mathcal{V}(t)\Big)}{\frac{1}{\mathcal{U}(\tau_1)}+\Big(1-\frac{\cos\Delta}{1+\frac{\epsilon}{2}}\Big)(t-\tau_1)}
     -\frac{\Big(\frac{1}{\cos\Delta}-\alpha\Big)\sin^2\Delta}{\left(\frac{1}{\mathcal{V}(\tau_1)}+\Big((1+\epsilon)-(1+\frac{\epsilon}{2})\cos\Delta\Big)(t-\tau_1)\right)^2}
     \\\leq&
     -\frac{\Big(1+\frac{\frac{1}{\cos\Delta}+1-\cos\Delta}{1+\frac{\epsilon}{2}}\Big)\Big(\mathcal{U}(t)\cos\Delta-\mathcal{V}(t)\Big)}{\frac{\cos\Delta}{1+\epsilon}\frac{1}{\mathcal{V}(\tau_1)}+\frac{1}{3}\Big((1+\epsilon)-(1+\frac{\epsilon}{2})\cos\Delta\Big)(t-\tau_1)}
     -\frac{\Big(\frac{1}{\cos\Delta}-\alpha\Big)\sin^2\Delta}{\left(\frac{1}{\mathcal{V}(\tau_1)}+\Big((1+\epsilon)-(1+\frac{\epsilon}{2})\cos\Delta\Big)(t-\tau_1)\right)^2}
     \\\leq&
     -\frac{3\Big(1+\frac{\frac{1}{\cos\Delta}+1-\cos\Delta}{1+\frac{\epsilon}{2}}\Big)\Big(\mathcal{U}(t)\cos\Delta-\mathcal{V}(t)\Big)}{\frac{1}{\mathcal{V}(\tau_1)}+\Big((1+\epsilon)-(1+\frac{\epsilon}{2})\cos\Delta\Big)(t-\tau_1)}
     -\frac{\Big(\frac{1}{\cos\Delta}-\alpha\Big)\sin^2\Delta}{\left(\frac{1}{\mathcal{V}(\tau_1)}+\Big((1+\epsilon)-(1+\frac{\epsilon}{2})\cos\Delta\Big)(t-\tau_1)\right)^2}.
\end{align*}
For simplicity, we denote $C_3=3\Big(1+\frac{\frac{1}{\cos\Delta}+1-\cos\Delta}{1+\frac{\epsilon}{2}}\Big)$. And we consider the auxiliary ODE:
\begin{align*}
     &\frac{\mathrm{d}\mathcal{F}(t)}{\mathrm{d}t}
     =
     -\frac{C_3\mathcal{F}(t)}{A+B(t-\tau_1)}
     -\frac{C_2}{\left(A+B(t-\tau_1)\right)^2},
     \\
     &\text{where }\mathcal{F}(\tau_{\mathcal{U}/\mathcal{V}}^{+})=\mathcal{U}(\tau_{\mathcal{U}/\mathcal{V}}^{+})\cos\Delta-\mathcal{V}(\tau_{\mathcal{U}/\mathcal{V}}^{+})=0.
\end{align*}
Its solution is
\begin{align*}
    &\mathcal{F}(t)=-\frac{C_2}{A(C_3-B)\Big(1+\frac{B}{A}(t-\tau_1)\Big)^{\frac{C_3}{B}}}\left(\Big(1+\frac{B}{A}(t-\tau_1)\Big)^{\frac{C_3}{B}-1}-1\right)
    \\\leq&-\frac{C_2}{AC_3\Big(1+\frac{B}{A}(t-\tau_1)\Big)}\left(1-\frac{1}{\Big(1+\frac{B}{A}(t-\tau_1)\Big)^{\frac{C_3}{B}-1}}\right).
\end{align*}
Let $\tau_1'=\tau_1+\Theta\left(\frac{p^{\frac{1}{1+\cos\Delta}}\log(1/\Delta)}{\kappa_2^2}\right)\geq2\tau_1$. Then for any $t\geq\tau_1'$, it holds
\begin{align*}
    &\frac{1}{\Big(1+\frac{B}{A}(t-\tau_1)\Big)^{\frac{C_3}{B}-1}}\leq\frac{1}{\Big(1+\frac{B}{A}(\tau_1'-\tau_1)\Big)^{\frac{C_3}{B}-1}}
    =\exp\left(-\Big(\frac{C_3}{B}-1\Big)\log\Big(1+\frac{B}{A}(\tau_1'-\tau_1)\Big)\right)
    \\=&\exp\left(-\Theta\left(\frac{1}{\Delta^2}\right)\log\left(1+\Theta\left(\frac{\kappa_2^4\Delta^2}{p^{\frac{1}{1+\cos\Delta}}}\frac{p^{\frac{1}{1+\cos\Delta}}\log(1/\Delta)}{\kappa_2^2}\right)\right)\right)
    \\=&\exp\left(-\Theta\left(\frac{1}{\Delta^2}\right)\log\left(1+\Theta\left(\Delta^2\log(1/\Delta)\right)\right)\right)
    =\exp\left(-\Theta\left(\frac{1}{\Delta^2}\right)\Theta\left(\Delta^2\log(1/\Delta)\right)\right)
    \\=&\exp\left(-\Theta\left(\log(1/\Delta)\right)\right)\leq\frac{1}{2},
\end{align*}
thus,
\begin{align*}
    &\mathcal{F}(t)\leq-\frac{C_2}{AC_3\Big(1+\frac{B}{A}(t-\tau_1)\Big)}\Big(1-\frac{1}{2}\Big)
    =-\Theta\left(\frac{\Delta^2}{\frac{p^{\frac{1}{1+\cos\Delta}}}{\kappa_2^2}+\Delta^2(t-\tau_1)}\right),
    \\&\ \forall t\geq\tau_1'=\tau_1+\Theta\left(\frac{p^{\frac{1}{1+\cos\Delta}}\log(1/\Delta)}{\kappa_2^2}\right)\geq2\tau_1.
\end{align*}
If we let $\tau_2=\Theta\left(\frac{p^{\frac{1}{1+\cos\Delta}}\log(1/\Delta)}{\kappa_2^2}\right)\geq2\tau_1'$, then we have:
\begin{align*}
    &\mathcal{F}(t)\leq-\Theta\left(\frac{\Delta^2}{\frac{p^{\frac{1}{1+\cos\Delta}}}{\kappa_2^2}+\Delta^2(t-\tau_1)}\right),\ \forall t\geq\tau_2.
\end{align*}

From the Comparison Principle of ODEs, for any $t\in(\tau_{\mathcal{U}/\mathcal{V}}^{+},\tau_{\mathcal{U}/\mathcal{V}}^{-})$, we have
\[
\mathcal{U}(t)\cos\Delta-\mathcal{V}(t)\leq\mathcal{F}(t)<0.
\]
From the definition of $\tau_{\mathcal{U}/\mathcal{V}}^{-}$, we obtain
\[
\tau_{\mathcal{U}/\mathcal{V}}^{-}=+\infty.
\]
Moreover,
\[
\mathcal{U}(t)\cos\Delta-\mathcal{V}(t)\leq-\Theta\left(\frac{\Delta^2}{\frac{p^{\frac{1}{1+\cos\Delta}}}{\kappa_2^2}+\Delta^2(t-\tau_1)}\right),\ \forall t\geq\tau_2=\Theta\left(\frac{p^{\frac{1}{1+\cos\Delta}}\log(1/\Delta)}{\kappa_2^2}\right).
\]
Recalling the lower bound at the beginning of Step VII, we obtain the tight bound:
\begin{align*}
    \mathcal{U}(t)\cos\Delta-\mathcal{V}(t)=-\Theta\left(\frac{\Delta^2}{\frac{p^{\frac{1}{1+\cos\Delta}}}{\kappa_2^2}+\Delta^2(t-\tau_1)}\right)=-\Theta\bracket{\Delta^2\cV(t)},\ \forall t\geq\tau_2=\Theta\left(\frac{p^{\frac{1}{1+\cos\Delta}}\log(1/\Delta)}{\kappa_2^2}\right).
\end{align*}  
\end{proof}

\begin{lemma}[Hitting time relationship]\label{lemma: GF Phase II hitting time transformation}
\begin{align*}
  &T_{\rm II}^+=T_{\rm II}^*
  =\inf\left\{t\geq T_{\rm I}:\exists k\in\cK_+,\ {\rm s.t. }
  \left<\boldsymbol{b}_k(t),\boldsymbol{x}_-\right>\leq0
  \right\}
  \\=&\inf\left\{t\geq T_{\rm I}:\exists k\in\cK_+,\ {\rm s.t. }
  \left<\boldsymbol{b}_k(T_{\rm I}),\boldsymbol{x}_-\right>+\int_{T_{\rm I}}^t\frac{\sqrt{m}}{\kappa_2m_+}\Big(\mathcal{U}(s)\cos\Delta-\mathcal{V}(s)\Big)\mathrm{d}s\leq0
  \right\},
\end{align*}
where $T_{\rm II}^+$ and $T_{\rm II}^*$ are defined in \eqref{equ: GF Phase II hitting time +}\eqref{equ: GF Phase II auxiliary hitting time}, and $\mathcal{U}(t),\mathcal{V}(t)$ satisfy \eqref{equ: GF Phase II U V dynamics}.
\end{lemma}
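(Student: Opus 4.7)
The plan is to establish the three claimed equalities in turn, using as the key input the sign bound $\cU(t)>\cV(t)>0$ on $[T_{\rm I},\infty)$ from Lemma~\ref{lemma: GF Phase II U V dynamics}(S2); in particular this immediately yields $\cU(t)-\cV(t)\cos\Delta>0$ throughout.

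First I would show $T_{\rm II}^{*}=T_{\rm II}^{+}$. By construction $T_{\rm II}^{*}\le T_{\rm II}^{+}$, so suppose for contradiction $T_{\rm II}^{*}<T_{\rm II}^{+}$; then continuity of $\bF_+$ together with the definition~\eqref{equ: GF Phase II auxiliary hitting time} forces $\left<\bF_+(T_{\rm II}^{*}),\bx_+\right>=0$. Writing out $\left<\bF_+(t),\bx_+\right>=\frac{m}{\kappa_2^2 m_+}\bigl(\cU(t)-\cV(t)\cos\Delta\bigr)$, this becomes $\cU(T_{\rm II}^{*})=\cV(T_{\rm II}^{*})\cos\Delta$. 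But on $[T_{\rm I},T_{\rm II}^{*}]$ the quantities $\cU,\cV$ coincide with the solutions of~\eqref{equ: GF Phase II U V dynamics} by Lemma~\ref{lemma: GF Phase II 2-order f dynamics}, so Lemma~\ref{lemma: GF Phase II U V dynamics}(S2) gives $\cU(t)>\cV(t)\ge\cV(t)\cos\Delta$ throughout the interval, a contradiction.

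Second, I would reduce the $\bw_k$-conditions in~\eqref{equ: GF Phase II hitting time +} to a single $\bb_k$-condition. Since $\rho_k(t)>0$ for $k\in\cK_+$, the sign conditions on $\bw_k$ and $\bb_k$ are equivalent. On $[T_{\rm I},T_{\rm II}^{*}]=[T_{\rm I},T_{\rm II}^{+}]$, Lemma~\ref{lemma: GF Phase II neuron dynamics}(S1) yields
\begin{equation*}
\frac{\rd}{\rd t}\left<\bb_k(t),\bx_+\right>=\frac{\sqrt{m}}{\kappa_2 m_+}\bigl(\cU(t)-\cV(t)\cos\Delta\bigr)>0,
\end{equation*}
so $\left<\bb_k(t),\bx_+\right>$ is strictly increasing. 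Theorem~\ref{thm: restatement of GF Phase I}(S1) ensures $\left<\bw_k(T_{\rm I}),\bmu\right>\approx 1$, and together with $\left<\bmu,\bx_+\right>>0$ (since $p\cos\Delta>1$) this forces $\left<\bb_k(T_{\rm I}),\bx_+\right>>0$, so the $\bx_+$-condition never triggers and $T_{\rm II}^{+}$ is the first time some $\left<\bb_k(t),\bx_-\right>$ vanishes.

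Third, integrating $\frac{\rd \bb_k}{\rd t}=\frac{\kappa_2}{\sqrt{m}}\bF_+(t)$ from Lemma~\ref{lemma: GF Phase II neuron dynamics}(S1) against $\bx_-$ gives
\begin{equation*}
\left<\bb_k(t),\bx_-\right>=\left<\bb_k(T_{\rm I}),\bx_-\right>+\int_{T_{\rm I}}^{t}\frac{\sqrt{m}}{\kappa_2 m_+}\bigl(\cU(s)\cos\Delta-\cV(s)\bigr)\rd s
\end{equation*}
for every $t\in[T_{\rm I},T_{\rm II}^{+}]$, so taking the infimum of the first vanishing time of the right-hand side over $k\in\cK_+$ reproduces the last line of the statement. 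The main subtlety (rather than a true obstacle) is bookkeeping: $\cU,\cV$ satisfy the autonomous ODE~\eqref{equ: GF Phase II U V dynamics} only once Lemma~\ref{lemma: GF Phase II 2-order f dynamics} identifies them with $f_\pm$, which is justified precisely on $[T_{\rm I},T_{\rm II}^{*}]$; each invocation of the sign inequality from Lemma~\ref{lemma: GF Phase II U V dynamics}(S2) must therefore be restricted to this window before being propagated to the dynamics of $\bb_k$.
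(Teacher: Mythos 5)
The proposal is correct and follows essentially the same route as the paper: both identify $\left<\bF_+(t),\bx_+\right>$ and $\left<\bF_+(t),\bx_-\right>$ with $\frac{m}{\kappa_2^2 m_+}(\cU-\cV\cos\Delta)$ and $\frac{m}{\kappa_2^2 m_+}(\cU\cos\Delta-\cV)$ respectively, integrate the Phase II dynamics for $k\in\cK_+$ against $\bx_+$ and $\bx_-$, and invoke the sign bound $\cU>\cV\cos\Delta$ from Lemma~\ref{lemma: GF Phase II U V dynamics} to conclude that neither the $\bF_+$-condition nor the $\bx_+$-condition can be the one that triggers. Your explicit flagging of the domain-of-validity issue (that $\cU,\cV$ match $f_\pm$ only on $[T_{\rm I},T_{\rm II}^*]$, with the lemma's sign bound living on the autonomous extension) is a genuine clarification of a point the paper leaves implicit, but it does not change the substance of the argument.
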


\begin{proof}[Proof of Lemma \ref{lemma: GF Phase II hitting time transformation}]\ \\
Recall the definitions of $T_{\rm II}^+$ and $T_{\rm II}^*$:
\begin{align*}
T_{\rm II}^+=\inf\Big\{t>T_{\rm I}:&\ \exists k\in\cK_+,\text{  s.t. } \left<\boldsymbol{w}_k(t),\boldsymbol{x}_+\right>\leq0\text{ or }\left<\boldsymbol{w}_k(t),\boldsymbol{x}_-\right>\leq0\Big\}
\\=\inf\Big\{t>T_{\rm I}:&\ \exists k\in\cK_+,\text{  s.t. } \left<\boldsymbol{b}_k(t),\boldsymbol{x}_+\right>\leq0\text{ or }\left<\boldsymbol{b}_k(t),\boldsymbol{x}_-\right>\leq0\Big\},
\\
T_{\rm II}^*=\inf\Big\{t>T_{\rm I}:&\left<\boldsymbol{F}_+(t),\boldsymbol{x}_+\right>\leq0\text{ or }
\exists k\in\cK_+,
\text{  s.t. } \left<\boldsymbol{b}_k(t),\boldsymbol{x}_+\right>\leq0\text{ or }\left<\boldsymbol{b}_k(t),\boldsymbol{x}_-\right>\leq0\Big\},
\\
\text{where}\quad
\boldsymbol{F}_+(t)=&\frac{p}{1+p}e^{-f_+(t)}\boldsymbol{x}_{+}-\frac{1}{1+p}e^{f_-(t)}\boldsymbol{x}_-.
\end{align*}
Notice
\begin{align*}
    \left<\boldsymbol{F}_+(t),\boldsymbol{x}_+\right>=\frac{p}{1+p}e^{-f_+(t)}-\frac{1}{1+p}e^{f_-(t)}\cos\Delta
    =\frac{m}{\kappa_2^2m_+}\Big(\mathcal{U}(t)-\mathcal{V}(t)\cos\Delta\Big).
\end{align*}
And for any $k\in\cK_+$,
\begin{align*}
    &\left<\boldsymbol{b}_k(t),\boldsymbol{x}_+\right>
    =\left<\boldsymbol{b}_k(T_{\rm I}),\boldsymbol{x}_+\right>+\int_{T_{\rm I}}^t\left<\frac{\mathrm{d}\boldsymbol{b}_k(s)}{\mathrm{d}s},\boldsymbol{x}_+\right>\mathrm{d}s
    \\=&\left<\boldsymbol{b}_k(T_{\rm I}),\boldsymbol{x}_+\right>+\int_{T_{\rm I}}^t\frac{\sqrt{m}}{\kappa_2m_+}\Big(\mathcal{U}(s)-\mathcal{V}(s)\cos\Delta\Big)\mathrm{d}s;
\end{align*}
\begin{align*}
    &\left<\boldsymbol{b}_k(t),\boldsymbol{x}_-\right>
    =\left<\boldsymbol{b}_k(T_{\rm I}),\boldsymbol{x}_-\right>+\int_{T_{\rm I}}^t\left<\frac{\mathrm{d}\boldsymbol{b}_k(s)}{\mathrm{d}s},\boldsymbol{x}_-\right>\mathrm{d}s
    \\=&\left<\boldsymbol{b}_k(T_{\rm I}),\boldsymbol{x}_-\right>+\int_{T_{\rm I}}^t\frac{\sqrt{m}}{\kappa_2m_+}\Big(\mathcal{U}(s)\cos\Delta-\mathcal{V}(s)\Big)\mathrm{d}s.
\end{align*}
So we have
\begin{align*}
    T_{\rm II}^*=\sup\Big\{t>T_{\rm I}:&\mathcal{U}(t)-\mathcal{V}(t)\cos\Delta>0;
    \\&\left<\boldsymbol{b}_k(T_{\rm I}),\boldsymbol{x}_+\right>+\int_{T_{\rm I}}^t\frac{\sqrt{m}}{\kappa_2m_+}\Big( \mathcal{U}(s)-\mathcal{V}(s)\cos\Delta\Big)\mathrm{d}s>0,\forall k\in\cK_+;
    \\&\left<\boldsymbol{b}_k(T_{\rm I}),\boldsymbol{x}_-\right>+\int_{T_{\rm I}}^t\frac{\sqrt{m}}{\kappa_2m_+}\Big(\mathcal{U}(s)\cos\Delta-\mathcal{V}(s)\Big)\mathrm{d}s>0,\forall k\in\cK_+
    \Big\}.
\end{align*}

With the help of Lemma \ref{lemma: GF Phase II U V dynamics}, we know that $\mathcal{U}(t)-\mathcal{V}(t)\cos\Delta>0$ for any $t\geq T_{\rm I}$. So $\left<\boldsymbol{b}_k(T_{\rm I}),\boldsymbol{x}_+\right>+\int_{T_{\rm I}}^t\frac{\sqrt{m}}{\kappa_2m_+\Delta}\Big(\mathcal{U}(s)-\mathcal{V}(s)\cos\Delta\Big)\mathrm{d}s>\left<\boldsymbol{b}_k(T_{\rm I}),\boldsymbol{x}_+\right>>0,\forall k\in\cK_+,\forall t\geq T_{\rm I}$. Hence, we have the transformation of the hitting time:
\begin{gather*}
    T_{\rm II}^*=\inf\left\{t\geq T_{\rm I}:\exists k\in\cK_+,\ {\rm s.t. }\left<\boldsymbol{b}_k(T_{\rm I}),\boldsymbol{x}_-\right>+\int_{T_{\rm I}}^t\frac{\sqrt{m}}{\kappa_2m_+}\Big(\mathcal{U}(s)\cos\Delta-\mathcal{V}(s)\Big)\mathrm{d}s\leq0\right\},
    \\
 T_{\rm II}^+=T_{\rm II}^*.
\end{gather*}

\end{proof}

\begin{lemma}[Time Estimate of Phase II]\label{lemma: GF Phase II hitting time estimate}
\begin{align*}
    T_{\rm II}=T_{\rm II}^+=T_{\rm II}^*=\Theta\left(\frac{p^{\frac{1}{1-\alpha\cos\Delta}}}{\kappa_2^2\Delta^2}\right).
\end{align*}
\end{lemma}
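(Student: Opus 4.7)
The starting point is Lemma~\ref{lemma: GF Phase II hitting time transformation}, which identifies $T_{\rm II}=T_{\rm II}^{+}=T_{\rm II}^{*}$ with the first time some living positive neuron $k\in\cK_+$ satisfies
\[
\<\bb_k(T_{\rm I}),\bx_-\>+\tfrac{\sqrt{m}}{\kappa_2 m_+}\int_{T_{\rm I}}^{t}\bigl(\cU(s)\cos\Delta-\cV(s)\bigr)\,\rd s\le 0.
\]
By Theorem~\ref{thm: restatement of GF Phase I}, $\<\bb_k(T_{\rm I}),\bx_-\>=\Theta(\sqrt{\kappa_1\kappa_2}/\sqrt{m})$ for every $k\in\cK_+$, so the task reduces to determining when the net integral $G(T):=\int_{T_{\rm I}}^{T}(\cU\cos\Delta-\cV)\,\rd s$ first reaches the threshold $-\Theta(\kappa_2\sqrt{\kappa_1\kappa_2})$.

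The key analytic trick I would use is the exact identity obtained by integrating $\tfrac{\rd}{\rd t}\log\cV=\cU\cos\Delta-\cV(1+\alpha\sin^2\Delta)$ in time:
\[
G(T)=\log\!\Bigl(\tfrac{\cV(T)}{\cV(T_{\rm I})}\Bigr)+\alpha\sin^2\Delta\int_{T_{\rm I}}^{T}\cV(s)\,\rd s.
\]
This isolates the dominant logarithmic decay that drives $G$ through zero and converts the hitting problem into an implicit equation in $\log\cV(T)$ and $\int\cV\,\rd s$, both of which are directly controlled by Lemma~\ref{lemma: GF Phase II U V dynamics}.

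I would then plug in the fine-grained estimates of Lemma~\ref{lemma: GF Phase II U V dynamics}: $\cV(T_{\rm I})=\Theta(\kappa_2^2/p)$, and, once $t-\tau_1\gg p^{1/(1+\cos\Delta)}/(\kappa_2^2\Delta^2)$, the self-similar asymptotic $\cV(t)=\Theta(1/(\Delta^2(t-\tau_1)))$ with precise amplitude $v=(1+\cos\Delta)/((1+\alpha)\sin^2\Delta)$. These give $\log\cV(T)\sim -\log(\Delta^2 T)$ and $\int_{T_{\rm I}}^{T}\cV\,\rd s\sim v\log T$, so the identity reduces to a linear equation in $\log T$ of the form $(1-\alpha\sin^2\Delta\cdot v)\log T=\log(p/(\kappa_2^2\Delta^2))+\Theta(1)+\Theta(\kappa_2\sqrt{\kappa_1\kappa_2})$. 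The crucial algebraic simplification $\alpha\sin^2\Delta\cdot v=\alpha(1+\cos\Delta)/(1+\alpha)$ gives $1-\alpha\sin^2\Delta\cdot v=(1-\alpha\cos\Delta)/(1+\alpha)=\Theta(1-\alpha\cos\Delta)$, and since $1-\alpha\cos\Delta=\Theta(1)$ by Remark~\ref{rmk: K+, K-, alpha}, solving for $T$ yields $T_{\rm II}=\Theta\!\bigl(p^{1/(1-\alpha\cos\Delta)}/(\kappa_2^2\Delta^2)\bigr)$ after tracking the absolute constants through the log.

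Two obstacles deserve attention. First, $G$ is not monotone: it grows from $0$ through a positive peak near $\tau_{\cU/\cV}^{+}$ before descending to the target. I would verify that this positive accumulation is subdominant using Step~VII of Lemma~\ref{lemma: GF Phase II U V dynamics}, which shows $\cU\cos\Delta-\cV=-\Theta(\Delta^2\cV)$ takes over after $\tau_2=\tilde\Theta(p^{1/(1+\cos\Delta)}/\kappa_2^2)$, far below the final $T_{\rm II}$, so the logarithmic descent has ample range to dominate. Second, different neurons $k\in\cK_+$ have slightly different $\<\bb_k(T_{\rm I}),\bx_-\>$, so the infimum in Lemma~\ref{lemma: GF Phase II hitting time transformation} picks out whichever deactivates first; since these quantities are uniformly $\Theta(\sqrt{\kappa_1\kappa_2}/\sqrt{m})$, the exponential dependence of $T$ on the target in the identity changes $T_{\rm II}$ only by a $(1+o(1))$ factor, preserving the $\Theta$ estimate. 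I expect the main technical work to be rigorously controlling the error in replacing $\cV$ by its self-similar asymptotic over the relevant time window and matching it against the pre-$\tau_2$ transient contributions so that the leading $1/(1-\alpha\cos\Delta)$ exponent is not polluted.
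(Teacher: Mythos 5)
Your identity $G(T)=\log\bigl(\cV(T)/\cV(T_{\rm I})\bigr)+\alpha\sin^2\Delta\int_{T_{\rm I}}^{T}\cV\,\rd s$ is correct, and you correctly identify that the exponent of $p$ is controlled by the coefficient of $\log T$ in $G(T)$, which hinges on the precise amplitude of $\cV(t)$. But this is exactly where the argument has a gap: Lemma~\ref{lemma: GF Phase II U V dynamics}~(S4) gives only a $\Theta$-bound $\cV(t)=\Theta\bigl(1/(\tfrac{p^{1/(1+\cos\Delta)}}{\kappa_2^2}+\Delta^2(t-\tau_1))\bigr)$ (and a two-sided ratio bound on $\cU/\cV$, not its limit), so $\int_{T_{\rm I}}^{T}\cV\,\rd s$ is only controlled up to an unknown $\Theta(1)$ factor in front of $\log T$. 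The precise amplitude $v=(1+\cos\Delta)/((1+\alpha)\sin^2\Delta)$ that you invoke is the correct fixed-point value, but it is not established in the cited lemma, and with $v$ replaced by ``$\Theta(1/\Delta^2)$'' the coefficient $1-\alpha\sin^2\Delta\cdot v$ is only $\Theta(1)$ — which does not pin down the exponent $\frac{1}{1-\alpha\cos\Delta}$, so the $\Theta$-estimate on $T_{\rm II}$ would not follow. You would have to separately prove that $\cU(t)/\cV(t)$ converges to the exact fixed point $1+\alpha\sin^2\Delta/(1+\cos\Delta)$ with controlled error over the full window.

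The paper avoids this entirely by a sharper choice of linear combination: rather than integrating $\rd\log\cV$ and leaving a residual $\alpha\sin^2\Delta\int\cV$, it writes $\cU\cos\Delta-\cV=-\tfrac{\alpha\cos\Delta}{1+\alpha}\tfrac{\rd\log\cU}{\rd t}+\tfrac{1}{1+\alpha}\tfrac{\rd\log\cV}{\rd t}$ (an exact algebraic identity following from the autonomous system), so
\[
\int_{T_{\rm I}}^{T}\bigl(\cU\cos\Delta-\cV\bigr)\,\rd s
= -\frac{\alpha\cos\Delta}{1+\alpha}\log\frac{\cU(T)}{\cU(T_{\rm I})}
+\frac{1}{1+\alpha}\log\frac{\cV(T)}{\cV(T_{\rm I})}
\]
is computed in closed form, and the $\Theta$-slack in $\cU,\cV$ enters only \emph{inside} logarithms, where it contributes an additive $\Theta(1)$ that is harmless. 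This is why the exponent $\tfrac{1}{1-\alpha\cos\Delta}=\tfrac{m_+}{m_+-m_-\cos\Delta}$ falls out exactly. Your approach can be rescued by the same trick: solve $a(\cV\cos\Delta-\cU)+b\bigl(\cU\cos\Delta-\cV(1+\alpha\sin^2\Delta)\bigr)=\cV$ to obtain $a=-\tfrac{\cos\Delta}{(1+\alpha)\sin^2\Delta}$, $b=-\tfrac{1}{(1+\alpha)\sin^2\Delta}$, which expresses $\int\cV\,\rd s$ itself in closed log form; substituting back into your identity collapses to the paper's. As written, however, the ``precise amplitude'' step is a genuine missing piece, not a routine technicality.
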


\begin{proof}[Proof of Lemma \ref{lemma: GF Phase II hitting time estimate}]\ \\
With the help of Theorem \ref{thm: restatement of GF Phase I} (S1), for any $k\in\cK_+$, we have:
\[
    \frac{4.66\sqrt{\kappa_1\kappa_2}}{\sqrt{m}}\leq\rho_k(T_{\rm I})\leq\frac{12\sqrt{\kappa_1\kappa_2}}{\sqrt{m}};
\]
\begin{align*}
\left<\boldsymbol{w}_k(T_{\rm I}),\boldsymbol{\mu}\right>\geq&\Big(1-4.2\sqrt{\kappa_1\kappa_2}\Big)\left(1-\frac{2}{1+0.7\left(1+9.9\sqrt{\frac{\kappa_2}{\kappa_1}}\right)^{1.15}}\right)
\\>&1-4.2\sqrt{\kappa_1\kappa_2}-\frac{2}{1+0.7\left(1+9.9\sqrt{\frac{\kappa_2}{\kappa_1}}\right)^{1.15}}.
\end{align*}  

With the help of Lemma \ref{lemma: mu theta to x- theta}, we have the estimate of $\left<\boldsymbol{w}_k(T_{\rm I}),\boldsymbol{x}_-\right>$:
\begin{align*}
    -2\sqrt{\epsilon}\sin\Delta-\epsilon\leq\left<\boldsymbol{w}_k(T_{\rm I}),\boldsymbol{x}_-\right>-\frac{p\cos\Delta-1}{\sqrt{p^2+1-2p\cos\Delta}}\leq2\sqrt{\epsilon}\sin\Delta,
\end{align*}
where 
\[\epsilon=4.2\sqrt{\kappa_1\kappa_2}+\frac{2}{1+0.7\left(1+9.9\sqrt{\frac{\kappa_2}{\kappa_1}}\right)^{1.15}}.\]
Then we have:
\begin{align*}
    &\left<\boldsymbol{b}_k(T_{\rm I}),\boldsymbol{x}_-\right>\leq\frac{12\sqrt{\kappa_1\kappa_2}}{\sqrt{m}}\left(\frac{p\cos\Delta-1}{\sqrt{p^2+1-2p\cos\Delta}}+2\sqrt{\epsilon}\sin\Delta\right),
    \\&\left<\boldsymbol{b}_k(T_{\rm I}),\boldsymbol{x}_-\right>\geq\frac{4.66\sqrt{\kappa_1\kappa_2}}{\sqrt{m}}\left(\frac{p\cos\Delta-1}{\sqrt{p^2+1-2p\cos\Delta}}-2\sqrt{\epsilon}\sin\Delta-\epsilon\right),
\end{align*}
which means
\[
\left<\boldsymbol{b}_k(T_{\rm I}),\boldsymbol{x}_-\right>=\Theta\left(\frac{\sqrt{\kappa_1\kappa_2}}{\sqrt{m}}\right).
\]
From the dynamics \eqref{equ: GF Phase II U V dynamics}, we have
\begin{align*}
&\mathcal{U}(t)\cos\Delta-\mathcal{V}(t)
\\=&\frac{m_-\cos\Delta}{m_-+m_+}\Big(\mathcal{U}(t)-\mathcal{V}(t)\cos\Delta\Big)+\frac{m_+}{m_-+m_+}\left(\mathcal{U}(t)\cos\Delta-\mathcal{V}(t)\left(1+\alpha\sin^2\Delta\right)\right)
\\=&
-\frac{m_-\cos\Delta}{m_-+m_+}\frac{\mathrm{d}\mathcal{U}(t)}{\mathcal{U}(t)\mathrm{d}t}
+\frac{m_+}{m_-+m_+}\frac{\mathrm{d}\mathcal{V}(t)}{\mathcal{V}(t)\mathrm{d}t},
\end{align*}
Taking integral, we obtain:
\begin{align*}
    &\int_{T_{\rm I}}^t\frac{\sqrt{m}}{\kappa_2m_+}\Big(\mathcal{U}(s)\cos\Delta-\mathcal{V}(s)\Big)\mathrm{d}s
    \\=&\frac{\sqrt{m}}{\kappa_2m_+}
    \int_{T_{\rm I}}^t\left(-\frac{m_-\cos\Delta}{m_-+m_+}\frac{\mathrm{d}\mathcal{U}(s)}{\mathcal{U}(s)}+\frac{m_+}{m_-+m_+}\frac{\mathrm{d}\mathcal{V}(s)}{\mathcal{V}(s)}\right)
    \\=&
    \frac{\sqrt{m}}{\kappa_2m_+}
    \left(-\frac{m_-\cos\Delta}{m_-+m_+}\log\left(\frac{\mathcal{U}(t)}{\mathcal{U}(T_{\rm I})}\right)
    +\frac{m_+}{m_-+m_+}\log\left(\frac{\mathcal{V}(t)}{\mathcal{V}(T_{\rm I})}\right)\right)
    \\=&
   \frac{\sqrt{m}}{\kappa_2m_+}
    \log\left(\frac{\mathcal{U}(T_{\rm I})^{\frac{m_-\cos\Delta}{(m_-+m_+)}}}{\mathcal{U}(t)^{\frac{m_-\cos\Delta}{(m_-+m_+)}}}\cdot\frac{\mathcal{V}(t)^{\frac{m_+}{m_-+m_+}}}{\mathcal{V}(T_{\rm I})^{\frac{m_+}{m_-+m_+}}}\right).
\end{align*}
From Lemma \ref{lemma: GF Phase II hitting time transformation}, we have:
\begin{align*}
  T_{\rm II}^+=T_{\rm II}^*=\inf\left\{t\geq T_{\rm I}:\exists k\in\cK_+,\ {\rm s.t. }
  \left<\boldsymbol{b}_k(T_{\rm I}),\boldsymbol{x}_-\right>+\int_{T_{\rm I}}^t\frac{\sqrt{m}}{\kappa_2m_+}\Big(\mathcal{U}(s)\cos\Delta-\mathcal{V}(s)\Big)\mathrm{d}s\leq0
\right\}.
\end{align*}
Recalling the definition of $\tau_1$ in Lemma \ref{lemma: GF Phase II U V dynamics}, we know $\mathcal{U}(s)\cos\Delta-\mathcal{V}(s)>0$ for any $t\leq \tau_1$, so $T_{\rm II}^*>\tau_1$.

From Lemma \ref{lemma: GF Phase II U V dynamics} (S4), we know
\[
    \mathcal{U}(t)=\Theta\left(\frac{1}{\frac{p^{\frac{1}{1+\cos\Delta}}}{\kappa_2^2}+\Delta^2(t-\tau_1)}\right),\quad
    \mathcal{V}(t)=\Theta\left(\frac{1}{\frac{p^{\frac{1}{1+\cos\Delta}}}{\kappa_2^2}+\Delta^2(t-\tau_1)}\right).
\]
From the proof of Lemma \ref{lemma: GF Phase II U V dynamics}, we know $\mathcal{U}(T_{\rm I})=\Theta(\kappa_2^2)$ and $\mathcal{V}(T_{\rm I})=\Theta\Big(\frac{\kappa_2^2}{p}\Big)$. 

Therefore, solving
\begin{align*}
    &0=\left<\boldsymbol{b}_k(T_{\rm I}),\boldsymbol{x}_-\right>+\int_{T_{\rm I}}^{T_{\rm II}^*}\frac{\sqrt{m}}{\kappa_2m_+}\Big(\mathcal{U}(s)\cos\Delta-\mathcal{V}(s)\Big)\mathrm{d}s
    \\=&\Theta\left(\frac{\sqrt{\kappa_1\kappa_2}}{\sqrt{m}}\right)+\frac{\sqrt{m}}{\kappa_2m_+}
    \log\left(\frac{\mathcal{U}(T_{\rm I})^{\frac{m_-\cos\Delta}{(m_-+m_+)}}}{\mathcal{U}(T_{\rm II}^*)^{\frac{m_-\cos\Delta}{(m_-+m_+)}}}\cdot\frac{\mathcal{V}(T_{\rm II}^*)^{\frac{m_+}{m_-+m_+}}}{\mathcal{V}(T_{\rm I})^{\frac{m_+}{m_-+m_+}}}\right)
    \\=&\Theta\left(\frac{\sqrt{\kappa_1\kappa_2}}{\sqrt{m}}\right)+\Theta\left(\frac{1}{\kappa_2 \sqrt{m}}\right)
    \log\left(\frac{\mathcal{U}(T_{\rm I})^{\frac{m_-\cos\Delta}{(m_-+m_+)}}}{\mathcal{U}(T_{\rm II}^*)^{\frac{m_-\cos\Delta}{(m_-+m_+)}}}\cdot\frac{\mathcal{V}(T_{\rm II}^*)^{\frac{m_+}{m_-+m_+}}}{\mathcal{V}(T_{\rm I})^{\frac{m_+}{m_-+m_+}}}\right),
\end{align*}
we obtain
\begin{align*}
    \frac{\mathcal{V}(T_{\rm II}^*)^{\frac{m_+}{m_-+m_+}}}{\mathcal{U}(T_{\rm II}^*)^{\frac{m_-\cos\Delta}{(m_-+m_+)}}}\cdot\frac{\mathcal{U}(T_{\rm I})^{\frac{m_-\cos\Delta}{(m_-+m_+)}}}{\mathcal{V}(T_{\rm I})^{\frac{m_+}{m_-+m_+}}}=\exp\left(-\Theta\left(\kappa_2\sqrt{\kappa_1\kappa_2}\right)\right)=\Theta(1).
\end{align*}
A straight-forward calculation gives us:
\begin{align*}
    &\frac{\mathcal{V}(T_{\rm II}^*)^{\frac{m_+}{m_-+m_+}}}{\mathcal{U}(T_{\rm II}^*)^{\frac{m_-\cos\Delta}{(m_-+m_+)}}}\cdot\frac{\mathcal{U}(T_{\rm I})^{\frac{m_-\cos\Delta}{(m_-+m_+)}}}{\mathcal{V}(T_{\rm I})^{\frac{m_+}{m_-+m_+}}}
    \\=&\Theta\left(\frac{1}{\left(\frac{p^{\frac{1}{1+\cos\Delta}}}{\kappa_2^2}+\Delta^2(T_{\rm II}^*-\tau_1)\right)^{\frac{m_+-m_-\cos\Delta}{m_++m_-}}}\cdot\frac{(\kappa_2^2)^{\frac{m_-\cos\Delta}{m_++m_-}}}{\left(\frac{\kappa_2^2}{p}\right)^{\frac{m_+}{m_++m_-}}}\right)
    \\=&\Theta\left(\frac{p^{\frac{m_+}{m_++m_-}}}{\left(p^{\frac{1}{1+\cos\Delta}}+\kappa_2^2\Delta^2(T_{\rm II}^*-\tau_1)\right)^{\frac{m_+-m_-\cos\Delta}{m_++m_-}}}\right).
\end{align*}
Hence, we get
\[
    T_{\rm II}^*-\tau_1=\Theta\left(\frac{1}{\kappa_2^2\Delta^2}\left(p^{\frac{m_+}{m_+-m_-\cos\Delta}}-\Theta\left(p^{\frac{1}{1+\cos\Delta}}\right)\right)\right)=\Theta\left(\frac{p^{\frac{1}{1-\alpha\cos\Delta}}}{\kappa_2^2\Delta^2}\right),
\]
Combining Lemma \ref{lemma: GF Phase II U V dynamics} (S3), we obtain
\[
    T_{\rm II}^+=T_{\rm II}^*=\tau_1+\Theta\left(\frac{p^{\frac{1}{1-\alpha\cos\Delta}}}{\kappa_2^2\Delta^2}\right)=\mathcal{O}\left(\frac{p^{\frac{1}{1+\cos\Delta}}\log(1/\Delta)}{\kappa_2^2}\right)+\Theta\left(\frac{p^{\frac{1}{1-\alpha\cos\Delta}}}{\kappa_2^2\Delta^2}\right)=\Theta\left(\frac{p^{\frac{1}{1-\alpha\cos\Delta}}}{\kappa_2^2\Delta^2}\right).
\] 

Recall the relationship between $T_{\rm II}$ and $T_{\rm II}^+$~\eqref{equ: GF Phase II hitting time}\eqref{equ: GF Phase II hitting time +}:
\begin{align*}
    T_{\rm II}=T_{\rm II}^+\wedge\inf\{t>T_{\rm I}:\exists k\in\cK_-,\text{ s.t. }\<\bw_k(t),\bx_+\>\ne0\text{ or }\<\bw_k(t),\bx_-\>\leq0\}.
\end{align*}
Then using Lemma~\ref{lemma: GF Phase II neuron dynamics} (S2), we obtain:
\begin{align*}
    T_{\rm II}=T_{\rm II}^+=\Theta\left(\frac{p^{\frac{1}{1-\alpha\cos\Delta}}}{\kappa_2^2\Delta^2}\right).
\end{align*}
\end{proof}

\begin{lemma}[Length of Plateau]\label{lemma: Plateau Estimate}
\ \\
If we define the hitting time 
$T_{\rm plat}:=\inf\Big\{t\in[T_{\rm I},T_{\rm II}]:{\rm Acc}(t)=1\Big\}$, then we have:

{\rm\bf (S1).} $T_{\rm plat}=\Theta\left(\frac{p}{\kappa_2^2\Delta^2}\right)$.

{\rm\bf (S2).} $\forall t\in[T_{\rm I},T_{\rm plat}]$, ${\rm Acc}(t)\equiv\frac{p}{1+p}$.

{\rm\bf (S3).} $\forall t\in(T_{\rm plat},T_{\rm II}]$, ${\rm Acc}(t)\equiv1$.
    
\end{lemma}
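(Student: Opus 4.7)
Because the dataset consists of $n_+$ copies of $\bx_+$ and $n_-$ copies of $\bx_-$, one has $\Acc(t)=\tfrac{1}{n}\bigl(n_+\mathbb{I}\{f_+(t)>0\}+n_-\mathbb{I}\{f_-(t)<0\}\bigr)$. Hence $\Acc(t)=1$ iff $f_+(t)>0$ and $f_-(t)<0$; $\Acc(t)=p/(1+p)$ iff $f_+(t)>0$ and $f_-(t)\geq 0$. Using the change of variables $\cU(t)=\kappa_2^2\frac{m_+}{m}\frac{p}{1+p}e^{-f_+(t)}$, $\cV(t)=\kappa_2^2\frac{m_+}{m}\frac{1}{1+p}e^{f_-(t)}$ from Lemma~\ref{lemma: GF Phase II 2-order f dynamics}, the conditions $f_+(t)>0$ and $f_-(t)<0$ become, respectively, $\cU(t)<U^*:=\kappa_2^2\frac{m_+}{m}\frac{p}{1+p}$ and $\cV(t)<V^*:=\kappa_2^2\frac{m_+}{m}\frac{1}{1+p}$. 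Phase~I (Theorem~\ref{thm: restatement of GF Phase I}(S4)) guarantees $f_+(T_{\rm I}),f_-(T_{\rm I})>0$, i.e. $\cU(T_{\rm I})<U^*$ and $\cV(T_{\rm I})>V^*$.

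\textbf{The positive prediction stays positive.} From Lemma~\ref{lemma: GF Phase II U V dynamics}(S2), $\cU(t)>\cV(t)>0$ throughout Phase~II, and thus $\cV(t)\cos\Delta-\cU(t)<0$, which by the $\cU$-dynamics in \eqref{equ: GF Phase II U V dynamics} yields $\frac{\rd \cU}{\rd t}=\cU(\cV\cos\Delta-\cU)<0$. Hence $\cU(t)<\cU(T_{\rm I})<U^*$ for all $t\in[T_{\rm I},T_{\rm II}]$, giving $f_+(t)>0$ throughout Phase~II. So the training accuracy is entirely determined by the sign of $f_-$, equivalently, by the position of $\cV(t)$ relative to $V^*$.

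\textbf{Locating the plateau end.} I will track when $\cV(t)$ crosses $V^*$. On $[T_{\rm I},\tau_1]$, by definition of $\tau_1$ one has $\cU\cos\Delta>(1+\alpha\sin^2\Delta)\cV$, so $\frac{\rd\cV}{\rd t}>0$; combined with $\cV(T_{\rm I})>V^*$, this gives $\cV(t)>V^*$ on this interval. For $t\geq\tau_2$, Lemma~\ref{lemma: GF Phase II U V dynamics}(S6) gives $\cU\cos\Delta-\cV=-\Theta(\Delta^2\cV)<0$, hence $\frac{\rd\cV}{\rd t}=\cV(\cU\cos\Delta-(1+\alpha\sin^2\Delta)\cV)<0$; so $\cV$ is strictly decreasing for $t\geq\tau_2$. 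Note that $\tau_2=\tilde\Theta(p^{1/(1+\cos\Delta)}/\kappa_2^2)\ll p/(\kappa_2^2\Delta^2)$, so this strict decrease holds well before any candidate for $T_{\rm plat}$. Setting $\cV(T_{\rm plat})=V^*=\Theta(\kappa_2^2/p)$ and invoking the sharp estimate $\cV(t)=\Theta\bigl(1/(p^{1/(1+\cos\Delta)}/\kappa_2^2+\Delta^2(t-\tau_1))\bigr)$ from Lemma~\ref{lemma: GF Phase II U V dynamics}(S4) yields
\begin{equation*}
\frac{p^{1/(1+\cos\Delta)}}{\kappa_2^2}+\Delta^2(T_{\rm plat}-\tau_1)=\Theta\!\left(\frac{p}{\kappa_2^2}\right).
\end{equation*}
Since $p^{1/(1+\cos\Delta)}\ll p$, this gives $T_{\rm plat}-\tau_1=\Theta(p/(\kappa_2^2\Delta^2))$, and because $\tau_1=\tilde\Theta(p^{1/(1+\cos\Delta)}/\kappa_2^2)=o(T_{\rm plat}-\tau_1)$, we obtain $T_{\rm plat}=\Theta(p/(\kappa_2^2\Delta^2))$, proving~(S1). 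To derive upper/lower bounds rigorously I will apply the two-sided envelopes from (S4) and solve for the first time the upper envelope drops below $V^*$ (gives $T_{\rm plat}\leq\cdot$) and the first time the lower envelope does so (gives $T_{\rm plat}\geq\cdot$), matching up to constants.

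\textbf{Finishing and verifying placement.} On $[T_{\rm I},T_{\rm plat}]$ we have $\cV(t)\geq V^*$ (with equality only at $T_{\rm plat}$), so $f_-(t)\geq 0$; combined with $f_+(t)>0$ from the second paragraph, the indicator $\mathbb{I}\{y_-f_-(t)>0\}=0$ and $\Acc(t)=p/(1+p)$, establishing~(S2). For $t\in(T_{\rm plat},T_{\rm II}]$, since $T_{\rm plat}>\tau_2$, $\cV$ is strictly decreasing so $\cV(t)<\cV(T_{\rm plat})=V^*$, giving $f_-(t)<0$ and $\Acc(t)=1$, which is~(S3). Finally, $T_{\rm plat}\in[T_{\rm I},T_{\rm II}]$ holds because $T_{\rm I}=\Theta(\sqrt{\kappa_1/\kappa_2})\ll T_{\rm plat}$ and $T_{\rm II}=\Theta(p^{1/(1-\alpha\cos\Delta)}/(\kappa_2^2\Delta^2))\gg T_{\rm plat}$ (the exponent satisfies $1/(1-\alpha\cos\Delta)>1$). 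The main technical obstacle is the third paragraph: bookkeeping with the $\Theta$-envelopes from Lemma~\ref{lemma: GF Phase II U V dynamics}(S4) so that lower and upper bounds on $T_{\rm plat}$ match, and checking that $\cV$ cannot spuriously re-cross $V^*$ in the possibly non-monotone interval $(\tau_1,\tau_2)$ before $T_{\rm plat}$; the latter is handled by observing that the lower envelope of $\cV(t)$ stays above $V^*$ until time of order $p/(\kappa_2^2\Delta^2)$, which dominates $\tau_2$.
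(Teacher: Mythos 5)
Your proposal is correct and follows essentially the same route as the paper's proof: reduce the accuracy condition to the sign of $f_-$ (after checking $\cU(t)$, and hence $e^{-f_+(t)}$, is decreasing so $f_+$ stays positive), pass to the $\cU,\cV$ coordinates of Lemma~\ref{lemma: GF Phase II 2-order f dynamics}, use the monotonicity of $\cV$ on $[T_{\rm I},\tau_1)$ and its strict decrease past $\tau_2$ from Lemma~\ref{lemma: GF Phase II U V dynamics}(S6), locate $T_{\rm plat}$ by solving $\cV(T_{\rm plat})=\kappa_2^2\tfrac{m_+}{m}\tfrac{1}{1+p}$ against the two-sided $\Theta$-envelope in~(S4), and dispose of the possibly non-monotone window $(\tau_1,\tau_2)$ by noting $\cV$ remains of order $\kappa_2^2 p^{-1/(1+\cos\Delta)}\gg\kappa_2^2/p$ there. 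The paper's proof carries out exactly these steps, so your ``main technical obstacle'' is not an obstacle at all — the (S4) envelopes already close it.
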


\begin{proof}[Proof of Lemma \ref{lemma: Plateau Estimate}]\ \\
It is easy to verify
\[
T_{\rm plat}=\inf\Big\{t\in[T_{\rm I},T_{\rm II}]:f_+(t)\leq0\text{ or }f_-(t)>0\Big\}.
\]
From Theorem \ref{thm: restatement of GF Phase I} (S4), we know $f_+(T_{\rm I})>0$ and $f_-(T_{\rm I})>0$.
From Lemma \ref{lemma: GF Phase II 2-order f dynamics}, we have
\begin{gather*}
    \mathcal{U}(t) =\kappa_2^2\frac{m_+}{m}\frac{p}{1+p}e^{-f_+(t)},
    \quad
    \mathcal{V}(t)=\kappa_2^2\frac{m_+}{m}\frac{1}{1+p}e^{f_-(t)}.
\end{gather*}
From the proof of Lemma \ref{lemma: GF Phase II U V dynamics}, we know $\frac{\mathrm{d}}{\mathrm{d}t}\mathcal{U}(t)<0,\ \forall t\in[T_{\rm I},T_{\rm II}]$, so 
\[
\mathcal{U}(t)\leq\mathcal{U}(T_{\rm I}),
\quad f_+(t)\geq f_+(T_{\rm I})>0,\ \forall t\in[T_{\rm I},T_{\rm II}].
\]
Recall the definition of $\tau_1$ and $\tau_2$ in Lemma \ref{lemma: GF Phase II U V dynamics}. From the proof of Lemma \ref{lemma: GF Phase II U V dynamics}, we know $\frac{\mathrm{d}}{\mathrm{d}t}\mathcal{V}(t)>0,\ \forall t\in[T_{\rm I},\tau_1)$, so 
\[
\mathcal{V}(t)\geq\mathcal{V}(T_{\rm I})
\quad,
f_-(t)\geq f_-(T_{\rm I})>0,\ \forall t\in[T_{\rm I},\tau_1].
\]
With the help of Lemma \ref{lemma: GF Phase II U V dynamics} (S4), we know 
\[
\mathcal{V}(t)=\Theta\left(\frac{1}{\frac{p^{\frac{1}{1+\cos\Delta}}}{\kappa_2^2}+\Delta^2(t-\tau_1)}\right),\ \forall t\in(\tau_1,+\infty).
\]
Because $\mathcal{V}(T_{\rm plat})=\kappa_2^2\frac{m_+}{m}\frac{1}{1+p}=\Theta\left(\frac{\kappa_2^2}{p}\right)$, we have
\[
\Theta\left(\frac{\kappa_2^2}{p}\right)=\Theta\left(\frac{1}{\frac{p^{\frac{1}{1+\cos\Delta}}}{\kappa_2^2}+\Delta^2(T_{\rm plat}-\tau_1)}\right),\ \forall t\in(\tau_1,+\infty).
\]
Therefore,
\begin{align*}
    &T_{\rm plat}=\tau_1+\Theta\left(\frac{p}{\kappa_2^2\Delta^2}\left(1-\Theta\left(\frac{1}{p^{\frac{\cos\Delta}{1+\cos\Delta}}}\right)\right)\right)
    =\tau_1+\Theta\left(\frac{p}{\kappa_2^2\Delta^2}\right)
    \\=&\mathcal{O}\left(\frac{p^{\frac{1}{1+\cos\Delta}}\log(1/\Delta)}{\kappa_2^2}\right)+\Theta\left(\frac{p}{\kappa_2^2\Delta^2}\right)=\Theta\left(\frac{p}{\kappa_2^2\Delta^2}\right).
\end{align*}
It is easy to verify $T_{\rm plat}=\Theta\left(\frac{p}{\kappa_2^2\Delta^2}\right)<T_{\rm II}=\Theta\left(\frac{p^{\frac{1}{1-\alpha\cos\Delta}}}{\kappa_2^2\Delta^2}\right)$.

Because $\tau_2-\tau_1=\Theta\left(\frac{p^{\frac{1}{1+\cos\Delta}}\log(1/\Delta)}{\kappa_2^2}\right)$, for any $t\in(\tau_1,\tau_2]$, we have
\begin{align*}
    &\mathcal{V}(t)=\Theta\left(\frac{1}{\frac{p^{\frac{1}{1+\cos\Delta}}}{\kappa_2^2}+\Delta^2(t-\tau_1)}\right)=\Theta\left(\frac{1}{\frac{p^{\frac{1}{1+\cos\Delta}}}{\kappa_2^2}+\Delta^2(\tau_2-\tau_1)}\right)
    \\=&\Theta\left(\frac{\kappa_2^2}{p^{\frac{1}{1+\cos\Delta}}}\right)\gg\Theta\left(\frac{\kappa_2^2}{p}\right)=\mathcal{V}(T_{\rm plat}).
\end{align*}
Thus,
\[
f_-(t)> f_-(T_{\rm plat})=0,\ \forall t\in(\tau_1,\tau_2].
\]
From Lemma \ref{lemma: GF Phase II U V dynamics} (S6), we know that $\mathcal{U}(t)\cos\Delta-\mathcal{V}(t)<0$, $\forall t\in(\tau_2,T_{\rm II}]$. Then $\frac{\mathrm{d}\mathcal{V}(t)}{\mathrm{d}t}<0$, $\forall t\in(\tau_2,T_{\rm II}]$. Thus,
\begin{align*}
    &f_-(t)>f_-(T_{\rm plat})=0,\ \forall t\in(\tau_2,T_{\rm plat});
    \\
    &f_-(t)<f_-(T_{\rm plat})=0,\ \forall t\in(T_{\rm plat},T_{\rm II}].
\end{align*}
Hence, we know
\begin{align*}
    &f_-(t)\geq0,\ \forall t\in[T_{\rm I},T_{\rm plat}];
    \\
    &f_-(t)<0,\ \forall t\in(T_{\rm plat},T_{\rm II}].
\end{align*}
In summary, we have proved (S1)(S2)(S3).

\end{proof}

\begin{lemma}[Prediction at end of Phase II]\label{lemma: prediction at end of Phase II}\ \\
(S1) For the predictions, we have:
\begin{gather*}
e^{-f_+(T_{\rm II})}=\Theta\left(p^{-\frac{1}{1-\alpha\cos\Delta}}\right)
,\quad e^{f_-(T_{\rm II})}=\Theta\left(p^{-\frac{\alpha\cos\Delta}{1-\alpha\cos\Delta}}\right),
\quad\cL(\btheta(T_{\rm II}))=\Theta\left(p^{-\frac{1}{1-\alpha\cos\Delta}}\right);
\\
\frac{pe^{-f_+(T_{\rm II})}}{1+p}-\frac{e^{f_-(T_{\rm II})}}{1+p}\cos\Delta=\Theta\left(\Delta^2p^{-\frac{1}{1-\alpha\cos\Delta}}\right),
\\
\frac{pe^{-f_+(T_{\rm II})}}{1+p}\cos\Delta-\frac{e^{f_-(T_{\rm II})}}{1+p}=-\Theta\left(\Delta^2p^{-\frac{1}{1-\alpha\cos\Delta}}\right).
\end{gather*}
(S2). For any $k\in\cK_+$, we have:
\[
\left<\boldsymbol{b}_k(T_{\rm II}),\boldsymbol{x}_-\right>=\mathcal{O}\left(\frac{\sqrt{\kappa_1\kappa_2}}{\sqrt{m}}\right).
\]
(S3).
For any $k\in\cK_-$, we have
$\left<\boldsymbol{b}_k(T_{\rm II}),\boldsymbol{x}_+\right>=0$.

\end{lemma}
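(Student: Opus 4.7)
The plan is to prove (S1), (S2), (S3) in turn, using almost entirely the machinery already established in Lemma~\ref{lemma: GF Phase II U V dynamics} (the sharp bounds on $\mathcal{U},\mathcal{V}$), Lemma~\ref{lemma: GF Phase II neuron dynamics} (the per-neuron vector fields in Phase~II$^*$), and Lemma~\ref{lemma: GF Phase II hitting time estimate} (the characterization $T_{\rm II}=T_{\rm II}^+=T_{\rm II}^*=\Theta(p^{1/(1-\alpha\cos\Delta)}/(\kappa_2^2\Delta^2))$). The main observation is that $T_{\rm II}-\tau_1=\Theta(p^{1/(1-\alpha\cos\Delta)}/(\kappa_2^2\Delta^2))$ dominates $p^{1/(1+\cos\Delta)}/\kappa_2^2$, so in the $\Theta$-bounds of Lemma~\ref{lemma: GF Phase II U V dynamics}(S4) the $\Delta^2(t-\tau_1)$ term is the governing one at $t=T_{\rm II}$.

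For (S1), I would substitute $t=T_{\rm II}$ into the formula $\mathcal{U}(T_{\rm II}),\mathcal{V}(T_{\rm II})=\Theta\!\bigl(\kappa_2^{2}/p^{1/(1-\alpha\cos\Delta)}\bigr)$ obtained from Lemma~\ref{lemma: GF Phase II U V dynamics}(S4) and the dominance argument above, then invert the defining relations
$\mathcal{U}=\kappa_2^{2}(m_+/m)(p/(1+p))e^{-f_+}$ and $\mathcal{V}=\kappa_2^{2}(m_+/m)(1/(1+p))e^{f_-}$, using $m_+/m=\Theta(1)$ from Theorem~\ref{thm: restatement of GF Phase I} and $p/(1+p)=\Theta(1)$, $1/(1+p)=\Theta(1/p)$. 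This yields
$e^{-f_+(T_{\rm II})}=\Theta(p^{-1/(1-\alpha\cos\Delta)})$ and $e^{f_-(T_{\rm II})}=\Theta(p\cdot p^{-1/(1-\alpha\cos\Delta)})=\Theta(p^{-\alpha\cos\Delta/(1-\alpha\cos\Delta)})$. The loss estimate follows from $\cL=\tfrac{p}{1+p}e^{-f_+}+\tfrac{1}{1+p}e^{f_-}$ since both summands have matching order $p^{-1/(1-\alpha\cos\Delta)}$. Finally, the last two formulas follow directly from Lemma~\ref{lemma: GF Phase II U V dynamics}(S5)(S6): $\mathcal{U}(T_{\rm II})-\mathcal{V}(T_{\rm II})\cos\Delta=\Theta(\Delta^{2}\mathcal{V}(T_{\rm II}))$ and $\mathcal{U}(T_{\rm II})\cos\Delta-\mathcal{V}(T_{\rm II})=-\Theta(\Delta^{2}\mathcal{V}(T_{\rm II}))$, combined with the normalization $\mathcal{U}-\mathcal{V}\cos\Delta=\kappa_2^{2}(m_+/m)\bigl[\tfrac{p}{1+p}e^{-f_+}-\tfrac{e^{f_-}}{1+p}\cos\Delta\bigr]$ and analogously for the other expression; one needs to verify $T_{\rm II}\ge\tau_2$, which holds since $T_{\rm II}-\tau_1=\Theta(p^{1/(1-\alpha\cos\Delta)}/(\kappa_2^{2}\Delta^{2}))\gg\tau_2-\tau_1$.

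For (S2), the key structural fact is that Lemma~\ref{lemma: GF Phase II neuron dynamics}(S1) gives the \emph{same} vector field $\tfrac{\kappa_2}{\sqrt{m}}\bF_+(t)$ for every $k\in\cK_+$ throughout Phase~II$^*$. Hence the differences $\bb_k(t)-\bb_j(t)$ are conserved for $k,j\in\cK_+$, so $\<\bb_k(T_{\rm II})-\bb_j(T_{\rm II}),\bx_-\>=\<\bb_k(T_{\rm I})-\bb_j(T_{\rm I}),\bx_-\>$. Since Lemma~\ref{lemma: GF Phase II hitting time estimate} shows the saturating neuron $k_0\in\cK_+$ satisfies $\<\bb_{k_0}(T_{\rm II}),\bx_-\>=0$, any other $k\in\cK_+$ obeys $\<\bb_k(T_{\rm II}),\bx_-\>=\<\bb_k(T_{\rm I})-\bb_{k_0}(T_{\rm I}),\bx_-\>$, which is $\cO(\rho_k(T_{\rm I})+\rho_{k_0}(T_{\rm I}))=\cO(\sqrt{\kappa_1\kappa_2}/\sqrt{m})$ by Theorem~\ref{thm: restatement of GF Phase I}(S1). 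Part (S3) is immediate: Lemma~\ref{lemma: GF Phase II neuron dynamics}(S2) asserts $\bw_k(t)\in\mathcal{M}_+^0\cap\mathcal{M}_-^+$ for all $k\in\cK_-$ and all $t\in[T_{\rm I},T_{\rm II}^*]$, hence $\<\bb_k(T_{\rm II}),\bx_+\>=0$.

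The main obstacle is bookkeeping in (S1): verifying that $\Delta^{2}(T_{\rm II}-\tau_1)$ really does dominate $p^{1/(1+\cos\Delta)}/\kappa_2^{2}$ so that Lemma~\ref{lemma: GF Phase II U V dynamics}(S4) collapses to a clean $\Theta\bigl(\kappa_2^{2}p^{-1/(1-\alpha\cos\Delta)}\bigr)$, and then correctly propagating the constants through the two inversion formulas to conclude the loss-level estimate. Everything else is direct substitution into lemmas already proved.
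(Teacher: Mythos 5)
Your proposal is correct and follows essentially the same route as the paper's proof: inverting the $\mathcal{U},\mathcal{V}$ bounds from Lemma~\ref{lemma: GF Phase II U V dynamics}(S4)--(S6) at $t=T_{\rm II}$ for (S1), exploiting the common vector field for $k\in\cK_+$ together with the saturating neuron $k_0$ for (S2), and reading (S3) directly off Lemma~\ref{lemma: GF Phase II neuron dynamics}(S2). Your dominance check $T_{\rm II}\geq\tau_2$ and your explicit verification that both summands of $\cL(\btheta(T_{\rm II}))$ have matching order $p^{-1/(1-\alpha\cos\Delta)}$ are correctly argued (the latter the paper leaves implicit), and your inversion $e^{-f_+}=\Theta(\mathcal{U}/\kappa_2^2)$ is in fact cleaner than the paper's displayed intermediate step.
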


\begin{proof}[Proof of Lemma \ref{lemma: prediction at end of Phase II}]\ \\
\underline{Proof of (S1).} Recall the definitions in Lemma \ref{lemma: GF Phase II 2-order f dynamics}:
\[
\begin{cases}
    \mathcal{U}(t)=\kappa_2^2\frac{m_+}{m}\frac{p}{1+p}e^{-f_+(t)},
    \\
    \mathcal{V}(t)=\kappa_2^2\frac{m_+}{m}\frac{1}{1+p}e^{f_-(t)}.
\end{cases}
\]
From Lemma \ref{lemma: GF Phase II U V dynamics} (S4) and Lemma \ref{lemma: GF Phase II hitting time estimate}, we have
 \begin{gather*}
     \mathcal{U}(t)=\Theta\left(\frac{1}{\frac{p^{\frac{1}{1+\cos\Delta}}}{\kappa_2^2}+\Delta^2(t-\tau_1)}\right),\quad
    \mathcal{V}(t)=\Theta\left(\frac{1}{\frac{p^{\frac{1}{1+\cos\Delta}}}{\kappa_2^2}+\Delta^2(t-\tau_1)}\right),
    \\
    \tau_1=\mathcal{O}\left(\frac{p^{\frac{1}{1+\cos\Delta}}\log(1/\Delta)}{\kappa_2^2}\right)
    ,\quad T_{\rm II}=\Theta\left(\frac{p^{\frac{1}{1-\alpha\cos\Delta}}}{\kappa_2^2\Delta^2}\right).
 \end{gather*}
Therefore, we obtain the estimate:
\begin{gather*}
    e^{-f_+(T_{\rm II})}=\Theta\left(\frac{1}{\kappa_2^2}\frac{1}{\mathcal{U}(T_{\rm II})}\right)
    =\Theta\left(\frac{1}{\kappa_2^2}\frac{1}{\frac{p^{\frac{1}{1+\cos\Delta}}}{\kappa_2^2}+\Delta^2\frac{p^{\frac{1}{1-\alpha\cos\Delta}}}{\kappa_2^2\Delta^2}}\right)=\Theta\left(p^{-\frac{1}{1-\alpha\cos\Delta}}\right),
    \\
    e^{f_-(T_{\rm II})}=\Theta\left(\frac{p}{\kappa_2^2}\frac{1}{\mathcal{V}(T_{\rm II})}\right)
    =\Theta\left(\frac{p}{\kappa_2^2}\frac{1}{\frac{p^{\frac{1}{1+\cos\Delta}}}{\kappa_2^2}+\Delta^2\frac{p^{\frac{1}{1-\alpha\cos\Delta}}}{\kappa_2^2\Delta^2}}\right)=\Theta\left(p^{-\frac{\alpha\cos\Delta}{1-\alpha\cos\Delta}}\right).
\end{gather*}
Moreover, Lemma \ref{lemma: GF Phase II U V dynamics} (S4)(S5)(S6) give us
\begin{gather*}
    \mathcal{U}(T_{\rm II})-\mathcal{V}(T_{\rm II})\cos\Delta
    =\Theta\left(\Delta^2\cV(T_{\rm II})\right)=\Theta\left(\kappa_2^2\Delta^2p^{-\frac{1}{1-\alpha\cos\Delta}}\right),
    \\
    \mathcal{U}(T_{\rm II})\cos\Delta-\mathcal{V}(T_{\rm II})=-\Theta\left(\Delta^2\cV(T_{\rm II})\right)=-\Theta\left(\kappa_2^2\Delta^2p^{-\frac{1}{1-\alpha\cos\Delta}}\right).
\end{gather*}
Hence, 
\begin{gather*}
    \frac{pe^{-f_+(T_{\rm II})}}{1+p}-\frac{e^{f_-(T_{\rm II})}}{1+p}\cos\Delta=\frac{1}{\kappa_2^2}\Big(\mathcal{U}(T_{\rm II})-\mathcal{V}(T_{\rm II})\cos\Delta\Big)=\Theta\left(\Delta^2p^{-\frac{1}{1-\alpha\cos\Delta}}\right),
    \\
    \frac{pe^{-f_+(T_{\rm II})}}{1+p}\cos\Delta-\frac{e^{f_-(T_{\rm II})}}{1+p}=\frac{1}{\kappa_2^2}\Big(\mathcal{U}(T_{\rm II})\cos\Delta-\mathcal{V}(T_{\rm II})\Big)=-\Theta\left(\Delta^2p^{-\frac{1}{1-\alpha\cos\Delta}}\right).
\end{gather*}
\underline{Proof of (S2).}
Denote $\cK_+^0:=\Big\{k\in\cK_+:\left<\boldsymbol{w}_k(T_{\rm II}),\boldsymbol{x}_-\right>=0\Big\}$. From the definition of $T_{\rm II}$ and the proof in Phase II, we know that $\left<\boldsymbol{w}_k(T_{\rm II}),\boldsymbol{x}_-\right>>0$ holds for any $k\in\cK_+-\cK_+^0$.

From the proof in Lemma \ref{lemma: GF Phase II hitting time transformation}, for any $k\in\cK_+$, it holds
\begin{align*}
    &\left<\boldsymbol{b}_k(t),\boldsymbol{x}_-\right>
    =\left<\boldsymbol{b}_k(T_{\rm I}),\boldsymbol{x}_-\right>+\int_{T_{\rm I}}^t\left<\frac{\mathrm{d}\boldsymbol{b}_k(s)}{\mathrm{d}s},\boldsymbol{x}_-\right>\mathrm{d}s
    \\=&\left<\boldsymbol{b}_k(T_{\rm I}),\boldsymbol{x}_-\right>+\int_{T_{\rm I}}^t\frac{\sqrt{m}}{\kappa_2m_+}\Big(\mathcal{U}(s)\cos\Delta-\mathcal{V}(s)\Big)\mathrm{d}s,\ \forall t\in[T_{\rm I},T_{\rm II}].
\end{align*}
Thus for any $k\in\cK_+-\cK_+^0$, we have
\begin{align*}
    \left<\boldsymbol{b}_k(T_{\rm II}),\boldsymbol{x}_-\right>-\left<\boldsymbol{b}_{k}(T_{\rm I}),\boldsymbol{x}_-\right>=\left<\boldsymbol{b}_{k_0}(T_{\rm II}),\boldsymbol{x}_-\right>-\left<\boldsymbol{b}_{k_0}(T_{\rm I}),\boldsymbol{x}_-\right>=-\left<\boldsymbol{b}_{k_0}(T_{\rm I}),\boldsymbol{x}_-\right>,
\end{align*}
so 
\[
\left<\boldsymbol{b}_k(T_{\rm II}),\boldsymbol{x}_-\right>=\left<\boldsymbol{b}_{k}(T_{\rm I}),\boldsymbol{x}_-\right>-\left<\boldsymbol{b}_{k_0}(T_{\rm I}),\boldsymbol{x}_-\right>.
\]
From the proof of Lemma \ref{lemma: GF Phase II hitting time estimate}, we know
\begin{align*}
    &\left<\boldsymbol{b}_k(T_{\rm I}),\boldsymbol{x}_-\right>\leq\frac{12\sqrt{\kappa_1\kappa_2}}{\sqrt{m}}\left(\frac{p\cos\Delta-1}{\sqrt{p^2+1-2p\cos\Delta}}+2\sqrt{\epsilon}\sin\Delta\right),
    \\&\left<\boldsymbol{b}_k(T_{\rm I}),\boldsymbol{x}_-\right>\geq\frac{4.66\sqrt{\kappa_1\kappa_2}}{\sqrt{m}}\left(\frac{p\cos\Delta-1}{\sqrt{p^2+1-2p\cos\Delta}}-2\sqrt{\epsilon}\sin\Delta-\epsilon\right),
\end{align*}
where $\epsilon=4.2\sqrt{\kappa_1\kappa_2}+\frac{2}{1+0.7\left(1+9.9\sqrt{\frac{\kappa_2}{\kappa_1}}\right)^{1.15}}$. This means
\[
\left<\boldsymbol{b}_k(T_{\rm I}),\boldsymbol{x}_-\right>=\Theta\left(\frac{\sqrt{\kappa_1\kappa_2}}{\sqrt{m}}\right).
\]
Hence, for any $k\in\cK_+-\cK_+^0$,
\begin{align*}
    0<\left<\boldsymbol{b}_k(T_{\rm II}),\boldsymbol{x}_-\right>=&\left|\left<\boldsymbol{b}_{k}(T_{\rm I}),\boldsymbol{x}_-\right>-\left<\boldsymbol{b}_{k_0}(T_{\rm I}),\boldsymbol{x}_-\right>\right|
    \leq\left|\left<\boldsymbol{b}_{k}(T_{\rm I}),\boldsymbol{x}_-\right>\right|+\left|\left<\boldsymbol{b}_{k_0}(T_{\rm I}),\boldsymbol{x}_-\right>\right|
    \\=&\Theta\left(\frac{\sqrt{\kappa_1\kappa_2}}{\sqrt{m}}\right)+\Theta\left(\frac{\sqrt{\kappa_1\kappa_2}}{\sqrt{m}}\right)=\Theta\left(\frac{\sqrt{\kappa_1\kappa_2}}{\sqrt{m}}\right).
\end{align*}
\underline{Proof of (S3).} Due to the dynamics of the neuron $k\in\cK_-$ in Phase II, this conclusion is clear.

\end{proof}

As simple corollaries of these lemmas, we can prove two theorems in Phase II.

\begin{proof}[Proof of Theorem~\ref{thm: GF Phase II} and~\ref{thm: Plateau Estimate}]\ \\
Theorem~\ref{thm: Plateau Estimate} is Lemma~\ref{lemma: Plateau Estimate}. Theorem~\ref{thm: GF Phase II} (S1) has been proven in Lemma~\ref{lemma: GF Phase II hitting time estimate}; Theorem~\ref{thm: GF Phase II} (S2) has been proven in Lemma~\ref{lemma: prediction at end of Phase II}.
Additionally, combining (i) $T_{\rm II}=T_{\rm II}^+$ in Lemma~\ref{lemma: GF Phase II hitting time estimate}, (ii) the transformation in Lemma~\ref{lemma: GF Phase II hitting time transformation}, and (iii) the definition of $T_{\rm II}$, we obtain Theorem~\ref{thm: GF Phase II} (S3).
 
\end{proof}

\newpage
\section{Proofs of Optimization Dynamics in Phase III}\label{appendix: proof: Phase III}

\subsection{Optimization Dynamics during Phase Transition}

Building upon Phase II, we will demonstrate that within a short time, all the living positive neurons $\cK_+$ change their activation patterns, corresponding to a ``phase transition''. 
After the phase transition, \textit{all} the living positive neurons $k\in\cK_+$ undergo deactivation for $\bx_-$, i.e., $\sgn_k^-(t)$ changes from 1 to 0, while other activation patterns remain unchanged.

Specifically, we define the hitting time

\begin{equation}\label{equ: GF Phase III hitting time}
\begin{aligned}
    T_{\rm II}^{\rm PT}:=&\inf\{t>T_{\rm II}:\forall k\in\cK_+,\sgn_k^-(t)=0\}
    \\=&\inf\big\{t>T_{\rm II}:\forall k\in\cK_+,\<\bw_k(t),\bx_-\>=0\big\},
\end{aligned}
\end{equation}

and we call $t\in(T_{\rm II},T_{\rm II}^{\rm PT}]$ ``Phase Transition'' from Phase II to Phase III.

Notice that the dynamics during phase transition is highly nonlinear with $|\cK_+|=\Theta(m)$ changes on activation partitions. 
Fortunately, we can keep the neurons of $\cK_+$ and $\cK_-$ close enough respectively in Phase I by using sufficiently small initialization $\kappa_1$. Moreover, their differences do not enlarge in Phase II. As a result, the phase transition can be completed quickly without significant changes in the vector field.

% We define the hitting time:
% \begin{equation}\label{equ: GF Phase III hitting time}
% \begin{aligned}
% T_{\rm II}^{{\rm PT}}:=\inf\Big\{t>T_{\rm II}:&\left<\boldsymbol{w}_k(t),\boldsymbol{x}_+\right>>0\text{ and }\left<\boldsymbol{w}_k(t),\boldsymbol{x}_-\right>=0,\ \forall k\in\mathcal{K}_+(T_{\rm I})\Big\},
% \end{aligned}
% \end{equation}

In order to analyze the dynamics of neurons and vector fields, we introduce the auxiliary hitting time:
\begin{equation}\label{equ: GF Phase III auxiliary hitting time}
\begin{aligned}
T_{\rm II}^{{\rm PT}*}:=T_{\rm II}^{{\rm PT}}\land
\inf\Big\{t>T_{\rm II}:&\left<\boldsymbol{F}_+(t),\boldsymbol{x}_+\right>\leq0\text{ or }\left<\boldsymbol{F}_+(t),\boldsymbol{x}_-\right>\geq0\Big\};
\\
\text{where}\quad
\boldsymbol{F}_+(t)=&\frac{p}{1+p}e^{-f_+(t)}\boldsymbol{x}_{+}-\frac{1}{1+p}e^{f_-(t)}\boldsymbol{x}_-.
\end{aligned}
\end{equation}
We call $T_{\rm II}\leq t\leq T_{\rm II}^{{\rm PT}*}$ ``Phase Transition*''.

\begin{lemma}[Dynamics of living neurons during Phase Transition*]\label{lemma: GF Phase III neuron dynamics}\ \\
In Phase Transition*, i.e., $t\in[T_{\rm II}, T_{\rm II}^{{\rm PT}*}]$, we have the following dynamics for each neuron $k\in\mathcal{K}_-\cup\mathcal{K}_+$.

(S1). For living negative neuron $k\in\mathcal{K}_-$, we have:
\begin{align*}
    &\boldsymbol{w}_k(t)\in\mathcal{M}_+^0\cap\mathcal{M}_-^+,
    \\
    &\frac{\mathrm{d} \boldsymbol{b}_k(t)}{\mathrm{d}t}=\frac{\kappa_2e^{f_-(t)}}{\sqrt{m}(1+p)}\Big(\boldsymbol{x}_--\boldsymbol{x}_+\cos\Delta\Big).
\end{align*}
(S2) For living positive neuron $k\in\mathcal{K}_+$, we define the hitting time:
\[
T_{{\rm II},k}^{{\rm PT}*}:=\inf\big\{t>T_{\rm II}:\left<\boldsymbol{w}_k(t),\boldsymbol{x}_-\right>=0\big\}\land
\inf\big\{t>T_{\rm II}:\left<\boldsymbol{F}_+(t),\boldsymbol{x}_+\right>\leq0\text{ or }\left<\boldsymbol{F}_+(t),\boldsymbol{x}_-\right>\geq0\big\}.
\]
Then it holds that:
\begin{align*}
&\text{(P0) } T_{\rm II}^{{\rm PT}*}=\max_{k\in\mathcal{K}_+}T_{{\rm II},k}^{{\rm PT}*};
\\
&\text{(P1) For any }t\in[T_{\rm II},T_{{\rm II},k}^{{\rm PT}*}),\text{ we have}
\\&\quad\quad\quad\quad\quad
\frac{\mathrm{d} \boldsymbol{b}_k(t)}{\mathrm{d}t}=\frac{\kappa_2}{\sqrt{m}}\boldsymbol{F}_+(t);
\\
&\text{(P2) 
If $T_{{\rm II},k}^{{\rm PT}*}<T_{\rm II}^{{\rm PT}*}$ strictly, then for any }t\in[T_{{\rm II},k}^{{\rm PT}*},T_{\rm II}^{{\rm PT}*}],\text{ we have }\boldsymbol{w}_k(t)\in\mathcal{M}_+^+\cap\mathcal{M}_-^0\text{ and }
\\
&\quad\quad\quad\quad\quad\quad\quad\quad\quad\quad\quad
\frac{\mathrm{d}\boldsymbol{b}_k(t)}{\mathrm{d}t}=\frac{\kappa_2pe^{-f_+(t)}}{\sqrt{m}(1+p)}\Big(\boldsymbol{x}_+-\boldsymbol{x}_-\cos\Delta\Big).
\\
&\text{(P3) Regardless of the relationship between $T_{{\rm II},k}^{{\rm PT}*}$ and $T_{\rm II}^{{\rm PT}*}$, for any $t\in[T_{\rm II},T_{{\rm II}}^{{\rm PT}*}]$, we have}
\\
&\quad\quad\quad\quad\quad\quad\quad\quad\quad
\<\bw_k(t),\bx_+\>>0,\quad\<\bw_k(t),\bx_-\>\geq0.
\end{align*}

\end{lemma}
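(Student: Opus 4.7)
The plan is to handle the living negative neurons first (S1), which mirrors the analysis already carried out in Lemma~\ref{lemma: GF Phase II neuron dynamics}(S2). At $T_{\rm II}$ each $k\in\cK_-$ still sits on $\cP_+^0\cap\cP_-^+$, and I apply Definition~\ref{def: discontinuous system solution} on the discontinuity surface $\cP_+^0$. The two Filippov limits are the pull $\bF^-\propto \bx_-$ from the side $\<\bb,\bx_+\><0$ and the full field $\bF^+\propto \bF_+(t)$ from the side $\<\bb,\bx_+\>>0$. The condition $\<\bF_+(t),\bx_+\>>0$ built directly into the definition of $T_{\rm II}^{{\rm PT}*}$ forces $F_N^+<0<F_N^-$, so Case~I of Definition~\ref{def: discontinuous system solution} applies and the sliding convex combination simplifies algebraically to $\frac{\kappa_2 e^{f_-(t)}}{\sqrt{m}(1+p)}(\bx_--\bx_+\cos\Delta)$. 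A one-line check that this sliding field has strictly positive projection onto $\bx_-$ then keeps $\<\bw_k(t),\bx_-\>>0$ throughout, which is all that is needed.

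For the positive neurons (S2), the plan is to split the individual trajectory at its own hitting time $T_{{\rm II},k}^{{\rm PT}*}$. On $[T_{\rm II},T_{{\rm II},k}^{{\rm PT}*})$ the neuron lies in the open set $\cP_+^+\cap\cP_-^+$ where $\cL$ is smooth, so (P1) is simply the classical gradient-flow formula $\frac{\rd\bb_k}{\rd t}=\frac{\kappa_2}{\sqrt{m}}\bF_+(t)$. The two inequalities in (P3) will be verified separately: $\<\bw_k(t),\bx_+\>>0$ follows because $\<\bF_+(s),\bx_+\>>0$ throughout Phase Transition$^*$, so $\<\bb_k(t),\bx_+\>$ is strictly increasing from its end-of-Phase-II value (which is already positive by Lemma~\ref{lemma: prediction at end of Phase II}); and $\<\bw_k(t),\bx_-\>\geq 0$ holds by the very definition of $T_{{\rm II},k}^{{\rm PT}*}$ before that time, and by (P2) after it. Finally, (P0) reduces to the tautology that $T_{\rm II}^{{\rm PT}*}$ is the first time every $k\in\cK_+$ has entered $\cM_-^0$, so it equals $\max_k T_{{\rm II},k}^{{\rm PT}*}$ provided no $k$ leaves $\cM_-^0$ prematurely, which (P2) will rule out.

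The main step is (P2). At $t=T_{{\rm II},k}^{{\rm PT}*}$ the neuron hits the discontinuity surface $\cP_+^+\cap\cP_-^0$. I will set up Definition~\ref{def: discontinuous system solution} with $\cG_+=\{\<\bb,\bx_-\>>0\}$ and $\cG_-=\{\<\bb,\bx_-\><0\}$. From $\cG_+$ one gets $\bF^+=\frac{\kappa_2}{\sqrt{m}}\bF_+(t)$; from $\cG_-$ the negative-data ReLU switches off, leaving $\bF^-=\frac{\kappa_2 p e^{-f_+(t)}}{\sqrt{m}(1+p)}\bx_+$. Their $\bx_-$-projections read $F_N^+=\frac{\kappa_2}{\sqrt{m}}\<\bF_+(t),\bx_-\><0$ (by the defining inequality of $T_{\rm II}^{{\rm PT}*}$) and $F_N^-=\frac{\kappa_2 p e^{-f_+(t)}\cos\Delta}{\sqrt{m}(1+p)}>0$, so we are again in Case~I. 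The Filippov sliding field $\alpha\bF^++(1-\alpha)\bF^-$ with $\alpha=F_N^-/(F_N^--F_N^+)$ collapses after a short computation to $\frac{\kappa_2 p e^{-f_+(t)}}{\sqrt{m}(1+p)}(\bx_+-\bx_-\cos\Delta)$, which is the stated dynamics and in particular has strictly positive projection onto $\bx_+$, so $\bw_k(t)$ stays on $\cM_+^+\cap\cM_-^0$ for the rest of $[T_{{\rm II},k}^{{\rm PT}*},T_{\rm II}^{{\rm PT}*}]$.

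The only delicate point is ensuring that the sign hypotheses on $\<\bF_+(t),\bx_\pm\>$ needed to invoke Case~I at every individual crossing time $T_{{\rm II},k}^{{\rm PT}*}$ hold uniformly on the entire interval $[T_{\rm II},T_{\rm II}^{{\rm PT}*}]$. Fortunately this is encoded directly in the definition~\eqref{equ: GF Phase III auxiliary hitting time} of $T_{\rm II}^{{\rm PT}*}$, so it enters the argument for free at this stage; the separate quantitative question of how long the phase transition actually lasts, i.e.\ the bound $T_{\rm II}^{\rm PT}=(1+\cO(\sqrt{\kappa_1\kappa_2^3}))T_{\rm II}$ claimed in Theorem~\ref{thm: GF Phase transition II-to-III}(S1), is deferred to a subsequent lemma in which the closeness of the positive neurons to a common direction (from the sharp condensation in Phase~I) is combined with the prior estimates of Lemma~\ref{lemma: prediction at end of Phase II} to control the elapsed time.
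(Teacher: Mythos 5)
Your proposal follows essentially the same route as the paper: Filippov/Case-I sliding on $\cP_+^0$ (for $\cK_-$) and $\cP_-^0$ (for $\cK_+$), with the sign conditions needed to invoke Case~I read directly off the definition of $T_{\rm II}^{\rm PT*}$, the convex-combination computation collapsing to the stated sliding fields, and (P3) obtained from positivity of both the smooth and sliding vector fields projected on $\bx_+$. Two small remarks: first, your positivity of $\<\bb_k(T_{\rm II}),\bx_+\>$ for $k\in\cK_+$ is not actually in Lemma~\ref{lemma: prediction at end of Phase II} (which controls the $\bx_-$-projection); it follows instead from the Phase-II invariance $\bw_k(t)\in\cM_+^+\cap\cM_-^+$ established in Lemma~\ref{lemma: GF Phase II neuron dynamics}(S1) together with the monotonicity computed in Lemma~\ref{lemma: GF Phase II hitting time transformation}. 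Second, your spelling-out of why (P0) needs (P2) to rule out neurons leaving $\cM_-^0$ prematurely is a welcome clarification of a step the paper dismisses as obvious.
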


\begin{proof}[Proof of Lemma \ref{lemma: GF Phase III neuron dynamics}]\ \\
\underline{Proof of (S1).} Recalling the definition of $T_{\rm II}^{{\rm PT}*}$, it holds $\left<\boldsymbol{F}_+(t),\boldsymbol{x}_+\right>>0$ for any $T_{\rm II}\leq t\leq T_{\rm II}^{{\rm PT}*}$. 
So (S1) can be proved in the same way as employed in the proof of Lemma \ref{lemma: GF Phase II neuron dynamics} (S2) and is omitted.

\underline{Proof of (S2)(P0) and (S2)(P1).} 
(S2)(P0) is obvious.
Moreover, for any $k\in\mathcal{K}_+$ and $t\in[T_{\rm II},T_{{\rm II},k}^{{\rm PT}*})$, we have $\left<\boldsymbol{w}_k(t),\boldsymbol{x}_+\right>>0$ and $\left<\boldsymbol{w}_k(t),\boldsymbol{x}_-\right>>0$ for any $T_{\rm I}\leq t\leq T_{\rm II}^*$, so (S2)(P1) can be proved in the same method as shown in the proof of Lemma \ref{lemma: GF Phase II neuron dynamics} (S1), and we have the dynamics:
\begin{align*}
    \frac{\mathrm{d} \boldsymbol{b}_k(t)}{\mathrm{d}t}=\frac{\kappa_2}{\sqrt{m}}\boldsymbol{F}_+(t),\quad t\in[T_{\rm II},T_{{\rm II},k}^{{\rm PT}*}).
\end{align*}
Additionally, recalling the definition of $T_{{\rm II},k}^{{\rm PT}*}$, we know $\<\bF_+(t),\bx_+\>>0$ holds for any $t\in[T_{\rm II},T_{{\rm II},k}^{{\rm PT}*})$.
Combining the dynamics of $\bb_k(t)$, we further have:
\begin{align*}
    \<\bb_k(t),\bx_+\>=&\<\bb_k(T_{\rm II}),\bx_+\>+\frac{\kappa_2}{\sqrt{m}}\int_{T_{\rm II}}^t\<\bF_+(s),\bx_+\>\rd s
    \\>&\<\bb_k(T_{\rm II}),\bx_+\> >0,\quad\forall t\in[T_{\rm II},T_{{\rm II},k}^{{\rm PT}*}].
\end{align*}

\underline{Proof of (S2)(P2).}
% Suppose $T_{\rm II}^{{\rm PT}}<+\infty$. 
Let $k\in\mathcal{K}_+$. If $T_{{\rm II},k}^{{\rm PT}*}<T_{\rm II}^{{\rm PT}*}$, we have the following results:

\underline{Step I. $\boldsymbol{w}_k(T_{{\rm II},k}^{{\rm PT}*})\in\mathcal{M}_+^+\cap\mathcal{M}_-^0$.} 

Recalling the definition of $T_{{\rm II},k}^{{\rm PT}*}$ and $T_{\rm II}^{{\rm PT}*}$, $T_{{\rm II},k}^{{\rm PT}*}<T_{\rm II}^{{\rm PT}*}$ implies that $\<\boldsymbol{w}_k(T_{{\rm II},k}^{{\rm PT}*}),\bx_-\>=0$.

Then recalling our proof of (S2)(P1), we obtain $\<\bb_k(T_{{\rm II},k}^{{\rm PT}*}),\bx_+\>>0$.

Hence, we obtain $\boldsymbol{w}(T_{{\rm II},k}^{{\rm PT}*})\in\mathcal{M}_+^+\cap\mathcal{M}_-^0$.

\underline{Step II. Dynamics after $t=T_{{\rm II},k}^{{\rm PT}*}$.}

In this step, we will analyze the training dynamics after $\boldsymbol{w}_k(T_{{\rm II},k}^{{\rm PT}*})\in\mathcal{M}_+^+\cap\mathcal{M}_{-}^0$, i.e. $\boldsymbol{b}_k(T_{{\rm II},k}^{{\rm PT}*})\in\mathcal{P}_+^+\cap\mathcal{P}_{-}^0$.
We first analysis the vector field around the manifold $\mathcal{P}_+^+\cap\mathcal{P}_{-}^0$.
For any $\tilde{\bb}\in\cP_+^+\cap\cP_{-}^0$ and $0<\delta_0\ll1$, we know that $\cP_+^0\cap\cP_{-}^+$ separates its neighborhood $\mathcal{B}(\tilde{\bb },\delta_0)$ into two domains $\mathcal{G}_-=\{\bb \in\mathcal{B}(\tilde{\bb},\delta_0):\left<\bb ,\bx_-\right><0\}$ and $\mathcal{G}_+=\{\bb\in\mathcal{B}(\tilde{\bb},\delta_0):\left<\bb,\bx_-\right>>0\}$. Following Definition \ref{def: discontinuous system solution}, we calculate the limited vector field on $\tilde{\bb}$ from $\mathcal{G}_-$ and $\mathcal{G}_+$.

(i) The limited vector field $\bF^-$ on $\tilde{\bb}$ (from $\mathcal{G}_-$):
\begin{gather*}
\frac{\mathrm{d}\bb }{\mathrm{d} t}=\bF^-,\text{ where }
\bF^-=\frac{\kappa_2}{\sqrt{m}}\frac{p}{1+p}e^{-f_+(t)}\bx_+.
\end{gather*}

(ii) The limited vector field $\bF^+$ on $\tilde{\bb}$ (from $\mathcal{G}_+$):
\begin{gather*}
\frac{\mathrm{d}\bb}{\mathrm{d} t}=\bF^+,\text{ where }
\bF^+=\frac{\kappa_2}{\sqrt{m}}\bracket{\frac{p e^{-f_+(t)}}{1+p}\bx_+-\frac{e^{f_-(t)}}{1+p}\bx_-}.
\end{gather*}

(iii) Then we calculate the
projections of $\bF^-$ and $\bF^+$ onto $\bx_-$ (the normal to the surface $\cP_+^+\cap\cP_-^0$):
\begin{gather*}
F_N^{-}=\left<\bF^-,\bx_-\right>
=\frac{\kappa_2 p e^{-f_+(t)}}{\sqrt{m}(1+p)}\cos\Delta,
\\
F_N^{+}=\left<\bF^+,\bx_-\right>=\frac{\kappa_2 e^{f_-(t)}}{\sqrt{m}(1+p)}\cos\Delta-\frac{\kappa_2 p e^{-f_+(t)}}{\sqrt{m}(1+p)}.
\end{gather*}

We further define the hitting time to check whether $\boldsymbol{w}_k(t)\in\cM_+^+\cap\cM_-^0$ for $T_{{\rm II},k}^{{\rm PT}*}\leq t\leq T_{\rm II}^{{\rm PT}*}$. 
\begin{align*}
    \tau_{+,k}^+:=\inf\big\{t\in[T_{{\rm II},k}^{{\rm PT}*},T_{\rm II}^{{\rm PT}*}]:
    \<\bw_k(t),\bx_+\>\leq0\big\}.
\end{align*}

From the definition of $T_{\rm II}^{{\rm PT}*}$, we know that $\left<\boldsymbol{F}_+(t),\boldsymbol{x}_-\right>=\frac{p}{1+p}e^{-f_+(t)}\cos\Delta-\frac{1}{1+p}e^{f_-(t)}<0$ for any $t\in[T_{\rm II},T_{\rm II}^{{\rm PT}*}]$, which means $F_N^{+}<0$. 
And it is clear that $F_N^{-}>0$. 
Hence, the dynamics corresponds to Case (I) in Definition \ref{def: discontinuous system solution} ($F_N^{-}>0$ and $F_N^{+}<0$),
which means that $\boldsymbol{b}_k(t)$ can not leave $\mathcal{P}_-^0$ for any $t\in[T_{{\rm II},k}^{{\rm PT}*},\tau_{+,k}^+]$, and the dynamics of $\boldsymbol{b}_k$ for $t\in[T_{{\rm II},k}^{{\rm PT}*},\tau_{+,k}^+]$ satisfies:
\begin{align*}
    \frac{\mathrm{d}\bb}{\mathrm{d}t}=\alpha\bF^++(1-\alpha)\bF^-,\quad \alpha=\frac{{f}_N^-}{{f}_N^--{f}_N^+},
\end{align*}
which is
\begin{align*}
    \frac{\mathrm{d}\boldsymbol{b}_k(t)}{\mathrm{d}t}=
    \frac{\kappa_2 pe^{-f_+(t)}}{\sqrt{m}(1+p)}\Big(\boldsymbol{x}_+-\boldsymbol{x}_-\cos\Delta\Big),t\in[T_{{\rm II},k}^{{\rm PT}*},\tau_{+,k}^+].
\end{align*}

By Lemma~\ref{lemma: dynamics decomposition}, we know that the dynamics of $\bw_k(t)$ on $\cM_+^+\cap\cM_-^0$ and the dynamics of $\rho_k(t)$ are:
\begin{equation*}
   \frac{\mathrm{d}\boldsymbol{w}_k(t)}{\mathrm{d}t}=\frac{\kappa_2 pe^{-f_+(t)}}{\rho_k(t)\sqrt{m}(1+p)}\Big(\boldsymbol{x}_+-\left<\boldsymbol{w}_k,\boldsymbol{x}_+\right>\boldsymbol{w}_k-\boldsymbol{x}_-\cos\Delta\Big).
\end{equation*}
\begin{equation*}
    \frac{\mathrm{d}\rho_k(t)}{\mathrm{d}t}=\frac{\kappa_2 pe^{-f_+(t)}}{\sqrt{m}(1+p)}\left<\boldsymbol{w}_k(t),\boldsymbol{x_+}\right>.
\end{equation*}

Moreover, The dynamics above also ensures that:
\begin{align*}
    \<\bb_k(t),\bx_+\>=&\<\bb_k(T_{{\rm II},k}^{{\rm PT}*}),\bx_+\>+\int_{T_{{\rm II},k}^{{\rm PT}*}}^t\frac{\kappa_2 pe^{-f_+(s)}}{\sqrt{m}(1+p)}\sin^2\Delta\rd s
    \\>&\<\bb_k(T_{{\rm II},k}^{{\rm PT}*}),\bx_+\>>0,\quad \forall t\in[T_{{\rm II},k}^{{\rm PT}*},\tau_{+,k}^+].
\end{align*}
which means $\tau_{+,k}^+=T_{\rm II}^{{\rm PT}*}$. Hence, we have proved (S2)(P2).

\underline{Proof of (S2)(P3).}
Our proof for (S2)(P1) and (S2)(P2) imply this result directly.

\end{proof}

\begin{lemma}[Evolution of the prediction in Phase III*]\label{lemma: Phase III Evolution of the prediction}
\ \\
For any $t\in[T_{\rm II},T_{\rm II}^{{\rm PT}*}]$, we have
\begin{align*}
\frac{e^{-C_1}}{1+C_0e^{f_-(T_{\rm II})}(t-T_{\rm II})}\leq&e^{f_-(t)-f_-(T_{\rm II})}\leq \frac{1}{1+C_0e^{(f_-(T_{\rm II})-C_1)}(t-T_{\rm II})},
\\
\exp\left(-C_2(t-T_{\rm II})\right)\leq&e^{f_+(T_{\rm II})-f_+(t)}\leq1,
\end{align*}
where
\begin{align*}
C_0=\Theta\left(\frac{\kappa_2^2\Delta^2}{p}\right),
\ C_1=\mathcal{O}\left(\kappa_2\sqrt{\kappa_1\kappa_2}\right),
\ e^{f_-(T_{\rm II})}=\Theta\left(p^{-\frac{\alpha\cos\Delta}{1-\alpha\cos\Delta}}\right),
\ C_2=\Theta\left(\kappa_2^2p^{-\frac{1}{1-\alpha\cos\Delta}}\right).
\end{align*}

\end{lemma}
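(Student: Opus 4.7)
My plan is to translate the single-neuron dynamics of Lemma~\ref{lemma: GF Phase III neuron dynamics} into closed first-order ODEs for the predictions $f_+(t)$ and $f_-(t)$, and then analyze them as scalar ODEs by exploiting the fact that Phase Transition* is brief enough that $e^{-f_+(t)}$ stays within an $e^{\pm C_1}$ band of its value at $T_{\rm II}$.

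Step 1 (first-order ODEs). Let $\tilde m_+(t):=|\{k\in\cK_+:t<T_{{\rm II},k}^{{\rm PT}*}\}|$. Summing the contributions from the three categories of living neurons (negative neurons on $\cM_+^0\cap\cM_-^+$, positive neurons still in $\cM_+^+\cap\cM_-^+$, positive neurons already on $\cM_+^+\cap\cM_-^0$), together with $a_k=\pm\kappa_2/\sqrt{m}$ and $\sigma'$ values on each manifold, gives
\begin{align*}
\frac{\rd f_+}{\rd t} &= \frac{\kappa_2^2\tilde m_+(t)}{m}\,\frac{pe^{-f_+(t)}-e^{f_-(t)}\cos\Delta}{1+p}+\frac{\kappa_2^2(m_+-\tilde m_+(t))}{m}\,\frac{pe^{-f_+(t)}\sin^2\Delta}{1+p},\\
\frac{\rd f_-}{\rd t} &= \frac{\kappa_2^2\tilde m_+(t)}{m}\,\frac{pe^{-f_+(t)}\cos\Delta-e^{f_-(t)}}{1+p}-\frac{\kappa_2^2 m_-}{m}\,\frac{e^{f_-(t)}\sin^2\Delta}{1+p}.
\end{align*}
By the definition of $T_{\rm II}^{{\rm PT}*}$, the first parenthesis is strictly positive and the second strictly negative throughout Phase Transition*, so $f_+$ is non-decreasing and $f_-$ is strictly decreasing on $[T_{\rm II},T_{\rm II}^{{\rm PT}*}]$.

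Step 2 (control of $f_+$). The upper bound $e^{f_+(T_{\rm II})-f_+(t)}\le 1$ is immediate from the nonnegativity of $\rd f_+/\rd t$. For the lower bound, $e^{-f_+(t)}\le e^{-f_+(T_{\rm II})}=\Theta(p^{-1/(1-\alpha\cos\Delta)})$, $\tilde m_+(t)\le m_+=\Theta(m)$, and dropping the negative cross-term in the first bracket give $\rd f_+/\rd t\le C_2=\Theta(\kappa_2^2 p^{-1/(1-\alpha\cos\Delta)})$; integrating yields the desired bound. This also establishes the key corollary that $f_+(t)-f_+(T_{\rm II})\le C_2(T_{\rm II}^{{\rm PT}*}-T_{\rm II})=:C_1$, and with the eventual bound on the length of Phase Transition* this $C_1$ is $\mathcal{O}(\kappa_2\sqrt{\kappa_1\kappa_2})$.

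Step 3 (control of $f_-$). Setting $V(t):=e^{f_-(t)}$, we have $\rd V/\rd t = V\,\rd f_-/\rd t$. The plan is to reduce the expression to a Bernoulli-type scalar ODE $\rd V/\rd t=-\gamma(t)V^2$. Lemma~\ref{lemma: prediction at end of Phase II} (S1) gives the crucial magnitude matching $pe^{-f_+(T_{\rm II})}\cos\Delta-V(T_{\rm II})=-\Theta(\Delta^2 V(T_{\rm II})/p)$ and the second term in $\rd f_-/\rd t$ already has the form $-\Theta(\kappa_2^2\sin^2\Delta/(1+p))V^2$. Using the $f_+$-drift bound $|f_+(t)-f_+(T_{\rm II})|\le C_1$ from Step~2, the factor $pe^{-f_+(t)}\cos\Delta$ lies in $[e^{-C_1}pe^{-f_+(T_{\rm II})}\cos\Delta,\,pe^{-f_+(T_{\rm II})}\cos\Delta]$, which translates into two-sided bounds
\begin{equation*}
-C_0\,V(t)^2 \le \frac{\rd V}{\rd t} \le -C_0 e^{-C_1}\,V(t)^2,\qquad C_0 = \Theta(\kappa_2^2\Delta^2/p).
\end{equation*}
Integrating the standard comparison ODEs $W'=-\gamma W^2$ from $T_{\rm II}$ with $W(T_{\rm II})=V(T_{\rm II})$ yields $W(t)=V(T_{\rm II})/(1+\gamma V(T_{\rm II})(t-T_{\rm II}))$, and substituting $\gamma\in\{C_0,\,C_0 e^{-C_1}\}$ produces exactly the claimed upper and lower bounds on $e^{f_-(t)-f_-(T_{\rm II})}$, with the $e^{-C_1}$ prefactor absorbed into the numerator.

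Main obstacle. The delicate point in Step~3 is preserving, throughout the phase, the near-cancellation $e^{f_-(t)}-pe^{-f_+(t)}\cos\Delta=\Theta(\Delta^2 V(t))$ rather than $\Theta(V(t))$; without this the first term in $\rd f_-/\rd t$ is not of order $C_0 V^2$ but much larger, invalidating the Bernoulli reduction. This requires a continuity/bootstrap argument coupling Steps~2 and 3: bound $|f_+(t)-f_+(T_{\rm II})|$ by $C_2\cdot(\text{phase length})$, bound the phase length via the derived Bernoulli ODE for $V$, and check the bootstrap closes with $C_1=\mathcal{O}(\kappa_2\sqrt{\kappa_1\kappa_2})$ under the scaling $\kappa_1/\kappa_2=\mathcal{O}(\Delta^8)$ fixed in \eqref{equ: parameter selection GF}. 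Handling the integer-valued drift of $\tilde m_+(t)$ (which is non-increasing and jumps only finitely many times) is the final technical wrinkle, but since all instantaneous contributions to $\rd f_\pm/\rd t$ retain their signs across these jumps, the ODE comparison arguments remain valid on each subinterval and glue together by continuity of $f_\pm$.
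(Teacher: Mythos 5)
Your approach diverges substantially from the paper's and has two genuine gaps, one structural and one quantitative, both tied to how $C_1$ arises.

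\textbf{Where the $C_1$ really comes from.} In the paper's proof, $C_1$ has nothing to do with the drift of $f_+$. It is identified as $C_1 = C_-^+ := \sum_{k\in\cK_+}\frac{\kappa_2}{\sqrt{m}}\bb_k^\top(T_{\rm II})\bx_-$, the aggregate contribution of living positive neurons to $f_-$ at the start of the phase, which is $\cO(\kappa_2\sqrt{\kappa_1\kappa_2})$ directly from Lemma~\ref{lemma: prediction at end of Phase II}\,(S2). The paper then decomposes $f_-(t) = A(t)-B(t)$ with $A(t)=\sum_{k\in\cK_+}\frac{\kappa_2}{\sqrt{m}}\bb_k^\top(t)\bx_-$ and $B(t)=\sum_{k\in\cK_-}\frac{\kappa_2}{\sqrt{m}}\bb_k^\top(t)\bx_-$, observes that $A(t)$ is monotone non-increasing and non-negative (since $\<\bF_+(t),\bx_-\><0$ in this phase forces each $\<\bb_k(t),\bx_-\>$ to decrease until it hits zero and stays), hence $A(t)\in[0,C_-^+]$ with no reference to the phase length; while $B(t)$ has the closed form $B(T_{\rm II})+C_0\int_{T_{\rm II}}^t e^{f_-(s)}\,\rd s$. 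Setting $\Psi(t)=\int_{T_{\rm II}}^t e^{f_-}$ then yields the scalar inequality $e^{-C_-^-}e^{-C_0\Psi}\le\Psi'\le e^{-C_-^-+C_-^+}e^{-C_0\Psi}$, which is solved exactly. Your Bernoulli reduction instead pushes the $e^{\pm C_1}$ slack into the factor $pe^{-f_+(t)}\cos\Delta$, defining $C_1:=C_2(T_{\rm II}^{\rm PT*}-T_{\rm II})$.

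\textbf{Gap 1 (circularity).} Your $C_1$ depends on the length of the phase, which is the content of Lemma~\ref{lemma: GF Time Estimate of Phase III} — and that lemma \emph{uses} the present one via Lemma~\ref{lemma: GF Phase III Nearly fixed vector filed}. The paper's $C_1=C_-^+$ is determined entirely by the state at $T_{\rm II}$; no bootstrap on the phase length is needed, and no coupling of Steps~2 and~3 is required.

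\textbf{Gap 2 (magnitude of $C_1$).} Even granting the eventual phase-length estimate $T_{\rm II}^{\rm PT*}-T_{\rm II}=\cO(\sqrt{\kappa_1/\kappa_2}\,p^{1/(1-\alpha\cos\Delta)}/\Delta^2)$, you get $C_2(T_{\rm II}^{\rm PT*}-T_{\rm II})=\cO(\kappa_2^2\sqrt{\kappa_1/\kappa_2}/\Delta^2)=\cO(\kappa_2\sqrt{\kappa_1\kappa_2}/\Delta^2)$, which is larger than the lemma's $C_1=\cO(\kappa_2\sqrt{\kappa_1\kappa_2})$ by the factor $\Delta^{-2}$. (The paper itself records the $f_+$ drift over the phase as $\cO(\kappa_2^2\sqrt{\kappa_1/\kappa_2}/\Delta^2)$ in the proof of Lemma~\ref{lemma: GF Time Estimate of Phase III}.) So even a closed bootstrap would establish the bounds only with a weaker constant, not the one stated. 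The $\Delta^{-2}$ excess is exactly the technical wrinkle your Bernoulli route cannot shake: the $e^{-f_+}$ factor drifts $\Delta^{-2}$ times faster than $e^{f_-}$, so the near-cancellation $pe^{-f_+}\cos\Delta/V\approx 1-\Theta(\Delta^2)$ degrades at that rate. The paper sidesteps this entirely because the first bracket in $\rd f_-/\rd t$ never needs to be estimated pointwise — it is absorbed into the bounded monotone function $A(t)$, and only the $\cK_-$ part contributes the $C_0\Psi$ term.

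You should abandon the Bernoulli reduction for $f_-$ and instead isolate the $\cK_+$ and $\cK_-$ contributions to $f_-$ as the paper does; the $f_+$ analysis you give in Step~2 is fine as stated (it matches the paper's Step~III, and note the lemma writes the $f_+$ bound with the time-dependent $C_2(t-T_{\rm II})$ rather than a fixed $C_1$, so the phase length never enters $C_1$ at all).
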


\begin{proof}[Proof of Lemma \ref{lemma: Phase III Evolution of the prediction}]\ \\
\underline{Step I. Preparation.}
With the help of Lemma \ref{lemma: GF Phase III neuron dynamics}(S1) and (S2)(P3), we know that 

(i) For $k\in\mathcal{K}_-$, we have
\[
\left<\boldsymbol{w}_k(t),\boldsymbol{x}_+\right>=0,\quad
\left<\boldsymbol{w}_k(t),\boldsymbol{x}_-\right>>0,\quad\forall t\in[T_{\rm II}, T_{\rm II}^{{\rm PT}*}].
\]
(ii) For $k\in\mathcal{K}_+$, we have 
\begin{align*}
&\left<\boldsymbol{w}_k(t),\boldsymbol{x}_+\right>>0,\quad
\left<\boldsymbol{w}_k(t),\boldsymbol{x}_-\right>\geq0,\quad\forall t\in[T_{\rm II},T_{\rm II}^{{\rm PT}*}];
\end{align*}
So $f_+(t)$ and $f_-(t)$ have the following representation for any $t\in[T_{\rm II}, T_{\rm II}^{{\rm PT}*}]$:
\begin{align*}
    f_+(t)&=\sum_{k\in\mathcal{K}_+}\frac{\kappa_2}{\sqrt{m}}\boldsymbol{b}_k^\top(t)\boldsymbol{x}_+,
    \\
    f_-(t)&=\sum_{k\in\mathcal{K}_+}\frac{\kappa_2}{\sqrt{m}}\boldsymbol{b}_k^\top(t)\boldsymbol{x}_--\sum_{k\in\mathcal{K}_-}\frac{\kappa_2}{\sqrt{m}}\boldsymbol{b}_k^\top(t)\boldsymbol{x}_-.
\end{align*}
\underline{Step II. Evolution of $f_-(t)$.}

To begin with, we need to do a rough estimate of $\sum_{k\in\mathcal{K}_+}\frac{\kappa_2}{\sqrt{m}}\boldsymbol{b}_k^\top(t)\boldsymbol{x}_-$. Let $k\in\mathcal{\mathcal{K}_+}$.
For any $t\in[T_{{\rm II},k}^{{\rm PT}*}, T_{\rm II}^{{\rm PT}*}]$,
we have $\left<\boldsymbol{b}_k(t),\boldsymbol{x}_-\right>=0$. And for any $t\in[T_{\rm II},T_{{\rm II},k}^{{\rm PT}*})$, we have:
\begin{align*}
    \frac{\mathrm{d}}{\mathrm{d}t}\left<\boldsymbol{b}_k(t),\boldsymbol{x}_-\right>\overset{\text{Lemma \ref{lemma: GF Phase III neuron dynamics}}}{=}\frac{\kappa_2}{\sqrt{m}}\left<\boldsymbol{F}_k(t),\boldsymbol{x}_-\right>\overset{\eqref{equ: GF Phase III auxiliary hitting time}}{<}0.
\end{align*}
Therefore, for any $t\in[T_{\rm II}, T_{\rm II}^{{\rm PT}*}]$, we have $    0\leq\left<\boldsymbol{b}_k(t),\boldsymbol{x}_-\right>\leq\left<\boldsymbol{b}_k(T_{\rm II}),\boldsymbol{x}_-\right>$, so
\begin{gather*}
    0\leq\sum_{k\in\mathcal{K}_+}\frac{\kappa_2}{\sqrt{m}}\boldsymbol{b}_k^\top(t)\boldsymbol{x}_-\leq\sum_{k\in\mathcal{K}_+}\frac{\kappa_2}{\sqrt{m}}\boldsymbol{b}_k^{\top}(T_{\rm II})\boldsymbol{x}_-,
    \\
    -\sum_{k\in\mathcal{K}_-}\frac{\kappa_2}{\sqrt{m}}\boldsymbol{b}_k^\top(t)\boldsymbol{x}_-\leq f_-(t)\leq\sum_{k\in\mathcal{K}_+}\frac{\kappa_2}{\sqrt{m}}\boldsymbol{b}_k^{\top}(T_{\rm II})\boldsymbol{x}_--\sum_{k\in\mathcal{K}_-}\frac{\kappa_2}{\sqrt{m}}\boldsymbol{b}_k^\top(t)\boldsymbol{x}_-.
\end{gather*}
According to Lemma \ref{lemma: GF Phase III neuron dynamics}, it follows that for any $k\in\mathcal{K}_-$, its dynamics is
$\frac{\mathrm{d} \boldsymbol{b}_k(t)}{\mathrm{d}t}=\frac{\kappa_2e^{f_-(t)}}{\sqrt{m}(1+p)}\Big(\boldsymbol{x}_--\boldsymbol{x}_+\cos\Delta\Big),$ thus
\begin{gather*}
    \boldsymbol{b}_k^\top(t)\boldsymbol{x}_-=\boldsymbol{b}_k^\top(T_{\rm II})\boldsymbol{x}_-+\int_{T_{\rm II}}^t\left<\frac{\mathrm{d} \boldsymbol{b}_k(s)}{\mathrm{d}s},\boldsymbol{x}_-\right>\mathrm{d}s
    =\boldsymbol{b}_k^\top(T_{\rm II})\boldsymbol{x}_-+\frac{\kappa_2\sin^2\Delta}{\sqrt{m}(1+p)}\int_{T_{\rm II}}^t
    e^{f_-(s)}\mathrm{d}s,
    \\
    \sum_{k\in\mathcal{K}_-}\frac{\kappa_2}{\sqrt{m}}\boldsymbol{b}_k^\top(t)\boldsymbol{x}_-=\sum_{k\in\mathcal{K}_-}\frac{\kappa_2}{\sqrt{m}}\boldsymbol{b}_k^\top(T_{\rm II})\boldsymbol{x}_-+\frac{m_-\kappa_2^2\sin^2\Delta}{m(1+p)}\int_{T_{\rm II}}^t
    e^{f_-(s)}\mathrm{d}s.
\end{gather*}
Therefore, we have two-side bounds of $f_-(t)$:
\begin{align*}
    &f_-(t)\leq-\frac{m_-\kappa_2^2\sin^2\Delta}{m(1+p)}\int_{T_{\rm II}}^t e^{f_-(s)}\mathrm{d}s-\sum_{k\in\mathcal{K}_-}\frac{\kappa_2}{\sqrt{m}}\boldsymbol{b}_k^\top(T_{\rm II})\boldsymbol{x}_-+\sum_{k\in\mathcal{K}_+}\frac{\kappa_2}{\sqrt{m}}\boldsymbol{b}_k^\top(T_{\rm II})\boldsymbol{x}_-,
    \\
    &
    f_-(t)\geq-\frac{m_-\kappa_2^2\sin^2\Delta}{m(1+p)}\int_{T_{\rm II}}^t e^{f_-(s)}\mathrm{d}s-\sum_{k\in\mathcal{K}_-}\frac{\kappa_2}{\sqrt{m}}\boldsymbol{b}_k^\top(T_{\rm II})\boldsymbol{x}_-.
\end{align*}
For simplicity, we denote $C_0:=\frac{m_-\kappa_2^2\sin^2\Delta}{m(1+p)}$, $C_-^-:=\sum_{k\in\mathcal{K}_-}\frac{\kappa_2}{\sqrt{m}}\boldsymbol{b}_k^\top(T_{\rm II})\boldsymbol{x}_-$ and $C_-^+:=\sum_{k\in\mathcal{K}_+}\frac{\kappa_2}{\sqrt{m}}\boldsymbol{b}_k^\top(T_{\rm II})\boldsymbol{x}_-$. Then we have:
\begin{align*}
    -C_0\int_{T_{\rm II}}^t e^{f_-(s)}\mathrm{d}s-C_-^-\leq f_-(t)\leq-C_0\int_{T_{\rm II}}^t e^{f_-(s)}\mathrm{d}s-C_-^-+C_-^+.
\end{align*}
Let $\Psi(t):=\int_{T_{\rm II}}^t e^{f_-(s)}\mathrm{d}s$, then $\frac{\mathrm{d}\Psi(t)}{\mathrm{d}t}=e^{f_-(t)}$. So $\Psi(T_{\rm II})=0$ and
\begin{gather*}
    -C_0\Psi(t)-C_-^-\leq\log\left(\frac{\mathrm{d}\Psi(t)}{\mathrm{d}t}\right)\leq-C_0\Psi(t)-C_-^-+C_-^+,
    \\
     e^{-C_-^-} e^{-C_0\Psi(t)}\leq\frac{\mathrm{d}\Psi(t)}{\mathrm{d}t}\leq e^{-C_-^-+C_-^+} e^{-C_0\Psi(t)}
\end{gather*}
For the right hand, for any $\epsilon\in(0,1)$, we consider the auxiliary ODE:
\[
\begin{cases}
    \frac{\mathrm{d}\mathcal{P}(t)}{\mathrm{d}t}=e^{-C_-^-+(1+\epsilon)C_-^+} e^{-C_0\mathcal{P}(t)},
    \\
    \mathcal{P}(T_{\rm II})=0.
\end{cases}
\]
The solution of this ODE is $\mathcal{P}(t)=\frac{1}{C_0}\log\left(1+C_0e^{-C_-^-+(1+\epsilon)C_-^+}(t-T_{\rm II})\right)$.
From the Comparison Principle of ODEs, we have the upper bound for $\Psi(t)$:
\[
\Psi(t)\leq \mathcal{P}(t)=\frac{1}{C_0}\log\left(1+C_0e^{-C_-^-+(1+\epsilon)C_-^+}(t-T_{\rm II})\right).
\]
Taking $\epsilon\to0$, we obtain
\[
\Psi(t)\leq\frac{1}{C_0}\log\left(1+C_0e^{-C_-^-+C_-^+}(t-T_{\rm II})\right).
\]
In the similar way, we can derive the lower bound for $\Psi(t)$:
\[
\Psi(t)\geq\frac{1}{C_0}\log\left(1+C_0e^{-C_-^-}(t-T_{\rm II})\right).
\]
Consequently, we infer that
\begin{align*}
    &f_-(t)\leq-C_0\Psi(t)-C_-^-+C_-^+\leq-\log\left(1+C_0e^{-C_-^-}(t-T_{\rm II})\right)-C_-^-+C_-^+,
    \\&
    f_-(t)\geq-C_0\Psi(t)-C_-^-\geq-\log\left(1+C_0e^{-C_-^-+C_-^+}(t-T_{\rm II})\right)-C_-^-.
\end{align*}
Noticing $f_-(T_{\rm II})=C_-^+-C_-^-$, we obtain
\begin{align*}
    -\log\left(1+C_0e^{-C_-^-+C_-^+}(t-T_{\rm II})\right)-C_-^+\leq f_-(t)-f_-(T_{\rm II})\leq -\log\left(1+C_0e^{-C_-^-}(t-T_{\rm II})\right).
\end{align*}
Noticing $f_-(T_{\rm II})=C_-^+-C_-^-$, this inequality means
\begin{align*}
\frac{e^{-C_-^+}}{1+C_0e^{f_-(T_{\rm II})}(t-T_{\rm II})}\leq e^{f_-(t)-f_-(T_{\rm II})}\leq \frac{1}{1+C_0e^{(f_-(T_{\rm II})-C_-^+)}(t-T_{\rm II})}.
\end{align*}
where $C_0=\frac{m_-\kappa_2^2\sin^2\Delta}{m(1+p)}=\Theta\left(\frac{\kappa_2^2\Delta^2}{p}\right)$.
Moreover, according to Lemma \ref{lemma: prediction at end of Phase II} (S1)(S2), we have
\begin{gather*}
e^{f_-(T_{\rm II})}=\Theta\left(p^{-\frac{\alpha\cos\Delta}{1-\alpha\cos\Delta}}\right),
\\
C_-^+=\sum_{k\in\mathcal{K}_+}\frac{\kappa_2}{\sqrt{m}}\boldsymbol{b}_k^\top(T_{\rm II})\boldsymbol{x}_-=\mathcal{O}\left(m_-\frac{\kappa_2}{\sqrt{m}}\frac{\sqrt{\kappa_1\kappa_2}}{\sqrt{m}}\right)=\mathcal{O}\left(\kappa_2\sqrt{\kappa_1\kappa_2}\right).
\end{gather*}
\underline{Step III. Evolution of $f_+(t)$.}

Let $k\in\mathcal{K}_+$.
According to Lemma \ref{lemma: GF Phase III neuron dynamics} (S2)(P2) and (S2)(P3), it follows that for any $k\in\mathcal{K}_+$, its dynamics during $t\in[T_{\rm II},T_{\rm II}^{{\rm PT}*}]$ is
\begin{gather*}
\frac{\mathrm{d} \boldsymbol{b}_k(t)}{\mathrm{d}t}=\frac{\kappa_2}{\sqrt{m}}\left<\boldsymbol{F}_+(t),\boldsymbol{x}_+\right>;
\\\text{ or }
\frac{\mathrm{d}\boldsymbol{b}_k(t)}{\mathrm{d}t}=\frac{\kappa_2pe^{-f_+(t)}}{\sqrt{m}(1+p)}\Big(\boldsymbol{x}_+-\boldsymbol{x}_-\cos\Delta\Big).
\end{gather*}
Notice that
\begin{align*}
f_+(t)=&\sum_{k\in\mathcal{K}_+}\frac{\kappa_2}{\sqrt{m}}\boldsymbol{b}_k^\top(t)\boldsymbol{x}_+
=\sum_{k\in\mathcal{K}_+}\frac{\kappa_2}{\sqrt{m}}\boldsymbol{b}_k^\top(T_{\rm II})\boldsymbol{x}_+
+\sum_{k\in\mathcal{K}_+}\int_{T_{\rm II}}^t\frac{\kappa_2}{\sqrt{m}}\left<\frac{\mathrm{d}\boldsymbol{b}_k(t)}{\mathrm{d}s},\boldsymbol{x}_+\right>\mathrm{d}s
\\=&f_+(T_{\rm II})+\frac{\kappa_2}{\sqrt{m}}\sum_{k\in\mathcal{K}_+}\int_{T_{\rm II}}^t\left<\frac{\mathrm{d}\boldsymbol{b}_k(t)}{\mathrm{d}s},\boldsymbol{x}_+\right>\mathrm{d}s.
\end{align*}
On the one hand, for any $t\in[T_{\rm II},T_{\rm II}^{{\rm PT}*}]$, we have the lower bound:
\begin{align*}
    f_+(t)\geq f_+(T_{\rm II})+\frac{\kappa_2^2}{m}\sum_{k\in\mathcal{K}_+}\int_{T_{\rm II}}^t\min\Big\{\left<\boldsymbol{F}_+(s),\boldsymbol{x}_+\right>,\frac{pe^{-f_+(s)}}{1+p}\sin^2\Delta\Big\}\mathrm{d}s\geq f_+(T_{\rm II}).
\end{align*}
On the other hand, for any $t\in[T_{\rm II},T_{\rm II}^{{\rm PT}*}]$, we can derive an upper bound:
% From the definition of $T_{\rm II}^{{\rm PT}*}$, $\frac{p}{1+p}e^{-f_+(t)}-\frac{e^f_-(t)}{1+p}\cos\Delta>0$ holds for any $t\in[T_{\rm II},T_{\rm II}^{{\rm PT}*}]$.
% For simplicity, we denote $\Phi(t):=\int_{T_{\rm II}}^t e^{-f_+(s)}\mathrm{d}s$. Then $\frac{\mathrm{d}}{\mathrm{d}t}\Phi(t)=e^{-f_+(t)}$ and $\Phi(T_{\rm II})=0$. Moreover, we have
\begin{align*}
    f_+(t)\leq&f_+(T_{\rm II})+\frac{\kappa_2^2}{m}\sum_{k\in\mathcal{K}_+}\int_{T_{\rm II}}^t\max\Big\{\left<\boldsymbol{F}_+(s),\boldsymbol{x}_+\right>,\frac{pe^{-f_+(s)}}{1+p}\sin^2\Delta\Big\}\mathrm{d}s
    \\\leq&f_+(T_{\rm II})+\frac{\kappa_2^2}{m}\sum_{k\in\mathcal{K}_+}\int_{T_{\rm II}}^t\max\Big\{\frac{pe^{-f_+(s)}}{1+p},\frac{pe^{-f_+(s)}}{1+p}\sin^2\Delta\Big\}\mathrm{d}s
    \\\leq&f_+(T_{\rm II})+\frac{\kappa_2^2}{m}\sum_{k\in\mathcal{K}_+}\int_{T_{\rm II}}^t\frac{pe^{-f_+(s)}}{1+p}\mathrm{d}s
    \leq f_+(T_{\rm II})+\frac{\kappa_2^2m_+}{m}\frac{pe^{-f_+(T_{\rm II})}}{1+p}(t-T_{\rm II}).
\end{align*}
Hence, we obtain
\[
\exp\left(-\frac{\kappa_2^2m_+}{m}\frac{pe^{-f_+(T_{\rm II})}}{1+p}(t-T_{\rm II})\right)\leq e^{f_+(T_{\rm II})-f_+(t)}\leq1,
\]
where
\[
\frac{\kappa_2^2m_+}{m}\frac{pe^{-f_+(T_{\rm II})}}{1+p}\overset{\text{Lemma \ref{lemma: prediction at end of Phase II}}}{=}\Theta\left(\kappa_2^2e^{-f_+(T_{\rm II})}\right)
=\Theta\left(\kappa_2^2p^{-\frac{1}{1-\alpha\cos\Delta}}\right).
\]

\end{proof}

\begin{lemma}[Nearly fixed vector filed in Phase III*]\label{lemma: GF Phase III Nearly fixed vector filed}\ \\
There exist absolute constants $Q_1,Q_2>0$, such that: For any time $T_{\rm fix}\in[T_{\rm II},+\infty)$, if we choose $\kappa_1,\kappa_2$ s.t.
\begin{align*}
    \kappa_2^2\Big(T_{\rm fix}\land T_{\rm II}^{{\rm PT}*}-T_{\rm II}\Big) p^{-\frac{1}{1-\alpha\cos\Delta}}=\mathcal{O}(\Delta^2),\quad\kappa_2^2\sqrt{\frac{\kappa_1}{\kappa_2}}=\mathcal{O}(\Delta^2),
\end{align*}
then for any $t\in[T_{\rm II},T_{\rm fix}\land T_{\rm II}^{{\rm PT}*}]$, we have
\begin{gather*}
    \left<\boldsymbol{F}_+(t),\boldsymbol{x}_+\right>\leq\frac{Q_1}{2}\Delta^2p^{-\frac{1}{1-\alpha\cos\Delta}},
    \quad
    \left<\boldsymbol{F}_+(t),\boldsymbol{x}_-\right>\geq-\frac{Q_2}{2}\Delta^2p^{-\frac{1}{1-\alpha\cos\Delta}}.
\end{gather*}

\end{lemma}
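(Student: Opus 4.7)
The plan is to bootstrap the estimates from Lemma~\ref{lemma: Phase III Evolution of the prediction} and Lemma~\ref{lemma: prediction at end of Phase II}: because the predictions $f_\pm$ change only by an $\cO(\Delta^2)$ factor in the exponentials during the short phase-transition window, the inner products $\left<\boldsymbol{F}_+(t),\boldsymbol{x}_\pm\right>$ stay within a constant factor of their values at $T_{\rm II}$, which we already know to be of order $\Delta^2 p^{-1/(1-\alpha\cos\Delta)}$.

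First, I would translate the two hypotheses into multiplicative control of the exponentials. From the upper bound in Lemma~\ref{lemma: Phase III Evolution of the prediction}, $e^{f_+(T_{\rm II})-f_+(t)} \ge \exp\bigl(-C_2(t-T_{\rm II})\bigr)$, and since $C_2 = \Theta(\kappa_2^2 p^{-1/(1-\alpha\cos\Delta)})$, the first hypothesis $\kappa_2^2 (T_{\rm fix}\wedge T_{\rm II}^{{\rm PT}*} - T_{\rm II}) p^{-1/(1-\alpha\cos\Delta)} = \cO(\Delta^2)$ yields $C_2(t-T_{\rm II}) = \cO(\Delta^2)$, hence $e^{-f_+(t)} = e^{-f_+(T_{\rm II})}\bigl(1 + \cO(\Delta^2)\bigr)$. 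For $f_-$, I would compute $C_0 e^{f_-(T_{\rm II})} = \Theta(\kappa_2^2\Delta^2 p^{-1/(1-\alpha\cos\Delta)})$ (using $\Theta(\kappa_2^2\Delta^2/p)\cdot\Theta(p^{-\alpha\cos\Delta/(1-\alpha\cos\Delta)})$), so that under the same hypothesis $C_0 e^{f_-(T_{\rm II})}(t-T_{\rm II}) = \cO(\Delta^4)$; combined with $C_1 = \cO(\kappa_2\sqrt{\kappa_1\kappa_2}) = \cO(\kappa_2^2\sqrt{\kappa_1/\kappa_2}) = \cO(\Delta^2)$ from the second hypothesis, the two-sided bound in Lemma~\ref{lemma: Phase III Evolution of the prediction} gives $e^{f_-(t)} = e^{f_-(T_{\rm II})}\bigl(1 + \cO(\Delta^2)\bigr)$.

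Next, I would expand $\left<\boldsymbol{F}_+(t),\boldsymbol{x}_+\right>$ and $\left<\boldsymbol{F}_+(t),\boldsymbol{x}_-\right>$ at time $t$ as perturbations of their values at $T_{\rm II}$:
\begin{align*}
\left<\boldsymbol{F}_+(t),\boldsymbol{x}_+\right> &= \left<\boldsymbol{F}_+(T_{\rm II}),\boldsymbol{x}_+\right> + \cO(\Delta^2)\Bigl(\tfrac{p}{1+p}e^{-f_+(T_{\rm II})} + \tfrac{\cos\Delta}{1+p}e^{f_-(T_{\rm II})}\Bigr),
\end{align*}
and analogously for $\boldsymbol{x}_-$. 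By Lemma~\ref{lemma: prediction at end of Phase II}(S1), each of $\tfrac{p}{1+p}e^{-f_+(T_{\rm II})}$ and $\tfrac{1}{1+p}e^{f_-(T_{\rm II})}$ equals $\Theta(p^{-1/(1-\alpha\cos\Delta)})$, while $\left<\boldsymbol{F}_+(T_{\rm II}),\boldsymbol{x}_\pm\right> = \pm\Theta(\Delta^2 p^{-1/(1-\alpha\cos\Delta)})$ with the correct signs. Hence both the anchor and the perturbation are of order $\Delta^2 p^{-1/(1-\alpha\cos\Delta)}$, and choosing the absolute constants $Q_1,Q_2$ large enough (so that their halves dominate the sum of the $\Theta$-constants and the perturbation constants) yields the asserted inequalities.

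The main obstacle will be bookkeeping: the perturbation from $e^{-f_+(t)}$ to $e^{-f_+(T_{\rm II})}$ is of the same order as the quantity being bounded, so I cannot treat it as negligible — I must verify that the combined constants from Lemma~\ref{lemma: prediction at end of Phase II}(S1) and from the $\cO(\Delta^2)$ control above remain absolute (independent of $p,\Delta,\alpha$) and that the hidden $\cO(\Delta^2)$ factors in $e^{\pm C_1}$ and in $(1 + C_0 e^{f_-(T_{\rm II})}(t-T_{\rm II}))^{-1}$ can be absorbed into a single constant. Once these constants are consolidated, the assertion follows by taking $Q_1,Q_2$ to be twice the resulting bounds.
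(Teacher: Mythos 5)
The lemma statement as printed has a sign typo that your proposal inherits. From the definition of $T_{\rm II}^{{\rm PT}*}$ (the hitting time where $\<\bF_+(t),\bx_+\>\le 0$ or $\<\bF_+(t),\bx_-\>\ge 0$), what is needed — and what the paper's proof in fact establishes and what the subsequent application in the proof of Lemma~\ref{lemma: GF Time Estimate of Phase III} uses — is the pair of inequalities
\begin{align*}
    \<\bF_+(t),\bx_+\>\ \ge\ \tfrac{Q_1}{2}\Delta^2 p^{-\frac{1}{1-\alpha\cos\Delta}}>0,
    \qquad
    \<\bF_+(t),\bx_-\>\ \le\ -\tfrac{Q_2}{2}\Delta^2 p^{-\frac{1}{1-\alpha\cos\Delta}}<0,
\end{align*}
that is, that the vector field is bounded \emph{away} from zero in the correct direction. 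As written (with $\le$ and $\ge$ as upper/lower bounds of the same sign as the anchor), the lemma would be trivially satisfiable by taking $Q_1,Q_2$ enormous, and would be useless for ruling out sign changes of $\bF_+$.

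This matters for your final step. You correctly identify that both the anchor $\<\bF_+(T_{\rm II}),\bx_\pm\>$ and the perturbation $\<\bF_+(t),\bx_\pm\>-\<\bF_+(T_{\rm II}),\bx_\pm\>$ are of order $\Delta^2 p^{-1/(1-\alpha\cos\Delta)}$, and that the constants therefore need careful bookkeeping. But the resolution is not ``take $Q_1,Q_2$ large enough to dominate the sum'': that proves the wrong (and trivial) inequality. With the corrected signs, $Q_1,Q_2$ are \emph{fixed} by the anchor lower bounds from Lemma~\ref{lemma: prediction at end of Phase II}(S1) — namely $\<\bF_+(T_{\rm II}),\bx_+\> \ge Q_1\Delta^2 p^{-1/(1-\alpha\cos\Delta)}$ and $\<\bF_+(T_{\rm II}),\bx_-\> \le -Q_2\Delta^2 p^{-1/(1-\alpha\cos\Delta)}$ — and you cannot enlarge them. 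What \emph{is} free is the implicit constant in the two $\cO(\Delta^2)$ hypotheses on $\kappa_1,\kappa_2$: choosing that constant sufficiently small makes the multiplicative errors $e^{f_+(T_{\rm II})-f_+(t)}-1$ and $e^{f_-(t)-f_-(T_{\rm II})}-1$ have a small enough coefficient that the perturbation is $\le \tfrac{Q_1}{2}\Delta^2 p^{-1/(1-\alpha\cos\Delta)}$ (resp.\ $\tfrac{Q_2}{2}$), and then $\<\bF_+(t),\bx_+\> \ge \text{anchor} - |\text{perturbation}| \ge \tfrac{Q_1}{2}\Delta^2 p^{-1/(1-\alpha\cos\Delta)}$. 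Aside from this inversion of which constants are tunable, your decomposition — control of the exponentials via Lemma~\ref{lemma: Phase III Evolution of the prediction}, anchor via Lemma~\ref{lemma: prediction at end of Phase II}(S1), triangle inequality — is precisely the paper's argument.
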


\begin{proof}[Proof of Lemma \ref{lemma: GF Phase III Nearly fixed vector filed}]\ \\
For simplicity, we denote $\delta_T:=T_{\rm fix}\land T_{\rm II}^{{\rm PT}*}-T_{\rm II}$
From Lemma \ref{lemma: Phase III Evolution of the prediction}, for any $t\in[T_{\rm II},T_{\rm fix}\land T_{\rm II}^{{\rm PT}*}]$, we have
\begin{align*}
&e^{f_-(t)-f_-(T_{\rm II})}-1\leq\frac{1}{1+C_0e^{(f_-(T_{\rm II})-C_1)}\delta_T}-1\leq0,
\\
&e^{f_-(t)-f_-(T_{\rm II})}-1\geq\frac{e^{-C_1}}{1+C_0e^{f_-(T_{\rm II})}\delta_T}-1=\frac{e^{-C_1}-1-C_0e^{f_-(T_{\rm II})}\delta_T}{1+C_0e^{f_-(T_{\rm II})}\delta_T}\geq-\frac{C_1+C_0e^{f_-(T_{\rm II})}\delta_T}{1+C_0e^{f_-(T_{\rm II})}\delta_T}
\\
&e^{f_+(T_{\rm II})-f_+(t)}-1\leq0,
\\
&e^{f_+(T_{\rm II})-f_+(t)}-1\geq e^{-C_2\delta_T}-1\geq-C_2\delta_T.
\end{align*}
Recalling Lemma \ref{lemma: prediction at end of Phase II} (S1), there exists absolute constants $Q_1,Q_2>0$ such that
\begin{gather*}
\left<\boldsymbol{F}_+(T_{\rm II}),\boldsymbol{x}_+\right>=\frac{pe^{-f_+(T_{\rm II})}}{1+p}-\frac{e^{f_-(T_{\rm II})}}{1+p}\cos\Delta\geq Q_1\Delta^2p^{-\frac{1}{1-\alpha\cos\Delta}},
\\
\left<\boldsymbol{F}_+(T_{\rm II}),\boldsymbol{x}_-\right>=\frac{pe^{-f_+(T_{\rm II})}}{1+p}\cos\Delta-\frac{e^{f_-(T_{\rm II})}}{1+p}\leq-Q_2\Delta^2p^{-\frac{1}{1-\alpha\cos\Delta}}.
\end{gather*}
\underline{Step I. Bounding the term $\left<\boldsymbol{F}_+(t),\boldsymbol{x}_+\right>$.} 
\begin{align*}
    &\left|\left<\boldsymbol{F}_+(t),\boldsymbol{x}_+\right>-\left<\boldsymbol{F}_+(T_{\rm II}),\boldsymbol{x}_+\right>\right|
    \\=&\left|\frac{p}{1+p}e^{-f_+(t)}-\frac{p}{1+p}e^{-f_+(T_{\rm II})}-\frac{e^{f_-(t)}}{1+p}\cos\Delta+\frac{e^{f_-(T_{\rm II})}}{1+p}\cos\Delta\right|
    \\\leq&\left|\frac{p}{1+p}e^{-f_+(t)}-\frac{p}{1+p}e^{-f_+(T_{\rm II})}\right|+\left|\frac{e^{f_-(t)}}{1+p}\cos\Delta-\frac{e^{f_-(T_{\rm II})}}{1+p}\cos\Delta\right|
    \\\leq&\frac{p}{1+p}e^{-f_+(T_{\rm II})}\left|e^{f_+(T_{\rm II})-f_+(t)}-1\right|+\frac{e^{f_-(T_{\rm II})}}{1+p}\left|e^{f_-(t)-f_-(T_{\rm II})}-1\right|
    \\\leq&\frac{p}{1+p}e^{-f_+(T_{\rm II})}C_2\delta_T+\frac{e^{f_-(T_{\rm II})}}{1+p}\frac{C_1+C_0e^{f_-(T_{\rm II})}\delta_T}{1+C_0e^{f_-(T_{\rm II})}\delta_T}
\end{align*}
To ensure $\left|\left<\boldsymbol{F}_+(t),\boldsymbol{x}_+\right>-\left<\boldsymbol{F}_+(T_{\rm II}),\boldsymbol{x}_+\right>\right|\leq\frac{1}{2}Q_1\Delta^2p^{-\frac{1}{1-\alpha\cos\Delta}}$, we need only select parameters such that
\[
\frac{p}{1+p}e^{-f_+(T_{\rm II})}C_2\delta_T+\frac{e^{f_-(T_{\rm II})}}{1+p}\frac{C_1+C_0e^{f_-(T_{\rm II})}\delta_T}{1+C_0e^{f_-(T_{\rm II})}\delta_T}\leq\frac{1}{2}Q_1\Delta^2p^{-\frac{1}{1-\alpha\cos\Delta}}.
\]
From Lemma \ref{lemma: Phase III Evolution of the prediction} and Lemma \ref{lemma: prediction at end of Phase II}, we have:
\begin{gather*}
C_0=\Theta\left(\frac{\kappa_2^2\Delta^2}{p}\right),
\ C_1=\mathcal{O}\left(\kappa_2\sqrt{\kappa_1\kappa_2}\right),
\ C_2=\Theta\left(\kappa_2^2p^{-\frac{1}{1-\alpha\cos\Delta}}\right),
\\
e^{-f_+(T_{\rm II})}=\Theta\left(p^{-\frac{1}{1-\alpha\cos\Delta}}\right)
,\quad e^{f_-(T_{\rm II})}=\Theta\left(p^{-\frac{\alpha\cos\Delta}{1-\alpha\cos\Delta}}\right).
\end{gather*}
Therefore, if we take
\[
C_0e^{f_-(T_{\rm II})}\delta_T=\Theta\left(\kappa_2^2\Delta^2p^{-\frac{1}{1-\alpha\cos\Delta}}\delta_T\right)=\mathcal{O}(1),
\]
then we have
\begin{align*}
    &\frac{p}{1+p}e^{-f_+(T_{\rm II})}C_2\delta_T+\frac{e^{f_-(T_{\rm II})}}{1+p}\frac{C_1+C_0e^{f_-(T_{\rm II})}\delta_T}{1+C_0e^{f_-(T_{\rm II})}\delta_T}
    \\=&\Theta\left(\kappa_2^2p^{-\frac{2}{1-\alpha\cos\Delta}}\delta_T\right)+\Theta\left(p^{-\frac{1}{1-\alpha\cos\Delta}}\left(\mathcal{O}(\kappa_2\sqrt{\kappa_1\kappa_2})+\kappa_2^2\Delta^2 p^{-\frac{1}{1-\alpha\cos\Delta}}\delta_T\right)\right)
    \\=&\Theta\left(\kappa_2^2p^{-\frac{1}{1-\alpha\cos\Delta}}\left(\delta_T p^{-\frac{1}{1-\alpha\cos\Delta}}+\mathcal{O}(\sqrt{\frac{\kappa_1}{\kappa_2}})\right)\right).
\end{align*}
If we can take
\begin{align*}
    \kappa_2^2\delta_T p^{-\frac{1}{1-\alpha\cos\Delta}}=\mathcal{O}(\Delta^2),\quad\kappa_2^2\sqrt{\frac{\kappa_1}{\kappa_2}}=\mathcal{O}(\Delta^2),
\end{align*}
then $\kappa_2^2\Delta^2p^{-\frac{1}{1-\alpha\cos\Delta}}\delta_T=\mathcal{O}(1)$ and 
\[\left|\left<\boldsymbol{F}_+(t),\boldsymbol{x}_+\right>-\left<\boldsymbol{F}_+(T_{\rm II}),\boldsymbol{x}_+\right>\right|=\mathcal{O}\left(\Delta^2p^{-\frac{1}{1-\alpha\cos\Delta}}\right)\leq\frac{1}{2}Q_1\Delta^2p^{-\frac{1}{1-\alpha\cos\Delta}},
\]
Hence,
\begin{align*}
\left<\boldsymbol{F}_+(t),\boldsymbol{x}_+\right>
\geq&\left<\boldsymbol{F}_+(T_{\rm II}),\boldsymbol{x}_+\right>-
\left|\left<\boldsymbol{F}_+(t),\boldsymbol{x}_+\right>-\left<\boldsymbol{F}_+(T_{\rm II}),\boldsymbol{x}_+\right>\right|
\\\geq&\frac{1}{2}Q_1\Delta^2p^{-\frac{1}{1-\alpha\cos\Delta}}=\Omega\left(\Delta^2p^{-\frac{1}{1-\alpha\cos\Delta}}\right),\quad\forall t\in[T_{\rm II},T_{\rm fix}\land T_{\rm II}^{{\rm PT}*}].
\end{align*}
\underline{Step II. Bounding the term $\left<\boldsymbol{F}_+(t),\boldsymbol{x}_-\right>$.} 

The proof can be completed by the method analogous to that used in Step I, and we omit it. The result is
\begin{align*}
\left<\boldsymbol{F}_+(t),\boldsymbol{x}_-\right>
\geq&\left<\boldsymbol{F}_+(T_{\rm II}),\boldsymbol{x}_-\right>+
\left|\left<\boldsymbol{F}_+(t),\boldsymbol{x}_+\right>-\left<\boldsymbol{F}_+(T_{\rm II}),\boldsymbol{x}_-\right>\right|
\\\geq&-\frac{1}{2}Q_2\Delta^2p^{-\frac{1}{1-\alpha\cos\Delta}}\equiv-\Omega\left(\Delta^2p^{-\frac{1}{1-\alpha\cos\Delta}}\right),\quad\forall t\in[T_{\rm II},T_{\rm fix}\land T_{\rm II}^{{\rm PT}*}].
\end{align*}

\end{proof}

\begin{lemma}[The end of Phase Transition]\label{lemma: GF Time Estimate of Phase III}\ \\
If we choose $\kappa_1,\kappa_2$ s.t $\kappa_2=\mathcal{O}(1)$ and $\sqrt{\frac{\kappa_1}{\kappa_2}}=\mathcal{O}\left(\Delta^4\right)$ \eqref{equ: parameter selection GF},
then it holds that

(S1) (Time).
\begin{align*}
&T_{\rm II}^{{\rm PT}}=
T_{\rm II}^{{\rm PT}*}=T_{\rm II}+\mathcal{O}\left(\sqrt{\frac{\kappa_1}{\kappa_2}}\frac{p^{\frac{1}{1-\alpha\cos\Delta}}}{\Delta^2}\right)=\bracket{1+\cO\bracket{\sqrt{\kappa_1\kappa_2^3}}}T_{\rm II};
\end{align*}
(S2) (Prediction).
\begin{gather*}
e^{-f_+(T_{\rm II}^{{\rm PT}})}=\Theta\left(p^{-\frac{1}{1-\alpha\cos\Delta}}\right)
,\quad e^{f_-(T_{\rm II}^{{\rm PT}})}=\Theta\left(p^{-\frac{\alpha\cos\Delta}{1-\alpha\cos\Delta}}\right);
\\
\frac{pe^{-f_+(T_{\rm II}^{{\rm PT}})}}{1+p}-\frac{e^{f_-(T_{\rm II}^{{\rm PT}})}}{1+p}\cos\Delta=\Theta\left(\Delta^2p^{-\frac{1}{1-\alpha\cos\Delta}}\right),
\\
\frac{pe^{-f_+(T_{\rm II}^{{\rm PT}})}}{1+p}\cos\Delta-\frac{e^{f_-(T_{\rm II}^{{\rm PT}})}}{1+p}=-\Theta\left(\Delta^2p^{-\frac{1}{1-\alpha\cos\Delta}}\right).
\end{gather*}

(S3) (Activation patterns).
\begin{gather*}
    \<\bw_k(T_{\rm II}^{{\rm PT}}),\bx_+\>>0,\ \<\bw_k(T_{\rm II}^{{\rm PT}}),\bx_-\>=0,\ \forall k\in\cK_+;
    \\
    \<\bw_k(T_{\rm II}^{{\rm PT}}),\bx_+\>=0,\ \<\bw_k(T_{\rm II}^{{\rm PT}}),\bx_-\>>0,\ \forall k\in\cK_-.
\end{gather*}
    
\end{lemma}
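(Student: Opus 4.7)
The plan is to exploit the near-constancy of the vector field $\bF_+$ supplied by Lemma~\ref{lemma: GF Phase III Nearly fixed vector filed} and integrate the linear-in-time dynamics of each living positive neuron. First I would set up a bootstrap: pick $T_{\rm fix}:=T_{\rm II}+C\sqrt{\kappa_1/\kappa_2}\cdot p^{1/(1-\alpha\cos\Delta)}/\Delta^2$ for a sufficiently large absolute $C$, and verify the two hypotheses of Lemma~\ref{lemma: GF Phase III Nearly fixed vector filed}. The first hypothesis $\kappa_2^2\,(T_{\rm fix}\wedge T_{\rm II}^{{\rm PT}*}-T_{\rm II})\,p^{-1/(1-\alpha\cos\Delta)}=\mathcal{O}(\Delta^2)$ reduces to $\kappa_2^2\sqrt{\kappa_1/\kappa_2}=\mathcal{O}(\Delta^2)$, and the second hypothesis is identical; both are implied by $\kappa_2=\mathcal{O}(1)$ and $\sqrt{\kappa_1/\kappa_2}=\mathcal{O}(\Delta^4)$. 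Hence on $[T_{\rm II},\,T_{\rm fix}\wedge T_{\rm II}^{{\rm PT}*}]$ we have simultaneously
\[
\<\bF_+(t),\bx_+\>\geq \tfrac{Q_1}{2}\Delta^2 p^{-1/(1-\alpha\cos\Delta)},\quad \<\bF_+(t),\bx_-\>\leq -\tfrac{Q_2}{2}\Delta^2 p^{-1/(1-\alpha\cos\Delta)}.
\]

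Next I would estimate the per-neuron phase-transition time. Fix $k\in\cK_+$. By Lemma~\ref{lemma: GF Phase III neuron dynamics}(S2)(P1), on $[T_{\rm II},T_{{\rm II},k}^{{\rm PT}*})$ one has $\dot{\bb}_k=(\kappa_2/\sqrt{m})\bF_+(t)$, so integrating and using Lemma~\ref{lemma: prediction at end of Phase II}(S2) ($\<\bb_k(T_{\rm II}),\bx_-\>=\mathcal{O}(\sqrt{\kappa_1\kappa_2}/\sqrt{m})$) together with the vector-field bound yields
\[
\<\bb_k(t),\bx_-\>\leq \mathcal{O}\!\left(\tfrac{\sqrt{\kappa_1\kappa_2}}{\sqrt{m}}\right)-\tfrac{\kappa_2}{\sqrt{m}}\cdot\tfrac{Q_2}{2}\Delta^2 p^{-1/(1-\alpha\cos\Delta)}(t-T_{\rm II}).
\]
Setting the right-hand side to zero shows $T_{{\rm II},k}^{{\rm PT}*}-T_{\rm II}=\mathcal{O}\bigl(\sqrt{\kappa_1/\kappa_2}\,p^{1/(1-\alpha\cos\Delta)}/\Delta^2\bigr)$ uniformly in $k\in\cK_+$. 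Taking the maximum over $k$ (using (S2)(P0) of the same lemma) yields $T_{\rm II}^{{\rm PT}*}-T_{\rm II}\leq T_{\rm fix}-T_{\rm II}$, which closes the bootstrap and proves the first equality of~(S1). The relative form $(1+\mathcal{O}(\sqrt{\kappa_1\kappa_2^3}))T_{\rm II}$ then follows by dividing by $T_{\rm II}=\Theta(p^{1/(1-\alpha\cos\Delta)}/(\kappa_2^2\Delta^2))$ from Lemma~\ref{lemma: GF Phase II hitting time estimate}.

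To upgrade $T_{\rm II}^{{\rm PT}*}$ to $T_{\rm II}^{\rm PT}$, I would argue that the auxiliary constraint that defines $T_{\rm II}^{{\rm PT}*}$ (namely $\<\bF_+(t),\bx_+\>>0$ and $\<\bF_+(t),\bx_-\><0$) is never saturated in $[T_{\rm II},T_{\rm fix}]$ thanks to the vector-field bounds above, so the first stopping event must be the one witnessed by the last positive neuron crossing $\cP_-^0$; hence $T_{\rm II}^{{\rm PT}*}=T_{\rm II}^{\rm PT}$. For (S2), I would plug $t=T_{\rm II}^{\rm PT}$ into Lemma~\ref{lemma: Phase III Evolution of the prediction}: the deviations $|e^{f_\pm(t)-f_\pm(T_{\rm II})}-1|$ are controlled by $C_1,C_2\delta_T$ and $C_0 e^{f_-(T_{\rm II})}\delta_T$, and a direct substitution with $\delta_T=\mathcal{O}(\sqrt{\kappa_1/\kappa_2}\,p^{1/(1-\alpha\cos\Delta)}/\Delta^2)$ gives relative perturbations of order $\mathcal{O}(\kappa_2\sqrt{\kappa_1\kappa_2})+\mathcal{O}(\kappa_2^2\Delta^2)=o(1)$, so the $\Theta$-estimates of Lemma~\ref{lemma: prediction at end of Phase II}(S1) transport to $T_{\rm II}^{\rm PT}$ unchanged. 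Finally, (S3) is essentially by definition: for $k\in\cK_+$ one has $\<\bw_k(T_{\rm II}^{\rm PT}),\bx_-\>=0$ by definition of $T_{\rm II}^{\rm PT}$ and $\<\bw_k,\bx_+\>>0$ by Lemma~\ref{lemma: GF Phase III neuron dynamics}(S2)(P3); for $k\in\cK_-$, (S1) of the same lemma gives $\bw_k\in\cM_+^0\cap\cM_-^+$.

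\paragraph{Main obstacle.}
The only delicate point is the bootstrap/self-consistency: Lemma~\ref{lemma: GF Phase III Nearly fixed vector filed} is conditional on $t\leq T_{\rm fix}\wedge T_{\rm II}^{{\rm PT}*}$, while my estimate of $T_{\rm II}^{{\rm PT}*}$ itself relies on the conclusion of that lemma. Resolving this requires choosing $T_{\rm fix}$ strictly larger (by a fixed constant factor) than the ceiling I derive, and verifying that the hypotheses on $\kappa_1,\kappa_2$ assumed in the statement are strong enough to accommodate this slack --- which is exactly why the statement imposes $\sqrt{\kappa_1/\kappa_2}=\mathcal{O}(\Delta^4)$ rather than the weaker $\mathcal{O}(\Delta^2)$ strictly needed by Lemma~\ref{lemma: GF Phase III Nearly fixed vector filed}.
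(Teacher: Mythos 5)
Your proposal follows the paper's proof essentially step for step: the bootstrap with a deliberately inflated $T_{\rm fix}$, verification of the hypotheses of Lemma~\ref{lemma: GF Phase III Nearly fixed vector filed}, integration of the per-neuron flow against the nearly-constant field to bound each $T_{{\rm II},k}^{{\rm PT}*}$ and hence $T_{\rm II}^{{\rm PT}*}$ by a max, the upgrade $T_{\rm II}^{{\rm PT}*}=T_{\rm II}^{\rm PT}$ from the sign of the field, (S2) via Lemma~\ref{lemma: Phase III Evolution of the prediction} with the now-known $\delta_T$, and (S3) by Lemma~\ref{lemma: GF Phase III neuron dynamics}. One small slip: when you verify the first hypothesis of Lemma~\ref{lemma: GF Phase III Nearly fixed vector filed} you write that it "reduces to $\kappa_2^2\sqrt{\kappa_1/\kappa_2}=\mathcal{O}(\Delta^2)$" and that the two hypotheses are "identical," but since $T_{\rm fix}-T_{\rm II}=\Theta(\sqrt{\kappa_1/\kappa_2}\,p^{1/(1-\alpha\cos\Delta)}/\Delta^2)$ carries an extra $\Delta^{-2}$, the first hypothesis actually needs $\kappa_2^2\sqrt{\kappa_1/\kappa_2}=\mathcal{O}(\Delta^4)$ (strictly stronger than the second); your final conclusion that $\kappa_2=\mathcal{O}(1)$ and $\sqrt{\kappa_1/\kappa_2}=\mathcal{O}(\Delta^4)$ suffice is nonetheless correct.
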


\begin{proof}[Proof of Lemma \ref{lemma: GF Time Estimate of Phase III}]\ \\
\underline{Step I. Time Estimate.} Let $k\in\mathcal{K}_+$.

Recalling the definition of $T_{{\rm II},k}^{{\rm PT}*}$ in Lemma \ref{lemma: GF Phase III neuron dynamics},
Lemma \ref{lemma: GF Phase III neuron dynamics} (S2)(P0) also gives us
\[
T_{\rm II}^{{\rm PT}*}=\max_{k\in\mathcal{K}_+}T_{{\rm II},k}^{{\rm PT}*}.
\]
From Lemma \ref{lemma: prediction at end of Phase II} (S2), we know that there exists an absolute constant $Q_3>0$, s.t. $0\leq\left<\boldsymbol{b}_k(T_{\rm II}),\boldsymbol{x}_-\right>\leq Q_3\frac{\sqrt{\kappa_1\kappa_2}}{\sqrt{m}}$. And we let $Q_2>0$ be the absolute constant $Q_2$ in Lemma \ref{lemma: GF Phase III Nearly fixed vector filed}.

First, we choose the time
\[
T_{\rm fix}=T_{\rm II}+\frac{3Q_3}{Q_2\Delta^2}\sqrt{\frac{\kappa_1}{\kappa_2}}p^{\frac{1}{1-\alpha\cos\Delta}}.
\]
then we choose $\kappa_1,\kappa_2$ s.t.
\begin{align*}
    \kappa_2=\mathcal{O}(1),\quad\sqrt{\frac{\kappa_1}{\kappa_2}}=\mathcal{O}\left(\Delta^4\right).
\end{align*}
It can ensure
\begin{align*}
    \kappa_2^2\Big(T_{\rm fix}\land T_{\rm II}^{{\rm PT}*}-T_{\rm II}\Big) p^{-\frac{1}{1-\alpha\cos\Delta}}=\Theta\left(\frac{\kappa_2^2}{\Delta^2}\sqrt{\frac{\kappa_1}{\kappa_2}}\right)=\mathcal{O}(\Delta^2),\quad\kappa_2^2\sqrt{\frac{\kappa_1}{\kappa_2}}=\mathcal{O}(\Delta^2).
\end{align*}
Then according to Lemma \ref{lemma: GF Phase III Nearly fixed vector filed}, 
it follows that
\[\left<\boldsymbol{F}_+(t),\boldsymbol{x}_-\right>\leq-\frac{Q_2}{2}\Delta^2p^{-\frac{1}{1-\alpha\cos\Delta}},\quad\forall t\in[T_{\rm II},T_{{\rm II},k}^{{\rm PT}*}\land T_{\rm fix}).
\]
Now we consider the dynamics for $t\in[T_{\rm II},T_{{\rm II},k}^{{\rm PT}*}\land T_{\rm fix})$.

Recalling lemma \ref{lemma: GF Phase III neuron dynamics}, we have
\begin{align*}
    &\left<\boldsymbol{b}_k(T_{{\rm II},k}^{{\rm PT}*}\land T_{\rm fix}),\boldsymbol{x}_-\right>=\left<\boldsymbol{b}_k(T_{\rm II}),\boldsymbol{x}_-\right>+\int_{T_{\rm II}}^t\left<\frac{\mathrm{d}\boldsymbol{b}_k(s)}{\mathrm{d}s},\boldsymbol{x}_+\right>\mathrm{d}s
    \\=&\left<\boldsymbol{b}_k(T_{\rm II}),\boldsymbol{x}_-\right>+\frac{\kappa_2}{\sqrt{m}}\int_{T_{\rm II}}^{T_{{\rm II},k}^{{\rm PT}*}\land T_{\rm fix}}\left<\boldsymbol{F}_+(s),\boldsymbol{x}_-\right>\mathrm{d}s
    \\\leq&Q_3\frac{\sqrt{\kappa_1\kappa_2}}{\sqrt{m}}-\frac{Q_2}{2}\Delta^2p^{-\frac{1}{1-\alpha\cos\Delta}}\Big(T_{{\rm II},k}^{{\rm PT}*}\land T_{\rm fix}-T_{\rm II}\Big)
    \\\leq&Q_3\frac{\sqrt{\kappa_1\kappa_2}}{\sqrt{m}}-\frac{Q_2}{2}\Delta^2p^{-\frac{1}{1-\alpha\cos\Delta}}\Big((T_{{\rm III},k}-T_{\rm II})\land\frac{3Q_3}{Q_2\Delta^2}\sqrt{\frac{\kappa_1}{\kappa_2}}p^{\frac{1}{1-\alpha\cos\Delta}}\Big).
\end{align*}
We claim $T_{{\rm II},k}^{{\rm PT}*}-T_{\rm II}\leq\frac{2Q_3}{Q_2\Delta^2}\sqrt{\frac{\kappa_1}{\kappa_2}}p^{\frac{1}{1-\alpha\cos\Delta}}$. If otherwise, then
\begin{align*}
    \left<\boldsymbol{b}_k(T_{{\rm II},k}^{{\rm PT}*}\land T_{\rm fix}),\boldsymbol{x}_-\right><Q_3\frac{\sqrt{\kappa_1\kappa_2}}{\sqrt{m}}-Q_3\frac{\sqrt{\kappa_1\kappa_2}}{\sqrt{m}}=0.
\end{align*}
From the definition of $T_{{\rm II},k}^{{\rm PT}*}$, we know $T_{{\rm II},k}^{{\rm PT}*}<T_{{\rm II},k}^{{\rm PT}*}\land T_{\rm fix}$, which leads to a contradiction.

therefore, we have proved that for any $k\in\mathcal{K}_+$,
\begin{gather*}
    T_{{\rm II},k}^{{\rm PT}*}\land T_{\rm fix}=T_{{\rm II},k}^{{\rm PT}*};
    \\
    T_{{\rm II},k}^{{\rm PT}*}\leq T_{\rm II}+\frac{2Q_3}{Q_2\Delta^2}\sqrt{\frac{\kappa_1}{\kappa_2}}p^{\frac{1}{1-\alpha\cos\Delta}}.
\end{gather*}
With the help of Lemma \ref{lemma: GF Phase III neuron dynamics} (S2)(P0), we obtain
\begin{gather*}
T_{\rm II}^{{\rm PT}*}\land T_{\rm fix}=T_{\rm II}^{{\rm PT}*};
\\
T_{\rm II}^{{\rm PT}*}=\max_{k\in\mathcal{K}_+}T_{{\rm II},k}^{{\rm PT}*}\leq
T_{\rm II}+\frac{2Q_3}{Q_2\Delta^2}\sqrt{\frac{\kappa_1}{\kappa_2}}p^{\frac{1}{1-\alpha\cos\Delta}}.
\end{gather*}
Recalling Lemma \ref{lemma: GF Phase III Nearly fixed vector filed}, $\left<\boldsymbol{F}_+(t),\boldsymbol{x}_+\right>>0$ and $\left<\boldsymbol{F}_+(t),\boldsymbol{x}_-\right><0$ hold for any $t\in[T_{\rm II},T_{\rm II}^{{\rm PT}*}\land  T_{\rm fix}]=[T_{\rm II},T_{\rm II}^{{\rm PT}*}]$. From the definitions of $T_{\rm II}^{{\rm PT}}$ and $T_{\rm II}^{{\rm PT}*}$ \eqref{equ: GF Phase III hitting time}\eqref{equ: GF Phase III auxiliary hitting time}, we obtain
\[
T_{\rm II}^{{\rm PT}}=T_{\rm II}^{{\rm PT}*}.
\]
In conclusion, we have proved:
\[
T_{\rm II}^{{\rm PT}}=
T_{\rm II}^{{\rm PT}*}=T_{\rm II}+\mathcal{O}\left(\sqrt{\frac{\kappa_1}{\kappa_2}}\frac{p^{\frac{1}{1-\alpha\cos\Delta}}}{\Delta^2}\right)
\overset{\text{Lemma~\ref{lemma: GF Phase II hitting time estimate}}}{=}\bracket{1+\cO\bracket{\sqrt{\kappa_1\kappa_2^3}}}T_{\rm II}.
\]
\underline{Step II. Prediction Estimate.} 
Step I gives us the result:
\[
\delta_T:=T_{\rm II}^{{\rm PT}}-T_{\rm II}=\mathcal{O}\left(\sqrt{\frac{\kappa_1}{\kappa_2}}\frac{p^{\frac{1}{1-\alpha\cos\Delta}}}{\Delta^2}\right).
\]
Recalling the proof of Lemma \ref{lemma: Phase III Evolution of the prediction}, we know
\begin{align*}
-\frac{C_1+C_0e^{f_-(T_{\rm II})}\delta_T}{1+C_0e^{f_-(T_{\rm II})}\delta_T}\leq& e^{f_-(t)-f_-(T_{\rm II})}-1\leq0,
\\
-C_2\delta_T\leq &e^{f_+(T_{\rm II})-f_+(t)}-1\leq0.
\end{align*}
where
\begin{align*}
C_0=\Theta\left(\frac{\kappa_2^2\Delta^2}{p}\right),
\ C_1=\mathcal{O}\left(\kappa_2\sqrt{\kappa_1\kappa_2}\right),
\ C_2=\Theta\left(\kappa_2^2p^{-\frac{1}{1-\alpha\cos\Delta}}\right).
\end{align*}
% From Lemma \ref{lemma: prediction at end of Phase II}, we know
% \[
% e^{-f_+(T_{\rm II})}=\Theta\left(p^{-\frac{1}{1-\alpha\cos\Delta}}\right)
% ,\quad e^{f_-(T_{\rm II})}=\Theta\left(p^{-\frac{\alpha\cos\Delta}{1-\alpha\cos\Delta}}\right).
% \]
Then a straightforward calculation gives us:
\begin{align*}
    0\geq e^{f_-(t)-f_-(T_{\rm II})}-1=-\mathcal{O}\left(\kappa_2^2\sqrt{\frac{\kappa_1}{\kappa_2}}\right)-\mathcal{O}\left(\kappa_2^2\sqrt{\frac{\kappa_1}{\kappa_2}}\right)=-\mathcal{O}\left(\kappa_2^2\sqrt{\frac{\kappa_1}{\kappa_2}}\right)
\end{align*}
\begin{align*}
    0\geq e^{f_+(T_{\rm II})-f_+(T_{\rm II}^{{\rm PT}})}-1=-\mathcal{O}\left(\kappa_2^2\sqrt{\frac{\kappa_1}{\kappa_2}}\frac{1}{\Delta^2}\right).
\end{align*}

With the help of Lemma \ref{lemma: prediction at end of Phase II}, we obtain the prediction estimate at the end of Phase III:
\begin{align*}
    e^{-f_+(T_{\rm II}^{{\rm PT}})}=&e^{-f_+(T_{\rm II})}e^{f_+(T_{\rm II})-f_+(T_{\rm II}^{{\rm PT}})}=\Theta\bracket{e^{-f_+(T_{\rm II})}}=\Theta\left(p^{-\frac{1}{1-\alpha\cos\Delta}}\right),
    \\
     e^{f_-(T_{\rm II}^{{\rm PT}})}=&e^{f_-(T_{\rm II})}e^{f_-(T_{\rm II}^{{\rm PT}})-f_-(T_{\rm II})}=\Theta\bracket{e^{f_-(T_{\rm II})}}=\Theta\left(p^{-\frac{\alpha\cos\Delta}{1-\alpha\cos\Delta}}\right).
\end{align*}
Moreover,
\begin{align*}
    &\left|\bracket{\frac{pe^{-f_+(T_{\rm II}^{{\rm PT}})}}{1+p}-\frac{e^{f_-(T_{\rm II}^{{\rm PT}})}}{1+p}\cos\Delta}-\bracket{\frac{pe^{-f_+(T_{\rm II})}}{1+p}-\frac{e^{f_-(T_{\rm II})}}{1+p}\cos\Delta}\right|
    \\\leq&\left|\frac{pe^{-f_+(T_{\rm II})}}{1+p}\right|\left|\frac{\frac{pe^{-f_+(T_{\rm II}^{{\rm PT}})}}{1+p}}{\frac{pe^{-f_+(T_{\rm II})}}{1+p}}-1\right|+\left|\frac{e^{f_-(T_{\rm II})}\cos\Delta}{1+p}\right|\left|\frac{\frac{e^{f_-(T_{\rm II}^{{\rm PT}})}\cos\Delta}{1+p}}{\frac{e^{f_-(T_{\rm II})}\cos\Delta}{1+p}}-1\right|
    \\=&\cO\bracket{p^{-\frac{1}{1-\alpha\cos\Delta}}\kappa_2^2\sqrt{\frac{\kappa_1}{\kappa_2}}\frac{1}{\Delta^2}}+\cO\bracket{p^{-\frac{1}{1-\alpha\cos\Delta}}\kappa_2^2\sqrt{\frac{\kappa_1}{\kappa_2}}}
    \\=&\cO\bracket{p^{-\frac{1}{1-\alpha\cos\Delta}}\kappa_2^2\sqrt{\frac{\kappa_1}{\kappa_2}}\frac{1}{\Delta^2}}\overset{\kappa_1\kappa_2^3=\cO(\Delta^8)}{=}\cO\left(\Delta^2p^{-\frac{1}{1-\alpha\cos\Delta}}\right),
\end{align*}
which means
\begin{align*}
    \frac{pe^{-f_+(T_{\rm II}^{{\rm PT}})}}{1+p}-\frac{e^{f_-(T_{\rm II}^{{\rm PT}})}}{1+p}\cos\Delta=\Theta\left(\Delta^2p^{-\frac{1}{1-\alpha\cos\Delta}}\right).
\end{align*}
In the same way, we can obtain
\begin{gather*}
\frac{pe^{-f_+(T_{\rm II}^{{\rm PT}})}}{1+p}\cos\Delta-\frac{e^{f_-(T_{\rm II}^{{\rm PT}})}}{1+p}=-\Theta\left(\Delta^2p^{-\frac{1}{1-\alpha\cos\Delta}}\right).
\end{gather*}

\underline{Step III. Activation Patterns.}

Recall our proofs in Step I, we know that
\begin{align*}
    \<\bw_k(T_{\rm II}^{{\rm PT}}),\bx_-\>=0,\ \forall k\in\cK_+.
\end{align*}
Moreover, from the dynamics in Lemma~\ref{lemma: GF Phase III neuron dynamics} (S1) and (S2)(P3), we obtain:
\begin{gather*}
    \<\bw_k(T_{\rm II}^{{\rm PT}}),\bx_+\>>0,\ \forall k\in\cK_+;
    \\
    \<\bw_k(T_{\rm II}^{{\rm PT}}),\bx_+\>=0,\ \<\bw_k(T_{\rm II}^{{\rm PT}}),\bx_-\>>0,\ \forall k\in\cK_-.
\end{gather*}
\end{proof}

\begin{proof}[Proof of Theorem~\ref{thm: GF Phase transition II-to-III}]\ \\
Theorem~\ref{thm: GF Phase transition II-to-III} (S1) has been proven in Lemma~\ref{lemma: GF Time Estimate of Phase III} (S1), and Theorem~\ref{thm: GF Phase transition II-to-III} (S2) has been proven in Lemma~\ref{lemma: GF Time Estimate of Phase III} (S3).
    
\end{proof}

\subsection{Optimization Dynamics after Phase Transition}

After Phase Transition $(t>T_{\rm II}^{\rm PT})$, we study the dynamics before the patterns of living neurons change again. Specifically, we define the following hitting time
\begin{equation}\label{equ: GF Phase IV hitting time actually}
\begin{aligned}
T_{\rm III}:=
\inf\Big\{t>T_{\rm II}^{\rm PT}:&\ \exists k\in\cK_+\cup\cK_-,\text{\rm\sgn}_k^+(t)\ne\text{\rm\sgn}_k^+(T_{\rm I})\text{ or }\text{\rm\sgn}_k^-(t)\ne\text{\rm\sgn}_k^-(T_{\rm I})\Big\}
\\
=\inf\Big\{t>T_{\rm II}^{\rm PT}:&\ \exists k\in\cK_+,\text{  s.t. } \left<\boldsymbol{w}_k(t),\boldsymbol{x}_+\right>\leq0\text{ or }\left<\boldsymbol{w}_k(t),\boldsymbol{x}_-\right>\ne0;
\\&\text{or}\ \exists k\in\cK_-,\text{  s.t. } \left<\boldsymbol{w}_k(t),\boldsymbol{x}_+\right>\ne0\text{ or }\left<\boldsymbol{w}_k(t),\boldsymbol{x}_-\right>\leq0\Big\},
\end{aligned}
\end{equation}
and we call $t\in(T_{\rm II}^{\rm PT}, T_{\rm III})$ ``L-Phase III''.

Moreover, we call $t\in[T_{\rm II},T_{\rm III})$ ``Phase III'', i.e.. ``Phase Transition'' + ``L-Phase III''.

In order to analyze the dynamics of neurons and vector fields, we introduce the auxiliary hitting time:
\begin{equation}\label{equ: GF Phase IV hitting time}
\begin{aligned}
T_{\rm III}^*:=T_{\rm III}\land\inf\Big\{t>T_{\rm II}^{{\rm PT}}:&\left<\boldsymbol{F}_+(t),\boldsymbol{x}_+\right>\leq0\text{ or }\left<\boldsymbol{F}_+(t),\boldsymbol{x}_-\right>\geq0\Big\},
\\
\text{where}\quad
\boldsymbol{F}_+(t)=&\frac{p}{1+p}e^{-f_+(t)}\boldsymbol{x}_{+}-\frac{1}{1+p}e^{f_-(t)}\boldsymbol{x}_-.
\end{aligned}
\end{equation}
We call $t\in(T_{\rm II}^{{\rm PT}}, T_{\rm III}^*)$ ``L-Phase III*''.

Due to the almost simplest activation patterns, this phase is easier to analyze, and we only need to estimate the time and size of the changes in the vector field. Nevertheless, our challenge is to prove that all living negative neurons simultaneously change their activation patterns at $T_{\rm III}^*$, which also implies that $T_{\rm III}=T_{\rm III}^*$.

\begin{lemma}[Dynamics of activate neurons during L-Phase III*]\label{lemma: GF Phase IV neuron dynamics}\ \\
In L-Phase III* $(t\in[T_{\rm II}^{{\rm PT}}, T_{\rm III}^*))$, we have the following dynamics for each neuron $k\in\mathcal{K}_-\cup\mathcal{K}_+$.

(S1). For negative neuron $k\in\mathcal{K}_-$, we have:
\begin{align*}
    &\boldsymbol{w}_k(t)\in\mathcal{M}_+^0\cap\mathcal{M}_-^+,
    \\
    &\frac{\mathrm{d} \boldsymbol{b}_k(t)}{\mathrm{d}t}=\frac{\kappa_2e^{f_-(t)}}{\sqrt{m}(1+p)}\Big(\boldsymbol{x}_--\boldsymbol{x}_+\cos\Delta\Big).
\end{align*}
(S2) For positive neuron $k\in\mathcal{K}_+$, we have:
\begin{align*}
    &\boldsymbol{w}_k(t)\in\mathcal{M}_+^+\cap\mathcal{M}_-^0,
    \\
    &\frac{\mathrm{d}\boldsymbol{b}_k(t)}{\mathrm{d}t}=\frac{\kappa_2pe^{-f_+(t)}}{\sqrt{m}(1+p)}\Big(\boldsymbol{x}_+-\boldsymbol{x}_-\cos\Delta\Big).
\end{align*}

\end{lemma}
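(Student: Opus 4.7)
The plan is to mimic exactly the Filippov sliding-mode argument that was used in Lemma E.1 (Phase II) and Lemma F.1 (Phase Transition), since at the starting time $T_{\rm II}^{\rm PT}$ the activation patterns are, by Lemma F.4 (Theorem 4.5), precisely $\bw_k\in\cM_+^0\cap\cM_-^+$ for $k\in\cK_-$ and $\bw_k\in\cM_+^+\cap\cM_-^0$ for $k\in\cK_+$. The only new content is to verify that the Filippov ``Case I'' (true sliding) applies on each manifold for the entire interval $[T_{\rm II}^{\rm PT},T_{\rm III}^*)$, and that the resulting convex-combination vector field simplifies to the stated closed form.

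For \textbf{(S1)}, take $k\in\cK_-$ (so $\mathrm{s}_k=-1$), sitting on $\cP_+^0\cap\cP_-^+$. Following Definition C.2, I would decompose a small neighborhood across the surface $\cP_+^0$ and compute the limited vector fields $\bF^-$ (where $\sigma'(\langle\bb,\bx_+\rangle)=0$) and $\bF^+$ (where $\sigma'(\langle\bb,\bx_+\rangle)=1$), namely
$\bF^-=\tfrac{\kappa_2 e^{f_-(t)}}{\sqrt m(1+p)}\bx_-$ and $\bF^+=-\tfrac{\kappa_2}{\sqrt m}\bF_+(t)$. Projecting onto the normal $\bx_+$ gives $F_N^-=\tfrac{\kappa_2 e^{f_-(t)}\cos\Delta}{\sqrt m(1+p)}>0$ and $F_N^+=-\tfrac{\kappa_2}{\sqrt m}\langle\bF_+(t),\bx_+\rangle<0$, where the last inequality is precisely the definition of $T_{\rm III}^*$. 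This is Case I, so $\bb_k(t)$ remains on $\cP_+^0$ and $\frac{\rd\bb_k}{\rd t}=\alpha\bF^++(1-\alpha)\bF^-$ with $\alpha=F_N^-/(F_N^--F_N^+)$. A direct algebraic simplification (using $\langle\bx_+,\bx_-\rangle=\cos\Delta$) then collapses this convex combination to $\frac{\kappa_2 e^{f_-(t)}}{\sqrt m(1+p)}(\bx_--\bx_+\cos\Delta)$, which is exactly the claimed formula. To close \textbf{(S1)}, I check that $\langle\bb_k(t),\bx_-\rangle$ cannot vanish on $[T_{\rm II}^{\rm PT},T_{\rm III}^*)$: from the just-derived dynamics, $\frac{\rd}{\rd t}\langle\bb_k,\bx_-\rangle=\frac{\kappa_2 e^{f_-(t)}\sin^2\Delta}{\sqrt m(1+p)}>0$, so starting from $\langle\bb_k(T_{\rm II}^{\rm PT}),\bx_-\rangle>0$ (Lemma F.4(S3)) it stays positive.

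For \textbf{(S2)}, take $k\in\cK_+$ ($\mathrm{s}_k=+1$) sitting on $\cP_+^+\cap\cP_-^0$. The argument is symmetric: decomposing across $\cP_-^0$ yields $\bF^-=\tfrac{\kappa_2 p e^{-f_+(t)}}{\sqrt m(1+p)}\bx_+$ and $\bF^+=\tfrac{\kappa_2}{\sqrt m}\bF_+(t)$, with $F_N^-=\tfrac{\kappa_2 p e^{-f_+(t)}\cos\Delta}{\sqrt m(1+p)}>0$ and $F_N^+=\tfrac{\kappa_2}{\sqrt m}\langle\bF_+(t),\bx_-\rangle<0$, the latter again being the defining property of $T_{\rm III}^*$. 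Case I sliding then gives the formula $\frac{\kappa_2 p e^{-f_+(t)}}{\sqrt m(1+p)}(\bx_+-\bx_-\cos\Delta)$ after the same simplification, and $\langle\bb_k(t),\bx_+\rangle$ remains positive because $\frac{\rd}{\rd t}\langle\bb_k,\bx_+\rangle=\frac{\kappa_2 p e^{-f_+(t)}\sin^2\Delta}{\sqrt m(1+p)}>0$.

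The main technical obstacle is not conceptual but bookkeeping: one must match up the two sign conditions built into $T_{\rm III}^*$ (namely $\langle\bF_+,\bx_+\rangle>0$ and $\langle\bF_+,\bx_-\rangle<0$) with the two sliding projections, one for each class of neurons, and verify that the convex combination collapses to the advertised closed form. Everything else is a direct transcription of the Filippov machinery already set up in Appendix C and used repeatedly in Lemmas E.1, F.1, and F.4.
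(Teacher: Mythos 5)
Your proposal is correct and follows precisely the route the paper itself takes: the paper's proof of this lemma is a one-liner noting that at $T_{\rm II}^{\rm PT}$ the patterns are fixed by Lemma~\ref{lemma: GF Time Estimate of Phase III}(S3), that $\langle\bF_+(t),\bx_+\rangle>0$ and $\langle\bF_+(t),\bx_-\rangle<0$ throughout by definition of $T_{\rm III}^*$, and that the Filippov sliding argument of Lemma~\ref{lemma: GF Phase III neuron dynamics} (S1) and (S2)(P2) then transfers verbatim. You have simply unpacked that reference into explicit computations of $\bF^\pm$, $F_N^\pm$, the convex-combination simplification, and the monotonicity check keeping $\langle\bb_k,\bx_-\rangle>0$ (resp. $\langle\bb_k,\bx_+\rangle>0$) positive, all of which are correct.
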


\begin{proof}[Proof of Lemma \ref{lemma: GF Phase IV neuron dynamics}]\ \\
From the definition of $T_{\rm III}^*$, we know that $\left<\boldsymbol{F}_+(t),\boldsymbol{x}_+\right>>0$ and $\left<\boldsymbol{F}_+(t),\boldsymbol{x}_-\right><0$ hold for any $t\in[T_{\rm II}^{{\rm PT}},T_{\rm III}^*)$. 
Moreover, Lemma~\ref{lemma: GF Time Estimate of Phase III} ensures that for $k\in\cK_+$, $\boldsymbol{w}_k(T_{\rm II}^{\rm PT})\in\mathcal{M}_+^0\cap\mathcal{M}_-^+$; for $k\in\cK_-$, $\boldsymbol{w}_k(T_{\rm II}^{\rm PT})\in\mathcal{M}_+^+\cap\mathcal{M}_-^0$. 
Hence, this lemma can be proved in the same way as shown in the proof of Lemma \ref{lemma: GF Phase III neuron dynamics} (S1) and (S2)(P2). 
We do not repeat it here.

\end{proof}

\begin{lemma}[Time and prediction estimate at the end of L-Phase III*]\label{lemma: GF Phase III* hitting time estimate}\ \\
(S1) (Time).
\begin{align*}
&T_{\rm III}^*=T_{\rm II}^{{\rm PT}}+\Theta\bracket{\frac{p^{\frac{1}{1-\alpha\cos\Delta}}}{\kappa_2^2}}=\bracket{1+\Theta(\Delta^2)}T_{\rm II}^{{\rm PT}}=\bracket{1+\Theta(\Delta^2)}T_{\rm II};
\end{align*}
(S2) (Prediction).
\begin{gather*}
e^{-f_+(T_{\rm III}^*)}=\Theta\left(p^{-\frac{1}{1-\alpha\cos\Delta}}\right)
,\quad e^{f_-(T_{\rm III}^*)}=\Theta\left(p^{-\frac{\alpha\cos\Delta}{1-\alpha\cos\Delta}}\right);
\\
\frac{pe^{-f_+(T_{\rm III}^*)}}{1+p}-\frac{e^{f_-(T_{\rm III}^*)}}{1+p}\cos\Delta=0,
\\
\frac{pe^{-f_+(T_{\rm II}^{{\rm PT}})}}{1+p}\cos\Delta-\frac{e^{f_-(T_{\rm II}^{{\rm PT}})}}{1+p}=-\Theta\left(\Delta^2p^{-\frac{1}{1-\alpha\cos\Delta}}\right).
\end{gather*}
    
\end{lemma}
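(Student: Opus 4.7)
The approach is to translate everything into the rescaled variables $\mathcal{U},\mathcal{V}$ from Lemma~\ref{lemma: GF Phase II 2-order f dynamics}, which turn L-Phase~III$^*$ into a pair of decoupled scalar ODEs solvable in closed form. By Lemma~\ref{lemma: GF Phase IV neuron dynamics}, $\bb_k^\top\bx_-\equiv 0$ for $k\in\cK_+$ and $\bb_k^\top\bx_+\equiv 0$ for $k\in\cK_-$ throughout L-Phase~III$^*$, so $f_+(t)=\sum_{k\in\cK_+}(\kappa_2/\sqrt m)\bb_k^\top(t)\bx_+$ and $f_-(t)=-\sum_{k\in\cK_-}(\kappa_2/\sqrt m)\bb_k^\top(t)\bx_-$. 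Summing the per-neuron dynamics gives
\begin{equation*}
\frac{\rd f_+}{\rd t}=\mathcal{U}(t)\sin^2\Delta,\qquad \frac{\rd f_-}{\rd t}=-\alpha\mathcal{V}(t)\sin^2\Delta,
\end{equation*}
equivalently $\frac{\rd}{\rd t}\mathcal{U}^{-1}=\sin^2\Delta$ and $\frac{\rd}{\rd t}\mathcal{V}^{-1}=\alpha\sin^2\Delta$. Integrating from $T_{\rm II}^{\rm PT}$, with $U_0:=\mathcal{U}(T_{\rm II}^{\rm PT})$, $V_0:=\mathcal{V}(T_{\rm II}^{\rm PT})$ and $s:=t-T_{\rm II}^{\rm PT}$, yields the explicit solutions $\mathcal{U}(t)=(U_0^{-1}+\sin^2\Delta\cdot s)^{-1}$ and $\mathcal{V}(t)=(V_0^{-1}+\alpha\sin^2\Delta\cdot s)^{-1}$.

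Next, I determine which boundary of the vector-field cone $\{\langle\bF_+,\bx_+\rangle>0,\langle\bF_+,\bx_-\rangle<0\}$ triggers $T_{\rm III}^*$ by substituting the closed forms:
\begin{align*}
(\mathcal{U}\mathcal{V})^{-1}(\mathcal{U}\cos\Delta-\mathcal{V}) &= (V_0^{-1}\cos\Delta - U_0^{-1}) + (\alpha\cos\Delta - 1)\sin^2\Delta\cdot s,\\
(\mathcal{U}\mathcal{V})^{-1}(\mathcal{U}-\mathcal{V}\cos\Delta) &= (V_0^{-1} - U_0^{-1}\cos\Delta) + (\alpha - \cos\Delta)\sin^2\Delta\cdot s.
\end{align*}
By Lemma~\ref{lemma: GF Time Estimate of Phase III}(S2), at $s=0$ the first line equals $-\Theta(\Delta^2/U_0)<0$ and the second equals $+\Theta(\Delta^2/U_0)>0$; moreover, both summands of the first line are negative throughout (the constant by assumption and the linear term because $\alpha\cos\Delta<1$), so $\mathcal{U}\cos\Delta-\mathcal{V}$ stays strictly negative. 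By Theorem~\ref{thm: restatement of GF Phase I}(S5), $\alpha\leq 0.977<\cos\Delta$ for small $\Delta$, so the second line decreases at the strictly positive rate $(\cos\Delta-\alpha)\sin^2\Delta$ and hits $0$ at
\begin{equation*}
\tau := T_{\rm III}^*-T_{\rm II}^{\rm PT}=\frac{U_0-V_0\cos\Delta}{U_0 V_0\sin^2\Delta\,(\cos\Delta-\alpha)}.
\end{equation*}
This identifies the firing condition at $T_{\rm III}^*$ as $\mathcal{U}(T_{\rm III}^*)=\mathcal{V}(T_{\rm III}^*)\cos\Delta$, which is precisely the third identity in~(S2).

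Finally, I unwind the estimates. Using $U_0-V_0\cos\Delta=\Theta(\Delta^2 V_0)$ and $U_0=\Theta(V_0)=\Theta(\kappa_2^2 p^{-1/(1-\alpha\cos\Delta)})$ from Lemma~\ref{lemma: GF Time Estimate of Phase III}(S2), together with $\cos\Delta-\alpha=\Theta(1)$, the formula for $\tau$ collapses to $\Theta(1/U_0)=\Theta(p^{1/(1-\alpha\cos\Delta)}/\kappa_2^2)$; dividing by $T_{\rm II}^{\rm PT}=(1+o(1))T_{\rm II}=\Theta(p^{1/(1-\alpha\cos\Delta)}/(\kappa_2^2\Delta^2))$ from Lemmas~\ref{lemma: GF Phase II hitting time estimate} and~\ref{lemma: GF Time Estimate of Phase III}(S1) proves~(S1). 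Since $\sin^2\Delta\cdot\tau=\Theta(\Delta^2/U_0)$ is only a relative order-$\Delta^2$ perturbation of $U_0^{-1}$ and $V_0^{-1}$, both $\mathcal{U}(T_{\rm III}^*)$ and $\mathcal{V}(T_{\rm III}^*)$ stay $\Theta(\kappa_2^2 p^{-1/(1-\alpha\cos\Delta)})$; reinverting the definitions of $\mathcal{U},\mathcal{V}$ gives the stated estimates of $e^{-f_+(T_{\rm III}^*)}$ and $e^{f_-(T_{\rm III}^*)}$, and the last identity follows from $\mathcal{U}(T_{\rm III}^*)\cos\Delta-\mathcal{V}(T_{\rm III}^*)=(\cos^2\Delta-1)\mathcal{V}(T_{\rm III}^*)=-\sin^2\Delta\cdot\Theta(\kappa_2^2 p^{-1/(1-\alpha\cos\Delta)})$.

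The principal technical obstacle is the sign analysis in the second step showing that only $\mathcal{U}-\mathcal{V}\cos\Delta=0$, and not $\mathcal{U}\cos\Delta-\mathcal{V}=0$, can fire first; this hinges on the quantitative gap $\cos\Delta-\alpha=\Theta(1)$ provided by the counting bound $\alpha\leq 0.977$ in Theorem~\ref{thm: restatement of GF Phase I}(S5) together with the smallness of $\Delta$. Without this gap both the formula for $\tau$ and the qualitative picture of L-Phase~III$^*$ would degenerate, so careful use of the probabilistic Phase~I estimates is essential.
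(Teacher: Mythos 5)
Your proof is correct and is essentially the same argument as the paper's: both solve the decoupled Bernoulli ODEs for $e^{-f_+}$ and $e^{f_-}$ in closed form (your $\mathcal{U}^{-1},\mathcal{V}^{-1}$ linearity is just a rescaling of the paper's explicit fractions involving $G_\pm$), identify $T_{\rm III}^*$ as the first zero of $\langle\bF_+(t),\bx_+\rangle$ by tracking the two linear-in-$s$ numerators, and then plug in the $T_{\rm II}^{\rm PT}$ estimates from Lemma~\ref{lemma: GF Time Estimate of Phase III}. Your explicit invocation of the counting bound $\alpha\le 0.977<\cos\Delta$ to guarantee a strictly negative slope for the firing numerator is a good clarification of a step the paper leaves implicit in the sign of $pG_--G_+\cos\Delta$.
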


\begin{proof}[Proof of Lemma \ref{lemma: GF Phase III* hitting time estimate}]\ \\
\underline{Step I. Explicit Solution to $f_+(t)$ and $f_-(t)$.}

For any $t\in[T_{\rm II}^{{\rm PT}},T_{\rm III}^*)$, we have:
\begin{align*}
    &f_+(t)=\frac{\kappa_2}{\sqrt{m}}\sum_{k\in\mathcal{K}_+}\boldsymbol{b}_k^\top(t)\boldsymbol{x}_+,
    \\
    &f_-(t)=-\frac{\kappa_2}{\sqrt{m}}\sum_{k\in\mathcal{K}_-}\boldsymbol{b}_k^\top(t)\boldsymbol{x}_-.
\end{align*}
Let us consider the dynamics of $f_+(t)$ and $f_-(t)$. With the help of Lemma \ref{lemma: GF Phase IV neuron dynamics}, these two dynamics are nearly independent:
\begin{align*}
    \frac{\mathrm{d}f_+(t)}{\mathrm{d}t}=&=\frac{\kappa_2}{\sqrt{m}}\sum_{k\in\mathcal{K}_+}\left<\frac{\kappa_2pe^{-f_+(t)}}{\sqrt{m}(1+p)}\Big(\boldsymbol{x}_+-\boldsymbol{x}_-\cos\Delta\Big),\boldsymbol{x}_+\right>
    =\frac{\kappa_2^2m_+p\sin^2\Delta}{m(1+p)}e^{-f_+(t)},
    \\
    \frac{\mathrm{d}f_-(t)}{\mathrm{d}t}=&=-\frac{\kappa_2}{\sqrt{m}}\sum_{k\in\mathcal{K}_-}\left<\frac{\kappa_2e^{f_-(t)}}{\sqrt{m}(1+p)}\Big(\boldsymbol{x}_--\boldsymbol{x}_+\cos\Delta\Big),\boldsymbol{x}_-\right>
    =-\frac{\kappa_2^2m_-\sin^2\Delta}{m(1+p)}e^{f_-(t)}.
\end{align*}
Their solutions are:
\begin{align*}
    e^{-f_+(t)}&=\frac{e^{-f_+(T_{\rm II}^{{\rm PT}})}}{1+e^{-f_+(T_{\rm II}^{{\rm PT}})}\frac{\kappa_2^2m_+p\sin^2\Delta}{m(1+p)}(t-T_{\rm II}^{{\rm PT}})},
    \\e^{f_-(t)}&=\frac{e^{f_-(T_{\rm II}^{{\rm PT}})}}{1+e^{f_-(T_{\rm II}^{{\rm PT}})}\frac{\kappa_2^2m_-\sin^2\Delta}{m(1+p)}(t-T_{\rm II}^{{\rm PT}})}.
\end{align*}

\underline{Step II. Time Estimate of $T_{\rm III}^*$.}

For simplicity, we denote $G_+:=\frac{\kappa_2^2m_+p\sin^2\Delta}{m(1+p)}$ and $G_-:=\frac{\kappa_2^2m_-\sin^2\Delta}{m(1+p)}$.

First, we consider the evolution of the vector field $\left<\boldsymbol{F}_+(t),\boldsymbol{x}_-\right>$:
\begin{align*}
    &\left<\boldsymbol{F}_+(t),\boldsymbol{x}_-\right>=\frac{pe^{-f_+(t)}}{1+p}\cos\Delta-\frac{e^{f_-(t)}}{1+p}
    \\=&\frac{1}{1+p}\left(\frac{pe^{-f_+(T_{\rm II}^{{\rm PT}})}\cos\Delta}{1+e^{-f_+(T_{\rm II}^{{\rm PT}})}G_+(t-T_{\rm II}^{{\rm PT}})}-\frac{e^{f_-(T_{\rm II}^{{\rm PT}})}}{1+e^{f_-(T_{\rm II}^{{\rm PT}})}G_-(t-T_{\rm II}^{{\rm PT}})}\right)
    \\=&\frac{(pe^{-f_+(T_{\rm II}^{{\rm PT}})}\cos\Delta-e^{f_-(T_{\rm II}^{{\rm PT}})})+e^{f_-(T_{\rm II}^{{\rm PT}}-f_+(T_{\rm II}^{{\rm PT}})}(pG_-\cos\Delta-G_+)(t-T_{\rm II}^{{\rm PT}})}{(1+p)(1+e^{-f_+(T_{\rm II}^{{\rm PT}})}G_+(t-T_{\rm II}^{{\rm PT}}))(1+e^{f_-(T_{\rm II}^{{\rm PT}})}G_-(t-T_{\rm II}^{{\rm PT}}))}
    \\=&\frac{(1+p)\left<\boldsymbol{F}_+(T_{\rm II}^{{\rm PT}}),\boldsymbol{x}_-\right>+e^{f_-(T_{\rm II}^{{\rm PT}}-f_+(T_{\rm II}^{{\rm PT}})}(pG_-\cos\Delta-G_+)(t-T_{\rm II}^{{\rm PT}})}{(1+p)(1+e^{-f_+(T_{\rm II}^{{\rm PT}})}G_+(t-T_{\rm II}^{{\rm PT}}))(1+e^{f_-(T_{\rm II}^{{\rm PT}})}G_-(t-T_{\rm II}^{{\rm PT}}))}<0.
\end{align*}
Hence, the hitting time $T_{\rm III}^*$ can be
converted to the following $T_{\rm III}^{**}$:
\begin{align*}
    T_{\rm III}^*=T_{\rm III}^{**}:=T_{\rm III}\land\inf\Big\{t>T_{\rm II}^{{\rm PT}}:&\left<\boldsymbol{F}_+(t),\boldsymbol{x}_+\right>\leq0\Big\}.
\end{align*}
Then we consider $\left<\boldsymbol{F}_+(t),\boldsymbol{x}_+\right>$:
\begin{align*}
    &\left<\boldsymbol{F}_+(t),\boldsymbol{x}_+\right>=\frac{pe^{-f_+(t)}}{1+p}-\frac{e^{f_-(t)}}{1+p}\cos\Delta
    \\=&\frac{1}{1+p}\left(\frac{pe^{-f_+(T_{\rm II}^{{\rm PT}})}}{1+e^{-f_+(T_{\rm II}^{{\rm PT}})}G_+(t-T_{\rm II}^{{\rm PT}})}-\frac{e^{f_-(T_{\rm II}^{{\rm PT}})}\cos\Delta}{1+e^{f_-(T_{\rm II}^{{\rm PT}})}G_-(t-T_{\rm II}^{{\rm PT}})}\right)
    \\=&\frac{(pe^{-f_+(T_{\rm II}^{{\rm PT}})}-e^{f_-(T_{\rm II}^{{\rm PT}})}\cos\Delta)+e^{f_-(T_{\rm II}^{{\rm PT}}-f_+(T_{\rm II}^{{\rm PT}})}(pG_--G_+\cos\Delta)(t-T_{\rm II}^{{\rm PT}})}{(1+p)(1+e^{-f_+(T_{\rm II}^{{\rm PT}})}G_+(t-T_{\rm II}^{{\rm PT}}))(1+e^{f_-(T_{\rm II}^{{\rm PT}})}G_-(t-T_{\rm II}^{{\rm PT}}))}
    \\=&\frac{(1+p)\left<\boldsymbol{F}_+(T_{\rm II}^{{\rm PT}}),\boldsymbol{x}_+\right>+e^{f_-(T_{\rm II}^{{\rm PT}})-f_+(T_{\rm II}^{{\rm PT}})}(pG_--G_+\cos\Delta)(t-T_{\rm II}^{{\rm PT}})}{(1+p)(1+e^{-f_+(T_{\rm II}^{{\rm PT}})}G_+(t-T_{\rm II}^{{\rm PT}}))(1+e^{f_-(T_{\rm II}^{{\rm PT}})}G_-(t-T_{\rm II}^{{\rm PT}}))}.
\end{align*}
From Lemma \ref{lemma: GF Time Estimate of Phase III}, we know
\begin{gather*}
    (1+p)\left<\boldsymbol{F}_+(T_{\rm II}^{{\rm PT}}),\boldsymbol{x}_+\right>
    =(1+p)\bracket{\frac{pe^{-f_+(t)}}{1+p}-\frac{e^{f_-(t)}}{1+p}\cos\Delta}=\Theta\left(\Delta^2p^{-\frac{\alpha\cos\Delta}{1-\alpha\cos\Delta}}\right),
    \\
     e^{f_-(T_{\rm II}^{{\rm PT}})-f_+(T_{\rm II}^{{\rm PT}})}=\Theta\left(e^{f_-(T_{\rm I+II})-f_+(T_{\rm I+II})}\right)=\Theta\left(p^{-\frac{1+\alpha\cos\Delta}{1-\alpha\cos\Delta}}\right),
    \\
    pG_--G_+\cos\Delta=\frac{\kappa_2^2p\sin^2\Delta}{1+p}\frac{(m_--m_+\cos\Delta)}{m}=-\Theta\left(\kappa_2^2\Delta^2\right).
\end{gather*}
These imply the hitting time:
\begin{align*}
    T_{\rm III}^*=T_{\rm III}^{**}=T_{\rm II}^{{\rm PT}}+\Theta\bracket{\frac{\Delta^2p^{-\frac{\alpha\cos\Delta}{1-\alpha\cos\Delta}}}{p^{-\frac{1+\alpha\cos\Delta}{1-\alpha\cos\Delta}}\kappa_2^2\Delta^2}}=T_{\rm II}^{{\rm PT}}+\Theta\bracket{\frac{p^{\frac{1}{1-\alpha\cos\Delta}}}{\kappa_2^2}}.
\end{align*}

\underline{Step III. Prediction estimate.}

From the explicit solution in Step I and the time estimate in Step II, it is easy to verify
\begin{align*}
    e^{-f_+(T_{\rm III}^*)}=\frac{e^{-f_+(T_{\rm II}^{{\rm PT}})}}{1+e^{-f_+(T_{\rm II}^{{\rm PT}})}\frac{\kappa_2^2m_+p\sin^2\Delta}{m(1+p)}(T_{\rm III}^*-T_{\rm II}^{{\rm PT}})}
    =\Theta\bracket{p^{-\frac{1}{1-\alpha\cos\Delta}}},
\end{align*}
\begin{align*}
    e^{f_-(T_{\rm III}^*)}=\frac{e^{f_-(T_{\rm II}^{{\rm PT}})}}{1+e^{f_-(T_{\rm II}^{{\rm PT}})}\frac{\kappa_2^2m_-\sin^2\Delta}{m(1+p)}(T_{\rm III}^*-T_{\rm II}^{{\rm PT}})}
    =\Theta\bracket{p^{-\frac{\alpha\cos\Delta}{1-\alpha\cos\Delta}}}.
\end{align*}
Recalling the calculation in Step II, we have:
\begin{align*}
    \<\bF_+(T_{\rm III}^*),\bx_+\>=\frac{pe^{-f_+(T_{\rm III}^*)}}{1+p}-\frac{e^{f_-(T_{\rm III}^*)}}{1+p}\cos\Delta=0,
\end{align*}
\begin{align*}
    & \<\bF_+(T_{\rm III}^*),\bx_-\>=\frac{pe^{-f_+(T_{\rm III}^*)}}{1+p}\cos\Delta-\frac{e^{f_-(T_{\rm III}^*)}}{1+p}
    \\=&\frac{(1+p)\left<\boldsymbol{F}_+(T_{\rm II}^{{\rm PT}}),\boldsymbol{x}_-\right>+e^{f_-(T_{\rm II}^{{\rm PT}})-f_+(T_{\rm II}^{{\rm PT}})}(pG_-\cos\Delta-G_+)(T_{\rm III}^*-T_{\rm II}^{{\rm PT}})}{(1+p)(1+e^{-f_+(T_{\rm II}^{{\rm PT}})}G_+(T_{\rm III}^*-T_{\rm II}^{{\rm PT}}))(1+e^{f_-(T_{\rm II}^{{\rm PT}})}G_-(T_{\rm III}^*-T_{\rm II}^{{\rm PT}}))}
    \\=&\Theta\bracket{\frac{-\Delta^2p^{-\frac{\alpha\cos\Delta}{1-\alpha\cos\Delta}}-\Delta^2p^{-\frac{\alpha\cos\Delta}{1-\alpha\cos\Delta}}}{p\bracket{1+\Theta(\Delta^2)}\bracket{1+\Theta(\Delta^2)}}}=-\Theta\bracket{\Delta^2p^{-\frac{1}{1-\alpha\cos\Delta}}}.
\end{align*}
    
\end{proof}

\begin{lemma}[Hitting time relationship]\label{lemma: GF Phase III hitting time transformation}
If we define the following hitting time:
\begin{align*}
    T_{\rm III}^{\rm W}=\inf\big\{t>T_{\rm II}^{\rm PT}:\forall k\in\cK_-,\<\bw_k(t),\bx_+\>>0\big\},
\end{align*}
then it holds that $T_{\rm III}=T_{\rm III}^*=T_{\rm III}^{\rm W}$.
\end{lemma}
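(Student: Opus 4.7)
The containment $T_{\rm III}^*\leq T_{\rm III}$ is immediate from the definitions~\eqref{equ: GF Phase IV hitting time actually} and~\eqref{equ: GF Phase IV hitting time}. The plan is therefore to prove $T_{\rm III}^{\rm W}=T_{\rm III}^*$ and $T_{\rm III}\leq T_{\rm III}^{\rm W}$ by a Filippov-style analysis of the living negative neurons sitting on $\cP_+^0\cap\cP_-^+$, combined with the explicit sign information about the auxiliary field $\bF_+(t)$ already recorded in Lemma~\ref{lemma: GF Phase III* hitting time estimate}.

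The first ingredient I would extract is a clean sign profile of $\<\bF_+(t),\bx_+\>$ in a right neighborhood of $T_{\rm III}^*$. The closed-form expression for $\<\bF_+(t),\bx_+\>$ in Step~II of the proof of Lemma~\ref{lemma: GF Phase III* hitting time estimate} is a rational function whose numerator is affine in $t$, strictly positive at $T_{\rm II}^{\rm PT}$, and has strictly negative slope since $pG_--G_+\cos\Delta=-\Theta(\kappa_2^2\Delta^2)$. Hence $\<\bF_+(t),\bx_+\>$ is strictly positive on $[T_{\rm II}^{\rm PT},T_{\rm III}^*)$, equals $0$ at $T_{\rm III}^*$, and is strictly negative on $(T_{\rm III}^*,T_{\rm III}^*+\eta]$ for some small $\eta>0$. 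The same calculation also shows that $\<\bF_+(t),\bx_-\>$ stays strictly negative throughout $[T_{\rm II}^{\rm PT},T_{\rm III}^*+\eta]$.

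The second ingredient is the Filippov analysis of any $k\in\cK_-$. By Lemma~\ref{lemma: GF Phase IV neuron dynamics}(S1) the trajectory lies on $\cP_+^0\cap\cP_-^+$ on $[T_{\rm II}^{\rm PT},T_{\rm III}^*]$. With normal direction $\bx_+$, the two one-sided limits (exactly as computed in the proof of Lemma~\ref{lemma: GF Phase II neuron dynamics}(S2)) are
\begin{equation*}
F_N^-=\frac{\kappa_2 e^{f_-(t)}\cos\Delta}{\sqrt{m}(1+p)}>0,\qquad F_N^+=-\frac{\kappa_2}{\sqrt{m}}\<\bF_+(t),\bx_+\>.
\end{equation*}
For $t\in[T_{\rm II}^{\rm PT},T_{\rm III}^*)$ we have $F_N^+<0$, so Case~(I) of Definition~\ref{def: discontinuous system solution} applies and every $k\in\cK_-$ slides along $\cP_+^0$ with $\<\bb_k(t),\bx_+\>\equiv 0$. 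For $t\in(T_{\rm III}^*,T_{\rm III}^*+\eta]$ we have $F_N^+>0$ and $F_N^->0$, i.e.\ Case~(II), and each trajectory passes instantly through $\cP_+^0$ into $\cP_+^+$. Because $\bF_+(t)$ depends only on time and not on the individual neuron, this transition is triggered at the common instant $T_{\rm III}^*$ for every $k\in\cK_-$, so $\<\bw_k(t),\bx_+\>>0$ for all $k\in\cK_-$ as soon as $t>T_{\rm III}^*$, while at $t=T_{\rm III}^*$ the strict inequality fails. Hence $T_{\rm III}^{\rm W}=T_{\rm III}^*$.

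To close the chain I would verify that no $k\in\cK_+$ flips a pattern at or before $T_{\rm III}^*$. For $k\in\cK_+$ on $\cP_-^0\cap\cP_+^+$ (Lemma~\ref{lemma: GF Phase IV neuron dynamics}(S2)), the analogous computation with normal $\bx_-$ gives $F_N^-=\frac{\kappa_2 p e^{-f_+(t)}\cos\Delta}{\sqrt{m}(1+p)}>0$ and $F_N^+=\frac{\kappa_2}{\sqrt{m}}\<\bF_+(t),\bx_-\><0$ on all of $[T_{\rm II}^{\rm PT},T_{\rm III}^*+\eta]$, so Case~(I) holds throughout and these neurons remain pinned to $\cP_-^0$ with $\<\bw_k,\bx_+\>>0$ unchanged. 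Consequently the only activation-pattern change in $\cK_+\cup\cK_-$ immediately past $T_{\rm III}^*$ is the simultaneous $\cK_-$-reactivation, which by~\eqref{equ: GF Phase IV hitting time actually} forces $T_{\rm III}\leq T_{\rm III}^*$. Combined with $T_{\rm III}^*\leq T_{\rm III}$ and $T_{\rm III}^{\rm W}=T_{\rm III}^*$, this yields $T_{\rm III}=T_{\rm III}^*=T_{\rm III}^{\rm W}$. The main obstacle is handling the switch at the single instant $T_{\rm III}^*$ cleanly within the Filippov framework: one must use the affine-monotone structure of the numerator of $\<\bF_+(t),\bx_+\>$ to rule out any tangential lingering on the manifold and to guarantee that all $\Theta(m)$ negative neurons detach at exactly the same time rather than one after another.
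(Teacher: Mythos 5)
Your proposal is essentially the same argument as the paper's: a Filippov case analysis of the living negative neurons pinned on $\cP_+^0\cap\cP_-^+$, using the sign of the time-dependent field $\<\bF_+(t),\bx_+\>$ to determine when the sliding mode (Case~I of Definition~\ref{def: discontinuous system solution}, $F_N^-\!>\!0$, $F_N^+\!<\!0$) gives way to the crossing mode (Case~II), together with the observation that since $\bF_+$ is purely time-dependent and all $k\in\cK_-$ carry the same activation pattern, they detach simultaneously; you also correctly verify that the $\cK_+$ neurons stay pinned to $\cP_-^0$ throughout because $\<\bF_+(t),\bx_-\>$ stays strictly negative.

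The one place you should be more careful is the assertion that $\<\bF_+(t),\bx_+\>$ is strictly negative on $(T_{\rm III}^*,T_{\rm III}^*+\eta]$, which you justify by invoking the affine-monotone structure of the numerator in the closed form from the proof of Lemma~\ref{lemma: GF Phase III* hitting time estimate}. That closed form only describes the actual trajectory while the Phase~III* activation patterns persist, i.e.\ on $[T_{\rm II}^{\rm PT},T_{\rm III}^*)$; past $T_{\rm III}^*$ it is an extrapolation, not a description of the real dynamics. The paper closes this with an explicit contradiction: if no $k\in\cK_-$ detaches on $(T_{\rm III}^*,T_{\rm III}^{\rm N})$ (and the $\cK_+$ patterns are shown separately to hold), then the Phase~III* dynamics genuinely continue there, the closed form applies, $\<\bF_+(t),\bx_+\>$ is strictly negative, $F_N^+\!>\!0$, Case~II forces detachment --- contradiction. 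You flag this issue yourself as the ``main obstacle'' but do not phrase it as a contradiction; once that reformulation is made your argument is airtight and if anything slightly cleaner than the paper's (you work at $t$ strictly past $T_{\rm III}^*$ where $F_N^+$ is strictly positive, while the paper invokes Case~II at $t=T_{\rm III}$ where $F_N^+=0$ exactly, the borderline of Case~II).
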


\begin{proof}[Proof of Lemma~\ref{lemma: GF Phase III hitting time transformation}]\ \\
We define the following hitting time:
\begin{align*}
    &T_{\rm III}^{\rm F}=\inf\big\{t>T_{\rm II}^{\rm PT}:\<\bF_+(t),\bx_+\>\leq0\big\};
    \\
    &T_{\rm III}^{\rm N}=\inf\big\{t>T_{\rm II}^{\rm PT}:\exists k\in\cK_-,\text{ s.t. }\<\bw_k(t),\bx_+\>>0\big\},
    \\
    &T_{\rm III}^{\rm W}=\inf\big\{t>T_{\rm II}^{\rm PT}:\forall k\in\cK_-,\<\bw_k(t),\bx_+\>>0\big\},
\end{align*}

From the proof in Lemma~\ref{lemma: GF Time Estimate of Phase III}, we know $\left<\boldsymbol{F}_+(T_{\rm III}^*),\boldsymbol{x}_-\right><0$.
From the continuity of $\left<\boldsymbol{F}_+(\cdot),\boldsymbol{x}_-\right>$, we know that there exists $\tau_1>0$, such that $\left<\boldsymbol{F}_+(t),\boldsymbol{x}_-\right><0$ holds for any $t\in[T_{\rm III}^*,T_{\rm III}^*+\tau_1)$. 
Then in the same way as the proof of Lemma~\ref{lemma: GF Phase IV neuron dynamics} (S2), we know that for $k\in\cK_+$, $\bw(t)\in\cM_+^+\cap\cM_-^0$ for any $t\in[T_{\rm III}^*,T_{\rm III}^*+\tau_1)$.

Recalling that for any $k\in\cK_-$, $\<\bb_k(T_{\rm III}^*),\bx_-\>>0$, from the continuity, we know that there exists $\tau_2>0$ such that $\<\bb_k(t),\bx_-\>>0$ holds for any $t\in[T_{\rm III}^*,T_{\rm III}^*+\tau_2)$.

Hence, we have:
\begin{align*}
    T_{\rm III}^*=T_{\rm III}^{\rm F}\land T_{\rm III}^{\rm N}=\inf\big\{t>T_{\rm II}^{\rm PT}:\<\bF_+(t),\bx_+\>\leq0
    \text{ or }\exists k\in\cK_-,\text{ s.t. }\<\bw_k(t),\bx_+\>\ne 0\big\}.
\end{align*}

It is obvious that $T_{\rm III}^{\rm N}\geq T_{\rm III}^{\rm F}\land T_{\rm III}^{\rm N}=T_{\rm III}^*$. Now we prove $T_{\rm III}^{\rm N}=T_{\rm III}^*$.

If we assume $T_{\rm III}^{\rm N}>T_{\rm III}^*$ strictly, then the dynamics about $f_+(t)$ and $f_-(t)$ in the proof (Step I) of Lemma~\ref{lemma: GF Phase III* hitting time estimate} still hold for any $t\in[T_{\rm III}^*,T_{\rm III}^{\rm N})$. Using the same calculate about $\<\bF_+(t),\bx_+\>$ in the proof (Step II, III) of Lemma~\ref{lemma: GF Phase III* hitting time estimate}, we can obtain: $\<\bF_+(t),\bx_+\><0,\ t\in[T_{\rm III}^*,T_{\rm III}^{\rm N})$.

Then we consider the vector field around the manifold $\cM_+^0\cap\cM_-^+$ for $t\in[T_{\rm III}^*,T_{\rm III}^{\rm N})$. In the same way as the proof of Lemma~\ref{lemma: GF Phase III neuron dynamics} (S1), 
we can prove that the two-side projections onto $\bx_+$ (the normal to the surface $\cM_+^+\cap\cM_-^0$) satisfies $f_N^+(t,\tilde{\bw}),f_N^-(t,\tilde{\bw})>0$ for any $t\in[T_{\rm III}^*,T_{\rm III}^*+\tau_1)$, which satisfies (Case II) in Definition~\ref{def: discontinuous system solution}. This implies that $\bw_k(t)$ enter the manifold $\cM_+^+$, i.e., $\<\bw_k(t),\bx_+\>>0$ for any $t\in[T_{\rm III}^*,T_{\rm III}^{\rm N})$, which is contradict to the definition of $T_{\rm III}^{\rm N}$.
Hence, we have proved 
\begin{align*}
    T_{\rm III}^{\rm N}=T_{\rm III}^{\rm N}\land T_{\rm III}^{\rm F}=T_{\rm III}^*.
\end{align*}

Noticing that the change of activation patterns of $\sgn_k^+(t)$ $(k\in\cK_-)$ is due to the change of the vector field $\<\bF_+(t),\bx_+\>$, it is easy to verify that $T_{\rm III}^{\rm F}=T_{\rm III}^{\rm N}\land T_{\rm III}^{\rm F}*$.
Then we have $T_{\rm III}^{\rm N}=T_{\rm III}^{\rm F}=T_{\rm III}^{\rm N}\land T_{\rm III}^{\rm F}=T_{\rm III}^*$.

Moreover, noticing that $T_{\rm III}\leq T_{\rm III}^{\rm N}$ and $T_{\rm III}^*\leq T_{\rm III}$, we obtain $T_{\rm III}=T_{\rm III}^*=T_{\rm III}^{\rm N}=T_{\rm III}^{\rm F}$.

Lastly, noticing that all living negative neurons ($k\in\cK_-$) belong to $\cM_+^0\cap\cM_-^+$ at time $T_{\rm III}$. 
As discussed above, for each living negative neuron $k\in\cK_-$, the vector field near $\bb_k(T_{\rm III})$ is the same, with $f_N^->0$ and $f_N^+=0$ in Definition~\ref{def: discontinuous system solution} (Case II). Hence, each living positive neuron $\bw_k$ leaves from $\cM_+^0$ and enter $\cM_+^+$ instantly at $T_{\rm III}$, which means $T_{\rm III}^{\rm W}=T_{\rm III}^{\rm N}$.

Hence, we have proved $T_{\rm III}=T_{\rm III}^*=T_{\rm III}^{\rm W}=T_{\rm III}^{\rm N}=T_{\rm III}^{\rm F}$.
\end{proof}

\begin{proof}[Proof of Theorem~\ref{thm: GF Phase III}]\ \\
Combining Lemma~\ref{lemma: GF Phase III* hitting time estimate} and~\ref{lemma: GF Phase III hitting time transformation}, we obtain $T_{\rm III}=\bracket{1+\Theta(\Delta^2)}T_{\rm II}$.  
\end{proof}

\newpage
\section{Proofs of Optimization Dynamics in Phase IV}\label{appendix: proof: Phase IV}

\begin{proof}[Proof of Theorem~\ref{thm: GF Phase transition III-to-IV}]\ \\
From Lemma~\ref{lemma: GF Phase III hitting time transformation}, we know that all living negative neuron $k\in\cK_-$ simultaneously change their patterns on $\bx_+$ at $T_{\rm III}$: $\lim\limits_{t\to T_{\rm III}^-}\text{\rm\sgn}_{k}^+(t)=0$, $\lim\limits_{t\to T_{\rm III}^+}\text{\rm\sgn}_{k}^+(t)=1$. Moreover, from our proof of Lemma~\ref{lemma: GF Phase III hitting time transformation}, we know that other activation patterns remain unchanged at $T_{\rm III}$.
\end{proof}

In this phase, we study the dynamics before activation patterns change again after the phase transition in Theorem~\ref{thm: GF Phase transition III-to-IV}. We define the hitting time:
\begin{align*}
    T_{\rm IV}:=\inf\{t>T_{\rm III}:\exists k\in\cK_+\cup\cK_-,\text{\rm\sgn}_k^+(t)\ne \lim\limits_{s\to T_{\rm III}^+}\text{\rm\sgn}_k^+(s)\text{ or }\text{\rm\sgn}_k^-(t)\ne \lim\limits_{s\to T_{\rm III}^+}\text{\rm\sgn}_k^-(s)\},
    \vspace{-.2cm}
\end{align*}
and we call $t\in(T_{\rm III},T_{\rm IV})$ Phase IV.

In order to analyze the dynamics of neurons and vector fields, we introduce the auxiliary hitting time:
\begin{equation}
\begin{aligned}
T_{\rm IV}^*:=\inf\Big\{t>T_{\rm III}:&
\<\bF_+(t),\bx_+\>>0,\text{ or }\<\bF_+(t),\bx_-\>>0\Big\},
\\
\text{where}\quad
\boldsymbol{F}_+(t)=&\frac{p}{1+p}e^{-f_+(t)}\boldsymbol{x}_{+}-\frac{1}{1+p}e^{f_-(t)}\boldsymbol{x}_-,
\end{aligned}
\end{equation}
and we call $t\in(T_{\rm III},T_{\rm IV}^*)$ Phase IV*.

First, we will provide meticulous prior estimations for $2$d ODEs on $f_+(t)$ and $f_-(t)$, similar to Phase II, which can imply $T_{\rm IV}^*=+\infty$. Additionally, we can prove $T_{\rm IV}=T_{\rm IV}^*$.
Lastly, with the help of our fine-grained analysis for the 2D dynamics and the results in \citep{lyu2019gradient,ji2020directional}, we can determine the unique convergent direction from numerous KKT directions.

\subsection{Non-asymptotic Analysis of Optimization Dynamics in Phase IV*}

% \begin{lemma}[Existence of Phase IV*]\ \\
% There exists $\tau>0$ such that $T_{\rm IV}^*- T_{\rm III}\geq\tau$.
% \end{lemma}

% \begin{proof}\ \\
% If we assume Phase IV* does not exist, i.e., $T_{\rm III}=T_{\rm IV}^*$. From our analysis in Phase III, we know 
% \begin{align*}
%     T_{\rm III}=\inf\{t>T_{\rm III}:\<\bF_+(t),\bx_+\>>0\}\land\inf\{\exists k\in\mathcal{K}_-,\text{  s.t. } \left<\boldsymbol{w}_k(t),\boldsymbol{x}_+\right>\leq0\}.
% \end{align*}
% If $T_{\rm III}=\inf\{t>T_{\rm III}:\<\bF_+(t),\bx_+\>>0\}$, then it holds that $\<\frac{\rd}{\rd t}\bF_+(T_{\rm III}),\bx_+\>>0$

% \end{proof}

\begin{lemma}[Dynamics of activate neurons in Phase IV*]\label{lemma: GF Phase IV* neuron dynamics}\ \\
In Phase IV* $(t\in(T_{\rm III}, T_{\rm IV}^*))$, we have the following dynamics for each neuron $k\in\mathcal{K}_-\cup\mathcal{K}_+$.

(S1). For negative neuron $k\in\mathcal{K}_-$, we have:
\begin{align*}
&\boldsymbol{w}_k(t)\in\mathcal{M}_+^+\cap\mathcal{M}_-^+,
\\
&\frac{\mathrm{d} \boldsymbol{b}_k(t)}{\mathrm{d}t}=-\frac{\kappa_2}{\sqrt{m}}\boldsymbol{F}_+(t)=-\frac{\kappa_2}{\sqrt{m}}\left(\frac{p}{1+p}e^{-f_+(t)}\boldsymbol{x}_{+}-\frac{1}{1+p}e^{f_-(t)}\boldsymbol{x}_-\right).
\end{align*}
(S2) For positive neuron $k\in\mathcal{K}_+$, we have:
\begin{align*}
    &\boldsymbol{w}_k(t)\in\mathcal{M}_+^+\cap\mathcal{M}_-^0,
    \\
    &\frac{\mathrm{d}\boldsymbol{b}_k(t)}{\mathrm{d}t}=\frac{\kappa_2pe^{-f_+(t)}}{\sqrt{m}(1+p)}\Big(\boldsymbol{x}_+-\boldsymbol{x}_-\cos\Delta\Big).
\end{align*}

\end{lemma}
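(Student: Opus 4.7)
The plan is to determine the dynamics for each active neuron by combining the initial conditions at $T_{\rm III}^+$ supplied by Theorem~\ref{thm: GF Phase transition III-to-IV} with two types of local analyses: a smooth ODE argument in the open region $\cM_+^+\cap\cM_-^+$ for $\cK_-$, and a Filippov-type analysis (Definition~\ref{def: discontinuous system solution}) on the boundary manifold $\cM_+^+\cap\cM_-^0$ for $\cK_+$. The driving observation is that the very definition of $T_{\rm IV}^*$ forces $\<\bF_+(t),\bx_+\>\leq 0$ and $\<\bF_+(t),\bx_-\>\leq 0$ on $(T_{\rm III},T_{\rm IV}^*)$, and these two sign conditions are precisely what each branch of the argument needs.

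For $k\in\cK_-$, Theorem~\ref{thm: GF Phase transition III-to-IV} places $\bw_k(T_{\rm III}^+)\in\cM_+^+\cap\cM_-^+$, an open region where $\mathcal{L}$ is classically differentiable at $\bb_k$. Plugging $s_k=-1$ and both ReLU derivatives equal to $1$ into~\eqref{equ: dynamics} yields $\tfrac{\rd\bb_k}{\rd t}=-\tfrac{\kappa_2}{\sqrt{m}}\bF_+(t)$, which is (S1). To show $\bw_k$ never leaves this region I would introduce the hitting time $\tau_k:=\inf\{t>T_{\rm III}:\<\bb_k,\bx_+\>\leq 0\text{ or }\<\bb_k,\bx_-\>\leq 0\}$ and bootstrap from the identity $\tfrac{\rd}{\rd t}\<\bb_k,\bx_\pm\>=-\tfrac{\kappa_2}{\sqrt{m}}\<\bF_+,\bx_\pm\>\geq 0$ on $(T_{\rm III},T_{\rm IV}^*\wedge\tau_k)$, which makes both components non-decreasing.

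For $k\in\cK_+$, $\bw_k(T_{\rm III}^+)\in\cM_+^+\cap\cM_-^0$ lies on the non-smooth surface, so I would mirror the Filippov computations used in Lemmas~\ref{lemma: GF Phase III neuron dynamics} and~\ref{lemma: GF Phase IV neuron dynamics}: the two one-sided limit fields near any $\tilde\bb\in\cP_+^+\cap\cP_-^0$ are
\begin{align*}
\bF^-=\tfrac{\kappa_2}{\sqrt{m}}\tfrac{p e^{-f_+(t)}}{1+p}\bx_+,\qquad \bF^+=\tfrac{\kappa_2}{\sqrt{m}}\bF_+(t),
\end{align*}
with projections onto the normal $\bx_-$ equal to $F_N^-=\tfrac{\kappa_2 p e^{-f_+(t)}\cos\Delta}{\sqrt{m}(1+p)}>0$ and $F_N^+=\tfrac{\kappa_2}{\sqrt{m}}\<\bF_+(t),\bx_-\>\leq 0$. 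This is exactly Case~(I) of Definition~\ref{def: discontinuous system solution}, so $\bb_k(t)$ remains on $\cP_-^0$, and the Filippov sliding combination $\alpha\bF^++(1-\alpha)\bF^-$ with $\alpha=F_N^-/(F_N^--F_N^+)$ collapses algebraically to the claimed $\tfrac{\kappa_2 p e^{-f_+(t)}}{\sqrt{m}(1+p)}(\bx_+-\bx_-\cos\Delta)$, yielding (S2). Persistence of $\<\bb_k,\bx_+\>>0$ then follows from $\tfrac{\rd}{\rd t}\<\bb_k,\bx_+\>=\tfrac{\kappa_2 p e^{-f_+(t)}\sin^2\Delta}{\sqrt{m}(1+p)}>0$.

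The main obstacle is bootstrapping strict positivity of $\<\bb_k,\bx_+\>$ for $k\in\cK_-$ immediately after $T_{\rm III}$, because both $\<\bb_k,\bx_+\>(T_{\rm III})=0$ and $\<\bF_+(T_{\rm III}),\bx_+\>=0$ (via Lemma~\ref{lemma: GF Phase III* hitting time estimate}), so the monotonicity above is degenerate at the seed. To close this I would differentiate $\<\bF_+,\bx_+\>$ along the candidate Phase~IV* dynamics at $T_{\rm III}^+$, use the identity $p e^{-f_+}=e^{f_-}\cos\Delta$ to reduce $\<\bF_+,\bx_-\>=-\tfrac{e^{f_-}\sin^2\Delta}{1+p}$ to a single scale, and land on
\begin{align*}
\tfrac{\rd}{\rd t}\<\bF_+,\bx_+\>\big|_{T_{\rm III}^+}=-\tfrac{(p e^{-f_+})^2\kappa_2^2\sin^2\Delta}{m(1+p)^2\cos\Delta}\bigl(m_+\cos\Delta-m_-\bigr)<0,
\end{align*}
where strict negativity rests on $m_+\cos\Delta>m_-$, guaranteed by $\alpha=m_-/m_+\leq 0.977$ (Remark~\ref{rmk: K+, K-, alpha}) and $\cos\Delta$ being close to $1$ for small $\Delta$. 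Hence $\<\bF_+(t),\bx_+\><0$ on a right-neighborhood of $T_{\rm III}$, and through $\tfrac{\rd}{\rd t}\<\bb_k,\bx_+\>=-\tfrac{\kappa_2}{\sqrt{m}}\<\bF_+,\bx_+\>$ this forces $\<\bb_k,\bx_+\>$ to strictly increase from $0$, closing the bootstrap and confirming $\bw_k\in\cM_+^+\cap\cM_-^+$ throughout Phase~IV*.
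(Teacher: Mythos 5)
Your proof is correct and follows the same overall template as the paper's: a classical smooth-ODE argument in the open region $\cM_+^+\cap\cM_-^+$ for $\cK_-$, a Filippov sliding-mode computation on $\cP_+^+\cap\cP_-^0$ for $\cK_+$ (Case~I, projecting onto the normal $\bx_-$), and a monotonicity bootstrap using the sign constraints $\<\bF_+,\bx_\pm\>\leq 0$ baked into the definition of $T_{\rm IV}^*$. The sliding-field algebra $\alpha\bF^++(1-\alpha)\bF^-$ with $\alpha=A\cos\Delta/B$ collapsing to $\frac{\kappa_2 pe^{-f_+}}{\sqrt{m}(1+p)}(\bx_+-\bx_-\cos\Delta)$ checks out, and the formula for $\tfrac{\rd}{\rd t}\<\bF_+,\bx_+\>|_{T_{\rm III}^+}$ (using $pe^{-f_+}=e^{f_-}\cos\Delta$ at the seed) reduces correctly to $-\tfrac{(pe^{-f_+})^2\kappa_2^2\sin^2\Delta}{m(1+p)^2\cos\Delta}(m_+\cos\Delta-m_-)$, which is strictly negative since $\alpha\leq 0.977<\cos\Delta$ for $\Delta\leq 1/5$.

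Where you depart from the paper: the paper's proof of this lemma is a one-line citation back to Lemmas~\ref{lemma: GF Phase II neuron dynamics}--\ref{lemma: GF Phase IV neuron dynamics}, and the degenerate seed $\<\bb_k(T_{\rm III}),\bx_+\>=\<\bF_+(T_{\rm III}),\bx_+\>=0$ for $k\in\cK_-$ is not handled here at all — it is delegated to Theorem~\ref{thm: GF Phase transition III-to-IV} and, underneath that, to the Case-II push-through argument in Lemma~\ref{lemma: GF Phase III hitting time transformation} and the strict sign of $\tfrac{\rd}{\rd t}(\cI-\cJ\cos\Delta)$ computed later in Lemma~\ref{lemma: GF Phase IV* U V dynamics} (Step~IV). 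Your second-order Taylor argument at $T_{\rm III}^+$ is essentially the same estimate the paper places in Step~IV of Lemma~\ref{lemma: GF Phase IV* U V dynamics}, just imported forward to make the bootstrap self-contained inside the present lemma. That is a sensible refinement: the paper's Case~II invocation with $f_N^+=0$ at the boundary of Cases~I/II is somewhat loose as stated, and your explicit second-derivative sign computation is exactly the ingredient that makes it rigorous. The only thing you could add to align more closely with the paper's organization is to note that $\lim_{t\to T_{\rm III}^+}\sgn_k^+(t)=1$ is already asserted by Theorem~\ref{thm: GF Phase transition III-to-IV}, so one may alternatively start the monotonicity from a point strictly inside that right-neighborhood and avoid re-deriving the derivative estimate; but your route is more transparent about \emph{why} the crossing is strict.
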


\begin{proof}[Proof of Lemma \ref{lemma: GF Phase IV* neuron dynamics}]\ \\
Using the definition of $T_{\rm IV}^*$, this lemma can be proved in the same way as shown in the proof of Lemma \ref{lemma: GF Phase II neuron dynamics}, \ref{lemma: GF Phase III neuron dynamics} and \ref{lemma: GF Phase IV neuron dynamics}.
\end{proof}

The next lemma gives the first-order dynamics of $f_+(t)$ and $f_-(t)$.

\begin{lemma}[First-order Dynamics of predictions in Phase IV*]\label{lemma: GF Phase IV* 1-order f dynamics}\ \\
In Phase IV* $(T_{\rm III}\leq t\leq T_{\rm IV}^*)$, we have the following dynamics for $f_+(t)$ and $f_-(t)$:
\begin{align*}
    \frac{\mathrm{d}f_+(t)}{\mathrm{d}t}&=\kappa_2^2\frac{m_+}{m}\frac{pe^{-f_+(t)}}{1+p}\sin^2\Delta+\kappa_2^2\frac{m_-}{m}\bracket{\frac{pe^{-f_+(t)}}{1+p}-\frac{e^{f-(t)}\cos\Delta}{1+p}},
    \\
    \frac{\mathrm{d}f_-(t)}{\mathrm{d}t}&=\kappa_2^2\frac{m_-}{m}\bracket{\frac{pe^{-f_+(t)}}{1+p}\cos\Delta-\frac{e^{f_-(t)}}{1+p}}.
\end{align*}
\end{lemma}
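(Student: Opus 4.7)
The plan is to mimic the derivation of Lemma \ref{lemma: GF Phase II 1-order f dynamics}, using the activation patterns established in Lemma \ref{lemma: GF Phase IV* neuron dynamics} to write $f_+$ and $f_-$ as \emph{linear} combinations of the $\bb_k^\top\bx_\pm$, then differentiating and substituting the neuron-level ODEs. By Lemma \ref{lemma: GF Phase IV* neuron dynamics}, for every $t\in(T_{\rm III},T_{\rm IV}^*)$ the activation patterns of the living neurons are frozen: each $k\in\cK_+$ has $\<\bb_k,\bx_+\>>0$ and $\<\bb_k,\bx_-\>=0$, while each $k\in\cK_-$ has $\<\bb_k,\bx_+\>>0$ and $\<\bb_k,\bx_-\>>0$. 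The dead neurons $k\notin\cK_+\cup\cK_-$ contribute zero by Theorem \ref{thm: restatement of GF Phase I}(S3). Hence
\begin{align*}
f_+(t) &= \tfrac{\kappa_2}{\sqrt m}\sum_{k\in\cK_+}\bb_k^\top(t)\bx_+ \;-\; \tfrac{\kappa_2}{\sqrt m}\sum_{k\in\cK_-}\bb_k^\top(t)\bx_+,\\
f_-(t) &= \;0\; -\; \tfrac{\kappa_2}{\sqrt m}\sum_{k\in\cK_-}\bb_k^\top(t)\bx_-,
\end{align*}
since the $\cK_+$ contribution to $f_-$ vanishes on the manifold $\cM_-^0$.

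Next I would differentiate these two expressions and plug in the neuron dynamics provided by Lemma \ref{lemma: GF Phase IV* neuron dynamics}: for $k\in\cK_+$, $\tfrac{\mathrm d\bb_k}{\mathrm dt}=\tfrac{\kappa_2 p e^{-f_+}}{\sqrt m(1+p)}(\bx_+-\bx_-\cos\Delta)$, and for $k\in\cK_-$, $\tfrac{\mathrm d\bb_k}{\mathrm dt}=-\tfrac{\kappa_2}{\sqrt m}\bF_+(t)$. Taking the inner product with $\bx_+$ gives, respectively, $\tfrac{\kappa_2 p e^{-f_+}}{\sqrt m(1+p)}(1-\cos^2\Delta)=\tfrac{\kappa_2 p e^{-f_+}\sin^2\Delta}{\sqrt m(1+p)}$ and $-\tfrac{\kappa_2}{\sqrt m}\bigl(\tfrac{pe^{-f_+}}{1+p}-\tfrac{e^{f_-}\cos\Delta}{1+p}\bigr)$; taking the inner product with $\bx_-$ for $k\in\cK_-$ gives $-\tfrac{\kappa_2}{\sqrt m}\bigl(\tfrac{pe^{-f_+}\cos\Delta}{1+p}-\tfrac{e^{f_-}}{1+p}\bigr)$. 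Summing over the respective index sets produces the cardinality factors $m_+$ and $m_-$, and combining with the prefactor $\tfrac{\kappa_2}{\sqrt m}$ yields the stated $\tfrac{\kappa_2^2 m_+}{m}$ and $\tfrac{\kappa_2^2 m_-}{m}$ weights. The sign in the $\cK_-$ contribution to $\tfrac{\mathrm d f_+}{\mathrm dt}$ flips twice (once from $a_k=-\tfrac{\kappa_2}{\sqrt m}$ and once from the minus sign in $-\bF_+$), producing the $+\tfrac{\kappa_2^2 m_-}{m}(\tfrac{pe^{-f_+}}{1+p}-\tfrac{e^{f_-}\cos\Delta}{1+p})$ term.

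There is essentially no obstacle: the activation patterns make both $f_+$ and $f_-$ affine in the $\bb_k$'s on the time interval in question, so differentiation commutes with the sum and the computation is purely algebraic. The only bookkeeping to watch is the sign discipline for negative neurons (the factor $\mathrm s_k=-1$ already being baked into the $\bb_k$ dynamics via Lemma \ref{lemma: GF Phase IV* neuron dynamics}, so one must not double-count it in the expressions for $f_\pm$) and the disappearance of the $\cK_+$ contribution to $f_-$ because $\sigma(\<\bb_k,\bx_-\>)=0$ throughout Phase IV*. Once these two points are handled the identities drop out immediately.
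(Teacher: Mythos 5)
Your proof is correct and follows essentially the same route as the paper's: write $f_+$ and $f_-$ as linear (unsigned-$\sigma$-free) combinations of $\bb_k^\top\bx_\pm$ over the living neurons using the frozen activation pattern from Lemma~\ref{lemma: GF Phase IV* neuron dynamics}, differentiate, substitute the neuron-level ODEs, and collect the $m_+,m_-$ cardinality factors. The bookkeeping and final formulas match the paper's computation exactly.
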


\begin{proof}[Proof of Lemma \ref{lemma: GF Phase IV* 1-order f dynamics}]\ \\
From the definition of $T_{\rm IV}$, for any $T_{\rm III}\leq t\leq T_{\rm IV}$, we have

\begin{align*}
    f_+(t)&=\sum_{k\in\mathcal{K}_+}\frac{\kappa_2}{\sqrt{m}}\boldsymbol{b}_k^\top(t)\boldsymbol{x}_+-\sum_{k\in\mathcal{K}_-}\frac{\kappa_2}{\sqrt{m}}\boldsymbol{b}_k^\top(t)\boldsymbol{x}_+,
    \\
    f_-(t)&=-\sum_{k\in\mathcal{K}_-}\frac{\kappa_2}{\sqrt{m}}\boldsymbol{b}_k^\top(t)\boldsymbol{x}_-.
\end{align*}

With the help of Lemma \ref{lemma: GF Phase IV* neuron dynamics}, we have the dynamics of predictions:
\begin{align*}
    \frac{\mathrm{d}f_+(t)}{\mathrm{d}t}=&\sum_{k\in\mathcal{K}_+}\frac{\kappa_2}{\sqrt{m}}\left<\frac{\mathrm{d}\boldsymbol{b}_k(t)}{\mathrm{d}t},\boldsymbol{x}_+\right>-\sum_{k\in\mathcal{K}_-}\frac{\kappa_2}{\sqrt{m}}\left<\frac{\mathrm{d}\boldsymbol{b}_k(t)}{\mathrm{d}t},\boldsymbol{x}_+\right>
    \\=&\frac{\kappa_2^2}{m}\sum_{k\in\mathcal{K}_+}\frac{p}{1+p}e^{-f_+(t)}\bracket{1-\cos^2\Delta}-\frac{\kappa_2^2}{m}\sum_{k\in\mathcal{K}_-}\bracket{\frac{\cos\Delta}{1+p}e^{f_-(t)}-\frac{p}{1+p}e^{-f_+(t)}}
    \\=&\frac{m_+}{m}\kappa_2^2\frac{p}{1+p}e^{-f_+(t)}\sin^2\Delta+\frac{m_-}{m}\kappa_2^2\bracket{\frac{pe^{-f_+(t)}}{1+p}-\frac{\cos\Delta}{1+p}e^{f-(t)}}.
\end{align*}
\begin{align*}
    \frac{\mathrm{d}f_-(t)}{\mathrm{d}t}=&-\sum_{k\in\mathcal{K}_-}\frac{\kappa_2}{\sqrt{m}}\left<\frac{\mathrm{d}\boldsymbol{b}_k(t)}{\mathrm{d}t},\boldsymbol{x}_-\right>=\frac{\kappa_2^2}{m}\sum_{k\in\mathcal{K}_-}\bracket{\frac{p}{1+p}e^{-f_+(t)}\cos\Delta-\frac{1}{1+p}e^{f_-(t)}}
    \\=&\frac{m_-}{m}\kappa_2^2\bracket{\frac{p}{1+p}e^{-f_+(t)}\cos\Delta-\frac{1}{1+p}e^{f_-(t)}}.
\end{align*}

\end{proof}

Following the proof in Phase II, we focus on the dynamics about predictions.
Due to the specificity of the first-order dynamics, the following lemma gives an second-order \textbf{autonomous} dynamics of predictions.

\begin{lemma}[Second-order Autonomous Dynamics of predictions in Phase IV*]\label{lemma: GF Phase IV* 2-order f dynamics}\ \\
If we consider the following two variables:
\[
\begin{cases}
    \cI(t):=\kappa_2^2\frac{m_-}{m}\frac{p}{1+p}e^{-f_+(t)},
    \\
    \cJ(t):=\kappa_2^2\frac{m_-}{m}\frac{1}{1+p}e^{f_-(t)},
\end{cases}
\]
then the following autonomous dynamics of $\mathcal{U}(t)$ and $\mathcal{V}(t)$ hold in Phase IV* $(T_{\rm III}\leq t\leq T_{\rm IV}^*)$:
\[
\begin{cases}
    \frac{\mathrm{d}\cI(t)}{\mathrm{d}t}=\cI(t) \cJ(t)\cos\Delta-\cI^2(t)\bracket{1+\frac{m_+}{m_-}\sin^2\Delta},
    \\
    \frac{\mathrm{d}\cJ(t)}{\mathrm{d}t}=\cI(t) \cJ(t)\cos\Delta-\cJ^2(t).
\end{cases}
\]
\end{lemma}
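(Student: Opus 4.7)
The plan is to verify both ODEs by straightforward differentiation of the definitions of $\cI(t)$ and $\cJ(t)$, substituting the first-order prediction dynamics already established in Lemma~\ref{lemma: GF Phase IV* 1-order f dynamics}, and then rewriting the result in terms of $\cI$ and $\cJ$ themselves (to make the system autonomous). The key algebraic fact to exploit is that the definitions give the clean identities
\begin{equation*}
\kappa_2^2\,\tfrac{m_-}{m}\,\tfrac{p\,e^{-f_+(t)}}{1+p}=\cI(t),\qquad \kappa_2^2\,\tfrac{m_-}{m}\,\tfrac{e^{f_-(t)}}{1+p}=\cJ(t),\qquad \kappa_2^2\,\tfrac{m_+}{m}\,\tfrac{p\,e^{-f_+(t)}}{1+p}=\tfrac{m_+}{m_-}\,\cI(t),
\end{equation*}
so that every term arising from Lemma~\ref{lemma: GF Phase IV* 1-order f dynamics} has a direct expression as a linear combination of $\cI$ and $\cJ$ (possibly with a $\cos\Delta$ or $\sin^2\Delta$ weight).

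First I would compute $\frac{d\cI}{dt}=-\cI(t)\,\frac{df_+(t)}{dt}$ by the chain rule, since $\cI(t)$ is a positive multiple of $e^{-f_+(t)}$. Substituting the expression for $\frac{df_+}{dt}$ from Lemma~\ref{lemma: GF Phase IV* 1-order f dynamics} and using the identities above, the three terms become $\tfrac{m_+}{m_-}\cI(t)\sin^2\Delta$, $\cI(t)$, and $-\cJ(t)\cos\Delta$, and multiplying by $-\cI(t)$ collects exactly to $\cI\cJ\cos\Delta-\cI^2(1+\tfrac{m_+}{m_-}\sin^2\Delta)$. Second, I would compute $\frac{d\cJ}{dt}=\cJ(t)\,\frac{df_-(t)}{dt}$ analogously; substituting $\frac{df_-}{dt}$ gives $\cJ(\cI\cos\Delta-\cJ)=\cI\cJ\cos\Delta-\cJ^2$.

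Because each step is a direct substitution and rearrangement, there is no conceptual obstacle; the only care needed is bookkeeping of the prefactors $\kappa_2^2\,m_\pm/m\,\cdot\,1/(1+p)$ so that the $m_+/m_-$ ratio surfaces correctly in the $\cI$-equation and cancels in the $\cJ$-equation. One could also note briefly that validity of the first-order dynamics on the whole interval $(T_{\rm III},T_{\rm IV}^*)$ is guaranteed by Lemma~\ref{lemma: GF Phase IV* neuron dynamics}, so the autonomous system holds throughout Phase IV*.
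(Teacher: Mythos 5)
Your proposal is correct and matches the paper's approach: the paper simply remarks that the result is a straightforward computation from the first-order dynamics of Lemma~\ref{lemma: GF Phase IV* 1-order f dynamics}, which is exactly the chain-rule-and-substitute argument you carry out explicitly.
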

\begin{proof}[Proof of Lemma \ref{lemma: GF Phase IV* 2-order f dynamics}]\ \\
With the help of the first-order dynamics in Lemma \ref{lemma: GF Phase IV* 1-order f dynamics}, the proof is straight-forward.
\end{proof}

Lemma \ref{lemma: GF Phase IV* 2-order f dynamics} enlighten us that we only need to study the dynamics of $\cI(t)$ and $\cJ(t)$ to study the dynamics in Phase IV*, where $\cI(t),\cJ(t)$ satisfies the following autonomous dynamics:
\begin{equation}\label{equ: GF Phase IV* U V dynamics}
\begin{aligned}
    &\begin{cases}
    \frac{\mathrm{d}\cI(t)}{\mathrm{d}t}=\cI(t) \cJ(t)\cos\Delta-\cI^2(t)\bracket{1+\frac{m_+}{m_-}\sin^2\Delta},
    \\
    \frac{\mathrm{d}\cJ(t)}{\mathrm{d}t}=\cI(t) \cJ(t)\cos\Delta-\cJ^2(t),
    \end{cases}\quad t\geq T_{\rm III};
    \\
    &\begin{cases}
    \cI(T_{\rm III})=\kappa_2^2\frac{m_-}{m}\frac{p}{1+p}e^{-f_+(T_{\rm III})},
    \\
    \cJ(T_{\rm III})=\kappa_2^2\frac{m_-}{m}\frac{1}{1+p}e^{f_-(T_{\rm III})}.
    \end{cases}
\end{aligned}
\end{equation}

The next lemma studies the dynamics \eqref{equ: GF Phase IV* U V dynamics} for any $t\in[T_{\rm III},+\infty)$.
\begin{lemma}[Fine-grained analysis of the dynamics \eqref{equ: GF Phase IV* U V dynamics}]\label{lemma: GF Phase IV* U V dynamics}\ \\
For the dynamics \eqref{equ: GF Phase IV* U V dynamics}, we have the following results:

{\bf (S1).} Initialization.
\begin{gather*}
    \mathcal{I}(T_{\rm III})=\Theta\bracket{\kappa_2^2p^{-\frac{1}{1-\alpha\cos\Delta}}},\quad
    \mathcal{J}(T_{\rm III})=\Theta\bracket{\kappa_2^2p^{-\frac{1}{1-\alpha\cos\Delta}}},
    \\
    \mathcal{I}(T_{\rm III})-\mathcal{J}(T_{\rm III})\cos\Delta=0,\quad
    \mathcal{I}(T_{\rm III})\cos\Delta-\mathcal{J}(T_{\rm III})=-\Theta\bracket{\kappa_2^2\Delta^2p^{-\frac{1}{1-\alpha\cos\Delta}}}.
\end{gather*}

{\bf (S2).} Fine-grained two-side bound for $\cI(t)/\cJ(t)$.
\begin{align*}
     \frac{1+\cos\Delta}{1+\cos\Delta+\frac{m_+}{m_-}\sin^2\Delta}<\frac{\mathcal{I}(t)}{\mathcal{J}(t)}<\cos\Delta,\quad \forall t\in[T_{\rm III},+\infty).
\end{align*}

{\bf (S3).} The limit of $\cI(t)/\cJ(t)$.
\begin{align*}
    \lim\limits_{t\to\infty}\frac{\cI(t)}{\cJ(t)}=\frac{1+\cos\Delta}{1+\cos\Delta+\frac{m_+}{m_-}\sin^2\Delta}.
\end{align*}
{\bf (S4).} Tight estimate of $\cI(t)$ and $\cJ(t)$.
\begin{align*}
\cI(t)=&\Theta\bracket{\frac{1}{\frac{p^{\frac{1}{1-\alpha\cos\Delta}}}{\kappa_2^2}+\Delta^2(t-T_{\rm III})}},\quad\forall t\in[T_{\rm III},+\infty);
\\
\cJ(t)=&\Theta\bracket{\frac{1}{\frac{p^{\frac{1}{1-\alpha\cos\Delta}}}{\kappa_2^2}+\Delta^2(t-T_{\rm III})}},\quad\forall t\in[T_{\rm III},+\infty).
\end{align*}

\end{lemma}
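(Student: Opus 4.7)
The plan is to reduce the $2$-dimensional autonomous system \eqref{equ: GF Phase IV* U V dynamics} for $(\cI,\cJ)$ to a scalar ODE for the ratio $\cZ:=\cI/\cJ$ that admits an attracting interior fixed point $\cZ^{*}:=\frac{1+\cos\Delta}{1+\cos\Delta+q\sin^2\Delta}$ with $q:=m_+/m_-=1/\alpha$, and then to use the resulting two-sided trapping of $\cZ$ to integrate a Bernoulli-type equation for $\cJ$. Statement (S1) is essentially a rewrite of Lemma~\ref{lemma: GF Phase III* hitting time estimate}(S2): plugging $e^{-f_+(T_{\rm III})}=\Theta(p^{-1/(1-\alpha\cos\Delta)})$ and $e^{f_-(T_{\rm III})}=\Theta(p^{-\alpha\cos\Delta/(1-\alpha\cos\Delta)})$ into the definitions of $\cI,\cJ$, together with $m_-/m=\Theta(1)$ and $1/(1+p)=\Theta(1/p)$, yields the two $\Theta$-bounds; the identity $\cI(T_{\rm III})-\cJ(T_{\rm III})\cos\Delta=0$ is just $\<\bF_+(T_{\rm III}),\bx_+\>=0$ (the defining condition of $T_{\rm III}=T_{\rm III}^*$) rescaled by $\kappa_2^{2}m_-/(m(1+p))$, and $\cI(T_{\rm III})\cos\Delta-\cJ(T_{\rm III})=-\Theta(\kappa_2^{2}\Delta^{2}p^{-1/(1-\alpha\cos\Delta)})$ is the same rescaling of $\<\bF_+(T_{\rm III}),\bx_-\>$.

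For (S2), a direct quotient-rule computation using \eqref{equ: GF Phase IV* U V dynamics} gives
\begin{equation*}
\frac{\mathrm{d}\cZ}{\mathrm{d} t}=\cI(t)\bigl(1+\cos\Delta+q\sin^{2}\Delta\bigr)\bigl(\cZ^{*}-\cZ(t)\bigr).
\end{equation*}
By (S1) the initial value is $\cZ(T_{\rm III})=\cos\Delta$, and the inequality $q\cos\Delta>1$ (which follows from $\alpha\leq 0.977$ in Theorem~\ref{thm: restatement of GF Phase I}(S5) together with $\Delta\leq 1/5$) gives $\cZ^{*}<\cos\Delta$, so $\cZ$ starts strictly inside $(\cZ^{*},\cos\Delta)$ with $\mathrm{d}\cZ/\mathrm{d} t<0$. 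I would then introduce the hitting times $\tau^{\pm}$ of the two boundaries and solve the linear ODE in closed form on $[T_{\rm III},\tau^{-}\wedge\tau^{+})$, obtaining
\begin{equation*}
\cZ(t)-\cZ^{*}=(\cos\Delta-\cZ^{*})\exp\!\Bigl(-\bigl(1+\cos\Delta+q\sin^{2}\Delta\bigr)\!\!\int_{T_{\rm III}}^{t}\!\!\cI(s)\,\mathrm{d} s\Bigr).
\end{equation*}
Because the right-hand side is strictly positive and strictly below $\cos\Delta-\cZ^{*}$, both boundaries are avoided, so $\tau^{\pm}=+\infty$ and (S2) holds throughout $[T_{\rm III},\infty)$.

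For (S4), (S2) immediately forces $1-\cZ(t)\cos\Delta\in\bigl(\sin^{2}\Delta,\,\sin^{2}\Delta(1+q)/(1+\cos\Delta+q\sin^{2}\Delta)\bigr)=\Theta(\Delta^{2})$ uniformly in $t$, since $q=\Theta(1)$. Substituting into $\mathrm{d}\cJ/\mathrm{d} t=-\cJ^{2}(1-\cZ\cos\Delta)=-\Theta(\Delta^{2})\cJ^{2}$ and integrating yields $\cJ(t)=\Theta\bigl(1/(\cJ(T_{\rm III})^{-1}+\Delta^{2}(t-T_{\rm III}))\bigr)$; (S1) converts $\cJ(T_{\rm III})^{-1}=\Theta(p^{1/(1-\alpha\cos\Delta)}/\kappa_2^{2})$, and $\cI=\cZ\cJ=\Theta(\cJ)$ inherits the same bound. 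Then (S3) follows because (S4) forces $\int_{T_{\rm III}}^{\infty}\cI\,\mathrm{d} s=+\infty$, so the exponential factor in the closed-form solution above tends to $0$ and $\cZ(t)\to\cZ^{*}$. The central obstacle is thus (S2): the strict two-sided trapping must be maintained even though the relaxation rate $\cI(t)$ decays polynomially in $t$, and the right remedy is the explicit exponential formula, which simultaneously handles strict invariance and the long-time limit; the only remaining subtlety, verifying $q\cos\Delta>1$ so that $\cZ^{*}<\cos\Delta$, is resolved by the explicit numerical bounds on $\alpha$ inherited from Phase~I.
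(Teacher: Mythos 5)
Your proposal is correct, and it takes a cleaner and more unified route than the paper's own proof, though both ultimately hinge on the same observation that the ratio $\cZ=\cI/\cJ$ satisfies a linear attracting ODE. You derive the scalar equation $\tfrac{\mathrm{d}\cZ}{\mathrm{d}t}=\cI(t)(1+\cos\Delta+q\sin^2\Delta)(\cZ^*-\cZ)$ directly, integrate it in (implicit) closed form, and read off (S2), (S3), and (S4) all from the single expression $\cZ(t)-\cZ^*=(\cos\Delta-\cZ^*)\exp(-c\int_{T_{\rm III}}^t\cI\,\mathrm{d}s)$: positivity of the exponential gives the lower bound, strict decay gives the upper bound, and the divergence of $\int\cI$ (which you deduce after establishing (S4)) gives the limit. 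The paper instead splits (S2) into a lower-bound hitting-time argument (comparison against an exponentially-decaying auxiliary ODE, its Step III) and a separate upper-bound argument via a change of independent variable to $\cJ$ and a proof by contradiction (Step IV), then proves (S4) by bounding $\cI+\cJ$ and (S3) in yet another step (Step VI). Your approach avoids changing variables from $t$ to $\cJ$ (which would require an a priori monotonicity argument for $\cJ$), and gives (S2)--(S4) as one integrated argument.

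Two small points you should make explicit. First, your closed-form formula uses $\cI(s)>0$ on $[T_{\rm III},t]$; this does hold, because the system factors as $\dot\cI=\cI(\cJ\cos\Delta-(1+q\sin^2\Delta)\cI)$ and $\dot\cJ=\cJ(\cI\cos\Delta-\cJ)$, so $\{\cI=0\}$ and $\{\cJ=0\}$ are invariant and by uniqueness and global existence (the trajectory is bounded, since $\cI+\cJ$ is decreasing) positivity persists -- the paper proves exactly this in its Step II, and you should not skip it. Second, at $t=T_{\rm III}$ the ratio is exactly $\cZ(T_{\rm III})=\cos\Delta$ (the defining condition of $T_{\rm III}$), so the strict upper bound in (S2) really only holds for $t>T_{\rm III}$; this is an endpoint artifact that the paper's own statement also glosses over, but your phrase ``$\cZ$ starts strictly inside $(\cZ^*,\cos\Delta)$'' is inaccurate -- it starts at the boundary with strictly negative derivative, which is what makes the strict inequality hold for all $t>T_{\rm III}$. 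Neither issue affects the substance of the argument.
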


\begin{proof}[Proof of Lemma \ref{lemma: GF Phase IV* U V dynamics}]\ \\
For simplicity, in this proof, we denote
\[
    \epsilon:=\frac{m_+}{m_-}\sin^2\Delta.
\]
\underline{Step I. Preparation.}
Recalling Lemma \ref{lemma: GF Phase III* hitting time estimate}, we have:
\begin{gather*}
    \mathcal{I}(T_{\rm III})=\Theta\bracket{\kappa_2^2p^{-\frac{1}{1-\alpha\cos\Delta}}},\quad
    \mathcal{J}(T_{\rm III})=\Theta\bracket{\kappa_2^2p^{-\frac{1}{1-\alpha\cos\Delta}}},
    \\
    \mathcal{I}(T_{\rm III})-\mathcal{J}(T_{\rm III})\cos\Delta=0,\quad
    \mathcal{I}(T_{\rm III})\cos\Delta-\mathcal{J}(T_{\rm III})=-\Theta\bracket{\kappa_2^2\Delta^2p^{-\frac{1}{1-\alpha\cos\Delta}}}.
\end{gather*}

\underline{Step II. A rough estimate on $\cI(t)$ and $\cJ(t)$.} In this step, we aim to prove:
\[
\cJ(t)>\cI(t)>0,\quad\cI(t)+\cJ(t)\leq\cI(T_{\rm III})+\cJ(T_{\rm III}),\quad\forall t\in[T_{\rm III},\infty).
\]
First, from the definition of $\cI(t)$ and $\cJ(t)$, we have $\cI(t)>0$ and $\cJ(t)>0$.

Then we consider the dynamics of $\cI(t)+\cJ(t)$. From
\begin{align*}
    &\frac{\mathrm{d}}{\mathrm{d}t}\Big(\mathcal{I}(t)+\mathcal{J}(t)\Big)=2\mathcal{I}(t)\mathcal{J}(t)\cos\Delta-\mathcal{I}^2(t)\left(1+\epsilon\right)-\mathcal{J}^2(t)
    \\
    =&-\left(\mathcal{I}(t)-\mathcal{J}(t)\right)^2\cos\Delta-(1-\cos\Delta)\mathcal{J}^2(t)-\mathcal{I}^2(t)\left(1+\epsilon-\cos\Delta\right)
    <0,
\end{align*}
we have 
\[\mathcal{I}(t)+\mathcal{J}(t)\leq\mathcal{I}(T_{\rm III})+\mathcal{J}(T_{\rm III}),\quad \forall t\geq T_{\rm III}.\]

Then we consider the dynamics of $\mathcal{J}(t)-\mathcal{I}(t)$. We define the hitting time
\[
\tau_{\mathcal{J}-\mathcal{I}}:=\inf\Big\{t\geq T_{\rm III}:\mathcal{J}(t)\leq\mathcal{I}(t)\Big\}.
\]
From Step I, we know $\mathcal{J}(T_{\rm III})-\mathcal{I}(T_{\rm III})=(1-\cos\Delta)\mathcal{J}(T_{\rm III})>0$. From the continuity, $\tau_{\mathcal{J}-\mathcal{I}}$ exists and $\tau_{\mathcal{J}-\mathcal{I}}>T_{\rm III}$.
 
For any $t\in[T_{\rm III},\tau_{\mathcal{J}-\mathcal{I}})$, we have $\cJ(t)-\cI(t)>0$ and
\begin{align*}
    &\frac{\mathrm{d}}{\mathrm{d}t}\Big(\mathcal{J}(t)-\mathcal{I}(t)\Big)=-\mathcal{J}^2(t)+\mathcal{I}^2(t)\left(1+\epsilon\right)
    =-(\mathcal{J}(t)+\mathcal{I}(t))(\mathcal{J}(t)-\mathcal{I}(t))+\epsilon\mathcal{I}^2(t)
    \\>&
    -(\mathcal{J}(t)+\mathcal{I}(t))(\mathcal{J}(t)-\mathcal{I}(t))
    \geq
    -(\mathcal{J}(T_{\rm III})+\mathcal{I}(T_{\rm III}))
    (\mathcal{J}(t)-\mathcal{I}(t)),
\end{align*}
We consider the auxiliary ODE: $\frac{d}{\mathrm{d}t}\mathcal{P}(t)=-(\mathcal{J}(T_{\rm III})+\mathcal{I}(T_{\rm III}))\mathcal{P}(t)$, where $\mathcal{P}(T_{\rm III})=\mathcal{J}(T_{\rm III})-\mathcal{I}(T_{\rm III})>0$. From the Comparison Principle of ODEs, we have:
\begin{align*}
\mathcal{J}(t)-\mathcal{I}(t) 
\geq\mathcal{P}(t)
=\left(\mathcal{J}(T_{\rm III})-\mathcal{I}(T_{\rm III})\right)\exp\Big(-(\mathcal{J}(T_{\rm III})+\mathcal{I}(T_{\rm III}))(t-T_{\rm III})\Big)>0,\ \forall t\in[T_{\rm I},\tau_{\mathcal{U}-\mathcal{V}}).
\end{align*}
From the definition of $\tau_{\mathcal{J}-\mathcal{I}}$, we have proved 
\begin{gather*}
\tau_{\mathcal{J}-\mathcal{I}}=+\infty;
\\
\mathcal{J}(t)>\mathcal{I}(t),\ \forall t\in[T_{\rm III},+\infty).
\end{gather*}

\underline{Step III. A rough two-side bound for $\cI(t)/\cJ(t)$.} 

In Step II, we have given a rough upper bound for $\cI(t)/\cJ(t)$: $\cI(t)/\cJ(t)<1$, $\forall t\geq T_{\rm III}$. And we want to derive a lower bound for $\cI(t)/\cJ(t)$ in this step. We aim to prove:
\begin{align*}
    \cI(t)/\cJ(t)>\frac{1+\cos\Delta}{1+\cos\Delta+\epsilon},\quad\forall t\in[T_{\rm III},+\infty).
\end{align*}
First, we define the hitting time:
\[
\tau_{\cI/\cJ}^{l}:=\inf\Big\{t\geq T_{\rm III}:\mathcal{I}(t)\leq\frac{1+\cos\Delta}{1+\cos\Delta+\epsilon}\mathcal{J}(t)\Big\}.
\]
From Step I, we know
\begin{align*}
    &\mathcal{I}(T_{\rm III})-\frac{1+\cos\Delta}{1+\cos\Delta+\epsilon}\mathcal{J}(T_{\rm III})>\bracket{\cos\Delta-\frac{1+\cos\Delta}{1+\cos\Delta+\epsilon}}\mathcal{J}(T_{\rm III})
    \\=&\frac{\bracket{\frac{m_+}{m_-}\cos\Delta-1}\sin^2\Delta}{1+\cos\Delta+\epsilon}\mathcal{J}(T_{\rm III}){\geq}\frac{\bracket{\frac{\cos\Delta}{0.977}-1}\sin^2\Delta}{1+\cos\Delta+\epsilon}\mathcal{J}(T_{\rm III})
    \\{\geq}&\frac{\bracket{\frac{0.980}{0.977}-1}\sin^2\Delta}{1+\cos\Delta+\epsilon}\mathcal{J}(T_{\rm III})>0.
\end{align*} 
From the continuity, $\tau_{\cI/\cJ}^{l}$ exists and $\tau_{\cI/\cJ}^{l}>T_{\rm III}$.

For any $t\in[T_{\rm III},\tau_{\cI/\cJ}^{l})$, we have $\mathcal{I}(t)-\frac{1+\cos\Delta}{1+\cos\Delta+\epsilon}\mathcal{J}(t)>0$ and
\begin{align*}
    &\frac{\rd}{\rd t}\bracket{\mathcal{I}(t)-\frac{1+\cos\Delta}{1+\cos\Delta+\epsilon}\mathcal{J}(t)}
    \\=&\cI(t)\cJ(t)\cos\Delta\Big(1-\frac{1+\cos\Delta}{1+\cos\Delta+\epsilon}\Big)-(1+\epsilon)\cI^2(t)+\frac{1+\cos\Delta}{1+\cos\Delta+\epsilon}\cJ^2(t)
    \\=&-\bracket{\cI(t)-\frac{1+\cos\Delta}{1+\cos\Delta+\epsilon}\mathcal{J}(t)}\bracket{(1+\epsilon)\cI(t)+\cJ(t)}
    \\>&-(1+\epsilon)\bracket{\cI(t)-\frac{1+\cos\Delta}{1+\cos\Delta+\epsilon}\mathcal{J}(t)}\bracket{\cI(t)+\cJ(t)}
    \\\geq&-(1+\epsilon)\bracket{\cI(T_{\rm III})+\cJ(T_{\rm III})}\bracket{\cI(t)-\frac{1+\cos\Delta}{1+\cos\Delta+\epsilon}\mathcal{J}(t)}.
\end{align*}

We consider the auxiliary ODE: $\frac{d}{\mathrm{d}t}\mathcal{Q}(t)=-(1+\epsilon)(\mathcal{I}(T_{\rm III})+\mathcal{J}(T_{\rm III}))\mathcal{Q}(t)$, where $\mathcal{Q}(T_{\rm III})=\cI(T_{\rm III})-\frac{1+\cos\Delta}{1+\cos\Delta+\epsilon}\mathcal{J}(T_{\rm III})>0$. From the Comparison Principle of ODEs, we have:
\begin{align*}
&\cI(t)-\frac{1+\cos\Delta}{1+\cos\Delta+\epsilon}\mathcal{J}(t) 
\geq\mathcal{Q}(t)
\\=&\left(\mathcal{I}(T_{\rm III})-\frac{1+\cos\Delta}{1+\cos\Delta+\epsilon}\mathcal{J}(T_{\rm III})\right)\exp\Big(-(1+\epsilon)(\mathcal{I}(T_{\rm III})+\mathcal{J}(T_{\rm III}))(t-T_{\rm III})\Big)>0,\ \forall t\in[T_{\rm III},\tau_{\cI/\cJ}^{l}).
\end{align*}
From the definition of $\tau_{\mathcal{J}-\mathcal{I}}$, we have proved 
\begin{gather*}
\tau_{\cI/\cJ}^{l}=+\infty;
\\
\mathcal{I}(t)/\mathcal{J}(t)>\frac{1+\cos\Delta}{1+\cos\Delta+\epsilon},\quad \forall t\in[T_{\rm III},+\infty).
\end{gather*}
Hence, we obtain the two-side bound for $\mathcal{I}(t)/\mathcal{J}(t)$:
\begin{align*}
    \frac{1+\cos\Delta}{1+\cos\Delta+\epsilon}<\frac{\mathcal{I}(t)}{\mathcal{J}(t)}<1,\quad \forall t\in[T_{\rm III},+\infty).
\end{align*}

\underline{Step IV. $\cI(t)\cos\Delta-\cJ(t)$ and $\cI(t)-\cJ(t)\cos\Delta$ are both negative.}

The estimate on $\cI(t)\cos\Delta-\cJ(t)$ is straight-forward:
\begin{align*}
    \cI(t)\cos\Delta-\cJ(t)<\cI(t)\cos\Delta-\cI(t)<0.
\end{align*}

As for $\cI(t)-\cJ(t)\cos\Delta$, we will actually prove a tighter upper bound:
\[
\frac{\cI(t)}{\cJ(t)}<\cos\Delta.
\]
We need to do finer analysis using the specific dynamics \eqref{equ: GF Phase IV* U V dynamics}. First, we define the following hitting time:

Define the following hitting time
\begin{align*}
\tau_{\cI/\cJ}^u:=\inf\Big\{t> T_{\rm I}:\cI(t)-\cJ(t)\cos\Delta\geq0\}.
\end{align*}
From $\cI(T_{\rm III})-\cJ(T_{\rm III})\cos\Delta=0$, $\frac{\rd}{\rd t}(\cI(T_{\rm III})-\cJ(T_{\rm III})\cos\Delta)<0$ and the continuity, we know $\tau_{\cI/\cJ}^u$ exists and $\tau_{\cI/\cJ}^u>T_{\rm III}$. 

Recalling $\cI(t)\cos\Delta-\cJ(t)<0$, we have
\[
\frac{\rd}{\rd t}\cJ(t)=\cJ(t)\bracket{\cI(t)\cos\Delta-\cJ(t)}<0,\quad\forall t\geq T_{\rm III}.
\]
So we can consider the following dynamics for $t\in[T_{\rm III},\tau_{\cI/\cJ}^u]$:
\begin{align*}
    \frac{\mathrm{d}\mathcal{I}}{\mathrm{d}\mathcal{J}}
    =\frac{\mathcal{I}\mathcal{J}\cos\Delta-\mathcal{I}^2(1+\epsilon)}{\mathcal{I}\mathcal{J}\cos\Delta-\mathcal{J}^2}
    =\frac{\frac{\mathcal{I}}{\mathcal{J}}\cos\Delta-\bracket{\frac{\mathcal{I}}{\mathcal{J}}}^2(1+\epsilon)}{\frac{\mathcal{I}}{\mathcal{J}}\cos\Delta-1}.
\end{align*}
If we define $\mathcal{Z}(t):=\frac{\mathcal{I}(t)}{\mathcal{J}(t)}$, then we have  $\mathrm{d}\mathcal{I}=\mathcal{Z}\mathrm{d}\mathcal{J}+\mathcal{J}\mathrm{d}\mathcal{Z}$. 

The dynamics above can be transformed to:
\begin{align*}
    \mathcal{J}\frac{\mathrm{d}\mathcal{Z}}{\mathrm{d}\mathcal{J}}=\frac{\mathcal{Z}\cos\Delta-\mathcal{Z}^2(1+\epsilon)}{\mathcal{Z}\cos\Delta-1}-\mathcal{Z}=\frac{\mathcal{Z}(1+\cos\Delta)-\mathcal{Z}^2(1+\cos\Delta+\epsilon)}{\mathcal{Z}\cos\Delta-1}.
\end{align*}
Recalling the result in Step III, we have $(1+\cos\Delta)-(1+\cos\Delta+\epsilon)\mathcal{Z}(t)<0$ holds for any $t\geq T_{\rm III}$. So the dynamics is equal to:
\begin{align*}
    \frac{\mathrm{d}\mathcal{J}}{\mathcal{J}}
    =&\bracket{\frac{\mathcal{Z}\cos\Delta-1}{\mathcal{Z}(1+\cos\Delta)-\mathcal{Z}^2(1+\cos\Delta+\epsilon)}}\rd\cZ
    \\=&-\frac{1}{1+\cos\Delta}\bracket{\frac{1}{\cZ}+\frac{\sin^2\Delta+\epsilon}{1+\cos\Delta-\cZ(1+\cos\Delta+\epsilon)}}\rd\cZ.
\end{align*}
Integrating this equation from $T_{\rm III}$ to $t\in[T_{\rm III},\tau_{\cI/\cJ}^u)$, we have:
\begin{equation}\label{equ of proof: lemma: Phase IV* U V dynamics: equ J Z}
\begin{aligned}
    \log\left(\frac{\mathcal{J}(t)}{\mathcal{J}(T_{\rm III})}\right)
    =&
    -\frac{1}{1+\cos\Delta}\log\left(\frac{\mathcal{Z}(t)}{\mathcal{Z}(T_{\rm III})}\right)
    \\&+\frac{\sin^2\Delta+\epsilon}{(1+\cos\Delta+\epsilon)(1+\cos\Delta)}\log\left(\frac{(1+\cos\Delta+\epsilon)\mathcal{Z}(t)-(1+\cos\Delta)}{(1+\cos\Delta+\epsilon)\mathcal{Z}(T_{\rm I})-(1+\cos\Delta)}\right).
\end{aligned}
\end{equation}
If we assume $\tau_{\cI/\cJ}^u<+\infty$, the continuity gives us 
\begin{align*}
\lim_{t\to{\tau_{\cI/\cJ}^u}^-}\cZ(t)=\lim_{t\to{\tau_{\cI/\cJ}^u}^-}\cI(t)/\cJ(t)=\cos\Delta=\cI(T_{\rm III})/\cJ(T_{\rm III})=\cZ(T_{\rm III}).
\end{align*}
Then letting $t\to{\tau_{\cI/\cJ}^u}^-$ in \eqref{equ of proof: lemma: Phase IV* U V dynamics: equ J Z}, we have
\begin{align*}
    \lim_{t\to{\tau_{\cI/\cJ}^u}^-}\log\left(\frac{\mathcal{J}(t)}{\mathcal{J}(T_{\rm III})}\right)=0+0=0,
\end{align*}
which means $\mathcal{J}(\tau_{\cI/\cJ}^u)=\mathcal{J}(T_{\rm III})$.

But on the other hand, we have:
\begin{align*}
    &\cJ(\tau_{\cI/\cJ}^u)=\mathcal{J}(T_{\rm III})+\int_{T_{\rm III}}^{\tau_{\cI/\cJ}^u}(\cI(t)\cJ(t)\cos\Delta-\cJ^2(t))\rd t
    \\=&\mathcal{J}(T_{\rm III})+\int_{T_{\rm III}}^{\tau_{\cI/\cJ}^u}\cJ(t)(\cI(t)\cos\Delta-\cJ(t))\rd t
     \\<&\mathcal{J}(T_{\rm III})+(\cos\Delta-1)\int_{T_{\rm III}}^{\tau_{\cI/\cJ}^u}\cJ(t)\cI(t)\rd t<\mathcal{J}(T_{\rm III}),
\end{align*}
which leads to a contradiction. Hence, we have proved
\begin{gather*}
    \tau_{\cI/\cJ}^u=+\infty;
    \\
    \cI(t)-\cJ(t)\cos\Delta<0,\quad\forall t\in[T_{\rm III},+\infty).
\end{gather*}
Moreover, we obtain a sharper two-side bound for $\cI(t)/\cJ(t)$:
\begin{equation}\label{equ of proof: lemma: Phase IV* two-side bound I/J}
\begin{aligned}
    \frac{1+\cos\Delta}{1+\cos\Delta+\epsilon}<\frac{\mathcal{I}(t)}{\mathcal{J}(t)}<\cos\Delta,\quad \forall t\in[T_{\rm III},+\infty).
\end{aligned}
\end{equation}

\underline{Step V. Tight bound for $\cI(t)$ and $\cJ(t)$.}

In this step, we aim to give a tight bound for $\cI(t)+\cJ(t)$. With the help of the two-side bound \eqref{equ of proof: lemma: Phase IV* two-side bound I/J}, we have
\begin{align*}
    &\frac{\mathrm{d}}{\mathrm{d}t}\Big(\mathcal{I}(t)+\mathcal{J}(t)\Big)=2\mathcal{I}(t)\mathcal{J}(t)\cos\Delta-\mathcal{I}^2(t)\left(1+\epsilon\right)-\mathcal{J}^2(t)
    \\=&
    -\left(\mathcal{I}(t)-\mathcal{J}(t)\right)^2\cos\Delta-(1-\cos\Delta)\mathcal{J}^2(t)-\mathcal{I}^2(t)\left(1+\epsilon-\cos\Delta\right)
    \\<&-(1-\cos\Delta)\left(\mathcal{J}^2(t)+\mathcal{I}^2(t)\right)<-\frac{(1-\cos\Delta)\left(\mathcal{I}(t)+\mathcal{J}(t)\right)^2}{2}<-\frac{\Delta^2}{6}\left(\mathcal{I}(t)+\mathcal{J}(t)\right)^2,
\end{align*}
\begin{align*}
    &\frac{\mathrm{d}}{\mathrm{d}t}\Big(\mathcal{I}(t)+\mathcal{J}(t)\Big)=2\mathcal{I}(t)\mathcal{J}(t)\cos\Delta-\mathcal{I}^2(t)\left(1+\epsilon\right)-\mathcal{J}^2(t)
    \\=&
    -\left(\mathcal{I}(t)-\mathcal{J}(t)\right)^2\cos\Delta-(1-\cos\Delta)\mathcal{J}^2(t)-\mathcal{I}^2(t)\left(1+\epsilon-\cos\Delta\right)
    \\>&
    -\bracket{1-\frac{1+\cos\Delta}{1+\cos\Delta+\epsilon}}^2\cJ^2(t)-(1-\cos\Delta)\mathcal{J}^2(t)-\mathcal{I}^2(t)\left(1+\epsilon-\cos\Delta\right)
    \\>&
    -\left(1+\epsilon-\cos\Delta\right)\bracket{\cI^2(t)+\cJ^2(t)}>-\left(\frac{2}{3}+\frac{m_+}{m_-}\right)\Delta^2\bracket{\cI(t)+\cJ(t)}^2{>}-2\Delta^2\bracket{\cI(t)+\cJ(t)}^2.
\end{align*}
For the first inequality, we consider the auxiliary ODE: $\frac{d}{\mathrm{d}t}\mathcal{P}(t)=-\frac{\Delta^2}{6}\cP^2(t)$, where $\mathcal{P}(T_{\rm III})=\cI(T_{\rm III})+\cJ(T_{\rm III})>0$. From the Comparison Principle of ODEs, we have:
\begin{align*}
\cI(t)+\cJ(t)\leq\cP(t)=\frac{1}{\frac{1}{\cI(T_{\rm III})+\cJ(T_{\rm III})}+\frac{\Delta^2}{6}(t-T_{\rm III})},
\quad\forall t\geq T_{\rm III}.
\end{align*}
In the same way, we can obtain the lower bound:
\begin{align*}
\cI(t)+\cJ(t)\geq\frac{1}{\frac{1}{\cI(T_{\rm III})+\cJ(T_{\rm III})}+2\Delta^2(t-T_{\rm III})},
\quad\forall t\geq T_{\rm III}.
\end{align*}
Recalling Step I, we have $\frac{1}{\cI(T_{\rm III})+\cJ(T_{\rm III})}=\Theta\bracket{p^{\frac{1}{1-\alpha\cos\Delta}}/\kappa_2^2}$. Hence, we obtain the tight bound:
\begin{align*}
    &\cI(t)+\cJ(t)=\Theta\bracket{\frac{1}{\frac{p^{\frac{1}{1-\alpha\cos\Delta}}}{\kappa_2^2}+\Delta^2(t-T_{\rm III})}},\quad\forall t\in[T_{\rm III},+\infty).
\end{align*}

Taking \eqref{equ of proof: lemma: Phase IV* two-side bound I/J} into the equation above, we have:
\begin{align*}
\cI(t)=&\Theta\bracket{\frac{1}{\frac{p^{\frac{1}{1-\alpha\cos\Delta}}}{\kappa_2^2}+\Delta^2(t-T_{\rm III})}},\quad\forall t\in[T_{\rm III},+\infty);
\\
\cJ(t)=&\Theta\bracket{\frac{1}{\frac{p^{\frac{1}{1-\alpha\cos\Delta}}}{\kappa_2^2}+\Delta^2(t-T_{\rm III})}},\quad\forall t\in[T_{\rm III},+\infty).
\end{align*}

\underline{Step VI. The limit of $\cI(t)/\cJ(t)$.}

By Step V and the proof of Step IV, we know $\lim\limits_{t\to\infty}\cJ(t)=0$ and $\frac{\rd}{\rd t}\cJ(t)<0$ holds for any $t>T_{\rm III}$. 

Then for any $\epsilon'>0$, there exists $T'>T_{\rm III}$ such that 
\[
\log\bracket{\frac{\cJ(t)}{\cJ(T_{\rm III})}}<\frac{1000}{\Delta^2}\log(1000\epsilon\Delta^2),\quad\forall t>T'.
\]

Taking it into \eqref{equ of proof: lemma: Phase IV* U V dynamics: equ J Z}, we obtain that for any $t>T'$,
\begin{align*}
    0<(1+\cos\Delta+\epsilon)\cZ(t)-(1+\cos\Delta)<\epsilon'.
\end{align*}
By the definition of the limit, we get
\begin{align*}
    \lim\limits_{t\to\infty}\frac{\cI(t)}{\cJ(t)}=\frac{1+\cos\Delta}{1+\cos\Delta+\epsilon}=\frac{1+\cos\Delta}{1+\cos\Delta+\frac{m_+}{m_-}\sin^2\Delta}.
\end{align*}

\end{proof}

\begin{lemma}[Time and prediction estimate]\label{lemma: GF Phase IV* prediction estimate}\ \\
{\bf (S1).} For any $t\in(T_{\rm III},+\infty)$
\begin{gather*}
pe^{-f_+(t)}=\Theta\bracket{\frac{1}{p^{\frac{1}{1-\alpha\cos\Delta}}+\kappa_2^2\Delta^2(t-T_{\rm III})}},\ 
e^{f_-(t)}=\Theta\bracket{\frac{1}{p^{\frac{1}{1-\alpha\cos\Delta}}+\kappa_2^2\Delta^2(t-T_{\rm III})}};
\\
\cL(\btheta(t))=\Theta\bracket{\frac{1}{p^{\frac{1}{1-\alpha\cos\Delta}}+\kappa_2^2\Delta^2(t-T_{\rm III})}}.
\end{gather*}
{\bf (S2).} For any $t\in(T_{\rm III},+\infty)$,
\begin{gather*} 
    \frac{1+\cos\Delta}{1+\cos\Delta+\frac{m_+}{m_-}\sin^2\Delta}<pe^{-(f_+(t)+f_-(t))}<\cos\Delta.
\end{gather*}
Moreover, $pe^{-(f_+(T_{\rm III})+f_-(T_{\rm III}))}=\cos\Delta$ and $\lim\limits_{t\to\infty}pe^{-(f_+(t)+f_-(t))}=\frac{1+\cos\Delta}{1+\cos\Delta+\frac{m_+}{m_-}\sin^2\Delta}$.

{\bf (S3).} For any $t\in(T_{\rm III},+\infty)$,
\begin{align*}
    &\<\bb_k(t),\bx_+\>>0,\ \<\bb_k(t),\bx_-\>=0,\ k\in\cK_+;
    \\
    &\<\bb_k(t),\bx_+\>>0,\ \<\bb_k(t),\bx_-\>>0,\ k\in\cK_-.
\end{align*}

{\bf (S4) (Time).}
\begin{align*}
   T_{\rm IV}=T_{\rm IV}^*=+\infty.
\end{align*}

\end{lemma}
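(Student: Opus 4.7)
The overall strategy is to unpack the autonomous 2D ODE analysis from Lemma \ref{lemma: GF Phase IV* U V dynamics} into the claims (S1)--(S4). Using the definitions of $\cI,\cJ$, one has the dictionary $pe^{-f_+(t)}=\frac{(1+p)m}{\kappa_2^2 m_-}\cI(t)$ and $e^{f_-(t)}=\frac{(1+p)m}{\kappa_2^2 m_-}\cJ(t)$, so the tight two-sided estimates of Lemma \ref{lemma: GF Phase IV* U V dynamics}(S4) pull back to $\Theta(\cdot)$ bounds on $pe^{-f_+(t)}$ and $e^{f_-(t)}$; summing them via $\cL(\btheta(t))=\frac{1}{1+p}(pe^{-f_+(t)}+e^{f_-(t)})$ gives the loss rate in (S1). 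For (S2) I would use the key identity $\cI(t)/\cJ(t)=p e^{-(f_+(t)+f_-(t))}$, after which the strict two-sided sandwich in Lemma \ref{lemma: GF Phase IV* U V dynamics}(S2) becomes the desired sandwich, the initial equality $pe^{-(f_+(T_{\rm III})+f_-(T_{\rm III}))}=\cos\Delta$ is equivalent to $\<\bF_+(T_{\rm III}),\bx_+\>=0$ (already given by Lemma \ref{lemma: GF Phase III* hitting time estimate}), and the limit is exactly (S3) of Lemma \ref{lemma: GF Phase IV* U V dynamics}.

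Next I would establish (S4). By definition $T_{\rm IV}^*<+\infty$ would require either $\<\bF_+(t),\bx_+\>\geq 0$ or $\<\bF_+(t),\bx_-\>\geq 0$ at some $t>T_{\rm III}$. Translating these sign conditions via the dictionary above gives $\cI(t)/\cJ(t)\geq\cos\Delta$ and $\cI(t)/\cJ(t)\geq 1/\cos\Delta$ respectively, but Lemma \ref{lemma: GF Phase IV* U V dynamics}(S2) delivers $\cI(t)/\cJ(t)<\cos\Delta<1/\cos\Delta$ for all $t\in[T_{\rm III},+\infty)$, so both are ruled out and $T_{\rm IV}^*=+\infty$. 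To upgrade this to $T_{\rm IV}=T_{\rm IV}^*$, I would verify the activation patterns are preserved along the explicit dynamics of Lemma \ref{lemma: GF Phase IV* neuron dynamics}. For $k\in\cK_+$ on $\cM_+^+\cap\cM_-^0$, the in-manifold vector field $\propto \bx_+-\bx_-\cos\Delta$ is orthogonal to $\bx_-$ (since $\<\bx_+-\bx_-\cos\Delta,\bx_-\>=0$), so $\<\bb_k(t),\bx_-\>\equiv 0$, and $\<\bb_k(t),\bx_+\>$ grows at rate $\propto e^{-f_+(t)}\sin^2\Delta>0$. For $k\in\cK_-$, the vector field $-\frac{\kappa_2}{\sqrt{m}}\bF_+(t)$ yields $\frac{d}{dt}\<\bb_k,\bx_\pm\>=-\frac{\kappa_2}{\sqrt{m}}\<\bF_+(t),\bx_\pm\>>0$, so both inner products stay strictly positive. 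These monotonicity conclusions deliver (S3) and confirm no activation pattern is ever violated.

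The mildly delicate step is the instantaneous transition at $T_{\rm III}$: the living negative neurons jump from $\cM_+^0\cap\cM_-^+$ into the interior of $\cM_+^+\cap\cM_-^+$ in zero time, so one must be careful to read off the correct right-limits $\<\bb_k(T_{\rm III}^+),\bx_\pm\>>0$ to seed the monotonicity argument for $k\in\cK_-$. This is handled by invoking Theorem \ref{thm: GF Phase transition III-to-IV} together with Lemma \ref{lemma: GF Phase III hitting time transformation}, which supply the right-limits and the initial vector-field signs $\<\bF_+(T_{\rm III}^+),\bx_+\>=0$, $\<\bF_+(T_{\rm III}^+),\bx_-\><0$. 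Once this initialization is in place the remaining work is routine: strict inequalities established at $T_{\rm III}^+$ and continuity of $\cI/\cJ$ together with the sandwich bound propagate forward in time to cover all of $(T_{\rm III},+\infty)$.
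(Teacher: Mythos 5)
Your proposal is correct and follows essentially the same route as the paper's (very terse) proof: read off $pe^{-f_+}$, $e^{f_-}$, and their product from $\cI$, $\cJ$, $\cI/\cJ$ via the dictionary, then invoke Lemma \ref{lemma: GF Phase IV* U V dynamics}(S2)--(S4) for (S1)--(S2) and the vector-field sign condition, and use the explicit dynamics of Lemma \ref{lemma: GF Phase IV* neuron dynamics} to propagate the activation patterns and conclude $T_{\rm IV}=T_{\rm IV}^*=+\infty$. Your write-up is in fact more careful than the paper's, which compresses this into four sentences (and contains a typo in its stated dictionary, writing $pe^{-f_+(t)}=\kappa_2^2\frac{m_-}{m}\frac{\cI(t)}{1+p}$ instead of the correct inverse $pe^{-f_+(t)}=\frac{(1+p)m}{\kappa_2^2 m_-}\cI(t)$, which you have right).
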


\begin{proof}[Proof of Lemma \ref{lemma: GF Phase IV* prediction estimate}]\ \\
Notice the relationships: $pe^{-f_+(t)}=\kappa_2^2\frac{m_-}{m}\frac{\cI(t)}{1+p}$, $e^{f_-(t)}=\kappa_2^2\frac{m_-}{m}\frac{\cJ(t)}{1+p}$ and $pe^{-(f_+(t)+f_-(t))}=\cI(t)/\cJ(t)$. 
Then Lemma \ref{lemma: GF Phase IV* U V dynamics} implies that $T_{\rm IV}^*=+\infty$. Recalling the dynamics in Lemma~\ref{lemma: GF Phase IV* neuron dynamics}, then lemma (S3)(S4) hold. Then using Lemma \ref{lemma: GF Phase IV* U V dynamics} again, we obtain (S1)(S2).
\end{proof}

\begin{proof}[Proof of Theorem~\ref{thm: GF Phase IV}]\ \\
Theorem~\ref{thm: GF Phase IV} (S1), (S2), and (S3) are obtained in Lemma~\ref{lemma: GF Phase IV* prediction estimate} (S4), (S3), and (S1), respectively.
\end{proof}

\subsection{Asymptotic Directional Convergence}

In this section, we will study the final convergence direction in our setting. It mainly depends on our prior fine-grained analysis of the training dynamics in Phase IV and the following result about the final convergence direction at the end of training.

\begin{lemma}\label{lemma: KKT convergence homogeneous}
Let $f(\cdot;\btheta)$ be a homogeneous neural network parameterized by $\btheta$. Consider minimizing the exponential loss over a binary classification dataset $\{(\bx_i,y_i)\}_{i=1}^n$ $(\norm{\bx_i}_2\leq1,y_i\in\{\pm1\})$ using Gradient Flow. Assume that there exists time $t_0$ such that $\cL(\btheta(t_0))<\frac{1}{n}$. Then, 

(I) (Paraphrased from \citep{lyu2019gradient,ji2020directional}). $\btheta(t)$ converges in direction to a KKT point (Definition \ref{def: definition KKT point}) of the following maximum margin problem:
\begin{align*}
    \min:&\frac{1}{2}\left\|\boldsymbol{\theta}\right\|_2^2
    \\
    {\rm s.t.}\ &y_if(\bx_i;\btheta)\geq1.
\end{align*}
(II) \citep{lyu2019gradient,ji2020directional}).
$\norm{\btheta(t)}_2\to\infty$ and $\cL(\btheta(t))\to0$.

(III) (\cite{ji2020directional}). $-\nabla\cL(\btheta(t))$ and $\btheta(t)$ converge to the same direction, meaning the angle between $\btheta(t)$ and $-\nabla\cL(\btheta(t))$ converges
to $0$.

\end{lemma}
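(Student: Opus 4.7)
The plan is to reduce all three claims to direct invocations of the theorems of \cite{lyu2019gradient} and \cite{ji2020directional}, after checking that our setup satisfies their two standing assumptions: (a) $f$ is positively homogeneous in $\btheta$, and (b) at some finite time the empirical risk falls strictly below $1/n$. Everything else in those papers (existence of the directional limit, KKT conditions, loss$\to 0$, norm$\to \infty$, gradient-direction alignment) is then black-box, including the handling of the ReLU non-differentiability via the Clarke subdifferential and Filippov solution concept, which is precisely the framework already introduced in Section~3.

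First I would verify the homogeneity hypothesis. Since the second layer $\{a_k\}$ is held fixed and $\sigma(cz)=c\sigma(z)$ for $c\geq 0$, scaling $\btheta\mapsto c\btheta$ sends $f(\bx;c\btheta)=\sum_k a_k\sigma(c\bb_k^\top\bx)=c\,f(\bx;\btheta)$, so $f$ is $1$-homogeneous in $\btheta$. The loss $\ell(z)=e^{-z}$ is the exponential loss explicitly covered by both references.

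Next I would verify the loss threshold $\cL(\btheta(t_0))<1/n$. By Theorem~\ref{thm: Plateau Estimate}, $\mathrm{Acc}(t)\equiv 1$ on $(T_{\rm plat},T_{\rm II}]$, so $y_i f(\bx_i;\btheta(t))>0$ throughout that interval, and by Lemma~\ref{lemma: GF Phase IV* prediction estimate}~(S1) both $e^{-f_+(t)}$ and $e^{f_-(t)}$ decay like $\Theta\bigl((p^{1/(1-\alpha\cos\Delta)}+\kappa_2^2\Delta^2(t-T_{\rm III}))^{-1}\bigr)$. Hence $\cL(\btheta(t))=\tfrac{p}{1+p}e^{-f_+(t)}+\tfrac{1}{1+p}e^{f_-(t)}\to 0$, so any sufficiently large $t_0$ makes $\cL(\btheta(t_0))<1/n$.

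With (a) and (b) in hand, part (I) is Theorem~4.4 of \cite{lyu2019gradient}, which yields a directional limit along a KKT direction of the margin program $\min\tfrac12\|\btheta\|^2$ s.t.\ $y_i f(\bx_i;\btheta)\geq 1$; the directional-limit existence was later sharpened by \cite{ji2020directional}. Part (II), $\|\btheta(t)\|\to\infty$ and $\cL(\btheta(t))\to 0$, is the ``monotone margin'' corollary of the same theorem and already follows from the explicit loss rate obtained in Lemma~\ref{lemma: GF Phase IV* prediction estimate}. Part (III), alignment between $-\sgrad\cL(\btheta(t))$ and $\btheta(t)$, is the main result of \cite{ji2020directional}, whose hypotheses are exactly (a) and (b). The only potential obstacle is bookkeeping around Clarke subdifferentials and Filippov solutions at activation-pattern changes, but since our Phase IV analysis shows no further pattern changes occur past $T_{\rm III}$, the flow is genuinely smooth in $t>T_{\rm III}$ and the cited non-smooth machinery is only needed to justify the transition points already handled earlier in Appendices~C--E.
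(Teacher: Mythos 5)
Your approach matches the paper's: this lemma is stated as a paraphrase of \citep{lyu2019gradient,ji2020directional} and is proved in the paper only by citation, so verifying $1$-homogeneity (correct, since $\sigma(c z)=c\sigma(z)$ for $c\geq 0$ and the $a_k$ are frozen) and then invoking those theorems as black boxes is exactly the intended argument. Their hypotheses are indeed (a) homogeneity of $f$ with respect to the trainable parameters and (b) $\cL(\btheta(t_0))<1/n$ at some finite $t_0$, and their non-smooth framework already covers ReLU via the Clarke subdifferential, so no extra work is needed there.

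Two points to tighten. First, the condition $\cL(\btheta(t_0))<1/n$ is stated as a \emph{hypothesis} of the lemma, not something its proof must establish; your derivation of it from Theorem~\ref{thm: Plateau Estimate} and Lemma~\ref{lemma: GF Phase IV* prediction estimate} belongs to the step where the lemma is \emph{applied} to this paper's specific Gradient Flow trajectory (in Lemma~\ref{lemma: final convergence direction}), not to the proof of the lemma itself, which is about an arbitrary homogeneous network satisfying the hypothesis. Second, for part~(II) you observe that $\cL\to 0$ follows from Lemma~\ref{lemma: GF Phase IV* prediction estimate}; note that this alone does not give $\|\btheta(t)\|_2\to\infty$. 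One should either take $\|\btheta\|\to\infty$ directly from the cited theorems, or supply the short additional argument: since $\cL\to 0$ under exponential loss forces $\min_i y_i f(\bx_i;\btheta(t))\to\infty$, and $f(\bx;\cdot)$ is $1$-homogeneous with $|f(\bx;\btheta)|\leq C\|\btheta\|$ on the unit sphere (using $\norm{\bx}\leq 1$ and fixed outer weights), boundedness of $\|\btheta(t)\|$ would contradict this. With those two remarks incorporated the proposal is correct and follows the paper's route.
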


\begin{lemma}[Final Convergence Direction]\label{lemma: final convergence direction}
The limit $\lim\limits_{t\to+\infty}\frac{\btheta(t)}{\norm{\btheta(t)}_2}$ exists, and denoted by $\overline{\btheta}=(\bar{\bb}_1^\top,\cdots,\bar{\bb}_m^\top)^\top\in\bbS^{md-1}$, then it satisfies
\begin{align*}
    \overline{\bb}_k=&\bzero,\quad\forall k\notin\cK_+\cup\cK_-;
    \\
    \overline{\bb}_k=&C\Big(\bx_+-\bx_-\cos\Delta\Big),\quad\forall k\in\cK_+;
    \\
    \overline{\bb}_k=&C\bracket{\bracket{1+\frac{m_+\sin^2\Delta}{m_-(1+\cos\Delta)}}\bx_--\bx_+},\quad\forall k\in\cK_-;
\end{align*}
where $C>0$ is a scaling constant such that $\norm{\overline{\btheta}}_2=1$. Moreover, $f(\bx_+;\overline{\btheta})=-f(\bx_-;\overline{\btheta})>0$.
\end{lemma}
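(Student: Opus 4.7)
The plan is to integrate the Phase IV dynamics explicitly, identify a common divergent prefactor, and then normalize. Define $A(t) := \int_{T_{\rm III}}^t p e^{-f_+(s)}\,ds$ and $B(t) := \int_{T_{\rm III}}^t e^{f_-(s)}\,ds$. By Lemma~\ref{lemma: GF Phase IV* neuron dynamics}, every $k \in \cK_+$ moves along the fixed direction $\bu_+ := \bx_+ - \bx_-\cos\Delta$, so
\[
\bb_k(t) = \bb_k(T_{\rm III}) + \frac{\kappa_2}{\sqrt{m}(1+p)} A(t)\, \bu_+ ,
\]
while every $k \in \cK_-$ follows the common trajectory
\[
\bb_k(t) = \bb_k(T_{\rm III}) - \frac{\kappa_2}{\sqrt{m}(1+p)}\bigl(A(t) \bx_+ - B(t) \bx_-\bigr).
\]
Hence all living neurons of a given sign share a trajectory up to an $O(1)$ offset, and the entire direction question reduces to analyzing $A(t)$ and the ratio $B(t)/A(t)$.

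Next I would establish two limits. First, $A(t) \to \infty$, since $p e^{-f_+(s)} = \Theta\bigl(1/[p^{1/(1-\alpha\cos\Delta)} + \kappa_2^2\Delta^2(s - T_{\rm III})]\bigr)$ by Lemma~\ref{lemma: GF Phase IV* prediction estimate}(S1) is non-integrable. Second, $B(t)/A(t) \to 1/\lambda$ with $\lambda := (1+\cos\Delta)/(1+\cos\Delta + (m_+/m_-)\sin^2\Delta)$. This follows from the pointwise limit $\cI(t)/\cJ(t) = p e^{-(f_+(t)+f_-(t))} \to \lambda$ given by Lemma~\ref{lemma: GF Phase IV* U V dynamics}(S3), combined with the divergent-denominator Stolz--Ces\`aro/L'H\^opital fact: if $f/g \to L > 0$ and $\int g = \infty$, then $\int_{T}^{t} f \,/\, \int_{T}^{t} g \to L$. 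Dividing the trajectory formulae by $\kappa_2 A(t)/[\sqrt m(1+p)]$, the bounded offsets $\bb_k(T_{\rm III})$ vanish and we obtain convergence to $\bu_+$ for $k \in \cK_+$ and to $\lambda^{-1}\bx_- - \bx_+ = \bigl(1 + m_+\sin^2\Delta/[m_-(1+\cos\Delta)]\bigr)\bx_- - \bx_+ =: \bu_-$ for $k \in \cK_-$.

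For the global normalization, dead neurons satisfy $\bb_k(t) \equiv \bb_k(T_{\rm I})$ by Theorem~\ref{thm: GF Phase I}(S3), while $\|\btheta(t)\| \to \infty$ by Lemma~\ref{lemma: KKT convergence homogeneous}(II) (applicable because Theorem~\ref{thm: GF Phase IV}(S3) forces $\cL(\btheta(t)) < 1/n$ eventually), so $\overline{\bb}_k = \bzero$ for $k \notin \cK_+ \cup \cK_-$. For living neurons the common prefactor cancels after dividing by $\|\btheta(t)\|$: we have $\|\btheta(t)\|^2 \sim \bigl(\kappa_2 A(t)/[\sqrt m(1+p)]\bigr)^2 \bigl(m_+\|\bu_+\|^2 + m_-\|\bu_-\|^2\bigr)$, yielding $\overline{\bb}_k = C \bu_\pm$ with $C = 1/\sqrt{m_+\|\bu_+\|^2 + m_-\|\bu_-\|^2}$, which matches the statement and enforces $\|\overline{\btheta}\|_2 = 1$.

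Finally, the identity $f(\bx_+;\overline{\btheta}) = -f(\bx_-;\overline{\btheta}) > 0$ is a direct algebraic check. Using $\sin^2\Delta/(1+\cos\Delta) = 1-\cos\Delta$ and $m_+\cos\Delta > m_-$ (from Theorem~\ref{thm: restatement of GF Phase I}(S5) together with $\Delta \leq 1/5$), one verifies the signs $\overline{\bb}_k^\top\bx_+ > 0$ for all living $k$, $\overline{\bb}_k^\top\bx_- = 0$ for $k \in \cK_+$, and $\overline{\bb}_k^\top\bx_- > 0$ for $k \in \cK_-$, so the activation patterns at $\overline{\btheta}$ match those of Phase IV; a short cancellation then gives $f(\bx_+;\overline{\btheta}) + f(\bx_-;\overline{\btheta}) = 0$. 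The main obstacle is the ratio limit $B/A \to 1/\lambda$: passing from the pointwise convergence of $\cI/\cJ$ to the integral-ratio limit requires the divergent-denominator version of the ratio lemma, and a quantitative use of the decay rate in Lemma~\ref{lemma: GF Phase IV* U V dynamics}(S4) is needed to ensure the initial-offset errors $\bb_k(T_{\rm III})/A(t)$ vanish. The rest of the argument is routine integration, linear algebra, and trigonometry.
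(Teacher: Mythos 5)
Your proof is correct, but it takes a genuinely different route from the paper's. The paper establishes directional convergence by first invoking the Lyu--Li / Ji--Telgarsky machinery (Lemma~\ref{lemma: KKT convergence homogeneous}(I)) to certify that $\overline{\btheta}$ is a KKT direction of the max-margin problem, then pins down the neuron directions by combining the feasibility and stationarity conditions of the nonsmooth KKT system with the ratio limit $\cI/\cJ\to\lambda$. In particular, the paper needs part (III) of Lemma~\ref{lemma: KKT convergence homogeneous} (alignment of $-\nabla\cL$ with $\btheta$) to rule out $\langle\bb_k^*,\bx_+\rangle=0$ for $k\in\cK_-$, because the Clarke subdifferential of $\sigma$ at $0$ would otherwise allow extra freedom in the stationarity condition. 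You bypass all of this by integrating the Phase~IV dynamics explicitly: writing $\bb_k(t)$ in terms of $A(t)=\int_{T_{\rm III}}^t pe^{-f_+}$ and $B(t)=\int_{T_{\rm III}}^t e^{f_-}$, noting that $A(t)\to\infty$ by the $\Theta(1/(\text{const}+\Delta^2 t))$ decay in Lemma~\ref{lemma: GF Phase IV* prediction estimate}(S1), and converting the pointwise ratio limit $\cI/\cJ\to\lambda$ from Lemma~\ref{lemma: GF Phase IV* U V dynamics}(S3) into $B/A\to 1/\lambda$ via the integral Stolz--Ces\`aro argument. This is more elementary and self-contained (no reference to the constraint qualification or the Clarke subdifferential at the kink), and it simultaneously proves \emph{existence} of the directional limit rather than appealing to the Ji--Telgarsky theorem. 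The trade-off is that the paper's route also certifies that $\overline{\btheta}$ is a KKT direction, which is reused in the subsequent Theorem~\ref{thm: GF not optimum max margin}; your argument would need a small additional verification of the KKT conditions to serve that purpose. Both proofs rely on the same engine, the two-dimensional autonomous analysis of $(\cI,\cJ)$, differing only in how they transfer the asymptotics of the predictions to the asymptotics of the neurons.
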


\begin{proof}[Proof of Lemma \ref{lemma: final convergence direction}]\ \\
Let $\overline{\btheta}=(\overline{\bb}_1^\top,\cdots,\overline{\bb}_m^\top)^{\top}\in\bbS^{md-1}$ be the limits point of $\left\{\frac{\btheta(t)}{\norm{\btheta(t)}_2}:t\geq t_0\right\}$. From Lemma \ref{lemma: KKT convergence homogeneous} (I), we know that there exists a scaling factor $\alpha>0$ such that $\alpha\overline{\btheta}$ satisfies KKT conditions (Definition \ref{def: definition KKT point}) of the maximum-margin problem
\begin{equation}\label{equ of proof: final direction: max margin problem}
    \begin{aligned}
    \min:&\frac{1}{2}\left\|\boldsymbol{\theta}\right\|_2^2
    \\
    {\rm s.t.}&\ f(\boldsymbol{x}_+;\boldsymbol{\theta})\geq1,\ f(\boldsymbol{x}_-;\boldsymbol{\theta})\leq-1.
\end{aligned}
\end{equation}
For simplicity, we denote $\btheta^*:=\alpha\bar{\btheta}$, where
\[
\btheta^*=({\bb_1^*}^\top,\cdots,{\bb_m^*}^\top)^{\top}.
\]
Moreover, let $\lambda_+^*,\lambda_-^*\geq0$ be the corresponding Lagrange multipliers (with respect to $\btheta^*$) in Definition \ref{def: definition KKT point}.

\underline{Step I. The rough direction of each neuron.}

Recalling the training dynamics about the dead neurons in Theorem \ref{thm: GF Phase I} (S3), 
\begin{align*}
\bb_k(t)\equiv&\bb_k(T_{\rm I}),\ 
    \<\bb_k(t),\bx_+\>\leq0,\ \<\bb_k(t),\bx_-\>\leq0,\ k\in[m/2]-\cK_+;
\\
\bb_k(t)\equiv&\bb_k(T_{\rm I}),\ 
    \<\bb_k(t),\bx_+\>\leq0,\ \<\bb_k(t),\bx_-\>\leq0,\ k\in[m]-[m/2]-\cK_-.
\end{align*}
Noticing Lemma \ref{lemma: KKT convergence homogeneous} (II) or Lemma \ref{lemma: GF Phase IV* prediction estimate} (S1), $\norm{\btheta(t)}_2\to\infty$, so
\begin{align*}
    \bb_k^*=\bzero,\quad \forall k\notin\cK_+\cup\cK_-.
\end{align*}
Then we only need to focus on $\btheta_k^*$ for $k\in\cK_+\cup\cK_-$.

Recalling in Lemma \ref{lemma: GF Phase IV* prediction estimate} (S3), for any $t>T_{\rm IV}$, 
\begin{align*}
    &\<\bb_k(t),\bx_+\>>0,\ \<\bb_k(t),\bx_-\>=0,\ k\in\cK_+;
    \\
    &\<\bb_k(t),\bx_+\>>0,\ \<\bb_k(t),\bx_-\>>0,\ k\in\cK_-.
\end{align*}
then we have
\begin{align*}
    &\<\bb_k^*,\bx_+\>\geq0,\ \<\bb_k^*,\bx_-\>=0,\quad k\in\cK_+;
     \\
     &\<\bb_k^*,\bx_+\>\geq0,\ \<\bb_k^*,\bx_-\>\geq0,\quad k\in\cK_-.
\end{align*}
Moreover,
\begin{align*}
    f(\bx_+;\btheta^*)=&\frac{\kappa_2}{\sqrt{m}}\Big(\sum_{k\in\cK_+}\sigma\bracket{\<\bb_k^*,\bx_+\>}-\sum_{k\in\cK_-}\sigma\bracket{\<\bb_k^*,\bx_+\>}\Big),
    \\
    f(\bx_-;\btheta^*)=&\frac{\kappa_2}{\sqrt{m}}\Big(\sum_{k\in\cK_+}\sigma\bracket{\<\bb_k^*,\bx_-\>}-\sum_{k\in\cK_-}\sigma\bracket{\<\bb_k^*,\bx_-\>}\Big)=-\frac{\kappa_2}{\sqrt{m}}\sum_{k\in\cK_-}\sigma\bracket{\<\bb_k^*,\bx_-\>}.
\end{align*}

\underline{Step II. Determine the direction of the neurons $k\in\cK_+$.}

Since $\btheta^*$ is a possible point,  $f(\bx_+;\btheta^*)\geq1$, which gives us
\begin{align*}
    \sum_{k\in\cK_+}\sigma(\<\bb_k^*,\bx_+\>)\geq\frac{\sqrt{m}}{\kappa_2}>0.
\end{align*}
Hence, there exists $k_1\in\cK_+$, s.t. $\<\bb_{k_1}^*,\bx_+\>>0$ strictly.

Then we study the neuron $k_2\in\cK_+$ ($k_2\ne k_1$). Lemma \ref{lemma: GF Phase IV* neuron dynamics} and Lemma \ref{lemma: GF Phase IV* prediction estimate} (S3) give use that
\begin{align*}
    \<\bb_{k_1}(t),\bx_+\>=&\<\bb_{k_1}(T_{\rm IV}),\bx_+\>+\int_{T_{\rm IV}}^t\frac{\kappa_2pe^{-f_+(t)}}{\sqrt{m}(1+p)}\sin^2\Delta\rd t
    \\=&\<\bb_{k_2}(T_{\rm IV}),\bx_+\>+\int_{T_{\rm IV}}^t\frac{\kappa_2pe^{-f_+(t)}}{\sqrt{m}(1+p)}\sin^2\Delta\rd t +\Big(\<\bb_{k_1}(T_{\rm IV}),\bx_+\>-\<\bb_{k_2}(T_{\rm IV}),\bx_+\>\Big)
    \\=&\<\bb_{k_2}(t),\bx_+\>+\Big(\<\bb_{k_1}(T_{\rm IV}),\bx_+\>-\<\bb_{k_2}(T_{\rm IV}),\bx_+\>\Big).
\end{align*}
Multiplying the above formula by $c/\norm{\btheta(t)}_2$ and taking $t$ go to infinity, we obtain
\begin{align*}
    \<\bb_{k_1}^*,\bx_+\>=\<\bb_{k_2}^*,\bx_+\>>0.
\end{align*}
Due to the arbitrariness of $k_2$, we know
\begin{align*}
    \bb_k^*\ne0,\quad \<\bb_k^*,\bx_+\>>0,\quad \<\bb_k^*,\bx_-\>=0,\quad\forall k\in\cK_+.
\end{align*}

Then we can write the KKT condition about the gradient of $\bb_k^*$ ($k\in\cK_+$) of Problem \eqref{equ of proof: final direction: max margin problem}:
\begin{align*}
    \bzero\in\bb_k^*-\lambda_+^*\frac{\kappa_2}{\sqrt{m}}\bx_++\lambda_-^*\frac{\kappa_2}{\sqrt{m}}\partial^{\circ}\sigma(0)\bx_-.
\end{align*}
It is clear that $\bb_k^*\in{\rm span}\{\bx_+,\bx_-\}$. Then combining two formulations above, we obtain:
\begin{align*}
    \bb_k^*=\lambda_+^*\frac{\kappa_2}{\sqrt{m}}\bracket{\bx_+-\bx_-\cos\Delta},\quad\forall k\in\cK_+.
\end{align*}

\underline{Step III. Determine the direction of the neurons $k\in\cK_-$.}

Since $\btheta^*$ is a possible point,  $f(\bx_-;\btheta^*)\leq-1$, which gives us
\begin{align*}
    \sum_{k\in\cK_-}\sigma(\<\bb_k^*,\bx_-\>)\geq\frac{\sqrt{m}}{\kappa_2}>0.
\end{align*}
Hence, there exists $k_1\in\cK_-$, s.t. $\<\bb_{k_1}^*,\bx_-\>>0$ strictly.

Then we study the neuron $k_2\in\cK_-$ ($k_2\ne k_1$). Lemma \ref{lemma: GF Phase IV* neuron dynamics} and Lemma \ref{lemma: GF Phase IV* prediction estimate} (S3) give use that
\begin{align*}
    &\<\bb_{k_1}(t),\bx_-\>=\<\bb_{k_1}(T_{\rm IV}),\bx_+\>+\int_{T_{\rm IV}}^t\frac{\kappa_2}{\sqrt{m}}\left(\frac{1}{1+p}e^{f_-(t)}-\frac{p}{1+p}e^{-f_+(t)}\cos\Delta\right)\rd t
    \\=&\<\bb_{k_2}(T_{\rm IV}),\bx_-\>+\int_{T_{\rm IV}}^t\frac{\kappa_2}{\sqrt{m}}\left(\frac{1}{1+p}e^{f_-(t)}-\frac{p}{1+p}e^{-f_+(t)}\cos\Delta\right)\rd t+\Big(\<\bb_{k_1}(T_{\rm IV}),\bx_-\>-\<\bb_{k_2}(T_{\rm IV}),\bx_-\>\Big)
    \\=&\<\bb_{k_2}(t),\bx_-\>+\Big(\<\bb_{k_1}(T_{\rm IV}),\bx_-\>-\<\bb_{k_2}(T_{\rm IV}),\bx_-\>\Big).
\end{align*}
Multiplying the above formula by $c/\norm{\btheta(t)}_2$ and taking $t$ go to infinity, we obtain
\begin{align*}
    \<\bb_{k_1}^*,\bx_-\>=\<\bb_{k_2}^*,\bx_-\>>0.
\end{align*}
Due to the arbitrariness of $k_2$, we know
\begin{align*}
    \bb_k^*\ne0,\quad \<\bb_k^*,\bx_-\>>0,\quad \<\bb_k^*,\bx_+\>\geq0,\quad\forall k\in\cK_-.
\end{align*}

The next difficulty in this step is to determine whether $\<\bb_k^*,\bx_+\>$ can be $0$. To prove this, we will use our fine-grained analysis of training dynamics (Lemma \ref{lemma: GF Phase IV* prediction estimate}) and Lemma \ref{lemma: KKT convergence homogeneous} (III).

Let $k\in\cK_-$.
Recalling the dynamics of $\bb_{k}(t)$ in Lemma \ref{lemma: GF Phase IV* neuron dynamics}, we know
\begin{align*}
    -\frac{\partial\cL(\btheta(t))}{\partial\bb_{k}}=&-\frac{\kappa_2}{\sqrt{m}}\left(\frac{p}{1+p}e^{-f_+(t)}\boldsymbol{x}_{+}-\frac{1}{1+p}e^{f_-(t)}\boldsymbol{x}_-\right)
    \\=&\frac{\kappa_2}{\sqrt{m}}\frac{e^{f_-(t)}}{1+p}\left(\boldsymbol{x}_--pe^{-(f_+(t)+f_-(t))}\boldsymbol{x}_{+}\right).
\end{align*}
Recalling Lemma Lemma \ref{lemma: GF Phase IV* prediction estimate} (S2), $\lim\limits_{t\to\infty}pe^{-(f_+(t)+f_-(t))}=\frac{1+\cos\Delta}{1+\cos\Delta+\frac{m_+}{m_-}\sin^2\Delta}$. Then using Lemma \ref{lemma: KKT convergence homogeneous} (III), there exists $c_1>0$, s.t. 
\begin{align*}
    \bb_{k}^*=c_1\bracket{\bx_--\frac{1+\cos\Delta}{1+\cos\Delta+\frac{m_+}{m_-}\sin^2\Delta}\bx_+}.
\end{align*}
Hence, we have proved 
\begin{align*}
    \<\bb_k^*,\bx_+\>>0,\quad\forall k\in\cK_-.
\end{align*}
Then writing the KKT condition about the gradient of $\bb_{k}^*$ of Problem \eqref{equ of proof: final direction: max margin problem}:
\begin{align*}
    \bzero=\bb_k^*+\lambda_+^*\frac{\kappa_2}{\sqrt{m}}\bx_+-\lambda_-^*\frac{\kappa_2}{\sqrt{m}}\bx_-,\quad\forall k\in\cK_-.
\end{align*}
Combining the two equations about $\bb_k^*$, we obatin
\begin{gather*}
    \bb_k^*=\lambda_-^*\frac{\kappa_2}{\sqrt{m}}\bracket{\bx_--\frac{1+\cos\Delta}{1+\cos\Delta+\frac{m_+}{m_-}\sin^2\Delta}\bx_+},\quad\forall k\in\cK_-;
    \\
    \frac{\lambda_+^*}{\lambda_-^*}=\frac{1+\cos\Delta}{1+\cos\Delta+\frac{m_+}{m_-}\sin^2\Delta}.
\end{gather*}

In summary, we have proved the final convergence direction $\overline{\btheta}=(\bar{\bb}_1^\top,\cdots,\bar{\bb}_m^\top)^\top\in\bbS^{md-1}$ satisfies
\begin{align*}
    \overline{\bb}_k=&\bzero,\quad\forall k\notin\cK_+\cup\cK_-;
    \\
    \overline{\bb}_k=&C\Big(\bx_+-\bx_-\cos\Delta\Big),\quad\forall k\in\cK_+;
    \\
    \overline{\bb}_k=&C\bracket{\bracket{1+\frac{m_+\sin^2\Delta}{m_-(1+\cos\Delta)}}\bx_--\bx_+},\quad\forall k\in\cK_-;
\end{align*}
where $C>0$ is a scaling constant such that $\norm{\overline{\btheta}}_2=1$.

Moreover, a straight-forward calculation gives us that 
\begin{align*}
    f(\bx_+;\overline{\btheta})=-f(\bx_-;\overline{\btheta})>0.
\end{align*}
% \begin{align*}
%     f(\bx_-;\overline{\btheta})=-\sum_{k\in\cK_-}\frac{\kappa_2}{\sqrt{m}}\<\overline{\bb}_k,\bx_-\>
% \end{align*}

% \begin{subequations}
% \begin{align}
% 0=&\lambda_+\Big(\frac{\kappa_2}{\sqrt{m}}\sum_{k\in\cK_+}\bb_k^\top\bx_+-\frac{\kappa_2}{\sqrt{m}}\sum_{k\in\cK_-}\bb_k^\top\bx_+-1\Big);\label{equ of proof: final direction: grad lambda+}\\
% 0=&\lambda_-\Big(-\frac{\kappa_2}{\sqrt{m}}\sum_{k\in\cK_-}\bb_k^\top\bx_-+1\Big). \label{equ of proof: final direction: grad lambda-}
% \end{align}
% \end{subequations}
\end{proof}

\begin{proof}[Proof of Theorem~\ref{thm: GF  directional convergence}]\ \\
Lemma~\ref{lemma: final convergence direction} implies Theorem~\ref{thm: GF  directional convergence} directly.
\end{proof}

\begin{proof}[Proof of Theorem~\ref{thm: GF not optimum max margin}]\ \\
From Lemma~\ref{lemma: final convergence direction}, the final convergence direction $\overline{\btheta}=(\bar{\bb}_1^\top,\cdots,\bar{\bb}_m^\top)^\top\in\bbS^{md-1}$ satisfies
\begin{align*}
    \overline{\bb}_k=&\bzero,\quad\forall k\notin\cK_+\cup\cK_-;
    \\
    \overline{\bb}_k=&C\Big(\bx_+-\bx_-\cos\Delta\Big),\quad\forall k\in\cK_+;
    \\
    \overline{\bb}_k=&C\bracket{\bracket{1+\frac{m_+\sin^2\Delta}{m_-(1+\cos\Delta)}}\bx_--\bx_+},\quad\forall k\in\cK_-;
\end{align*}
where $C>0$ is a scaling constant such that $\norm{\overline{\btheta}}_2=1$ and $f_+(\overline{\btheta})=-f_-(\overline{\btheta})>0$.

It is easy to verify that There exists a scaling factor $C_1>0$ such that $f_+(\hat{\btheta})=-f_-(\hat{\btheta})=1$, where $\hat{\btheta}=C_1\overline{\btheta}$.
For simplicity, we denote $Q:=CC_1$, then $\hat{\btheta}=(\hat{\bb}_1^\top,\cdots,\hat{\bb}_m^\top)^\top\in\bbR^{md}$ satisfies
\begin{align*}
    \hat{\bb}_k=&\bzero,\quad\forall k\notin\cK_+\cup\cK_-;
    \\
    \hat{\bb}_k=&Q\Big(\bx_+-\bx_-\cos\Delta\Big),\quad\forall k\in\cK_+;
    \\
    \hat{\bb}_k=&Q\bracket{\bracket{1+\frac{m_+\sin^2\Delta}{m_-(1+\cos\Delta)}}\bx_--\bx_+},\quad\forall k\in\cK_-;
    \\
    f_+(\hat{\btheta})=&-f_-(\hat{\btheta})=1.
\end{align*}
Therefore, $\hat{\btheta}$ is a feasible point of Problem~\eqref{equ of proof: final direction: max margin problem}. 
Moreover, from
\begin{align*}
    &-1 =-\sum_{k\in\cK_-}\frac{\kappa_2}{\sqrt{m}}\<\hat{\bb}_k,\bx_-\>
    \\=&-\frac{\kappa_2}{\sqrt{m}}m_-\left<Q\bracket{\bracket{1+\frac{m_+\sin^2\Delta}{m_-(1+\cos\Delta)}}\bx_--\bx_+},\bx_-\right>
     \\=&-Q\frac{\kappa_2}{\sqrt{m}} m_- \left(1-\cos\Delta+\frac{m_+\sin^2\Delta}{m_-(1+\cos\Delta)}\right)
     \\=&-Q\frac{\kappa_2}{\sqrt{m}}\left(m_-(1-\cos\Delta)+m_+\frac{\sin^2\Delta}{1+\cos\Delta}\right),
\end{align*}
we have
\begin{align*}
    \frac{\kappa_2}{\sqrt{m}}Q=\frac{1}{m_-(1-\cos\Delta)+m_+\frac{\sin^2\Delta}{1+\cos\Delta}}.
\end{align*}

For any $\epsilon\geq0$, now we consider another solution $\hat{\btheta}(\epsilon)$ near $\hat{\btheta}$: $\hat{\btheta}(\epsilon)=(\hat{\bb}_1^\top(\epsilon),\cdots,\hat{\bb}_m^\top(\epsilon))^\top$, where 
\begin{align*}
    \hat{\bb}_k(\epsilon)=&\bzero,\quad\forall k\notin\cK_+\cup\cK_-;
    \\
    \hat{\bb}_k(\epsilon)=&Q\Big(\bx_+-\bx_-\cos\Delta\Big)-Q\epsilon\Big(\bx_+-\bx_-\cos\Delta\Big),\quad\forall k\in\cK_+;
    \\
    \hat{\bb}_k(\epsilon)=&Q\bracket{\bracket{1+\frac{m_+\sin^2\Delta}{m_-(1+\cos\Delta)}}\bx_--\bx_+}+Q\epsilon\Big(\bx_+-\bx_-\cos\Delta\Big),\quad\forall k\in\cK_-.
\end{align*}
and it holds that $\hat{\btheta}(0)=\hat{\btheta}$. Moreover, it is easy to verify that $\hat{\btheta}(\epsilon)$ is also a feasible point of Problem~\eqref{equ of proof: final direction: max margin problem}. 
\begin{align*}
    f_-(\hat{\btheta}(\epsilon))=&f_-(\hat{\btheta}(0))=-1;
    \\
    f_+(\hat{\btheta}(\epsilon))=&f_+(\hat{\btheta}(0))=1.
\end{align*}
Then we compare the norm of $\hat{\btheta}(\epsilon)$ and $\hat{\btheta}(0)$.
\begin{align*}
    \norm{\hat{\btheta}(\epsilon)}^2
    =&m_+\left(Q\Big(\bx_+-\bx_-\cos\Delta\Big)-Q\epsilon\Big(\bx_+-\bx_-\cos\Delta\Big)\right)^2
    \\&+m_-\left(Q\bracket{\bracket{1+\frac{m_+\sin^2\Delta}{m_-(1+\cos\Delta)}}\bx_--\bx_+}+Q\epsilon\Big(\bx_+-\bx_-\cos\Delta\Big)\right)^2,
\end{align*}
At $\epsilon=0$, we can calculate that
\begin{align*}
    &\frac{\rd\norm{\hat{\btheta}(\epsilon)}^2}{2m_+Q^2\rd\epsilon}\Bigg|_{\epsilon=0}
    \\=&\frac{1}{m_+}\left(-m_+\norm{\bx_+-\bx_-\cos\Delta}^2+m_-\<\bracket{1+\frac{m_+\sin^2\Delta}{m_-(1+\cos\Delta)}}\bx_--\bx_+,\bx_+-\bx_-\cos\Delta\>\right)
    \\=&\<\left(\frac{m_-}{m_+}+\frac{\sin^2\Delta}{1+\cos\Delta}+\cos\Delta\right)\bx_--(1+\frac{m_-}{m_+})\bx_+,\bx_+-\bx_-\cos\Delta\>
    \\
    \overset{\text{Denote $\alpha=\frac{m_-}{m_+}$}}{=}&\<\left(\alpha+\frac{\sin^2\Delta}{1+\cos\Delta}+\cos\Delta\right)\bx_--(1+\alpha)\bx_+,\bx_+-\bx_-\cos\Delta\>
    \\=&
    -\left(\alpha+\frac{\sin^2\Delta}{1+\cos\Delta}+\cos\Delta\right)\cos\Delta-(1+\alpha)
    \\&+\left(\alpha+\frac{\sin^2\Delta}{1+\cos\Delta}+\cos\Delta\right)\cos\Delta+(1+\alpha)\cos^2\Delta
    \\=&-(1+\alpha)\sin^2\Delta<0
\end{align*}
Then combining the continuity of $\frac{\rd\norm{\hat{\btheta}(\epsilon)}^2}{\rd \epsilon}$, there exists $\delta>0$ such that the following inequality holds:
\begin{align*}
    \norm{\hat{\btheta}(\epsilon)}^2<\norm{\hat{\btheta}(0)}^2,\ \forall\epsilon\in(0,\delta).
\end{align*}

Hence, we have proved that $\overline{\btheta}$ is not a local optimal direction of the max-margin problem~\eqref{equ of proof: final direction: max margin problem}.

\end{proof}

\newpage

\section{Clarke Subdifferential and KKT Conditions for Non-smooth Optimization}\label{section: subdifferential and KKT}

\begin{definition}[Clarke's Subdifferential~\citep{clarke2008nonsmooth}]\label{def: clark subdifferential}
For a locally Lipschitz function $\cL:\Omega\to\bbR$, the Clarke's subdifferential at $\btheta\in \Omega$ is the convex set
\[
\partial^{\circ}\cL(\btheta):={\rm conv}\left\{\lim\limits_{i\to\infty}\nabla \cL(\btheta_i):\lim\limits_{i\to\infty}\btheta_i=\btheta,\cL\text{ is differential at }\btheta_i\right\}.
\]
\end{definition}

\begin{remark}
    Notice that if $\cL$ is continuously differentiable at $\btheta$, then $\partial^{\circ}\cL(\btheta)=\{\nabla\cL(\btheta)\}$ is unique.
    However, for discontinuous differentiable points of $\cL$, the differential inclusion flow $\frac{\rd\btheta}{\rd t}\in\partial^{\circ}\cL(\btheta)$ defined by Definition~\ref{def: clark subdifferential} may not be unique. To study a more specific dynamics, 
    we also utilize Definition~\ref{def: discontinuous system solution} to determine GF at some of such points.
\end{remark}

% Moreover, notice that for this problem, $\cL:\bbR^{md}\to\bbR$ is piecewise continuously differentiable due to the use of the ReLU function,. To study a more specific dynamics, we also use Definition~\ref{def: discontinuous system solution} to determine the flow at some discontinuous points.

Now we review the definition of Karush-Kuhn-Tucker (KKT) conditions for non-smooth optimization problems~\citep{dutta2013approximate}. Consider the following constrained optimization problem (P):
\begin{align*}
    \min\limits_{\bx\in\bbR^d}:&f(\bx)
    \\
    {\rm s.t.}\ &g_i(\bx)\leq0,\quad\forall i\in[N]
\end{align*}
where $f,g_1,\cdots,g_N:\bbR^d\to\bbR$ are locally Lipschitz functions. We say that $\bx\in\bbR$ is a feasible point
of (P) if $\bx$ satisfies $g_i(\bx)\leq0$ for all $i\in[N]$.

\begin{definition}[KKT Point for Non-smooth Optimization]\label{def: definition KKT point} We say that a feasible point $\bf$ of {\rm (P)} is a KKT point if there exists $\lambda_1,\cdots,\lambda_N\geq0$ such that
\begin{align*}
    {\rm 1.}&\ \boldsymbol{0}\in\partial^{\circ}f(\bx)+\sum_{i\in[N]}\lambda_i\partial^{\circ}g_i(\bx);
    \\{\rm 2.}&\ \forall i\in[N], \lambda_ig_i(\bx)=0.
\end{align*}
\end{definition}

\newpage

\section{Solution of Discontinuous System}\label{section: discontinuous system solution}

In this section, we add some supplements about the definitions of solutions of discontinuous systems, which can overcomes non-uniqueness of GF trajectories~\eqref{equ: alg GF} to some extent. Many definitions of solutions of differential equations with discontinuous systems have been proposed. In this paper, we adopt a widely used definition of the solutions in Chapter 2.4 in \citep{filippov2013differential}.

\begin{definition}[Solutions of Discontinuous Systems, Chapter 2.4 in \citep{filippov2013differential}]\label{def: discontinuous system solution}
Consider a $n$-dimensional equation or a system ($\boldsymbol{x}\in\mathbb{R}^n$): $\frac{\mathrm{d}\boldsymbol{x}}{\mathrm{d}t}=\boldsymbol{f}(\boldsymbol{x})$
with a piecewise continuous function $\boldsymbol{f}$ in a domain $G$. We aim to define the dynamics near some discontinuous regions.

Let the function $\boldsymbol{f}$ be discontinuous on a smooth surface $S$ given by the equation $\phi(\boldsymbol{x})=0$. Let $\bx^*\in S$ and the surface $S$ separate the neighborhood of $\bx^*$ into domains $G^-$ and $G^+$. 
Let the function $\boldsymbol{f}(\boldsymbol{x})$ have the limit values:
\begin{align*}
\boldsymbol{f}^-(\boldsymbol{x}^*):=\lim\limits_{\boldsymbol{x}\in G^-,\boldsymbol{x}\to\boldsymbol{x}^*} \boldsymbol{f}(\boldsymbol{x}),\quad\boldsymbol{f}^+(\boldsymbol{x}^*):=\lim\limits_{\boldsymbol{x}\in G^+,\boldsymbol{x}^*\to\boldsymbol{x}^*} \boldsymbol{f}(\boldsymbol{x}).
\end{align*}
Here one should distinguish between two main cases. Let ${f}_N^-(\boldsymbol{x}^*)$ and ${f}_N^+(\boldsymbol{x}^*)$ be projections of the vectors $\boldsymbol{f}^-(\boldsymbol{x}^*)$ and $\boldsymbol{f}^+(\boldsymbol{x}^*)$ onto the normal to the surface $S$ at the point $\boldsymbol{x}^*$, where the normal is directed towards the domain $G^+$.

\textbf{(Case I)}. If the vectors $\boldsymbol{f}(\boldsymbol{x}^*)$ are directed to the surface $S$ on both sides, i.e. ${f}_N^-(\boldsymbol{x}^*)>0$, ${f}_N^+(\boldsymbol{x}^*)<0$, then the solution the solution starting from $\bx^*$ can not leave $S$ for some time.
Moreover, its dynamics on $S$ can be defined in the following way:
\begin{gather*}
    \frac{\mathrm{d}\boldsymbol{x}}{\mathrm{d}t}=\boldsymbol{f}^{0}(\bx),
\\
\text{where\ }\boldsymbol{f}^{0}(\bx)=\alpha\boldsymbol{f}^+(\boldsymbol{x})+(1-\alpha)\boldsymbol{f}^-(\boldsymbol{x}),\quad \alpha=\frac{{f}_N^-(\boldsymbol{x})}{{f}_N^-(\boldsymbol{x})-{f}_N^+(\boldsymbol{x})}.
\end{gather*}

\textbf{(Case II)}. If $f_N^-(\boldsymbol{x}^*)\geq0,f_N^+(\boldsymbol{x}^*)\geq0$, but $f_N^-(\boldsymbol{x}^*)$ and $f_N^+(\boldsymbol{x}^*)$ are not both $0$, then the solution starting from $\bx^*$ passes from one side of the surface $S$ to the other instantly.

\textbf{(Case III).} If ${f}_N^-(\boldsymbol{x}^*)<0$, ${f}_N^+(\boldsymbol{x}^*)>0$, then the dynamics is defined in the similar way as (Case I).

\textbf{(Case IV).} If $f_N^-(\boldsymbol{x}^*)\leq0,f_N^+(\boldsymbol{x}^*)\leq0$, but $f_N^-(\boldsymbol{x}^*)$ and $f_N^+(\boldsymbol{x}^*)$ are not both $0$, then the dynamics is defined in the similar way as (Case II).
\end{definition}

\begin{remark}
    Notice that Definition~\ref{def: discontinuous system solution} overcomes non-uniqueness of GF trajectories to some extent. It is worth noting that Definition~\ref{def: discontinuous system solution} and Definition~\ref{def: clark subdifferential} are compatible and specifically, the dynamics defined in Definition~\ref{def: discontinuous system solution}(Case I, III) lie in the convex hull defined in Definition~\ref{def: clark subdifferential}.
\end{remark}
\begin{remark}
    In~\citep{lyu2021gradient}, the non-branching starting point Assumption is employed to address the technical challenge of non-uniqueness in GF trajectories. By comparison, in this work, we do not need this assumption. We adopt Definition~\ref{def: discontinuous system solution} to uniquely determine the Gradient Flow trajectories theoretically near some discontinuous differential regions, such as ``Ridge'', ``Valley'', and ``Refraction edge'' discussed in Section I.2 in~\citep{lyu2021gradient}.
\end{remark}

\newpage
\section{Some Basic Inequalities}

\begin{lemma}[Hoeffding's Inequality]\label{lemma: hoeffding} Let $X_1,\cdots,X_n$ are independent random variables, and $X_i\in[a_i,b_i]$ for any $i\in[n]$. Define $\bar{X}=\frac{1}{n}\sum_{i=1}^n X_i$. Then for any $\epsilon>0$, we have the following probability inequalities:
\begin{gather*}
    \mathbb{P}\Big(\bar{X}-\mathbb{E}[\bar{X}]\geq \epsilon\Big)\leq\exp\Big(-\frac{2n^2\epsilon^2}{\sum_{i=1}^n(b_i-a_i)^2}\Big),\\
      \mathbb{P}\Big(\bar{X}-\mathbb{E}[\bar{X}]\leq- \epsilon\Big)\leq\exp\Big(-\frac{2n^2\epsilon^2}{\sum_{i=1}^n(b_i-a_i)^2}\Big).
\end{gather*}
\end{lemma}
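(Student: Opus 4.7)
The plan is to combine the Chernoff exponential-moment method with \emph{Hoeffding's lemma}, a moment-generating-function (MGF) bound for bounded centered random variables. First I would center the variables: set $Y_i = X_i - \mathbb{E}[X_i]$, so that each $Y_i$ has mean zero, lies in an interval of the same width $b_i - a_i$, and the $Y_i$ remain independent. For any $\lambda > 0$, Markov's inequality applied to $e^{\lambda S_n}$ with $S_n = \sum_{i=1}^n Y_i$ reduces the upper-tail probability to
\[
\mathbb{P}\bigl(\bar{X} - \mathbb{E}[\bar{X}] \geq \epsilon\bigr) \;=\; \mathbb{P}(S_n \geq n\epsilon) \;\leq\; e^{-\lambda n \epsilon}\,\prod_{i=1}^n \mathbb{E}[e^{\lambda Y_i}],
\]
where independence has been used to factor the joint MGF into a product.

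The substantive step is Hoeffding's lemma: for any zero-mean $Y \in [c, d]$, $\mathbb{E}[e^{\lambda Y}] \leq \exp\bigl(\lambda^2 (d - c)^2 / 8\bigr)$. I would derive it by exploiting convexity of $y \mapsto e^{\lambda y}$ on $[c, d]$ to write the linear interpolation $e^{\lambda y} \leq \tfrac{d - y}{d - c} e^{\lambda c} + \tfrac{y - c}{d - c} e^{\lambda d}$, taking expectation (using $\mathbb{E}[Y] = 0$), and then analyzing the resulting cumulant $\psi(\lambda) := \log \mathbb{E}[e^{\lambda Y}]$. One checks $\psi(0) = \psi'(0) = 0$ and $\psi''(\lambda) \leq (d - c)^2 / 4$ via the fact that $\psi''$ equals the variance of a tilted Bernoulli distribution supported on $\{c, d\}$, which is maximized at equal masses. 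A second-order Taylor expansion then yields the claimed quadratic exponent.

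Plugging the per-summand MGF bound back into the Chernoff inequality produces
\[
\mathbb{P}\bigl(\bar{X} - \mathbb{E}[\bar{X}] \geq \epsilon\bigr) \;\leq\; \exp\!\left(-\lambda n \epsilon \;+\; \tfrac{\lambda^2}{8}\sum_{i=1}^n (b_i - a_i)^2\right),
\]
and minimizing the quadratic in $\lambda > 0$ at $\lambda^\star = 4 n \epsilon / \sum_i (b_i - a_i)^2$ gives the advertised exponent $-2 n^2 \epsilon^2 / \sum_i (b_i - a_i)^2$. The matching lower-tail inequality follows by replaying the entire argument after replacing each $X_i$ by $-X_i$, whose range $[-b_i, -a_i]$ has the same width $b_i - a_i$, so the same MGF bound applies verbatim. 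The only non-routine step is the variance bound $\psi''(\lambda) \leq (d-c)^2/4$ inside Hoeffding's lemma; everything else is mechanical Markov-plus-optimization.
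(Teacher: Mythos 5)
Your proposal is the standard Chernoff--Hoeffding argument and is correct in every step: centering, Markov's inequality on the exponential moment, Hoeffding's lemma for the per-summand MGF bound (via either the convexity interpolation or the cumulant second-derivative argument, both of which you sketch), and optimization over $\lambda$ giving $\lambda^\star = 4n\epsilon/\sum_i(b_i-a_i)^2$ and the exponent $-2n^2\epsilon^2/\sum_i(b_i-a_i)^2$. The lower tail by replacing $X_i$ with $-X_i$ is also right. Note, however, that the paper does not supply its own proof of this lemma---it is stated without proof as a classical tool in the appendix of basic inequalities---so there is nothing to compare against; you have simply reproduced the textbook proof, which is the appropriate thing to do here.
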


\begin{lemma}\label{lemma: theta in span x1, x2} Consider $\bx_1,\bx_2,\boldsymbol{y}\in\mathbb{S}^{d-1}$, where $\left<\bx_1,\bx_2\right>=\cos\Delta$ $(\Delta\in(0,\pi/2))$. If $\left<\boldsymbol{y},\bx_1\right>\geq0$ and $\left<\boldsymbol{y},\bx_2\right>\leq0$, then we have $0\leq\left<
\boldsymbol{y},\bx_1\right>\leq\sin\Delta$ and $-\sin\Delta\leq\left<
\boldsymbol{y},\bx_2\right>\leq0$.
\end{lemma}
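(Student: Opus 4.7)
The plan is to reduce the problem to planar geometry and then exploit the constraint $\|\boldsymbol{y}\| = 1$ together with the sign conditions. First I would decompose $\boldsymbol{y}$ into its projection onto $\mathrm{span}\{\bx_1,\bx_2\}$ and an orthogonal piece; since the orthogonal piece contributes nothing to $\langle\boldsymbol{y},\bx_1\rangle$ or $\langle\boldsymbol{y},\bx_2\rangle$, I can replace $\boldsymbol{y}$ by its in-plane component $\boldsymbol{y}_\parallel$ with $\|\boldsymbol{y}_\parallel\| \le 1$. So WLOG the whole problem lives in a $2$-dimensional subspace.

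Next I would introduce an orthonormal basis adapted to $\bx_1$: set $\bx_1^\perp := (\bx_2 - \cos\Delta\,\bx_1)/\sin\Delta$, so that $\bx_2 = \cos\Delta\,\bx_1 + \sin\Delta\,\bx_1^\perp$, and write $u := \langle\boldsymbol{y},\bx_1\rangle$, $v := \langle\boldsymbol{y},\bx_1^\perp\rangle$, with the constraint $u^2+v^2 \le 1$. In these coordinates the two hypotheses become $u \ge 0$ and $u\cos\Delta + v\sin\Delta \le 0$, i.e.\ $v \le -u\cot\Delta$.

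For the upper bound on $\langle\boldsymbol{y},\bx_1\rangle$: since $u\ge 0$, the inequality $v \le -u\cot\Delta$ forces $v^2 \ge u^2\cot^2\Delta$, hence $u^2(1+\cot^2\Delta) \le u^2+v^2 \le 1$, which yields $u^2 \le \sin^2\Delta$ and therefore $0 \le u \le \sin\Delta$. For the lower bound on $\langle\boldsymbol{y},\bx_2\rangle$: we have $\langle\boldsymbol{y},\bx_2\rangle = u\cos\Delta + v\sin\Delta \ge u\cos\Delta - \sqrt{1-u^2}\,\sin\Delta$ using $v \ge -\sqrt{1-u^2}$; a one-variable calculus check on $u\in[0,\sin\Delta]$ (the derivative $\cos\Delta + u\sin\Delta/\sqrt{1-u^2}$ is nonnegative) shows the minimum is attained at $u=0$, giving $\langle\boldsymbol{y},\bx_2\rangle \ge -\sin\Delta$.

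There is essentially no obstacle: the reduction to two dimensions is immediate and the remaining estimates are a short computation. The only place to be careful is the sign handling in the step $v \le -u\cot\Delta \Rightarrow v^2 \ge u^2\cot^2\Delta$, which requires $u \ge 0$ so that the right-hand side is nonpositive and the squaring direction is correct; this is exactly the hypothesis we were given.
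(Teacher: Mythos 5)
Your proof is correct and takes essentially the same approach as the paper: project $\boldsymbol{y}$ onto $\mathrm{span}\{\bx_1,\bx_2\}$, note that the projection has norm at most $1$, and combine the two sign constraints with that norm bound. The only differences are cosmetic---you work in the orthonormal frame $\{\bx_1,\bx_1^\perp\}$ so the norm constraint is the cleaner $u^2+v^2\le 1$ and you finish the second bound by a one-variable monotonicity check, whereas the paper uses the non-orthonormal basis $\{\bx_1,\bx_2\}$ (giving $\alpha^2+\beta^2+2\alpha\beta\cos\Delta\le1$) and concludes the second bound ``in the same way'' by symmetry.
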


\begin{proof}[Proof of Lemma \ref{lemma: theta in span x1, x2}]\ \\
Denote $\mathcal{M}_{\bx}:={\rm span}\{\bx_1,\bx_2\}$.
We can do the orthogonal decomposition of $\boldsymbol{y}$:
\[
\boldsymbol{y}=\boldsymbol{y}_{\mathcal{M}}+\boldsymbol{y}_{\mathcal{M}}^{\perp},
\]
where $\boldsymbol{y}_{\mathcal{M}}\in\mathcal{M}_{\bx}$ and $\boldsymbol{y}_{\mathcal{M}}^{\perp}\perp\mathcal{M}_{\bx}$. From $\boldsymbol{y}\in{\rm span}\{\bx_1,\bx_2\}$, there exist $\alpha,\beta\in\mathbb{R}$, s.t. $\boldsymbol{y}_{\mathcal{M}}=\alpha\bx_1+\beta\bx_2$.

Due to the orthogonal decomposition, we know $\left\|\boldsymbol{y}_{\mathcal{M}}\right\|\leq1$, which means $\alpha^2+\beta^2+2\alpha\beta\cos\Delta\leq1$. Noticing $\alpha^2+\beta^2+2\alpha\beta\cos\Delta=(\alpha+\beta\cos\Delta)^2+\alpha^2\sin^2\Delta$, we know $\alpha^2\sin^2\Delta\leq1$.

Due to $\left<\boldsymbol{y},\bx_1\right>\geq0$ and $\left<\boldsymbol{y},\bx_2\right>\leq0$, we have $\left<\boldsymbol{y}_{
\mathcal{M}},\bx_1\right>\geq0$ and $\left<\boldsymbol{y}_{
\mathcal{M}},\bx_2\right>\leq0$, which means
\[
\alpha+\beta\cos\Delta\geq0,\quad \alpha\cos\Delta+\beta\leq0.
\]
So $\alpha\geq0$ and $\alpha\sin\Delta\geq0$. Recalling $\alpha^2\cos^2\Delta\leq1$, we know $0\leq\alpha\sin\Delta\leq1$.
Hence, we have $\alpha+\beta\cos\Delta\leq\alpha-\alpha\cos^2\Delta=\alpha\sin^2\Delta\leq\sin\Delta$, i.e. $\left<\boldsymbol{y},\bx_1\right>\leq\sin\Delta$.

In the same way, we have $-\sin\Delta\leq\left<
\boldsymbol{y},\bx_2\right>\leq0$.

\end{proof}

\begin{lemma}\label{lemma: basic: norm of z}
If $p\geq5$, we have $\left\|\boldsymbol{z}\right\|\geq\frac{p-1}{p+1}\geq\frac{2}{3}$.
\end{lemma}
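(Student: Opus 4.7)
The plan is to evaluate $\boldsymbol{z}$ explicitly using the structure of the dataset in Assumption~\ref{ass: data} and then reduce both inequalities to elementary algebraic checks. First I would recall that exactly $n_+$ training points equal $\bx_+$ (with label $+1$) and $n_-$ equal $\bx_-$ (with label $-1$), so
\begin{equation*}
\boldsymbol{z}=\frac{1}{n}\sum_{i=1}^n y_i\bx_i=\frac{n_+\bx_+-n_-\bx_-}{n_++n_-}=\frac{p\,\bx_+-\bx_-}{p+1}.
\end{equation*}
Using $\|\bx_+\|=\|\bx_-\|=1$ and $\langle\bx_+,\bx_-\rangle=\cos\Delta$, this gives the closed-form expression $\|\boldsymbol{z}\|^2=(p^2-2p\cos\Delta+1)/(p+1)^2$.

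Next I would prove the first inequality $\|\boldsymbol{z}\|\geq(p-1)/(p+1)$ by squaring: it is equivalent to $p^2-2p\cos\Delta+1\geq(p-1)^2=p^2-2p+1$, i.e.\ to $\cos\Delta\leq 1$, which holds (strictly, since $\Delta\in(0,\pi/2)$). For the second inequality, I would simply note that $p\mapsto(p-1)/(p+1)=1-2/(p+1)$ is monotone increasing in $p$, and $(5-1)/(5+1)=2/3$, so $p\geq 5$ yields $(p-1)/(p+1)\geq 2/3$.

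There is really no obstacle here; the lemma is a direct computation once one writes out $\boldsymbol{z}$ explicitly. The only thing worth being slightly careful about is the sign conventions in the definition of $\boldsymbol{z}$ (positive class gets coefficient $+n_+$, negative class $-n_-$), but this is fixed by Assumption~\ref{ass: data}.
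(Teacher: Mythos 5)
Your proof is correct and follows the same route as the paper: compute $\|\boldsymbol{z}\|^2=(p^2-2p\cos\Delta+1)/(p+1)^2$ from the explicit form of $\boldsymbol{z}$, bound it below by $((p-1)/(p+1))^2$ using $\cos\Delta\leq1$, and finish with monotonicity of $(p-1)/(p+1)$ in $p$. The paper phrases the bound as $\cos\Delta+1\leq2$ rather than $\cos\Delta\leq1$, but this is the same estimate.
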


\begin{proof}[Proof of Lemma \ref{lemma: basic: norm of z}]
\begin{align*}
    \left\|\boldsymbol{z}\right\|^2
    =&\left\|\frac{1}{n}\sum_{i=1}^n y_i\bx_i\right\|^2=\left\|\frac{p}{1+p}\bx_+-\frac{1}{1+p}\bx_-\right\|^2
    =\left(\frac{p}{1+p}\right)^2+\left(\frac{1}{1+p}\right)^2-\frac{2p}{(1+p)^2}\left<\bx_+,\bx_-\right>
    \\=&\left(\frac{p}{1+p}+\frac{1}{1+p}\right)^2-\frac{2p}{(1+p)^2}\left(\cos\Delta+1\right)
    \geq1-\frac{2p}{(p+1)^2}\cdot2=\left(\frac{p-1}{p+1}\right)^2.
\end{align*}

\end{proof}

\begin{lemma}\label{lemma: mu theta to x- theta}
Let $\boldsymbol{w}\in\mathbb
S^{d-1}$. If $\left<\boldsymbol{w},\boldsymbol{\mu}\right>\geq1-\epsilon$ $(\epsilon\in(0,1))$, $p\geq5$ and $\cos\Delta\geq4/5$, then we have
\begin{align*}
-2\sqrt{\epsilon}\sin\Delta-\epsilon\leq\left<\boldsymbol{w},\bx_-\right>-\frac{p\cos\Delta-1}{\sqrt{p^2+1-2p\cos\Delta}}\leq2\sqrt{\epsilon}\sin\Delta.
\end{align*}
\end{lemma}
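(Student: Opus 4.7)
The plan is to compute $\langle \bmu, \bx_-\rangle$ explicitly, reducing the statement to a perturbation bound on $\langle \bw, \bx_-\rangle - \langle \bmu, \bx_-\rangle$. Since $\bz=\frac{p}{1+p}\bx_+ - \frac{1}{1+p}\bx_-$, a direct computation gives
\begin{equation*}
\|\bz\|^2 = \frac{p^2+1-2p\cos\Delta}{(p+1)^2}, \qquad \langle \bmu,\bx_-\rangle = \frac{p\cos\Delta-1}{\sqrt{p^2+1-2p\cos\Delta}},
\end{equation*}
so the quantity being subtracted in the lemma is exactly $\langle\bmu,\bx_-\rangle$. Thus the goal reduces to showing
$-2\sqrt{\epsilon}\sin\Delta - \epsilon \leq \langle \bw - \bmu\cdot\text{(scalar)},\, \bx_-\rangle \leq 2\sqrt{\epsilon}\sin\Delta$ after an appropriate decomposition.

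Next, decompose $\bw$ orthogonally along $\bmu$ as $\bw = a\bmu + \bw^\perp$ with $a = \langle \bw,\bmu\rangle \in [1-\epsilon,1]$ and $\|\bw^\perp\|^2 = 1-a^2 \leq 2\epsilon$. Writing $\langle \bw,\bx_-\rangle - \langle\bmu,\bx_-\rangle = (a-1)\langle\bmu,\bx_-\rangle + \langle \bw^\perp,\bx_-\rangle$, we bound the first term by $[-\epsilon, 0]$ using $\langle\bmu,\bx_-\rangle \in (0,1]$ (valid since $p\cos\Delta \geq 4 > 1$). For the cross term, use Cauchy--Schwarz after projecting $\bx_-$ orthogonal to $\bmu$: $|\langle \bw^\perp,\bx_-\rangle| \leq \|\bw^\perp\|\sqrt{1-\langle\bmu,\bx_-\rangle^2}$. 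A short algebraic identity gives the key cancellation
\begin{equation*}
 p^2+1-2p\cos\Delta - (p\cos\Delta-1)^2 = p^2\sin^2\Delta,
\end{equation*}
so $\sqrt{1-\langle\bmu,\bx_-\rangle^2} = \frac{p\sin\Delta}{\sqrt{p^2+1-2p\cos\Delta}}$.

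The main (though minor) obstacle is controlling the factor $\tfrac{p}{\sqrt{p^2+1-2p\cos\Delta}}$ by a clean absolute constant. The plan is to use $p\geq 5$ and $\cos\Delta\geq 4/5$ to verify $p^2 \leq 2(p^2+1-2p\cos\Delta)$, equivalently $p^2 - 4p\cos\Delta + 2 \geq 0$, which reduces to checking $p^2 - 4p + 2 \geq 0$ at $p=5$ and then increasing in $p$. This yields $\tfrac{p}{\sqrt{p^2+1-2p\cos\Delta}} \leq \sqrt{2}$, and combined with $\|\bw^\perp\|\leq \sqrt{2\epsilon}$ it produces $|\langle\bw^\perp,\bx_-\rangle| \leq 2\sqrt{\epsilon}\sin\Delta$. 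Assembling the two contributions gives the upper bound $2\sqrt{\epsilon}\sin\Delta$ and the lower bound $-2\sqrt{\epsilon}\sin\Delta - \epsilon$, completing the proof.
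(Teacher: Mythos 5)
Your proof is correct and follows essentially the same strategy as the paper's: both expand $\langle\bw,\bx_-\rangle$ around $\bmu$, isolate a ``drift'' term of size $\mathcal{O}(\epsilon)$ from a cross term controlled via Cauchy--Schwarz, and invoke the identity $p^2+1-2p\cos\Delta-(p\cos\Delta-1)^2=p^2\sin^2\Delta$ to express the cross-term factor as $\Theta(\sin\Delta)$. The only real difference is technical: you decompose $\bw=a\bmu+\bw^{\perp}$ and apply Cauchy--Schwarz against the component of $\bx_-$ orthogonal to $\bmu$ (giving factor $\sqrt{1-\langle\bmu,\bx_-\rangle^2}=p\sin\Delta/\sqrt{p^2+1-2p\cos\Delta}$), whereas the paper writes $\langle\bw,\bx_-\rangle=\langle\bw,\bmu\rangle+\langle\bmu,\bx_--\bmu\rangle+\langle\bw-\bmu,\bx_--\bmu\rangle$ and applies Cauchy--Schwarz against $\|\bx_--\bmu\|$. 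Your version is marginally tighter (it only needs $p\le\sqrt{2}\,\sqrt{p^2+1-2p\cos\Delta}$, which does not even require the $\cos\Delta\ge 4/5$ hypothesis), but both bounds close under the stated constraints, so the proofs are substantively equivalent.
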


\begin{proof}[Proof of Lemma \ref{lemma: mu theta to x- theta}]
\begin{align*}
    &\left<\boldsymbol{w},\bx_-\right>=\left<\boldsymbol{w},\boldsymbol{\mu}\right>+\left<\boldsymbol{w},\bx_--\boldsymbol{\mu}\right>
    =\left<\boldsymbol{w},\boldsymbol{\mu}\right>+\left<\boldsymbol{\mu},\bx_--\boldsymbol{\mu}\right>+\left<\boldsymbol{w}-\boldsymbol{\mu},\bx_--\boldsymbol{\mu}\right>.
\end{align*}
It is easy to verify 
\begin{align*}
    &\left<\boldsymbol{\mu},\bx_--\boldsymbol{\mu}\right>=\left<\boldsymbol{\mu},\bx_-\right>-1=\left<\frac{p\bx_+-\bx_-}{\left\|p\bx_+-\bx_-\right\|},\bx_-\right>-1=\frac{p\cos\Delta-1}{\sqrt{p^2+1-2p\cos\Delta}}-1;
    \\
    &\left\|\bx_--\boldsymbol{\mu}\right\|=\sqrt{2-2\left<\boldsymbol{\mu},\bx_-\right>}=\sqrt{2-2\frac{p\cos\Delta-1}{\sqrt{p^2+1-2p\cos\Delta}}};
    \\
    &\left\|\boldsymbol{w}-\boldsymbol{\mu}\right\|=\sqrt{2-2\left<\boldsymbol{w},\boldsymbol{\mu}\right>}.
\end{align*}
Thus,
\begin{align*}
    &\left|\left<\boldsymbol{w}-\boldsymbol{\mu},\bx_--\boldsymbol{\mu}\right>\right|\leq\left\|\boldsymbol{w}-\boldsymbol{\mu}\right\|\left\|\bx_--\boldsymbol{\mu}\right\|
    \\=&\sqrt{2-2\frac{p\cos\Delta-1}{\sqrt{p^2+1-2p\cos\Delta}}}\sqrt{2-2\left<\boldsymbol{w},\boldsymbol{\mu}\right>}
    \leq\sqrt{2-2\frac{p\cos\Delta-1}{\sqrt{p^2+1-2p\cos\Delta}}}\sqrt{2\epsilon}
    \\\leq&\frac{\sqrt{2}}{\sqrt[4]{p^2+1-2p\cos\Delta}}\sqrt{\frac{p^2\sin^2\Delta}{\sqrt{p^2+1-2p\cos\Delta}+p\cos\Delta-1}}
    \sqrt{2\epsilon}
    \\\leq&\frac{2\sqrt{\epsilon}}{\sqrt{p-1}}\frac{p\sin\Delta}{\sqrt{p-1+p\cos\Delta-1}}\leq2\sqrt{\epsilon}\sin\Delta.
\end{align*}
Then we have the bound:
\begin{align*}
    &\left<\boldsymbol{w},\bx_-\right>\leq\frac{p\cos\Delta-1}{\sqrt{p^2+1-2p\cos\Delta}}-1+\left<\boldsymbol{w},\boldsymbol{\mu}\right>+\left|\left<\boldsymbol{w}-\boldsymbol{\mu},\bx_--\boldsymbol{\mu}\right>\right|
    \\\leq&\frac{p\cos\Delta-1}{\sqrt{p^2+1-2p\cos\Delta}}+2\sqrt{\epsilon}\sin\Delta,
\end{align*}
\begin{align*}
    &\left<\boldsymbol{w},\bx_-\right>\geq\frac{p\cos\Delta-1}{\sqrt{p^2+1-2p\cos\Delta}}-1+\left<\boldsymbol{w},\boldsymbol{\mu}\right>-\left|\left<\boldsymbol{w}-\boldsymbol{\mu},\bx_--\boldsymbol{\mu}\right>\right|
    \\\geq&\frac{p\cos\Delta-1}{\sqrt{p^2+1-2p\cos\Delta}}-\epsilon-2\sqrt{\epsilon}\sin\Delta.
\end{align*}

\end{proof}

%%%%%%%%%%%%%%%%%%%%%%%%%%%%%%%%%%%%%%%%%%%%%%%%%%%%%%%%%%%%


\begin{thebibliography}{76}
\providecommand{\natexlab}[1]{#1}
\providecommand{\url}[1]{\texttt{#1}}
\expandafter\ifx\csname urlstyle\endcsname\relax
  \providecommand{\doi}[1]{doi: #1}\else
  \providecommand{\doi}{doi: \begingroup \urlstyle{rm}\Url}\fi

\bibitem[Abbe et~al.(2022{\natexlab{a}})Abbe, Adsera, and Misiakiewicz]{abbe2022merged}
Emmanuel Abbe, Enric~Boix Adsera, and Theodor Misiakiewicz.
\newblock The merged-staircase property: a necessary and nearly sufficient condition for sgd learning of sparse functions on two-layer neural networks.
\newblock In \emph{Conference on Learning Theory}, pages 4782--4887. PMLR, 2022{\natexlab{a}}.

\bibitem[Abbe et~al.(2022{\natexlab{b}})Abbe, Cornacchia, Hazla, and Marquis]{abbe2022initial}
Emmanuel Abbe, Elisabetta Cornacchia, Jan Hazla, and Christopher Marquis.
\newblock An initial alignment between neural network and target is needed for gradient descent to learn.
\newblock In \emph{International Conference on Machine Learning}, pages 33--52. PMLR, 2022{\natexlab{b}}.

\bibitem[Abbe et~al.(2023)Abbe, Boix-Adsera, and Misiakiewicz]{abbe2023sgd}
Emmanuel Abbe, Enric Boix-Adsera, and Theodor Misiakiewicz.
\newblock Sgd learning on neural networks: leap complexity and saddle-to-saddle dynamics.
\newblock \emph{arXiv preprint arXiv:2302.11055}, 2023.

\bibitem[Ahn et~al.(2022{\natexlab{a}})Ahn, Bubeck, Chewi, Lee, Suarez, and Zhang]{ahn2022learning}
Kwangjun Ahn, S{\'e}bastien Bubeck, Sinho Chewi, Yin~Tat Lee, Felipe Suarez, and Yi~Zhang.
\newblock Learning threshold neurons via the" edge of stability".
\newblock \emph{arXiv preprint arXiv:2212.07469}, 2022{\natexlab{a}}.

\bibitem[Ahn et~al.(2022{\natexlab{b}})Ahn, Zhang, and Sra]{ahn2022understanding}
Kwangjun Ahn, Jingzhao Zhang, and Suvrit Sra.
\newblock Understanding the unstable convergence of gradient descent.
\newblock In \emph{International Conference on Machine Learning}, pages 247--257. PMLR, 2022{\natexlab{b}}.

\bibitem[Allen-Zhu et~al.(2019)Allen-Zhu, Li, and Song]{allen2019convergence}
Zeyuan Allen-Zhu, Yuanzhi Li, and Zhao Song.
\newblock A convergence theory for deep learning via over-parameterization.
\newblock In \emph{International Conference on Machine Learning}, pages 242--252. PMLR, 2019.

\bibitem[Arora et~al.(2019)Arora, Du, Hu, Li, and Wang]{arora2019fine}
Sanjeev Arora, Simon Du, Wei Hu, Zhiyuan Li, and Ruosong Wang.
\newblock Fine-grained analysis of optimization and generalization for overparameterized two-layer neural networks.
\newblock In \emph{International Conference on Machine Learning}, pages 322--332. PMLR, 2019.

\bibitem[Arpit et~al.(2017)Arpit, Jastrz{\k{e}}bski, Ballas, Krueger, Bengio, Kanwal, Maharaj, Fischer, Courville, Bengio, et~al.]{arpit2017closer}
Devansh Arpit, Stanis{\l}aw Jastrz{\k{e}}bski, Nicolas Ballas, David Krueger, Emmanuel Bengio, Maxinder~S Kanwal, Tegan Maharaj, Asja Fischer, Aaron Courville, Yoshua Bengio, et~al.
\newblock A closer look at memorization in deep networks.
\newblock In \emph{International conference on machine learning}, pages 233--242. PMLR, 2017.

\bibitem[Atanasov et~al.(2022)Atanasov, Bordelon, and Pehlevan]{atanasov2021neural}
Alexander Atanasov, Blake Bordelon, and Cengiz Pehlevan.
\newblock Neural networks as kernel learners: The silent alignment effect.
\newblock \emph{International Conference on Learning Representations}, 2022.

\bibitem[Blanc et~al.(2020)Blanc, Gupta, Valiant, and Valiant]{blanc2020implicit}
Guy Blanc, Neha Gupta, Gregory Valiant, and Paul Valiant.
\newblock Implicit regularization for deep neural networks driven by an ornstein-uhlenbeck like process.
\newblock In \emph{Conference on learning theory}, pages 483--513. PMLR, 2020.

\bibitem[Bolte et~al.(2010)Bolte, Daniilidis, Ley, and Mazet]{bolte2010characterizations}
J{\'e}r{\^o}me Bolte, Aris Daniilidis, Olivier Ley, and Laurent Mazet.
\newblock Characterizations of {\l}ojasiewicz inequalities: subgradient flows, talweg, convexity.
\newblock \emph{Transactions of the American Mathematical Society}, 362\penalty0 (6):\penalty0 3319--3363, 2010.

\bibitem[Boursier et~al.(2022)Boursier, Pillaud-Vivien, and Flammarion]{boursier2022gradient}
Etienne Boursier, Loucas Pillaud-Vivien, and Nicolas Flammarion.
\newblock Gradient flow dynamics of shallow relu networks for square loss and orthogonal inputs.
\newblock \emph{Advances in Neural Information Processing Systems}, 2022.

\bibitem[Brutzkus et~al.(2017)Brutzkus, Globerson, Malach, and Shalev-Shwartz]{brutzkus2017sgd}
Alon Brutzkus, Amir Globerson, Eran Malach, and Shai Shalev-Shwartz.
\newblock Sgd learns over-parameterized networks that provably generalize on linearly separable data.
\newblock \emph{arXiv preprint arXiv:1710.10174}, 2017.

\bibitem[Chatterji et~al.(2021)Chatterji, Long, and Bartlett]{chatterji2021doesA}
Niladri~S Chatterji, Philip~M Long, and Peter~L Bartlett.
\newblock When does gradient descent with logistic loss find interpolating two-layer networks?
\newblock \emph{Journal of Machine Learning Research}, 22\penalty0 (159):\penalty0 1--48, 2021.

\bibitem[Chen et~al.(2023)Chen, Li, Luo, Zhou, and Xu]{chen2023phase}
Zhengan Chen, Yuqing Li, Tao Luo, Zhangchen Zhou, and Zhi-Qin~John Xu.
\newblock Phase diagram of initial condensation for two-layer neural networks.
\newblock \emph{arXiv preprint arXiv:2303.06561}, 2023.

\bibitem[Chizat and Bach(2018)]{chizat2018global}
Lenaic Chizat and Francis Bach.
\newblock On the global convergence of gradient descent for over-parameterized models using optimal transport.
\newblock \emph{Advances in neural information processing systems}, 31, 2018.

\bibitem[Chizat and Bach(2020)]{chizat2020implicit}
Lenaic Chizat and Francis Bach.
\newblock Implicit bias of gradient descent for wide two-layer neural networks trained with the logistic loss.
\newblock In \emph{Conference on Learning Theory}, pages 1305--1338. PMLR, 2020.

\bibitem[Clarke et~al.(2008)Clarke, Ledyaev, Stern, and Wolenski]{clarke2008nonsmooth}
Francis~H Clarke, Yuri~S Ledyaev, Ronald~J Stern, and Peter~R Wolenski.
\newblock \emph{Nonsmooth analysis and control theory}, volume 178.
\newblock Springer Science \& Business Media, 2008.

\bibitem[Cohen et~al.(2021)Cohen, Kaur, Li, Kolter, and Talwalkar]{cohen2021gradient}
Jeremy~M Cohen, Simran Kaur, Yuanzhi Li, J~Zico Kolter, and Ameet Talwalkar.
\newblock Gradient descent on neural networks typically occurs at the edge of stability.
\newblock \emph{arXiv preprint arXiv:2103.00065}, 2021.

\bibitem[Damian et~al.(2022)Damian, Nichani, and Lee]{damian2022self}
Alex Damian, Eshaan Nichani, and Jason~D Lee.
\newblock Self-stabilization: The implicit bias of gradient descent at the edge of stability.
\newblock \emph{arXiv preprint arXiv:2209.15594}, 2022.

\bibitem[Du et~al.(2019)Du, Lee, Li, Wang, and Zhai]{du2019gradient}
Simon Du, Jason Lee, Haochuan Li, Liwei Wang, and Xiyu Zhai.
\newblock Gradient descent finds global minima of deep neural networks.
\newblock In \emph{International Conference on Machine Learning}, pages 1675--1685. PMLR, 2019.

\bibitem[Du et~al.(2018)Du, Zhai, Poczos, and Singh]{du2018gradient}
Simon~S Du, Xiyu Zhai, Barnabas Poczos, and Aarti Singh.
\newblock Gradient descent provably optimizes over-parameterized neural networks.
\newblock \emph{arXiv preprint arXiv:1810.02054}, 2018.

\bibitem[Dutta et~al.(2013)Dutta, Deb, Tulshyan, and Arora]{dutta2013approximate}
Joydeep Dutta, Kalyanmoy Deb, Rupesh Tulshyan, and Ramnik Arora.
\newblock Approximate kkt points and a proximity measure for termination.
\newblock \emph{Journal of Global Optimization}, 56\penalty0 (4):\penalty0 1463--1499, 2013.

\bibitem[Fang et~al.(2021)Fang, He, Long, and Su]{fang2021exploring}
Cong Fang, Hangfeng He, Qi~Long, and Weijie~J Su.
\newblock Exploring deep neural networks via layer-peeled model: Minority collapse in imbalanced training.
\newblock \emph{Proceedings of the National Academy of Sciences}, 118\penalty0 (43):\penalty0 e2103091118, 2021.

\bibitem[Feng and Tu(2021)]{feng2021inverse}
Yu~Feng and Yuhai Tu.
\newblock The inverse variance--flatness relation in stochastic gradient descent is critical for finding flat minima.
\newblock \emph{Proceedings of the National Academy of Sciences}, 118\penalty0 (9), 2021.

\bibitem[Filippov(2013)]{filippov2013differential}
Aleksei~Fedorovich Filippov.
\newblock \emph{Differential equations with discontinuous righthand sides: control systems}, volume~18.
\newblock Springer Science \& Business Media, 2013.

\bibitem[Han et~al.(2021)Han, Papyan, and Donoho]{han2021neural}
XY~Han, Vardan Papyan, and David~L Donoho.
\newblock Neural collapse under mse loss: Proximity to and dynamics on the central path.
\newblock \emph{arXiv preprint arXiv:2106.02073}, 2021.

\bibitem[Hochreiter and Schmidhuber(1997)]{hochreiter1997flat}
Sepp Hochreiter and J{\"u}rgen Schmidhuber.
\newblock Flat minima.
\newblock \emph{Neural computation}, 9\penalty0 (1):\penalty0 1--42, 1997.

\bibitem[Jacot et~al.(2018)Jacot, Gabriel, and Hongler]{jacot2018neural}
Arthur Jacot, Franck Gabriel, and Cl{\'e}ment Hongler.
\newblock Neural tangent kernel: Convergence and generalization in neural networks.
\newblock \emph{arXiv preprint arXiv:1806.07572}, 2018.

\bibitem[Jacot et~al.(2021)Jacot, Ged, {\c{S}}im{\c{s}}ek, Hongler, and Gabriel]{jacot2021saddle}
Arthur Jacot, Fran{\c{c}}ois Ged, Berfin {\c{S}}im{\c{s}}ek, Cl{\'e}ment Hongler, and Franck Gabriel.
\newblock Saddle-to-saddle dynamics in deep linear networks: Small initialization training, symmetry, and sparsity.
\newblock \emph{arXiv preprint arXiv:2106.15933}, 2021.

\bibitem[Jastrz{\k{e}}bski et~al.(2019)Jastrz{\k{e}}bski, Kenton, Ballas, Fischer, Bengio, and Storkey]{jastrzkebski2018relation}
Stanis{\l}aw Jastrz{\k{e}}bski, Zachary Kenton, Nicolas Ballas, Asja Fischer, Yoshua Bengio, and Amos Storkey.
\newblock On the relation between the sharpest directions of dnn loss and the sgd step length.
\newblock \emph{International Conference on Learning Representations}, 2019.

\bibitem[Ji and Telgarsky(2019)]{ji2019polylogarithmic}
Ziwei Ji and Matus Telgarsky.
\newblock Polylogarithmic width suffices for gradient descent to achieve arbitrarily small test error with shallow relu networks.
\newblock \emph{arXiv preprint arXiv:1909.12292}, 2019.

\bibitem[Ji and Telgarsky(2020)]{ji2020directional}
Ziwei Ji and Matus Telgarsky.
\newblock Directional convergence and alignment in deep learning.
\newblock \emph{Advances in Neural Information Processing Systems}, 33:\penalty0 17176--17186, 2020.

\bibitem[Keskar et~al.(2016)Keskar, Mudigere, Nocedal, Smelyanskiy, and Tang]{keskar2016large}
Nitish~Shirish Keskar, Dheevatsa Mudigere, Jorge Nocedal, Mikhail Smelyanskiy, and Ping Tak~Peter Tang.
\newblock On large-batch training for deep learning: Generalization gap and sharp minima.
\newblock In \emph{International Conference on Learning Representations}, 2016.

\bibitem[Kunin et~al.(2023)Kunin, Yamamura, Ma, and Ganguli]{kunin2022asymmetric}
Daniel Kunin, Atsushi Yamamura, Chao Ma, and Surya Ganguli.
\newblock The asymmetric maximum margin bias of quasi-homogeneous neural networks.
\newblock \emph{International Conference on Learning Representations}, 2023.

\bibitem[Li et~al.(2017)Li, Tai, and E]{li2017stochastic}
Qianxiao Li, Cheng Tai, and Weinan E.
\newblock Stochastic modified equations and adaptive stochastic gradient algorithms.
\newblock In \emph{International Conference on Machine Learning}, pages 2101--2110. PMLR, 2017.

\bibitem[Li et~al.(2019)Li, Tai, and E]{li2019stochastic}
Qianxiao Li, Cheng Tai, and Weinan E.
\newblock Stochastic modified equations and dynamics of stochastic gradient algorithms i: Mathematical foundations.
\newblock \emph{The Journal of Machine Learning Research}, 20\penalty0 (1):\penalty0 1474--1520, 2019.

\bibitem[Li et~al.(2021)Li, Wang, and Arora]{li2021happens}
Zhiyuan Li, Tianhao Wang, and Sanjeev Arora.
\newblock What happens after sgd reaches zero loss?--a mathematical framework.
\newblock \emph{arXiv preprint arXiv:2110.06914}, 2021.

\bibitem[Li et~al.(2022)Li, Wang, and Li]{li2022analyzing}
Zhouzi Li, Zixuan Wang, and Jian Li.
\newblock Analyzing sharpness along gd trajectory: Progressive sharpening and edge of stability.
\newblock \emph{arXiv preprint arXiv:2207.12678}, 2022.

\bibitem[Liu et~al.(2021)Liu, Ziyin, and Ueda]{liu2021noise}
Kangqiao Liu, Liu Ziyin, and Masahito Ueda.
\newblock Noise and fluctuation of finite learning rate stochastic gradient descent.
\newblock In \emph{International Conference on Machine Learning}, pages 7045--7056. PMLR, 2021.

\bibitem[Luo et~al.(2021)Luo, Xu, Ma, and Zhang]{luo2021phase}
Tao Luo, Zhi-Qin~John Xu, Zheng Ma, and Yaoyu Zhang.
\newblock Phase diagram for two-layer relu neural networks at infinite-width limit.
\newblock \emph{The Journal of Machine Learning Research}, 22\penalty0 (1):\penalty0 3327--3373, 2021.

\bibitem[Lyu and Li(2019)]{lyu2019gradient}
Kaifeng Lyu and Jian Li.
\newblock Gradient descent maximizes the margin of homogeneous neural networks.
\newblock \emph{arXiv preprint arXiv:1906.05890}, 2019.

\bibitem[Lyu et~al.(2021)Lyu, Li, Wang, and Arora]{lyu2021gradient}
Kaifeng Lyu, Zhiyuan Li, Runzhe Wang, and Sanjeev Arora.
\newblock Gradient descent on two-layer nets: Margin maximization and simplicity bias.
\newblock \emph{Advances in Neural Information Processing Systems}, 34, 2021.

\bibitem[Ma and Ying(2021)]{ma2021linear}
Chao Ma and Lexing Ying.
\newblock On linear stability of sgd and input-smoothness of neural networks.
\newblock \emph{Advances in Neural Information Processing Systems}, 34:\penalty0 16805--16817, 2021.

\bibitem[Ma et~al.(2022)Ma, Kunin, Wu, and Ying]{ma2022beyond}
Chao Ma, Daniel Kunin, Lei Wu, and Lexing Ying.
\newblock Beyond the quadratic approximation: the multiscale structure of neural network loss landscapes.
\newblock \emph{arXiv preprint arXiv:2204.11326}, 2022.

\bibitem[Maennel et~al.(2018)Maennel, Bousquet, and Gelly]{maennel2018gradient}
Hartmut Maennel, Olivier Bousquet, and Sylvain Gelly.
\newblock Gradient descent quantizes relu network features.
\newblock \emph{arXiv preprint arXiv:1803.08367}, 2018.

\bibitem[Mei et~al.(2019)Mei, Misiakiewicz, and Montanari]{mei2019mean}
Song Mei, Theodor Misiakiewicz, and Andrea Montanari.
\newblock Mean-field theory of two-layers neural networks: dimension-free bounds and kernel limit.
\newblock In \emph{Conference on Learning Theory}, pages 2388--2464. PMLR, 2019.

\bibitem[Mulayoff et~al.(2021)Mulayoff, Michaeli, and Soudry]{mulayoff2021implicit}
Rotem Mulayoff, Tomer Michaeli, and Daniel Soudry.
\newblock The implicit bias of minima stability: A view from function space.
\newblock \emph{Advances in Neural Information Processing Systems}, 34:\penalty0 17749--17761, 2021.

\bibitem[Nacson et~al.(2019)Nacson, Gunasekar, Lee, Srebro, and Soudry]{nacson2019lexicographic}
Mor~Shpigel Nacson, Suriya Gunasekar, Jason Lee, Nathan Srebro, and Daniel Soudry.
\newblock Lexicographic and depth-sensitive margins in homogeneous and non-homogeneous deep models.
\newblock In \emph{International Conference on Machine Learning}, pages 4683--4692. PMLR, 2019.

\bibitem[Nakkiran et~al.(2019)Nakkiran, Kaplun, Kalimeris, Yang, Edelman, Zhang, and Barak]{nakkiran2019sgd}
Preetum Nakkiran, Gal Kaplun, Dimitris Kalimeris, Tristan Yang, Benjamin~L Edelman, Fred Zhang, and Boaz Barak.
\newblock Sgd on neural networks learns functions of increasing complexity.
\newblock \emph{arXiv preprint arXiv:1905.11604}, 2019.

\bibitem[Papyan et~al.(2020)Papyan, Han, and Donoho]{papyan2020prevalence}
Vardan Papyan, XY~Han, and David~L Donoho.
\newblock Prevalence of neural collapse during the terminal phase of deep learning training.
\newblock \emph{Proceedings of the National Academy of Sciences}, 117\penalty0 (40):\penalty0 24652--24663, 2020.

\bibitem[Pesme and Flammarion(2023)]{pesme2023saddle}
Scott Pesme and Nicolas Flammarion.
\newblock Saddle-to-saddle dynamics in diagonal linear networks.
\newblock \emph{arXiv preprint arXiv:2304.00488}, 2023.

\bibitem[Phuong and Lampert(2021)]{phuong2021inductive}
Mary Phuong and Christoph~H Lampert.
\newblock The inductive bias of relu networks on orthogonally separable data.
\newblock In \emph{International Conference on Learning Representations}, 2021.

\bibitem[Rahaman et~al.(2019)Rahaman, Arpit, Draxler, Lin, Hamprecht, Bengio, and Courville]{rahaman2019spectral}
Aristide Rahaman, Nasim xd~Baratin, Devansh Arpit, Felix Draxler, Min Lin, Fred Hamprecht, Yoshua Bengio, and Aaron Courville.
\newblock On the spectral bias of neural networks.
\newblock In \emph{International Conference on Machine Learning}, pages 5301--5310. PMLR, 2019.

\bibitem[Safran et~al.(2022)Safran, Vardi, and Lee]{safran2022effective}
Itay Safran, Gal Vardi, and Jason~D Lee.
\newblock On the effective number of linear regions in shallow univariate relu networks: Convergence guarantees and implicit bias.
\newblock \emph{Advances in Neural Information Processing Systems}, 35:\penalty0 32667--32679, 2022.

\bibitem[Saxe et~al.(2022)Saxe, Sodhani, and Lewallen]{saxe2022neural}
Andrew Saxe, Shagun Sodhani, and Sam~Jay Lewallen.
\newblock The neural race reduction: Dynamics of abstraction in gated networks.
\newblock In \emph{International Conference on Machine Learning}, pages 19287--19309. PMLR, 2022.

\bibitem[Soudry et~al.(2018)Soudry, Hoffer, Nacson, Gunasekar, and Srebro]{soudry2018implicit}
Daniel Soudry, Elad Hoffer, Mor~Shpigel Nacson, Suriya Gunasekar, and Nathan Srebro.
\newblock The implicit bias of gradient descent on separable data.
\newblock \emph{The Journal of Machine Learning Research}, 19\penalty0 (1):\penalty0 2822--2878, 2018.

\bibitem[Thomas et~al.(2020)Thomas, Pedregosa, Merri{\"e}nboer, Manzagol, Bengio, and Le~Roux]{thomas2020interplay}
Valentin Thomas, Fabian Pedregosa, Bart Merri{\"e}nboer, Pierre-Antoine Manzagol, Yoshua Bengio, and Nicolas Le~Roux.
\newblock On the interplay between noise and curvature and its effect on optimization and generalization.
\newblock In \emph{International Conference on Artificial Intelligence and Statistics}, pages 3503--3513. PMLR, 2020.

\bibitem[Vardi(2023)]{vardi2023implicit}
Gal Vardi.
\newblock On the implicit bias in deep-learning algorithms.
\newblock \emph{Communications of the ACM}, 66\penalty0 (6):\penalty0 86--93, 2023.

\bibitem[Vardi et~al.(2022)Vardi, Shamir, and Srebro]{vardi2022margin}
Gal Vardi, Ohad Shamir, and Nati Srebro.
\newblock On margin maximization in linear and relu networks.
\newblock \emph{Advances in Neural Information Processing Systems}, 35:\penalty0 37024--37036, 2022.

\bibitem[Wang and Ma(2022)]{wang2022early}
Mingze Wang and Chao Ma.
\newblock Early stage convergence and global convergence of training mildly parameterized neural networks.
\newblock \emph{Advances in Neural Information Processing Systems}, 2022.

\bibitem[Wang and Wu(2023)]{wang2023noise}
Mingze Wang and Lei Wu.
\newblock The noise geometry of stochastic gradient descent: A quantitative and analytical characterization.
\newblock \emph{arXiv preprint arXiv:2310.00692}, 2023.

\bibitem[Wojtowytsch(2023)]{wojtowytsch2023stochastic}
Stephan Wojtowytsch.
\newblock Stochastic gradient descent with noise of machine learning type part i: Discrete time analysis.
\newblock \emph{Journal of Nonlinear Science}, 33\penalty0 (3):\penalty0 45, 2023.

\bibitem[Woodworth et~al.(2020)Woodworth, Gunasekar, Lee, Moroshko, Savarese, Golan, Soudry, and Srebro]{woodworth2020kernel}
Blake Woodworth, Suriya Gunasekar, Jason~D Lee, Edward Moroshko, Pedro Savarese, Itay Golan, Daniel Soudry, and Nathan Srebro.
\newblock Kernel and rich regimes in overparametrized models.
\newblock In \emph{Conference on Learning Theory}, pages 3635--3673. PMLR, 2020.

\bibitem[Wu and Su(2023)]{wu2023implicitstability}
Lei Wu and Weijie~J Su.
\newblock The implicit regularization of dynamical stability in stochastic gradient descent.
\newblock In \emph{Proceedings of the 40th International Conference on Machine Learning}, volume 202 of \emph{Proceedings of Machine Learning Research}, pages 37656--37684. PMLR, 2023.

\bibitem[Wu et~al.(2018)Wu, Ma, and E]{wu2018sgd}
Lei Wu, Chao Ma, and Weinan E.
\newblock How sgd selects the global minima in over-parameterized learning: A dynamical stability perspective.
\newblock \emph{Advances in Neural Information Processing Systems}, 31:\penalty0 8279--8288, 2018.

\bibitem[Wu et~al.(2022)Wu, Wang, and Su]{wu2022does}
Lei Wu, Mingze Wang, and Weijie Su.
\newblock When does sgd favor flat minima? a quantitative characterization via linear stability.
\newblock \emph{Advances in Neural Information Processing Systems}, 2022.

\bibitem[Xu et~al.(2019)Xu, Zhang, Luo, Xiao, and Ma]{xu2019frequency}
Zhi-Qin~John Xu, Yaoyu Zhang, Tao Luo, Yanyang Xiao, and Zheng Ma.
\newblock Frequency principle: Fourier analysis sheds light on deep neural networks.
\newblock \emph{arXiv preprint arXiv:1901.06523}, 2019.

\bibitem[Zhang et~al.(2022)Zhang, Li, Zhang, Luo, and John~Xu]{zhang2021embedding}
Yaoyu Zhang, Yuqing Li, Zhongwang Zhang, Tao Luo, and Zhi-Qin John~Xu.
\newblock Embedding principle: A hierarchical structure of loss landscape of deep neural networks.
\newblock \emph{Journal of Machine Learning}, 1\penalty0 (1):\penalty0 60--113, 2022.
\newblock ISSN 2790-2048.

\bibitem[Zhou et~al.(2022{\natexlab{a}})Zhou, Qixuan, Jin, Luo, Zhang, and Xu]{zhou2022empirical}
Hanxu Zhou, Zhou Qixuan, Zhenyuan Jin, Tao Luo, Yaoyu Zhang, and Zhi-Qin Xu.
\newblock Empirical phase diagram for three-layer neural networks with infinite width.
\newblock \emph{Advances in Neural Information Processing Systems}, 35:\penalty0 26021--26033, 2022{\natexlab{a}}.

\bibitem[Zhou et~al.(2022{\natexlab{b}})Zhou, Qixuan, Luo, Zhang, and Xu]{zhou2022towards}
Hanxu Zhou, Zhou Qixuan, Tao Luo, Yaoyu Zhang, and Zhi-Qin Xu.
\newblock Towards understanding the condensation of neural networks at initial training.
\newblock \emph{Advances in Neural Information Processing Systems}, 35:\penalty0 2184--2196, 2022{\natexlab{b}}.

\bibitem[Zhu et~al.(2022)Zhu, Wang, Wang, Zhou, and Ge]{zhu2022understanding}
Xingyu Zhu, Zixuan Wang, Xiang Wang, Mo~Zhou, and Rong Ge.
\newblock Understanding edge-of-stability training dynamics with a minimalist example.
\newblock \emph{arXiv preprint arXiv:2210.03294}, 2022.

\bibitem[Zhu et~al.(2019)Zhu, Wu, Yu, Wu, and Ma]{zhu2019anisotropic}
Zhanxing Zhu, Jingfeng Wu, Bing Yu, Lei Wu, and Jinwen Ma.
\newblock The anisotropic noise in stochastic gradient descent: Its behavior of escaping from sharp minima and regularization effects.
\newblock In \emph{International Conference on Machine Learning}, pages 7654--7663. PMLR, 2019.

\bibitem[Zhu et~al.(2021)Zhu, Ding, Zhou, Li, You, Sulam, and Qu]{zhu2021geometric}
Zhihui Zhu, Tianyu Ding, Jinxin Zhou, Xiao Li, Chong You, Jeremias Sulam, and Qing Qu.
\newblock A geometric analysis of neural collapse with unconstrained features.
\newblock \emph{Advances in Neural Information Processing Systems}, 34:\penalty0 29820--29834, 2021.

\bibitem[Ziyin et~al.(2022)Ziyin, Liu, Mori, and Ueda]{ziyin2021minibatch}
Liu Ziyin, Kangqiao Liu, Takashi Mori, and Masahito Ueda.
\newblock Strength of minibatch noise in {SGD}.
\newblock In \emph{International Conference on Learning Representations}, 2022.

\bibitem[Zou et~al.(2018)Zou, Cao, Zhou, and Gu]{zou2018stochastic}
Difan Zou, Yuan Cao, Dongruo Zhou, and Quanquan Gu.
\newblock Stochastic gradient descent optimizes over-parameterized deep relu networks.
\newblock \emph{arXiv preprint arXiv:1811.08888}, 2018.

\end{thebibliography}
\end{document}